\newcommand{\crtodo}[1]{}
\newcites{app}{References}
\newcommand{\etal}{et~al.\ }
\newcommand{\bb}[1]{\mathbb{#1}}
\newcommand{\cl}[1]{\mathcal{#1}}
\newcommand{\Vst}[2]{\hat{V}^{#2}_{\textnormal{sft}}(#1)}
\newcommand{\Vsp}[2]{V^{#2}_{\textnormal{sft}}(#1)}
\newcommand{\Vss}[1]{V^*_{\textnormal{sft}}(#1)}
\newcommand{\Vt}[2]{\hat{V}^{#2}(#1)}
\newcommand{\Qst}[3]{\hat{Q}^{#3}_{\textnormal{sft}}(#1,#2)}
\newcommand{\Qsp}[3]{Q^{#3}_{\textnormal{sft}}(#1,#2)}
\newcommand{\Qss}[2]{Q^*_{\textnormal{sft}}(#1,#2)}
\newcommand{\Qt}[3]{\hat{Q}^{#3}(#1,#2)}
\newcommand{\pis}{\pi^*_{\textnormal{sft}}}
\newcommand{\suc}[1]{\cup_{a\in\cl{A}}\textnormal{Succ}(#1,a)}
\newcommand{\succc}[2]{\textnormal{Succ}(#1,#2)}
\newcommand{\reg}{\textnormal{reg}}
\newcommand{\one}{\mathbbm{1}}
\DeclareMathOperator*{\argmax}{arg\,max}
\DeclareMathOperator*{\argmin}{arg\,min}
\newcommand{\rap}{\overset{p}{\to}}
\theoremstyle{plain}
\newtheorem{theorem}{Theorem}[section]
\newtheorem{prop}{Proposition}[section]
\newtheorem{corollary}{Corollary}[theorem]
\newtheorem{lemma}[theorem]{Lemma}
\newenvironment{proofoutline}{\proof[Proof outline]}{\endproof}
\providecommand{\customgenericname}{}
\newcommand{\newcustomtheorem}[2]{%
	\newenvironment{#1}[1]
	{%
		\renewcommand\customgenericname{#2}%
		\renewcommand\theinnercustomgeneric{##1}%
		\innercustomgeneric
	}
	{\endinnercustomgeneric}
}
\title{Monte Carlo Tree Search with Boltzmann Exploration}
\author{%
	Michael Painter, Mohamed Baioumy, Nick Hawes, Bruno Lacerda \\
	Oxford Robotics Institute \\
	University of Oxford \\
	\texttt{\{mpainter, mohamed, nickh, bruno\}@robots.ox.ac.uk} \\
}
\begin{document}

\maketitle

\begin{abstract}
    Monte-Carlo Tree Search (MCTS) methods, such as Upper Confidence Bound applied to Trees (UCT), are instrumental to automated planning techniques. However, UCT can be slow to explore an optimal action when it initially appears inferior to other actions. Maximum ENtropy Tree-Search (MENTS) incorporates the maximum entropy principle into an MCTS approach, utilising \textit{Boltzmann policies} to sample actions, naturally encouraging more exploration. In this paper, we highlight a major limitation of MENTS: optimal actions for the maximum entropy objective do not necessarily correspond to optimal actions for the original objective. We introduce two algorithms, Boltzmann Tree Search (BTS) and Decaying ENtropy Tree-Search (DENTS), that address these limitations and preserve the benefits of Boltzmann policies, such as allowing actions to be sampled faster by using the Alias method. Our empirical analysis shows that our algorithms show consistent high performance across several benchmark domains, including the game of Go. 
\end{abstract}

\section{Introduction} \label{sec:intro}
	Planning under uncertainty is a core problem in Artificial Intelligence, commonly modelled as a Markov Decision Process (MDP) or variant thereof. MDPs can be solved using dynamic programming techniques to obtain an optimal policy \cite{bellman1957markovian}. However, computing a full optimal policy does not scale to large state-spaces, necessitating the use of 
    heuristic solvers \cite{hansen2001lao, bonet2003labeled} and online, sampling-based, planners based on  Monte-Carlo Tree-Search (MCTS), such as the  Upper Confidence Bound applied to Trees (UCT) algorithm~\cite{kocsis2006uct}.
    
	The UCT search policy is designed to minimise \textit{cumulative regret}, so manages a trade-off between exploration and exploitation. To exploit, UCT often selects the same action on successive trials, which can result in it getting stuck in local optima. Conversely, Maximum ENtropy Tree Search (MENTS) places a greater emphasis on exploration by combining MCTS with techniques from maximum entropy policy optimisation~\cite{ziebart2008maximum, haarnoja2017reinforcement, haarnoja2018soft}. MENTS jointly maximises cumulative rewards and policy entropy, where a \textit{temperature} parameter controls the weight of the entropy objective. However, MENTS is sensitive to this temperature parameter, and may not converge to the reward maximising policy or require a prohibitively low temperature to do so.

    In this work, we consider scenarios where MCTS methods are used with a simulator to plan how an agent should act. We introduce two algorithms for this scenario that address the above limitations. First, we present Boltzmann Tree Search (BTS) which uses a Boltzmann search policy like MENTS, but optimises for reward maximisation only. Secondly, we introduce Decaying ENtropy Tree Search (DENTS), which adds entropy backups to BTS, but is still \textit{consistent} (i.e. it converges to the reward maximising policy in the limit). 
    
    The main contributions of this paper are:~(1)~Demonstrating that the maximum entropy objective used in MENTS can be misaligned with reward maximisation, thus preventing it from converging to the optimal policy; (2)~Introducing two new algorithms, BTS and DENTS, which preserve the benefits of using Boltzmann search policies while being as simple to implement as UCT and MENTS, but converge to the reward maximising policy; (3)~Analysing MENTS, BTS and DENTS through the lens of \textit{simple regret} to provide theoretical convergence results; 
    (4)~Highlighting and demonstrating that the Alias method \cite{alias1,alias2} can be used with stochastic action selection to improve the asymptotic complexity of running a fixed number of trials over existing MCTS algorithms; and (5)~Demonstrating the performance improvements of Boltzmann search policies used in BTS and DENTS in benchmark gridworld environments and the game of Go.

\section{Background}
    \subsection{Markov Decision Processes} \label{sec:mdp}
        We define a (finite-horizon) MDP as a tuple $\cl{M}=(\cl{S}, \cl{A}, p, R, H)$, where $\cl{S}$ is the set of states; $\cl{A}$ is the set of actions; $p: \cl{S} \times \cl{A} \times \cl{S} \rightarrow[0,1]$ is the transition function where $p(s'|s,a,)$ is the probability of moving to state $s'$ given that action $a$ was taken in state $s$; $R: \cl{S} \times \cl{A} \rightarrow \bb{R}$ is the reward function; and $H\in\bb{N}$ is the finite horizon. Let $\succc{s}{a}$ denote the set of successor states of $(s,a)$, i.e. $\succc{s}{a}=\{s' \in \cl{S}\mid p(s'|s,a)>0\}$.
        
        A policy $\pi$ maps a state and timestep to a distribution over actions, and we denote the probability of executing $a$  at state $s$ and timestep $t$ as $\pi(a | s,t)$.
        %
        Let $s_t$ denote the state after $t$ time-steps and $a_t$ the action selected at $s_t$, according to $\pi$. The expected value $V^{\pi}$ and expected state-action value $Q^{\pi}$ of  $\pi$ are defined as: 
        \begin{align}
            V^{\pi}(s, t) &= \bb{E}_{\pi} \left[\sum_{i=t}^{H} R\left(s_{i}, a_{i}\right) \Big\vert s_t=s \right], \\
            Q^{\pi}(s,a,t) &= R(s,a) + \bb{E}_{s'\sim p(\cdot | s,a)}[V^{\pi}(s',t+1)].
        \end{align}
        The goal is to find the \textit{optimal policy} $\pi^*$ with the maximum expected reward: $\pi^{*}=\argmax_{\pi} V^{\pi}$. The optimal value functions are then defined as $V^*=V^{\pi^*}, Q^*=Q^{\pi^*}$. 
        For an MDP, there always exists an optimal policy $\pi^{*}$ which is deterministic \cite{kolobov2012planning}.
        

    \subsection{Maximum entropy policy optimization}
        In planning and reinforcement learning, the agent usually aims to maximise the expected sum of rewards. In maximum entropy policy optimisation, the objective is augmented with the expected entropy of the policy \cite{haarnoja2017reinforcement, ziebart2008maximum}. Formally, this is expressed as:
        \begin{align}
            \Vsp{s,t}{\pi} = \bb{E}_{\pi} \left[\sum_{i=t}^{H} R\left(s_{i}, a_{i}\right)+ \alpha \mathcal{H}\left(\pi\left(\cdot|s_{i}, i\right)\right) \Big\vert s_t=s\right], \label{eq:vsft_def}
        \end{align}
        \noindent where $\alpha\geq 0$ is a temperature parameter, and $\cl{H}$ is the Shannon entropy function. The temperature determines the relative importance of the entropy against the reward and thus controls the stochasticity of the optimal policy. The conventional reward maximisation objective can be recovered by setting $\alpha = 0$. 
        
        An optimal value function for maximum entropy optimization is obtained using the \textit{soft} Bellman optimality equations \cite{haarnoja2018soft}:
        \begin{align}
            \Qss{s}{a,t} = R(s,a) 
            + \bb{E}_{s' \sim p(\cdot | s,a)} 
            \left[ \Vss{s',t} \right], \label{eq:v_soft_bellman} \\
            \Vss{s,t} = 
            \alpha \log \sum_{a\in\cl{A}} \exp(\Qss{s}{a,t}/\alpha), \label{eq:q_soft_bellman}
        \end{align}
        which corresponds to a standard Bellman backup, with the $\max$ replaced by a \textit{softmax}, shown in Equation~(\ref{eq:q_soft_bellman}). The optimal soft policy $\pis=\argmax_\pi V_{\text{sft}}^\pi$ can be computed directly \cite{nachum2017bridging} as follows:
        \begin{align}
            \pis(a|s,t) = \exp((\Qss{s}{a,t}-\Vss{s,t})/\alpha). \label{eq:optimal_soft_policy}
        \end{align}
         Note that the soft policy is always stochastic for any $\alpha > 0$. Henceforth, we will use \textit{soft value} to refer to value functions indexed with `sft', \textit{(optimal) soft policy} to refer to policies of the form given in Equation (\ref{eq:optimal_soft_policy}), and \textit{standard value} and \textit{(optimal) standard policy} for values and policies of the form given in Section \ref{sec:mdp}, unless it is clear from the context.
         
         For the remainder of this paper we will drop the timestep $t$ from policies and value functions to simplify notation.

    \subsection{Monte-Carlo tree search} \label{sec:mcts}
    		MCTS methods build a search tree $\cl{T}$ using Monte-Carlo trials. Each trial is split into two phases: starting from the root node, actions are chosen according to a \textit{search policy} and states sampled from the transition distribution until the first state not in $\cl{T}$ is reached. A new node is added to $\cl{T}$ and its value is initialised using some function $V^{\text{init}}$, often using a \textit{rollout policy} to select actions until the time horizon $H$ is reached. In the second phase, the return for the trial is back-propagated up (or `backed up') the tree to update the values of nodes in $\cl{T}$. For a reader unfamiliar with MCTS, we refer to \cite{browne2012survey} for a review of the MCTS literature, as many variants of MCTS exist and may vary from our description. 
        
        Two critical choices in designing an MCTS algorithm are the search policy (which needs to balance exploration and exploitation) and the backups (how values are updated). MCTS algorithms are often designed to achieve \textit{consistency} (i.e. convergence to the optimal action in the limit), which implies that running more trials will increase the probability that the optimal action is recommended.
        
        To simplify notation we assume that each node in the search tree corresponds to a unique state, so we may represent nodes using states. Our algorithms and results do not make use of this assumption, and generalise to when this assumption does not hold.
    
        \paragraph{UCT} 
            UCT~\cite{kocsis2006uct} applies the upper confidence bound (UCB) in its search policy to balance exploration and exploitation. The $n$th trial of UCT operates as follows: let $\cl{T}$ be the current search tree and let $\tau=(s_0,a_0,...,a_{h-1},s_{h})$ denote the trajectory of the $n$th trial, where $s_h\not\in\cl{T}$ or $h=H$. At each node $s_t$ the UCT search policy $\pi_{\text{UCT}}$ will select a random action that has not previously been selected, otherwise, it will select the action with maximum UCB value:
            %
            \begin{align}
                \pi_{\textnormal{UCT}}(s) &= \max_{a\in\cl{A}} \bar{Q}(s, a)+c \sqrt{\frac{\log N(s)}{N(s, a)}}, \label{eq:uct_distr}
            \end{align}
            \noindent where, $\bar{Q}(s,a)$ is the current empirical Q-value estimate, $N(s)$ (and $N(s,a)$) is how many times $s$ has been visited (and action~$a$ selected) and $c$ is an exploration parameter. Then, $s_h$ is added to the tree: $\cl{T}\leftarrow \{s_h\}\cup\cl{T}$. The backup consists of updating empirical estimates for $t=h-1,...,0$:
            \begin{align}
                \bar{Q}(s_t, a_t) &\leftarrow \bar{Q}(s_t, a_t) + \frac{\bar{R}(t) - \bar{Q}(s_t, a_t)}{N(s_t, a_t) + 1}, \label{eq:uct_qbar}
            \end{align}
            \noindent where $\bar{R}(t) = V^{\text{init}}(s_h) + \sum_{i=t}^{h-1} R(s_i,a_i)$, and $V^{\text{init}}(s_h)=\sum_{i=h}^{H} R(s_i,a_i)$ if using a rollout policy.

        \paragraph{MENTS} 
            MENTS \cite{xiao2019maximum} combines maximum entropy policy optimization \cite{haarnoja2017reinforcement, ziebart2008maximum} with MCTS. Algorithmically, it is similar to UCT. The two differences are: (1) the search policy follows a stochastic Boltzmann policy, and (2) it uses soft values that are updated with dynamic programming backups. The MENTS search policy $\pi_{\textnormal{MENTS}}$ is given by:
            \begin{align}
                \pi_{\textnormal{MENTS}}(a|s) &= (1-\lambda_s)\rho_{\textnormal{MENTS}}(a|s) + \frac{\lambda_s}{|\cl{A}|}, \\
                \rho_{\textnormal{MENTS}}(a|s) &= \exp\left(\frac{1}{\alpha}\left(\Qst{s}{a}{}-\Vst{s}{}\right)\right) \label{eq:rhosft}
            \end{align}
            \noindent where $\lambda_s=\min(1,\epsilon/\log(e+N(s))),$ $\epsilon \in (0,\infty)$ is an exploration parameter and $\Vst{s}{}$ (and $\Qst{s}{a}{}$) are the current soft (Q-)value estimates. 
            The soft value of the new node is initialised $\Vst{s_h}{}\leftarrow V^{\text{init}}(s_h)$ and the soft values are updated with backups for $t=h-1,...,0$:
            \begin{align}
                \Qst{s_t}{a_t}{} &\leftarrow R(s_t,a_t) + \sum_{s'\in\succc{s}{a}} \left( \frac{N(s')}{N(s_t,a_t)} \Vst{s'}{} \right), \\ 
                \Vst{s_t}{} &\leftarrow \alpha \log \sum_{a\in\cl{A}} \exp \left(\frac{1}{\alpha}\Qst{s_t}{a}{} \right). \label{eq:soft_v_backup}
            \end{align}
            Each $\Qst{s}{a}{}$ is initialised using another function $Q^{\text{init}}_{\text{sft}}(s,a)$ (but is typically zero).

    \subsection{Simple regret}
    		UCB \cite{auer2002finite} is frequently used in MCTS methods to minimise \textit{cumulative regret} during the tree search. Cumulative regret is most appropriate in scenarios where the actions taken during tree search have an associated real-world cost. However, MCTS methods often use a simulator during the tree search, where the only significant real-world cost is associated with taking the recommended action after the tree search. In such scenarios, simple regret \cite{simple_regret_short,simple_regret_long} is more appropriate for analysing the performance of algorithms, as it only considers the cost of the actions that are actually executed. Under simple regret, algorithms are not penalised for under-exploiting during the search, thus can explore more, which leads to better recommendations by allowing algorithms to confirm that bad actions are indeed of lower value.

        We consider the problem of MDP planning as a sequential decision problem, where for each round $n$:
        \begin{enumerate}
            \item the forecaster algorithm produces a search policy $\pi^n$ and samples a trajectory $\tau\sim\pi^n$,
            \item the environment returns the rewards $R(s_t,a_t)$ for each $s_t,a_t$ pair in $\tau$,
            \item the forecaster algorithm produces a recommendation policy $\psi^n$,
            \item if environment sends stop signal, then end, else return to step 1.
        \end{enumerate}
        
        The \textit{simple regret} of the forecaster on it's $n$th round is then:
        \begin{align}
            \reg(s,\psi^n) = V^*(s)-V^{\psi^n}(s).
        \end{align}
        %
        %
        In MENTS the 
        recommendation policy suggested in \cite{xiao2019maximum} can be written:
        \begin{align}
            \psi_{\text{MENTS}}(s)=\argmax_{a\in\cl{A}}\Qst{s}{a}{}. 
        \end{align}
        We can now formally define \textit{consistency}: an algorithm is \textit{consistent} if and only if its recommendation policy $\psi^n$ converges to an expected simple regret of zero: $\bb{E}[\reg(s,\psi^n)]\rightarrow 0$ as $n\rightarrow \infty$. Note that because we are considering randomised algorithms, there is a distribution over the possible recommendation policies that could have been produced. If a policy has a simple regret of zero then it implies it is an optimal policy.

\section{Limitations of prior MCTS methods} \label{sec:limitations}
    In this section we use the \textit{D-chain problem} introduced in \cite{coquelin2007_uct} (Figure \ref{fig:dchain_illustration}) to highlight the limitations of UCT and MENTS. In the D-chain problem, when an agent chooses action~$a_L$, from some state~$d$, it moves to an absorbing state and receives a reward of $(D-d)/D$. In state~$D$, action~$a_R$ corresponds to an absorbing state with reward $R_f = 1$. The optimal standard policy always selects action~$a_R$. 
    
    \begin{figure}
        \centering
        \includegraphics[width=0.7\textwidth]{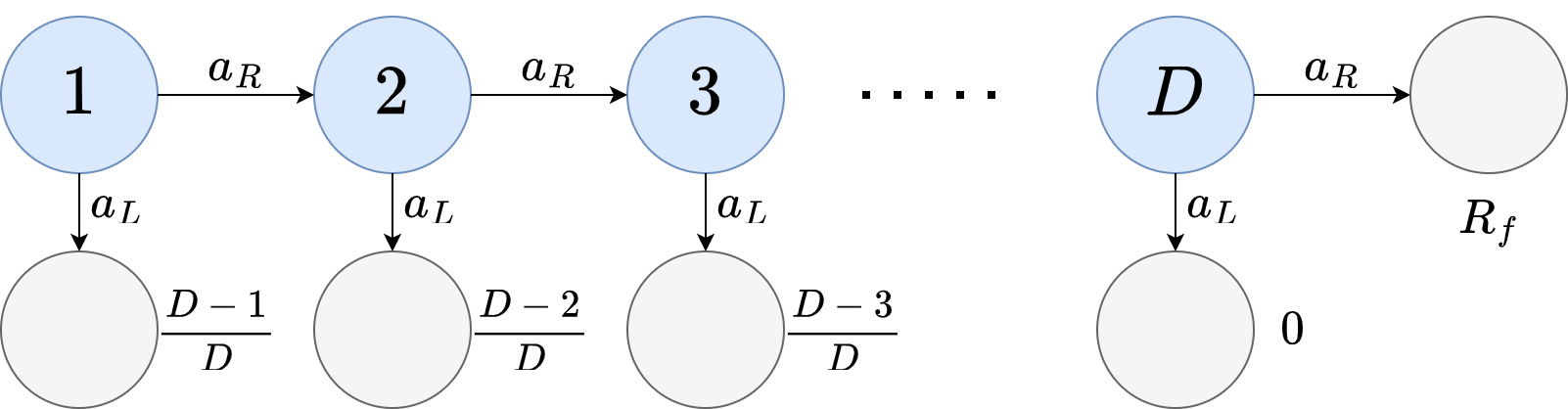}
        \caption{An illustration of the \textit{(modified) D-chain problem}, where 1 is the starting state, 
            transitions are deterministic and values next to states represent 
            rewards for arriving in that state.}
        \label{fig:dchain_illustration}
    \end{figure}
	
	In the 10-chain problem ($D=10$), UCT will recommend action~$a_L$ from state~$1$ (Figure \ref{fig:10_chain_results}). UCT requires $\Omega(\exp(...\exp(1)...))$ many trials ($D$ composed exponential functions) to recommend the optimal policy that reaches the reward of $R_f=1$ \cite{coquelin2007_uct}. This highlights the first limitation mentioned in Section \ref{sec:intro}: UCT quickly disregards action~$a_R$ at the initial state, to exploit the reward of $0.9$.

    When MENTS is run on the 10-chain problem, with the help of the entropy term it quickly finds the final reward of $R_f=1$ 
    (Figure \ref{fig:10_chain_results}). However, consider the \textit{modified 10-chain} with $R_f=1/2$ instead. Repeated applications of Equations (\ref{eq:v_soft_bellman}) and (\ref{eq:q_soft_bellman}) for $\alpha=1$ gives the optimal soft values of $\Qss{1}{a_R}=\log(\exp(1/2)+\sum_{i=0}^8\exp(i/10))\approx 2.74$ and $\Qss{1}{a_L}=0.9$. So in the modified 10-chain problem, we have $\Qss{1}{a_R}>\Qss{1}{a_L}$ with $\alpha=1$, whereas $Q^*(1,a_R)<Q^*(1,a_L)$. Thus, when MENTS converges, 
    it will recommend the wrong action with respect to the standard 
    objective (Figure \ref{fig:modified_10_chain_results}), i.e. it is not consistent. The modified 10-chain is an example of Proposition \ref{prop:ments_bad}, which states that MENTS will not always converge to the standard optimal policy.
    
    \begin{prop}\label{prop:ments_bad}
        There exists an MDP $\cl{M}$ and temperature $\alpha$ such that $\bb{E}[\reg(s_0,\psi^n_{\textnormal{MENTS}})] \not\to 0$ as $n\to\infty$. That is, MENTS is not consistent.
    \end{prop}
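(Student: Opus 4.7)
The plan is to instantiate the witness MDP as the modified 10-chain of Figure~\ref{fig:dchain_illustration} with $R_f = 1/2$, together with temperature $\alpha = 1$. The preceding discussion has already performed the essential computation: unfolding the soft Bellman equations~(\ref{eq:v_soft_bellman})--(\ref{eq:q_soft_bellman}) from the absorbing leaves gives $\Qss{1}{a_R} \approx 2.74 > 0.9 = \Qss{1}{a_L}$, while the standard Bellman equations give $Q^*(1,a_L) = 0.9 > 1/2 = Q^*(1,a_R)$. So $a_R$ is the unique soft-optimal action at the root, $a_L$ is the unique standard-optimal action at the root, and the soft gap $\Delta := \Qss{1}{a_R} - \Qss{1}{a_L} > 0$ is a fixed positive constant.

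The next step is to show that $\Pr[\psi^n_{\textnormal{MENTS}}(1) = a_R] \to 1$ as $n \to \infty$. Because the MENTS search policy assigns probability at least $\lambda_s / |\cl{A}| > 0$ to every action at every visited node, and the D-chain is finite and deterministic, every state-action pair along the chain is selected infinitely often almost surely. Combined with the soft backups, this yields $\Qst{s}{a}{n} \rap \Qss{s}{a}$ at every node in the search tree---this convergence is essentially the content of the original MENTS analysis of~\cite{xiao2019maximum}. Since $\Delta$ is positive and independent of $n$, the empirical soft gap at the root is eventually positive with probability tending to one, and therefore $\psi^n_{\textnormal{MENTS}}(1) = \argmax_{a} \Qst{1}{a}{n}$ converges in probability to $a_R$.

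To finish, I would observe that both root actions of the modified 10-chain transition deterministically to absorbing states, so $V^{\psi^n}(1) \in \{1/2,\,0.9\}$ according to $\psi^n_{\textnormal{MENTS}}(1)$. This gives $\reg(1, \psi^n_{\textnormal{MENTS}}) = 0.4 \cdot \one[\psi^n_{\textnormal{MENTS}}(1) = a_R]$, and taking expectations yields $\bb{E}[\reg(1, \psi^n_{\textnormal{MENTS}})] \to 0.4 > 0$, which directly contradicts consistency.

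The main obstacle will be the convergence step $\Qst{s}{a}{n} \rap \Qss{s}{a}$ at the root. The exploration floor in $\lambda_s$ is enough to guarantee that the visit counts $N(s,a) \to \infty$, but the soft Bellman backups recursively aggregate estimates from children, so care is required that this recursion does not inject asymptotic bias at interior nodes that would corrupt the root-level gap comparison. The cleanest route is to import the convergence statement from the MENTS analysis off-the-shelf rather than re-derive it; once it is available, the remainder of the argument is the elementary gap comparison above.
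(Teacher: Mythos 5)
Your proposal follows essentially the same route as the paper's proof: the witness is the modified 10-chain with $\alpha=1$, the soft Bellman computation gives $\Qss{1}{a_R}\approx 2.74 > 0.9 = \Qss{1}{a_L}$ while $a_L$ is the unique standard-optimal root action, and convergence of the estimated soft Q-values at the root forces $\psi^n_{\textnormal{MENTS}}(1)\to a_R$ in probability, so the expected simple regret stays bounded away from zero. The only stylistic difference is that the paper does not import the convergence $\Qst{s}{a}{}\rap\Qss{s}{a}$ off-the-shelf from Xiao et al.\ but re-derives it in the appendix as an exponential concentration inequality (via a uniform lower bound $\pi^{\min}>0$ on the search-policy action probabilities, Hoeffding/DKW bounds, and induction from the leaves), which it then uses quantitatively; for this proposition, convergence in probability is enough, as you say.

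One concrete slip in your final step: it is not true that both root actions lead to absorbing states. Action $a_L$ does, but $a_R$ leads to state $2$, an interior state of the chain, so $V^{\psi^n}(1)$ after recommending $a_R$ at the root depends on the recommendations at states $2,\dots,10$ and need not equal $1/2$ (if $\psi^n$ recommended $a_L$ at state $2$, the value would be $0.8$). Consequently the identity $\reg(1,\psi^n_{\textnormal{MENTS}})=0.4\cdot\one[\psi^n_{\textnormal{MENTS}}(1)=a_R]$ is not justified as written. The repair is exactly what the paper does: the best value achievable by any policy that commits to $a_R$ at the root is $0.8$, so $\reg(1,\psi^n_{\textnormal{MENTS}})\geq 0.1\cdot\one[\psi^n_{\textnormal{MENTS}}(1)=a_R]$, which is all that is needed to conclude $\bb{E}[\reg(1,\psi^n_{\textnormal{MENTS}})]\not\to 0$. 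If you want the sharper limit $0.4$, you must run the same soft-gap comparison at every state along the chain, not only at the root.
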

    \begin{proof}
    		Proof is by example with $\alpha=1$ in the modified 10-chain (Figure \ref{fig:dchain_illustration}).
    \end{proof}
    We can reduce the value of $\alpha$ to decrease the importance of entropy in the soft objective $\bb{E}\left[R\left(s_{t}, a_{t}\right) + \alpha \mathcal{H}\left(\pi\left(\cdot|s_t\right)\right)\right]$.
    If $\alpha$ is small enough, then MENTS recommendations can converge to the optimal standard policy (Theorem \ref{thrm:ments}). Hence, MENTS with a low temperature can solve the modified 10-chain problem (Figure \ref{fig:modified_10_chain_results}). 
    However, in practice, a low temperature will often cause MENTS to not sufficiently explore, as demonstrated in the original D-chain (Figure \ref{fig:10_chain_results}).
    %
    
    \begin{customthm}{3.2}\label{thrm:ments}
        For any MDP $\cl{M}$, after running $n$ trials of the MENTS algorithm with $\alpha \leq \Delta_{\cl{M}}/3H\log|\cl{A}|$, there exists constants $C,k>0$ such that: $\bb{E}[\reg(s_0,\psi_{\textnormal{MENTS}})] \leq C\exp(-kn)$, where $\Delta_{\cl{M}}=\min \{Q^*(s,a)-Q^*(s,a')\vert Q^*(s,a) \neq Q^*(s,a'),s\in\cl{S}, a,a'\in\cl{A},t\in\bb{N}\}$.
    \end{customthm}
    \begin{proofoutline}
    		We can show $\Qst{s}{a}{}\rap \Qss{s}{a}$ (Corollary \ref{cor:sftq_convg}) similarly to Thoerem \ref{thrm:dents}. The bound on $\alpha$ is required to ensure that $\pi^*(s_0)=\pis(s_0)$.
    \end{proofoutline}
    
    In conclusion, similar MDPs can require vastly different temperatures for MENTS to be effective. We discuss MENTS sensitivity to the temperature parameter further in Appendix \ref{app:param_sens}, and demonstrate this parameter sensitivity in the Frozen Lake environment (Section \ref{sec:gridworlds}) in Figure \ref{fig:fl_param_sens_ments} in the appendix. 
    %
    \begin{figure}
        \begin{subfigure}[b]{0.49\textwidth}
            \centering
            \includegraphics[width=\textwidth]{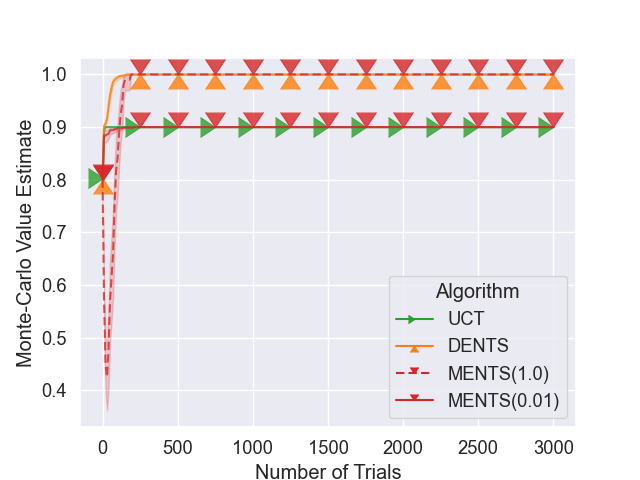}
            \caption{10-chain.}
            \label{fig:10_chain_results}
        \end{subfigure}
        \hfill
        \begin{subfigure}[b]{0.49\textwidth}
            \centering
            \includegraphics[width=\textwidth]{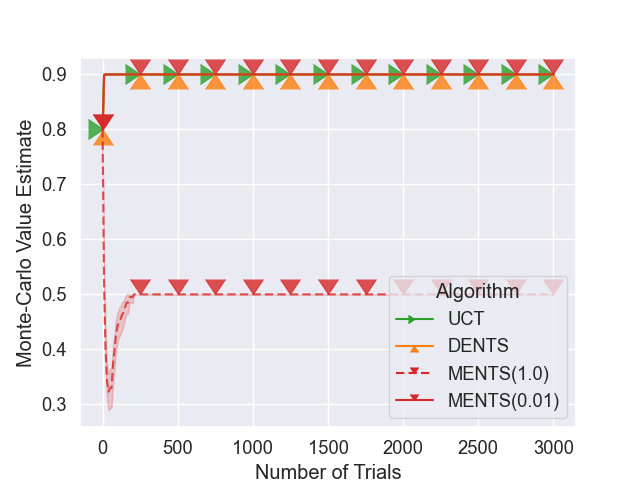}
            \caption{Modified 10-chain.}
            \label{fig:modified_10_chain_results}
        \end{subfigure}
        \caption{A comparison of MENTS, DENTS and UCT when run on the (modified) 10-chain.}
        \label{fig:dchain_graphs}
    \end{figure}

\section{Boltzmann search} \label{sec:boltzmann_search}
    We now introduce two algorithms that utilise Boltzmann search policies similar to MENTS and admit bounded simple regrets that converge to zero without restrictive constraints on parameters.
    Thus, they do not suffer from sensitivity to parameter selection that MENTS does. Both algorithms use action selection and value backups that are easy to implement and use.   
    We designed these algorithms with consistency in mind, which in practice, means that if we run more trials then we (with high probability) will recommend a better solution (note that Proposition \ref{prop:ments_bad} implies that this is not always the case for MENTS).

    \subsection{Boltzmann Tree Search} \label{sec:bts}
        Our first approach, put simply, replaces the use of soft values in MENTS with 
        \textit{Bellman} 
        values. We call this algorithm \textit{Boltzmann Tree Search} (BTS). BTS promotes exploration through the stochastic Boltzmann search policy, like MENTS, while using backups that optimise for the standard objective, like UCT. The search policy $\pi_{\textnormal{BTS}}$ and backups for the $n$th trial are given by:
        \begin{align}
            \pi_{\textnormal{BTS}}(a|s) &= (1-\lambda_s)\rho_{\textnormal{BTS}}(a|s) + \frac{\lambda_s}{|\cl{A}|}, \label{eq:bts_search_policy} \\ 
            \rho_{\textnormal{BTS}}(a|s) &\propto \exp\left(\frac{1}{\alpha}\left(\Qt{s}{a}{}\right)\right), \label{eq:bts_value_policy} \\ 
            \Qt{s_t}{a_t}{} &\leftarrow R(s_t,a_t) + \sum_{s' \in \succc{s_t}{a_t}} \left( \frac{N(s')}{N(s_t,a_t)} \Vt{s'}{} \right), \label{eq:dp_q_backup} \\ 
            \Vt{s_t}{} &\leftarrow\max_{a\in\cl{A}} \Qt{s_t}{a}{}, \label{eq:dp_v_backup} 
        \end{align}
        for $t=h-1,...,0$, where $\hat{V}$ and $\hat{Q}$ are 
        the current Bellman (Q-)value estimates,  
        $\lambda_s=\min(1,\epsilon/\log(e+N(s)))$, $\epsilon \in (0,\infty)$ is an exploration parameter and $\alpha$ is a search temperature (unrelated to entropy). 
        Each $\hat{V}(s)$ and $\hat{Q}(s,a)$ are initialised using $V^{\text{init}}$ and $Q^{\text{init}}$ functions similarly to MENTS. The 
        Bellman
        values are used for recommendations:
        \begin{align}
            \psi_{\text{BTS}}(s)=\argmax_{a\in\cl{A}}\Qt{s}{a}{}.
        \end{align}
        By using 
        Bellman
        backups, we can guarantee that the BTS recommendation policy converges to the optimal standard policy for any temperature $\alpha$, given enough time. In other words, BTS is consistent.
        \begin{theorem} \label{thrm:bts}
            For any MDP $\cl{M}$, after running $n$ trials of the BTS algorithm with a root node of $s_0$, there exists constants $C,k>0$ such that for all $\varepsilon>0$ we have $\bb{E}[\reg(s_0,\psi_{\textnormal{BTS}})] \leq C\exp(-kn)$, and also $\Vt{s_0}{} \rap V^*(s_0)$ as $n\rightarrow\infty$.
        \end{theorem}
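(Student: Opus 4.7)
The plan is to proceed by backward induction on depth $t$, from $H$ down to $0$, establishing exponential concentration of $\hat{Q}(s,a)$ around $Q^*(s,a)$ layer by layer, and then converting this into an exponential bound on simple regret via the optimality gap $\Delta_{\mathcal{M}}$ (defined as in Theorem~\ref{thrm:ments}). This mirrors the overall structure suggested for MENTS via Corollary~\ref{cor:sftq_convg}, but now the backups in Equations~(\ref{eq:dp_q_backup})--(\ref{eq:dp_v_backup}) are standard Bellman rather than soft, so the induction yields convergence to $Q^*$ directly rather than to $Q^*_{\textnormal{sft}}$, and no temperature constraint is needed.

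First I would establish a uniform lower bound on visit probabilities. Since $\pi_{\textnormal{BTS}}(a|s) \geq \lambda_s/|\mathcal{A}|$ with $\lambda_s = \min(1,\epsilon/\log(e+N(s)))$, every action at every state in the tree has selection probability bounded below by a quantity that decays only poly-logarithmically in $N(s)$. Chaining this along any root-to-node path of length at most $H$ yields, for each $(s,a)$ reachable from $s_0$, a lower bound on $\mathbb{E}[N(s,a)]$ that grows essentially like $n/\mathrm{polylog}(n)$. A Chernoff/Hoeffding argument then gives that $N(s,a)$ exceeds any fixed fraction of this lower bound, except on an event of probability exponentially small in $n$.

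Next comes the induction itself. At depth $H$ the values are fixed by $V^{\text{init}}$, so the base case at depth $H{-}1$ follows from a direct Hoeffding bound on the empirical Q-estimate, giving $\mathbb{P}(|\hat{Q}(s,a)-Q^*(s,a)|\geq \delta) \leq 2\exp(-k N(s,a)\delta^2)$. For the inductive step at depth $t$, the Q-backup (Equation~\ref{eq:dp_q_backup}) writes $\hat{Q}(s_t,a_t)$ as $R(s_t,a_t)$ plus a weighted average of $\hat{V}(s')$ with empirical weights $N(s')/N(s_t,a_t)$. I would split the error $\hat{Q}(s_t,a_t)-Q^*(s_t,a_t)$ into two pieces: one coming from $N(s')/N(s_t,a_t)-p(s'|s_t,a_t)$ (bounded by a Hoeffding-type concentration on empirical transition frequencies), and one coming from $\hat{V}(s')-V^*(s')=\max_a \hat{Q}(s',a)-\max_a Q^*(s',a)$, which is controlled by the inductive hypothesis applied to each successor. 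Combining these exponential tails using a union bound over the finitely many successors and actions preserves the exponential form, yielding $\mathbb{P}(|\hat{Q}(s,a)-Q^*(s,a)|\geq\delta)\leq C_t \exp(-k_t n)$ at depth $t$.

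The theorem then follows by taking $\delta = \Delta_{\mathcal{M}}/2$ at $s_0$: on the high-probability event that $|\hat{Q}(s_0,a)-Q^*(s_0,a)|<\Delta_{\mathcal{M}}/2$ for every $a$, the argmax recommendation $\psi_{\textnormal{BTS}}(s_0)$ is necessarily optimal and contributes zero regret, while on the complementary event the regret is trivially bounded by $H\|R\|_\infty$; this gives $\mathbb{E}[\textnormal{reg}(s_0,\psi_{\textnormal{BTS}})]\leq C\exp(-kn)$, and $\hat{V}(s_0)\rap V^*(s_0)$ is an immediate consequence. The main obstacle is the inductive step: the empirical transition weights and the children's Q-estimates are not independent (both depend on the shared trajectory history), so care is needed either to condition on the sequence of visits or to argue that the exploration floor makes the dependencies benign. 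A secondary subtlety is tracking how the $\mathrm{polylog}(n)$ slack from $\lambda_s$ propagates through depths so that the final rate constant $k$ remains strictly positive; this is the reason the bound is stated as $C\exp(-kn)$ for some (depth-dependent but finite) constants rather than with an explicit rate.
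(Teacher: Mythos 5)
Your overall architecture matches the paper's: the paper proves Theorem~\ref{thrm:bts} as the special case $\beta(m)=0$ of the DENTS analysis, which is exactly your backward induction --- a general Q-backup concentration step (Lemma~\ref{lem:stochastic_step}, splitting the error into an empirical-transition-frequency piece and a successor-value piece), an inductive step for $\hat{V}=\max_a\hat{Q}$ using the gap $\Delta_{\cl{M}}/2$ to stabilise the argmax, base case at the horizon, and a final conversion from ``argmax is correct with probability $1-C\exp(-kn)$'' to the expected simple regret bound. Your handling of the dependence between transition weights and child estimates (condition on the visit counts) is also how the paper proceeds, via Lemmas~\ref{lem:sa_to_s} and~\ref{lem:s_to_sa}.

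There is, however, one genuine gap, and it is not the ``secondary subtlety'' you describe but a step that breaks the claimed rate. You lower-bound the selection probability only by the uniform-exploration floor $\lambda_s/|\cl{A}|$ with $\lambda_s=\min(1,\epsilon/\log(e+N(s)))$, which decays like $1/\log N(s)$. Chaining that over $H$ levels gives $\bb{E}[N(s,a)]\gtrsim n/\mathrm{polylog}(n)$, and feeding this into the Hoeffding bounds yields concentration of order $\exp(-kn/\mathrm{polylog}(n))$ --- subexponential in $n$, so it does \emph{not} establish $\bb{E}[\reg(s_0,\psi_{\textnormal{BTS}})]\leq C\exp(-kn)$ for any constant $k>0$; no amount of constant-juggling recovers the rate. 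The paper closes this with Lemma~\ref{lem:min_prob}: because rewards, the horizon, and the value initialisation are bounded, every estimate $\Qt{s}{a}{}$ lies in a fixed interval $[Q^{\min}(s,a),V^{\max}(s)]$, so the Boltzmann component itself satisfies $\rho_{\textnormal{BTS}}(a|s)\geq\exp((Q^{\min}(s,a)-V^{\max}(s))/\alpha)$, a positive \emph{constant} independent of $n$. Hence $\pi_{\textnormal{BTS}}(a|s)\geq(1-\lambda_s)\exp((Q^{\min}-V^{\max})/\alpha)+\lambda_s/|\cl{A}|\geq\pi^{\min}>0$ uniformly, visit counts grow linearly in expectation at every node, and the exponential rate survives the chaining over depths. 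You need this constant lower bound on the Boltzmann term, not just the $\lambda_s$ floor, for the theorem as stated.
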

        \begin{proofoutline}
        		This result is a special case of Theorem \ref{thrm:dents} by setting $\beta(m)=0$.
        \end{proofoutline}

    \subsection{Decaying Entropy Tree Search} \label{sec:dents}
        Secondly, we present Decaying ENtropy Tree Search (DENTS), which can effectively interpolate between the MENTS and BTS algorithms. DENTS also uses the dynamic programming backups from equations (\ref{eq:dp_q_backup}) and (\ref{eq:dp_v_backup}), but adds an \textit{entropy backup}. The \textit{entropy values} are weighted by a bounded non-negative function $\beta(N(s))$ in the DENTS search policy $\pi_{\textnormal{DENTS}}$:
        \begin{align}
            \pi_{\textnormal{DENTS}}(a|s) &= (1-\lambda_s)\rho_{\textnormal{DENTS}}(a|s) + \frac{\lambda_s}{|\cl{A}|}, \label{eq:dents_search_policy} \\
            \rho_{\textnormal{DENTS}}(a|s) &\propto \exp\left(\frac{1}{\alpha}\left(\Qt{s}{a}{} + \beta(N(s))\cl{H}_Q^{}(s,a) \right)\right), \label{eq:dents_policy} \\
            \cl{H}_V(s_t) &\leftarrow \cl{H}(\pi_{\textnormal{DENTS}}(\cdot | s_t)) + \sum_{a\in\cl{A}} \pi_{\textnormal{DENTS}}(a|s_t)\cl{H}_Q(s_t,a), \label{eq:entropy_v} \\
            \cl{H}_Q(s_t,a_t) &\leftarrow \sum_{s'\in \succc{s_t}{a_t}} \frac{N(s')}{N(s_t,a_t)} \cl{H}_V(s'),  
        \end{align}
        for $t=h-1,...,0$, where $\cl{H}_V(s)$ and $\cl{H}_Q(s,a)$ are the \textit{entropy values} of the search policy rooted at $s$ and $(s,a)$ respectively, and $\alpha,\lambda_s$ are the same as for BTS, as described in Section \ref{sec:bts}. Initial values are the same as Section \ref{sec:bts}, and the entropy values are initialised to zero.  In DENTS we can  view $\Qt{s}{a}{} + \beta(N(s))\cl{H}_Q(s,a)$ as a soft value for $(s,a)$. Hence, by setting $\beta(m)=\alpha$, the DENTS search will mimic the MENTS search (demonstrated in Appendix \ref{app:dents_mimic_ments}), and if $\beta(m)=0$ then the algorithm reduces to the BTS algorithm. By using a decaying function for $\beta$ we amplify values using entropy as an exploration bonus early in the search while allowing for more exploitation later. Recommendations still use 
        Bellman
        values:
        \begin{align}
            \psi_{\text{DENTS}}(s)=\argmax_{a\in\cl{A}}\Qt{s}{a}{}.
        \end{align}
        Because the recommendation policy $\psi_{\text{DENTS}}$ uses the 
        Bellman
        values, we can guarantee that it will converge to the optimal standard policy, and is consistent for any $\beta$.
        \begin{theorem} \label{thrm:dents}
            For any MDP $\cl{M}$, after running $n$ trials of the DENTS algorithm with a root node of $s_0$, if $\beta$ is a bounded function, then there exists constants $C,k>0$ such that for all $\varepsilon>0$ we have 
            $\bb{E}[\reg(s_0,\psi_{\textnormal{DENTS}})] \leq C\exp(-kn)$, and also $\Vt{s_0}{} \rap V^*(s_0)$ as $n\rightarrow\infty$.
        \end{theorem}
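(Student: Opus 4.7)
The plan is to prove both claims simultaneously by induction on distance to the horizon, noting that the simple-regret bound follows from an exponential concentration of $\Qt{s_0}{a}{}$ around $Q^*(s_0,a)$ for each $a\in\cl{A}$: once such concentration is established, a union bound over actions together with the reward-gap $\Delta_{\cl{M}}$ gives an exponential bound on $\Pr(\psi_{\textnormal{DENTS}}(s_0)\neq \pi^*(s_0))$, and hence on $\bb{E}[\reg(s_0,\psi_{\textnormal{DENTS}})]$. Convergence in probability $\Vt{s_0}{}\rap V^*(s_0)$ then follows from the same concentration applied at the root together with $\Vt{s_0}{}=\max_a \Qt{s_0}{a}{}$ being $1$-Lipschitz in the $\Qt{s_0}{\cdot}{}$ values.

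To prove the concentration, I would show by strong induction on $H-t$ that there exist $C_t,k_t>0$ such that for every $(s,a)$ at depth $t$ and every $\varepsilon>0$, $\Pr(|\Qt{s}{a}{}-Q^*(s,a)|>\varepsilon)\leq C_t\exp(-k_t N(s,a))$. The base case at $t=H$ is a direct Hoeffding bound on the bounded rollout/initialization estimator. For the inductive step, I would first establish a visit-count lower bound: the uniform mixing term in $\pi_{\textnormal{DENTS}}$ gives $\pi_{\textnormal{DENTS}}(a|s)\geq \lambda_s/|\cl{A}|\geq \epsilon/(|\cl{A}|\log(e+N(s)))$, so a Chernoff bound on the sequence of conditionally independent Bernoulli selections yields $N(s,a)=\Omega(N(s)/\log N(s))$ with probability at least $1-C'\exp(-k'N(s))$. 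This both delivers the exponential rate in $n$ and guarantees that every descendant is visited often enough to apply the inductive hypothesis.

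With visit counts controlled, I would decompose the backup error $\Qt{s}{a}{}-Q^*(s,a)$ into (i) the deviation of empirical transition frequencies $N(s')/N(s,a)$ from the true $p(s'|s,a)$, handled by a Hoeffding-type bound, and (ii) the child value error $|\Vt{s'}{}-V^*(s')|\leq \max_{a'}|\Qt{s'}{a'}{}-Q^*(s',a')|$, handled by the inductive hypothesis combined with a union bound over actions. Propagating these two exponential bounds through the finite horizon $H$ produces the required constants $C,k$ at the root. Chaining this to the argmax at $s_0$ yields the simple-regret bound, and setting $\beta\equiv 0$ recovers Theorem \ref{thrm:bts} as promised in its proof outline.

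The main obstacle is the circular structure between the policy and the value estimates: the backup defines $\Vt{s'}{}$ as a $\max$ of noisy $\Qt{s'}{\cdot}{}$, whose bias need not vanish at finite sample sizes, and the selection probabilities at internal nodes depend on $\beta(N(s))\cl{H}_Q(s,a)$. The boundedness hypothesis on $\beta$ is exactly what tames this: since $|\beta(N(s))\cl{H}_Q(s,a)|$ is bounded uniformly (the entropy values themselves are bounded by $H\log|\cl{A}|$ as a finite sum of bounded entropies), the Boltzmann logits are a uniformly bounded perturbation of $\Qt{s}{a}{}$, so the minimum selection probability from the uniform-mixing term remains the dominant driver of exploration. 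This keeps the visit-count lower bound uniform across rounds and prevents the entropy bonus from ever pushing the search into a regime where the inductive concentration breaks down; without boundedness, $\beta(N(s))\cl{H}_Q$ could dominate $\Qt{s}{a}{}$ arbitrarily and the argument would collapse.
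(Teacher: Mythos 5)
Your overall architecture — induction from the horizon toward the root, decomposing the backup error into empirical-transition deviation plus child-value error, using $\Delta_{\cl{M}}$ to convert Q-concentration into a bound on misrecommendation, and recovering Theorem \ref{thrm:bts} by setting $\beta\equiv 0$ — matches the paper's proof (Lemmas \ref{lem:stochastic_step}, \ref{lem:dents_val_induction_step}, \ref{lem:dents_imm_simple_regret} and Theorems \ref{thrm:dents_val_converge}, \ref{thrm:dents_simple_regret_converge}). But there is one step that, as written, fails to deliver the claimed rate.

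Your visit-count lower bound relies solely on the uniform mixing term, $\pi_{\textnormal{DENTS}}(a|s)\geq \lambda_s/|\cl{A}|$ with $\lambda_s=\min(1,\epsilon/\log(e+N(s)))$. Since $\lambda_s\to 0$, this only yields $N(s,a)=\Omega(N(s)/\log N(s))$. Feeding that into the inductive hypothesis $\Pr(|\Qt{s}{a}{}-Q^*(s,a)|>\varepsilon)\leq C\exp(-kN(s,a))$ produces a bound of order $\exp\left(-kN(s)/\log N(s)\right)$, which is sub-exponential; compounding this across $H$ levels degrades the rate further, so you end with consistency but not the stated $C\exp(-kn)$ bound. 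The paper avoids this in Lemma \ref{lem:min_prob} by lower-bounding the \emph{Boltzmann} component rather than the mixing term: because $\Qt{s}{a}{}$ is uniformly bounded below by a quantity $Q^{\min}(s,a)$ determined by the rewards and initial values, and the normaliser is uniformly bounded above (here is where boundedness of $\beta$ and $0\leq\cl{H}_Q\leq H\log|\cl{A}|$ enter), the softmax probability $\rho_{\textnormal{DENTS}}(a|s)$ is itself bounded below by a positive constant independent of $N(s)$ and $n$. This gives a constant $\pi^{\min}>0$, hence $N(s,a)\geq \pi^{\min}N(s)/2$ except on an exponentially unlikely event (Lemma \ref{lem:sa_to_s}), which is what preserves the exponential rate through the recursion. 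Your closing paragraph actually identifies the right mechanism (bounded logits), but then attributes the exploration guarantee to the wrong term; the fix is to make the bounded-logit observation quantitative and drop the reliance on $\lambda_s$.

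A secondary point: the paper's $\reg(s_0,\psi)$ is the regret of the full recommended policy, not of the root action, so bounding $\Pr(\psi_{\textnormal{DENTS}}(s_0)\neq\pi^*(s_0))$ alone does not bound $\bb{E}[\reg(s_0,\psi_{\textnormal{DENTS}})]$. You do prove concentration at every node, so the ingredients are present, but you need the recursive regret decomposition of Lemma \ref{lem:imm_simple_regret} and Corollary \ref{cor:imm_to_full_simple_regret} (with the $N(s_t)\leftrightarrow N(s_{t+1})$ translation lemmas) to assemble them into a bound on the full policy's regret.
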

        \begin{proofoutline}
        		Let $0 < \eta < \pi_{\text{DENTS}}(a|s)$ for all $s,a$, which exists because $\exp(\cdot)>0$ and $1 / |\cl{A}| > 0$ in Equation \ref{eq:dents_search_policy} (Lemma \ref{lem:min_prob}),
        		which can be used with Hoeffding bounds to show for appropriate constants and any event $E$ that $\{\Pr(E)\leq C_0\exp(-k_0\varepsilon^2N(s_t))\}$ iff $\{\Pr(E)\leq C_1\exp(-k_1\varepsilon^2N(s_t,a_t))\}$ iff $\{\Pr(E)\leq C_2\exp(-k_2\varepsilon^2N(s_{t+1}))\}$.
			We can then show $\Pr(|\Qt{s_0}{a}{}-Q^*(s_0,a)|>\varepsilon) < C_3\exp\left(-k_3\varepsilon^2n\right)$ by induction, where the base case $\Vt{s_{H+1}}{}=V^*(s_{H+1})=0$ holds vacuously (Lemmas \ref{lem:stochastic_step}, \ref{lem:dents_val_induction_step} and Theorem \ref{thrm:dents_val_converge}).
			Let $\Delta_{\cl{M}}$ be a small constant (Equation (\ref{appeq:delta_diff})) such that $\forall s,a. | \Qt{s_0}{a}{}-Q^*(s_0,a) | \leq \Delta_{\cl{M}} / 2 \implies \argmax_a \Qt{s}{a}{} = \argmax_a Q^*(s,a)$. 
			Setting $\varepsilon = \Delta_{\cl{M}}/2$ gives a bound on $\Pr(\psi_{\text{DENTS}}\neq\pi^*)$, which can then be used in the definition of simple regret to give the result.
        \end{proofoutline}

    \subsection{Using the Alias method} \label{sec:alias}
    		The Alias method \cite{alias1,alias2} can be used to sample from a categorical distribution with $m$ categories in $O(1)$ time, with a preprocessing step of $O(m)$ time. Given any stochastic search policy, we can sample actions in amortised $O(1)$ time, by computing an alias table every $|\cl{A}|$ visits to a node, and then sampling from that table. Note that when using the Alias method we are making a trade off between using the most up to date policy and the speed of sampling actions.
    		
    		In Appendix \ref{app:alias} we discuss this idea in more detail, and give an informal analysis of the complexity to run $n$ trials. BTS, DENTS and MENTS can run $n$ trials in $O(n(H\log(|\cl{A}|)+|\cl{A}|))$ time when using the Alias method, as opposed to the typical complexity of $O(nH|\cl{A}|)$.

    \subsection{Limitations and benefits}
    		The main limitations of BTS and DENTS are as follows: (1)~the DENTS decay function $\beta$ can be non-trivial to set and tune; (2)~the focus on simple regret and exploration means they are not appropriate to use when actions taken during the tree search/planning phase have a real-world cost; (3)~the backups implemented directly as presented above are computationally more costly than computing the average returns that UCT uses; (4)~when it is desirable for an agent to follow the maximum entropy policy, then MENTS would be preferable, for example if the agent needs to explore to learn and discover an unknown environment.
    		
    		The main benefits of using a stochastic policy for action selection are: (1)~they allow the Alias method (Section \ref{sec:alias}) to be used to speed up trials; (2)~they naturally encourage exploration as actions are sampled randomly, which is useful for discovering sparse or delayed rewards and for confirming that actions with low values do in fact have a low value; and (3)~the entropy of a stochastic policy can be computed and used as an exploration bonus. In Appendix \ref{app:comparison_table} we summarise and compare the differences between the algorithms considered in this work in more detail.
    		

\section{Results} \label{sec:results}
    This section compares the proposed BTS and DENTS against MENTS and UCT on a set of goal-based MDPs and in the game of Go. For additional baselines, we also compare with the RENTS and TENTS algorithms \cite{rents_your_tents}, which use \textit{relative} and \textit{Tsalis} entropy in place of Shannon entropy respectively, and the H-MCTS algorithm \cite{karnin2013almost} which combines UCT and Sequential Halving.
    
    \subsection{Gridworlds} \label{sec:gridworlds}
    		To evaluate an algorithm with search tree $\cl{T}$, we complete the partial recommendation policy $\psi_{\text{alg}}$ as follows:
        \begin{align}
            \psi(a|s) =
            \begin{cases}
                1 & \text{ if } s\in\cl{T} \text{ and } a=\psi_{\text{alg}}(s), \\
                0 & \text{ if } s\in\cl{T} \text{ and } a\neq\psi_{\text{alg}}(s), \\
                \frac{1}{|\cl{A}|} & \text{ otherwise.}
            \end{cases} \label{eq:full_Recommend}
        \end{align}
        We sample a number of trajectories from $\psi$, and take the average return to estimate $V^{\psi}$. Although we are evaluating the algorithms in an \textit{offline planning} setting, it still indicates how the algorithms perform in an \textit{online} setting where we interleave planning in simulation with letting the agent act. 

    \subsubsection{Domains} \label{sec:gridworld_doms}
        To validate our approach, we use the Frozen Lake environment \cite{brockman2016openai}, and the Sailing Problem \cite{peret2004line}, commonly used to evaluate tree search algorithms \cite{peret2004line,kocsis2006uct,mcts_simple_regret,brue1}. We chose these environments to compare our algorithms in a domain with a sparse and dense reward respectively.
        
        The \emph{(Deterministic) Frozen Lake} is a grid world environment with one goal state. The agent can move in any cardinal direction at each time step, and walking into a wall leaves the agent in the same location. Trap states exist where the agent falls into a hole and the trial ends. If the agent arrives at the goal state after $t$ timesteps, then a reward of $0.99^t$ is received.
        
        The \emph{Sailing Problem} is a grid world environment with one goal state, at the opposite corner to the starting location of the agent. The agent has 8 different actions to travel each of the 8 adjacent states. In each state, the wind is blowing in a given direction and will stochastically change after every transition. The agent cannot sail directly into the wind. The cost of each action depends on the \textit{tack}, the angle between the direction of the agent's travel and the wind. 

    \subsubsection{Results}
        We used an 8x12 Frozen Lake environment and a 6x6 Sailing Problem for evaluation, more environment details are given in Appendix \ref{app:env_deets}. Parameters were selected using a hyper-parameter search (Appendix \ref{app:hps}). 
        Each algorithm is run 25 times on each environment and evaluated every 250 trials using 250 trajectories. 
        A horizon of 100 was used for Frozen Lake and 50 for the Sailing Problem.

        In Frozen Lake (Figure \ref{fig:fl}), entropy proved to be a useful exploration bonus for the \textit{sparse reward}. Values in UCT and BTS remain at zero until a trial successfully reaches the goal. However, entropy guides agents to avoid trap states, where the entropy is zero. DENTS was able to perform similarly to MENTS, and BTS was able to improve its policy over time more than UCT.

        In the Sailing Problem (Figure \ref{fig:sp}) UCT performs well due to the dense reward. BTS and DENTS also manage to keep up with UCT. MENTS and TENTS appear to be slightly hindered by entropy in this environment. The relative entropy encourages RENTS to pick the same actions over time, so it tends to pick a direction and stick with it regardless of cost.
        
        Finally, BTS and DENTS were able to perform well in both domains with a sparse and dense reward structure, whereas the existing methods performed better on one than the other, hence making BTS and DENTS good candidates for a general purpose MCTS algorithm.
        
        \begin{figure*}
            \centering
            \begin{subfigure}[b]{0.49\textwidth}
                \centering
                \includegraphics[width=\textwidth]{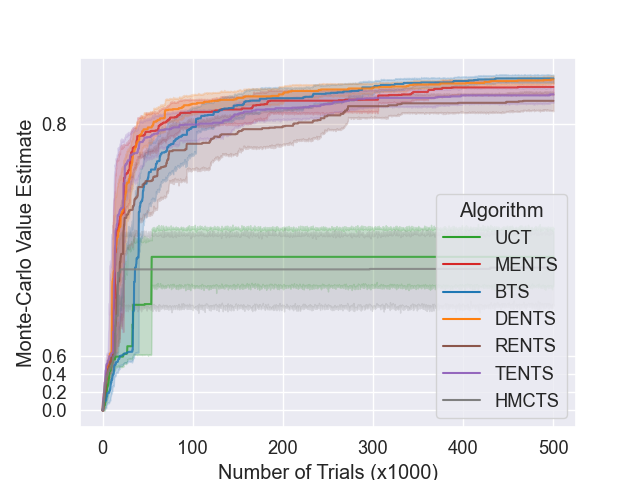}
                \caption{8x12 Frozen Lake.}
                \label{fig:fl}
            \end{subfigure}
            \begin{subfigure}[b]{0.49\textwidth}
                \centering
                \includegraphics[width=\textwidth]{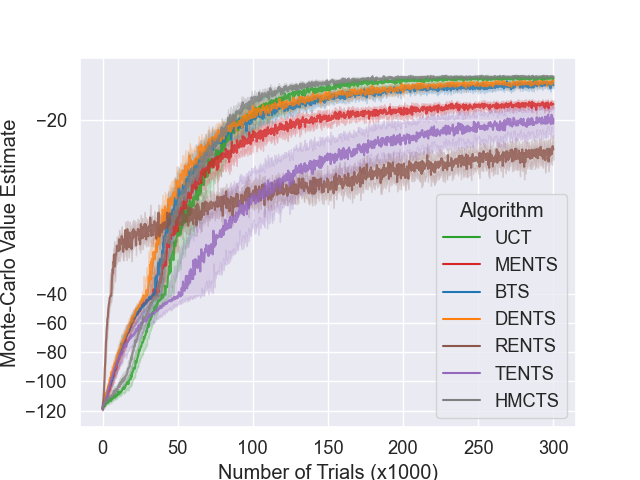}
                \caption{6x6 Sailing Problem.}
                \label{fig:sp}
            \end{subfigure}
            \caption{Results for gridworld environments. Further results are given in Appendix \ref{app:hps}.}
            \label{fig:gridworld_results}
        \end{figure*}

    \subsection{Go} \label{sec:go}
        For a more challenging domain we ran a round-robin tournament using the game of Go, which has widely motivated the development of MCTS methods \cite{gelly2007combining,silver2016mastering,silver2017mastering}. 
        In each match, each algorithm played $50$ games as black and $50$ as white. \textit{Area scoring} is used to score the games, with a \textit{komi} 
        of $7.5$. 
        We used an openly available value network $\tilde{V}$ and policy network $\tilde{\pi}$ from KataGo \cite{katago}. Our baseline was the PUCT algorithm \cite{poly_uct2}, as described in Alpha Go Zero \cite{silver2017mastering} using prioritised UCB \cite{prioritised_ucb} to utilise the policy neural network. Each algorithm was limited to $5$ seconds of compute time per move, allowed to use $32$ search threads per move, and had access to 80 Intel Xeon E5-2698V4 CPUs clocked at 2.2GHz, and a single Nvidia V100 GPU on a shared compute cluster.
        
        To use Boltzmann search in Go, we adapted the algorithms to account for an opponent that wishes to minimise the value of a two-player game. This is achieved by appropriately negating values used in the search policy and backups, which is described precisely in Appendix \ref{app:adapt_for_games}. 
        
        Additionally, we found that adapting the algorithms to use average returns (recall Equation (\ref{eq:uct_qbar})) outperformed using Bellman backups for Go (Appendix \ref{app:go_hps}). The Bellman backups were sensitive to and propogated noise from the neural network evaluations. We use the prefix `AR' to denote the algorithms using average returns, such as AR-DENTS. Full details for these algorithms are given in Appendix \ref{app:average_returns}.
        
        \subsubsection{Using neural networks with Boltzmann search}
            This section describes how to use value and policy networks in BTS. Adapting MENTS and DENTS are similar (Appendix \ref{app:adapt_for_nets}). Values can be initialised with the neural networks as $\Qt{s}{a}{}\leftarrow\log \tilde{\pi}(a|s)+B$ and $\Vt{s}{}\leftarrow\tilde{V}(s)$, where $B$ is a constant (adapted from Xiao \etal \cite{xiao2019maximum}). With such an initialisation, the initial BTS policy is 
            $\rho_{\textnormal{BTS}}(a|s)\propto\tilde{\pi}(a|s)^{1/\alpha}$. For these experiments we set a value of $B=\frac{-1}{|\mathcal{A}|}\sum_{a\in\mathcal{A}} \log\tilde{\pi}(a|s)$. Additionally, the stochastic search policy naturally lends itself to mixing in a prior policy, so we can replace BTS search policy $\pi_{\textnormal{BTS}}$ (Equation (\ref{eq:bts_search_policy})) with $\pi_{\textnormal{BTS,mix}}$:
            \begin{align}
                \pi_{\textnormal{BTS,mix}}(a|s) 
                &= \lambda_{\tilde{\pi}}\tilde{\pi}(a|s) + (1-\lambda_{\tilde{\pi}}) \pi_{\textnormal{BTS}}(a|s) \\
                &= \lambda_{\tilde{\pi}}\tilde{\pi}(a|s) + (1-\lambda_{\tilde{\pi}})(1-\lambda_s)\rho_{\textnormal{BTS}}(a|s) + \frac{(1-\lambda_{\tilde{\pi}})\lambda_s}{|\cl{A}|}, 
            \end{align}
            where $\lambda_{\tilde{\pi}}=\min(1,\epsilon_{\tilde{\pi}}/\log(e+N(s)))$, and $\epsilon_{\tilde{\pi}} \in (0,\infty)$ controls the weighting for the prior policy. 

        \subsubsection{Results}
            Results of the round-robin are summarised in Table \ref{table:go_results}, and we discuss how parameters were selected in Appendix \ref{app:go_hps}. BTS was able to run the most trials per move and beat all of the other algorithms other than DENTS which it drew. We used the optimisations outlined in Appendix \ref{app:alias} which allowed the Boltzmann search algorithms to run significantly more trials per move than PUCT. 
            BTS and DENTS were able to beat PUCT with results of 57-43 and 58-42 respectively. Using entropy did not seem to have much benefit in these experiments, as can be witnessed by MENTS only beating TENTS, and DENTS drawing 50-50 with BTS. This is likely because the additional exploration provided by entropy is vastly outweighed by utilising the information contained in the neural networks $\tilde{V}$ and $\tilde{\pi}$. 
            Interestingly RENTS had the best performance out of the prior works, losing 43-57 to PUCT, and the use of relative entropy appears to take advantage of a heuristic for Go that the RAVE \cite{rave} algorithm used: the value of a move is typically unaffected by other moves on the board.
            
            To validate the strength of our PUCT agent, we also compared it directly with KataGo \cite{katago}, limiting each algorithm to 1600~trials per move. Our PUCT agent won 61-39 in 9x9 Go, and lost 35-65 in 19x19 Go, suggesting that our PUCT agent is strong enough to provide a meaningful comparison for our other general purpose algorithms. Finally, note that we did not fine-tune the neural networks, so the Boltzmann search algorithms directly used the networks that were trained for use in PUCT.

        \begin{table*}[]
        \centering   
		    \begin{tabular}{l|cccccc|c} 
		        \textbf{Black \textbackslash White}     & PUCT  & AR-M  & AR-R  & AR-T  & AR-B  & AR-D   & Trials/move\\ 
		        \hline
		                                PUCT            &   -   & 33-17 & 27-23 & 42-8  & 17-33 & 15-35  & 1054 \\
		                                AR-MENTS        & 12-48 &   -   & 13-37 & 38-12 & 10-40 & 12-38  & 4851\\
		                                AR-RENTS        & 20-30 & 24-26 &   -   & 39-11 & 18-32 & 14-36  & 3672 \\
		                                AR-TENTS        &  8-42 & 11-39 &  9-41 &   -   &  6-44 & 10-40  & 5206 \\
		                                AR-BTS          & 25-25 & 35-15 & 31-19 & 34-16 &   -   & 15-35  & 5375 \\
		                                AR-DENTS        & 23-27 & 36-14 & 29-21 & 36-14 & 15-35 &   -    & 4677 \\         
		    \end{tabular}
            \caption{Results for the Go round-robin tournament. The first column gives the agent playing as black. The final column gives the average trials run per move across the entire round-robin. In the top row, we abbreviate the algorithm names for space.\label{table:go_results}}
        \end{table*}

\section{Related work}
    
    UCT \cite{kocsis2006uct,kocsis2006improved} is a widely used variant of MCTS. Polynomial UCT \cite{poly_uct2}, replaces the logarithmic term in UCB with a polynomial one (such as a square root), which has been further 
    popularised by its use in 
    Alpha Go and Alpha Zero \cite{silver2016mastering,silver2017mastering,silver2018general}. Coquelin and Munos introduce the Flat-UCB and BAST algorithms to adapt UCT for the D-chain problem \cite{coquelin2007_uct}. 
    However, we consider an alternative approach for search in MCTS rather than adapting UCB.
    
    Maximum entropy policy optimization methods are well-known in the reinforcement learning literature  \cite{haarnoja2017reinforcement, haarnoja2018soft, ziebart2008maximum}. 
    MENTS \cite{xiao2019maximum} is the first method to combine the principle of maximum entropy and MCTS. Kozakowski \etal \cite{ants} extend MENTS to arrive at Adaptive Entropy Tree Search (ANTS), 
    adapting parameters throughout the search to match a prescribed entropy value. Dam \etal \cite{rents_your_tents} also extend MENTS using \textit{Relative} and \textit{Tsallis entropy} to arrive at the RENTS and TENTS algorithms. 
    Our work is closely related to MENTS, however, we focus on reward maximisation and consider how entropy can be used in MCTS without altering our planning objective.
    
    Bubeck \etal \cite{simple_regret_short, simple_regret_long} introduce 
    simple regret in the context of multi-armed bandit problems (MABs). They alternate between pulling arms for exploration and outputting a \textit{recommendation}. They show for MABs that a uniform exploration produces an exponential bound on the simple regret of recommendations. 
    We use simple regret, but in the context of sequential decision-making, to analyse the convergence of MCTS algorithms.
    
    Tolpin and Shimony \cite{mcts_simple_regret} extend simple regret to MDP settings, showing an $O(\exp(-\sqrt{n}))$ bound on simple regret after $n$ trials by adapting UCT. The subsequent work of Hay \etal \cite{hay2014selecting} extends \cite{mcts_simple_regret} to consider a \textit{metalevel decision problem}, incorporating computation costs into the objective. Pepels \etal \cite{mcts_simple_regret_two} introduce a Hybrid MCTS (H-MCTS) motivated by the notion of simple regret. H-MCTS uses a mixture of Sequential Halving \cite{karnin2013almost}, and UCT. Feldman \etal \cite{brue1,brue2,brue3} introduce the Best Recommendation with Uniform Exploration (BRUE) algorithm. BRUE splits trials up to explicitly focus on exploration and value estimation one at a time. BRUE achieves an exponential bound $O(\exp(-n))$ on simple regret after $n$ trials \cite{brue1}. 
    Prior work that considers simple regret in MCTS has focused on adaptations to UCT, whereas this work focuses on algorithms that sample actions from Boltzmann distributions, rather than using UCB for action seclection.

\section{Conclusion}
    We considered the recently introduced MENTS algorithm, compared and contrasted it to UCT, and discussed the limitations of both. We introduced two new algorithms, BTS and DENTS, that are consistent, converge to the optimal standard policy, while preserving the benefits that come with using a stochastic Boltzmann search policy. Finally, we compared our algorithms in gridworld environments and Go, demonstrating the performance benefits of utilising the Alias method, that entropy can be a useful exploration bonus with sparse rewards, and more generally, 
    demonstrating the advantage of prioritising exploration in planning, by using Boltzmann search policies. 
    
    An interesting area of future work may include investigating good heuristics for setting parameters in BTS and DENTS. We noticed that the best value for the search temperature tended to be the same order of magnitude as the optimal value at the root node $V^*(s_0)$, which suggests a heuristic similar to \cite{prst} may be reasonable.

{
    \small
    
    \bibliographystyle{plain}
    \bibliography{mybib}
}

\newpage

\newpage
\begin{appendices}
\FloatBarrier

These appendices are structured as follows:
\begin{itemize}
	\item Appendix \ref{app:average_returns} gives details on how to adapt the BTS, DENTS, MENTS, RENTS and TENTS algorithms to use average returns backups, rather than dynamic programming backups, to arrive at the AR-BTS, AR-DENTS, AR-MENTS, AR-RENTS and AR-TENTS algorithms respectively.
	\item Appendix \ref{app:additional_algo_deets} covers algorithm details that we could not fit into the main paper. Specifically:
	\begin{itemize}
		\item in Appendix \ref{app:alias} we discuss the computational complexity of BTS, DENTS and MENTS in more detail, including what can be achieved by using the alias method \citeapp{alias1,alias2};
		\item in Appendix \ref{app:adapt_for_games} we give the specific details of how we adapted BTS, DENTS, MENTS, RENTS and TENTS for two-player games and to utilise neural networks;
		\item and in Appendix \ref{app:comparison_table} we summarise the differences between the MCTS algorithms considered in this work in a table.
	\end{itemize}
	\item Appendix \ref{app:additional_results} generally covers additional details about experiments and results. Specifically:
	\begin{itemize}
		\item in Appendix \ref{app:env_deets} we give more specific details about the gridworld envrionments used in our experiments;
		\item in Appendix \ref{app:param_sens} we have a more detailed discussion about parameter sensitivity, with additional results and discussion using the D-chain environments in Appendix \ref{app:param_sens_dchain} and in the Frozen Lake environment in Appendix \ref{app:param_sens_fl};
		\item in Appendix \ref{app:hps} we give details for the hyperparameter search used to select parameters in the gridworld domains;
		\item in Appendix \ref{app:dents_mimic_ments} we show empirically that DENTS can follow a search policy very similar to the MENTS search policy;
		\item and in Appendix \ref{app:go_results} we give some additional details about our Go experiments, including how we selected hyperparameters.
	\end{itemize}
	\item Appendix \ref{app:proofs} constitutes the theoretical section of this work, beginning by revisiting the MCTS \textit{stochastic process}, introducing notation needed to reason about convergence properties and then providing proofs for the results quoted in the main paper and Appendix \ref{app:average_returns}. Appendix \ref{app:proofs} has its own introduction/contents to give an overview of how the theory is built up.
\end{itemize}

\section{Code} \label{app:code}
    The code for this paper can be found at \url{https://github.com/MWPainter/thts-plus-plus/tree/xpr_go}.

\newpage
\section{Using average returns with Boltzmann search} \label{app:average_returns}

    \FloatBarrier
    
    In this section we consider how to adapt each of the algorithms considered in this paper to use average returns. We start by describing AR-BTS and AR-DENTS and then subsequently how AR-DENTS can be adapted to give AR-MENTS, AR-RENTS and AR-TENTS. We also give an example of why the adaptations are necessary.

    \subsection{AR-BTS}
        We can replace the Bellman values used in BTS with average returns similar to UCT, resulting in the subsequent AR-BTS algorithm. Recall that $\bar{Q}(s,a)$, the average return at $(s,a)$, is defined by:
        \begin{align}
            \bar{Q}(s_t, a_t) &\leftarrow \bar{Q}(s_t, a_t) + \frac{\bar{R}(s_t,a_t) - \bar{Q}(s_t, a_t)}{N(s_t, a_t) + 1},  \label{appeq:average_return}
        \end{align}
        where $\bar{R}(s_t,a_t) = \sum_{i=t}^H R(s_i,a_i)$. An additional issue that needs consideration when using average returns is that the resulting empirical values will not converge to the optimal values when using a stochastic policy, as we argue below. Hence, in AR-BTS we also decay the search temperature, such that the Boltzmann policy tends towards a greedy policy. So, let $\alpha$ be a non-negative function with $\alpha(m)\rightarrow 0$ as $m\rightarrow \infty$. The search policy used for AR-BTS is then:
        \begin{align}
            \pi_{\textnormal{AR-BTS}}(a|s) &= (1-\lambda_s)\rho_{\textnormal{AR-BTS}}(a|s) + \frac{\lambda_s}{|\cl{A}|}, \\
            \rho_{\textnormal{AR-BTS}}(a|s) &\propto \exp\left(\frac{1}{\alpha(N(s))}\left(\bar{Q}(s,a)\right)\right). 
        \end{align}
        The recommendation policy for AR-BTS then uses the $\bar{Q}$ average return values:
        \begin{align}
            \psi_{\text{AR-BTS}}(s)&=\argmax_{a\in\cl{A}}\bar{Q}(s,a).
        \end{align}
        To see why the decaying search temperature is necessary, consider the MDP in Figure \ref{fig:ar_eg_mdp}, with a temperature of $\alpha(m)=1$. Consider the average return $\bar{V}(2)$ from state $2$ that has been visited $N(2)$ times. The probability of selecting action $a_1$ from state 2 will be $1/(1+e^2)$, and the probability of selecting action $a_2$ from state 2 will be $e^2/(1+e^2)$. So as $N(2)\rightarrow\infty$ we expect the average return to converge to slightly less than two: $\bar{V}(2)\rightarrow 2e^2/(1+e^2) < 2$. Taking advantage of this, we can show that for any fixed temperature $\alpha(m)=\alpha_{\text{fix}}$, there is an MDP that AR-BTS would not be consistent. We summarise this result in Proposition \ref{prop:ar_bts_fails}. Thus, necessitating the use of the decaying search temperature. 
        
        \begin{figure}
            \centering
            \includegraphics[width=0.3\textwidth]{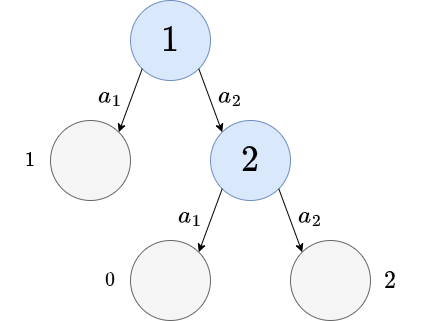}
            \caption{An example MDP to demonstrate the necessity for a decaying search temperature when using average returns.}
            \label{fig:ar_eg_mdp}
        \end{figure}

        \begin{prop}\label{prop:ar_bts_fails}
            For any $\alpha_{\textnormal{fix}}>0$, there is an MDP $\cl{M}$ such that AR-BTS with $\alpha(m)=\alpha_{\textnormal{fix}}$ is not consistent: $\bb{E}[\reg(s_0,\psi_{\textnormal{AR-BTS}})] \not\to 0$ as $n\to\infty$. 
        \end{prop}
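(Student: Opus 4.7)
The plan is to construct an explicit two--stage MDP in which the stochasticity of the Boltzmann policy at an internal node biases the average return upward of the greedy value of a competing action. Concretely, let the root $s_0$ have two actions: $b_1$ deterministically transitions to a state $s_1$ and $b_2$ deterministically transitions to an absorbing state with reward $r'$. At $s_1$, two actions $a_1, a_2$ transition to absorbing states with rewards $0$ and $r$ respectively. Pick $r > 0$ and then choose $r'$ so that
\begin{align}
    \frac{r \, e^{r/\alpha_{\textnormal{fix}}}}{1 + e^{r/\alpha_{\textnormal{fix}}}} \;<\; r' \;<\; r,
\end{align}
which is possible for any $\alpha_{\textnormal{fix}} > 0$ since the left-hand side is strictly less than $r$. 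Here $b_1$ is the (unique) optimal root action with $Q^*(s_0, b_1) = r$, but I will show that AR-BTS almost surely converges to recommending $b_2$.

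Next I would show that the empirical averages at $s_1$ converge to the true rewards. Since $\lambda_{s_1} \to 0$ and $\rho_{\textnormal{AR-BTS}}(\cdot\mid s_1)$ is bounded away from $0$ on both actions, $s_1$ is visited infinitely often and both $a_1, a_2$ are each selected infinitely often, so by a standard martingale/law-of-large-numbers argument $\bar{Q}(s_1, a_i)$ converges almost surely to the corresponding deterministic reward ($0$ and $r$). Consequently the AR-BTS search policy at $s_1$ converges to the fixed Boltzmann distribution $p_{a_2} = e^{r/\alpha_{\textnormal{fix}}}/(1+e^{r/\alpha_{\textnormal{fix}}})$, and the stream of returns observed from $s_1$ is eventually an i.i.d.\ mixture (in the limit) with mean $r\, e^{r/\alpha_{\textnormal{fix}}}/(1+e^{r/\alpha_{\textnormal{fix}}})$. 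Averaging a sequence whose per-step conditional mean converges to this constant, the Cesaro-type average $\bar{Q}(s_0, b_1)$ converges almost surely to $r\, e^{r/\alpha_{\textnormal{fix}}}/(1+e^{r/\alpha_{\textnormal{fix}}})$.

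Meanwhile $b_2$ leads deterministically to reward $r'$, so $\bar{Q}(s_0, b_2) \to r'$ almost surely. By the choice of $r'$, for sufficiently large $n$ we have $\bar{Q}(s_0, b_2) > \bar{Q}(s_0, b_1)$ with probability tending to $1$, and so $\psi_{\textnormal{AR-BTS}}(s_0) = b_2$ eventually. Completing the recommendation as in Equation~(\ref{eq:full_Recommend}) yields $V^{\psi_{\textnormal{AR-BTS}}^n}(s_0) = r' < r = V^*(s_0)$ with probability tending to $1$, so $\bb{E}[\reg(s_0, \psi_{\textnormal{AR-BTS}}^n)] \to r - r' > 0$, contradicting consistency.

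\textbf{Main obstacle.} The delicate step is the almost-sure convergence $\bar{Q}(s_0, b_1) \to r\, e^{r/\alpha_{\textnormal{fix}}}/(1+e^{r/\alpha_{\textnormal{fix}}})$. The returns collected along $b_1$ form a non-i.i.d.\ sequence because the selection probabilities at $s_1$ depend on the running estimates $\bar{Q}(s_1, \cdot)$ and on the decaying uniform mixture $\lambda_{s_1}$. I would handle this by writing the average as a martingale-difference sum plus a vanishing bias term: the conditional mean of each new return given the history is the expected reward under the current (data-dependent) policy, which converges to the limiting Boltzmann probabilities by the previous step, and a Robbins--Monro / stochastic approximation or a direct Cesaro argument then delivers the required limit. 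The rest of the argument is essentially bookkeeping.
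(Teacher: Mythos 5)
Your proposal is correct, and it takes a genuinely different route from the paper's. The paper's argument (outlined in Section~\ref{sec:ar_proofs}, using the chain MDP of Figure~\ref{fig:ar_gen_mdp}) keeps the competing root reward fixed at $1$ and the terminal reward fixed at $2$, and instead makes the example depend on $\alpha_{\textnormal{fix}}$ through the \emph{depth} $D$ of the chain: the multiplicative loss factor $E=e^{2/\alpha_{\textnormal{fix}}}/(1+e^{2/\alpha_{\textnormal{fix}}})<1$ incurred at each level compounds, so $\bb{E}\bar{V}(2)\leq 2E^{D-1}$ can be driven below $1$ by taking $D$ large. You instead use a depth-two MDP and make the example depend on $\alpha_{\textnormal{fix}}$ through the competing reward $r'$, placed in the nonempty gap $\bigl(r\,e^{r/\alpha_{\textnormal{fix}}}/(1+e^{r/\alpha_{\textnormal{fix}}}),\, r\bigr)$. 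Both constructions exploit the same phenomenon — under a non-vanishing Boltzmann policy the Cesaro average at a parent strictly underestimates the max over children — but yours is more elementary and avoids the induction down the chain entirely; the paper's buys the (mild) extra conclusion that the failure occurs even with rewards drawn from a fixed finite set, at the cost of an $\alpha$-dependent state space. Your identification of the delicate step is also accurate: the returns backed up through $b_1$ are not i.i.d.\ because the selection probabilities at $s_1$ depend on the history, and your martingale-difference-plus-vanishing-bias decomposition (conditional means $r q_k \to r\,e^{r/\alpha_{\textnormal{fix}}}/(1+e^{r/\alpha_{\textnormal{fix}}})$, bounded increments, martingale SLLN plus Cesaro) is a standard and sufficient way to close it. Note that in your example this step is actually slightly easier than in general, since the rewards at $s_1$ are deterministic, so $\bar{Q}(s_1,a_1)=0$ and $\bar{Q}(s_1,a_2)=r$ exactly after each action has been sampled once, and the only time-dependence left in $q_k$ is the decaying $\lambda_{s_1}$ term. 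The paper's own proof is itself only an outline that defers exactly this concentration argument, so your plan is at least as complete as the reference.
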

        \begin{proofoutline}
            We give a proof outline in Section \ref{sec:ar_proofs}, using an MDP similar to the one shown in Figure \ref{fig:ar_eg_mdp}.
        \end{proofoutline}

        Additionally, provided the search temperature decays to zero, AR-BTS will be consistent (Theorem \ref{thrm:ar_bts}).
        
        \begin{customthm}{B.2} \label{thrm:ar_bts}
            For any MDP $\cl{M}$, if $\alpha(m)\rightarrow 0$ as $m\rightarrow\infty$ then $\bb{E}[\reg(s_0,\psi_{\textnormal{AR-BTS}})]\rightarrow 0$ as $n\rightarrow\infty$, where $n$ is the number of trials.
        \end{customthm}
        \begin{proofoutline}
            We give a proof outline in Section \ref{sec:ar_proofs}.
        \end{proofoutline}

    \subsection{AR-DENTS}
        Using average returns in DENTS is similar to BTS. To compute the average returns we use Equation (\ref{appeq:average_return}) again. To compute the entropy values $\cl{H}_V$ and $\cl{H}_Q$ to use in AR-DENTS, we use the same entropy backups as in DENTS:
        \begin{align}
            \cl{H}_V(s_t) &\leftarrow \cl{H}(\pi_{\textnormal{AR-DENTS}}(\cdot | s_t)) + \sum_{a\in\cl{A}} \pi_{\textnormal{AR-DENTS}}(a_t|s_t)\cl{H}_Q(s_t,a_t), \label{appeq:entropy_v} \\
            \cl{H}_Q(s_t,a_t) &\leftarrow \sum_{s'\in \succc{s_t}{a_t}} \frac{N(s')}{N(s_t,a_t)} \cl{H}_V(s'). \label{appeq:entropy_q}
        \end{align}
        The search policy in AR-DENTS is the same as in DENTS, but with the Bellman value $\hat{Q}$ replaced by the average return value $\bar{Q}$:
        \begin{align}
            \pi_{\textnormal{AR-DENTS}}(a|s) &= (1-\lambda_s)\rho_{\textnormal{AR-DENTS}}(a|s) + \frac{\lambda_s}{|\cl{A}|}, \\
            \rho_{\textnormal{AR-DENTS}}(a|s) &\propto \exp\left(\frac{1}{\alpha(N(s))}\left(\bar{Q}(s,a) + \beta(N(s))\cl{H}_Q(s_t,a_t) \right)\right).
        \end{align}
        The recommendation policy for AR-DENTS also use the $\bar{Q}$ average return values:
        \begin{align*}
            \psi_{\text{AR-DENTS}}(s)&=\argmax_{a\in\cl{A}}\bar{Q}(s,a).
        \end{align*}
        Similarly to the AR-BTS result, we show that if $\alpha(m)\rightarrow 0$ and $\beta(m)\rightarrow 0$, then AR-DENTS is consistent. Again, because we are considering the AR versions of our algorithms from a practical viewpoint rather than a theoretical one, we only show consistency without any specific (simple) regret bounds.

        \begin{customthm}{B.3} \label{thrm:ar_dents}
            For any MDP $\cl{M}$, if $\alpha(m)\rightarrow 0$ and $\beta(m)\rightarrow 0$ as $m\rightarrow\infty$ then $\bb{E}[\reg(s_0,\psi_{\textnormal{AR-DENTS}})]\rightarrow 0$ as $n\rightarrow\infty$, where $n$ is the number of trials.
        \end{customthm}
        \begin{proofoutline}
            We give a proof outline in Section \ref{sec:ar_proofs}.
        \end{proofoutline}

    \subsection{AR-MENTS, AR-RENTS and AR-TENTS}
        MENTS uses \textit{soft values}, $\hat{Q}_{\textnormal{sft}}$, which are not obvious how to replace with average returns. So to produce the AR variants of MENTS, RENTS and TENTS we use AR-DENTS as a starting point.

        \paragraph{AR-MENTS.} For AR-MENTS we use AR-DENTS, but set $\beta(m)=\alpha(m)=\alpha_{\text{fix}}$. This algorithm resembles MENTS, as the weighting used for entropy in soft values is the same as the Boltzmann policy search temperature.

        \paragraph{AR-RENTS.} To arrive at AR-RENTS, we replace any use of $\hat{Q}_{\textnormal{sft}}(s,a)$ with $\bar{Q}(s,a)+\beta(m)\cl{H}_Q(s,a)$. So we use Equations (\ref{appeq:average_return}), (\ref{appeq:entropy_v}) and (\ref{appeq:entropy_q}) for backups, but replace the Shannon entropy function $\cl{H}$, with a relative entropy function $\cl{H}_{\textnormal{relative}}$ in Equation (\ref{appeq:entropy_v}). The relative entropy function $\cl{H}_{\textnormal{relative}}$ uses the \textit{Kullback-Leibler divergence} between the search policy and the search policy of the parent decision node. The search policy used is the same as in RENTS, with the aforementioned substitution for soft values. See \cite{rents_your_tents} for full details on computing relative entropy and the search policy used in RENTS.

        \paragraph{AR-TENTS.} Similarly, for AR-TENTS, we replace any use of $\hat{Q}^m_{\textnormal{sft}}(s,a)$ with $\bar{Q}^m(s,a)+\beta(m)\cl{H}_Q(s,a)$, and use Equations (\ref{appeq:average_return}), (\ref{appeq:entropy_v}) and (\ref{appeq:entropy_q}) for backups. This time, we replace the Shannon entropy function $\cl{H}$, with a Tsallis entropy function $\cl{H}_{\textnormal{Tsallis}}$ in Equation (\ref{appeq:entropy_v}). Again, we use the same search policy used in TENTS, with the substitution for soft values. See \cite{rents_your_tents} for how Tsallis entropy is computed and the corresponding search policy for TENTS.

\newpage      
\section{Additional algorithm discussion} \label{app:additional_algo_deets}
	\subsection{The alias method, implementation details and optimisations} \label{app:alias}

    \FloatBarrier

		The Alias method \cite{alias1,alias2} can be used to sample from fixed categorical distribution in $O(1)$ time. If the distribution consists of $m$ categories, then a preprocessing step of $O(m)$ is needed to construct the table, details on how to construct the table can be found in \cite{alias2}. We provide a small example of an alias table in Figure \ref{fig:alias}.

        \begin{figure}
            \centering
            \includegraphics[scale=0.4]{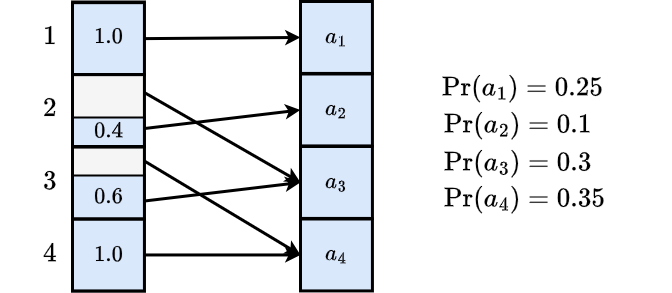}
            \caption{An example of an alias table for a categorical distribution over four actions. To sample from the table we can draw a random index from $I\sim U(\{1,2,3,4\})$, sample a uniformly random number from $x\sim U(0,1)$ and then follow the pointer depending on if $x>\text{thresholds}[I]$ or not. For example, sampling either $(I,x)=(2,0.5)$  or $(I,x)=(3,0.1)$ would lead to the action $a_3$.}
            \label{fig:alias}
        \end{figure}

        Any MCTS algorithm that samples actions from a stochastic distribution can make use of the Alias method, for example all of the Boltzmann search algorithms considered in this work. For a node with $A=|\cl{A}|$ actions/children, an MCTS algorithm that samples actions from a distribution would typically on each trial: construct the distribution in $O(A)$ time, and then sample from the distribution using a single uniform random number in $O(A)$ time. However, we can instead construct a distribution every $A$ times we visit a search node, which gives an amortised complexity of $O(1)$, and then sampling from an alias table only takes $O(1)$ time.
        
        It is worth noting that when we use the Alias method, we are not sampling from the most up to date distributions possible, so we are making a trade off between being able to sample actions more efficiently and using the most up to date distributions. Moreover, when we use this method, we are making use of the stochastic distribution that will still allow a variety of actions to be sampled on different trials with a fixed distribution. In contrast, UCT uses a deterministic distribution (recall Equation (\ref{eq:uct_distr})) which selects the action with maximum UCB value and different actions are selected on different trials by changing the distribution (i.e. a new action is selected typically when the confidence term becomes large enough, which requires the UCB values to be computed each time).
        
        We now consider the runtime complexity of running $n$ trials for different algorithms. We summarise these complexities as part of Table \ref{table:compare_algs} and we assume that successor states can be sampled in $O(1)$ time, noting that if we are given a tabular successor state distribution then we can use the Alias method again to sample from it in $O(1)$ time. 
        
        \subsubsection{BTS backup complexity}
        
        		Firstly, recall the backups used by BTS for a trajectory $\tau=(s_0,a_0,...,a_{h-1},s_h)$:
	        \begin{align}
	            \Qt{s_t}{a_t}{} &\leftarrow R(s_t,a_t) + \sum_{s' \in \succc{s_t}{a_t}} \left( \frac{N(s')}{N(s_t,a_t)} \Vt{s'}{} \right), \label{appeqc:dp_q_backup} \\ 
	            \Vt{s_t}{} &\leftarrow\max_{a\in\cl{A}} \Qt{s_t}{a}{}. \label{appeqc:dp_v_backup} 
	        \end{align}
	        
	        In Equation (\ref{appeqc:dp_q_backup}) observe that only the $\Vt{s_t}{}$ term in the sum will change, that is, for each $s' \in \{s''\in\succc{s_t}{a_t}|s''\neq s_{t+1}\}$ that the value of $\Vt{s'}{}$ will be the same. Hence, we can implement the update in $O(1)$ as:
        		
	        \begin{align}
	            \Qt{s_t}{a_t}{} &\leftarrow 
				\begin{cases}
					R(s_t,a_t) + \Vt{s'}{} & \text{if } N(s_t,a_t) = 1, \\
					\Qt{s_t}{a_t}{} - \frac{N(s_{t+1})-1}{N(s_t,a_t)-1} \Vt{s_{t+1}}{\text{old}} + \frac{N(s_{t+1})}{N(s_t,a_t)} \Vt{s_{t+1}}{} & \text{if } N(s_t,a_t) > 1,
				\end{cases}	\label{appeqc:opt_q_backup}
	        \end{align}
	        
	        where $\Vt{s_{t+1}}{\text{old}}$ is the previous value of $\Vt{s_{t+1}}{}$. To more efficiently implement Equation (\ref{appeqc:dp_v_backup}) we can use a \textit{max heap}, which will take $O(\log(A))$ to update the value of $\Qt{s_t}{a_t}{}$ in the heap, and $O(1)$ to read out the maximum value. 
	        
	        Hence, an optimised version of the BTS backups will take $O(\log(A))$. So when running BTS using the Alias method, we need a total of $O(\log(A))$ time to perform the sampling and backups of each node visited, and additionally we will initialise a new Alias table on every trial for the new node added. Hence the runtime complexity of running $n$ trials of BTS with the Alias method will take $O(n(H\log(A)+A))$.
	        
	    \subsubsection{DENTS backup complexity}
	    
	    		When using the Alias method with DENTS, we also need to consider the entropy backups too:
	        \begin{align}
	            \cl{H}_V(s_t) &\leftarrow \cl{H}(\pi_{\textnormal{DENTS}}(\cdot | s_t)) + \sum_{a\in\cl{A}} \pi_{\textnormal{DENTS}}(a|s_t)\cl{H}_Q(s_t,a), \label{appeqc:entropy_v} \\
	            \cl{H}_Q(s_t,a_t) &\leftarrow \sum_{s'\in \succc{s_t}{a_t}} \frac{N(s')}{N(s_t,a_t)} \cl{H}_V(s').  \label{appeqc:entropy_q}
	        \end{align}
	        
	        The backup for $\cl{H}_Q$ can be performed in $O(1)$ time, similarly to Equation (\ref{appeqc:opt_q_backup}):
	        \begin{align}
	            \cl{H}_Q(s_t,a_t) &\leftarrow 
				\begin{cases}
					\cl{H}_V(s_{t+1}) & \text{if } N(s_t,a_t) = 1, \\
					\cl{H}_Q(s_t,a_t) - \frac{N(s_{t+1})-1}{N(s_t,a_t)-1} \cl{H}_V^{\text{old}}(s_{t+1}) \Vt{s_{t+1}} + \frac{N(s_{t+1})}{N(s_t,a_t)} \cl{H}_V(s_{t+1}) & \text{if } N(s_t,a_t) > 1,
				\end{cases}	\label{appeqc:opt_entq_backup}
	        \end{align}
	        
	        where $\cl{H}_V^{\text{old}}(s_{t+1})$ is the previous value of $\cl{H}_V(s_{t+1})$.
	        
	        The backup for $\cl{H}_V$ can also be implemented in amortised $O(1)$ time, by noting that we are only updating $\pi_{\textnormal{DENTS}}$ every $A$ visits. Let $\pi^{\text{old}}_{\textnormal{DENTS}}$ be the policy on the previous trial. Then, we have to perform the full $O(A)$ backup when the policy is updated every $A$ visits, but otherwise we can get away with an $O(1)$ backup:
	        \begin{align}
	        \cl{H}_V(s_t) &\leftarrow
	        		\begin{cases}
	        			\cl{H}(\pi_{\textnormal{DENTS}}(\cdot | s_t)) + \sum_{a\in\cl{A}} \pi_{\textnormal{DENTS}}(a|s_t)\cl{H}_Q(s_t,a) & \text{if } \pi_{\textnormal{DENTS}}\neq \pi^{\text{old}}_{\textnormal{DENTS}}, \\
	        			\cl{H}_V(s_t) + \pi_{\textnormal{DENTS}}(a|s_t) \left( \cl{H}_Q(s_t,a_t) - \cl{H}^{\text{old}}_Q(s_t,a_t)\right) & \text{if } \pi_{\textnormal{DENTS}}= \pi^{\text{old}}_{\textnormal{DENTS}},
	        		\end{cases}
	        \end{align}
	    		
	    		where $\cl{H}^{\text{old}}_Q(s_t,a_t)$ is the previous value of $\cl{H}_Q(s_t,a_t)$.
	    		
	    		As the entropy backups used in DENTS can be implemented in amortised $O(1)$ time, it follows that the complexity of running $n$ trials of DENTS with the Alias method will also take $O(n(H\log(A)+A))$.
	    	\subsubsection{MENTS backup complexity}
	    		
	    		Recall the backups used for MENTS:
            \begin{align}
                \Qst{s_t}{a_t}{} &\leftarrow R(s_t,a_t) + \sum_{s'\in\succc{s}{a}} \left( \frac{N(s')}{N(s_t,a_t)} \Vst{s'}{} \right), \label{appeqc:soft_q_backup} \\
                \Vst{s_t}{} &\leftarrow \alpha \log \sum_{a\in\cl{A}} \exp \left(\frac{1}{\alpha}\Qst{s_t}{a}{} \right). \label{appeqc:soft_v_backup}
            \end{align}
            
            The backup for $\Qst{s_t}{a_t}{}$ can be implemented in $O(1)$ similarly to Equation (\ref{appeqc:opt_q_backup}). However, considering how to implement Equation (\ref{appeqc:soft_v_backup}) is more complex. Firstly, we point out the implementing the equation exactly as written is infact numerically unstable, and in practise we need to make use of the equation:
            \begin{align}
            		\alpha \log \sum_{a\in\cl{A}} \exp \left(\frac{1}{\alpha}\Qst{s_t}{a}{} \right)
            			= \alpha \log \sum_{a\in\cl{A}} \exp \left(\frac{1}{\alpha}\left(\Qst{s_t}{a}{} - C \right)\right) + C,
            \end{align}
            
            where $C$ is an arbitrary constant. The value of $C$ needs to be chosen carefully to avoid numerical underflow or overflow, and is typically set to $C=\max_a \Qst{s_t}{a}{}$. To efficiently compute this backup, we can keep two auxilary variables:
            \begin{align}
            		M(s_t) &= \max_a \Qst{s_t}{a}{} \\
            		E(s_t) &= \sum_{a\in\cl{A}} \exp \left(\frac{1}{\alpha}\left(\Qst{s_t}{a}{} - M(s_t) \right)\right).
            \end{align}
            
            Now we can perform the backup for $\Qst{s_t}{a_t}{}$ as follows:
            \begin{align}
            		M(s_t) &\leftarrow \max_a \Qst{s_t}{a}{} \\
            		E(s_t) &\leftarrow
            			\begin{cases}
            				\exp(0)	& \text{ if} N(s_t)=1 \\
            				\left(
            					E(s_t) - \exp 
            					\left(
            						\frac{1}{\alpha}
            						\left(
            							\Qst{s_t}{a_t}{\text{old}} - M^{\text{old}}(s_t) 
            						\right)
            					\right)
            				\right)
            				\exp
            				\left(
            					\frac{1}{\alpha} (M^{\text{old}}(s_t)-M(s_t))
            				\right) \\
            				\ \ \ \ \ + \exp 
            				\left(
            					\frac{1}{\alpha}
            					\left(
            						\Qst{s_t}{a_t}{} - M(s_t) 
            					\right) 
            				\right)
            				& \text{ if} N(s_t) > 1.
            			\end{cases} \\
            		\Vst{s_t}{} &\leftarrow  \alpha \log(E(s_t)) + M(s_t), 
            \end{align}
            
            where $M^{\text{old}}(s_t), \Qst{s_t}{a_t}{\text{old}}$ are the previous values of $M(s_t), \Qst{s_t}{a_t}{}$ respectively. Similarly to the BTS backups, the requirement of computing a $\max$ operation means that these backups can be computed in $O(\log(A))$ time, so the complexity of running $n$ trials of MENTS with the Alias method takes $O(n(H\log(A)+A))$ time. 
            
            It may be possible to implement MENTS faster with a runtime of $O(n(H+A))$ if we could set $M(s_t)$ to some constant value, however this will likely depend on the MDP and the scale of the rewards. Additionally, we found even running this version of MENTS to be quite unstable, and had to resort to using $O(A)$ backups for MENTS in our experiments.
        
        \subsubsection{RENTS and TENTS}
        
       		Both RENTS and TENTS can utilise the Alias method too. We did not consider how to optimise the backups for these algorithms.

	    	\subsubsection{Average return complexities}
	    		For the AR versions of the algorithms, we note that the backup from Equation (\ref{appeq:average_return}) can be implemented in $O(1)$ time. Following similar reasoning to the previous algorithms, this means that running $n$ trials with the Alias method for one of the average return algorithms will take $O(n(H+A))$ time.

	    	\subsubsection{Decaying the search temperature}
	    		Although theoretically using a fixed search temperature $\alpha$ is sufficient for BTS and DENTS to converge, in practise algorithms may perform better with a decaying search temperature $\alpha(m)$ as described in AR-BTS and AR-DENTS. Note that the proofs of convergence for AR-BTS and AR-DENTS also hold for BTS with a decaying temperature. A decaying search temperature would allow BTS and DENTS to focus on deeper parts of the search tree over time.

	\subsection{Adapting Boltzmann search algorithms for two-player games and neural nets} \label{app:adapt_for_games} \label{app:adapt_for_nets}
            Now we will detail the adaptions required for each of BTS, DENTS, and MENTS to be used for games. For completeness, we reiterate the adaptions for BTS.
            
            To run our algorithms on games with two players we need to account for an \textit{opponent} that is trying to minimise the value of the game. The agent playing as black (or the \textit{player}) in Go will act on odd timesteps, and the opponent playing as white will act on even timesteps. Fairly informally, this means that the maximum entropy objective needs to be updated:
        \begin{align}
            \Vsp{s,t}{\pi} = \bb{E}_{\pi} \left[\sum_{i=t}^{H} R\left(s_{i}, a_{i}\right)+ (-1)^{t-1}\alpha \mathcal{H}\left(\pi\left(\cdot|s_{i}\right)\right) \Big\vert s_t=s\right]. \label{appeq:altered_max_ent_objective}
        \end{align}
        This means that each agent (the player and the opponent) will be simultaneously trying to maximise their own entropy, whilst minimising the others.
            
            \paragraph{BTS}
            Values can be initialised with the neural networks as $Q^{\text{init}}(s,a)=\log \tilde{\pi}(a|s)+B$ and $V^{\text{init}}(s)=\tilde{V}(s)$, where $B$ is a constant (adapted from Xiao \etal \cite{xiao2019maximum}). With such an initialisation, the initial BTS policy is $\rho_{\textnormal{BTS}}(a|s)\propto\tilde{\pi}(a|s)^{1/\alpha}$.
            
            The stochastic search policy naturally lends itself to mixing in a prior policy, so we can replace BTS search policy $\pi_{\textnormal{BTS}}^{n}$ (Equation (\ref{eq:bts_search_policy})) with $\pi_{\textnormal{BTS,mix}}$:
            \begin{align}
                \pi_{\textnormal{BTS,mix}}(a|s) 
                &= \lambda_{\tilde{\pi}}\tilde{\pi}(a|s) + (1-\lambda_{\tilde{\pi}}) \pi_{\textnormal{BTS}}(a|s) \\
                &= \lambda_{\tilde{\pi}}\tilde{\pi}(a|s) + (1-\lambda_{\tilde{\pi}})(1-\lambda_s)\rho_{\textnormal{BTS}}(a|s) + \frac{(1-\lambda_{\tilde{\pi}})\lambda_s}{|\cl{A}|}, \label{appeq:ments_mixed_policy}
            \end{align}
            where $\lambda_{\tilde{\pi}}=\min(1,\epsilon_{\tilde{\pi}}/\log(e+N(s)))$, and $\epsilon_{\tilde{\pi}} \in (0,\infty)$ controls the weighting for the prior policy. To run BTS on a two-player game we need to account the opponent trying to minimise the value of the game. In BTS we can negate the values used in the search policy, and replace the max operation with a min in Bellman backups. That is, we replace equations (\ref{eq:bts_value_policy}) and (\ref{eq:dp_v_backup}) with:
            \begin{align}
                \rho_{\textnormal{BTS}}(a|s) &\propto \exp\left(\frac{-1}{\alpha}\Qt{s}{a}{}\right), \\
                \Vt{s_t}{} &=\min_{a\in\cl{A}} \Qt{s_t}{a}{}. \label{appeq:bellman_v_oponent}
            \end{align}
            Finally, when making a recommendation as the opponent, we use the replace the recommendation policy with:
            \begin{align}
                \psi_{\text{BTS}}(s)=\argmin_{a\in\cl{A}}\Qt{s}{a}{}.
            \end{align}

            \paragraph{MENTS}
            For MENTS we can use the same initialisations as in BTS, that is $Q^{\text{init}}_{\text{sft}}(s,a)=\log \tilde{\pi}(a|s)+B$ and $V^{\text{init}}(s)=\tilde{V}(s)$, with the same value for $B$. The prior policy $\tilde{\pi}$ is mixed into the MENTS search policy in the same way as for BTS (Equation \ref{appeq:ments_mixed_policy}). 

            At opponent nodes, we can replace any use of the temperature $\alpha$ by $-\alpha$, which effectively turns the softmax into a softmin and gives the highest density to the lowest values in the search policy. So at opponent nodes, we replace Equations (\ref{eq:rhosft}) and (\ref{eq:soft_v_backup}) by:
            \begin{align}
                \rho_{\textnormal{sft}}(a|s) &= \exp\left(\frac{-1}{\alpha}\left(\Qst{s}{a}{}-\Vst{s}{}\right)\right), \\
                \Vst{s_t}{} &= -\alpha \log \sum_{a\in\cl{A}} \exp \left(\frac{-1}{\alpha}\Qst{s_t}{a}{} \right). 
            \end{align}
            Finally, to make a recommendation as the opponent, we replace the recommendation policy with:
            \begin{align}
                \psi_{\text{MENTS}}(s)=\argmin_{a\in\cl{A}}\Qst{s}{a}{}.
            \end{align}

            \paragraph{DENTS}
            For DENTS we again use the same initialisations as BTS, setting $Q^{\text{init}}(s,a)=\log \tilde{\pi}(a|s)+B$ and $V^{\text{init}}(s)=\tilde{V}(s)$, using the same value for $B$. And again, the prior policy $\tilde{\pi}$ is mixed into the DENTS search policy in the same way as for BTS (Equation \ref{appeq:ments_mixed_policy}). 

            At opponent nodes, we update the search policy and backups by replacing Equations (\ref{eq:dents_policy}) and (\ref{eq:entropy_v}) by the following:
            \begin{align}
                \rho_{\textnormal{DENTS}}(a|s) &\propto \exp\left(\frac{-1}{\alpha}\left(\Qt{s}{a}{} + \beta(N(s))\cl{H}_Q(s_t,a_t) \right)\right), \\ 
                \cl{H}_V(s_t) &= -\cl{H}(\pi_{\textnormal{DENTS}}(\cdot | s_t)) + \sum_{a\in\cl{A}} \pi_{\textnormal{DENTS}}(a_t|s_t)\cl{H}_Q(s_t,a_t), 
            \end{align}
            and we use replacement for Bellman backups as BTS does in Equation (\ref{appeq:bellman_v_oponent}). Finally, the recommendation policy for an opponent is:
            \begin{align}
                \psi_{\text{DENTS}}(s)=\argmin_{a\in\cl{A}}\Qt{s}{a}{}.
            \end{align}

	\subsection{Comparison of MCTS algorithms} \label{app:comparison_table}
		
		In Table \ref{table:compare_algs} we summarise the properties of UCT, MENTS, BTS and DENTS so that they can be easily compared.
		
		We also summarise here the benefits that using a stochastic (Boltzmann) policy for action selection can provide:
		\begin{itemize}
			\item Using a stochastic policy allows the Alias method (as described in Section \ref{app:alias}) to be used, which can significantly increase the number of trials that can be run in a fixed time period;
			\item Using a stochastic policy naturally encourages exploration as it will still always have some probability of sampling each action;
			\item And the entropy can be computed of a stochastic distribution, which can then be used as an exploration bonus.
		\end{itemize}
		
		The benefits of additional exploration from entropy and the stochastic distribution include: helping to find delayed/sparse rewards in the environment; confirming that bad actions are in fact bad; and, greater exploration leads to less contention between threads in a multi-threaded implementation (discussed in Section \ref{app:multi_thread}).

        \begin{table*}[]
            \centering
            \begin{tabular}{p{8cm}|c|c|c|c}
                						& UCT      		& MENTS          & BTS               & DENTS \\ 
                \hline
                {Is consistent for any setting of hyperparameters	 \newline (Simple regret tends to 0 as $n\rightarrow\infty$)}
                						& $\checkmark$ 	& X 				& $\checkmark$		& $\checkmark$ 	\\
                \hline
                {Utilises entropy for exploration \newline (E.g. Helpful for sparser rewards)}
                						& X				& $\checkmark$	& X					& $\checkmark$	\\
                \hline
                {Samples actions stochastically \newline(from a Boltzmann distribution)}
                						& X 				& $\checkmark$ 	& $\checkmark$ 		& $\checkmark$	\\
                \hline
                {Optimises for cumulative regret  \newline (penalises suboptimal actions during planning)} 
                						& $\checkmark$	& X 				& X					& X 	\\
                \hline
                	{Optimises for simple regret  \newline (does not penalise exploration during planning)}
                						& X 				& X 				& $\checkmark$ 		& $\checkmark$	\\
                \hline
                	{Complexity to run $n$ trials}	
                						& \multicolumn{4}{c}{$O(nHA)$} 							\\
                \hline
                	{Complexity to run $n$ trials using the Alias method  \newline (with backups implemented as efficiently as possible)}	
                						& N/A 		& \multicolumn{3}{c}{$O(n(H\log(A)+A))$} 			\\
            \end{tabular}
            \caption{Summary of complexities for different algorithms considered in this work, considering average return variants and alias table optimisations. \label{table:compare_algs}}
        \end{table*}

		\subsubsection{When to use each MCTS algorithm}
			Although the best way to know for sure which MCTS algorithm will be the best for a given problem is to directly try it and compare performance, there are some properties of problems that give a good indication of which algorithm would be a good fit.
			
			Two properties of environments that would make them more amenable to using entropy for exploration are large delayed rewards, and, containing trap states (i.e. states where the agent cannot move to another state for the rest of the trial). The Frozen Lake environment (Section \ref{sec:gridworld_doms}) demonstrates both of these properties, there is a large reward that the agent only gets when it reaches the goal, and there are holes that the agent falls into. Hence, if a problem contains large delayed rewards or trap states, then DENTS would likely be the best fit.
			
			When the reward signal is dense or engineered so that optimal policy can be found without needing additional exploration, then using entropy for additional exploration may cause unnecessary additional exploration, in which case using UCT or BTS would be more suitable.
			
			Finally, we note that there may be more properties of problems that we have not considered in this work that make them ameniable to different algorithms. For example, as mentioned in Section \ref{sec:limitations} that sometimes it may be desirable for an agent to follow a maximum entropy policy to explore and learn in an unknown environment, in which case MENTS, RENTS or TENTS may be more preferrable.

	    \subsubsection{Discussion on multi-threading in MCTS} \label{app:multi_thread}
	        In this section we give a brief discussion about why Boltzmann search policies naturally utilise multi-threading more than UCT. Consider a two-armed bandit, with actions $a_1$ and $a_2$, and corresponding rewards of $0$ and $1$ respectively. Intuitively, an emphasis on exploration will lead to less contention between threads as they will explore different parts of the search tree.
	        
	        When running UCB on this multi-armed bandit, with a bias of $2.0$, it will pull $a_1$ a total of $20$ times in the first $1000$ pulls, and then it will only pull $a_1$ a total of $9$ times in the next $4000$ pulls, as it begins to exploit. If we had multiple threads trying to pull arms simultaneously, but they could not pull arm $a_2$ simultaneously, then it would lead to lots of contention between the threads.
	
	        In stark contrast, when using a Boltzmann policy, with a temperature of $1.0$ on this multi-armed bandit, the expected number of pulls for $a_1$ will be $268$ pull for every $1000$ total pulls. In a multi-threaded environment, having around a quarter of the threads pulling arm $a_1$ rather than all threads pulling arm $a_2$ naturally leads to less contention and waiting.
	
	        Although this is an unrealistic example, it highlights the problem at hand: the exploitation of UCB leads to contention between threads in a multi-threaded setting. Moreover, this can be witnessed in Section \ref{app:go_results}, Table \ref{table:go_trials_per_move} where UCT is able to run fewer trials in the earlier moves of a Go game.
	        
	        Aditionally, it is worth observing that both UCT and BTS have parameters that control the amount of exploration (the bias $c$ and search temperature $\alpha$ for UCT and BTS respectively). However, because UCT is designed with cumulative regret in mind, it will always towards picking the same action on successive trials, as can be seen in our simple example.
	

\newpage
\section{Additional results and discussion} \label{app:additional_results}

    \FloatBarrier

    \subsection{Gridworld environment details} \label{app:env_deets}
        
        For space, the specific maps for the gridworlds are omitted from the results section in the main paper. In the gridworld maps, \texttt{S} denotes the starting location of the agent, \texttt{F} denotes spaces the agent can move to, \texttt{H} denote holes that end the agents trial and \texttt{G} is the goal location. 
        
        In Figure \ref{fig:fl8} we give an 8x8 Frozen Lake environment that is used in Section \ref{app:param_sens} to demonstrate how the different algorithms perform with a variety of temperatures. Figure \ref{fig:fl12} gives the 8x12 Frozen Lake Environment that is used for hyperparameter selection in Section \ref{app:hps}. And in Figure \ref{fig:fl12test} we give the 8x12 Frozen Lake Environment that is used to test the algorithms in Section \ref{sec:results}. Each of these maps was randomly generated, with each location having a probability of $1/5$ of being a hole, and the maps were checked to have a viable path from the starting location to the goal location.

        Aditionally, we give the map used in the 6x6 Sailing Problem, and the wind transition probabilities in Figure \ref{fig:sailing_deets}. In the Sailing domain, actions and wind directions can take values in $\{0,1,...,7\}$, with a value of $0$ representing North/up, $2$ representing East/right, $4$ representing South/down and $6$ representing West/right. The remaining numbers represent the inter-cardinal directions. In Section \ref{app:hps} the wind direction was set to North (or $0$) in the initial state, and for testing in Section \ref{sec:results}, the initial wind direction was set to South-East (or $3$).
    
        \begin{figure}
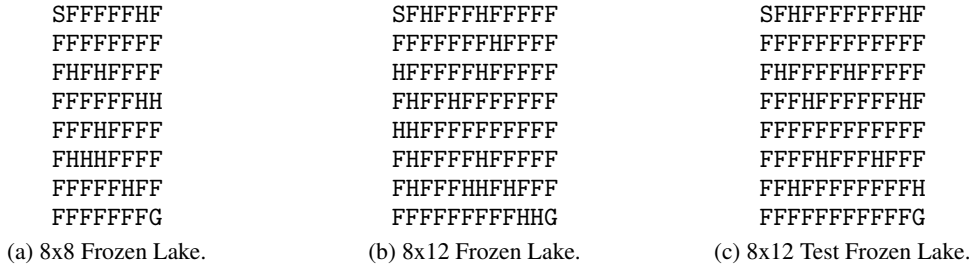

             \centering
             \begin{subfigure}[b]{0.3\textwidth}
                 \centering
                 \texttt{SFFFFFHF} \\
                 \texttt{FFFFFFFF} \\
                 \texttt{FHFHFFFF} \\
                 \texttt{FFFFFFHH} \\
                 \texttt{FFFHFFFF} \\
                 \texttt{FHHHFFFF} \\
                 \texttt{FFFFFHFF} \\
                 \texttt{FFFFFFFG} 
                 \caption{8x8 Frozen Lake.}
                 \label{fig:fl8}
             \end{subfigure}
             \hfill
             \begin{subfigure}[b]{0.3\textwidth}
                 \centering
                 \texttt{SFHFFFHFFFFF} \\
                 \texttt{FFFFFFFHFFFF} \\
                 \texttt{HFFFFFHFFFFF} \\
                 \texttt{FHFFHFFFFFFF} \\
                 \texttt{HHFFFFFFFFFF} \\
                 \texttt{FHFFFFHFFFFF} \\
                 \texttt{FHFFFHHFHFFF} \\
                 \texttt{FFFFFFFFFHHG} 
                 \caption{8x12 Frozen Lake.}
                 \label{fig:fl12}
             \end{subfigure}
             \hfill
             \begin{subfigure}[b]{0.3\textwidth}
                 \centering
                 \texttt{SFHFFFFFFFHF} \\
                 \texttt{FFFFFFFFFFFF} \\
                 \texttt{FHFFFFHFFFFF} \\
                 \texttt{FFFHFFFFFFHF} \\
                 \texttt{FFFFFFFFFFFF} \\
                 \texttt{FFFFHFFFHFFF} \\
                 \texttt{FFHFFFFFFFFH} \\
                 \texttt{FFFFFFFFFFFG} 
                 \caption{8x12 Test Frozen Lake.}
                 \label{fig:fl12test}
            \end{subfigure}
            \caption{Maps used for experiments using the Frozen Lake environment in Sections \ref{sec:results}, \ref{app:param_sens} and \ref{app:hps}. \texttt{S} is the starting location for the agent, \texttt{F} represents floor that the agent can move too, \texttt{H} are holes that end the agents trial and \texttt{G} is the goal location.}
                \label{fig:maps}
        \end{figure}
    
        \begin{figure}
             \centering
             \begin{subfigure}[b]{0.49\textwidth}
                 \centering
                 \texttt{FFFFFG} \\
                 \texttt{FFFFFF} \\
                 \texttt{FFFFFF} \\
                 \texttt{FFFFFF} \\
                 \texttt{FFFFFF} \\
                 \texttt{SFFFFF} 
                 \caption{6x6 Sailing Problem map.}
             \end{subfigure}
             \hfill
             \begin{subfigure}[b]{0.49\textwidth}
                 \centering
                 \begin{align*}
                     \begin{pmatrix}
                        0.4 & 0.3 & 0.0 & 0.0 & 0.0 & 0.0 & 0.0 & 0.3 \\
                        0.4 & 0.3 & 0.3 & 0.0 & 0.0 & 0.0 & 0.0 & 0.0 \\
                        0.0 & 0.4 & 0.3 & 0.3 & 0.0 & 0.0 & 0.0 & 0.0 \\
                        0.0 & 0.0 & 0.4 & 0.3 & 0.3 & 0.0 & 0.0 & 0.0 \\
                        0.0 & 0.0 & 0.0 & 0.4 & 0.2 & 0.4 & 0.0 & 0.0 \\
                        0.0 & 0.0 & 0.0 & 0.0 & 0.3 & 0.3 & 0.4 & 0.0 \\
                        0.0 & 0.0 & 0.0 & 0.0 & 0.0 & 0.3 & 0.3 & 0.4 \\
                        0.4 & 0.0 & 0.0 & 0.0 & 0.0 & 0.0 & 0.3 & 0.3 
                     \end{pmatrix}
                 \end{align*}
                 \caption{Wind transition probabilities.}
             \end{subfigure}
                \caption{The map used for the 6x6 Sailing Problem and the wind transition probabilities. For the wind transition probabilities, the $(i,j)th$ element of the matrix denotes the probability that the wind changes from direction $i$ to direction $j$, where $0$ denotes North/up, $1$ denotes North-East/up-right, and so on.}
                \label{fig:sailing_deets}
        \end{figure}

    \subsection{Futher discussion on parameter sensitivity} \label{app:param_sens}
        In this section we provide a more detailed discussion on sensitivity to the temperature parameter in MENTS \cite{xiao2019maximum}, RENTS and TENTS \cite{rents_your_tents}. We provide more thorough results on D-chain environments, with each algorithm, and varying both the temperature parameter and the $\epsilon$ exploration parameter. Additionally, we provide results using the 8x8 Frozen Lake environment (Figure \ref{fig:fl8}) using a variety of temperatures to demonstrate how each algorithm performs with different temperatures in that domain.

        \subsubsection{Parameter sensitivity in D-Chain} \label{app:param_sens_dchain}
            
            \begin{figure}
                \centering
                \includegraphics[width=0.7\textwidth]{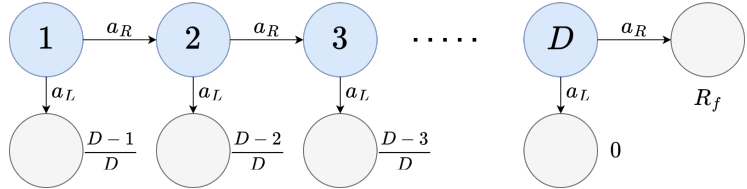}
                \caption{An illustration of the \textit{(modified) D-chain problem}, where 1 is the starting state, transitions are deterministic and values next to states represent rewards for arriving in that state.}
                \label{fig:dchain_illustration_2}
            \end{figure}

            We run each algorithm with a variety of $\alpha$ temperatures, and the $\epsilon$ exploration parameter on the 10-chain environments (Figure \ref{fig:dchain_illustration_2}). Additionally, we ran UCT with a variety of bias parameters. Figures \ref{fig:uct_10chain_hps}, \ref{fig:ments_10chain_hps}, \ref{fig:rents_10chain_hps}, \ref{fig:tents_10chain_hps}, \ref{fig:bts_10chain_hps} and \ref{fig:dents_10chain_hps} give results for the 10-chain environment, with algorithms UCT, MENTS, RENTS, TENTS, BTS and DENTS respectively. Figures \ref{fig:uct_10chain_half_hps}, \ref{fig:ments_10chain_half_hps}, \ref{fig:rents_10chain_half_hps}, \ref{fig:tents_10chain_half_hps}, \ref{fig:bts_10chain_half_hps} and \ref{fig:dents_10chain_half_hps} give results for the modified 10-chain environment, with algorithms UCT, MENTS, RENTS, TENTS, BTS and DENTS respectively. 

            As expected with UCT, regardless of how the bias parameter is set, in both the 10-chain ($D=10$, $R_f=1.0$) and modified 10-chain ($D=10$, $R_f=0.5$) environments, it only achieves a value of $0.9$. See Figures \ref{fig:uct_10chain_hps} and \ref{fig:uct_10chain_half_hps} for plots.

            As discussed in Section \ref{sec:limitations}, for higher temperatures in MENTS it will find the reward of $R_f$ in both the 10-chain and modified 10-chain environments. At a temperature of $\alpha=0.15$ MENTS is able to find the reward of $R_f=1$ on the 10-chain (Figure \ref{fig:ments_10chain_hps}), but will still recommend a policy that gives the reward of $R_f=0.5$ on the modified 10-chain (Figure \ref{fig:ments_10chain_half_hps}). At a temperature of $\alpha=0.1$ MENTS will struggle to find the reward of $R_f=1$ in the 10-chain, without the help of the exploration parameter, but this is the first temperature we tried that was able to recommend the optimal policy in the modified 10-chain (Figure \ref{fig:ments_10chain_half_hps}). For low temperatures, such as $\alpha=0.01$, MENTS was able to find the optimal policy, but in the case of the 10-chain with $R_f=1$ it can only do so with the help of a higher exploration parameter.

            When we ran TENTS on the (modified) 10-chain, we see results that parallel MENTS, see Figures \ref{fig:tents_10chain_hps} and \ref{fig:tents_10chain_half_hps}. Interestingly, RENTS was only able to find the reward of $R_f=1$ on the 10-chain environment if we used a low temperature, $\alpha=0.01$ and a high exploration parameter, $\epsilon=10$. Otherwise, RENTS tended to behave similarly to UCT on these environments, see Figures \ref{fig:rents_10chain_hps} and \ref{fig:rents_10chain_half_hps}.

            In contrast, BTS was able to find the reward of $R_f=1.0$ in the 10-chain when a high search temperature or high exploration parameter was used (Figure \ref{fig:bts_10chain_hps}). And, in the modified 10-chain, BTS always achieves a reward of $0.9$ regardless of how the parameters are set (Figure \ref{fig:bts_10chain_half_hps}). DENTS performance on the 10-chain (Figure \ref{fig:dents_10chain_hps}) and modified 10-chain (Figure \ref{fig:dents_10chain_half_hps}) was similar to BTS, but tended to find the reward of $R_f=1$ in the 10-chain marginally faster. For the decay function $\beta$ in DENTS, we always set $\beta(m)=\alpha/\log(e+m)$ for these experiments.

            To demonstrate that the $\epsilon$ exploration parameter is insufficient to make up for a low temperature, we also consider the 20-chain ($D=20$, $R_f=1$) and modified 20-chain ($D=20$, $R_f=0.5$) problems. We don't give plots for all algorithms on both of the 20-chain environments like we do for 10-chain environments, but opt for the plots that demonstrate something interesting. 
            
            In Figure \ref{fig:ments_20chain_hps} we see MENTS on the 20-chain is able to find the reward of $R_f=1$ for higher temperatures. However, this time, the exploration parameter does not make much of an impact when using lower temperatures. Moreover, a large exploration parameter appears to negatively impact MENTS ability to find $R_f=1$. This makes sense considering that a uniformly random policy will find the reward at the end of the chain once every $2^{10}$ trials in the 10-chain, but only once every $2^{20}$ in the 20-chain. Again, on the modified 20-chain, MENTS is only able to recommend the optimal policy for low temperatures (see Figure \ref{fig:ments_20chain_half_hps}). 

            When we ran BTS on the 20-chain, it was unsuccessful at finding the final reward of $R_f=1$, which makes sense as it is not using entropy for exploration, and it is unlikely to follow a random policy to the end of the chain (Figure \ref{fig:bts_20chain_hps}). For DENTS, we again used a decay function of $\beta(m)=\alpha/\log(e+m)$ for simplicity, and unfortunately it was only able to make slow progress towards finding the final reward of $R_f=1$ for high temperatures. However, if we independently set the values of $\alpha$ and 
            
            However, DENTS on the 20-chain begins to make slow progress towards finding the final reward of $R_f=1$, but requires a higher temperature to be used, as we decay the weighting of entropy over time (Figure \ref{fig:dents_20chain_hps}). Again we used a decay function of $\beta(m)=\alpha/\log(e+m)$ here for simplicity, and if we properly select them DENTS is more than capable of solving the 20-chain. For example we show that using DENTS with $\alpha=0.5$, $\beta(m)=10/\log(e+m)$ and $\epsilon=0.01$ in Figure \ref{fig:dents_20chain_tuned}, where $\alpha$ is set low enough that there is still a high probability of following the chain to the end, $\beta$ is set to be large initially to encourage exploring with the entropy reward and $\epsilon$ is set low to avoid random exploration ending trials before reaching the end of the chain. If we were to run DENTS and BTS on the modified 20-chain they would recommend the optimal policy giving a value of $0.95$ for all of the parameters we searched over (not shown).
            
            Finally, in Figure \ref{fig:dbments_20chain_hps} we also consider running DENTS, but instead setting $\beta(m)=\alpha$ to replicate MENTS. The main difference between DENTS in this case and MENTS is the recommendation policy, where DENTS uses the Bellman values for recommendations, rather than soft values. So even in cases where the MENTS search is more desirable, we can replicate it with DENTS while providing recommendations for the standard objective. Moreover, running DENTS with $\beta(m)=\alpha$ on the modified 20-chain would always yield the optimal value of $0.95$ because of the use of Bellman values for recommendations (not shown).

            \FloatBarrier

            \begin{figure}
                \centering
                
                \begin{subfigure}[b]{0.32\textwidth}
                    \centering
                    \includegraphics[width=\textwidth]{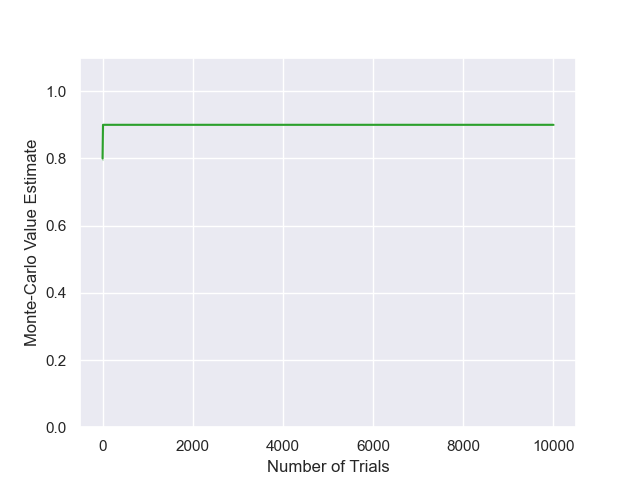}
                    \caption*{Bias set using \cite{prst}.}
                \end{subfigure}
                \begin{subfigure}[b]{0.32\textwidth}
                    \centering
                    \includegraphics[width=\textwidth]{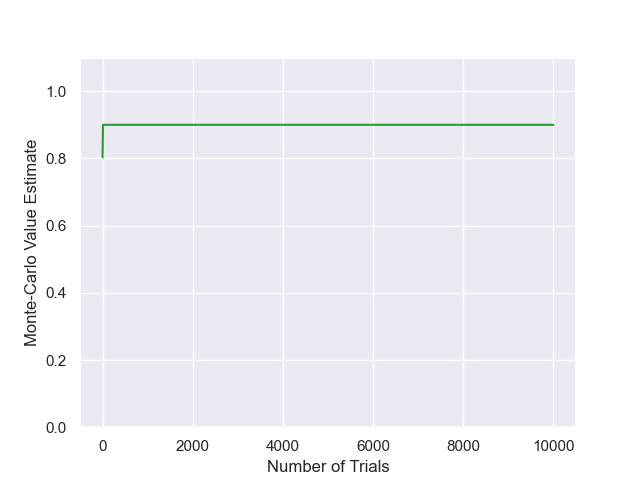}
                    \caption*{Bias $=0.1$.}
                \end{subfigure}
                \begin{subfigure}[b]{0.32\textwidth}
                    \centering
                    \includegraphics[width=\textwidth]{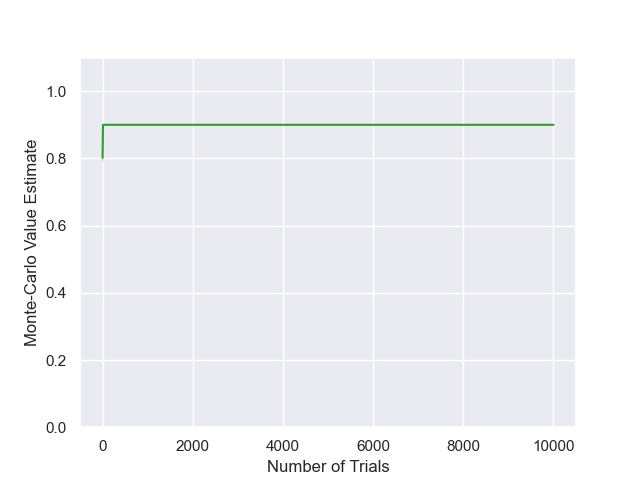}
                    \caption*{Bias $=1$.}
                \end{subfigure}
                
                \begin{subfigure}[b]{0.32\textwidth}
                    \centering
                    \includegraphics[width=\textwidth]{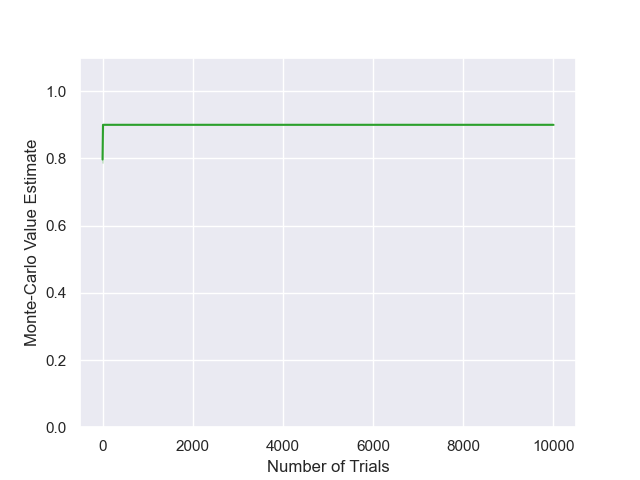}
                    \caption*{Bias $=2$.}
                \end{subfigure}
                \begin{subfigure}[b]{0.32\textwidth}
                    \centering
                    \includegraphics[width=\textwidth]{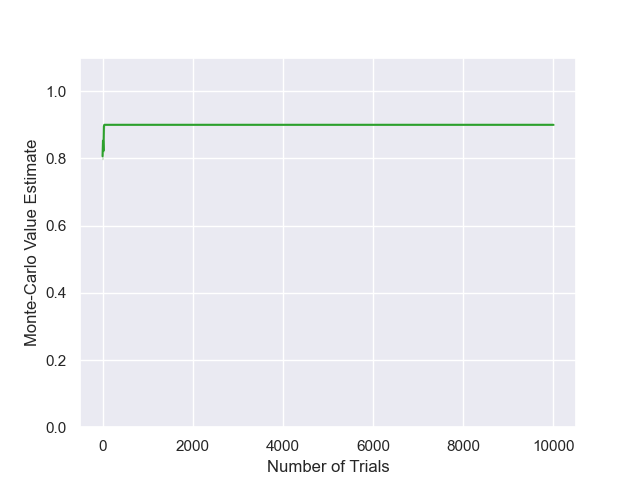}
                    \caption*{Bias $=10$.}
                \end{subfigure}
                \begin{subfigure}[b]{0.32\textwidth}
                    \centering
                    \includegraphics[width=\textwidth]{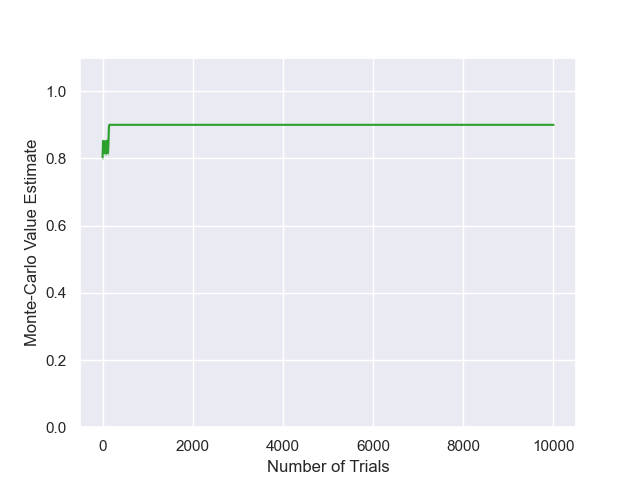}
                    \caption*{Bias $=100$.}
                \end{subfigure}
                
                \caption{Results for UCT on the 10-chain ($D=10$, $R_f=1.0$), for varying bias parameters.}
                \label{fig:uct_10chain_hps}
            \end{figure}

            \begin{figure}
                \centering
                
                \begin{subfigure}[b]{0.24\textwidth}
                    \centering
                    \includegraphics[width=\textwidth]{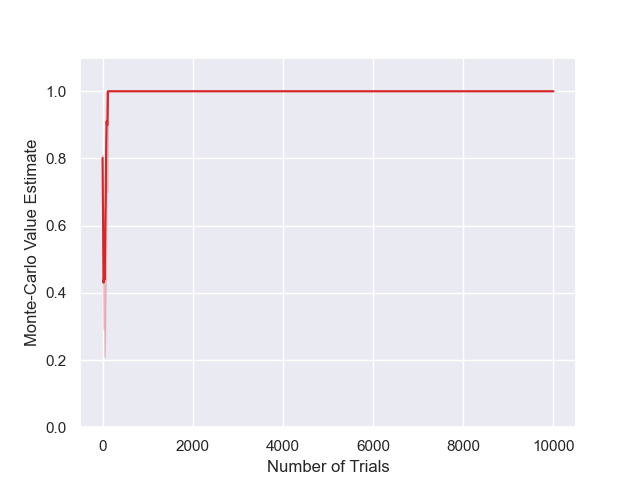}
                    \caption*{$\alpha=1,\epsilon=0.01$}
                \end{subfigure}
                \begin{subfigure}[b]{0.24\textwidth}
                    \centering
                    \includegraphics[width=\textwidth]{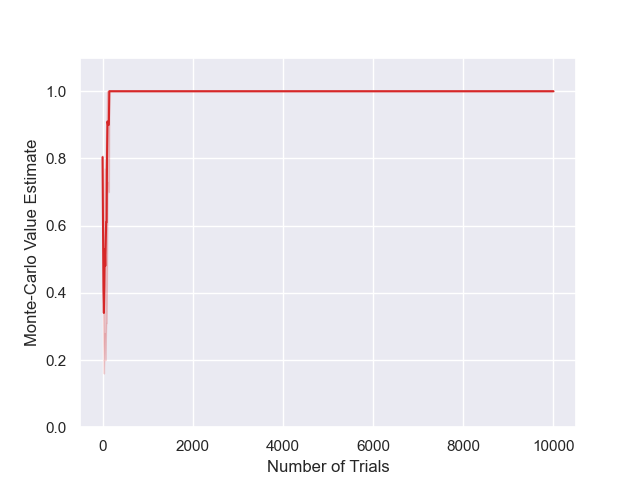}
                    \caption*{$\alpha=1,\epsilon=0.1$}
                \end{subfigure}
                \begin{subfigure}[b]{0.24\textwidth}
                    \centering
                    \includegraphics[width=\textwidth]{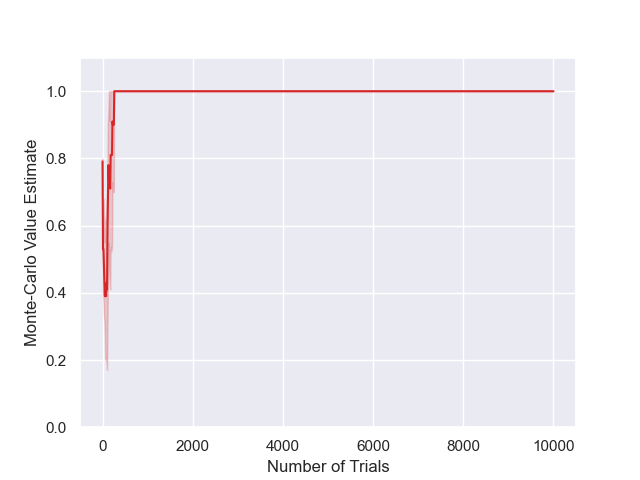}
                    \caption*{$\alpha=1,\epsilon=1$}
                \end{subfigure}
                \begin{subfigure}[b]{0.24\textwidth}
                    \centering
                    \includegraphics[width=\textwidth]{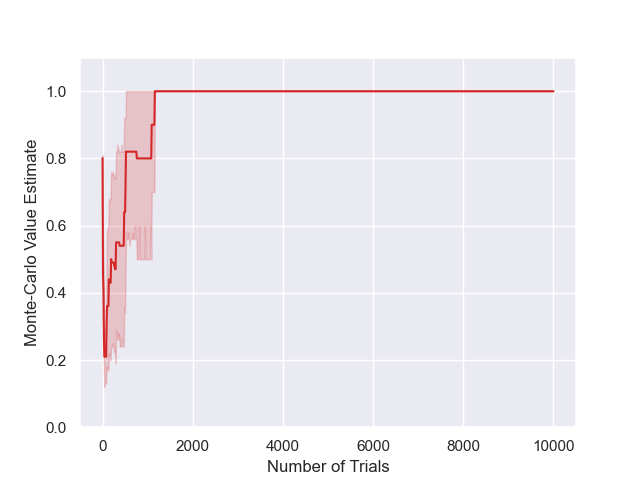}
                    \caption*{$\alpha=1,\epsilon=10$}
                \end{subfigure}
                
                \begin{subfigure}[b]{0.24\textwidth}
                    \centering
                    \includegraphics[width=\textwidth]{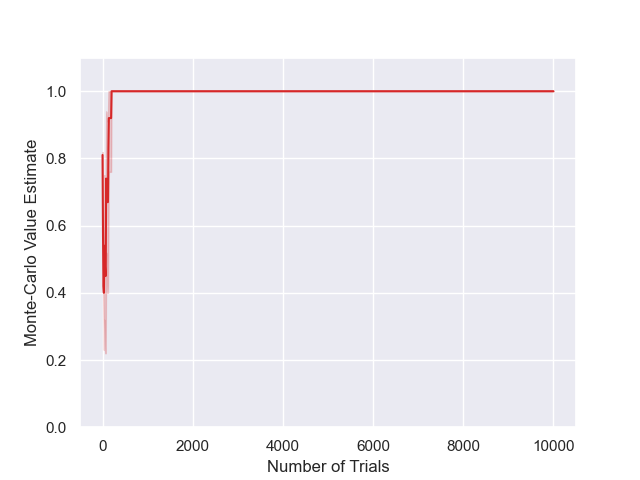}
                    \caption*{$\alpha=0.5,\epsilon=0.01$}
                \end{subfigure}
                \begin{subfigure}[b]{0.24\textwidth}
                    \centering
                    \includegraphics[width=\textwidth]{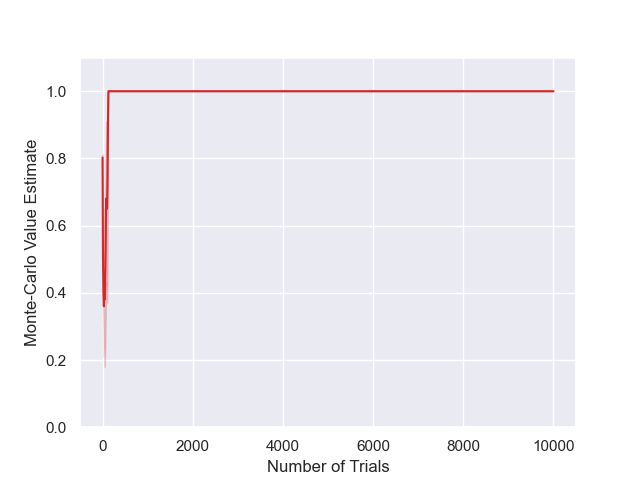}
                    \caption*{$\alpha=0.5,\epsilon=0.1$}
                \end{subfigure}
                \begin{subfigure}[b]{0.24\textwidth}
                    \centering
                    \includegraphics[width=\textwidth]{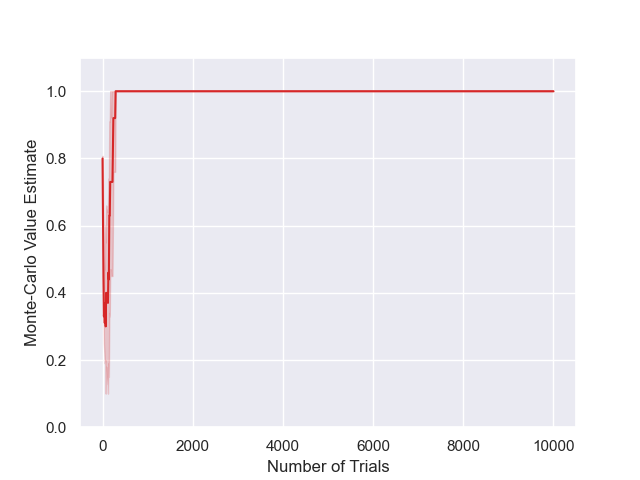}
                    \caption*{$\alpha=0.5,\epsilon=1$}
                \end{subfigure}
                \begin{subfigure}[b]{0.24\textwidth}
                    \centering
                    \includegraphics[width=\textwidth]{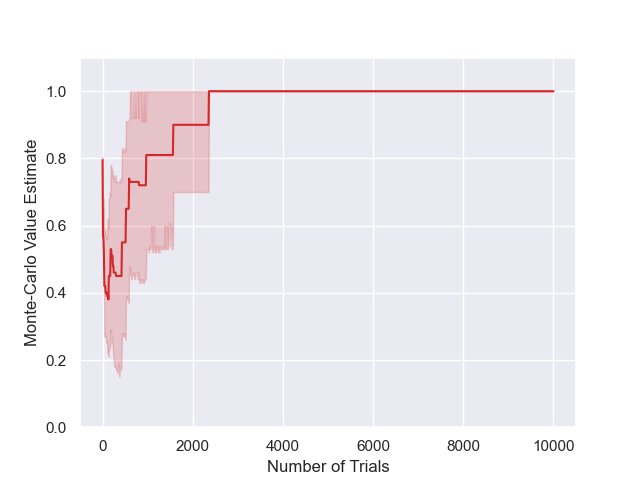}
                    \caption*{$\alpha=0.5,\epsilon=10$}
                \end{subfigure}
                
                \begin{subfigure}[b]{0.24\textwidth}
                    \centering
                    \includegraphics[width=\textwidth]{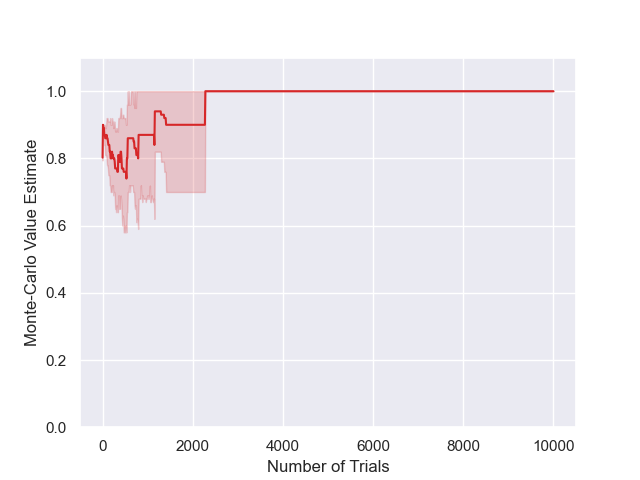}
                    \caption*{$\alpha=0.2,\epsilon=0.01$}
                \end{subfigure}
                \begin{subfigure}[b]{0.24\textwidth}
                    \centering
                    \includegraphics[width=\textwidth]{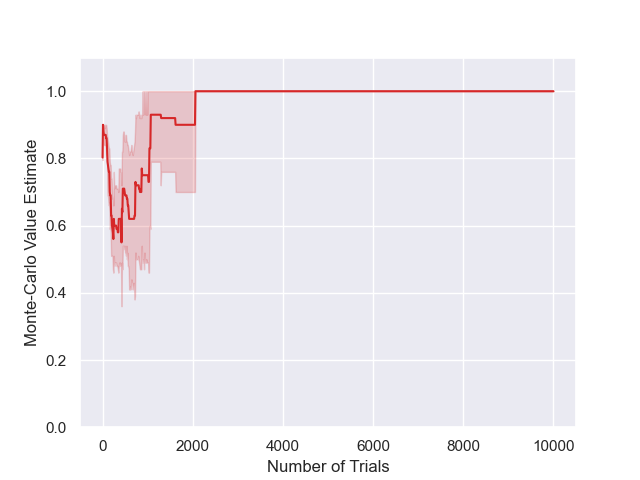}
                    \caption*{$\alpha=0.2,\epsilon=0.1$}
                \end{subfigure}
                \begin{subfigure}[b]{0.24\textwidth}
                    \centering
                    \includegraphics[width=\textwidth]{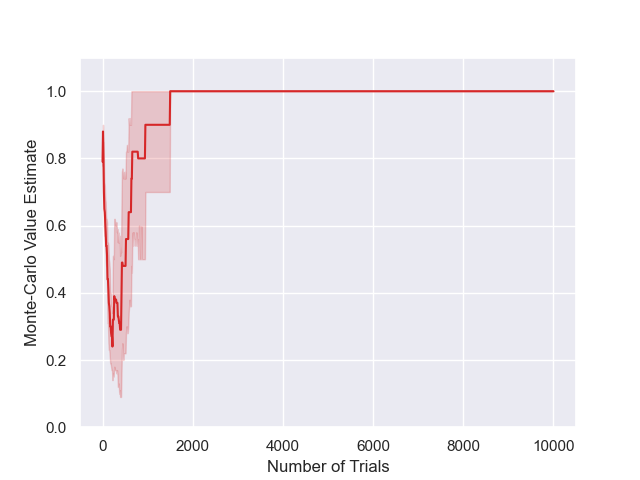}
                    \caption*{$\alpha=0.2,\epsilon=1$}
                \end{subfigure}
                \begin{subfigure}[b]{0.24\textwidth}
                    \centering
                    \includegraphics[width=\textwidth]{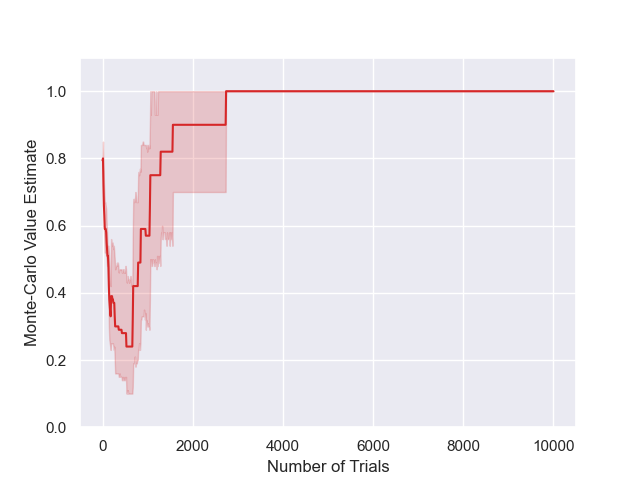}
                    \caption*{$\alpha=0.2,\epsilon=10$}
                \end{subfigure}
                
                \begin{subfigure}[b]{0.24\textwidth}
                    \centering
                    \includegraphics[width=\textwidth]{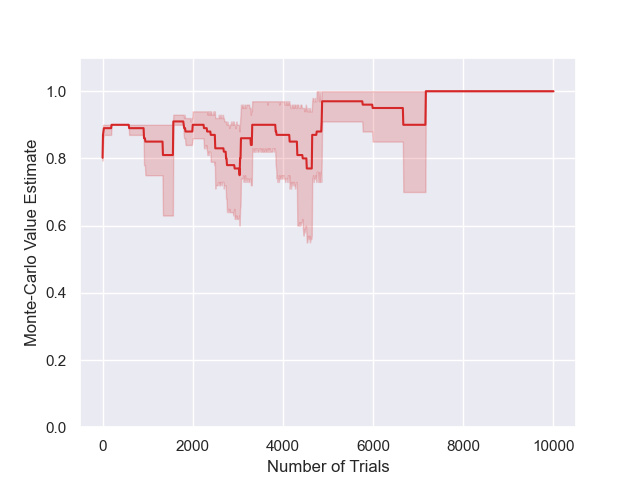}
                    \caption*{$\alpha=0.15,\epsilon=0.01$}
                \end{subfigure}
                \begin{subfigure}[b]{0.24\textwidth}
                    \centering
                    \includegraphics[width=\textwidth]{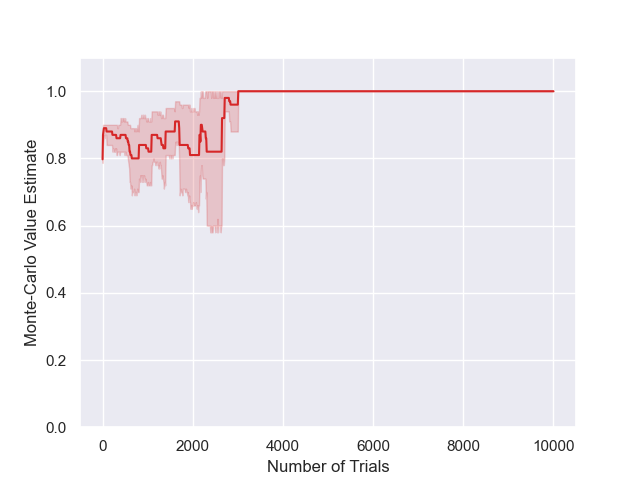}
                    \caption*{$\alpha=0.15,\epsilon=0.1$}
                \end{subfigure}
                \begin{subfigure}[b]{0.24\textwidth}
                    \centering
                    \includegraphics[width=\textwidth]{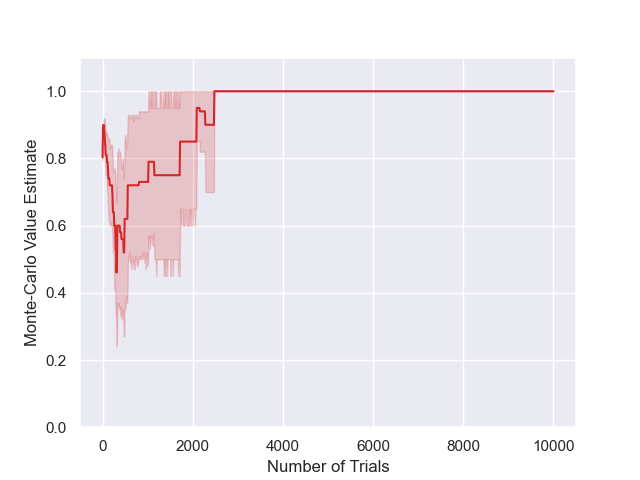}
                    \caption*{$\alpha=0.15,\epsilon=1$}
                \end{subfigure}
                \begin{subfigure}[b]{0.24\textwidth}
                    \centering
                    \includegraphics[width=\textwidth]{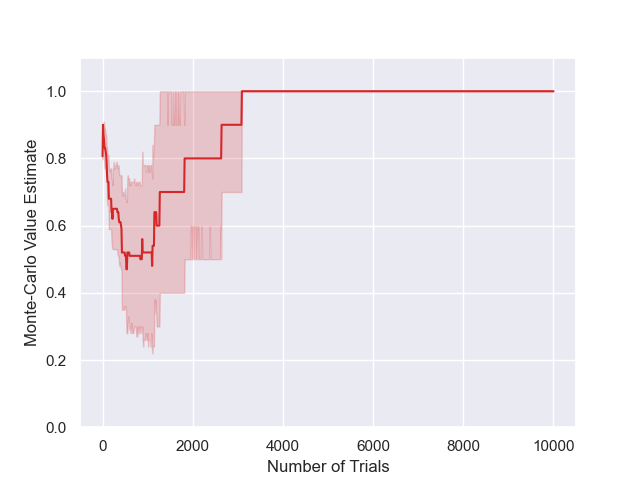}
                    \caption*{$\alpha=0.15,\epsilon=10$}
                \end{subfigure}
                
                \begin{subfigure}[b]{0.24\textwidth}
                    \centering
                    \includegraphics[width=\textwidth]{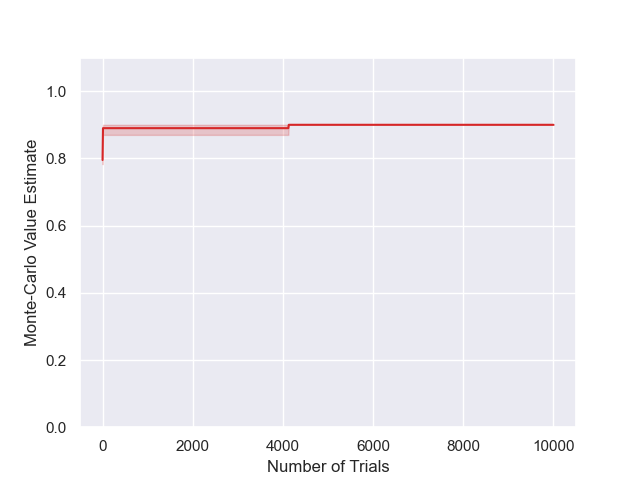}
                    \caption*{$\alpha=0.1,\epsilon=0.01$}
                \end{subfigure}
                \begin{subfigure}[b]{0.24\textwidth}
                    \centering
                    \includegraphics[width=\textwidth]{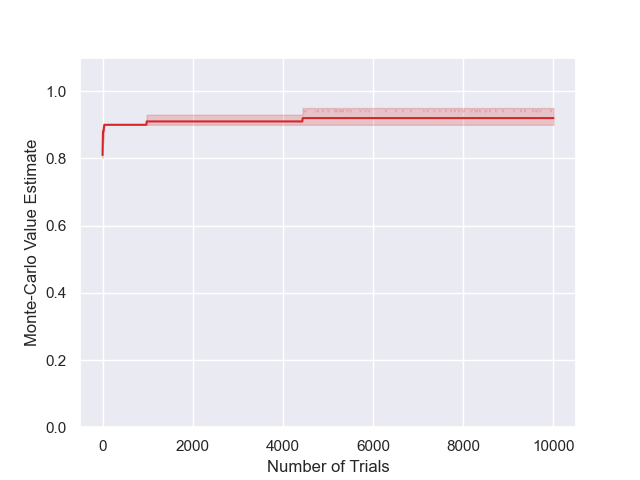}
                    \caption*{$\alpha=0.1,\epsilon=0.1$}
                \end{subfigure}
                \begin{subfigure}[b]{0.24\textwidth}
                    \centering
                    \includegraphics[width=\textwidth]{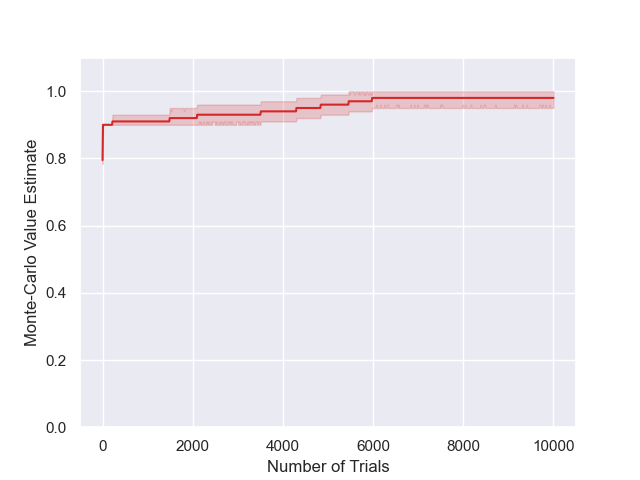}
                    \caption*{$\alpha=0.1,\epsilon=1$}
                \end{subfigure}
                \begin{subfigure}[b]{0.24\textwidth}
                    \centering
                    \includegraphics[width=\textwidth]{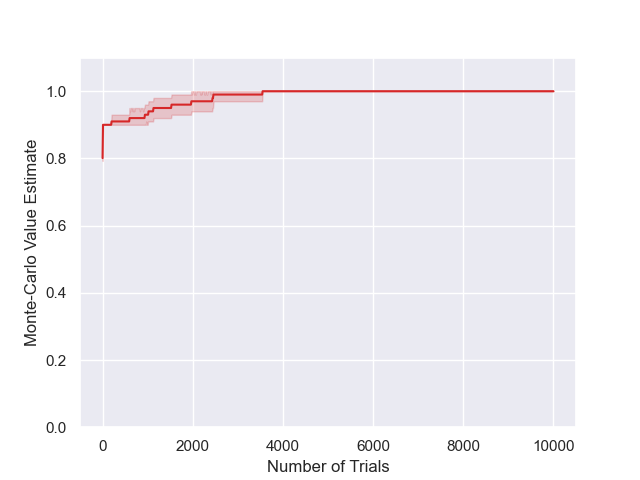}
                    \caption*{$\alpha=0.1,\epsilon=10$}
                \end{subfigure}
                
                \begin{subfigure}[b]{0.24\textwidth}
                    \centering
                    \includegraphics[width=\textwidth]{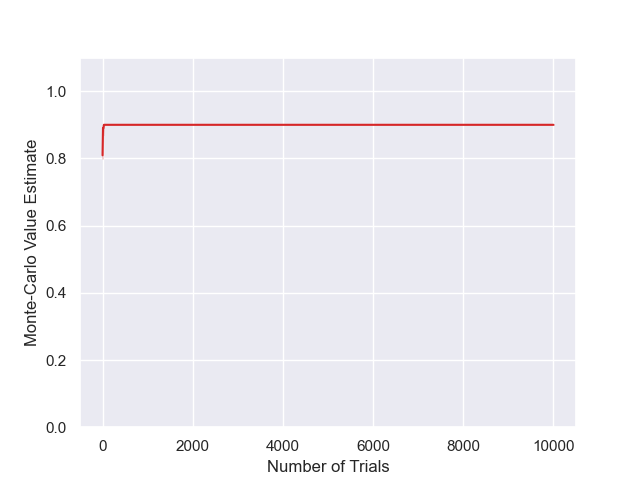}
                    \caption*{$\alpha=0.05,\epsilon=0.01$}
                \end{subfigure}
                \begin{subfigure}[b]{0.24\textwidth}
                    \centering
                    \includegraphics[width=\textwidth]{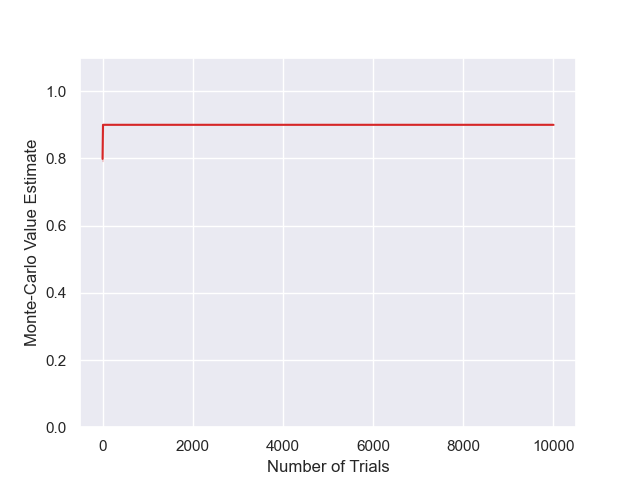}
                    \caption*{$\alpha=0.05,\epsilon=0.1$}
                \end{subfigure}
                \begin{subfigure}[b]{0.24\textwidth}
                    \centering
                    \includegraphics[width=\textwidth]{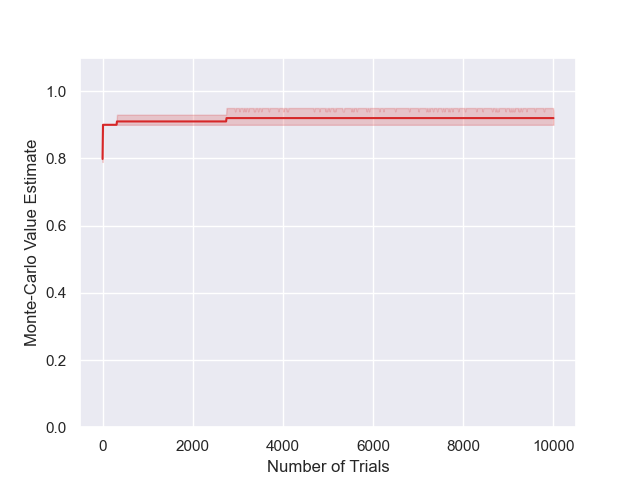}
                    \caption*{$\alpha=0.05,\epsilon=1$}
                \end{subfigure}
                \begin{subfigure}[b]{0.24\textwidth}
                    \centering
                    \includegraphics[width=\textwidth]{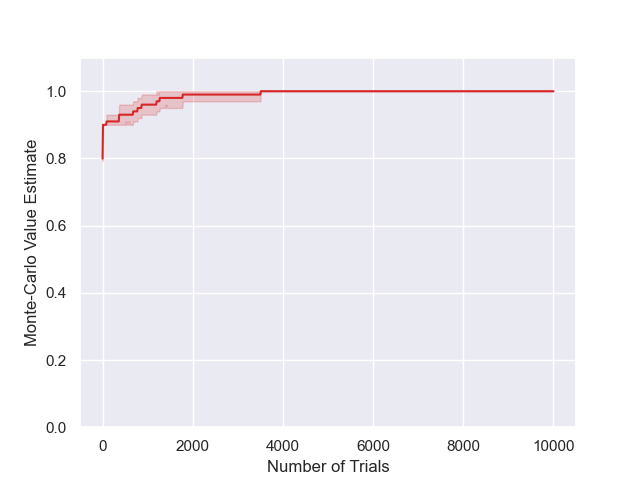}
                    \caption*{$\alpha=0.05,\epsilon=10$}
                \end{subfigure}
                
                \begin{subfigure}[b]{0.24\textwidth}
                    \centering
                    \includegraphics[width=\textwidth]{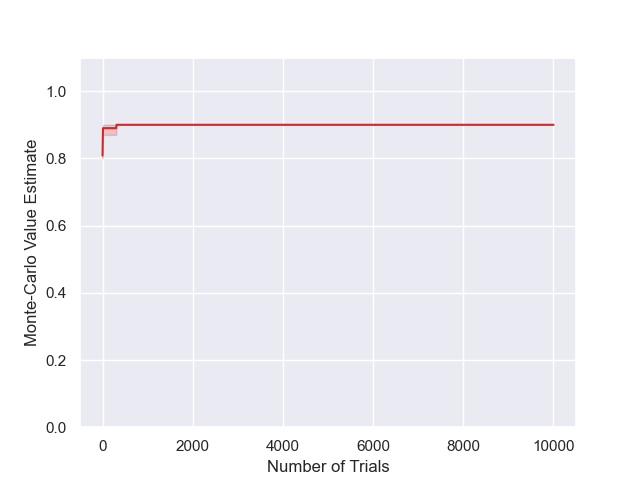}
                    \caption*{$\alpha=0.01,\epsilon=0.01$}
                \end{subfigure}
                \begin{subfigure}[b]{0.24\textwidth}
                    \centering
                    \includegraphics[width=\textwidth]{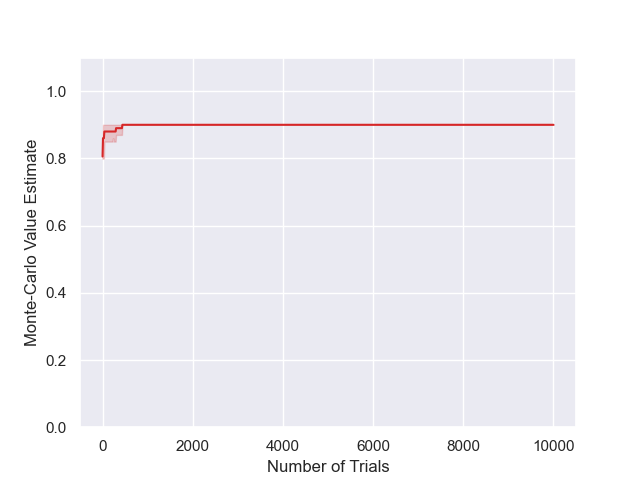}
                    \caption*{$\alpha=0.01,\epsilon=0.1$}
                \end{subfigure}
                \begin{subfigure}[b]{0.24\textwidth}
                    \centering
                    \includegraphics[width=\textwidth]{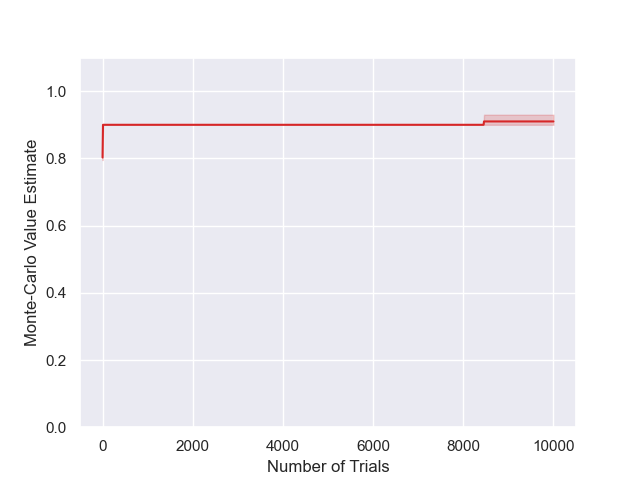}
                    \caption*{$\alpha=0.01,\epsilon=1$}
                \end{subfigure}
                \begin{subfigure}[b]{0.24\textwidth}
                    \centering
                    \includegraphics[width=\textwidth]{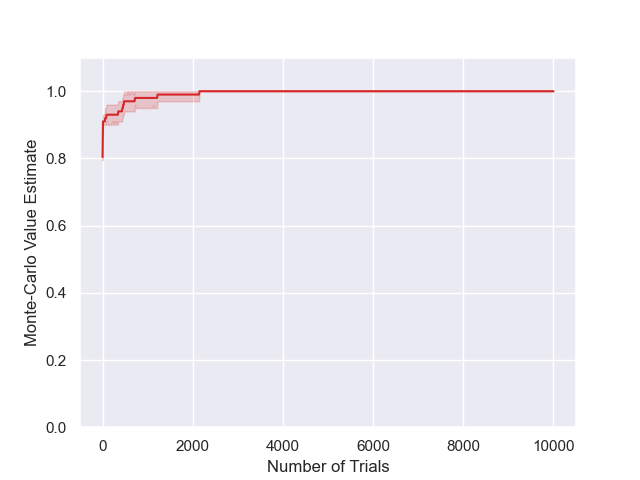}
                    \caption*{$\alpha=0.01,\epsilon=10$}
                \end{subfigure}
                
                \caption{Results for MENTS on the 10-chain ($D=10$, $R_f=1.0$), for varying temperatures and exploration parameters.}
                \label{fig:ments_10chain_hps}
            \end{figure}

            \begin{figure}
                \centering
                
                \begin{subfigure}[b]{0.24\textwidth}
                    \centering
                    \includegraphics[width=\textwidth]{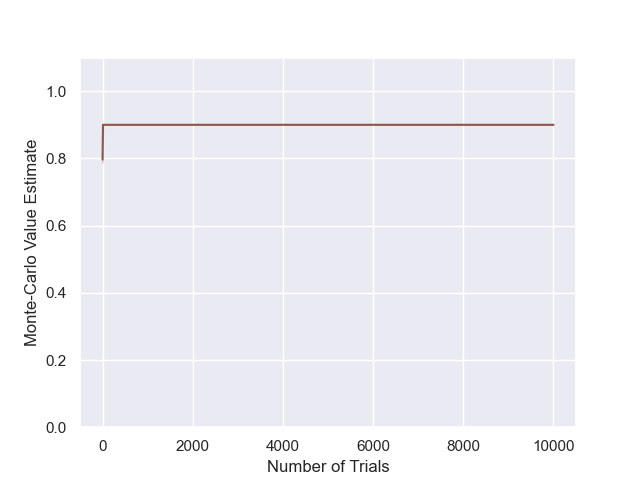}
                    \caption*{$\alpha=1000,\epsilon=0.01$}
                \end{subfigure}
                \begin{subfigure}[b]{0.24\textwidth}
                    \centering
                    \includegraphics[width=\textwidth]{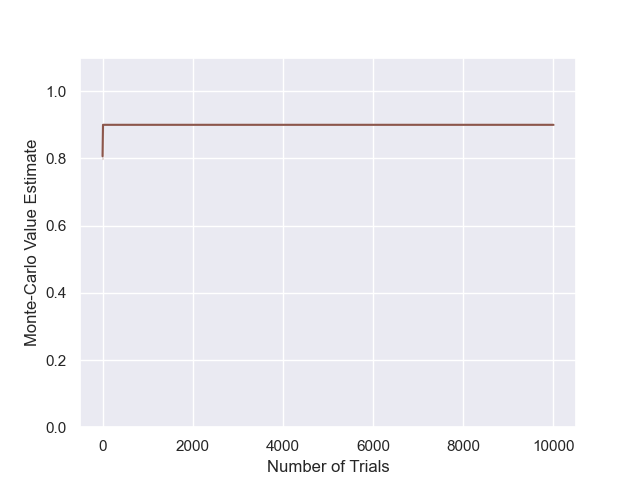}
                    \caption*{$\alpha=1000,\epsilon=0.1$}
                \end{subfigure}
                \begin{subfigure}[b]{0.24\textwidth}
                    \centering
                    \includegraphics[width=\textwidth]{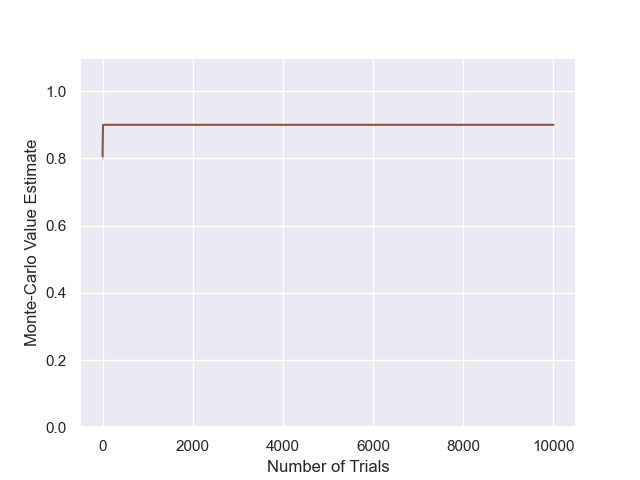}
                    \caption*{$\alpha=1000,\epsilon=1$}
                \end{subfigure}
                \begin{subfigure}[b]{0.24\textwidth}
                    \centering
                    \includegraphics[width=\textwidth]{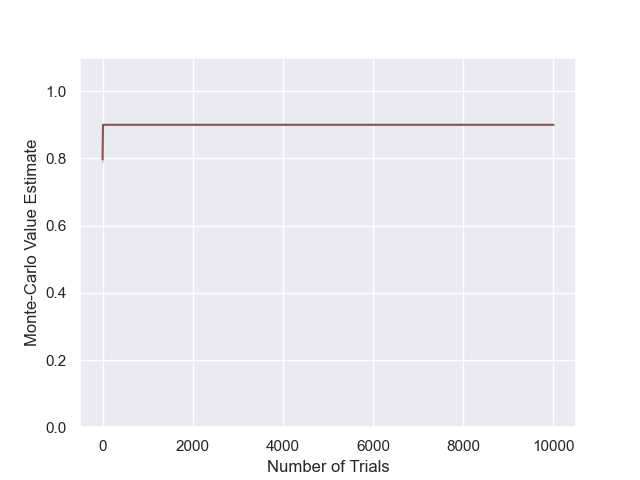}
                    \caption*{$\alpha=1000,\epsilon=10$}
                \end{subfigure}
                
                \begin{subfigure}[b]{0.24\textwidth}
                    \centering
                    \includegraphics[width=\textwidth]{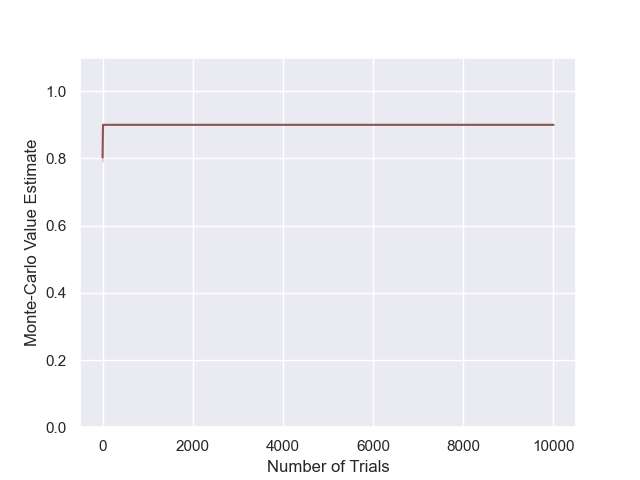}
                    \caption*{$\alpha=100,\epsilon=0.01$}
                \end{subfigure}
                \begin{subfigure}[b]{0.24\textwidth}
                    \centering
                    \includegraphics[width=\textwidth]{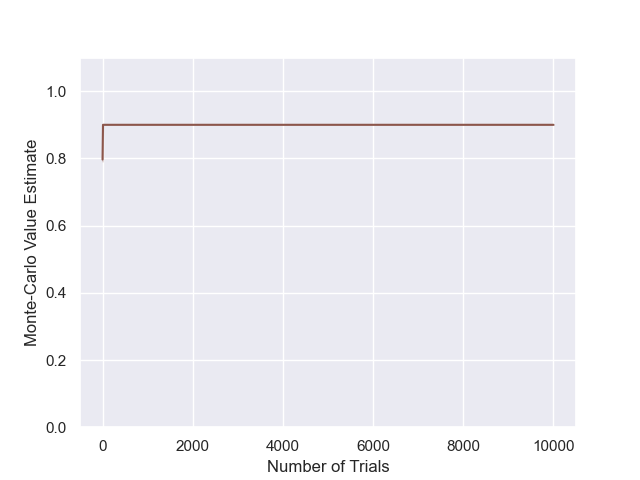}
                    \caption*{$\alpha=100,\epsilon=0.1$}
                \end{subfigure}
                \begin{subfigure}[b]{0.24\textwidth}
                    \centering
                    \includegraphics[width=\textwidth]{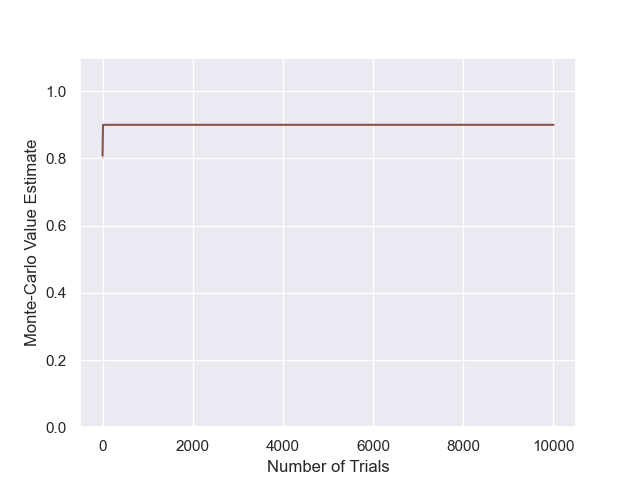}
                    \caption*{$\alpha=100,\epsilon=1$}
                \end{subfigure}
                \begin{subfigure}[b]{0.24\textwidth}
                    \centering
                    \includegraphics[width=\textwidth]{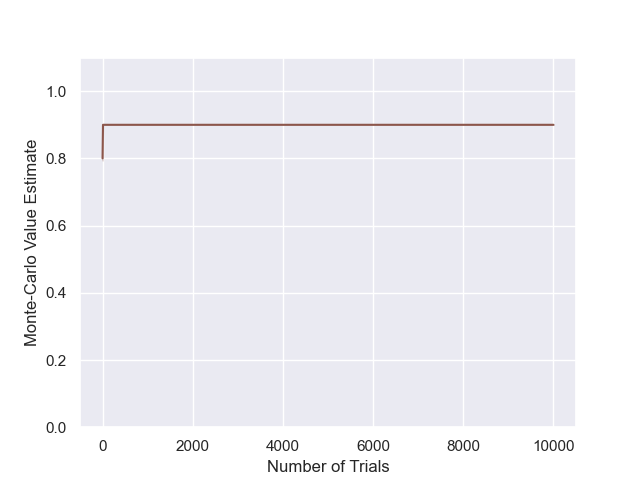}
                    \caption*{$\alpha=100,\epsilon=10$}
                \end{subfigure}
                
                \begin{subfigure}[b]{0.24\textwidth}
                    \centering
                    \includegraphics[width=\textwidth]{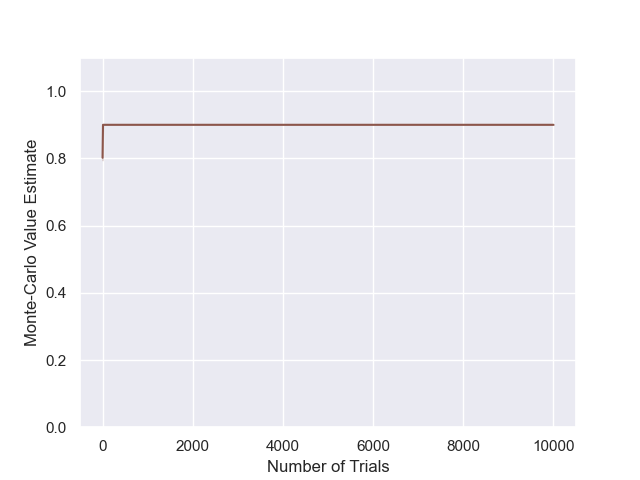}
                    \caption*{$\alpha=10,\epsilon=0.01$}
                \end{subfigure}
                \begin{subfigure}[b]{0.24\textwidth}
                    \centering
                    \includegraphics[width=\textwidth]{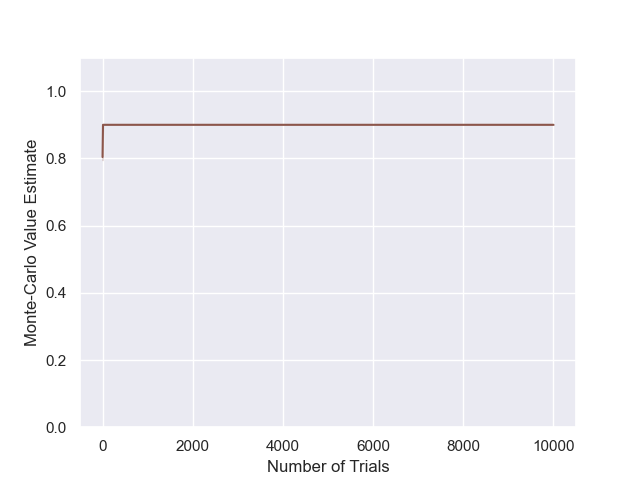}
                    \caption*{$\alpha=10,\epsilon=0.1$}
                \end{subfigure}
                \begin{subfigure}[b]{0.24\textwidth}
                    \centering
                    \includegraphics[width=\textwidth]{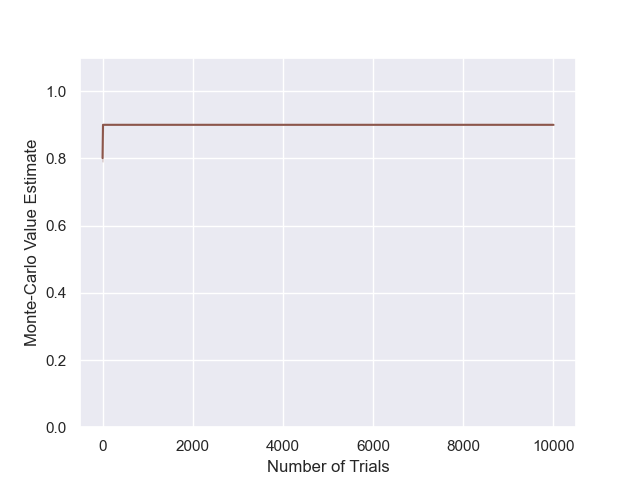}
                    \caption*{$\alpha=10,\epsilon=1$}
                \end{subfigure}
                \begin{subfigure}[b]{0.24\textwidth}
                    \centering
                    \includegraphics[width=\textwidth]{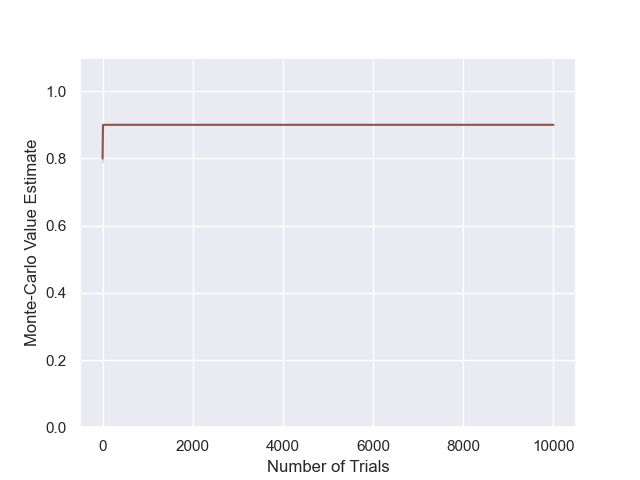}
                    \caption*{$\alpha=10,\epsilon=10$}
                \end{subfigure}
                
                \begin{subfigure}[b]{0.24\textwidth}
                    \centering
                    \includegraphics[width=\textwidth]{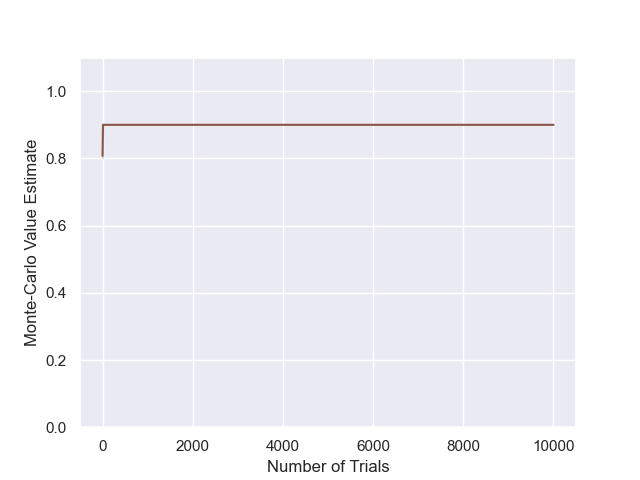}
                    \caption*{$\alpha=1,\epsilon=0.01$}
                \end{subfigure}
                \begin{subfigure}[b]{0.24\textwidth}
                    \centering
                    \includegraphics[width=\textwidth]{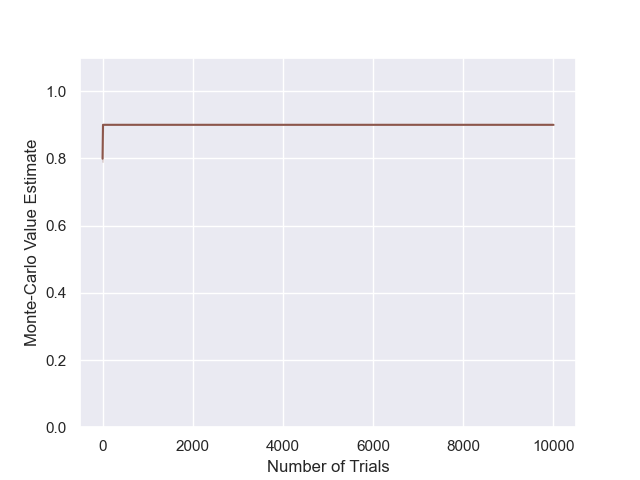}
                    \caption*{$\alpha=1,\epsilon=0.1$}
                \end{subfigure}
                \begin{subfigure}[b]{0.24\textwidth}
                    \centering
                    \includegraphics[width=\textwidth]{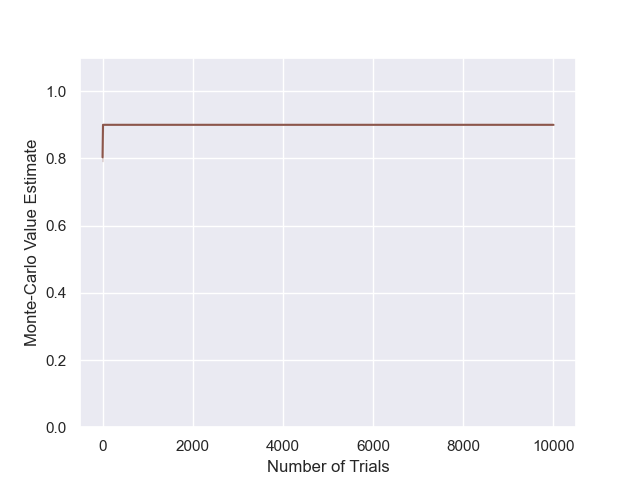}
                    \caption*{$\alpha=1,\epsilon=1$}
                \end{subfigure}
                \begin{subfigure}[b]{0.24\textwidth}
                    \centering
                    \includegraphics[width=\textwidth]{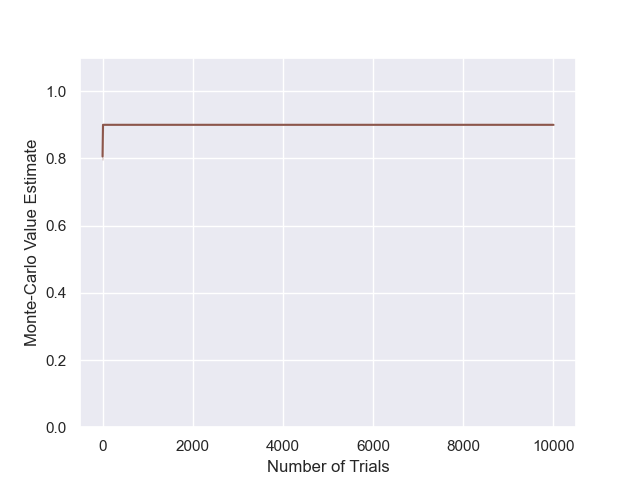}
                    \caption*{$\alpha=1,\epsilon=10$}
                \end{subfigure}
                
                \begin{subfigure}[b]{0.24\textwidth}
                    \centering
                    \includegraphics[width=\textwidth]{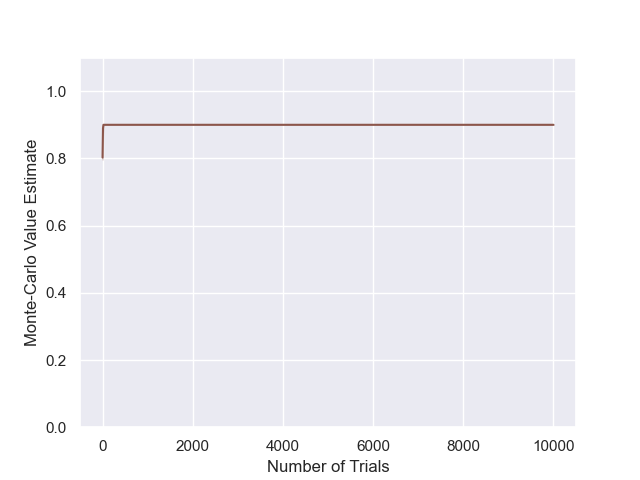}
                    \caption*{$\alpha=0.1,\epsilon=0.01$}
                \end{subfigure}
                \begin{subfigure}[b]{0.24\textwidth}
                    \centering
                    \includegraphics[width=\textwidth]{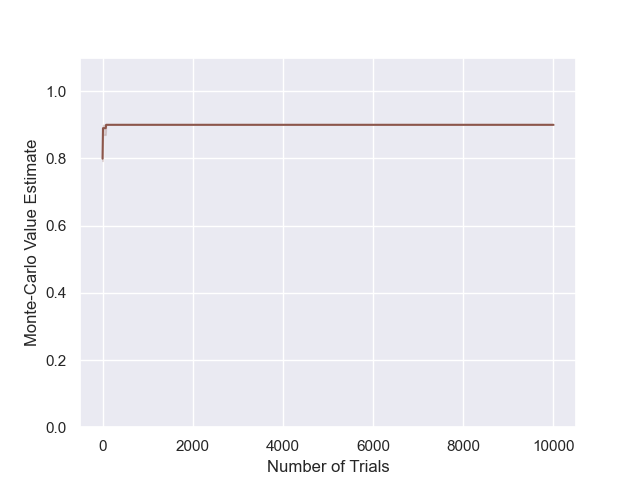}
                    \caption*{$\alpha=0.1,\epsilon=0.1$}
                \end{subfigure}
                \begin{subfigure}[b]{0.24\textwidth}
                    \centering
                    \includegraphics[width=\textwidth]{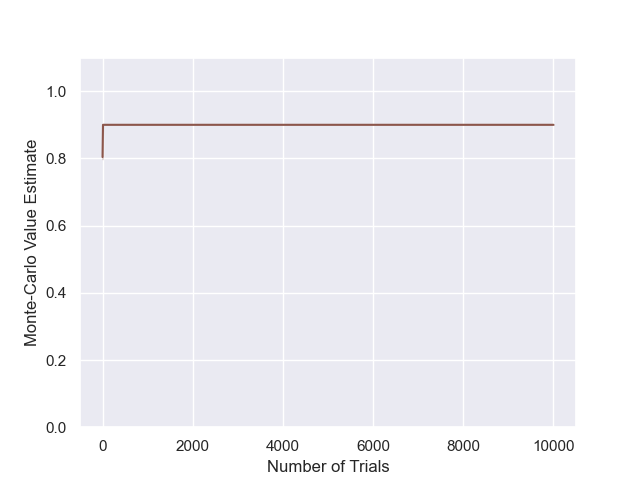}
                    \caption*{$\alpha=0.1,\epsilon=1$}
                \end{subfigure}
                \begin{subfigure}[b]{0.24\textwidth}
                    \centering
                    \includegraphics[width=\textwidth]{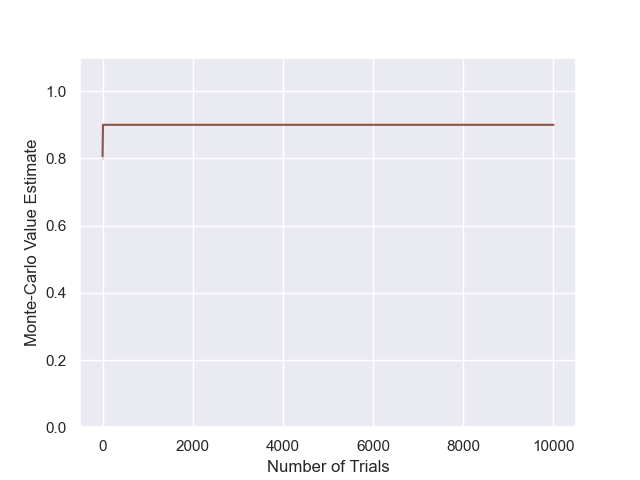}
                    \caption*{$\alpha=0.01,\epsilon=10$}
                \end{subfigure}
                
                \begin{subfigure}[b]{0.24\textwidth}
                    \centering
                    \includegraphics[width=\textwidth]{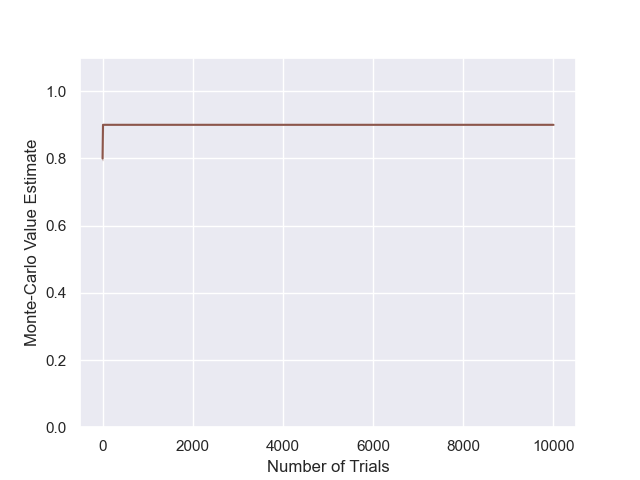}
                    \caption*{$\alpha=0.01,\epsilon=0.01$}
                \end{subfigure}
                \begin{subfigure}[b]{0.24\textwidth}
                    \centering
                    \includegraphics[width=\textwidth]{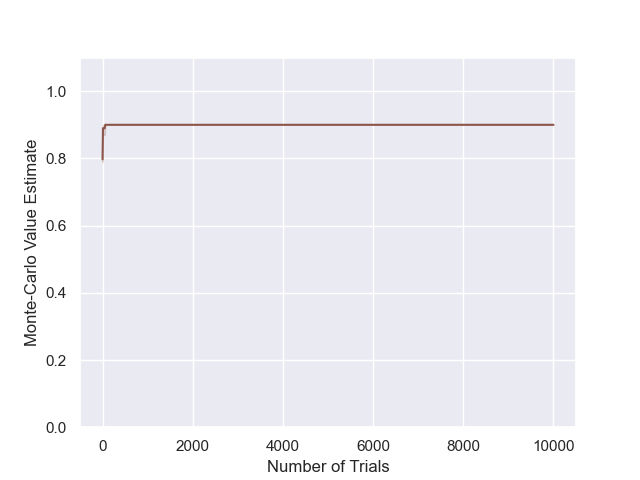}
                    \caption*{$\alpha=0.01,\epsilon=0.1$}
                \end{subfigure}
                \begin{subfigure}[b]{0.24\textwidth}
                    \centering
                    \includegraphics[width=\textwidth]{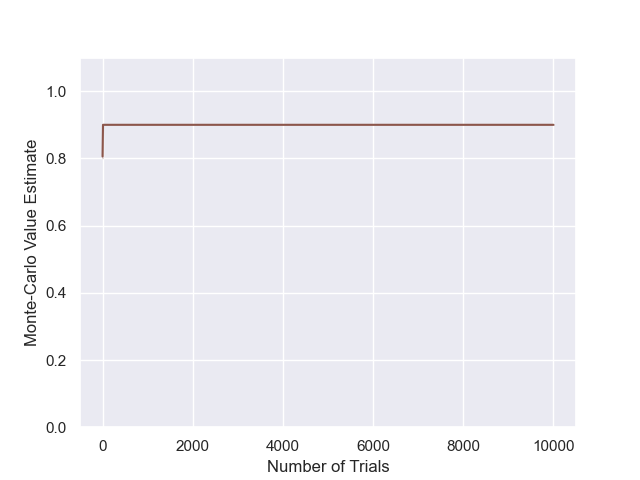}
                    \caption*{$\alpha=0.01,\epsilon=1$}
                \end{subfigure}
                \begin{subfigure}[b]{0.24\textwidth}
                    \centering
                    \includegraphics[width=\textwidth]{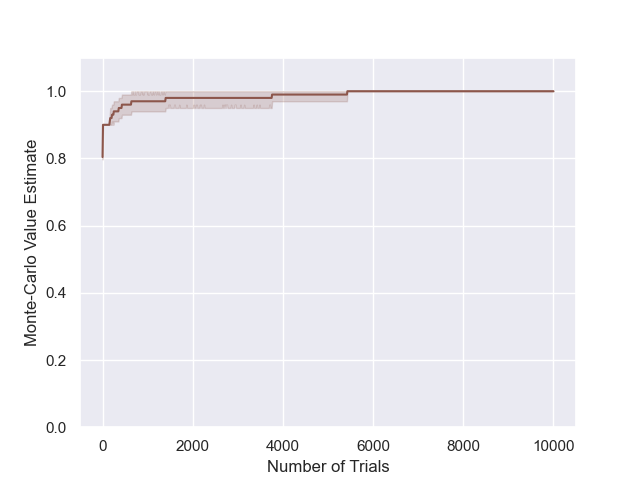}
                    \caption*{$\alpha=0.01,\epsilon=10$}
                \end{subfigure}
                
                \caption{Results for RENTS on the 10-chain ($D=10$, $R_f=1.0$), for varying temperatures and exploration parameters.}
                \label{fig:rents_10chain_hps}
            \end{figure}

            \begin{figure}
                \centering
                
                \begin{subfigure}[b]{0.24\textwidth}
                    \centering
                    \includegraphics[width=\textwidth]{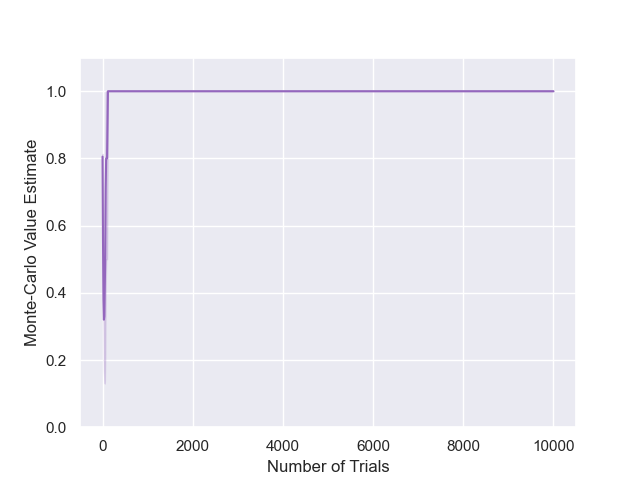}
                    \caption*{$\alpha=10,\epsilon=0.01$}
                \end{subfigure}
                \begin{subfigure}[b]{0.24\textwidth}
                    \centering
                    \includegraphics[width=\textwidth]{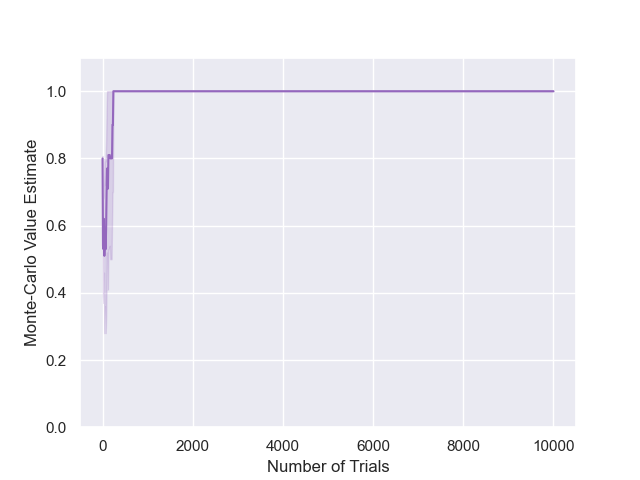}
                    \caption*{$\alpha=10,\epsilon=0.1$}
                \end{subfigure}
                \begin{subfigure}[b]{0.24\textwidth}
                    \centering
                    \includegraphics[width=\textwidth]{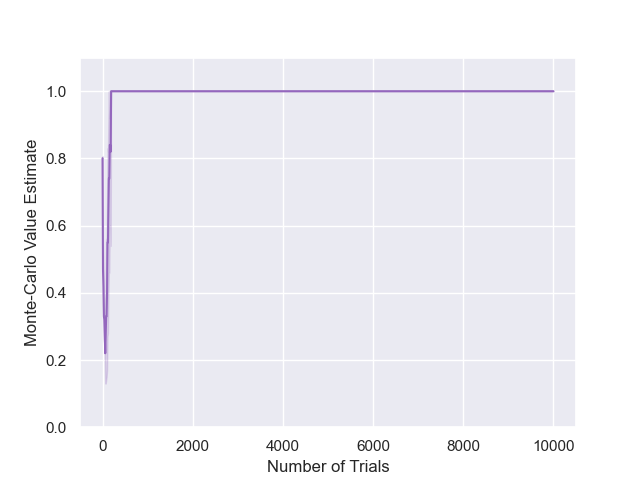}
                    \caption*{$\alpha=10,\epsilon=1$}
                \end{subfigure}
                \begin{subfigure}[b]{0.24\textwidth}
                    \centering
                    \includegraphics[width=\textwidth]{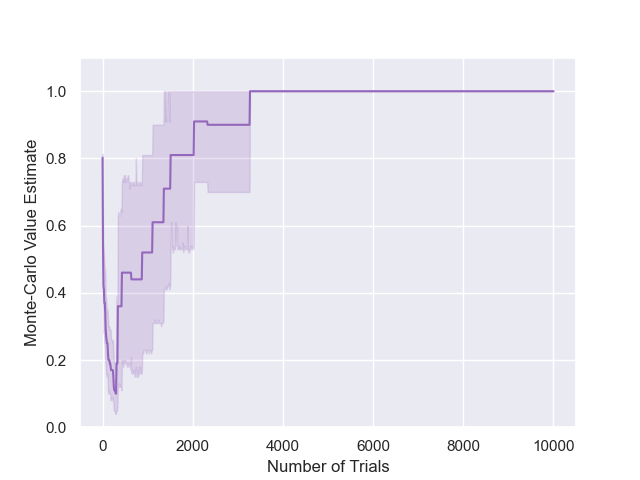}
                    \caption*{$\alpha=10,\epsilon=10$}
                \end{subfigure}
                
                \begin{subfigure}[b]{0.24\textwidth}
                    \centering
                    \includegraphics[width=\textwidth]{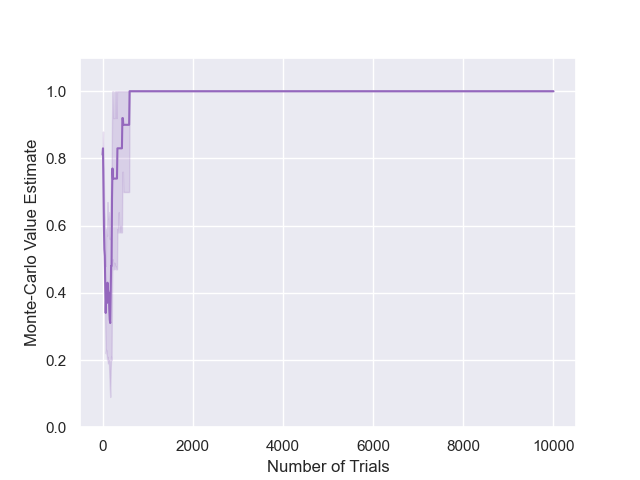}
                    \caption*{$\alpha=1,\epsilon=0.01$}
                \end{subfigure}
                \begin{subfigure}[b]{0.24\textwidth}
                    \centering
                    \includegraphics[width=\textwidth]{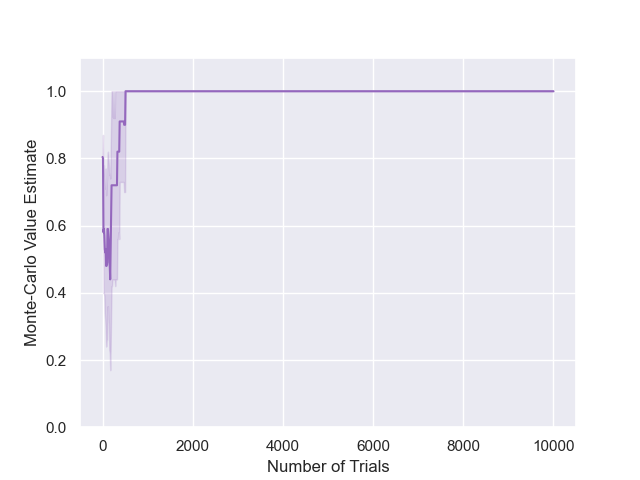}
                    \caption*{$\alpha=1,\epsilon=0.1$}
                \end{subfigure}
                \begin{subfigure}[b]{0.24\textwidth}
                    \centering
                    \includegraphics[width=\textwidth]{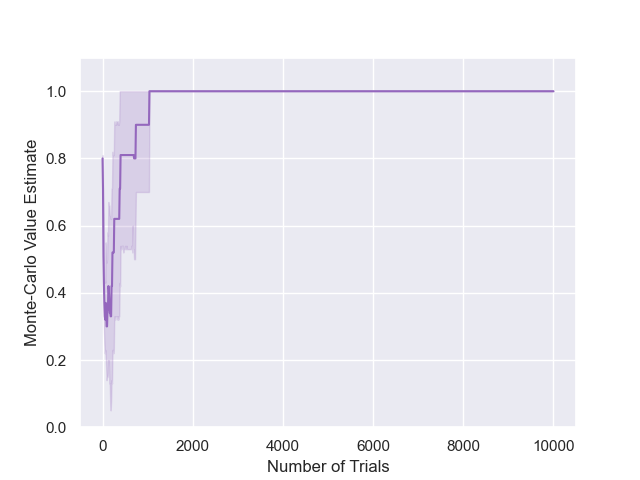}
                    \caption*{$\alpha=1,\epsilon=1$}
                \end{subfigure}
                \begin{subfigure}[b]{0.24\textwidth}
                    \centering
                    \includegraphics[width=\textwidth]{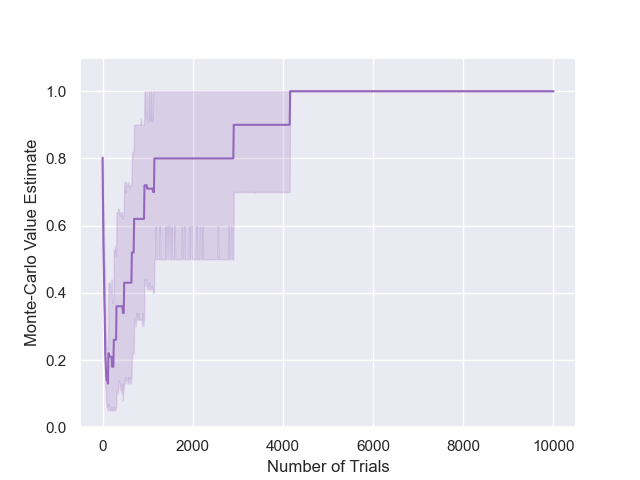}
                    \caption*{$\alpha=1,\epsilon=10$}
                \end{subfigure}
                
                \begin{subfigure}[b]{0.24\textwidth}
                    \centering
                    \includegraphics[width=\textwidth]{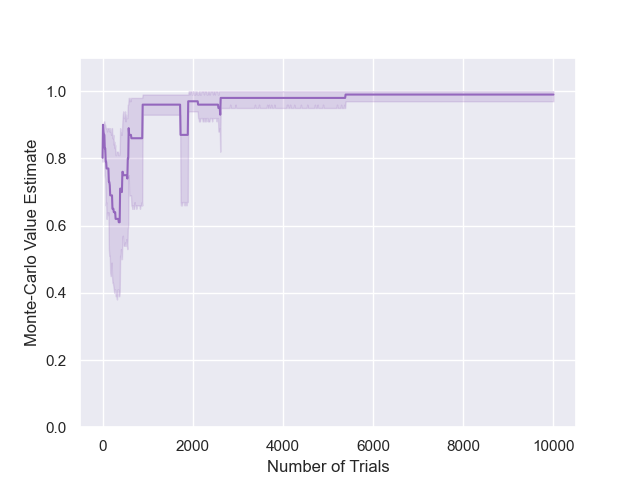}
                    \caption*{$\alpha=0.7,\epsilon=0.01$}
                \end{subfigure}
                \begin{subfigure}[b]{0.24\textwidth}
                    \centering
                    \includegraphics[width=\textwidth]{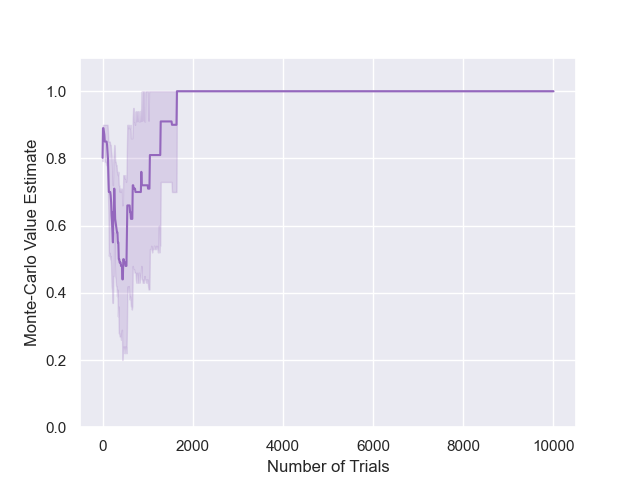}
                    \caption*{$\alpha=0.7,\epsilon=0.1$}
                \end{subfigure}
                \begin{subfigure}[b]{0.24\textwidth}
                    \centering
                    \includegraphics[width=\textwidth]{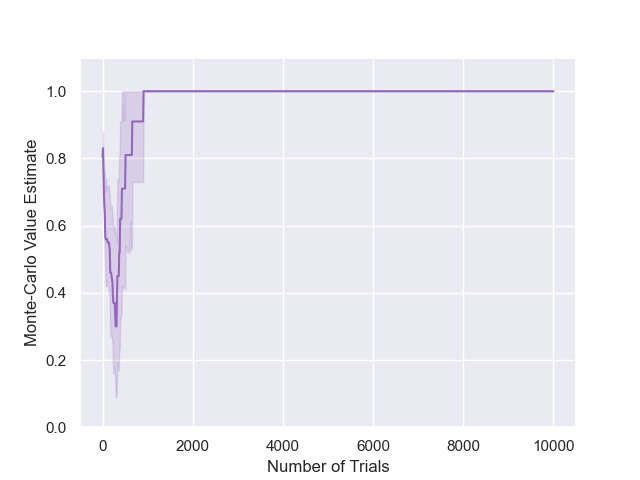}
                    \caption*{$\alpha=0.7,\epsilon=1$}
                \end{subfigure}
                \begin{subfigure}[b]{0.24\textwidth}
                    \centering
                    \includegraphics[width=\textwidth]{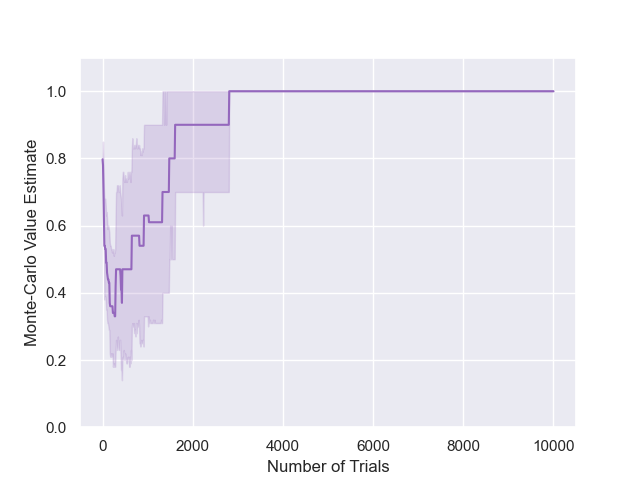}
                    \caption*{$\alpha=0.7,\epsilon=10$}
                \end{subfigure}
                
                \begin{subfigure}[b]{0.24\textwidth}
                    \centering
                    \includegraphics[width=\textwidth]{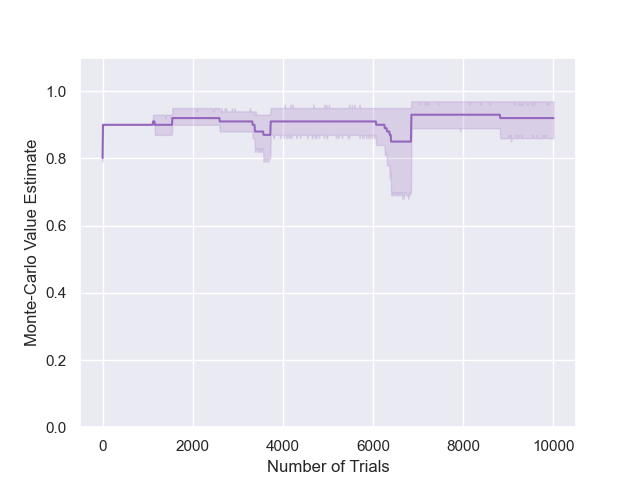}
                    \caption*{$\alpha=0.5,\epsilon=0.01$}
                \end{subfigure}
                \begin{subfigure}[b]{0.24\textwidth}
                    \centering
                    \includegraphics[width=\textwidth]{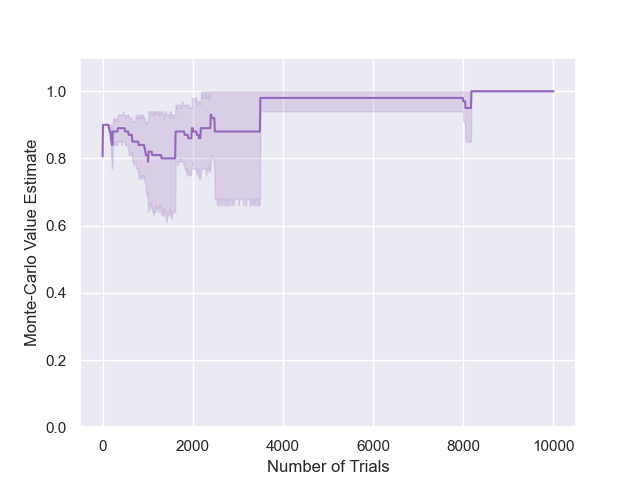}
                    \caption*{$\alpha=0.5,\epsilon=0.1$}
                \end{subfigure}
                \begin{subfigure}[b]{0.24\textwidth}
                    \centering
                    \includegraphics[width=\textwidth]{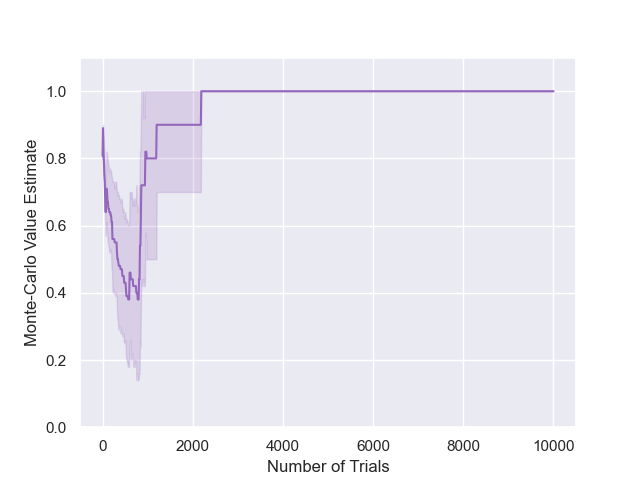}
                    \caption*{$\alpha=0.5,\epsilon=1$}
                \end{subfigure}
                \begin{subfigure}[b]{0.24\textwidth}
                    \centering
                    \includegraphics[width=\textwidth]{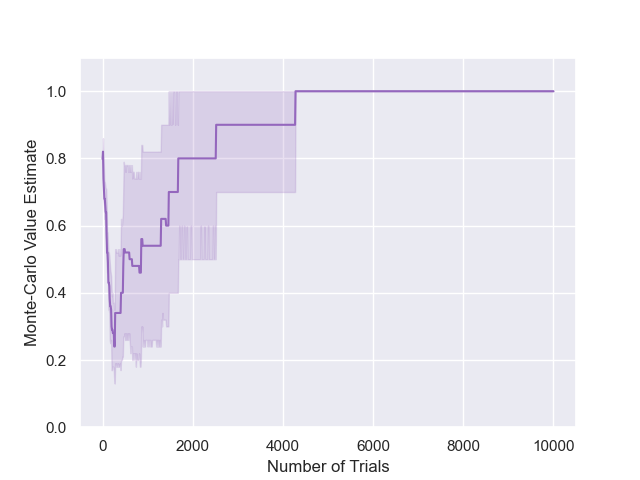}
                    \caption*{$\alpha=0.5,\epsilon=10$}
                \end{subfigure}
                
                \begin{subfigure}[b]{0.24\textwidth}
                    \centering
                    \includegraphics[width=\textwidth]{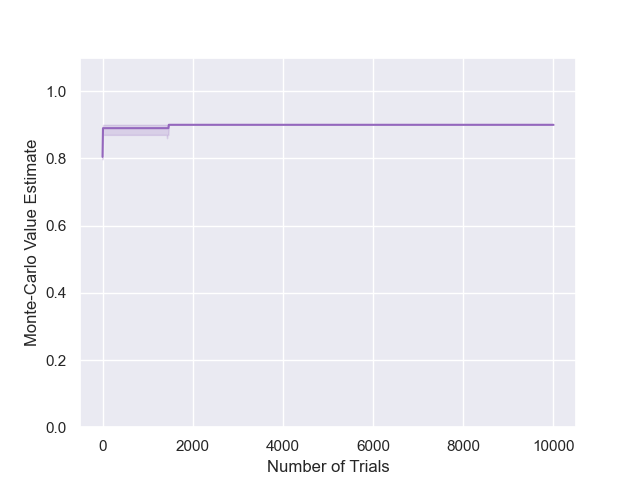}
                    \caption*{$\alpha=0.3,\epsilon=0.01$}
                \end{subfigure}
                \begin{subfigure}[b]{0.24\textwidth}
                    \centering
                    \includegraphics[width=\textwidth]{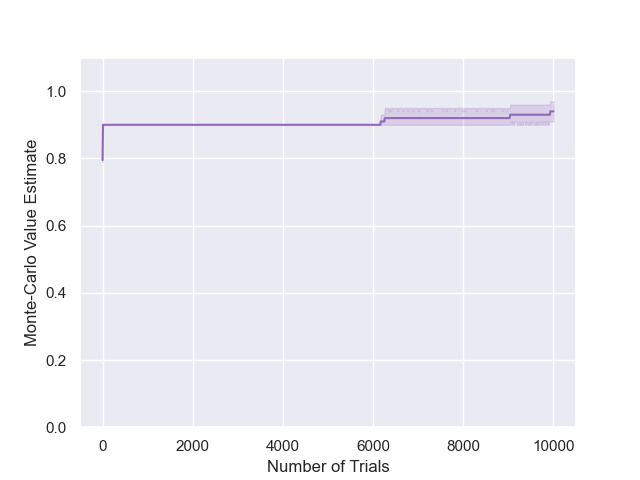}
                    \caption*{$\alpha=0.3,\epsilon=0.1$}
                \end{subfigure}
                \begin{subfigure}[b]{0.24\textwidth}
                    \centering
                    \includegraphics[width=\textwidth]{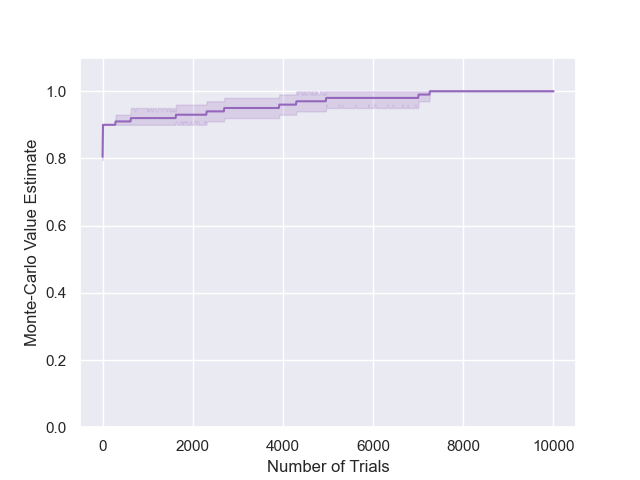}
                    \caption*{$\alpha=0.3,\epsilon=1$}
                \end{subfigure}
                \begin{subfigure}[b]{0.24\textwidth}
                    \centering
                    \includegraphics[width=\textwidth]{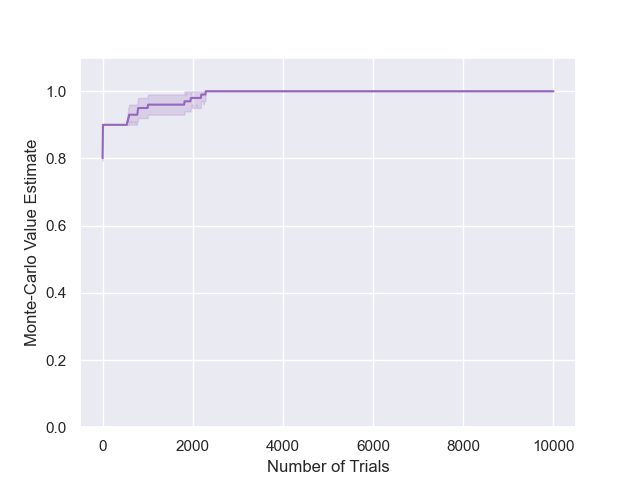}
                    \caption*{$\alpha=0.3,\epsilon=10$}
                \end{subfigure}
                
                \begin{subfigure}[b]{0.24\textwidth}
                    \centering
                    \includegraphics[width=\textwidth]{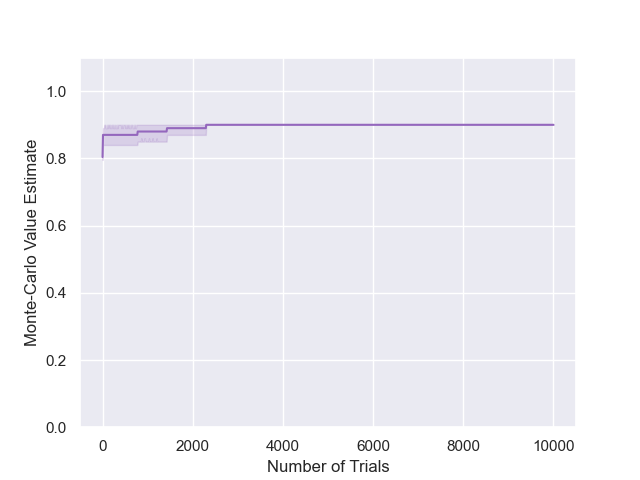}
                    \caption*{$\alpha=0.1,\epsilon=0.01$}
                \end{subfigure}
                \begin{subfigure}[b]{0.24\textwidth}
                    \centering
                    \includegraphics[width=\textwidth]{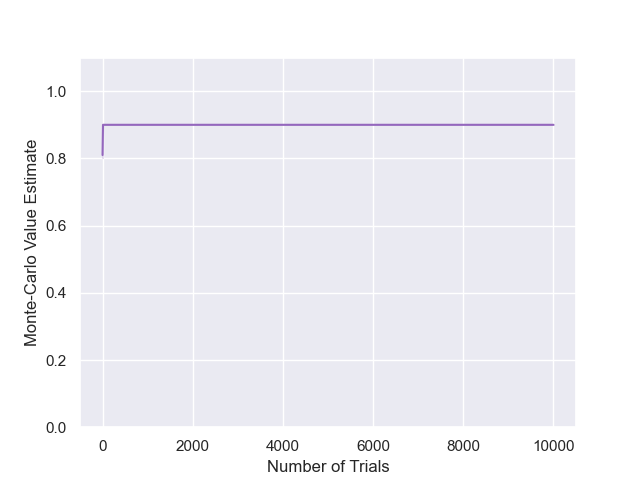}
                    \caption*{$\alpha=0.1,\epsilon=0.1$}
                \end{subfigure}
                \begin{subfigure}[b]{0.24\textwidth}
                    \centering
                    \includegraphics[width=\textwidth]{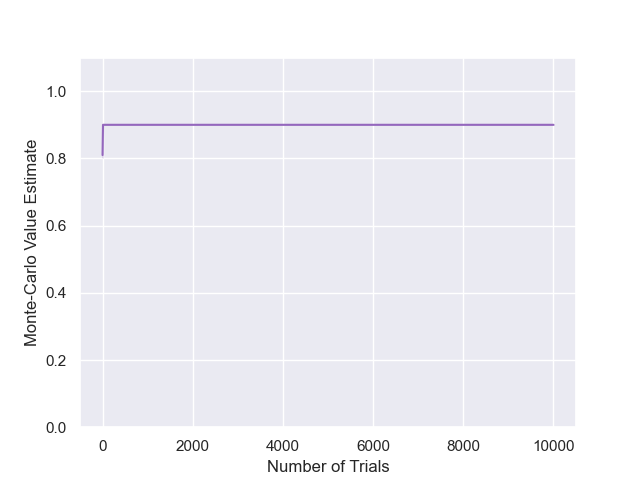}
                    \caption*{$\alpha=0.1,\epsilon=1$}
                \end{subfigure}
                \begin{subfigure}[b]{0.24\textwidth}
                    \centering
                    \includegraphics[width=\textwidth]{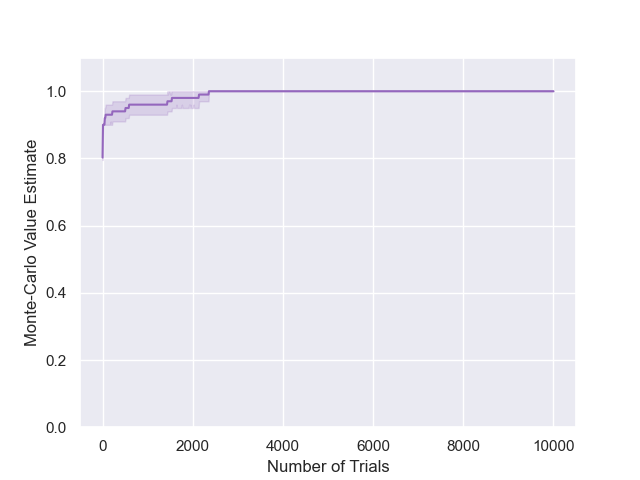}
                    \caption*{$\alpha=0.1,\epsilon=10$}
                \end{subfigure}
                
                \begin{subfigure}[b]{0.24\textwidth}
                    \centering
                    \includegraphics[width=\textwidth]{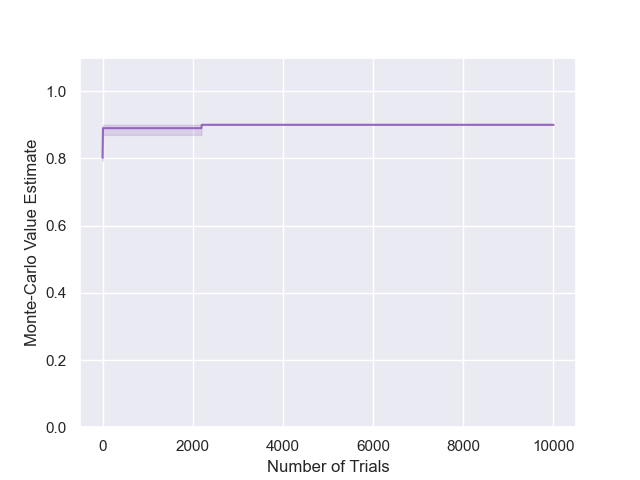}
                    \caption*{$\alpha=0.01,\epsilon=0.01$}
                \end{subfigure}
                \begin{subfigure}[b]{0.24\textwidth}
                    \centering
                    \includegraphics[width=\textwidth]{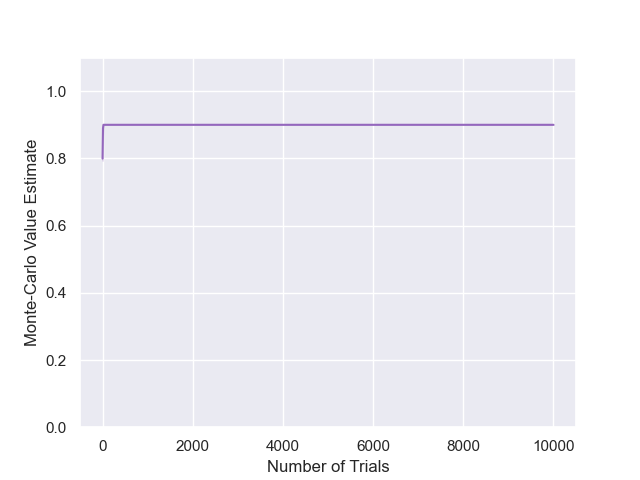}
                    \caption*{$\alpha=0.01,\epsilon=0.1$}
                \end{subfigure}
                \begin{subfigure}[b]{0.24\textwidth}
                    \centering
                    \includegraphics[width=\textwidth]{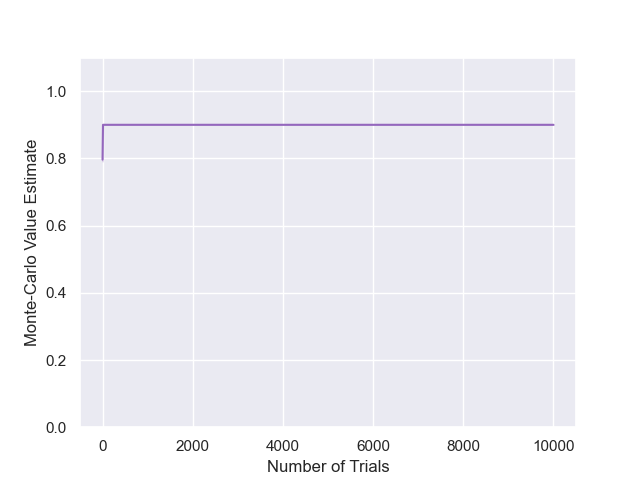}
                    \caption*{$\alpha=0.01,\epsilon=1$}
                \end{subfigure}
                \begin{subfigure}[b]{0.24\textwidth}
                    \centering
                    \includegraphics[width=\textwidth]{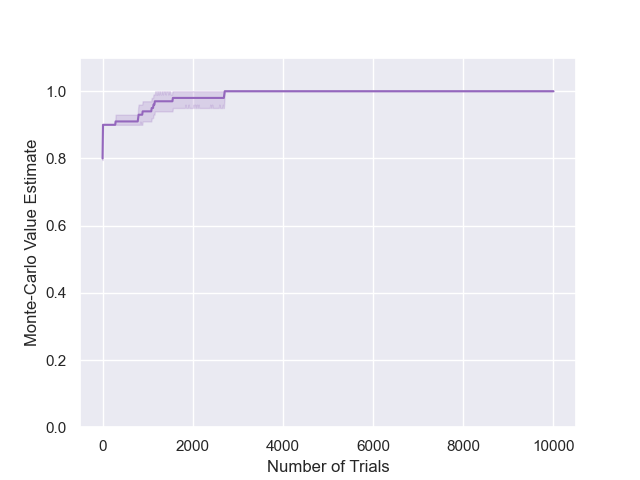}
                    \caption*{$\alpha=0.01,\epsilon=10$}
                \end{subfigure}
                
                \caption{Results for TENTS on the 10-chain ($D=10$, $R_f=1.0$), for varying temperatures and exploration parameters.}
                \label{fig:tents_10chain_hps}
            \end{figure}

            \begin{figure}
                \centering
                
                \begin{subfigure}[b]{0.24\textwidth}
                    \centering
                    \includegraphics[width=\textwidth]{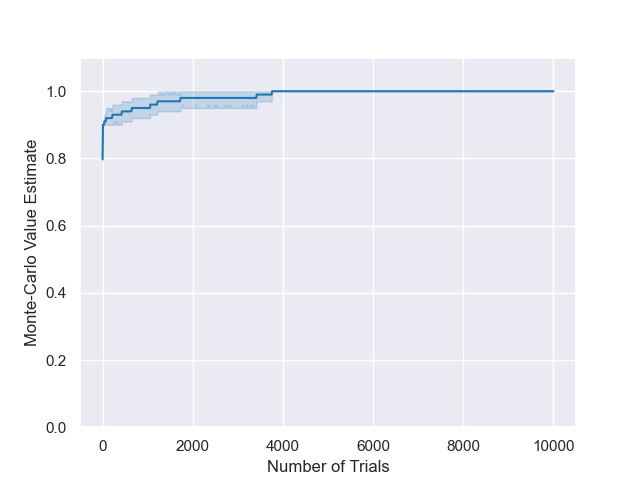}
                    \caption*{$\alpha=1,\epsilon=0.01$}
                \end{subfigure}
                \begin{subfigure}[b]{0.24\textwidth}
                    \centering
                    \includegraphics[width=\textwidth]{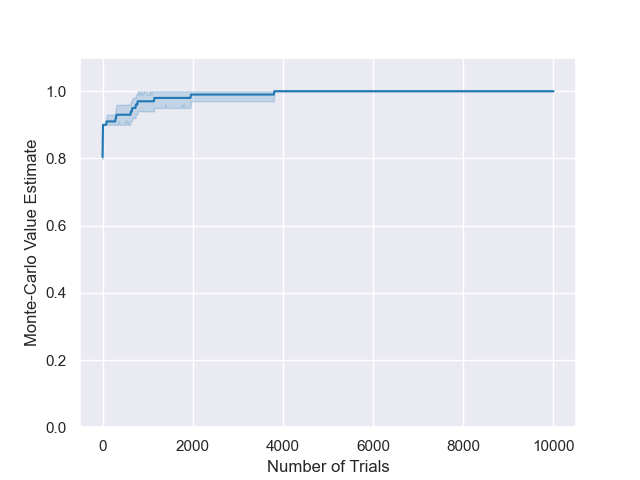}
                    \caption*{$\alpha=1,\epsilon=0.1$}
                \end{subfigure}
                \begin{subfigure}[b]{0.24\textwidth}
                    \centering
                    \includegraphics[width=\textwidth]{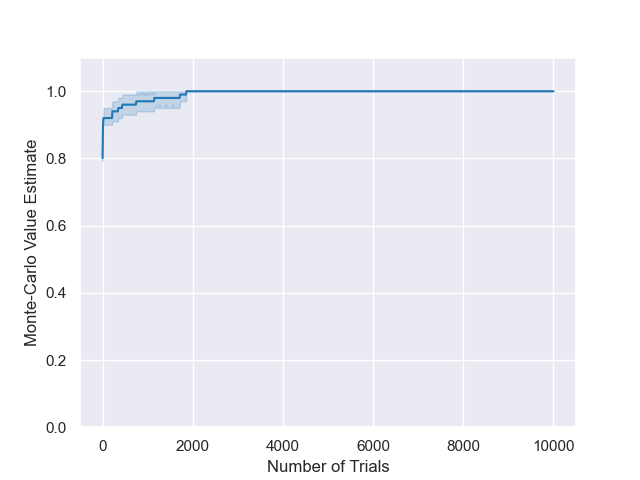}
                    \caption*{$\alpha=1,\epsilon=1$}
                \end{subfigure}
                \begin{subfigure}[b]{0.24\textwidth}
                    \centering
                    \includegraphics[width=\textwidth]{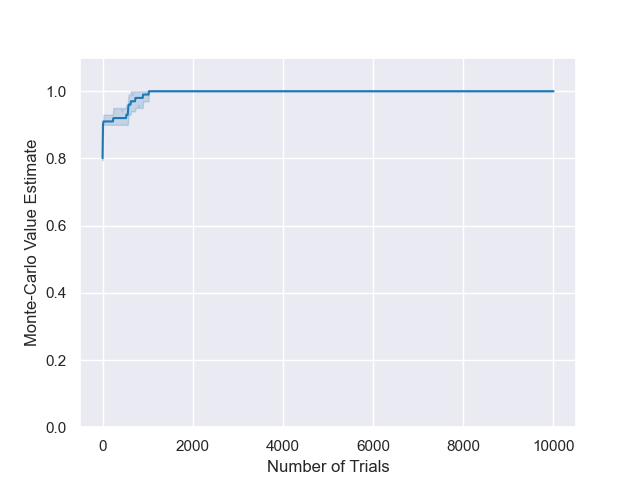}
                    \caption*{$\alpha=1,\epsilon=10$}
                \end{subfigure}
                
                \begin{subfigure}[b]{0.24\textwidth}
                    \centering
                    \includegraphics[width=\textwidth]{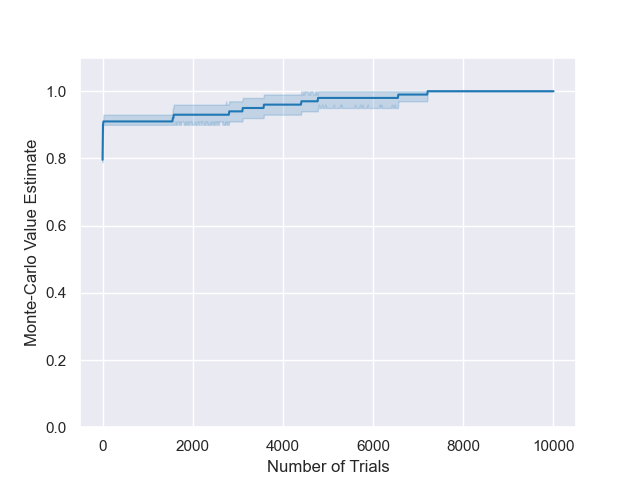}
                    \caption*{$\alpha=0.5,\epsilon=0.01$}
                \end{subfigure}
                \begin{subfigure}[b]{0.24\textwidth}
                    \centering
                    \includegraphics[width=\textwidth]{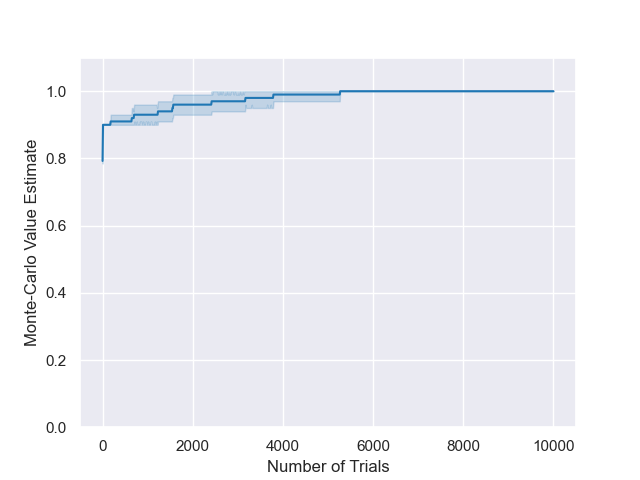}
                    \caption*{$\alpha=0.5,\epsilon=0.1$}
                \end{subfigure}
                \begin{subfigure}[b]{0.24\textwidth}
                    \centering
                    \includegraphics[width=\textwidth]{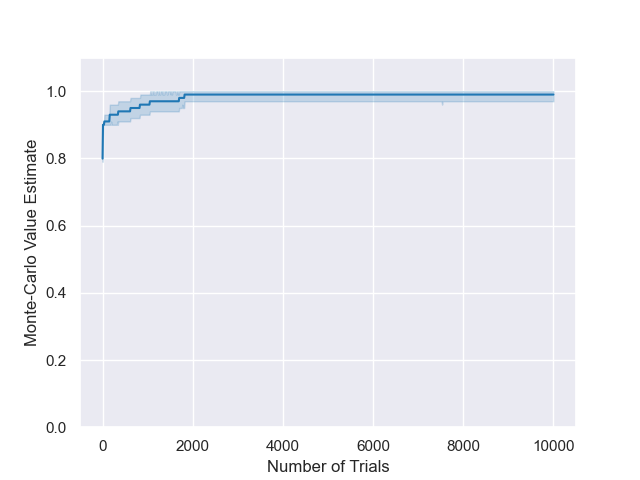}
                    \caption*{$\alpha=0.5,\epsilon=1$}
                \end{subfigure}
                \begin{subfigure}[b]{0.24\textwidth}
                    \centering
                    \includegraphics[width=\textwidth]{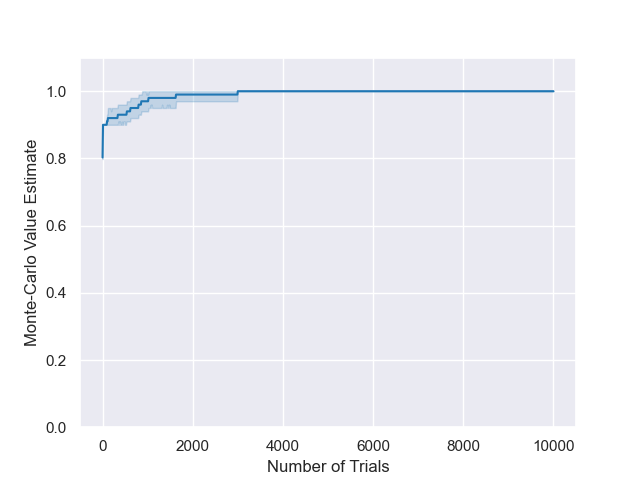}
                    \caption*{$\alpha=0.5,\epsilon=10$}
                \end{subfigure}
                
                \begin{subfigure}[b]{0.24\textwidth}
                    \centering
                    \includegraphics[width=\textwidth]{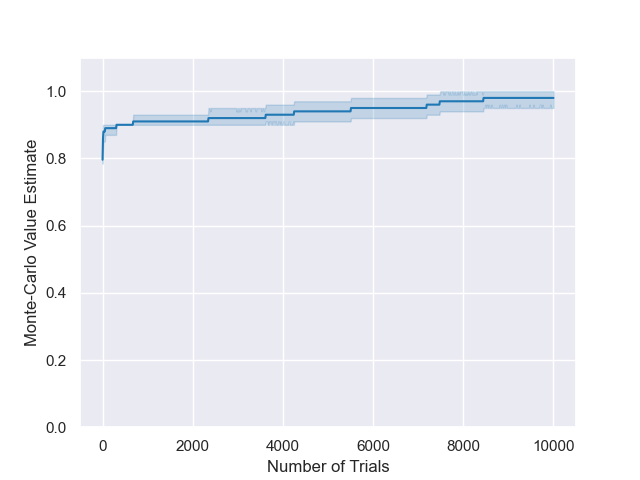}
                    \caption*{$\alpha=0.2,\epsilon=0.01$}
                \end{subfigure}
                \begin{subfigure}[b]{0.24\textwidth}
                    \centering
                    \includegraphics[width=\textwidth]{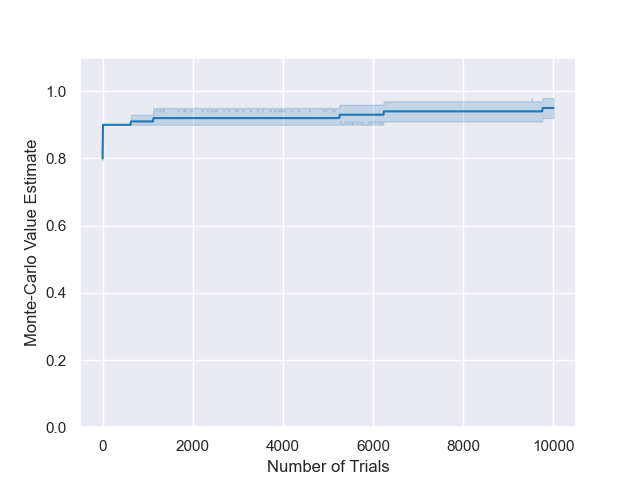}
                    \caption*{$\alpha=0.2,\epsilon=0.1$}
                \end{subfigure}
                \begin{subfigure}[b]{0.24\textwidth}
                    \centering
                    \includegraphics[width=\textwidth]{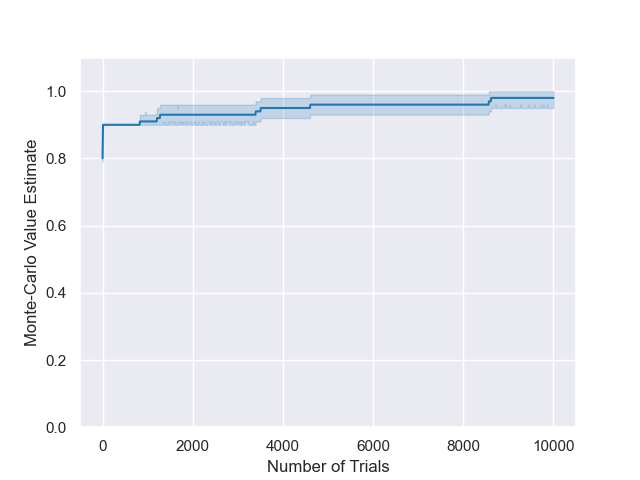}
                    \caption*{$\alpha=0.2,\epsilon=1$}
                \end{subfigure}
                \begin{subfigure}[b]{0.24\textwidth}
                    \centering
                    \includegraphics[width=\textwidth]{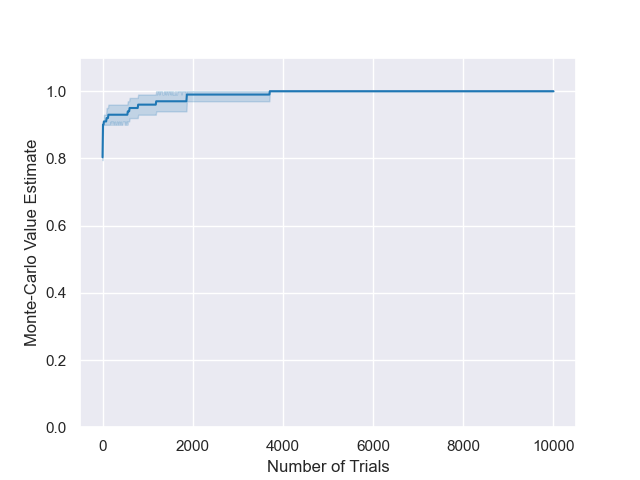}
                    \caption*{$\alpha=0.2,\epsilon=10$}
                \end{subfigure}
                
                \begin{subfigure}[b]{0.24\textwidth}
                    \centering
                    \includegraphics[width=\textwidth]{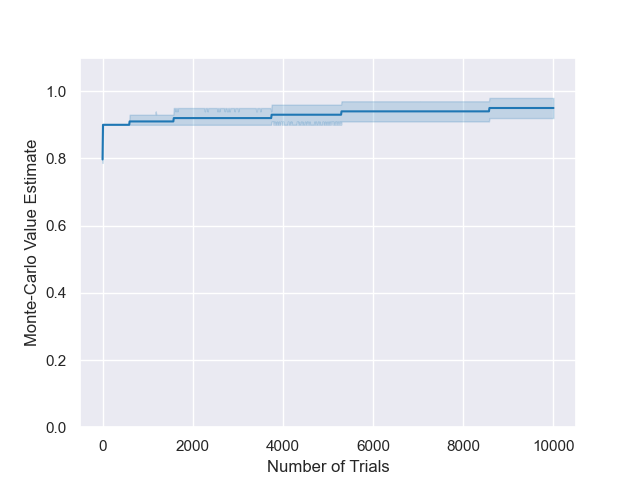}
                    \caption*{$\alpha=0.15,\epsilon=0.01$}
                \end{subfigure}
                \begin{subfigure}[b]{0.24\textwidth}
                    \centering
                    \includegraphics[width=\textwidth]{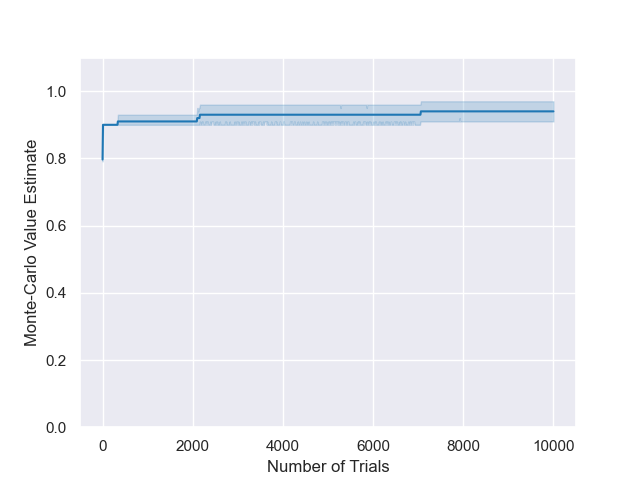}
                    \caption*{$\alpha=0.15,\epsilon=0.1$}
                \end{subfigure}
                \begin{subfigure}[b]{0.24\textwidth}
                    \centering
                    \includegraphics[width=\textwidth]{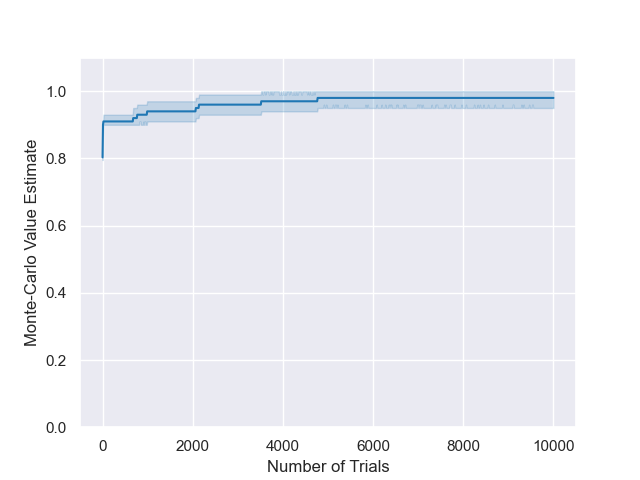}
                    \caption*{$\alpha=0.15,\epsilon=1$}
                \end{subfigure}
                \begin{subfigure}[b]{0.24\textwidth}
                    \centering
                    \includegraphics[width=\textwidth]{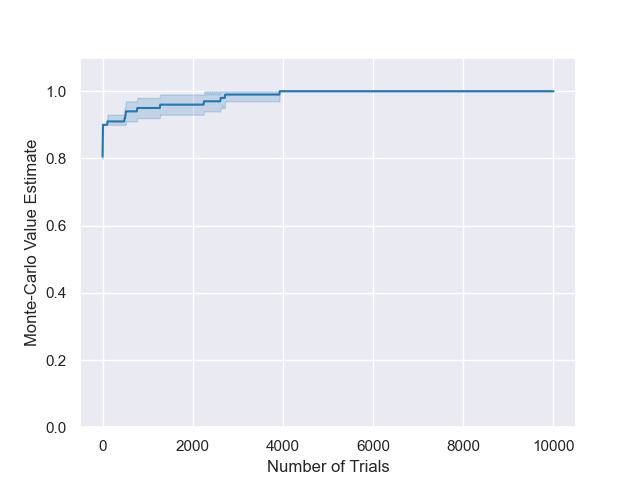}
                    \caption*{$\alpha=0.15,\epsilon=10$}
                \end{subfigure}
                
                \begin{subfigure}[b]{0.24\textwidth}
                    \centering
                    \includegraphics[width=\textwidth]{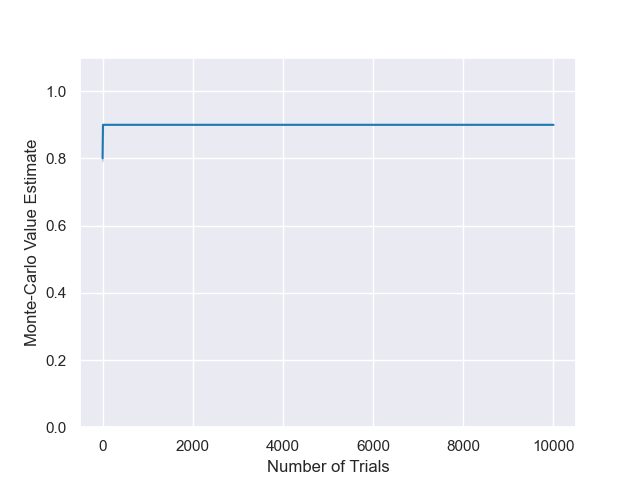}
                    \caption*{$\alpha=0.1,\epsilon=0.01$}
                \end{subfigure}
                \begin{subfigure}[b]{0.24\textwidth}
                    \centering
                    \includegraphics[width=\textwidth]{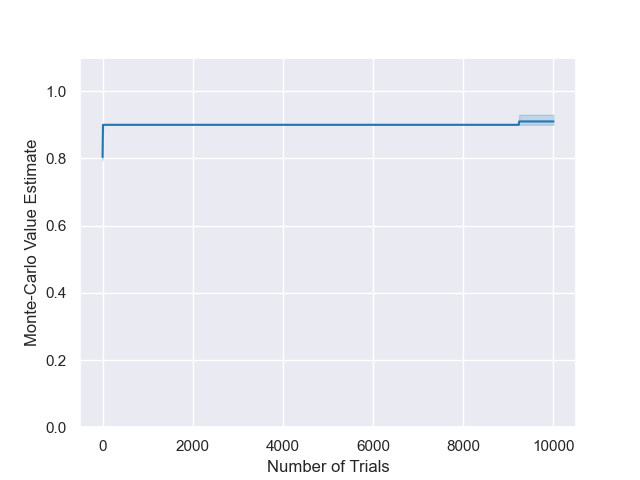}
                    \caption*{$\alpha=0.1,\epsilon=0.1$}
                \end{subfigure}
                \begin{subfigure}[b]{0.24\textwidth}
                    \centering
                    \includegraphics[width=\textwidth]{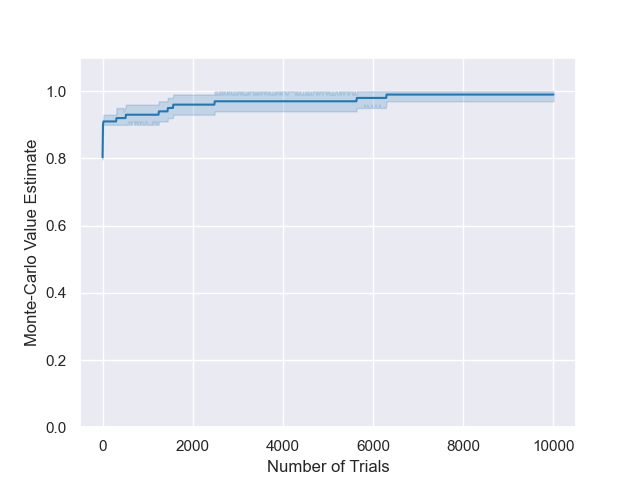}
                    \caption*{$\alpha=0.1,\epsilon=1$}
                \end{subfigure}
                \begin{subfigure}[b]{0.24\textwidth}
                    \centering
                    \includegraphics[width=\textwidth]{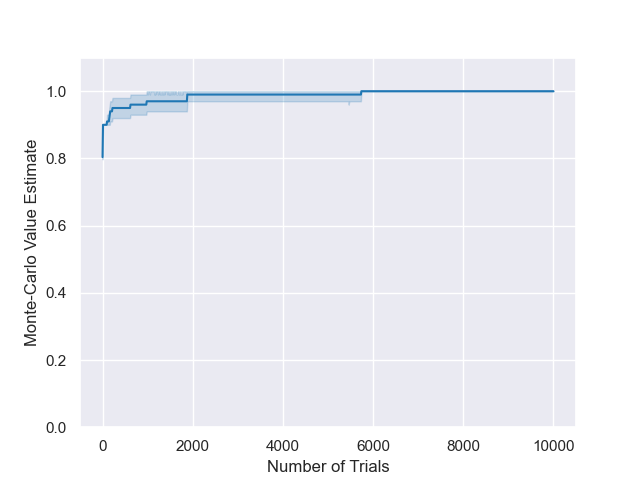}
                    \caption*{$\alpha=0.1,\epsilon=10$}
                \end{subfigure}
                
                \begin{subfigure}[b]{0.24\textwidth}
                    \centering
                    \includegraphics[width=\textwidth]{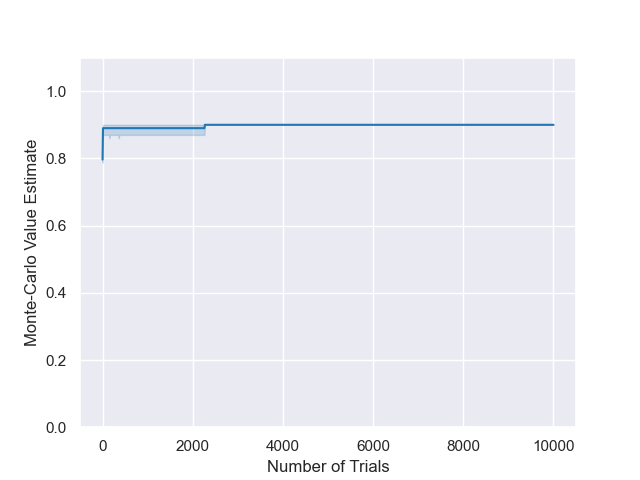}
                    \caption*{$\alpha=0.05,\epsilon=0.01$}
                \end{subfigure}
                \begin{subfigure}[b]{0.24\textwidth}
                    \centering
                    \includegraphics[width=\textwidth]{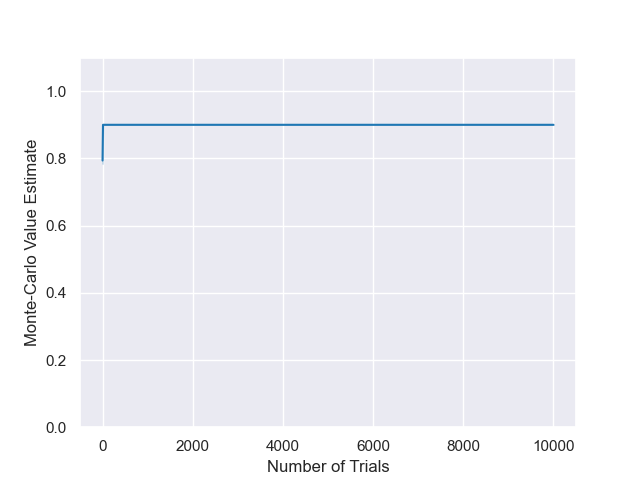}
                    \caption*{$\alpha=0.05,\epsilon=0.1$}
                \end{subfigure}
                \begin{subfigure}[b]{0.24\textwidth}
                    \centering
                    \includegraphics[width=\textwidth]{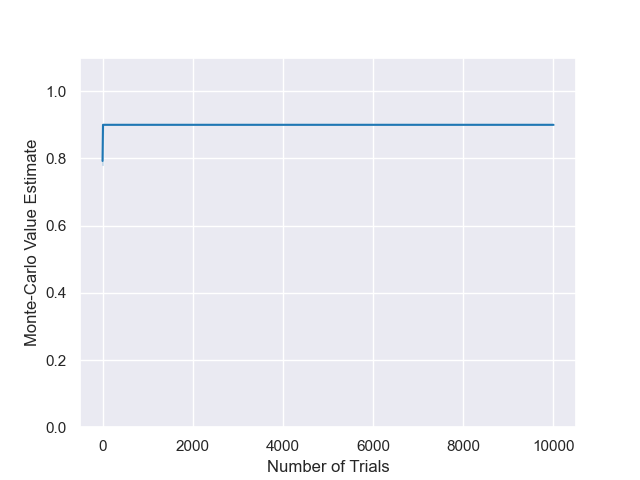}
                    \caption*{$\alpha=0.05,\epsilon=1$}
                \end{subfigure}
                \begin{subfigure}[b]{0.24\textwidth}
                    \centering
                    \includegraphics[width=\textwidth]{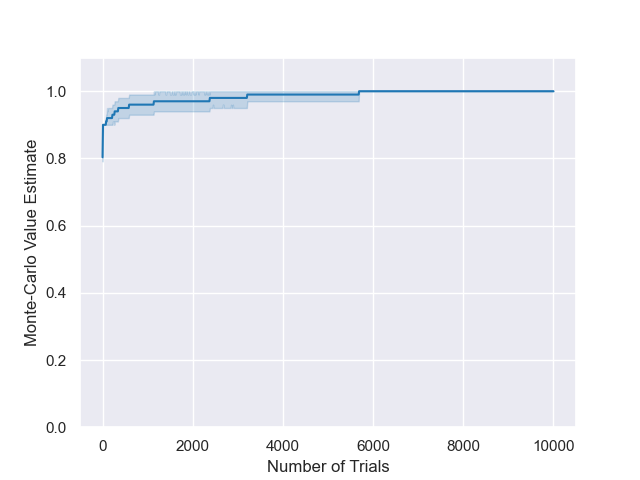}
                    \caption*{$\alpha=0.05,\epsilon=10$}
                \end{subfigure}
                
                \begin{subfigure}[b]{0.24\textwidth}
                    \centering
                    \includegraphics[width=\textwidth]{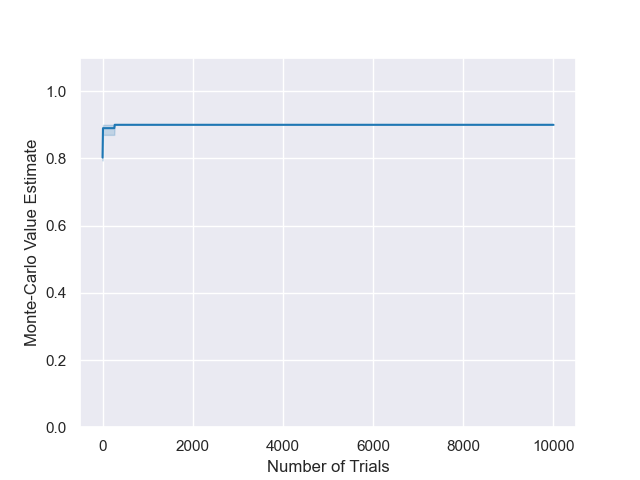}
                    \caption*{$\alpha=0.01,\epsilon=0.01$}
                \end{subfigure}
                \begin{subfigure}[b]{0.24\textwidth}
                    \centering
                    \includegraphics[width=\textwidth]{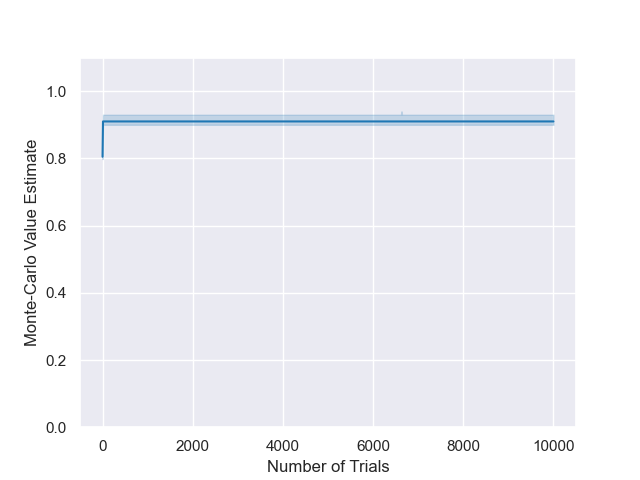}
                    \caption*{$\alpha=0.01,\epsilon=0.1$}
                \end{subfigure}
                \begin{subfigure}[b]{0.24\textwidth}
                    \centering
                    \includegraphics[width=\textwidth]{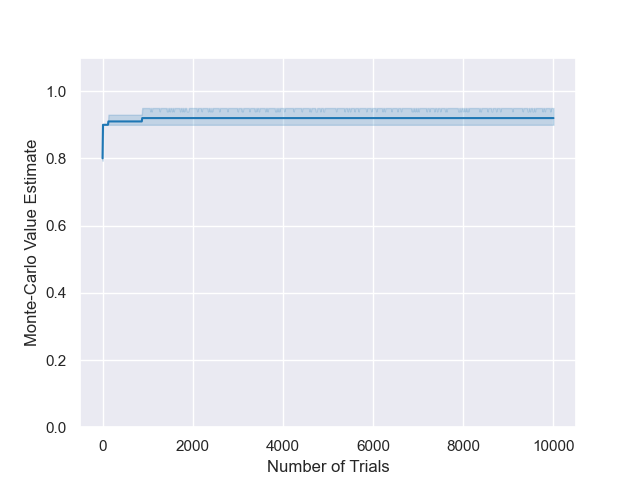}
                    \caption*{$\alpha=0.01,\epsilon=1$}
                \end{subfigure}
                \begin{subfigure}[b]{0.24\textwidth}
                    \centering
                    \includegraphics[width=\textwidth]{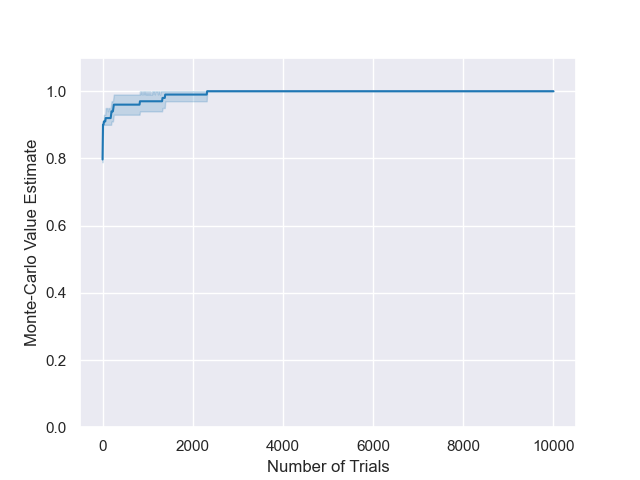}
                    \caption*{$\alpha=0.01,\epsilon=10$}
                \end{subfigure}
                
                \caption{Results for BTS on the 10-chain ($D=10$, $R_f=1.0$), for varying temperatures and exploration parameters.}
                \label{fig:bts_10chain_hps}
            \end{figure}

            \begin{figure}
                \centering
                
                \begin{subfigure}[b]{0.24\textwidth}
                    \centering
                    \includegraphics[width=\textwidth]{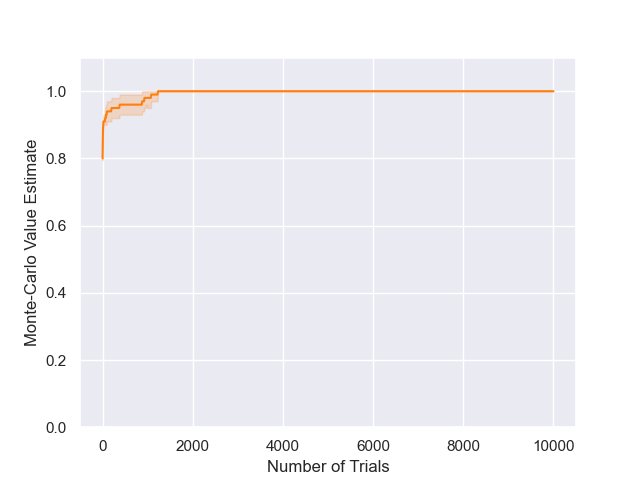}
                    \caption*{$\alpha=1,\epsilon=0.01$}
                \end{subfigure}
                \begin{subfigure}[b]{0.24\textwidth}
                    \centering
                    \includegraphics[width=\textwidth]{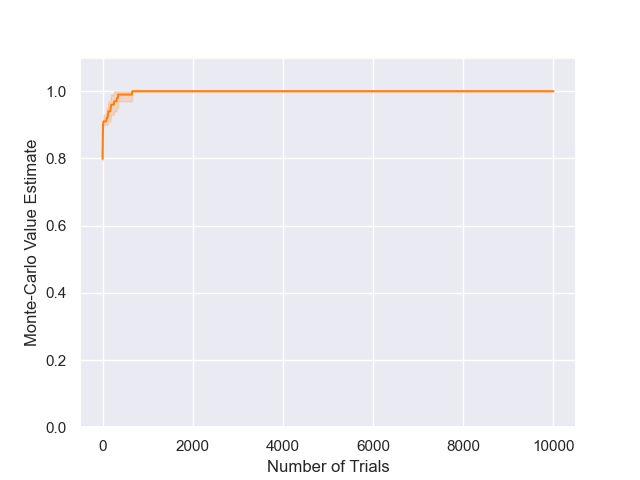}
                    \caption*{$\alpha=1,\epsilon=0.1$}
                \end{subfigure}
                \begin{subfigure}[b]{0.24\textwidth}
                    \centering
                    \includegraphics[width=\textwidth]{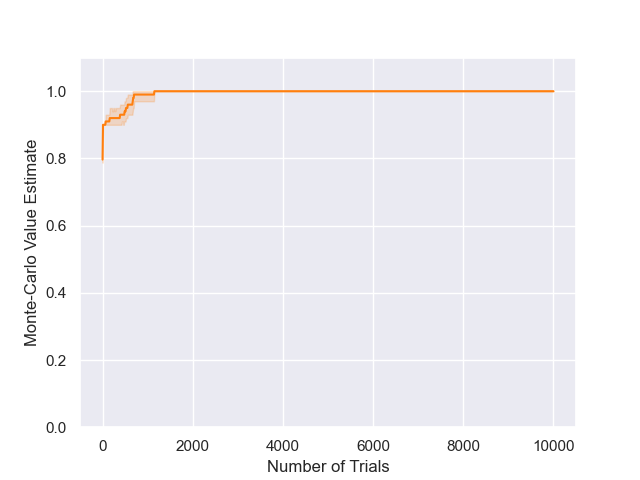}
                    \caption*{$\alpha=1,\epsilon=1$}
                \end{subfigure}
                \begin{subfigure}[b]{0.24\textwidth}
                    \centering
                    \includegraphics[width=\textwidth]{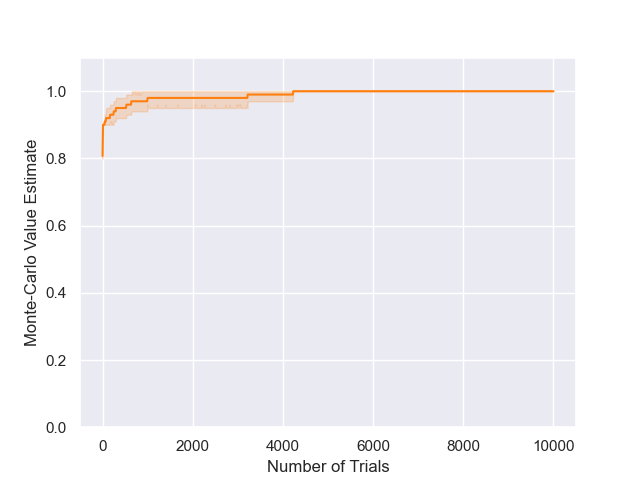}
                    \caption*{$\alpha=1,\epsilon=10$}
                \end{subfigure}
                
                \begin{subfigure}[b]{0.24\textwidth}
                    \centering
                    \includegraphics[width=\textwidth]{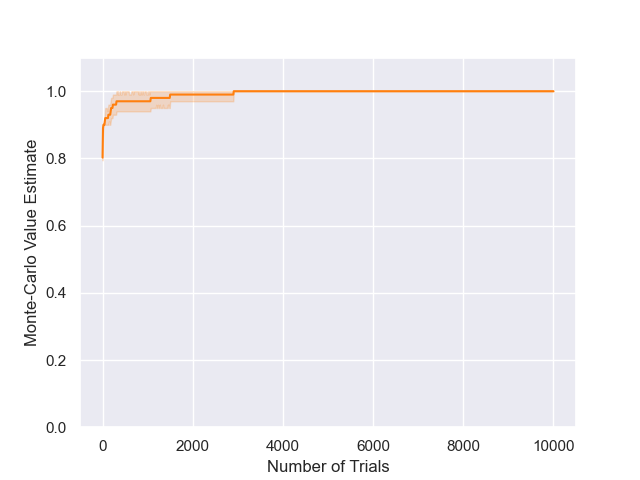}
                    \caption*{$\alpha=0.5,\epsilon=0.01$}
                \end{subfigure}
                \begin{subfigure}[b]{0.24\textwidth}
                    \centering
                    \includegraphics[width=\textwidth]{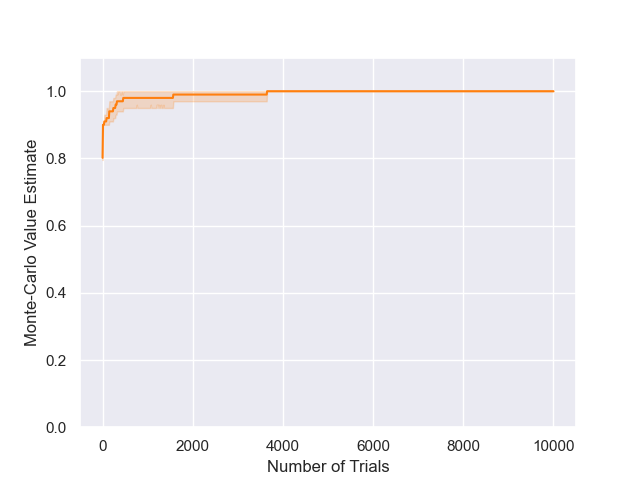}
                    \caption*{$\alpha=0.5,\epsilon=0.1$}
                \end{subfigure}
                \begin{subfigure}[b]{0.24\textwidth}
                    \centering
                    \includegraphics[width=\textwidth]{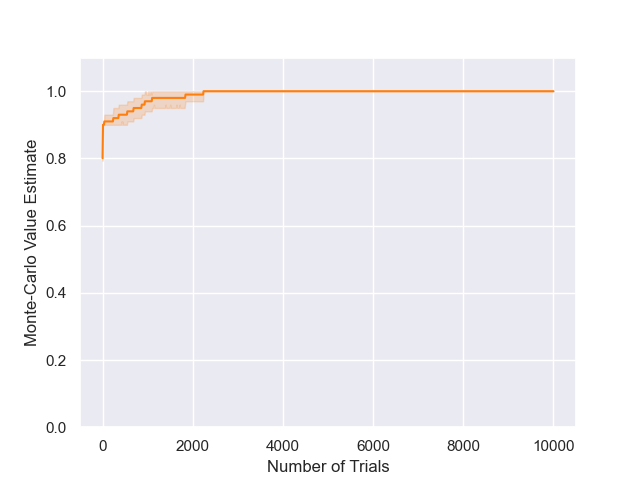}
                    \caption*{$\alpha=0.5,\epsilon=1$}
                \end{subfigure}
                \begin{subfigure}[b]{0.24\textwidth}
                    \centering
                    \includegraphics[width=\textwidth]{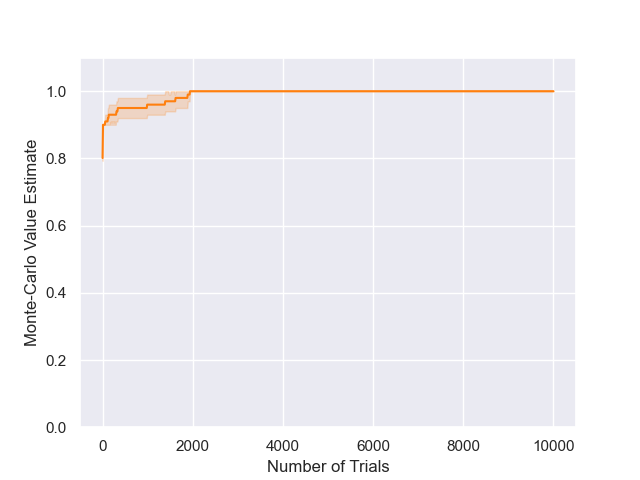}
                    \caption*{$\alpha=0.5,\epsilon=10$}
                \end{subfigure}
                
                \begin{subfigure}[b]{0.24\textwidth}
                    \centering
                    \includegraphics[width=\textwidth]{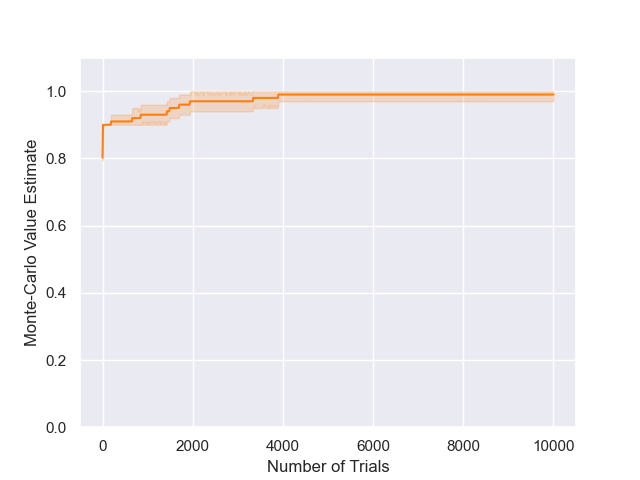}
                    \caption*{$\alpha=0.2,\epsilon=0.01$}
                \end{subfigure}
                \begin{subfigure}[b]{0.24\textwidth}
                    \centering
                    \includegraphics[width=\textwidth]{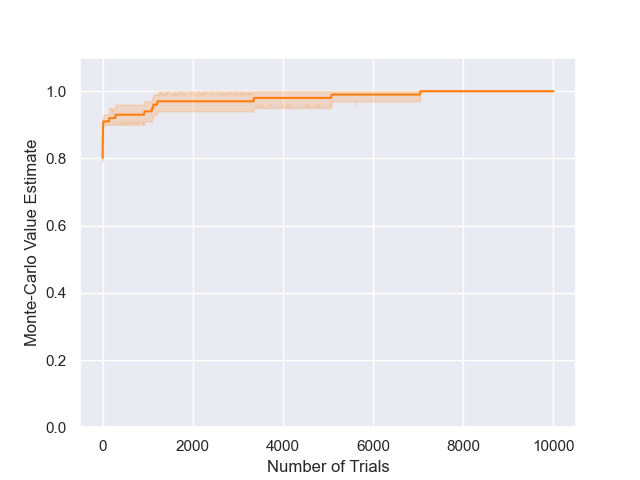}
                    \caption*{$\alpha=0.2,\epsilon=0.1$}
                \end{subfigure}
                \begin{subfigure}[b]{0.24\textwidth}
                    \centering
                    \includegraphics[width=\textwidth]{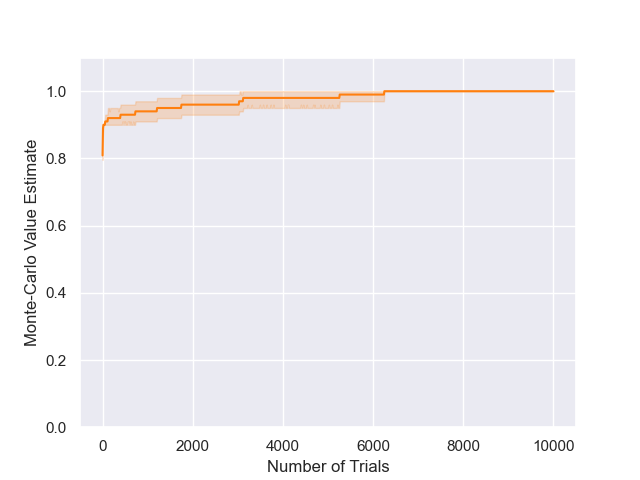}
                    \caption*{$\alpha=0.2,\epsilon=1$}
                \end{subfigure}
                \begin{subfigure}[b]{0.24\textwidth}
                    \centering
                    \includegraphics[width=\textwidth]{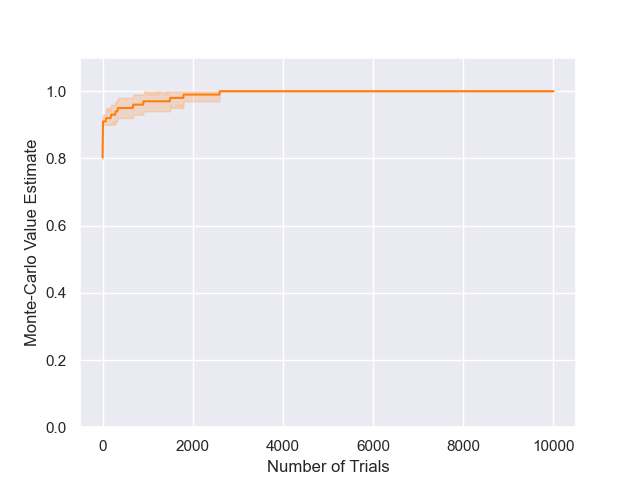}
                    \caption*{$\alpha=0.2,\epsilon=10$}
                \end{subfigure}
                
                \begin{subfigure}[b]{0.24\textwidth}
                    \centering
                    \includegraphics[width=\textwidth]{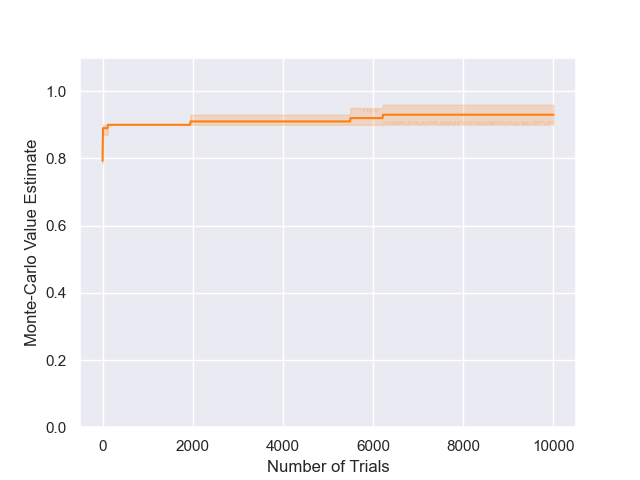}
                    \caption*{$\alpha=0.15,\epsilon=0.01$}
                \end{subfigure}
                \begin{subfigure}[b]{0.24\textwidth}
                    \centering
                    \includegraphics[width=\textwidth]{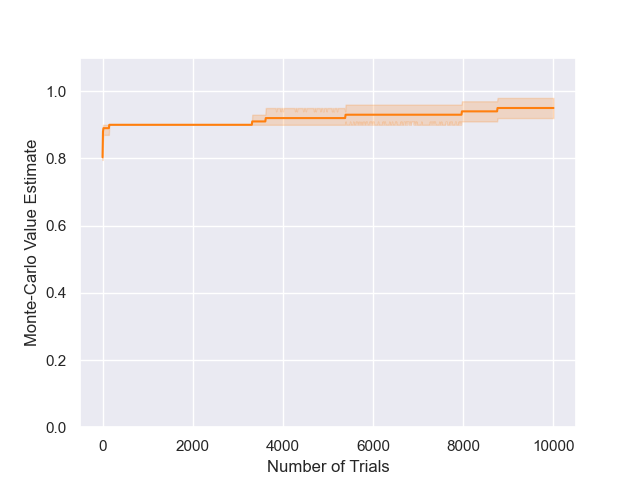}
                    \caption*{$\alpha=0.15,\epsilon=0.1$}
                \end{subfigure}
                \begin{subfigure}[b]{0.24\textwidth}
                    \centering
                    \includegraphics[width=\textwidth]{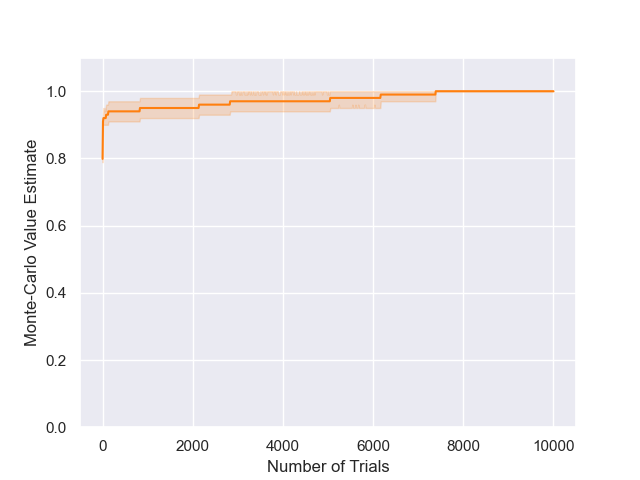}
                    \caption*{$\alpha=0.15,\epsilon=1$}
                \end{subfigure}
                \begin{subfigure}[b]{0.24\textwidth}
                    \centering
                    \includegraphics[width=\textwidth]{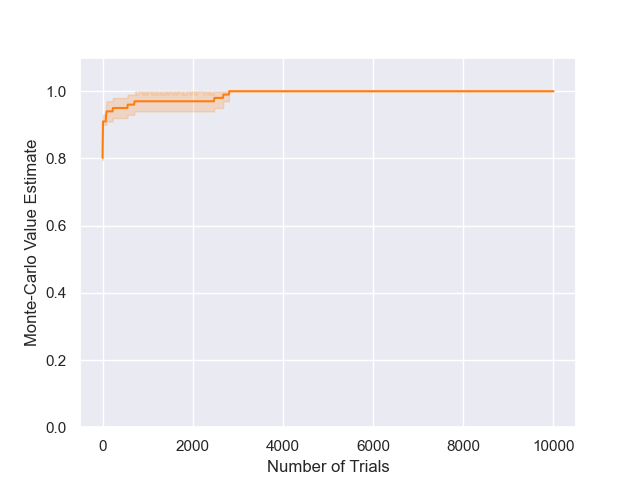}
                    \caption*{$\alpha=0.15,\epsilon=10$}
                \end{subfigure}
                
                \begin{subfigure}[b]{0.24\textwidth}
                    \centering
                    \includegraphics[width=\textwidth]{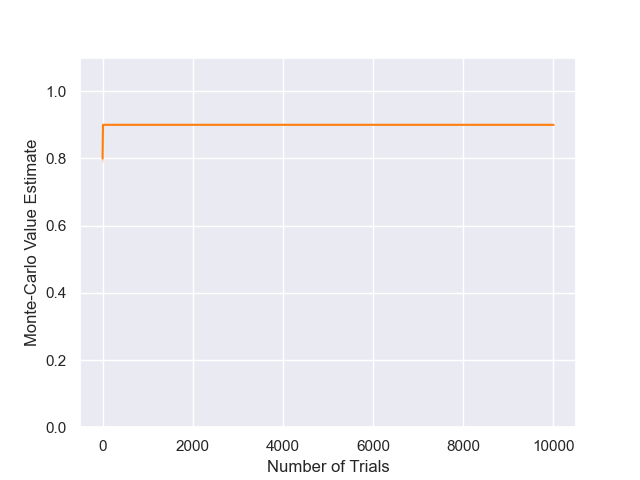}
                    \caption*{$\alpha=0.1,\epsilon=0.01$}
                \end{subfigure}
                \begin{subfigure}[b]{0.24\textwidth}
                    \centering
                    \includegraphics[width=\textwidth]{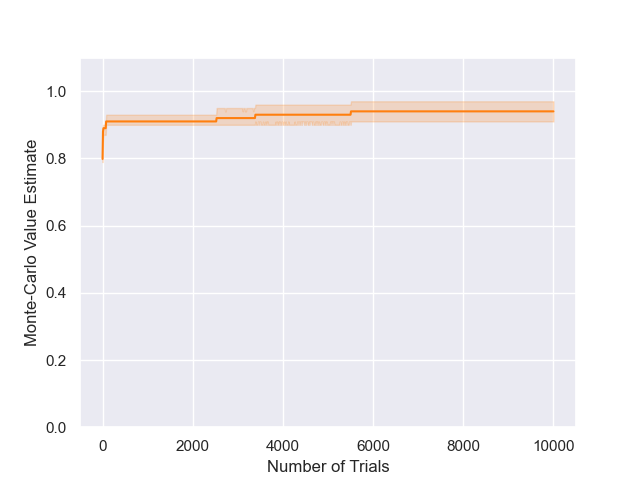}
                    \caption*{$\alpha=0.1,\epsilon=0.1$}
                \end{subfigure}
                \begin{subfigure}[b]{0.24\textwidth}
                    \centering
                    \includegraphics[width=\textwidth]{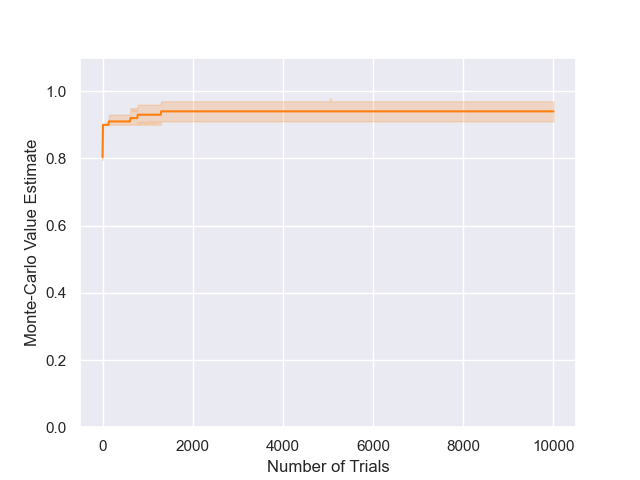}
                    \caption*{$\alpha=0.1,\epsilon=1$}
                \end{subfigure}
                \begin{subfigure}[b]{0.24\textwidth}
                    \centering
                    \includegraphics[width=\textwidth]{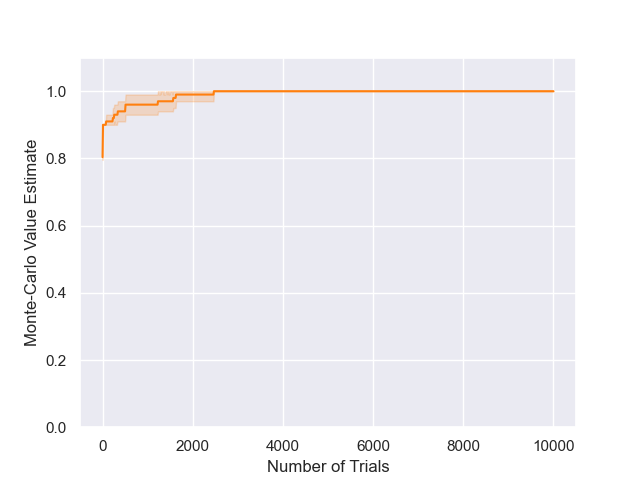}
                    \caption*{$\alpha=0.1,\epsilon=10$}
                \end{subfigure}
                
                \begin{subfigure}[b]{0.24\textwidth}
                    \centering
                    \includegraphics[width=\textwidth]{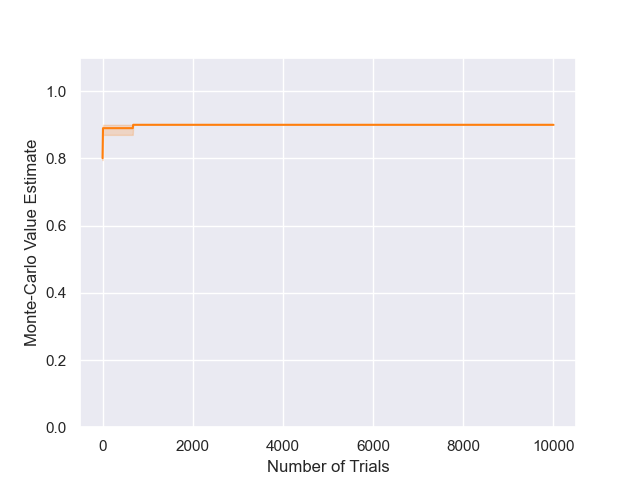}
                    \caption*{$\alpha=0.05,\epsilon=0.01$}
                \end{subfigure}
                \begin{subfigure}[b]{0.24\textwidth}
                    \centering
                    \includegraphics[width=\textwidth]{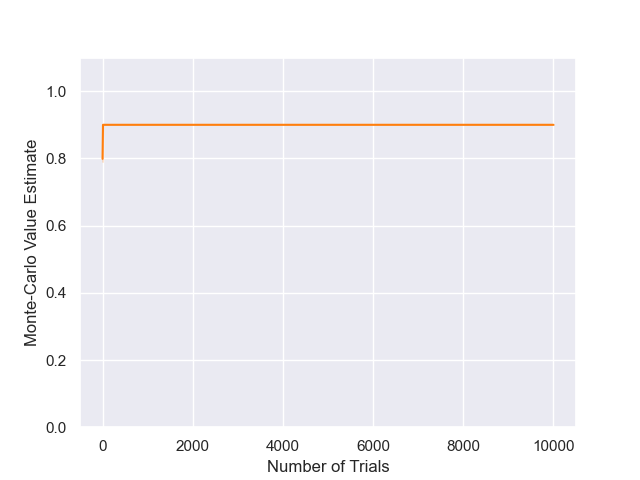}
                    \caption*{$\alpha=0.05,\epsilon=0.1$}
                \end{subfigure}
                \begin{subfigure}[b]{0.24\textwidth}
                    \centering
                    \includegraphics[width=\textwidth]{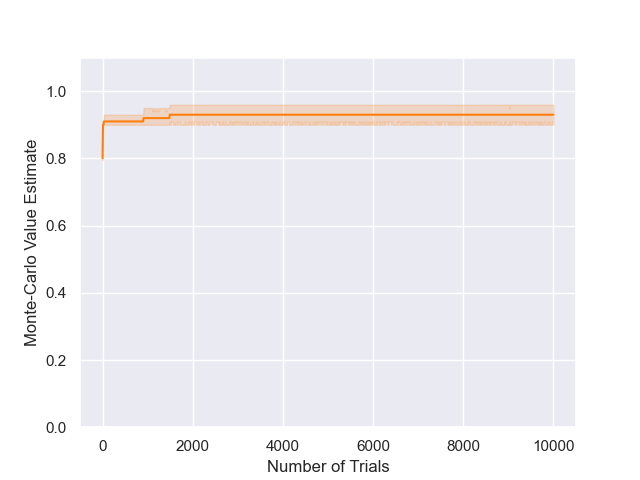}
                    \caption*{$\alpha=0.05,\epsilon=1$}
                \end{subfigure}
                \begin{subfigure}[b]{0.24\textwidth}
                    \centering
                    \includegraphics[width=\textwidth]{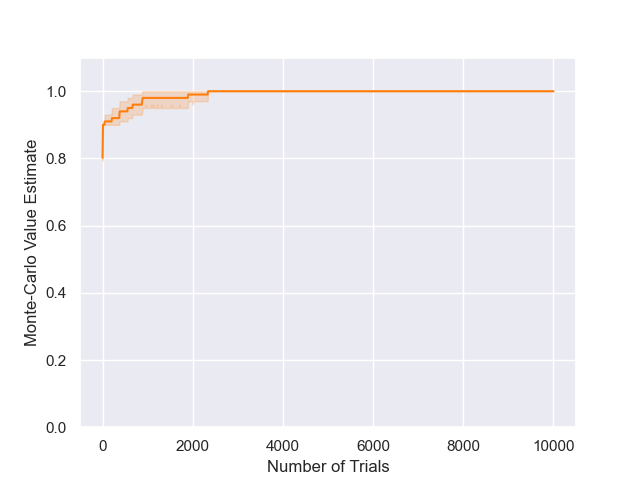}
                    \caption*{$\alpha=0.05,\epsilon=10$}
                \end{subfigure}
                
                \begin{subfigure}[b]{0.24\textwidth}
                    \centering
                    \includegraphics[width=\textwidth]{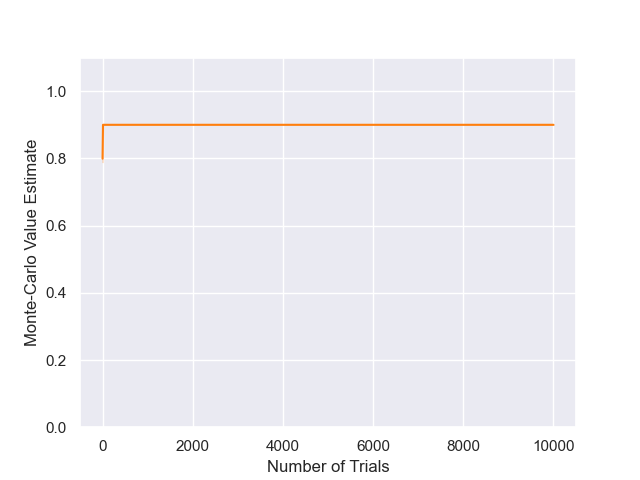}
                    \caption*{$\alpha=0.01,\epsilon=0.01$}
                \end{subfigure}
                \begin{subfigure}[b]{0.24\textwidth}
                    \centering
                    \includegraphics[width=\textwidth]{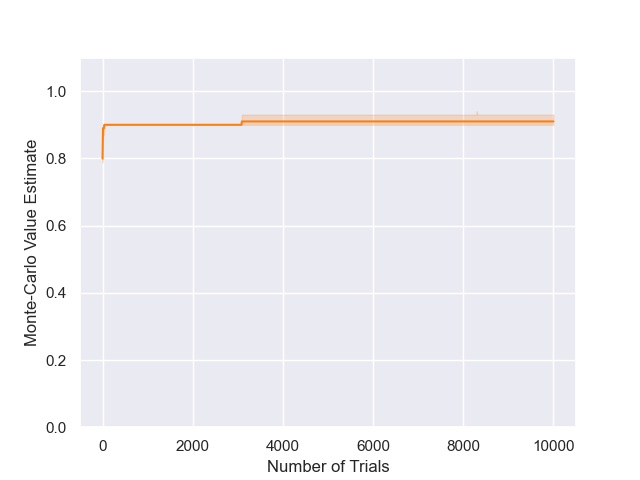}
                    \caption*{$\alpha=0.01,\epsilon=0.1$}
                \end{subfigure}
                \begin{subfigure}[b]{0.24\textwidth}
                    \centering
                    \includegraphics[width=\textwidth]{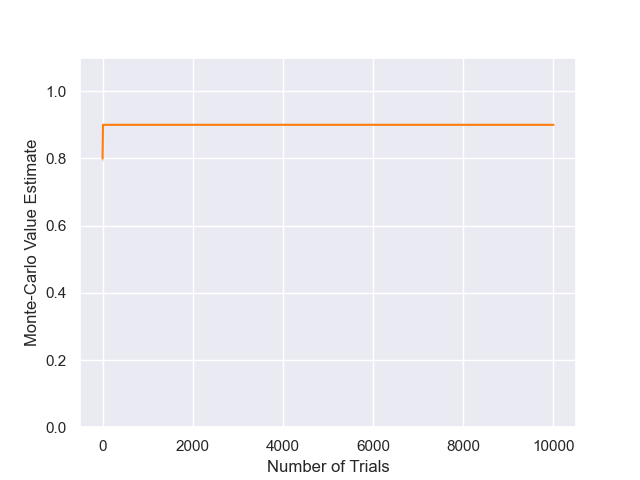}
                    \caption*{$\alpha=0.01,\epsilon=1$}
                \end{subfigure}
                \begin{subfigure}[b]{0.24\textwidth}
                    \centering
                    \includegraphics[width=\textwidth]{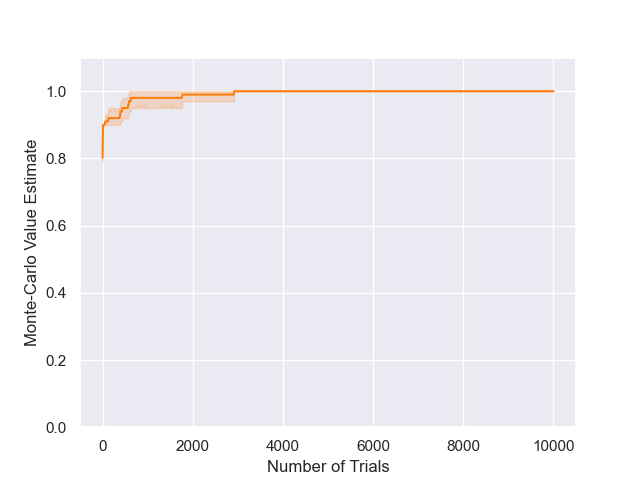}
                    \caption*{$\alpha=0.01,\epsilon=10$}
                \end{subfigure}
                
                \caption{Results for DENTS on the 10-chain ($D=10$, $R_f=1.0$), for varying temperatures and exploration parameters. The decay function was set to $\beta(m)=\alpha/\log(e+m)$.}
                \label{fig:dents_10chain_hps}
            \end{figure}

            \begin{figure}
                \centering
                
                \begin{subfigure}[b]{0.32\textwidth}
                    \centering
                    \includegraphics[width=\textwidth]{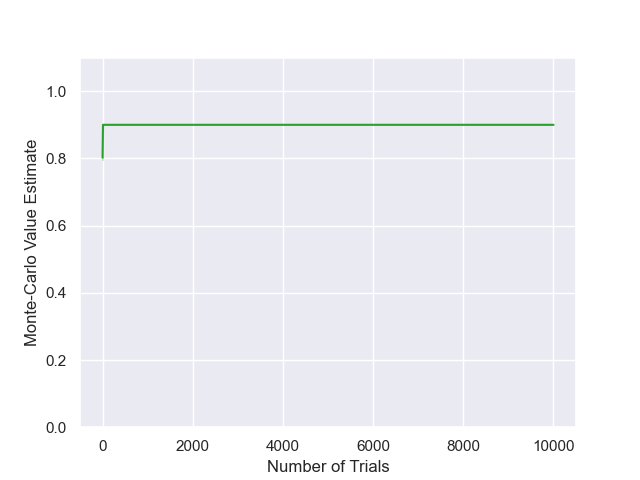}
                    \caption*{Bias set using \cite{prst}.}
                \end{subfigure}
                \begin{subfigure}[b]{0.32\textwidth}
                    \centering
                    \includegraphics[width=\textwidth]{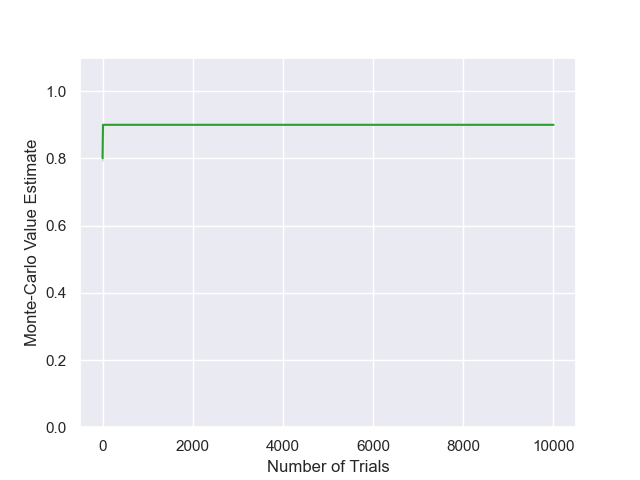}
                    \caption*{Bias $=0.1$.}
                \end{subfigure}
                \begin{subfigure}[b]{0.32\textwidth}
                    \centering
                    \includegraphics[width=\textwidth]{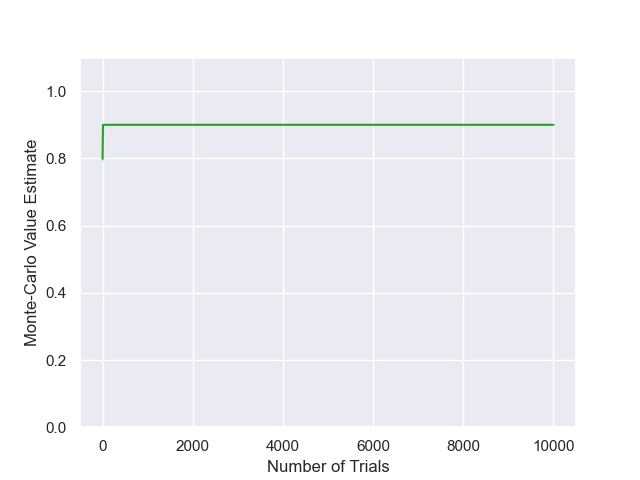}
                    \caption*{Bias $=1$.}
                \end{subfigure}
                
                \begin{subfigure}[b]{0.32\textwidth}
                    \centering
                    \includegraphics[width=\textwidth]{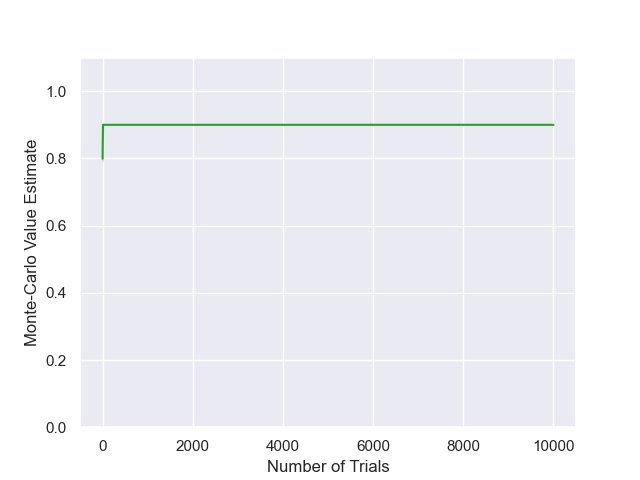}
                    \caption*{Bias $=2$.}
                \end{subfigure}
                \begin{subfigure}[b]{0.32\textwidth}
                    \centering
                    \includegraphics[width=\textwidth]{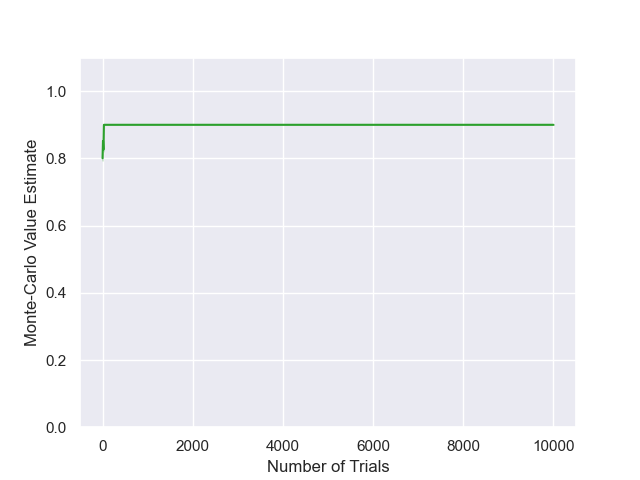}
                    \caption*{Bias $=10$.}
                \end{subfigure}
                \begin{subfigure}[b]{0.32\textwidth}
                    \centering
                    \includegraphics[width=\textwidth]{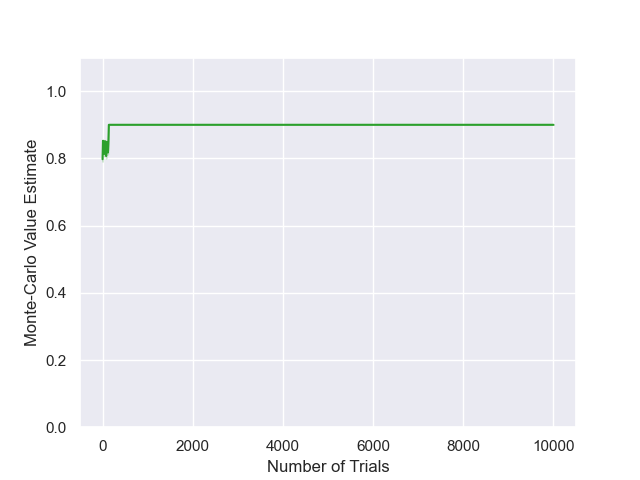}
                    \caption*{Bias $=100$.}
                \end{subfigure}
                
                \caption{Results for UCT on the modified 10-chain ($D=10$, $R_f=0.5$), for varying bias parameters.}
                \label{fig:uct_10chain_half_hps}
            \end{figure}

            \begin{figure}
                \centering
                
                \begin{subfigure}[b]{0.24\textwidth}
                    \centering
                    \includegraphics[width=\textwidth]{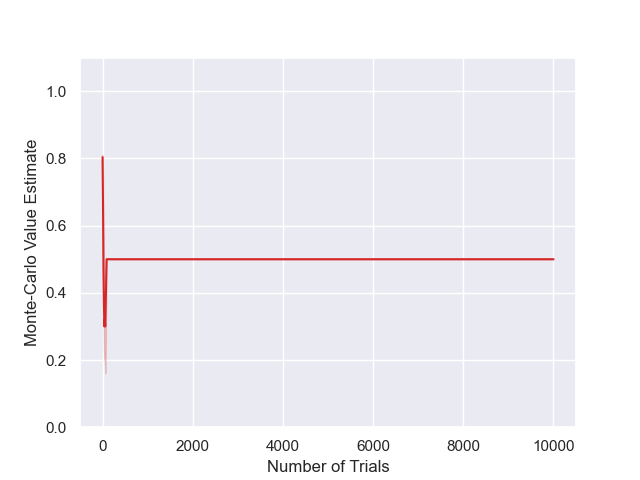}
                    \caption*{$\alpha=1,\epsilon=0.01$}
                \end{subfigure}
                \begin{subfigure}[b]{0.24\textwidth}
                    \centering
                    \includegraphics[width=\textwidth]{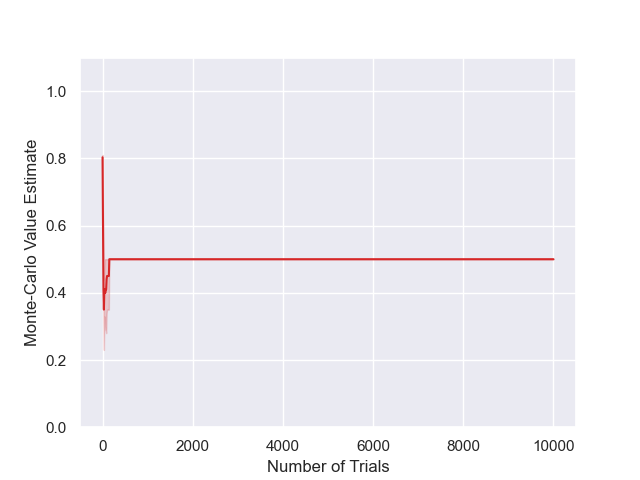}
                    \caption*{$\alpha=1,\epsilon=0.1$}
                \end{subfigure}
                \begin{subfigure}[b]{0.24\textwidth}
                    \centering
                    \includegraphics[width=\textwidth]{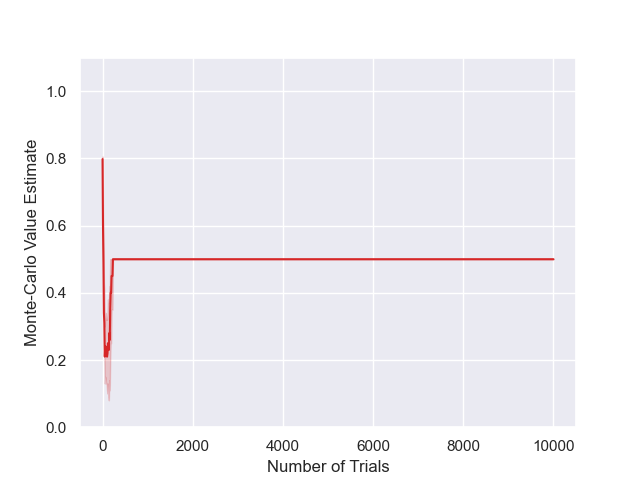}
                    \caption*{$\alpha=1,\epsilon=1$}
                \end{subfigure}
                \begin{subfigure}[b]{0.24\textwidth}
                    \centering
                    \includegraphics[width=\textwidth]{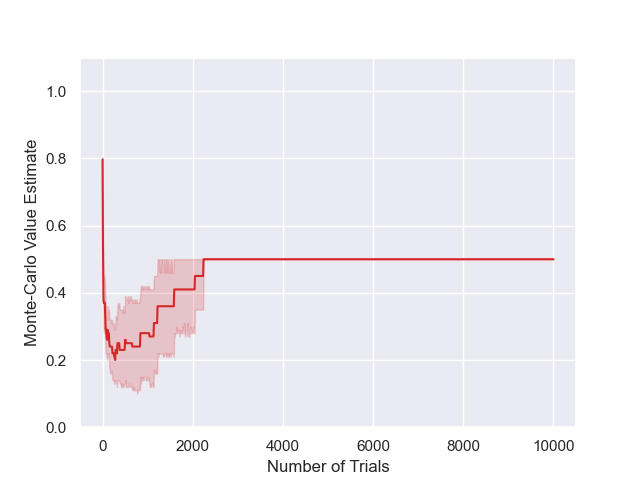}
                    \caption*{$\alpha=1,\epsilon=10$}
                \end{subfigure}
                
                \begin{subfigure}[b]{0.24\textwidth}
                    \centering
                    \includegraphics[width=\textwidth]{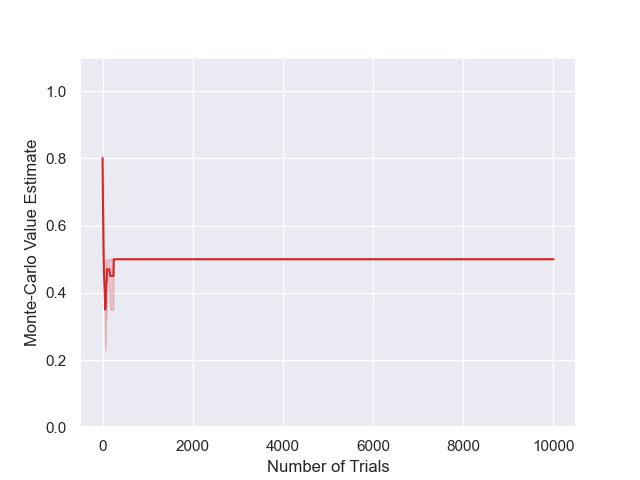}
                    \caption*{$\alpha=0.5,\epsilon=0.01$}
                \end{subfigure}
                \begin{subfigure}[b]{0.24\textwidth}
                    \centering
                    \includegraphics[width=\textwidth]{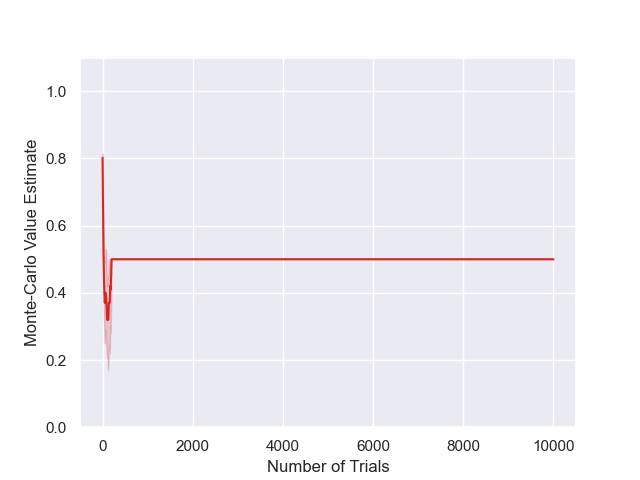}
                    \caption*{$\alpha=0.5,\epsilon=0.1$}
                \end{subfigure}
                \begin{subfigure}[b]{0.24\textwidth}
                    \centering
                    \includegraphics[width=\textwidth]{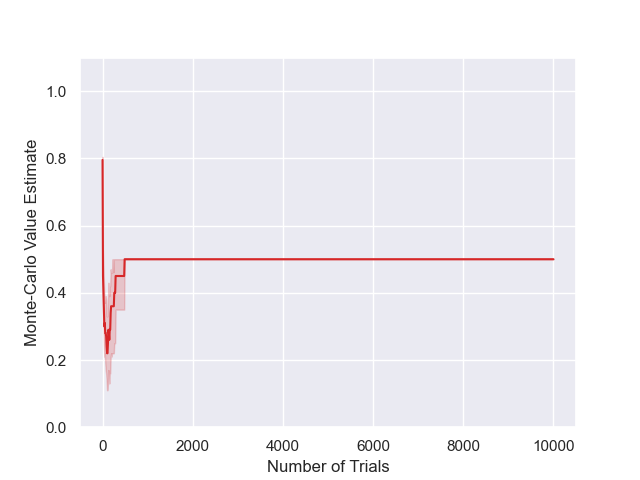}
                    \caption*{$\alpha=0.5,\epsilon=1$}
                \end{subfigure}
                \begin{subfigure}[b]{0.24\textwidth}
                    \centering
                    \includegraphics[width=\textwidth]{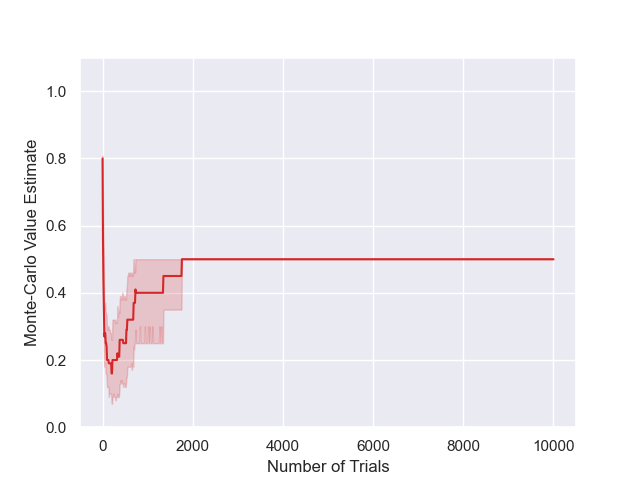}
                    \caption*{$\alpha=0.5,\epsilon=10$}
                \end{subfigure}
                
                \begin{subfigure}[b]{0.24\textwidth}
                    \centering
                    \includegraphics[width=\textwidth]{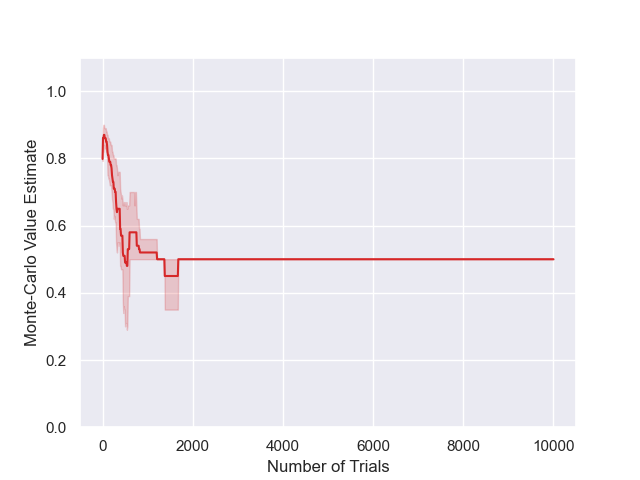}
                    \caption*{$\alpha=0.2,\epsilon=0.01$}
                \end{subfigure}
                \begin{subfigure}[b]{0.24\textwidth}
                    \centering
                    \includegraphics[width=\textwidth]{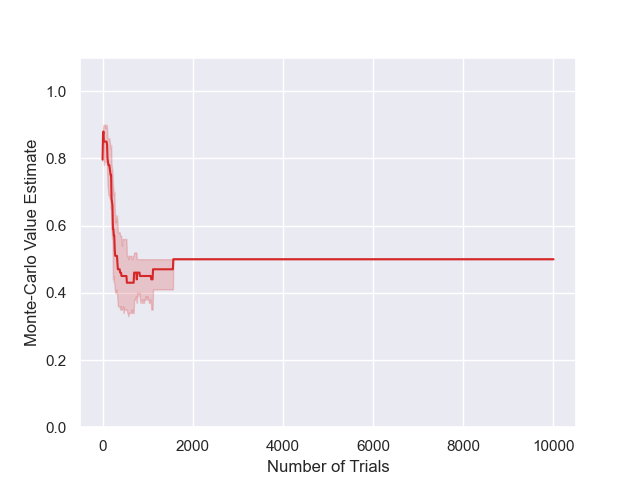}
                    \caption*{$\alpha=0.2,\epsilon=0.1$}
                \end{subfigure}
                \begin{subfigure}[b]{0.24\textwidth}
                    \centering
                    \includegraphics[width=\textwidth]{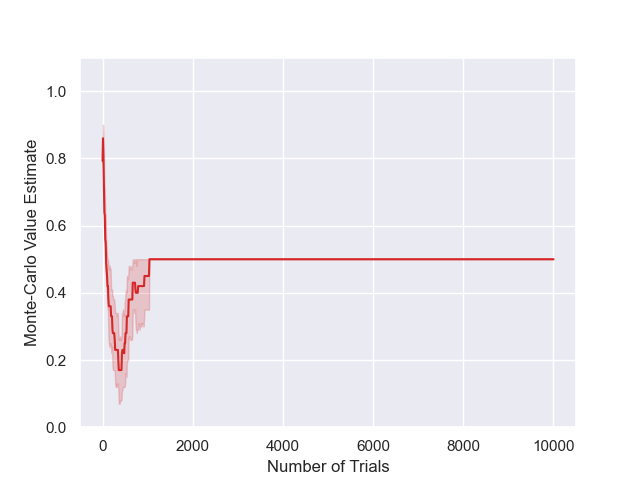}
                    \caption*{$\alpha=0.2,\epsilon=1$}
                \end{subfigure}
                \begin{subfigure}[b]{0.24\textwidth}
                    \centering
                    \includegraphics[width=\textwidth]{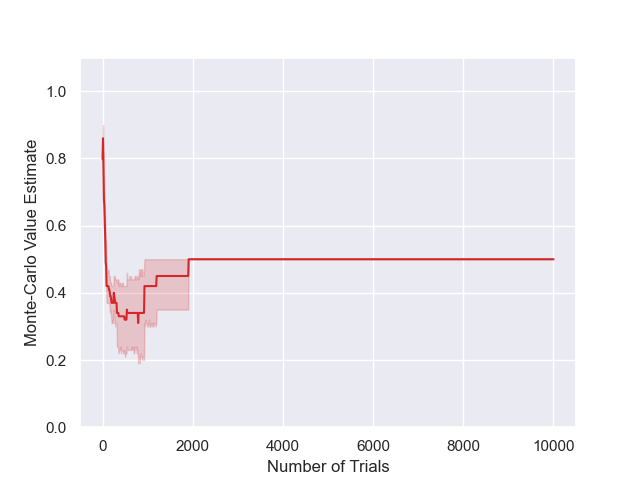}
                    \caption*{$\alpha=0.2,\epsilon=10$}
                \end{subfigure}
                
                \begin{subfigure}[b]{0.24\textwidth}
                    \centering
                    \includegraphics[width=\textwidth]{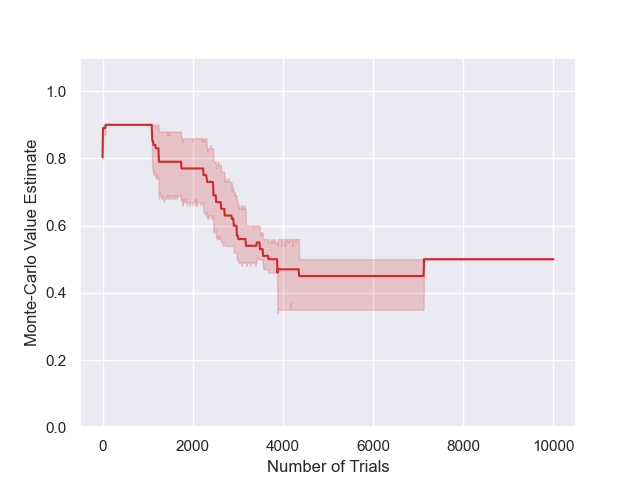}
                    \caption*{$\alpha=0.15,\epsilon=0.01$}
                \end{subfigure}
                \begin{subfigure}[b]{0.24\textwidth}
                    \centering
                    \includegraphics[width=\textwidth]{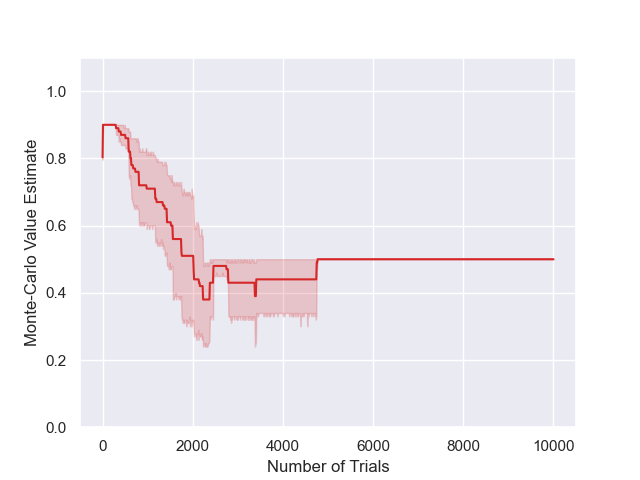}
                    \caption*{$\alpha=0.15,\epsilon=0.1$}
                \end{subfigure}
                \begin{subfigure}[b]{0.24\textwidth}
                    \centering
                    \includegraphics[width=\textwidth]{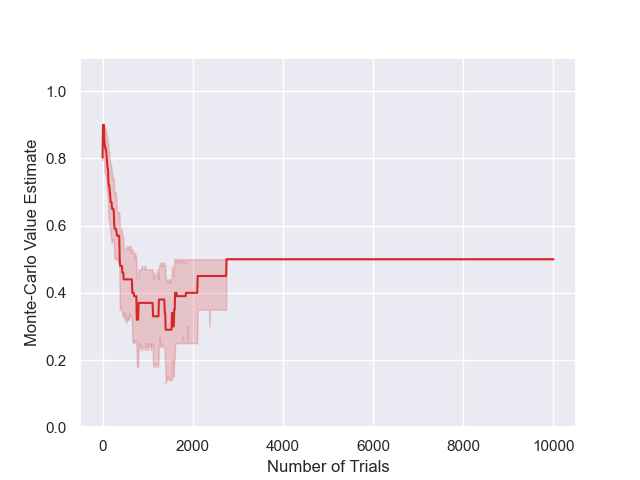}
                    \caption*{$\alpha=0.15,\epsilon=1$}
                \end{subfigure}
                \begin{subfigure}[b]{0.24\textwidth}
                    \centering
                    \includegraphics[width=\textwidth]{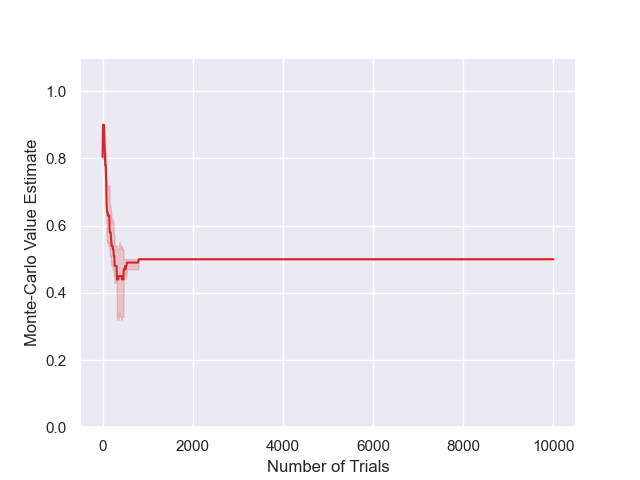}
                    \caption*{$\alpha=0.15,\epsilon=10$}
                \end{subfigure}
                
                \begin{subfigure}[b]{0.24\textwidth}
                    \centering
                    \includegraphics[width=\textwidth]{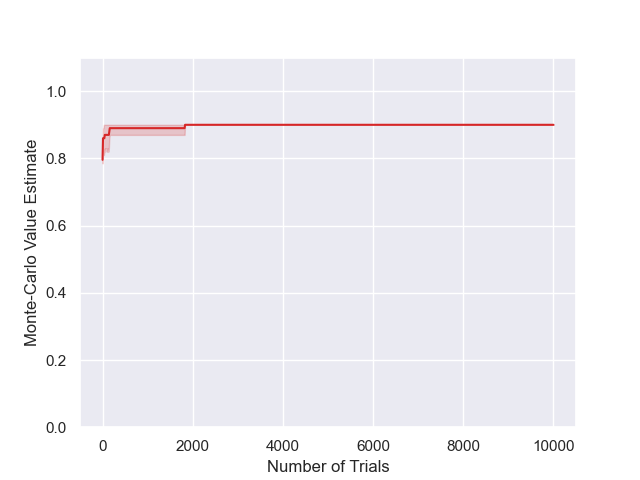}
                    \caption*{$\alpha=0.1,\epsilon=0.01$}
                \end{subfigure}
                \begin{subfigure}[b]{0.24\textwidth}
                    \centering
                    \includegraphics[width=\textwidth]{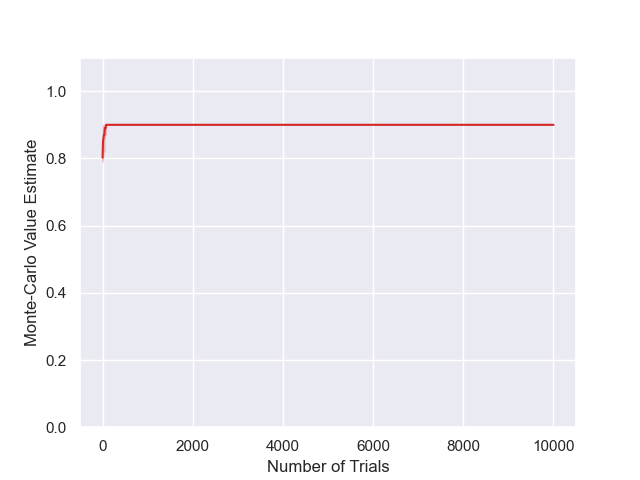}
                    \caption*{$\alpha=0.1,\epsilon=0.1$}
                \end{subfigure}
                \begin{subfigure}[b]{0.24\textwidth}
                    \centering
                    \includegraphics[width=\textwidth]{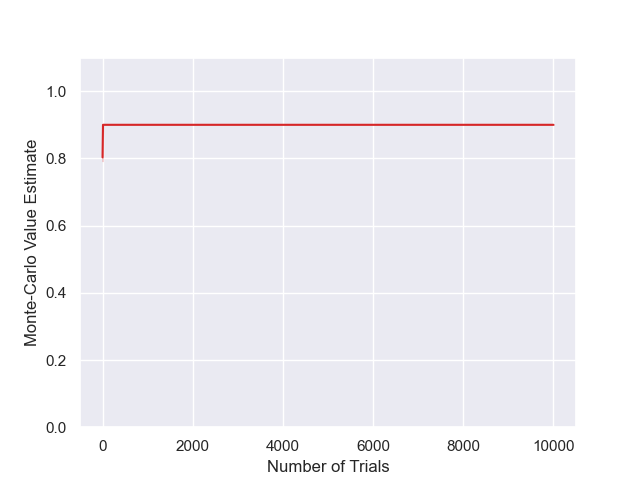}
                    \caption*{$\alpha=0.1,\epsilon=1$}
                \end{subfigure}
                \begin{subfigure}[b]{0.24\textwidth}
                    \centering
                    \includegraphics[width=\textwidth]{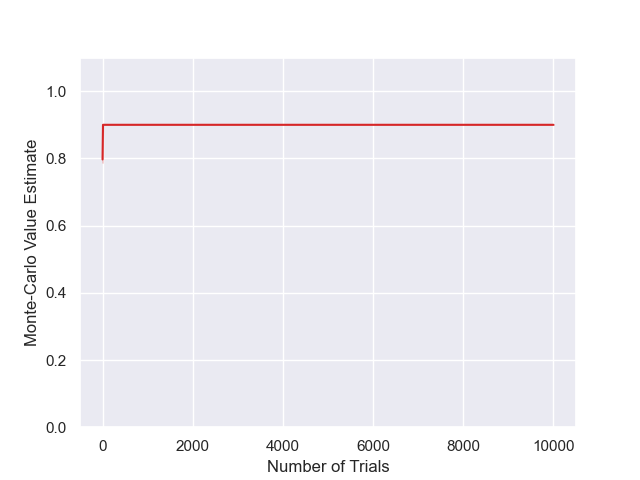}
                    \caption*{$\alpha=0.1,\epsilon=10$}
                \end{subfigure}
                
                \begin{subfigure}[b]{0.24\textwidth}
                    \centering
                    \includegraphics[width=\textwidth]{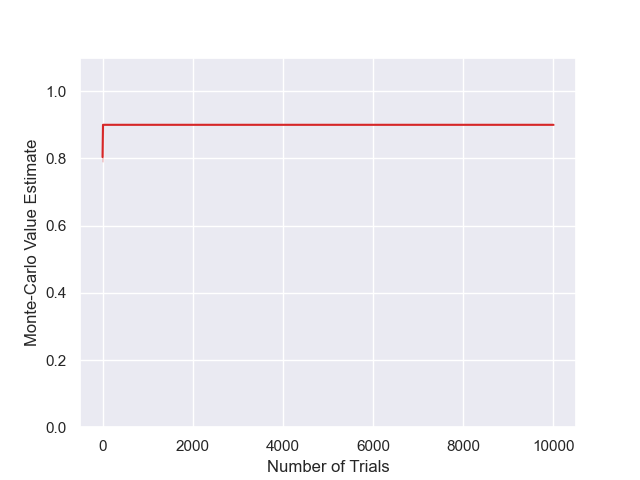}
                    \caption*{$\alpha=0.05,\epsilon=0.01$}
                \end{subfigure}
                \begin{subfigure}[b]{0.24\textwidth}
                    \centering
                    \includegraphics[width=\textwidth]{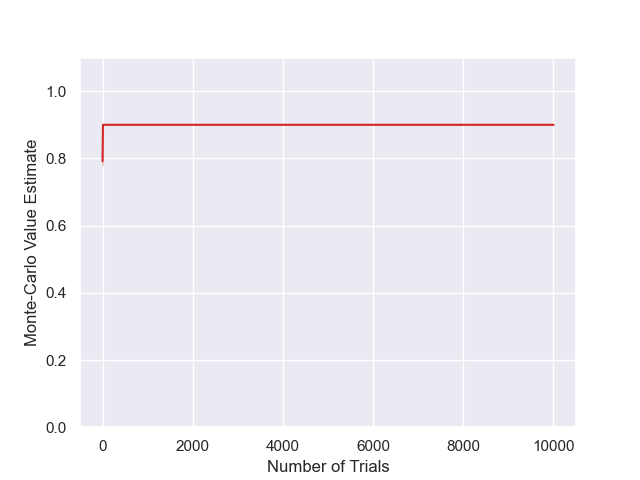}
                    \caption*{$\alpha=0.05,\epsilon=0.1$}
                \end{subfigure}
                \begin{subfigure}[b]{0.24\textwidth}
                    \centering
                    \includegraphics[width=\textwidth]{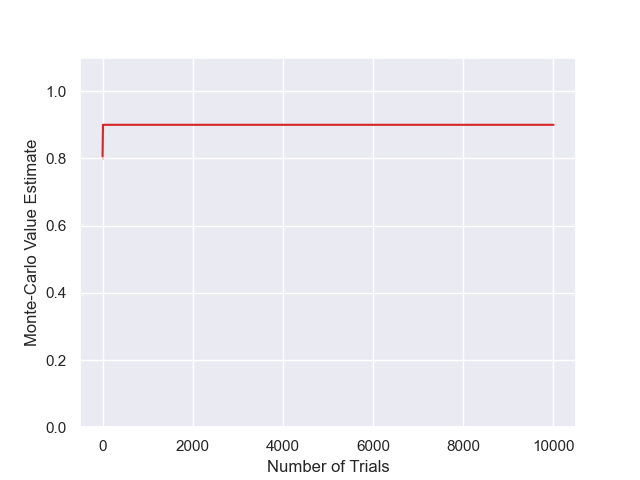}
                    \caption*{$\alpha=0.05,\epsilon=1$}
                \end{subfigure}
                \begin{subfigure}[b]{0.24\textwidth}
                    \centering
                    \includegraphics[width=\textwidth]{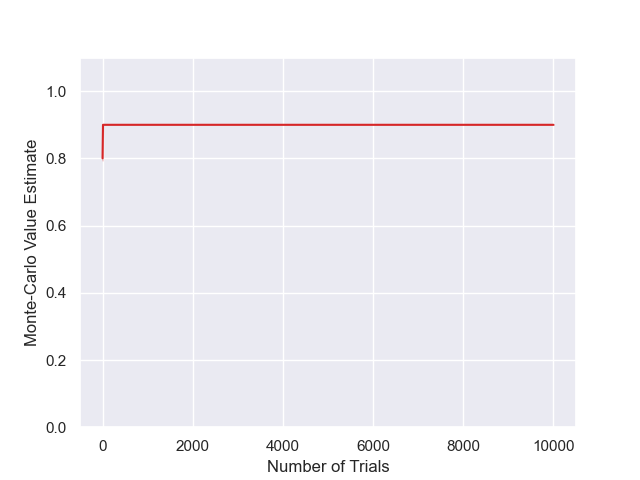}
                    \caption*{$\alpha=0.05,\epsilon=10$}
                \end{subfigure}
                
                \begin{subfigure}[b]{0.24\textwidth}
                    \centering
                    \includegraphics[width=\textwidth]{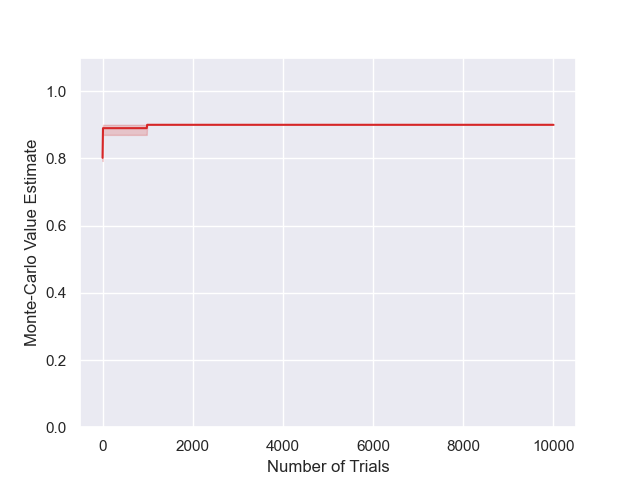}
                    \caption*{$\alpha=0.01,\epsilon=0.01$}
                \end{subfigure}
                \begin{subfigure}[b]{0.24\textwidth}
                    \centering
                    \includegraphics[width=\textwidth]{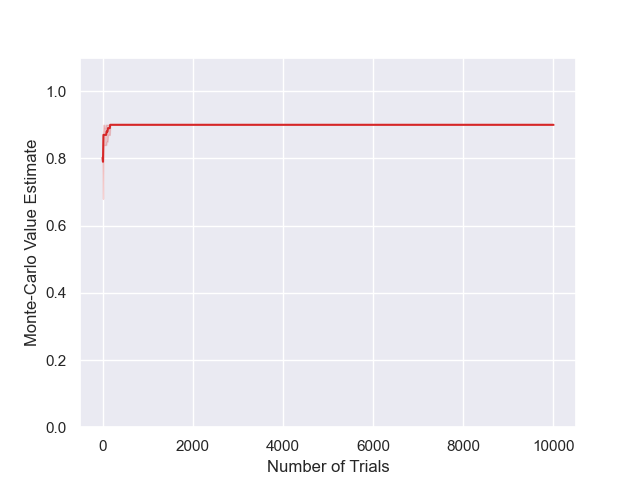}
                    \caption*{$\alpha=0.01,\epsilon=0.1$}
                \end{subfigure}
                \begin{subfigure}[b]{0.24\textwidth}
                    \centering
                    \includegraphics[width=\textwidth]{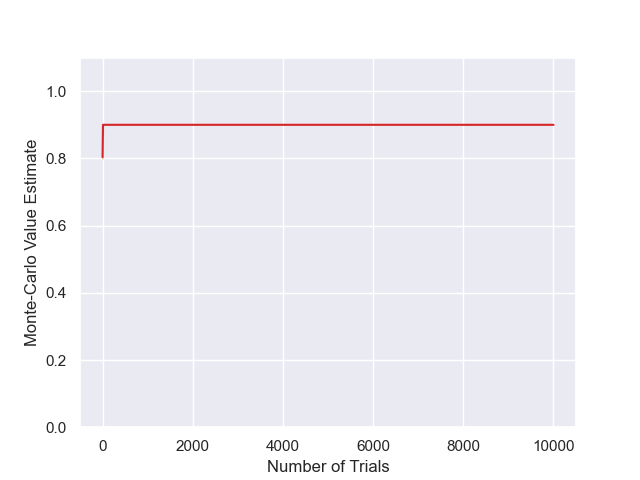}
                    \caption*{$\alpha=0.01,\epsilon=1$}
                \end{subfigure}
                \begin{subfigure}[b]{0.24\textwidth}
                    \centering
                    \includegraphics[width=\textwidth]{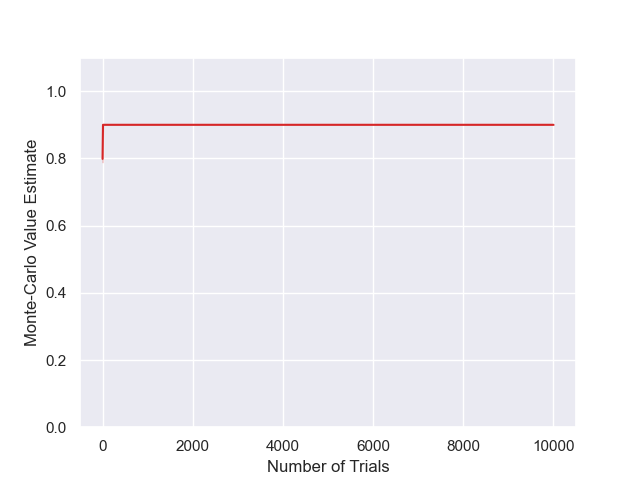}
                    \caption*{$\alpha=0.01,\epsilon=10$}
                \end{subfigure}
                
                \caption{Results for MENTS on the modified 10-chain ($D=10$, $R_f=0.5$), for varying temperatures and exploration parameters.}
                \label{fig:ments_10chain_half_hps}
            \end{figure}

            \begin{figure}
                \centering
                
                \begin{subfigure}[b]{0.24\textwidth}
                    \centering
                    \includegraphics[width=\textwidth]{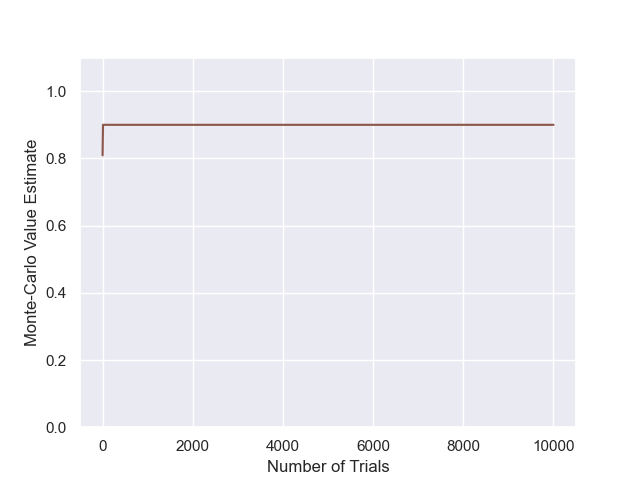}
                    \caption*{$\alpha=1000,\epsilon=0.01$}
                \end{subfigure}
                \begin{subfigure}[b]{0.24\textwidth}
                    \centering
                    \includegraphics[width=\textwidth]{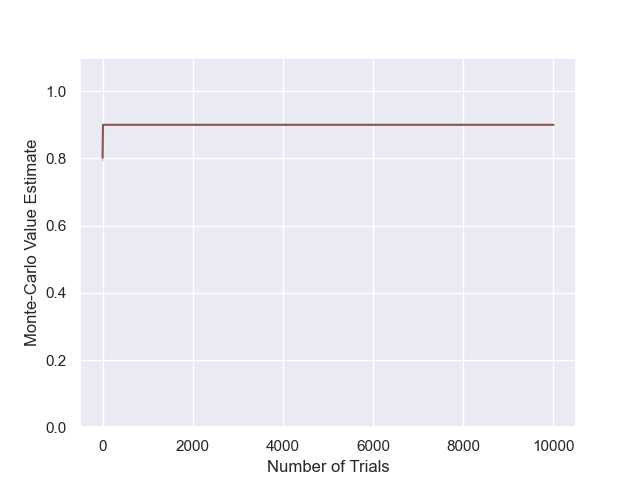}
                    \caption*{$\alpha=1000,\epsilon=0.1$}
                \end{subfigure}
                \begin{subfigure}[b]{0.24\textwidth}
                    \centering
                    \includegraphics[width=\textwidth]{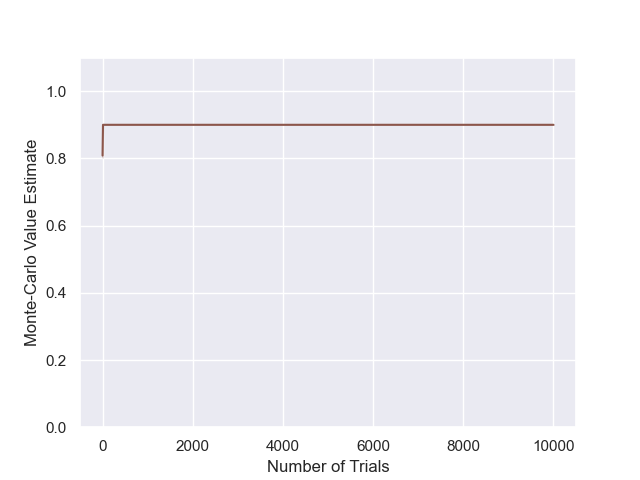}
                    \caption*{$\alpha=1000,\epsilon=1$}
                \end{subfigure}
                \begin{subfigure}[b]{0.24\textwidth}
                    \centering
                    \includegraphics[width=\textwidth]{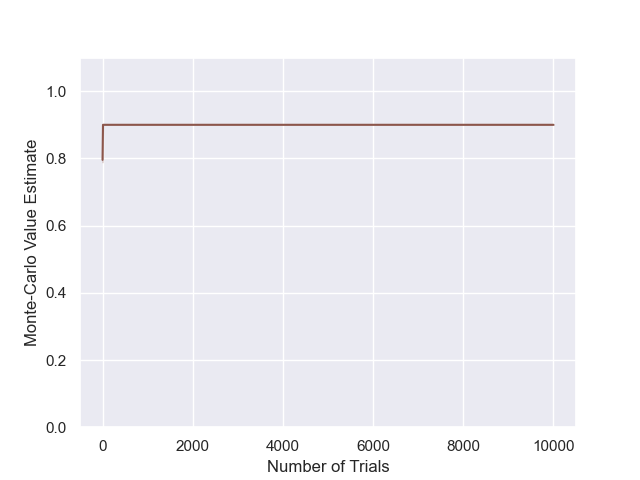}
                    \caption*{$\alpha=1000,\epsilon=10$}
                \end{subfigure}
                
                \begin{subfigure}[b]{0.24\textwidth}
                    \centering
                    \includegraphics[width=\textwidth]{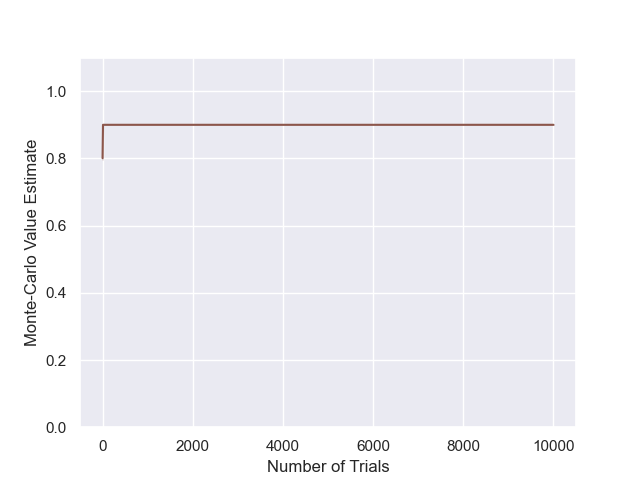}
                    \caption*{$\alpha=100,\epsilon=0.01$}
                \end{subfigure}
                \begin{subfigure}[b]{0.24\textwidth}
                    \centering
                    \includegraphics[width=\textwidth]{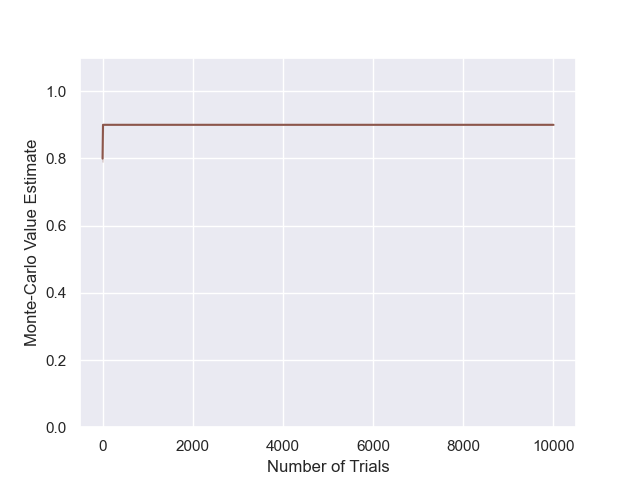}
                    \caption*{$\alpha=100,\epsilon=0.1$}
                \end{subfigure}
                \begin{subfigure}[b]{0.24\textwidth}
                    \centering
                    \includegraphics[width=\textwidth]{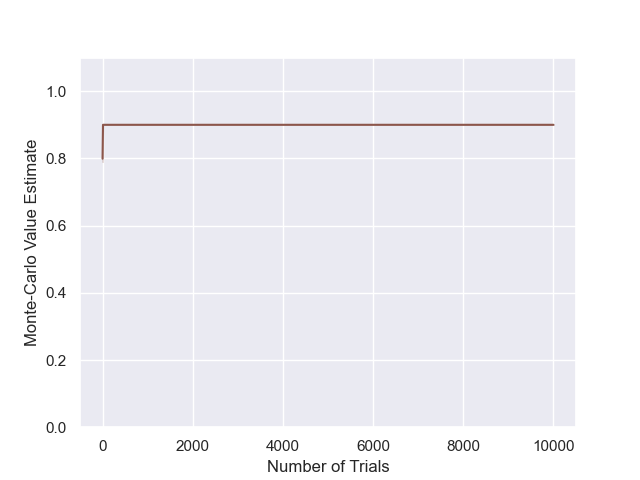}
                    \caption*{$\alpha=100,\epsilon=1$}
                \end{subfigure}
                \begin{subfigure}[b]{0.24\textwidth}
                    \centering
                    \includegraphics[width=\textwidth]{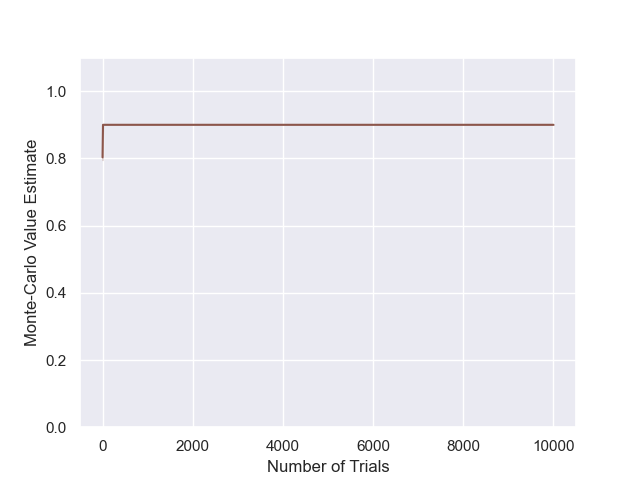}
                    \caption*{$\alpha=100,\epsilon=10$}
                \end{subfigure}
                
                \begin{subfigure}[b]{0.24\textwidth}
                    \centering
                    \includegraphics[width=\textwidth]{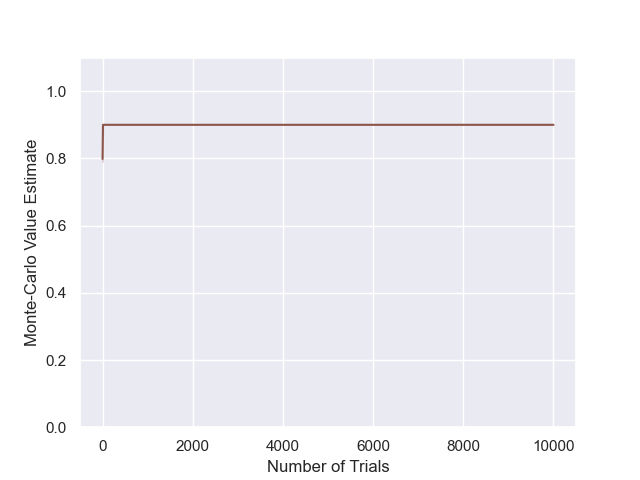}
                    \caption*{$\alpha=10,\epsilon=0.01$}
                \end{subfigure}
                \begin{subfigure}[b]{0.24\textwidth}
                    \centering
                    \includegraphics[width=\textwidth]{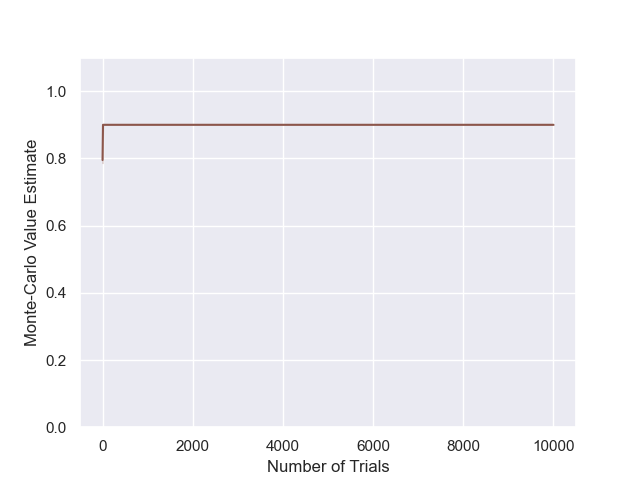}
                    \caption*{$\alpha=10,\epsilon=0.1$}
                \end{subfigure}
                \begin{subfigure}[b]{0.24\textwidth}
                    \centering
                    \includegraphics[width=\textwidth]{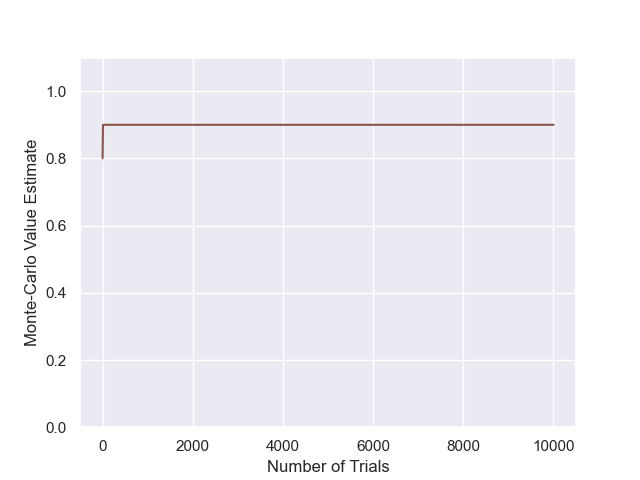}
                    \caption*{$\alpha=10,\epsilon=1$}
                \end{subfigure}
                \begin{subfigure}[b]{0.24\textwidth}
                    \centering
                    \includegraphics[width=\textwidth]{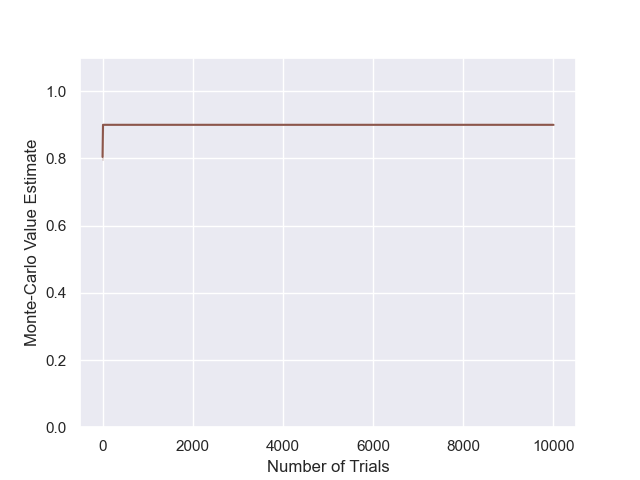}
                    \caption*{$\alpha=10,\epsilon=10$}
                \end{subfigure}
                
                \begin{subfigure}[b]{0.24\textwidth}
                    \centering
                    \includegraphics[width=\textwidth]{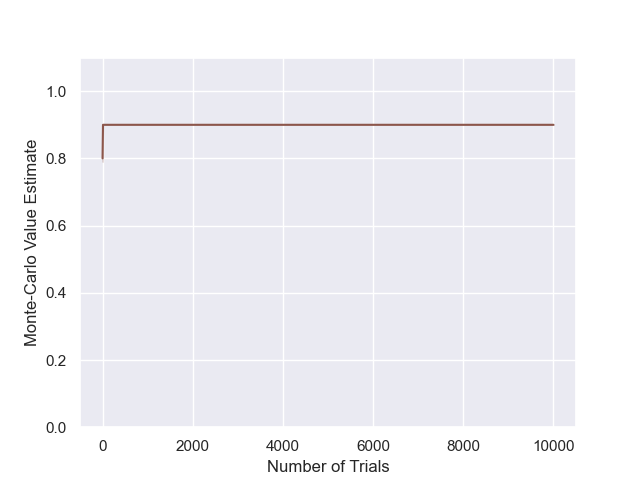}
                    \caption*{$\alpha=1,\epsilon=0.01$}
                \end{subfigure}
                \begin{subfigure}[b]{0.24\textwidth}
                    \centering
                    \includegraphics[width=\textwidth]{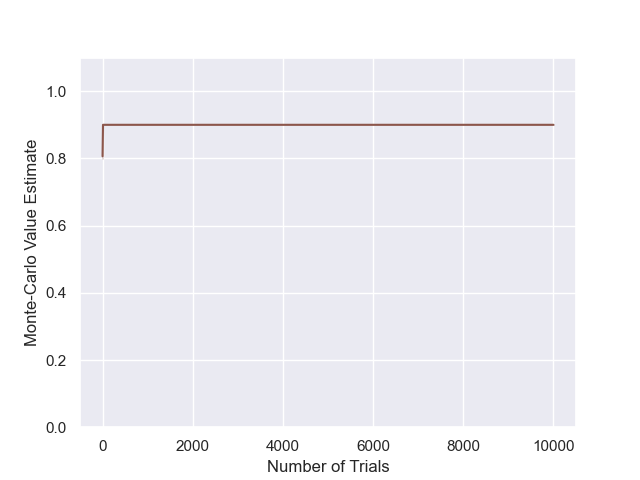}
                    \caption*{$\alpha=1,\epsilon=0.1$}
                \end{subfigure}
                \begin{subfigure}[b]{0.24\textwidth}
                    \centering
                    \includegraphics[width=\textwidth]{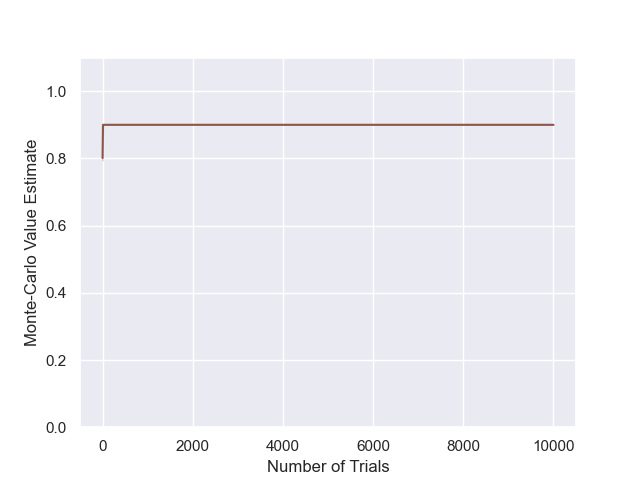}
                    \caption*{$\alpha=1,\epsilon=1$}
                \end{subfigure}
                \begin{subfigure}[b]{0.24\textwidth}
                    \centering
                    \includegraphics[width=\textwidth]{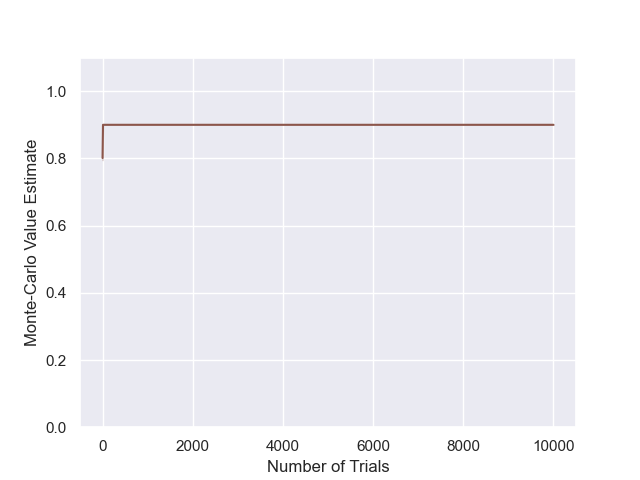}
                    \caption*{$\alpha=1,\epsilon=10$}
                \end{subfigure}
                
                \begin{subfigure}[b]{0.24\textwidth}
                    \centering
                    \includegraphics[width=\textwidth]{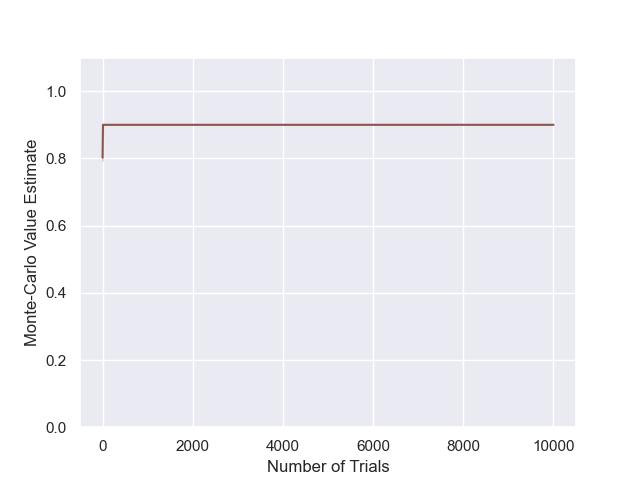}
                    \caption*{$\alpha=0.1,\epsilon=0.01$}
                \end{subfigure}
                \begin{subfigure}[b]{0.24\textwidth}
                    \centering
                    \includegraphics[width=\textwidth]{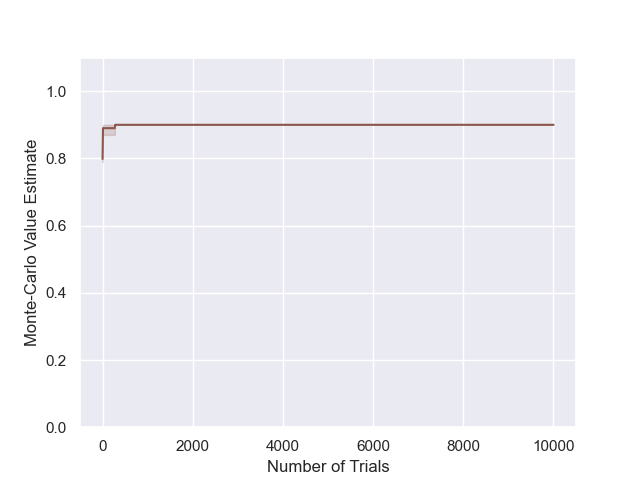}
                    \caption*{$\alpha=0.1,\epsilon=0.1$}
                \end{subfigure}
                \begin{subfigure}[b]{0.24\textwidth}
                    \centering
                    \includegraphics[width=\textwidth]{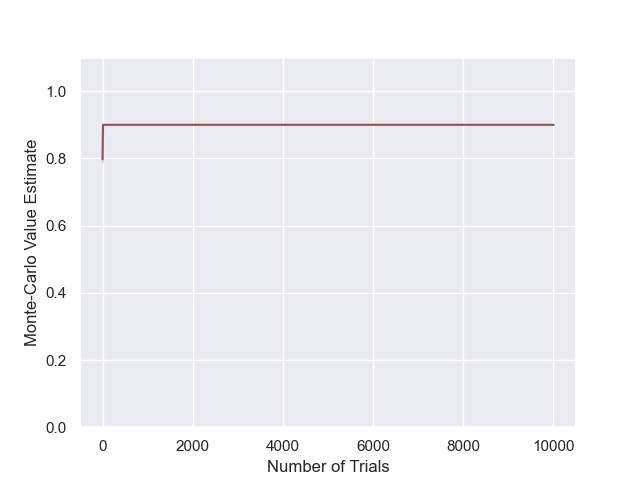}
                    \caption*{$\alpha=0.1,\epsilon=1$}
                \end{subfigure}
                \begin{subfigure}[b]{0.24\textwidth}
                    \centering
                    \includegraphics[width=\textwidth]{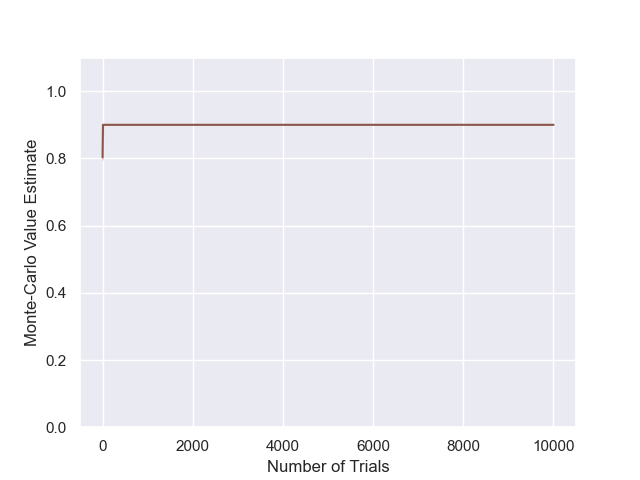}
                    \caption*{$\alpha=0.01,\epsilon=10$}
                \end{subfigure}
                
                \begin{subfigure}[b]{0.24\textwidth}
                    \centering
                    \includegraphics[width=\textwidth]{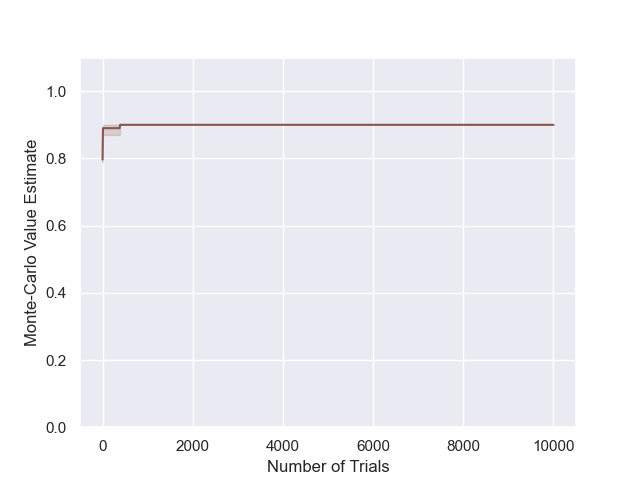}
                    \caption*{$\alpha=0.01,\epsilon=0.01$}
                \end{subfigure}
                \begin{subfigure}[b]{0.24\textwidth}
                    \centering
                    \includegraphics[width=\textwidth]{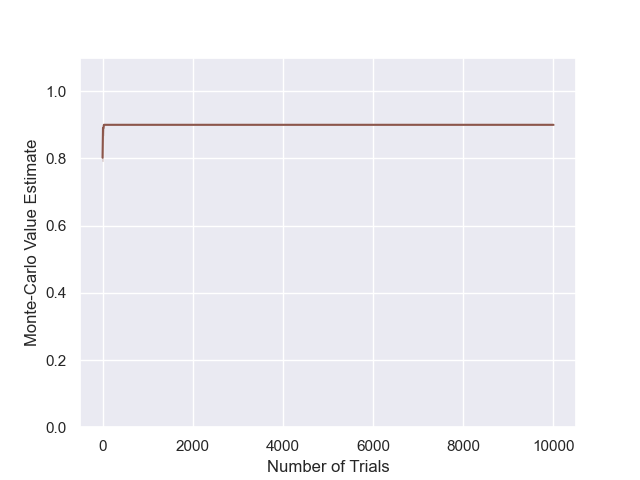}
                    \caption*{$\alpha=0.01,\epsilon=0.1$}
                \end{subfigure}
                \begin{subfigure}[b]{0.24\textwidth}
                    \centering
                    \includegraphics[width=\textwidth]{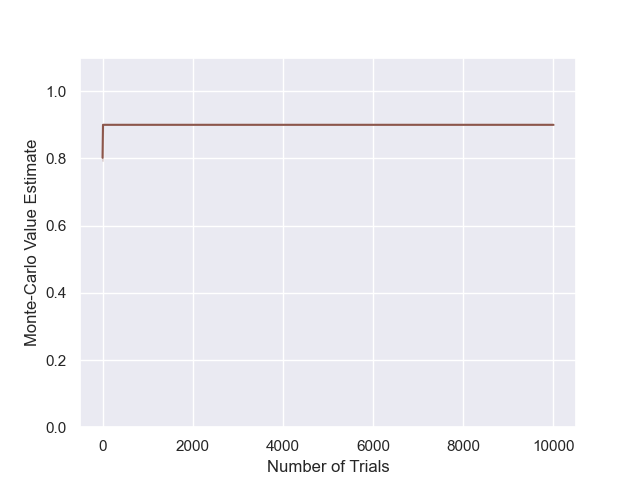}
                    \caption*{$\alpha=0.01,\epsilon=1$}
                \end{subfigure}
                \begin{subfigure}[b]{0.24\textwidth}
                    \centering
                    \includegraphics[width=\textwidth]{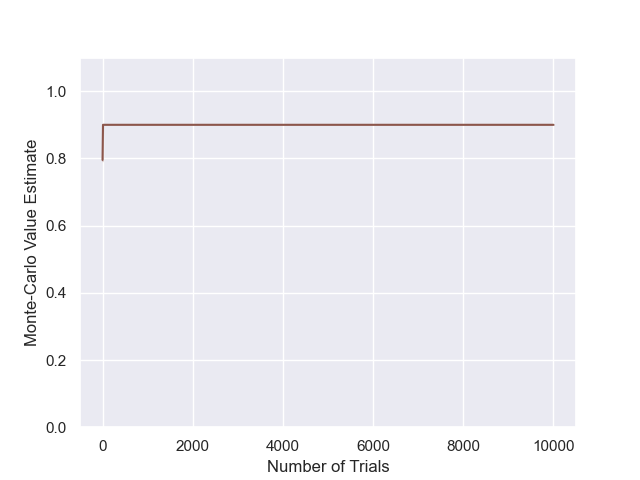}
                    \caption*{$\alpha=0.01,\epsilon=10$}
                \end{subfigure}
                
                \caption{Results for RENTS on the modified 10-chain ($D=10$, $R_f=0.5$), for varying temperatures and exploration parameters.}
                \label{fig:rents_10chain_half_hps}
            \end{figure}

            \begin{figure}
                \centering
                
                \begin{subfigure}[b]{0.24\textwidth}
                    \centering
                    \includegraphics[width=\textwidth]{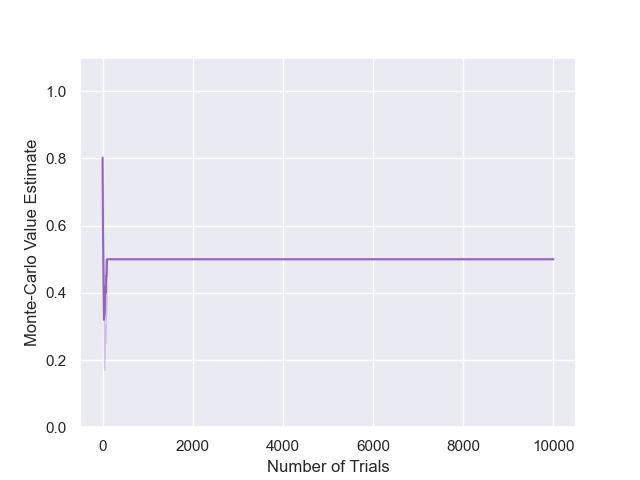}
                    \caption*{$\alpha=10,\epsilon=0.01$}
                \end{subfigure}
                \begin{subfigure}[b]{0.24\textwidth}
                    \centering
                    \includegraphics[width=\textwidth]{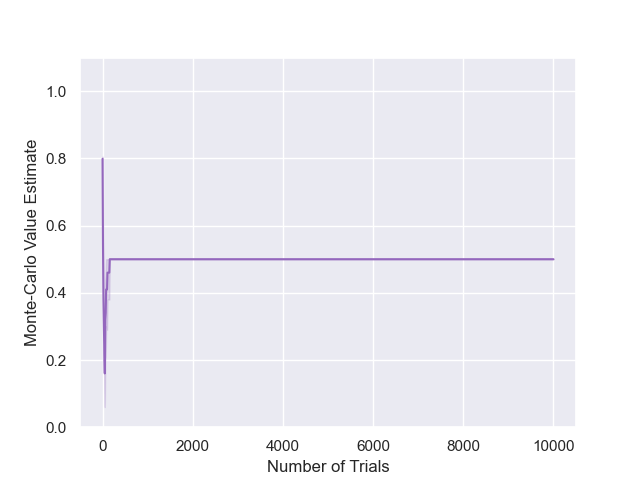}
                    \caption*{$\alpha=10,\epsilon=0.1$}
                \end{subfigure}
                \begin{subfigure}[b]{0.24\textwidth}
                    \centering
                    \includegraphics[width=\textwidth]{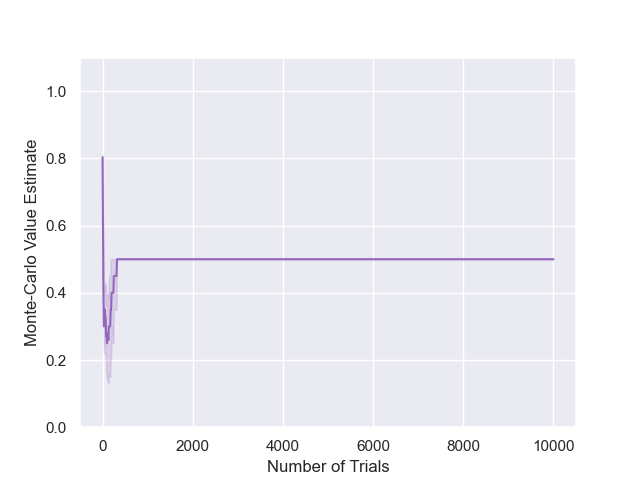}
                    \caption*{$\alpha=10,\epsilon=1$}
                \end{subfigure}
                \begin{subfigure}[b]{0.24\textwidth}
                    \centering
                    \includegraphics[width=\textwidth]{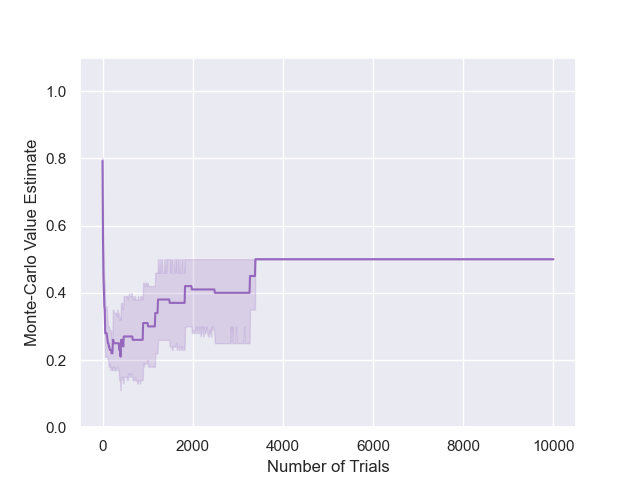}
                    \caption*{$\alpha=10,\epsilon=10$}
                \end{subfigure}
                
                \begin{subfigure}[b]{0.24\textwidth}
                    \centering
                    \includegraphics[width=\textwidth]{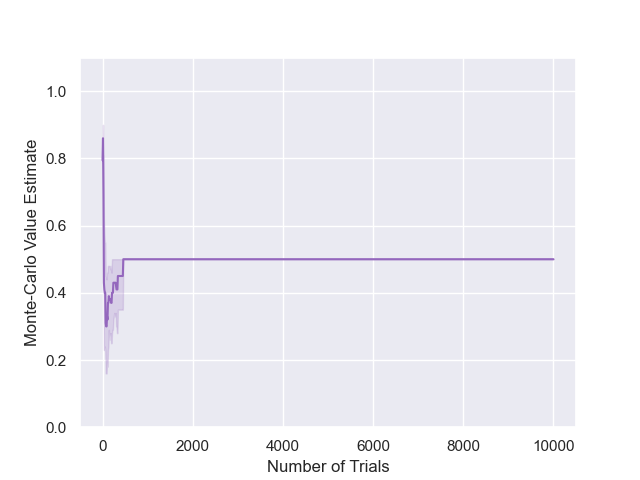}
                    \caption*{$\alpha=1,\epsilon=0.01$}
                \end{subfigure}
                \begin{subfigure}[b]{0.24\textwidth}
                    \centering
                    \includegraphics[width=\textwidth]{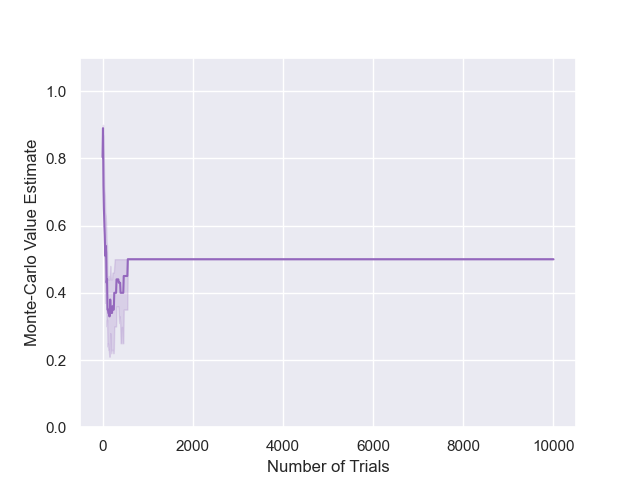}
                    \caption*{$\alpha=1,\epsilon=0.1$}
                \end{subfigure}
                \begin{subfigure}[b]{0.24\textwidth}
                    \centering
                    \includegraphics[width=\textwidth]{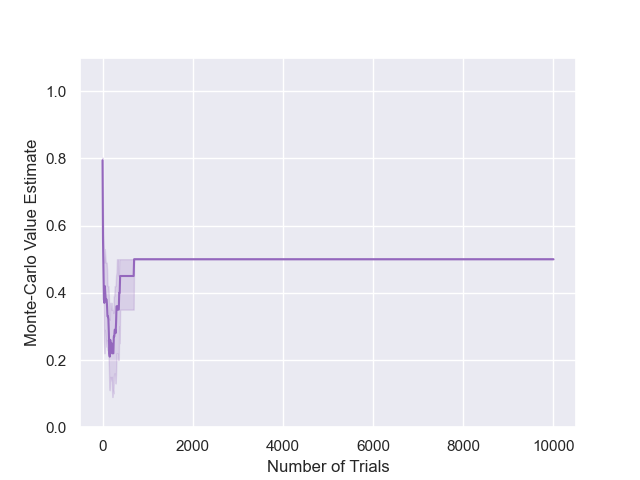}
                    \caption*{$\alpha=1,\epsilon=1$}
                \end{subfigure}
                \begin{subfigure}[b]{0.24\textwidth}
                    \centering
                    \includegraphics[width=\textwidth]{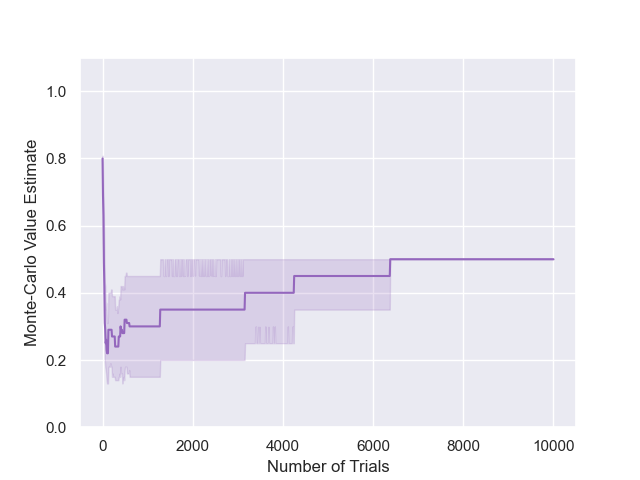}
                    \caption*{$\alpha=1,\epsilon=10$}
                \end{subfigure}
                
                \begin{subfigure}[b]{0.24\textwidth}
                    \centering
                    \includegraphics[width=\textwidth]{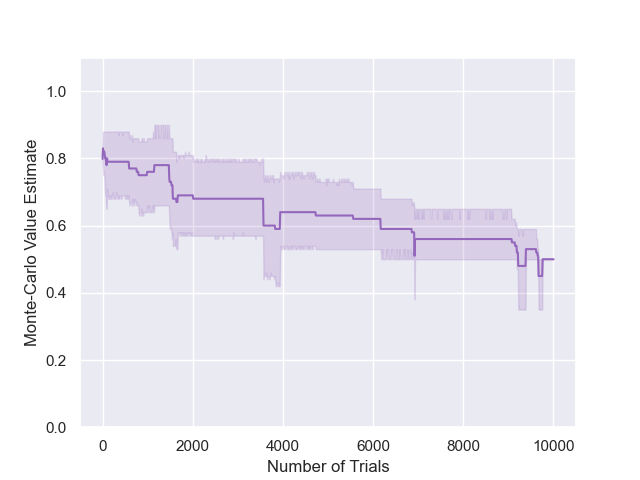}
                    \caption*{$\alpha=0.7,\epsilon=0.01$}
                \end{subfigure}
                \begin{subfigure}[b]{0.24\textwidth}
                    \centering
                    \includegraphics[width=\textwidth]{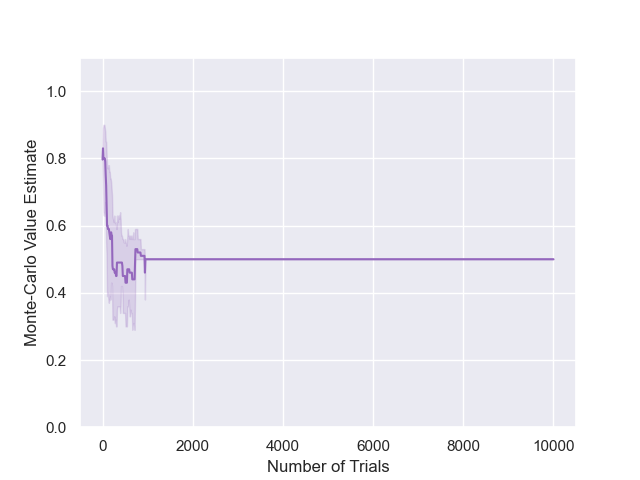}
                    \caption*{$\alpha=0.7,\epsilon=0.1$}
                \end{subfigure}
                \begin{subfigure}[b]{0.24\textwidth}
                    \centering
                    \includegraphics[width=\textwidth]{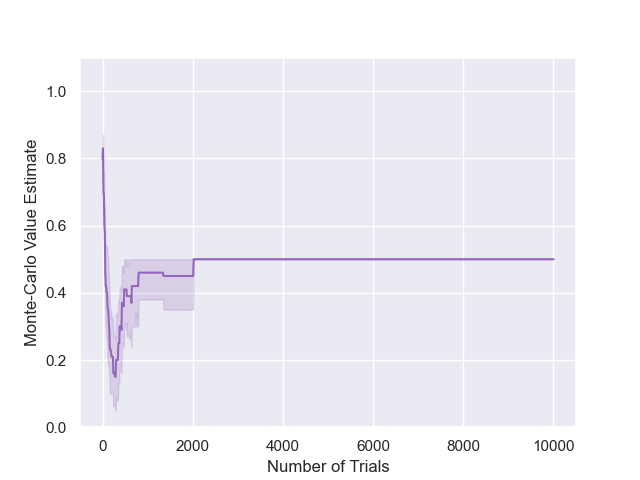}
                    \caption*{$\alpha=0.7,\epsilon=1$}
                \end{subfigure}
                \begin{subfigure}[b]{0.24\textwidth}
                    \centering
                    \includegraphics[width=\textwidth]{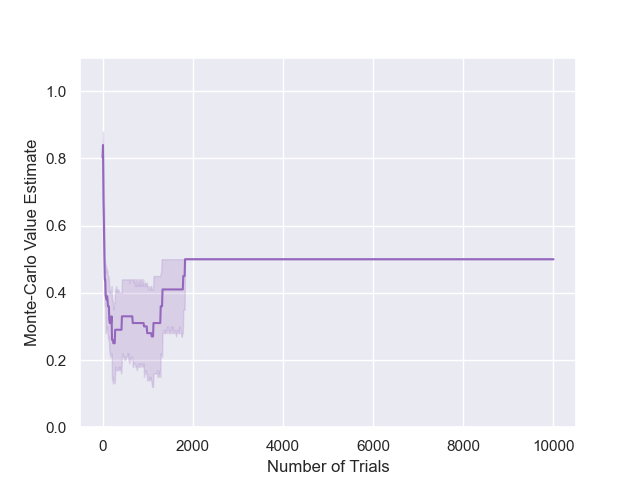}
                    \caption*{$\alpha=0.7,\epsilon=10$}
                \end{subfigure}
                
                \begin{subfigure}[b]{0.24\textwidth}
                    \centering
                    \includegraphics[width=\textwidth]{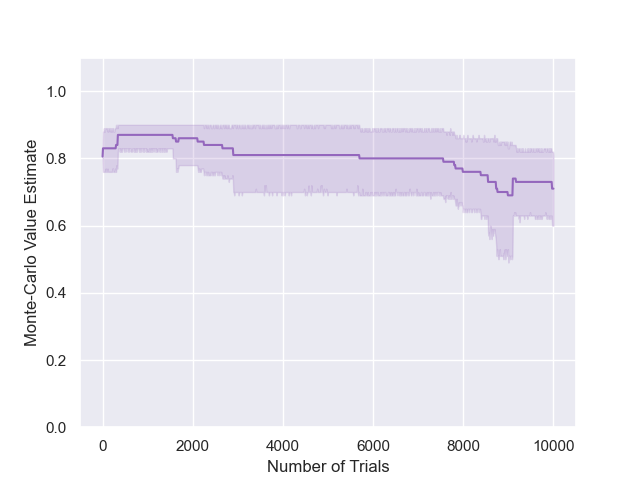}
                    \caption*{$\alpha=0.5,\epsilon=0.01$}
                \end{subfigure}
                \begin{subfigure}[b]{0.24\textwidth}
                    \centering
                    \includegraphics[width=\textwidth]{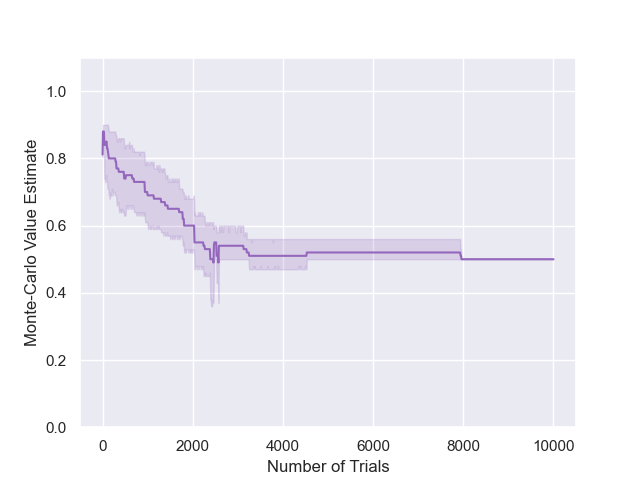}
                    \caption*{$\alpha=0.5,\epsilon=0.1$}
                \end{subfigure}
                \begin{subfigure}[b]{0.24\textwidth}
                    \centering
                    \includegraphics[width=\textwidth]{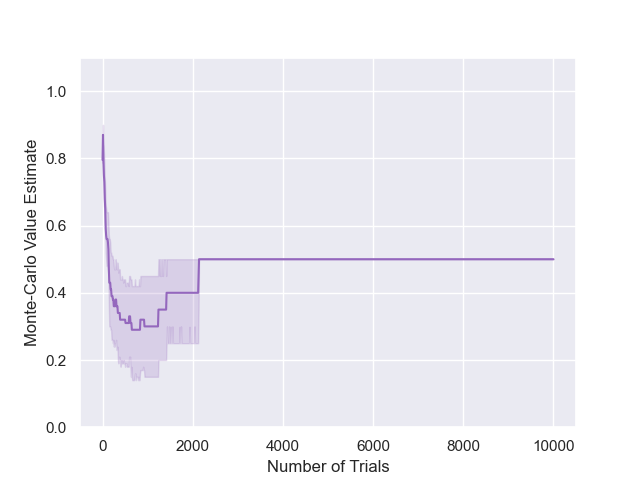}
                    \caption*{$\alpha=0.5,\epsilon=1$}
                \end{subfigure}
                \begin{subfigure}[b]{0.24\textwidth}
                    \centering
                    \includegraphics[width=\textwidth]{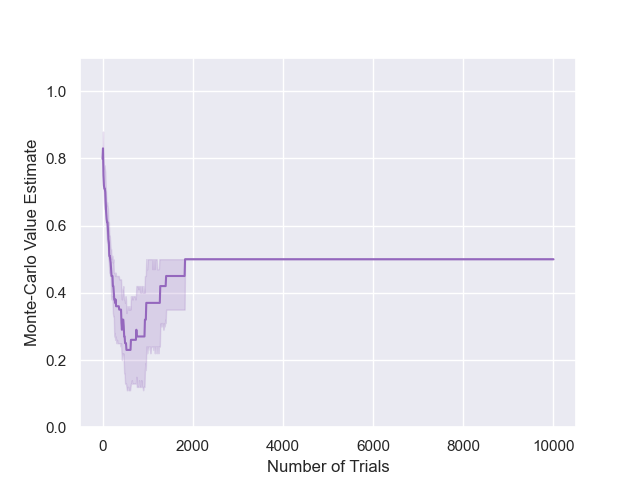}
                    \caption*{$\alpha=0.5,\epsilon=10$}
                \end{subfigure}
                
                \begin{subfigure}[b]{0.24\textwidth}
                    \centering
                    \includegraphics[width=\textwidth]{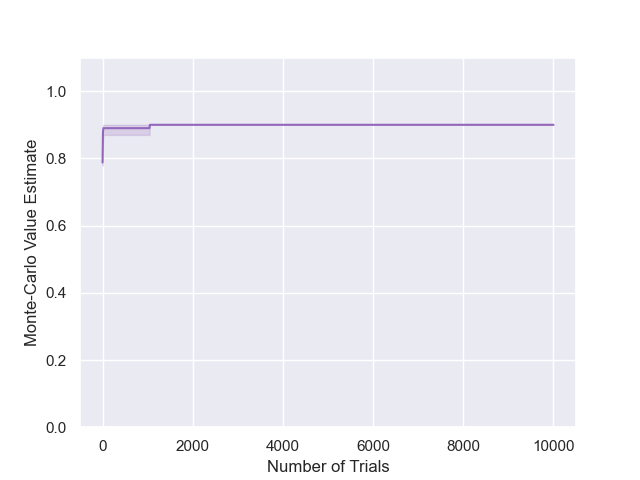}
                    \caption*{$\alpha=0.3,\epsilon=0.01$}
                \end{subfigure}
                \begin{subfigure}[b]{0.24\textwidth}
                    \centering
                    \includegraphics[width=\textwidth]{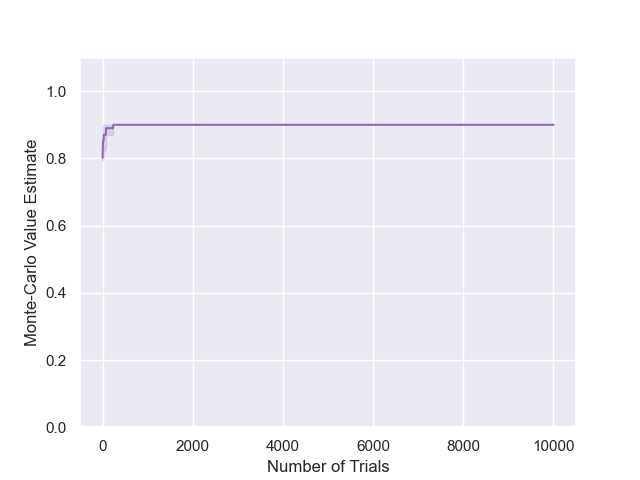}
                    \caption*{$\alpha=0.3,\epsilon=0.1$}
                \end{subfigure}
                \begin{subfigure}[b]{0.24\textwidth}
                    \centering
                    \includegraphics[width=\textwidth]{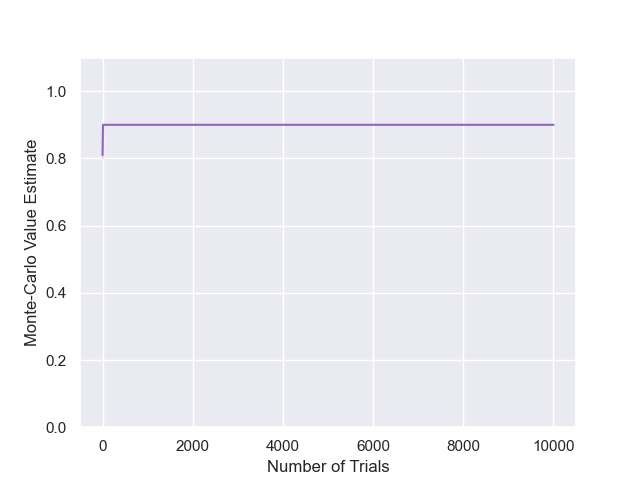}
                    \caption*{$\alpha=0.3,\epsilon=1$}
                \end{subfigure}
                \begin{subfigure}[b]{0.24\textwidth}
                    \centering
                    \includegraphics[width=\textwidth]{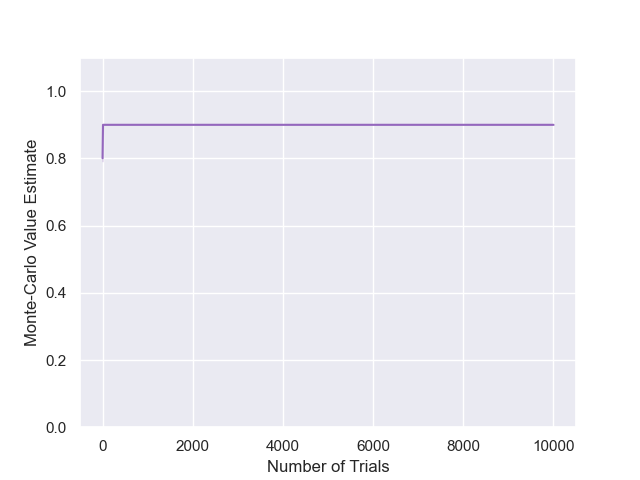}
                    \caption*{$\alpha=0.3,\epsilon=10$}
                \end{subfigure}
                
                \begin{subfigure}[b]{0.24\textwidth}
                    \centering
                    \includegraphics[width=\textwidth]{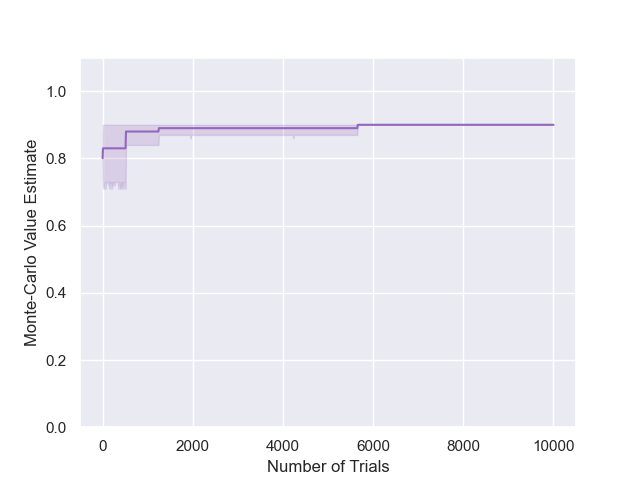}
                    \caption*{$\alpha=0.1,\epsilon=0.01$}
                \end{subfigure}
                \begin{subfigure}[b]{0.24\textwidth}
                    \centering
                    \includegraphics[width=\textwidth]{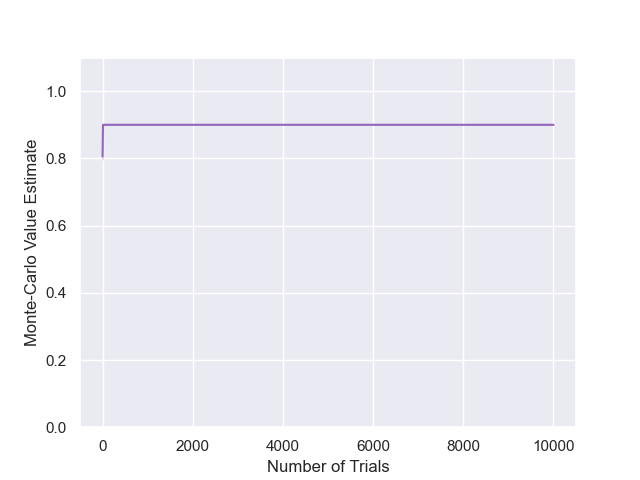}
                    \caption*{$\alpha=0.1,\epsilon=0.1$}
                \end{subfigure}
                \begin{subfigure}[b]{0.24\textwidth}
                    \centering
                    \includegraphics[width=\textwidth]{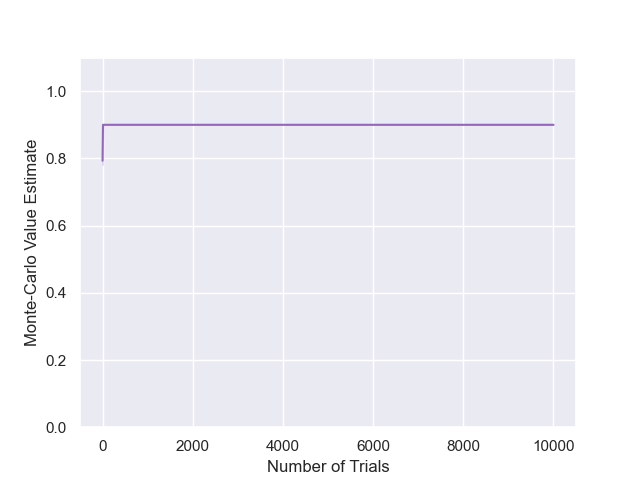}
                    \caption*{$\alpha=0.1,\epsilon=1$}
                \end{subfigure}
                \begin{subfigure}[b]{0.24\textwidth}
                    \centering
                    \includegraphics[width=\textwidth]{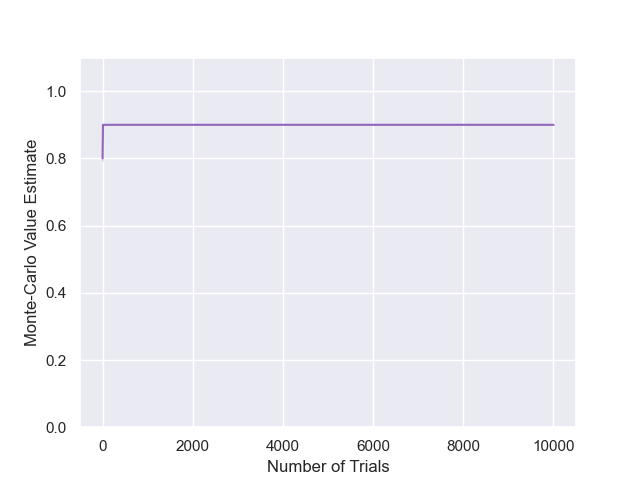}
                    \caption*{$\alpha=0.1,\epsilon=10$}
                \end{subfigure}
                
                \begin{subfigure}[b]{0.24\textwidth}
                    \centering
                    \includegraphics[width=\textwidth]{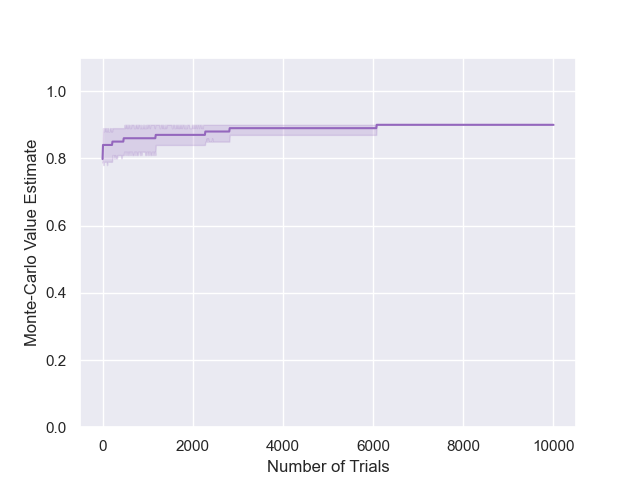}
                    \caption*{$\alpha=0.01,\epsilon=0.01$}
                \end{subfigure}
                \begin{subfigure}[b]{0.24\textwidth}
                    \centering
                    \includegraphics[width=\textwidth]{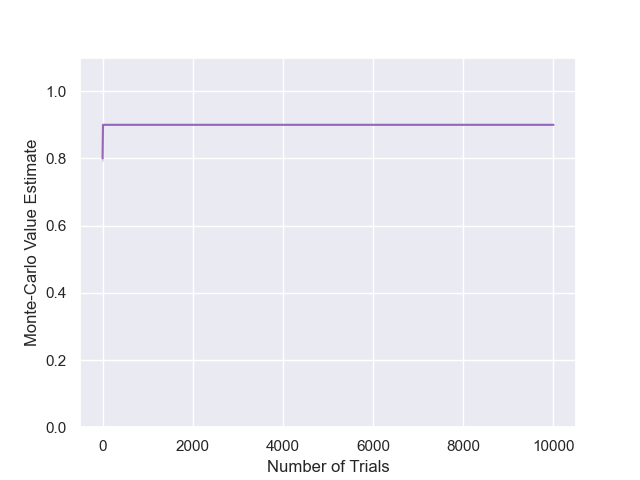}
                    \caption*{$\alpha=0.01,\epsilon=0.1$}
                \end{subfigure}
                \begin{subfigure}[b]{0.24\textwidth}
                    \centering
                    \includegraphics[width=\textwidth]{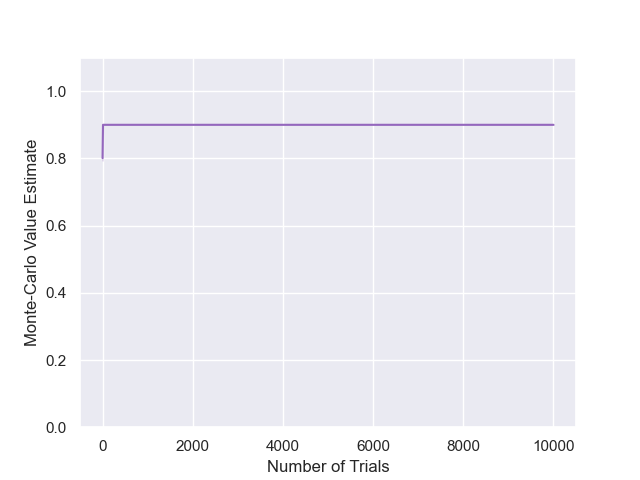}
                    \caption*{$\alpha=0.01,\epsilon=1$}
                \end{subfigure}
                \begin{subfigure}[b]{0.24\textwidth}
                    \centering
                    \includegraphics[width=\textwidth]{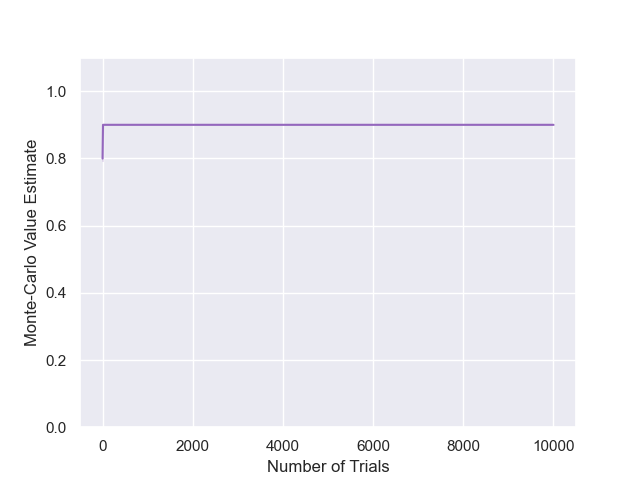}
                    \caption*{$\alpha=0.01,\epsilon=10$}
                \end{subfigure}
                
                \caption{Results for TENTS on the modified 10-chain ($D=10$, $R_f=0.5$), for varying temperatures and exploration parameters.}
                \label{fig:tents_10chain_half_hps}
            \end{figure}

            \begin{figure}
                \centering
                
                \begin{subfigure}[b]{0.24\textwidth}
                    \centering
                    \includegraphics[width=\textwidth]{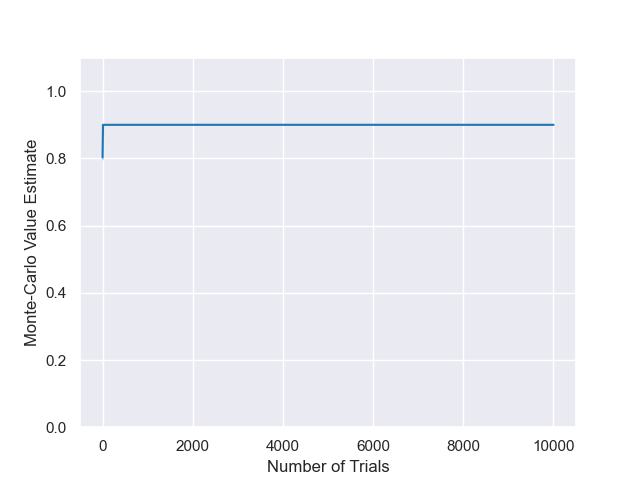}
                    \caption*{$\alpha=1,\epsilon=0.01$}
                \end{subfigure}
                \begin{subfigure}[b]{0.24\textwidth}
                    \centering
                    \includegraphics[width=\textwidth]{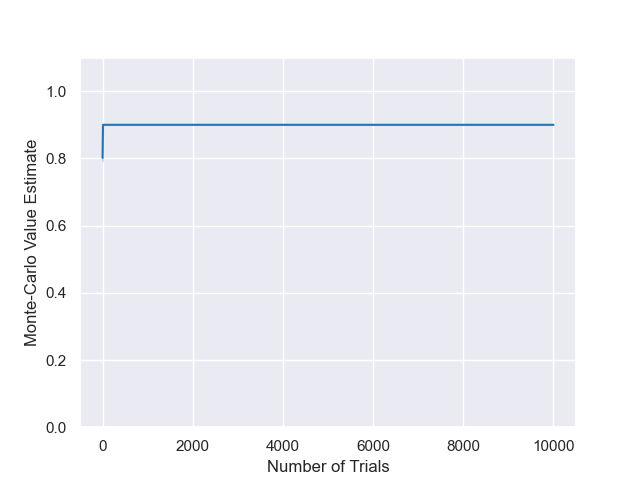}
                    \caption*{$\alpha=1,\epsilon=0.1$}
                \end{subfigure}
                \begin{subfigure}[b]{0.24\textwidth}
                    \centering
                    \includegraphics[width=\textwidth]{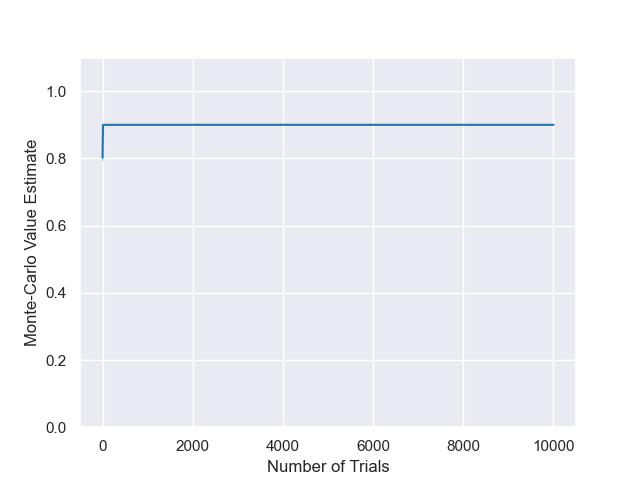}
                    \caption*{$\alpha=1,\epsilon=1$}
                \end{subfigure}
                \begin{subfigure}[b]{0.24\textwidth}
                    \centering
                    \includegraphics[width=\textwidth]{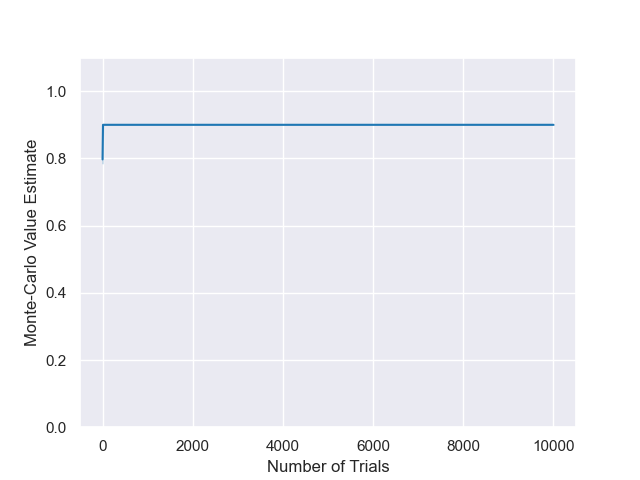}
                    \caption*{$\alpha=1,\epsilon=10$}
                \end{subfigure}
                
                \begin{subfigure}[b]{0.24\textwidth}
                    \centering
                    \includegraphics[width=\textwidth]{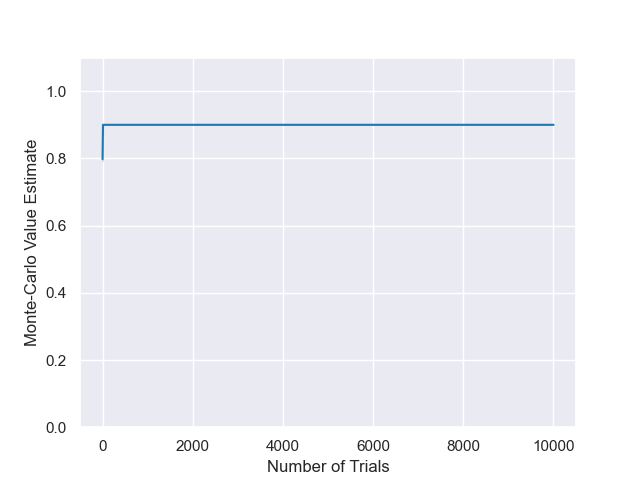}
                    \caption*{$\alpha=0.5,\epsilon=0.01$}
                \end{subfigure}
                \begin{subfigure}[b]{0.24\textwidth}
                    \centering
                    \includegraphics[width=\textwidth]{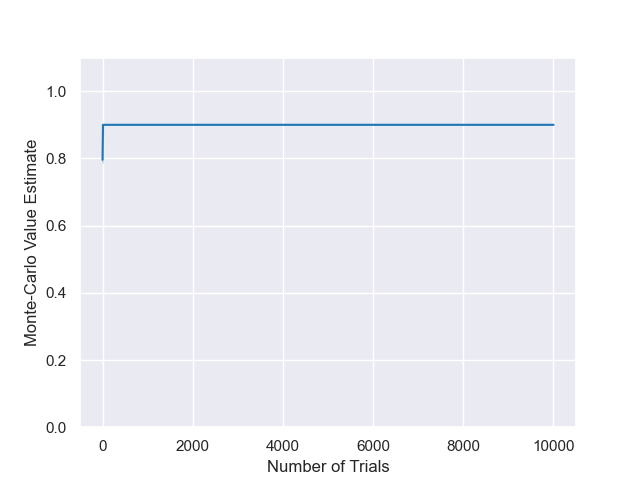}
                    \caption*{$\alpha=0.5,\epsilon=0.1$}
                \end{subfigure}
                \begin{subfigure}[b]{0.24\textwidth}
                    \centering
                    \includegraphics[width=\textwidth]{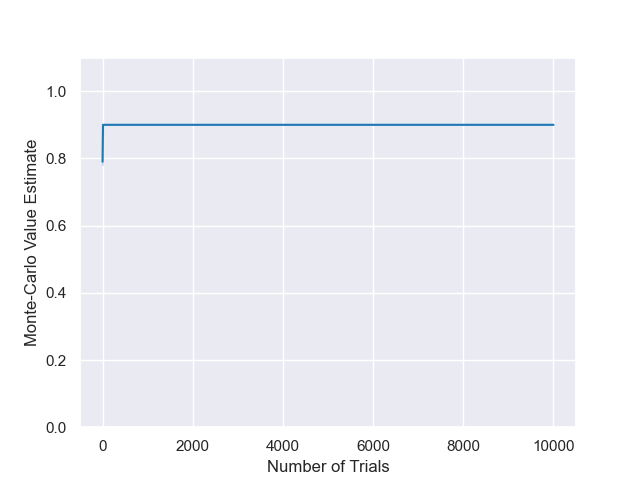}
                    \caption*{$\alpha=0.5,\epsilon=1$}
                \end{subfigure}
                \begin{subfigure}[b]{0.24\textwidth}
                    \centering
                    \includegraphics[width=\textwidth]{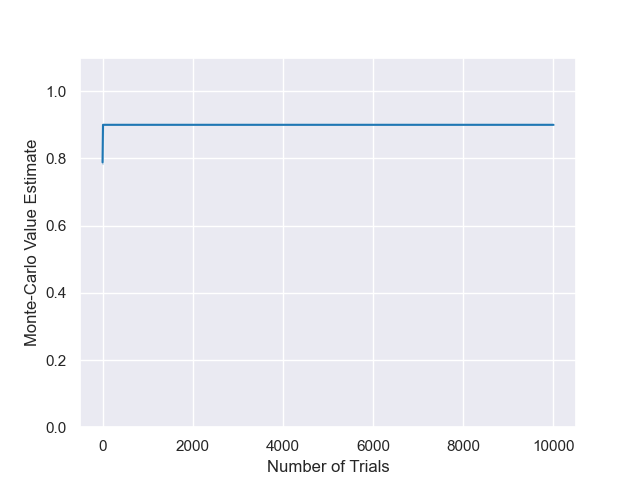}
                    \caption*{$\alpha=0.5,\epsilon=10$}
                \end{subfigure}
                
                \begin{subfigure}[b]{0.24\textwidth}
                    \centering
                    \includegraphics[width=\textwidth]{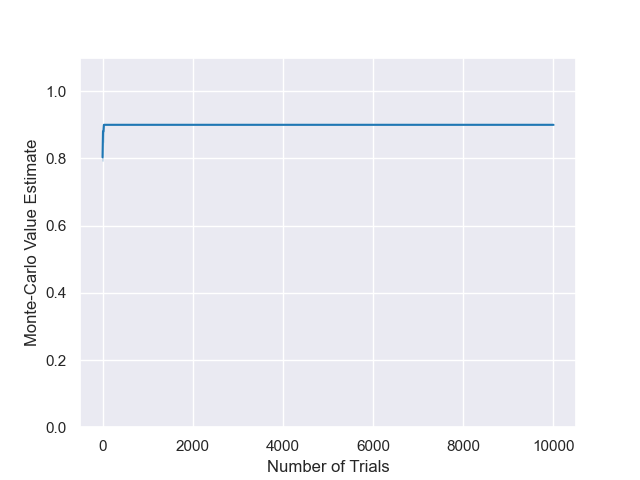}
                    \caption*{$\alpha=0.2,\epsilon=0.01$}
                \end{subfigure}
                \begin{subfigure}[b]{0.24\textwidth}
                    \centering
                    \includegraphics[width=\textwidth]{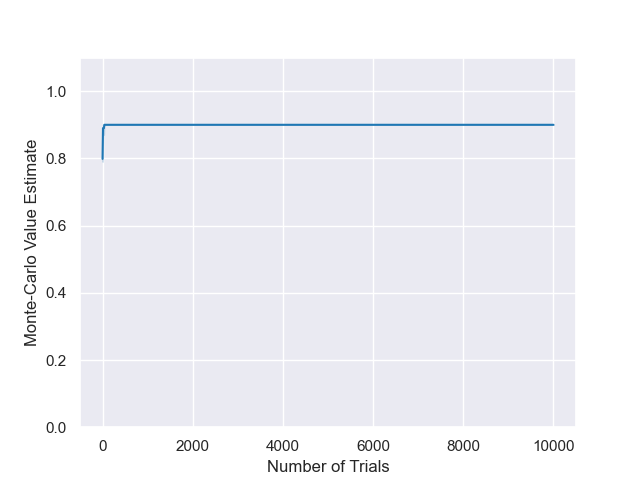}
                    \caption*{$\alpha=0.2,\epsilon=0.1$}
                \end{subfigure}
                \begin{subfigure}[b]{0.24\textwidth}
                    \centering
                    \includegraphics[width=\textwidth]{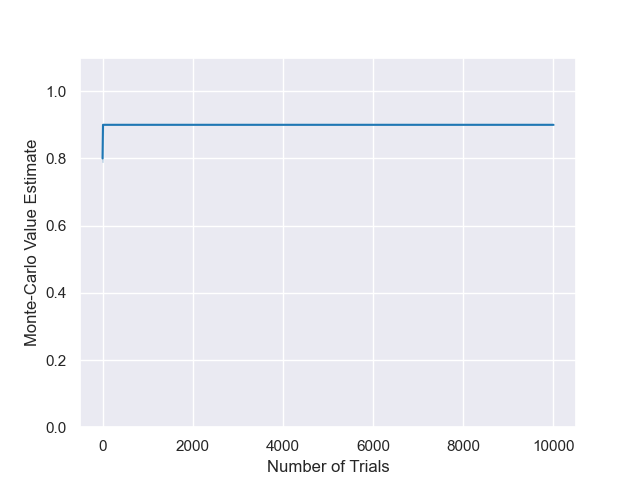}
                    \caption*{$\alpha=0.2,\epsilon=1$}
                \end{subfigure}
                \begin{subfigure}[b]{0.24\textwidth}
                    \centering
                    \includegraphics[width=\textwidth]{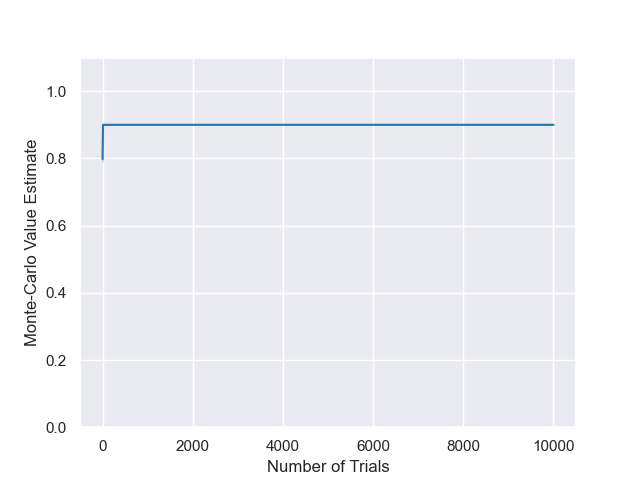}
                    \caption*{$\alpha=0.2,\epsilon=10$}
                \end{subfigure}
                
                \begin{subfigure}[b]{0.24\textwidth}
                    \centering
                    \includegraphics[width=\textwidth]{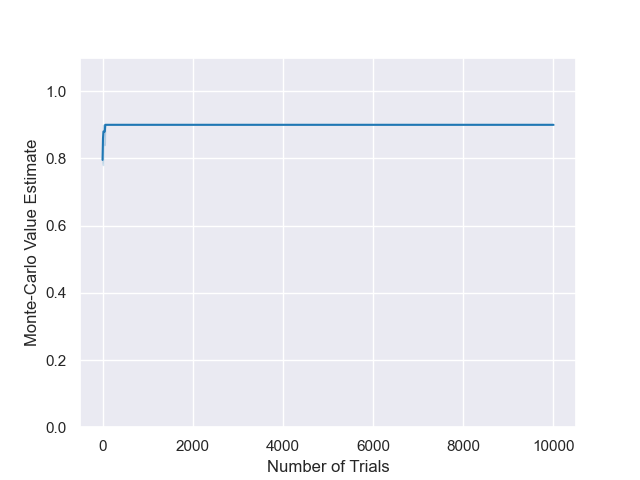}
                    \caption*{$\alpha=0.15,\epsilon=0.01$}
                \end{subfigure}
                \begin{subfigure}[b]{0.24\textwidth}
                    \centering
                    \includegraphics[width=\textwidth]{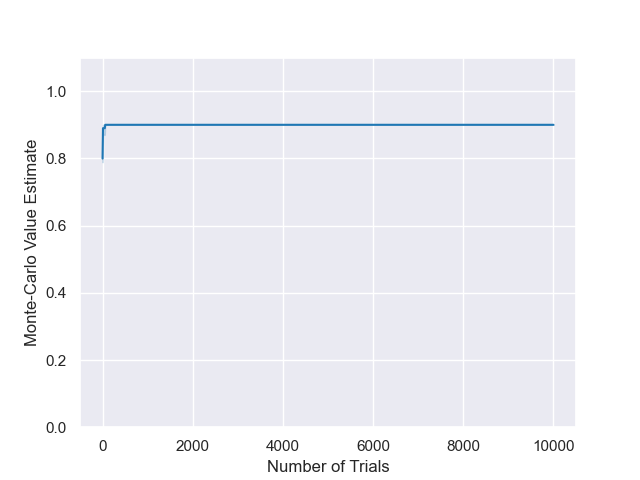}
                    \caption*{$\alpha=0.15,\epsilon=0.1$}
                \end{subfigure}
                \begin{subfigure}[b]{0.24\textwidth}
                    \centering
                    \includegraphics[width=\textwidth]{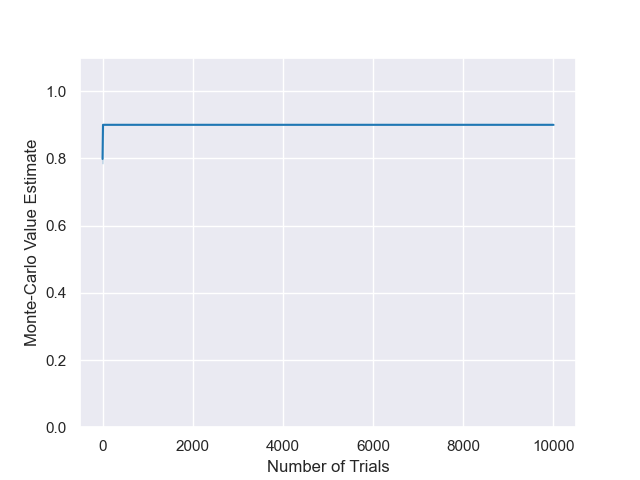}
                    \caption*{$\alpha=0.15,\epsilon=1$}
                \end{subfigure}
                \begin{subfigure}[b]{0.24\textwidth}
                    \centering
                    \includegraphics[width=\textwidth]{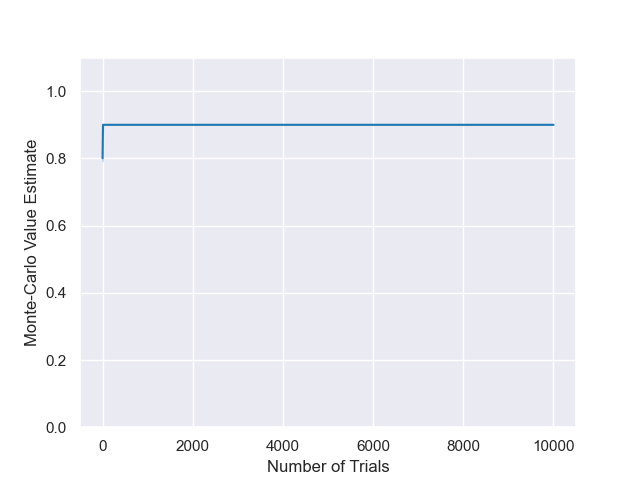}
                    \caption*{$\alpha=0.15,\epsilon=10$}
                \end{subfigure}
                
                \begin{subfigure}[b]{0.24\textwidth}
                    \centering
                    \includegraphics[width=\textwidth]{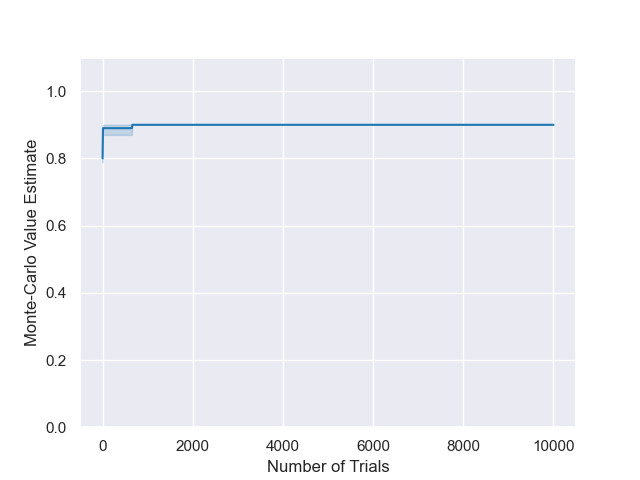}
                    \caption*{$\alpha=0.1,\epsilon=0.01$}
                \end{subfigure}
                \begin{subfigure}[b]{0.24\textwidth}
                    \centering
                    \includegraphics[width=\textwidth]{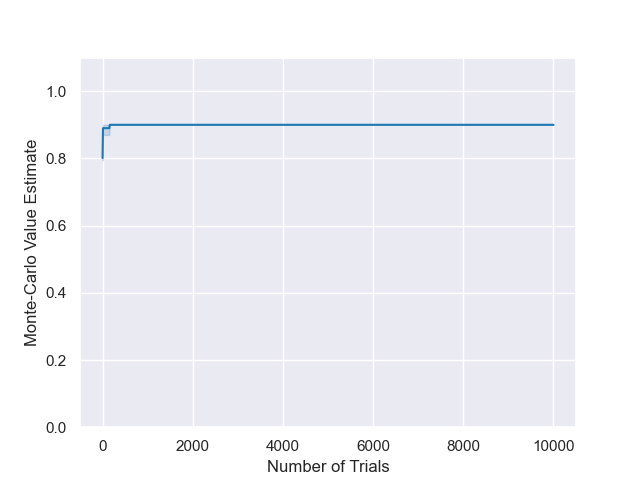}
                    \caption*{$\alpha=0.1,\epsilon=0.1$}
                \end{subfigure}
                \begin{subfigure}[b]{0.24\textwidth}
                    \centering
                    \includegraphics[width=\textwidth]{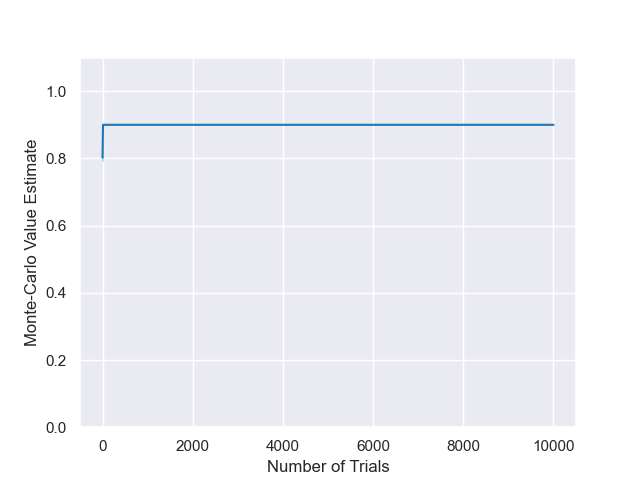}
                    \caption*{$\alpha=0.1,\epsilon=1$}
                \end{subfigure}
                \begin{subfigure}[b]{0.24\textwidth}
                    \centering
                    \includegraphics[width=\textwidth]{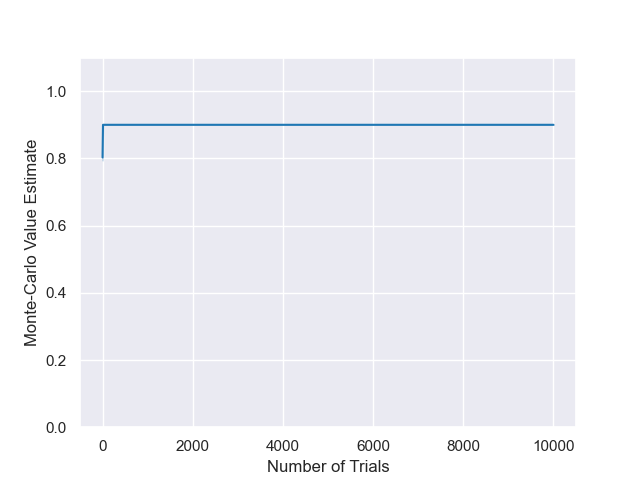}
                    \caption*{$\alpha=0.1,\epsilon=10$}
                \end{subfigure}
                
                \begin{subfigure}[b]{0.24\textwidth}
                    \centering
                    \includegraphics[width=\textwidth]{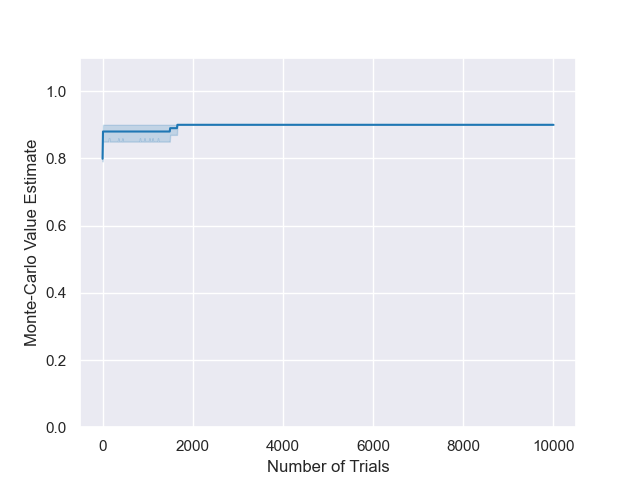}
                    \caption*{$\alpha=0.05,\epsilon=0.01$}
                \end{subfigure}
                \begin{subfigure}[b]{0.24\textwidth}
                    \centering
                    \includegraphics[width=\textwidth]{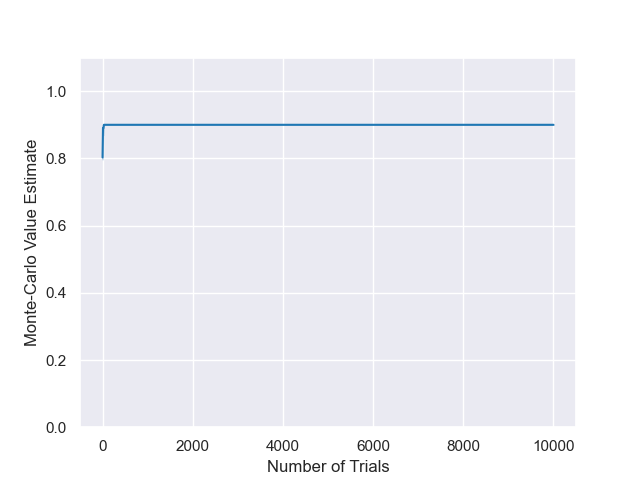}
                    \caption*{$\alpha=0.05,\epsilon=0.1$}
                \end{subfigure}
                \begin{subfigure}[b]{0.24\textwidth}
                    \centering
                    \includegraphics[width=\textwidth]{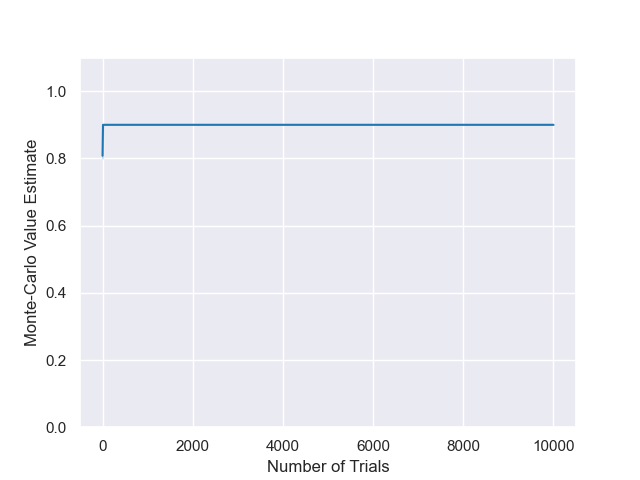}
                    \caption*{$\alpha=0.05,\epsilon=1$}
                \end{subfigure}
                \begin{subfigure}[b]{0.24\textwidth}
                    \centering
                    \includegraphics[width=\textwidth]{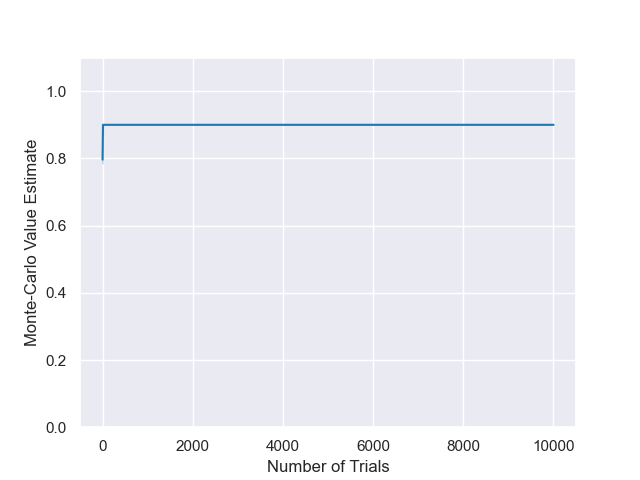}
                    \caption*{$\alpha=0.05,\epsilon=10$}
                \end{subfigure}
                
                \begin{subfigure}[b]{0.24\textwidth}
                    \centering
                    \includegraphics[width=\textwidth]{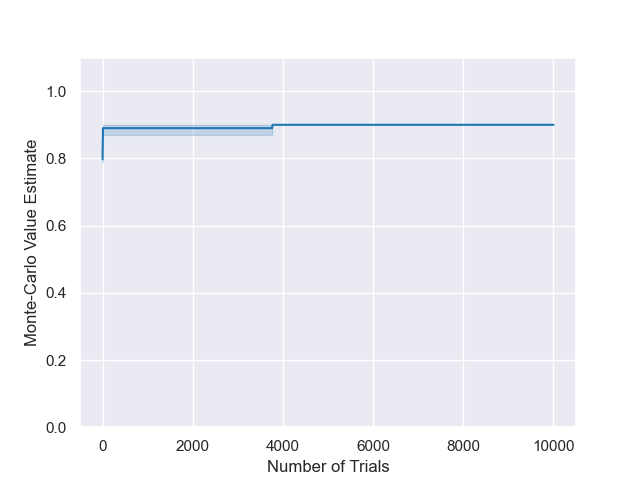}
                    \caption*{$\alpha=0.01,\epsilon=0.01$}
                \end{subfigure}
                \begin{subfigure}[b]{0.24\textwidth}
                    \centering
                    \includegraphics[width=\textwidth]{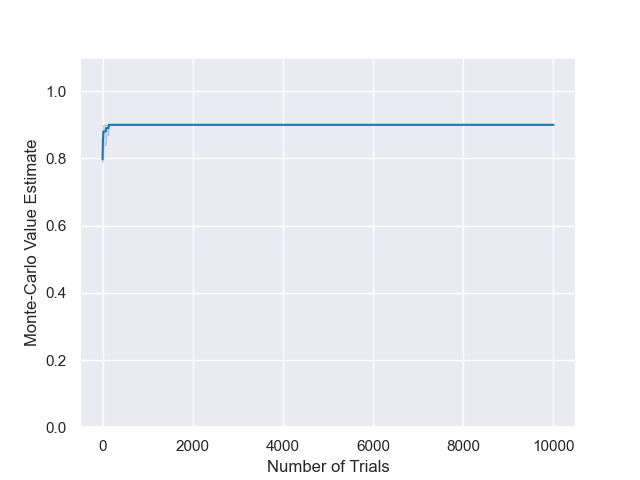}
                    \caption*{$\alpha=0.01,\epsilon=0.1$}
                \end{subfigure}
                \begin{subfigure}[b]{0.24\textwidth}
                    \centering
                    \includegraphics[width=\textwidth]{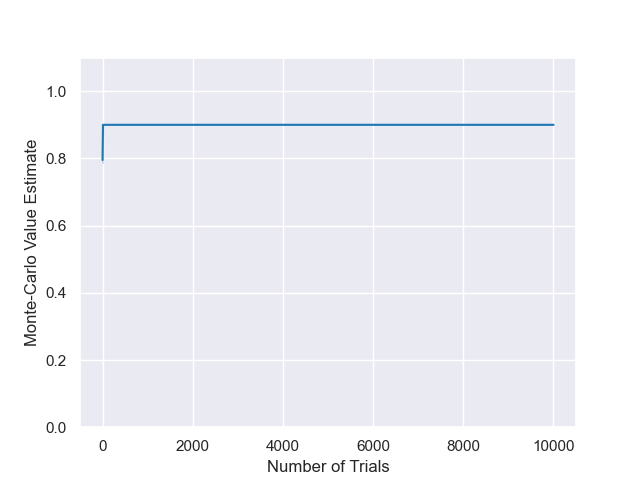}
                    \caption*{$\alpha=0.01,\epsilon=1$}
                \end{subfigure}
                \begin{subfigure}[b]{0.24\textwidth}
                    \centering
                    \includegraphics[width=\textwidth]{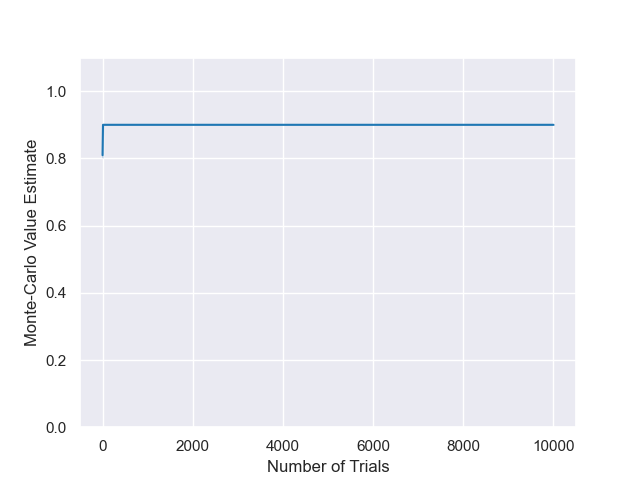}
                    \caption*{$\alpha=0.01,\epsilon=10$}
                \end{subfigure}
                
                \caption{Results for BTS on the modified 10-chain ($D=10$, $R_f=0.5$), for varying temperatures and exploration parameters.}
                \label{fig:bts_10chain_half_hps}
            \end{figure}

            \begin{figure}
                \centering
                
                \begin{subfigure}[b]{0.24\textwidth}
                    \centering
                    \includegraphics[width=\textwidth]{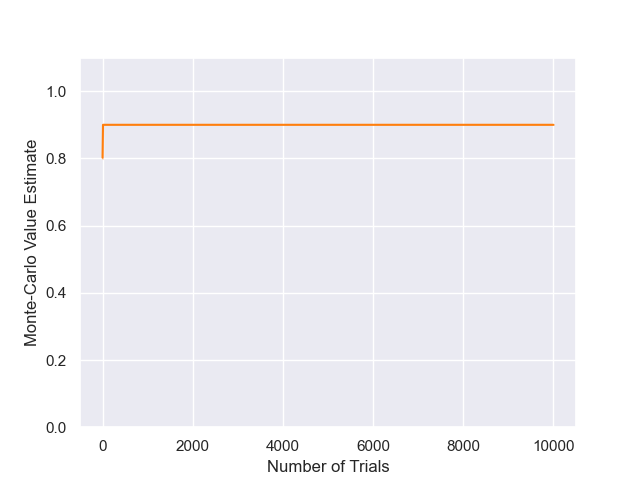}
                    \caption*{$\alpha=1,\epsilon=0.01$}
                \end{subfigure}
                \begin{subfigure}[b]{0.24\textwidth}
                    \centering
                    \includegraphics[width=\textwidth]{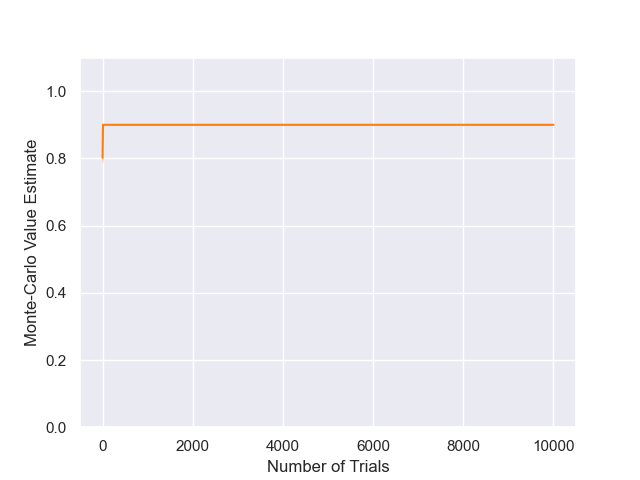}
                    \caption*{$\alpha=1,\epsilon=0.1$}
                \end{subfigure}
                \begin{subfigure}[b]{0.24\textwidth}
                    \centering
                    \includegraphics[width=\textwidth]{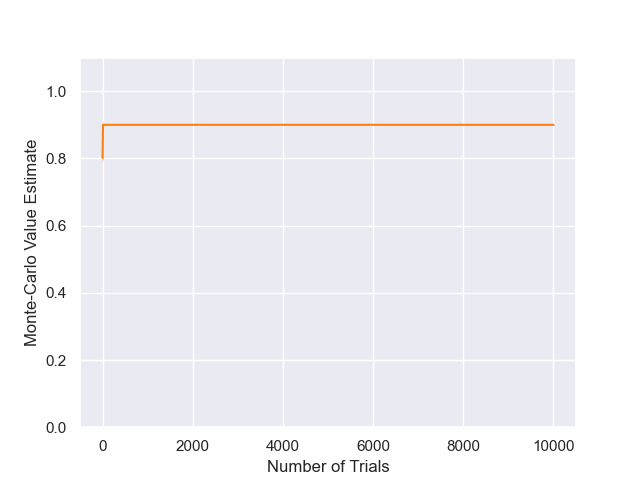}
                    \caption*{$\alpha=1,\epsilon=1$}
                \end{subfigure}
                \begin{subfigure}[b]{0.24\textwidth}
                    \centering
                    \includegraphics[width=\textwidth]{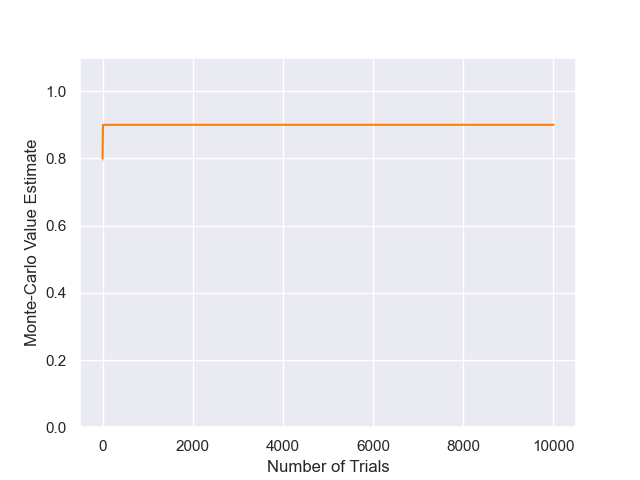}
                    \caption*{$\alpha=1,\epsilon=10$}
                \end{subfigure}
                
                \begin{subfigure}[b]{0.24\textwidth}
                    \centering
                    \includegraphics[width=\textwidth]{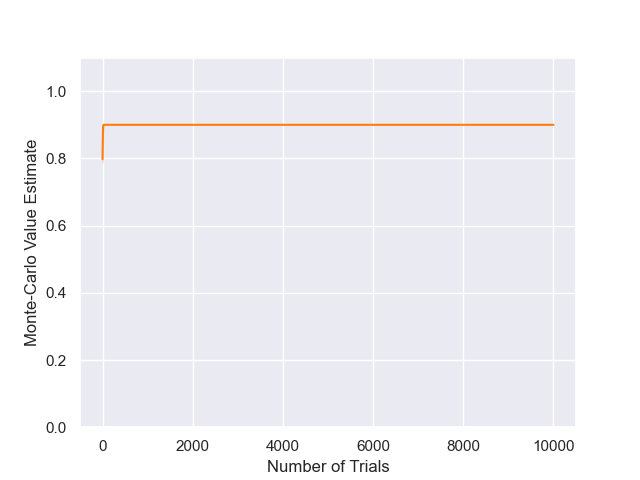}
                    \caption*{$\alpha=0.5,\epsilon=0.01$}
                \end{subfigure}
                \begin{subfigure}[b]{0.24\textwidth}
                    \centering
                    \includegraphics[width=\textwidth]{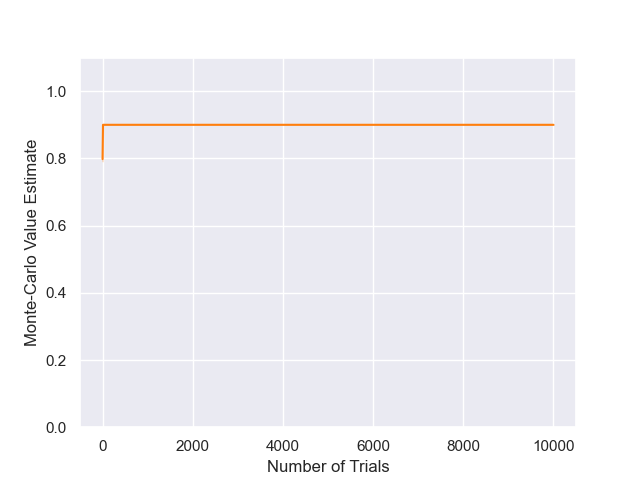}
                    \caption*{$\alpha=0.5,\epsilon=0.1$}
                \end{subfigure}
                \begin{subfigure}[b]{0.24\textwidth}
                    \centering
                    \includegraphics[width=\textwidth]{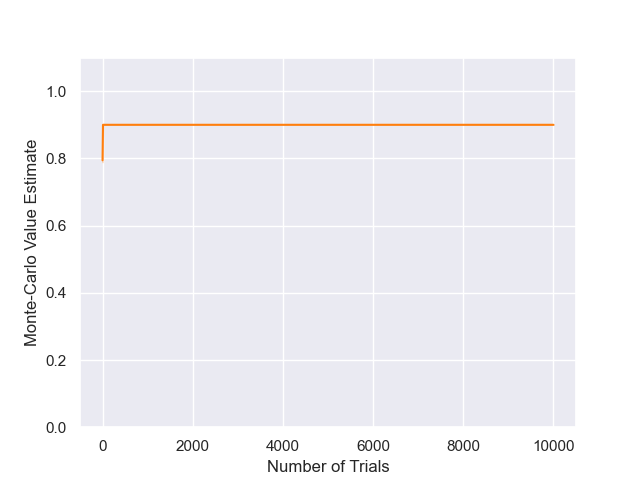}
                    \caption*{$\alpha=0.5,\epsilon=1$}
                \end{subfigure}
                \begin{subfigure}[b]{0.24\textwidth}
                    \centering
                    \includegraphics[width=\textwidth]{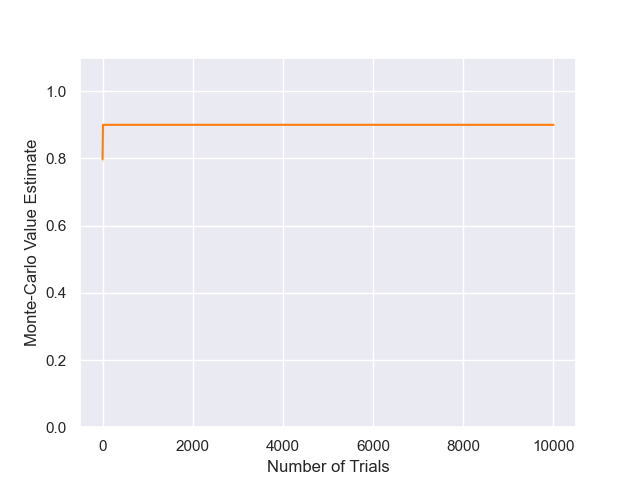}
                    \caption*{$\alpha=0.5,\epsilon=10$}
                \end{subfigure}
                
                \begin{subfigure}[b]{0.24\textwidth}
                    \centering
                    \includegraphics[width=\textwidth]{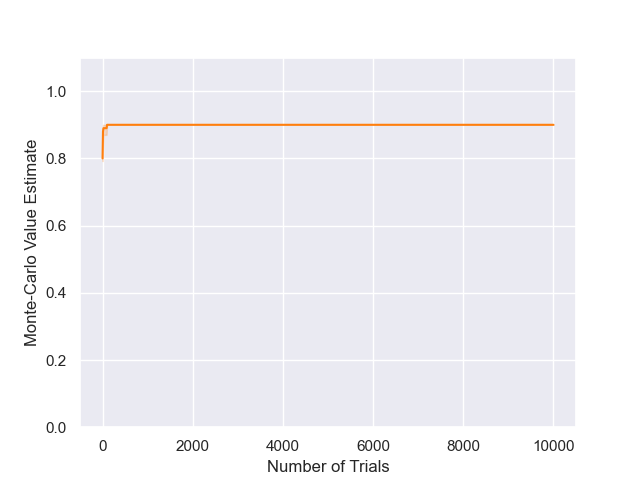}
                    \caption*{$\alpha=0.2,\epsilon=0.01$}
                \end{subfigure}
                \begin{subfigure}[b]{0.24\textwidth}
                    \centering
                    \includegraphics[width=\textwidth]{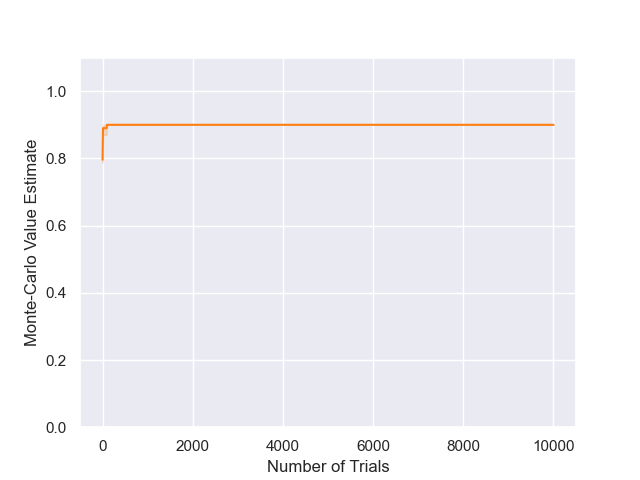}
                    \caption*{$\alpha=0.2,\epsilon=0.1$}
                \end{subfigure}
                \begin{subfigure}[b]{0.24\textwidth}
                    \centering
                    \includegraphics[width=\textwidth]{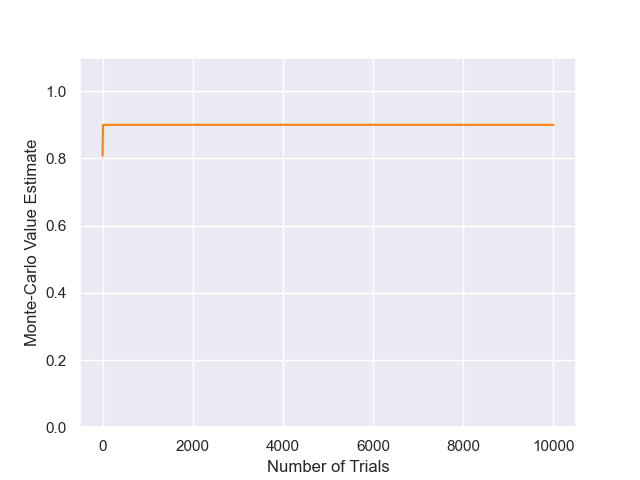}
                    \caption*{$\alpha=0.2,\epsilon=1$}
                \end{subfigure}
                \begin{subfigure}[b]{0.24\textwidth}
                    \centering
                    \includegraphics[width=\textwidth]{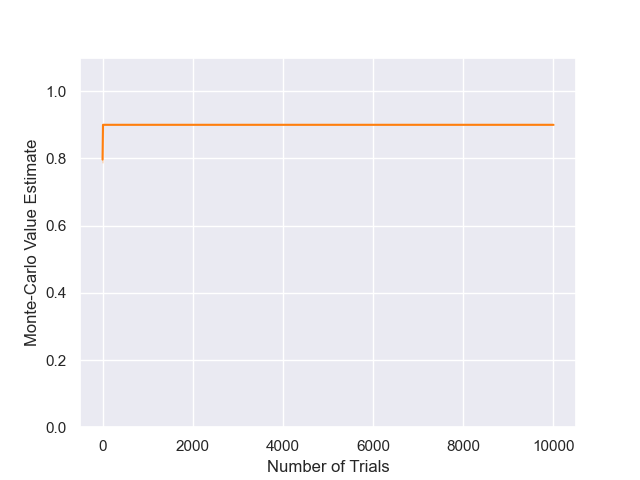}
                    \caption*{$\alpha=0.2,\epsilon=10$}
                \end{subfigure}
                
                \begin{subfigure}[b]{0.24\textwidth}
                    \centering
                    \includegraphics[width=\textwidth]{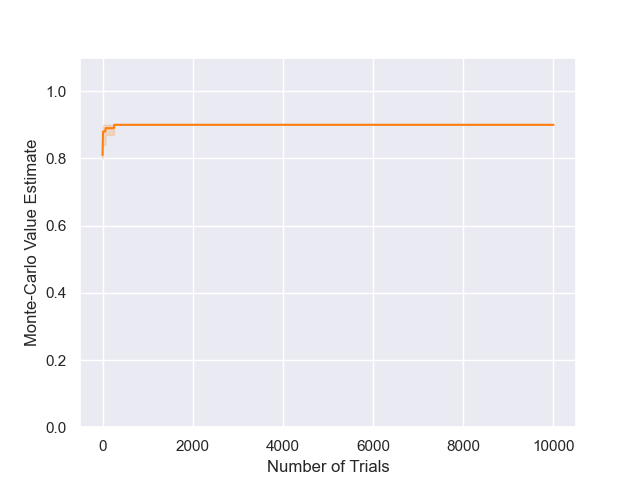}
                    \caption*{$\alpha=0.15,\epsilon=0.01$}
                \end{subfigure}
                \begin{subfigure}[b]{0.24\textwidth}
                    \centering
                    \includegraphics[width=\textwidth]{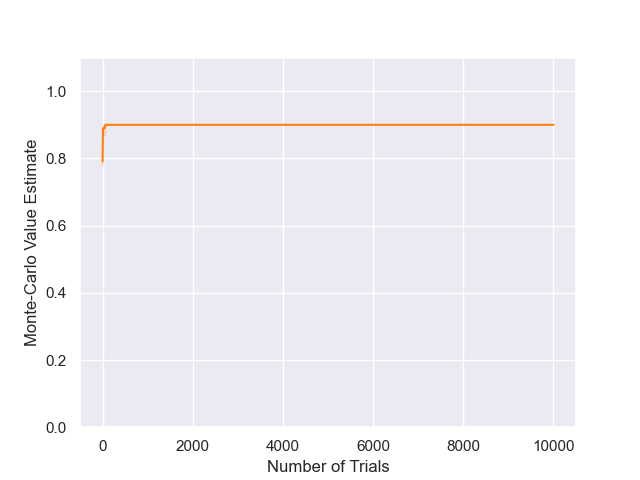}
                    \caption*{$\alpha=0.15,\epsilon=0.1$}
                \end{subfigure}
                \begin{subfigure}[b]{0.24\textwidth}
                    \centering
                    \includegraphics[width=\textwidth]{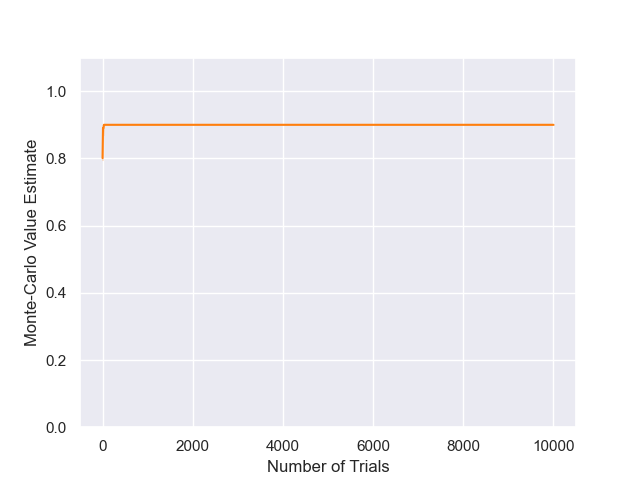}
                    \caption*{$\alpha=0.15,\epsilon=1$}
                \end{subfigure}
                \begin{subfigure}[b]{0.24\textwidth}
                    \centering
                    \includegraphics[width=\textwidth]{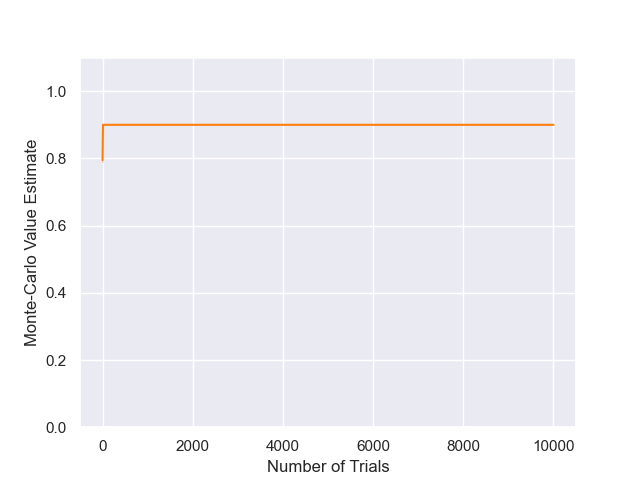}
                    \caption*{$\alpha=0.15,\epsilon=10$}
                \end{subfigure}
                
                \begin{subfigure}[b]{0.24\textwidth}
                    \centering
                    \includegraphics[width=\textwidth]{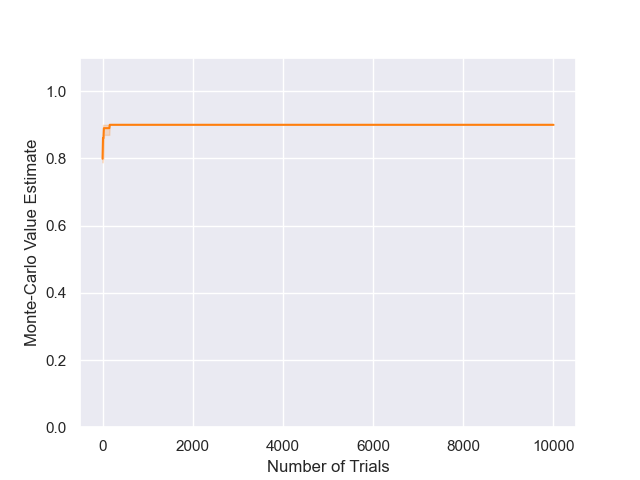}
                    \caption*{$\alpha=0.1,\epsilon=0.01$}
                \end{subfigure}
                \begin{subfigure}[b]{0.24\textwidth}
                    \centering
                    \includegraphics[width=\textwidth]{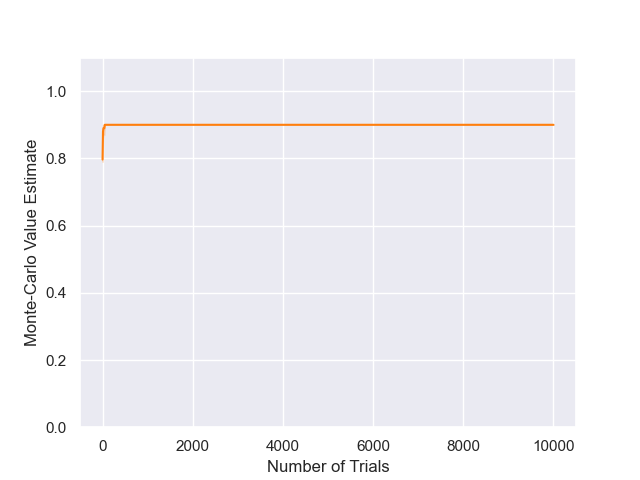}
                    \caption*{$\alpha=0.1,\epsilon=0.1$}
                \end{subfigure}
                \begin{subfigure}[b]{0.24\textwidth}
                    \centering
                    \includegraphics[width=\textwidth]{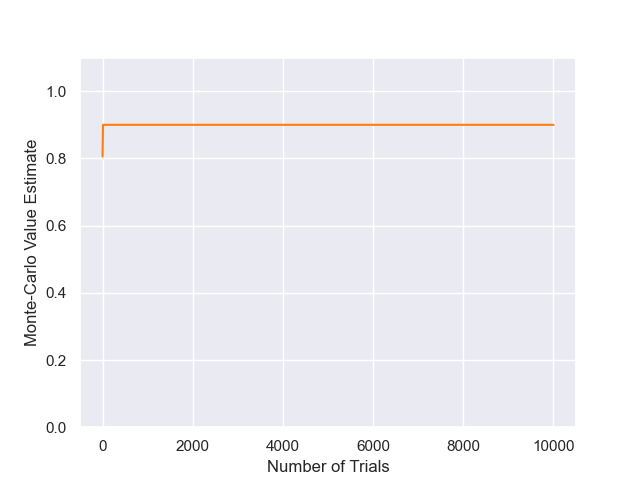}
                    \caption*{$\alpha=0.1,\epsilon=1$}
                \end{subfigure}
                \begin{subfigure}[b]{0.24\textwidth}
                    \centering
                    \includegraphics[width=\textwidth]{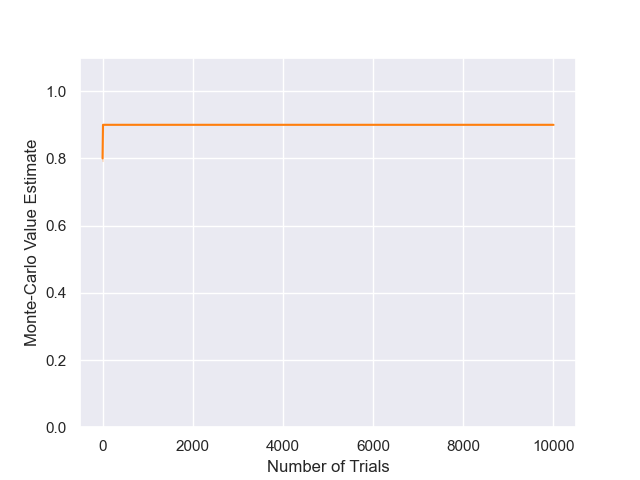}
                    \caption*{$\alpha=0.1,\epsilon=10$}
                \end{subfigure}
                
                \begin{subfigure}[b]{0.24\textwidth}
                    \centering
                    \includegraphics[width=\textwidth]{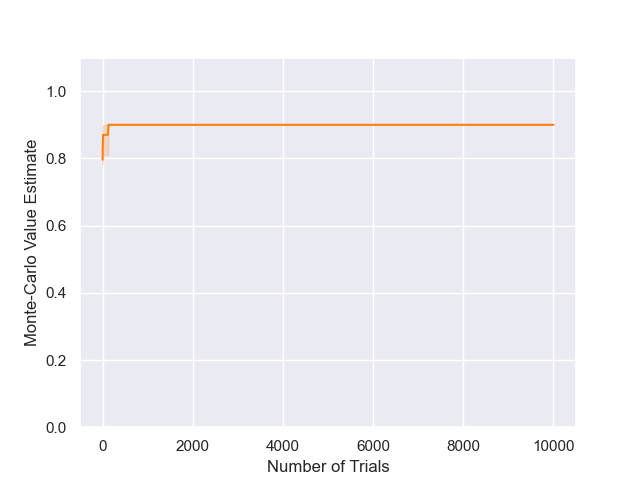}
                    \caption*{$\alpha=0.05,\epsilon=0.01$}
                \end{subfigure}
                \begin{subfigure}[b]{0.24\textwidth}
                    \centering
                    \includegraphics[width=\textwidth]{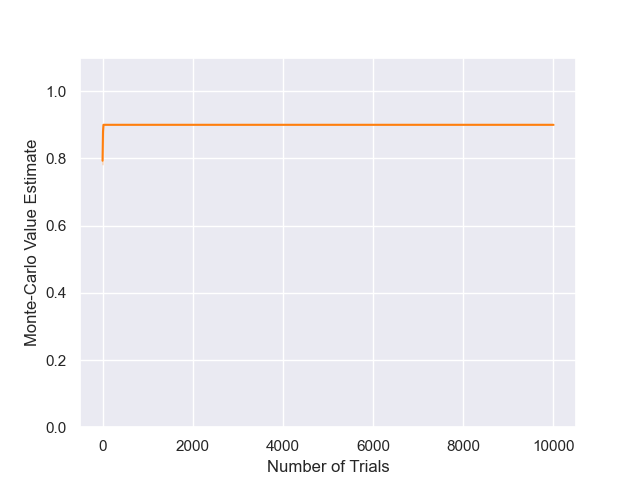}
                    \caption*{$\alpha=0.05,\epsilon=0.1$}
                \end{subfigure}
                \begin{subfigure}[b]{0.24\textwidth}
                    \centering
                    \includegraphics[width=\textwidth]{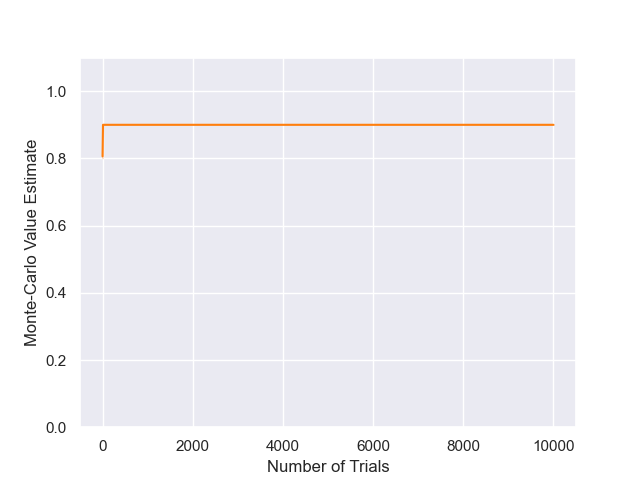}
                    \caption*{$\alpha=0.05,\epsilon=1$}
                \end{subfigure}
                \begin{subfigure}[b]{0.24\textwidth}
                    \centering
                    \includegraphics[width=\textwidth]{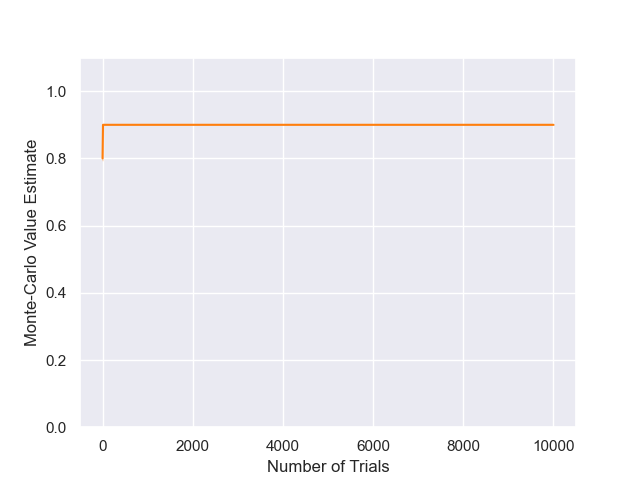}
                    \caption*{$\alpha=0.05,\epsilon=10$}
                \end{subfigure}
                
                \begin{subfigure}[b]{0.24\textwidth}
                    \centering
                    \includegraphics[width=\textwidth]{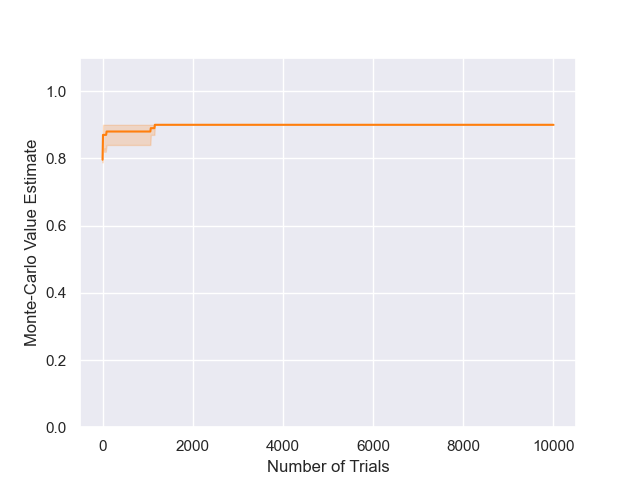}
                    \caption*{$\alpha=0.01,\epsilon=0.01$}
                \end{subfigure}
                \begin{subfigure}[b]{0.24\textwidth}
                    \centering
                    \includegraphics[width=\textwidth]{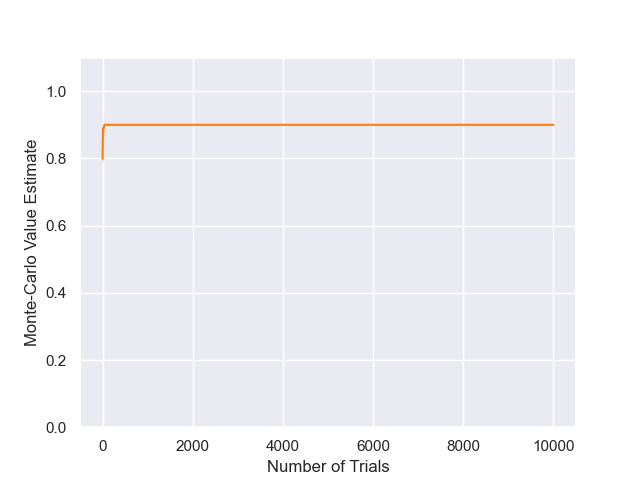}
                    \caption*{$\alpha=0.01,\epsilon=0.1$}
                \end{subfigure}
                \begin{subfigure}[b]{0.24\textwidth}
                    \centering
                    \includegraphics[width=\textwidth]{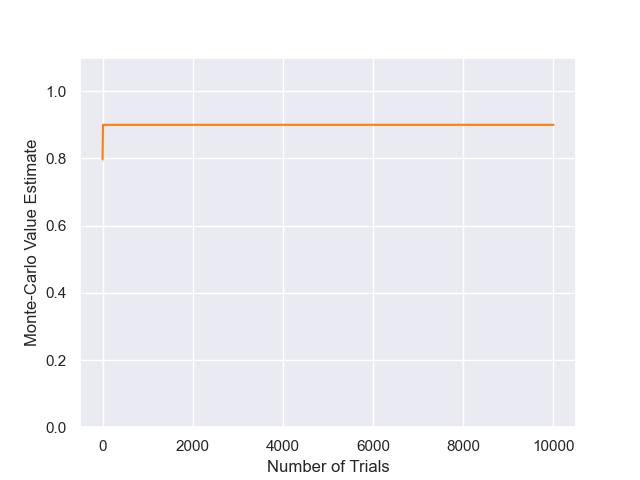}
                    \caption*{$\alpha=0.01,\epsilon=1$}
                \end{subfigure}
                \begin{subfigure}[b]{0.24\textwidth}
                    \centering
                    \includegraphics[width=\textwidth]{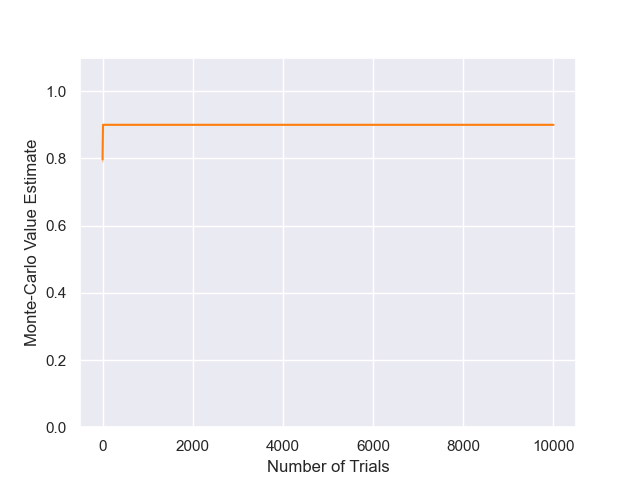}
                    \caption*{$\alpha=0.01,\epsilon=10$}
                \end{subfigure}
                
                \caption{Results for DENTS on the modified 10-chain ($D=10$, $R_f=0.5$), for varying temperatures and exploration parameters. The decay function was set to $\beta(m)=\alpha/\log(e+m)$.}
                \label{fig:dents_10chain_half_hps}
            \end{figure}

            \begin{figure}
                \centering
                
                \begin{subfigure}[b]{0.24\textwidth}
                    \centering
                    \includegraphics[width=\textwidth]{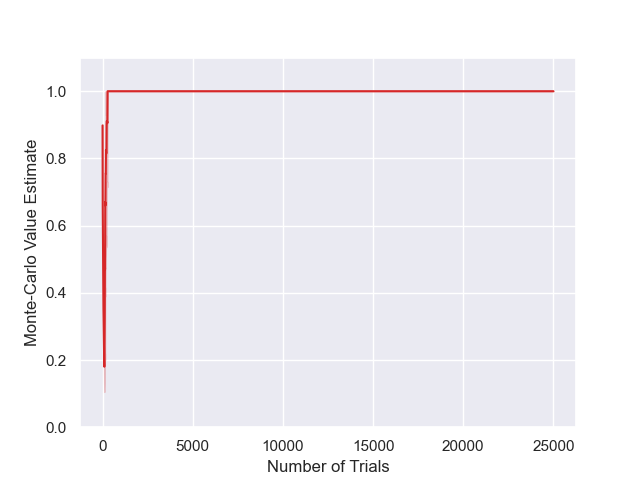}
                    \caption*{$\alpha=10,\epsilon=0.01$}
                \end{subfigure}
                \begin{subfigure}[b]{0.24\textwidth}
                    \centering
                    \includegraphics[width=\textwidth]{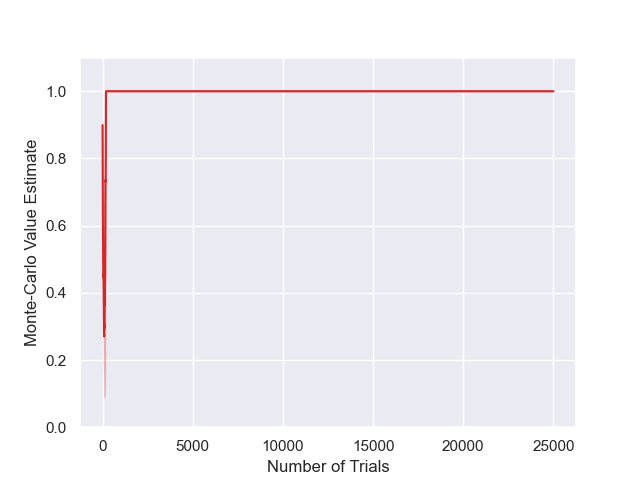}
                    \caption*{$\alpha=10,\epsilon=0.1$}
                \end{subfigure}
                \begin{subfigure}[b]{0.24\textwidth}
                    \centering
                    \includegraphics[width=\textwidth]{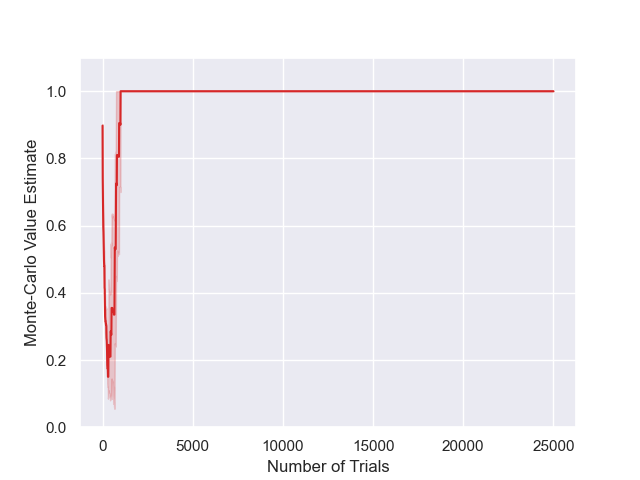}
                    \caption*{$\alpha=10,\epsilon=1$}
                \end{subfigure}
                \begin{subfigure}[b]{0.24\textwidth}
                    \centering
                    \includegraphics[width=\textwidth]{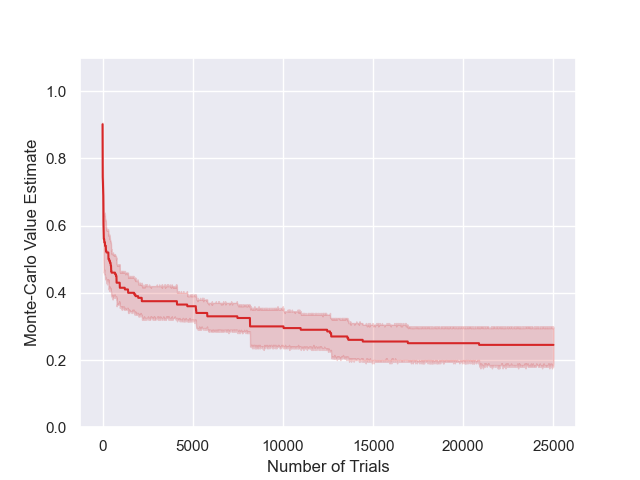}
                    \caption*{$\alpha=10,\epsilon=10$}
                \end{subfigure}
                
                \begin{subfigure}[b]{0.24\textwidth}
                    \centering
                    \includegraphics[width=\textwidth]{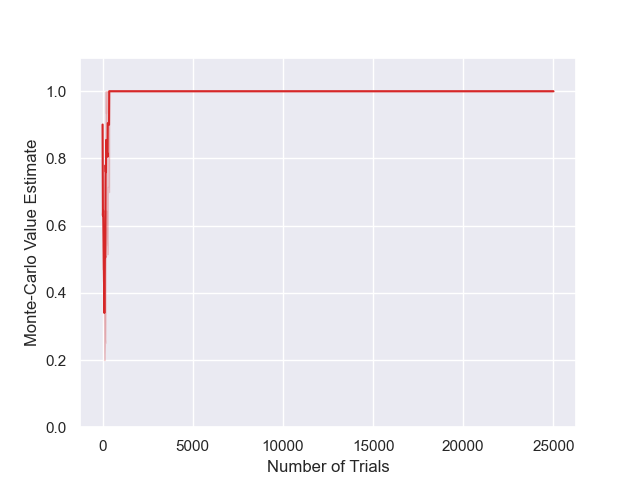}
                    \caption*{$\alpha=1,\epsilon=0.01$}
                \end{subfigure}
                \begin{subfigure}[b]{0.24\textwidth}
                    \centering
                    \includegraphics[width=\textwidth]{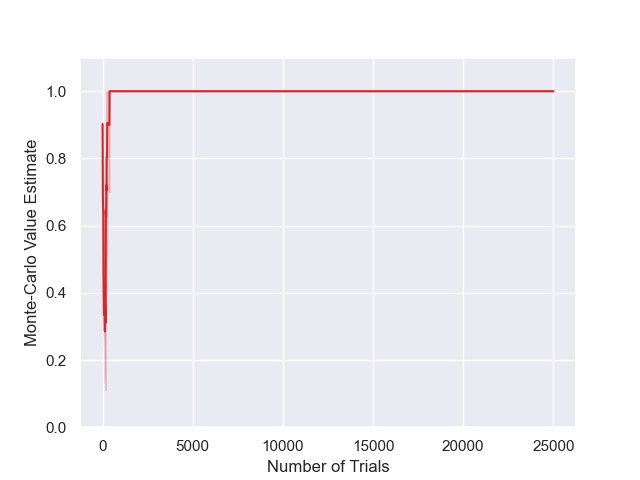}
                    \caption*{$\alpha=1,\epsilon=0.1$}
                \end{subfigure}
                \begin{subfigure}[b]{0.24\textwidth}
                    \centering
                    \includegraphics[width=\textwidth]{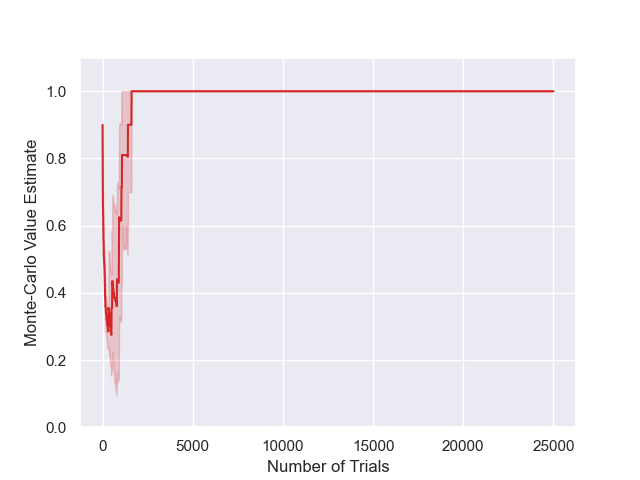}
                    \caption*{$\alpha=1,\epsilon=1$}
                \end{subfigure}
                \begin{subfigure}[b]{0.24\textwidth}
                    \centering
                    \includegraphics[width=\textwidth]{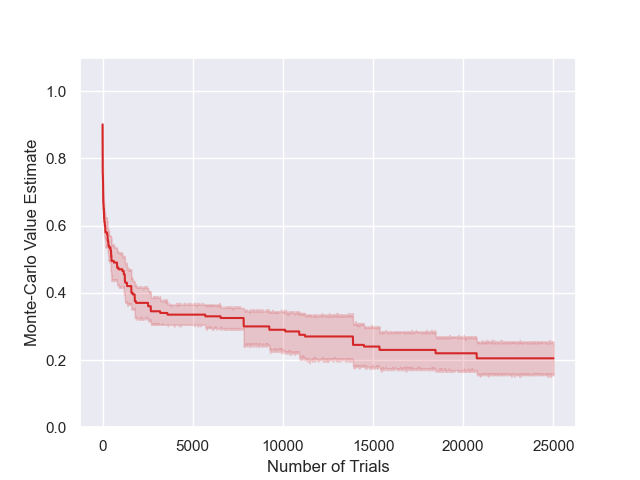}
                    \caption*{$\alpha=1,\epsilon=10$}
                \end{subfigure}
                
                \begin{subfigure}[b]{0.24\textwidth}
                    \centering
                    \includegraphics[width=\textwidth]{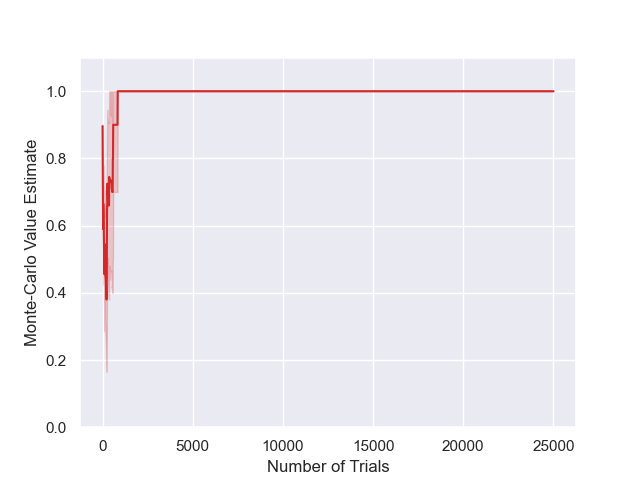}
                    \caption*{$\alpha=0.5,\epsilon=0.01$}
                \end{subfigure}
                \begin{subfigure}[b]{0.24\textwidth}
                    \centering
                    \includegraphics[width=\textwidth]{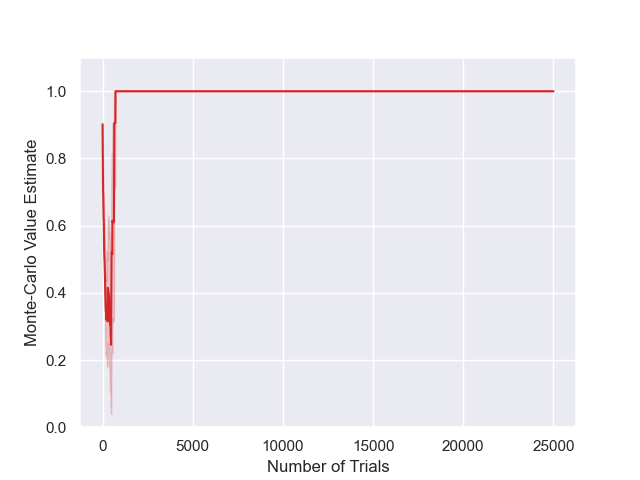}
                    \caption*{$\alpha=0.5,\epsilon=0.1$}
                \end{subfigure}
                \begin{subfigure}[b]{0.24\textwidth}
                    \centering
                    \includegraphics[width=\textwidth]{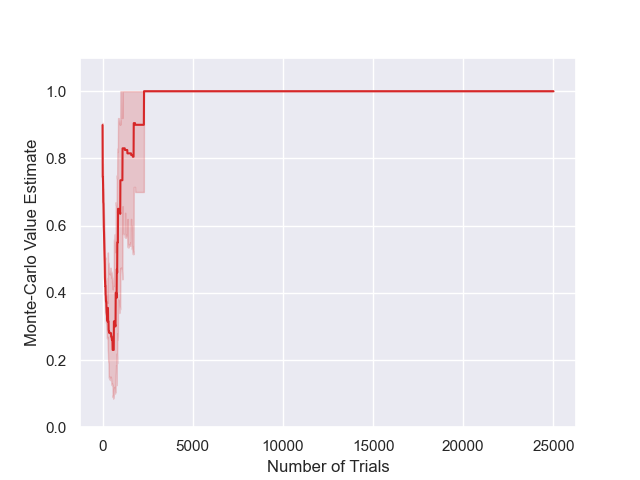}
                    \caption*{$\alpha=0.5,\epsilon=1$}
                \end{subfigure}
                \begin{subfigure}[b]{0.24\textwidth}
                    \centering
                    \includegraphics[width=\textwidth]{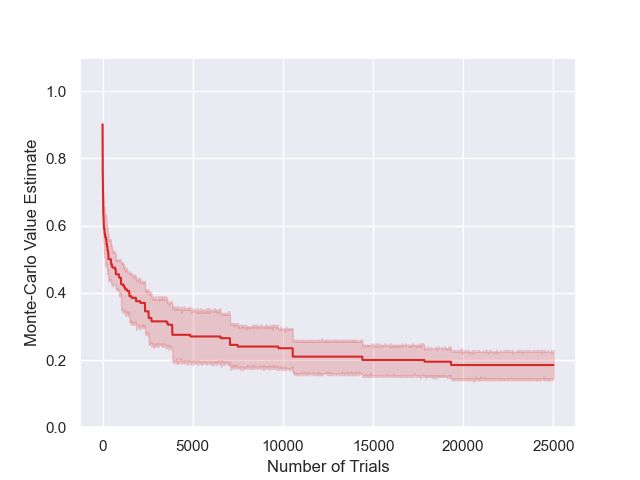}
                    \caption*{$\alpha=0.5,\epsilon=10$}
                \end{subfigure}
                
                \begin{subfigure}[b]{0.24\textwidth}
                    \centering
                    \includegraphics[width=\textwidth]{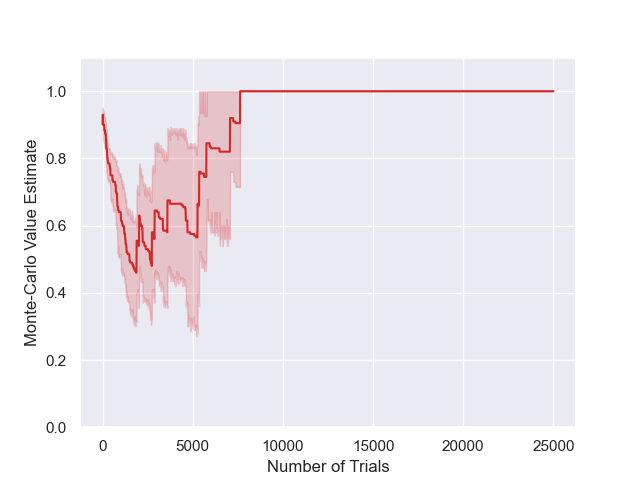}
                    \caption*{$\alpha=0.2,\epsilon=0.01$}
                \end{subfigure}
                \begin{subfigure}[b]{0.24\textwidth}
                    \centering
                    \includegraphics[width=\textwidth]{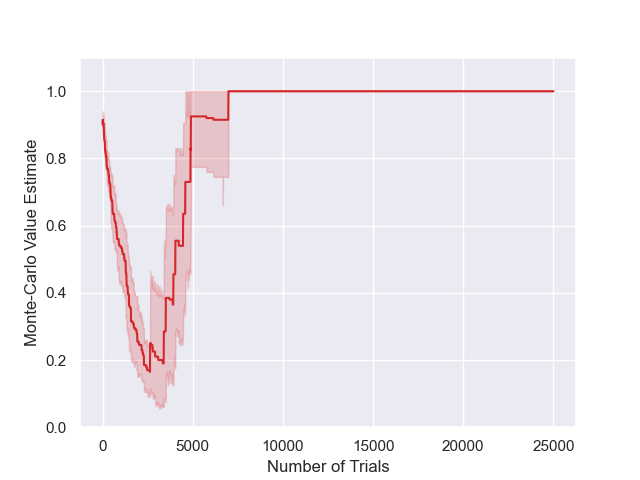}
                    \caption*{$\alpha=0.2,\epsilon=0.1$}
                \end{subfigure}
                \begin{subfigure}[b]{0.24\textwidth}
                    \centering
                    \includegraphics[width=\textwidth]{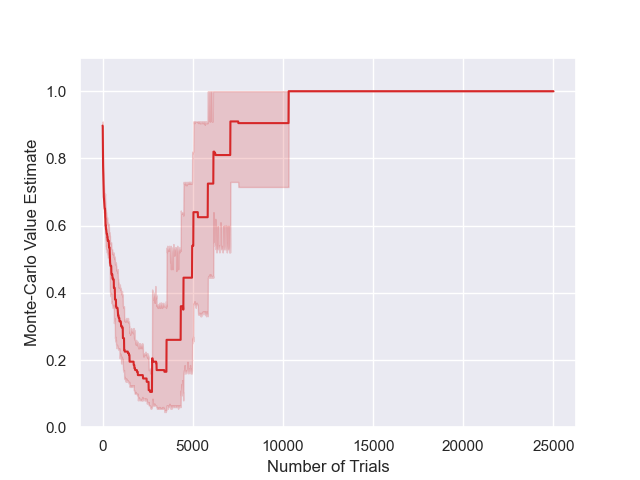}
                    \caption*{$\alpha=0.2,\epsilon=1$}
                \end{subfigure}
                \begin{subfigure}[b]{0.24\textwidth}
                    \centering
                    \includegraphics[width=\textwidth]{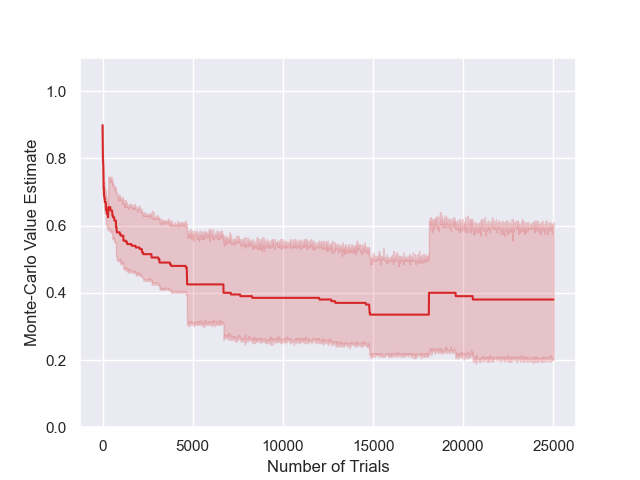}
                    \caption*{$\alpha=0.2,\epsilon=10$}
                \end{subfigure}
                
                \begin{subfigure}[b]{0.24\textwidth}
                    \centering
                    \includegraphics[width=\textwidth]{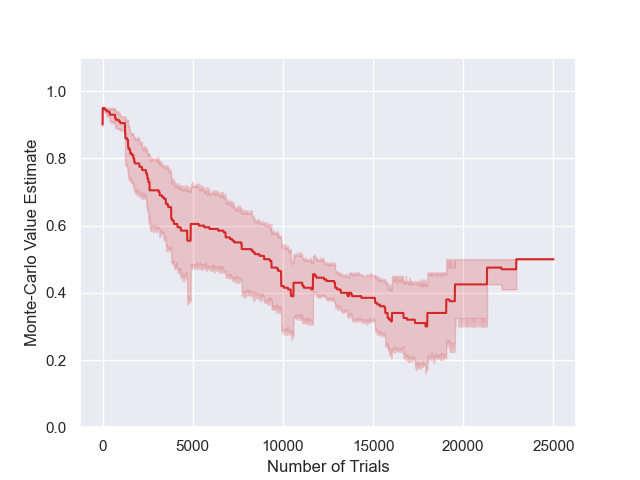}
                    \caption*{$\alpha=0.15,\epsilon=0.01$}
                \end{subfigure}
                \begin{subfigure}[b]{0.24\textwidth}
                    \centering
                    \includegraphics[width=\textwidth]{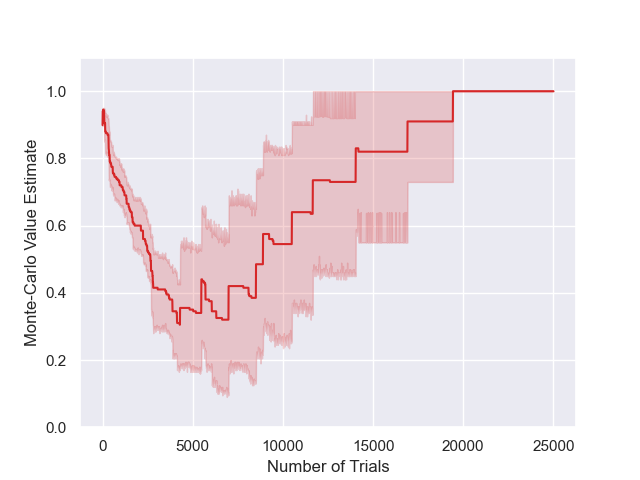}
                    \caption*{$\alpha=0.15,\epsilon=0.1$}
                \end{subfigure}
                \begin{subfigure}[b]{0.24\textwidth}
                    \centering
                    \includegraphics[width=\textwidth]{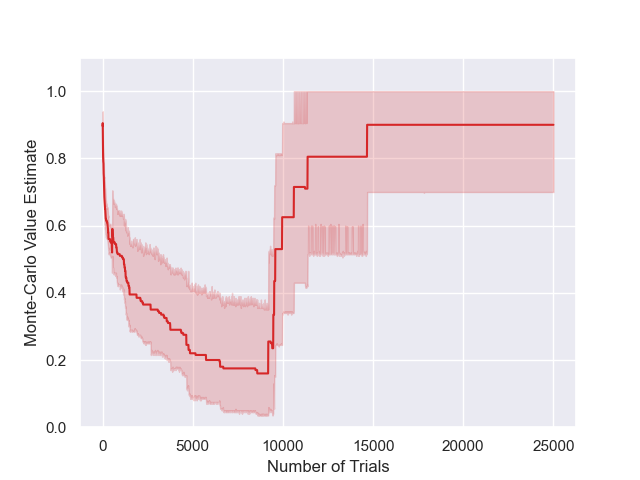}
                    \caption*{$\alpha=0.15,\epsilon=1$}
                \end{subfigure}
                \begin{subfigure}[b]{0.24\textwidth}
                    \centering
                    \includegraphics[width=\textwidth]{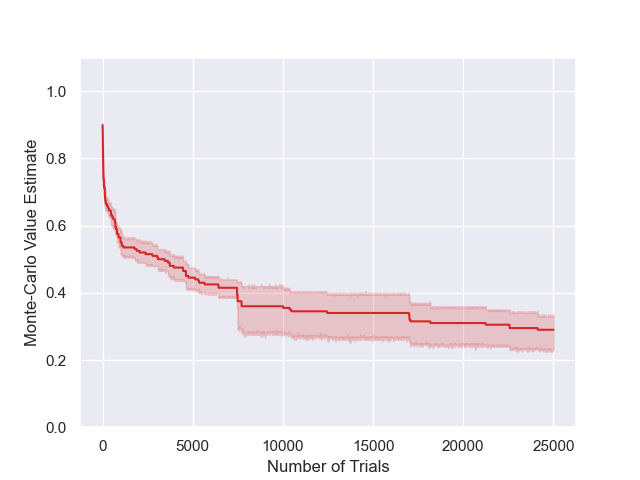}
                    \caption*{$\alpha=0.15,\epsilon=10$}
                \end{subfigure}
                
                \begin{subfigure}[b]{0.24\textwidth}
                    \centering
                    \includegraphics[width=\textwidth]{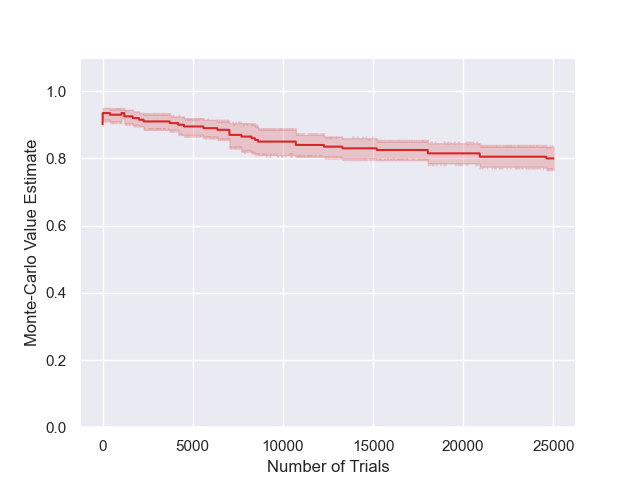}
                    \caption*{$\alpha=0.1,\epsilon=0.01$}
                \end{subfigure}
                \begin{subfigure}[b]{0.24\textwidth}
                    \centering
                    \includegraphics[width=\textwidth]{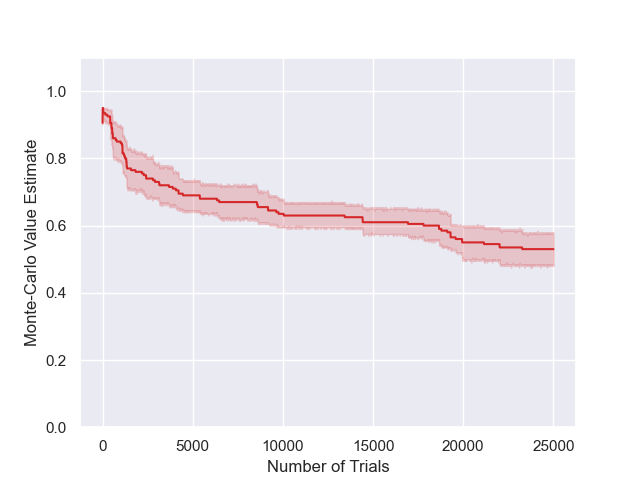}
                    \caption*{$\alpha=0.1,\epsilon=0.1$}
                \end{subfigure}
                \begin{subfigure}[b]{0.24\textwidth}
                    \centering
                    \includegraphics[width=\textwidth]{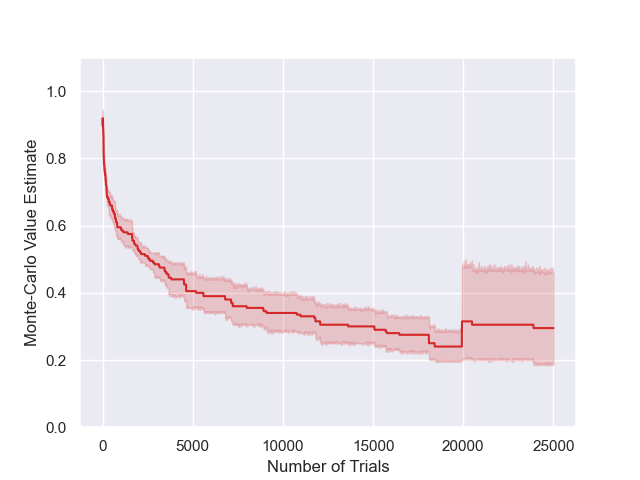}
                    \caption*{$\alpha=0.1,\epsilon=1$}
                \end{subfigure}
                \begin{subfigure}[b]{0.24\textwidth}
                    \centering
                    \includegraphics[width=\textwidth]{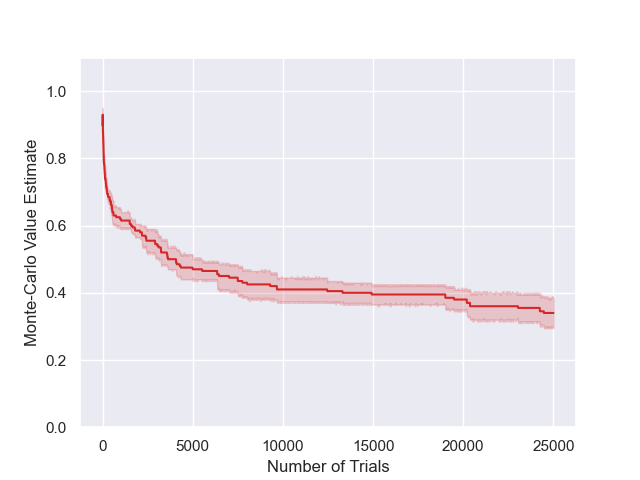}
                    \caption*{$\alpha=0.1,\epsilon=10$}
                \end{subfigure}
                
                \begin{subfigure}[b]{0.24\textwidth}
                    \centering
                    \includegraphics[width=\textwidth]{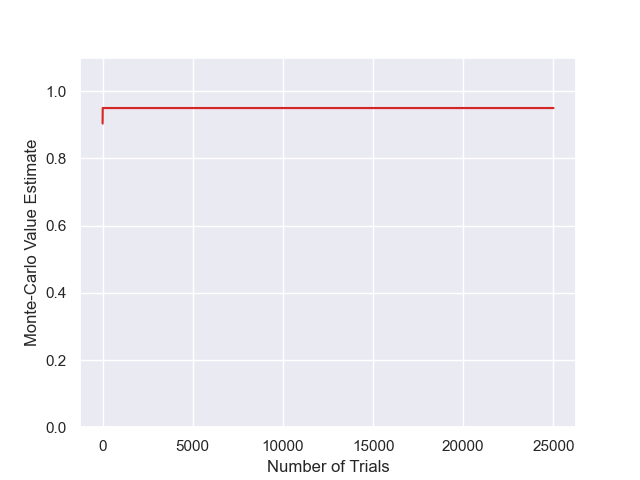}
                    \caption*{$\alpha=0.05,\epsilon=0.01$}
                \end{subfigure}
                \begin{subfigure}[b]{0.24\textwidth}
                    \centering
                    \includegraphics[width=\textwidth]{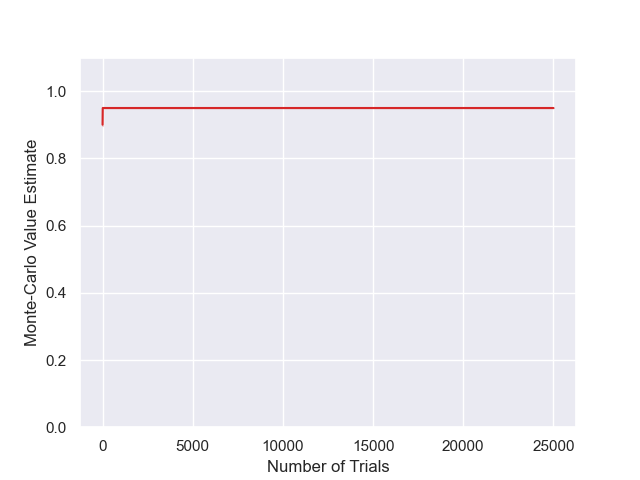}
                    \caption*{$\alpha=0.05,\epsilon=0.1$}
                \end{subfigure}
                \begin{subfigure}[b]{0.24\textwidth}
                    \centering
                    \includegraphics[width=\textwidth]{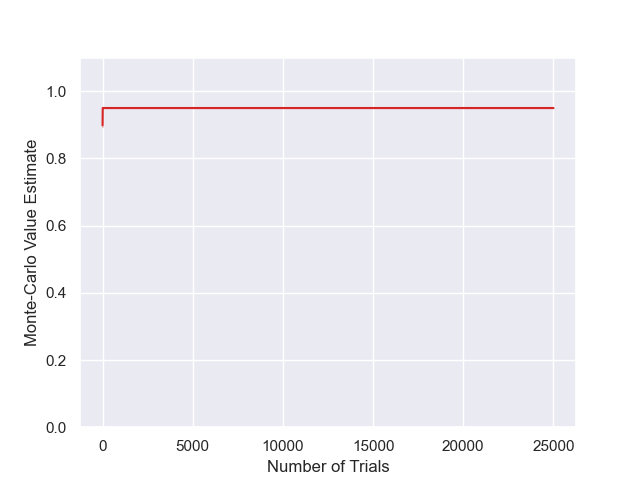}
                    \caption*{$\alpha=0.05,\epsilon=1$}
                \end{subfigure}
                \begin{subfigure}[b]{0.24\textwidth}
                    \centering
                    \includegraphics[width=\textwidth]{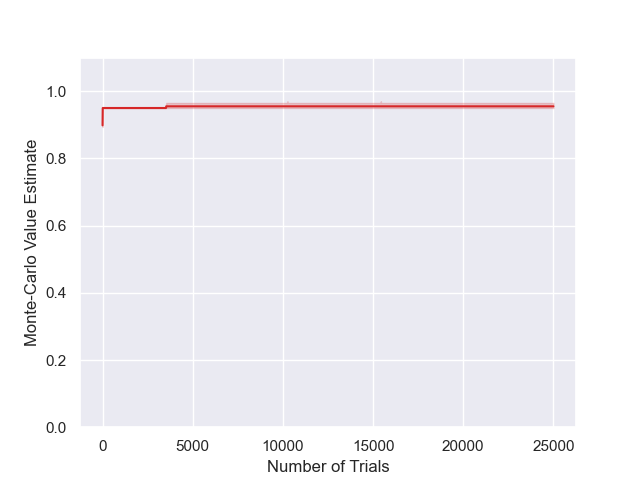}
                    \caption*{$\alpha=0.05,\epsilon=10$}
                \end{subfigure}
                
                \caption{Results for MENTS on the 20-chain ($D=20$, $R_f=1.0$), for varying temperatures and exploration parameters.}
                \label{fig:ments_20chain_hps}
            \end{figure}

            \begin{figure}
                \centering
                
                \begin{subfigure}[b]{0.24\textwidth}
                    \centering
                    \includegraphics[width=\textwidth]{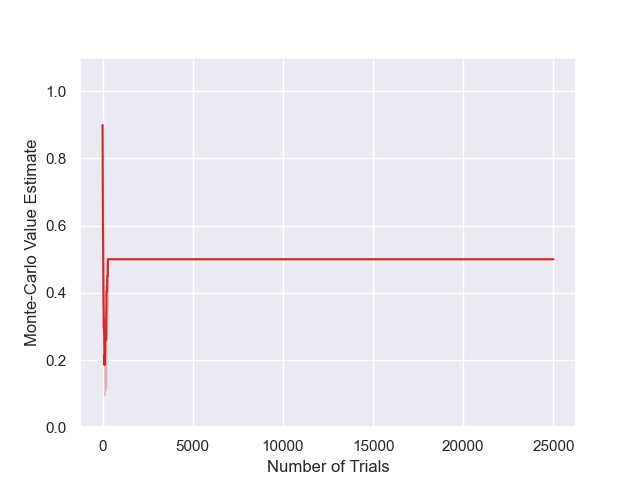}
                    \caption*{$\alpha=10,\epsilon=0.01$}
                \end{subfigure}
                \begin{subfigure}[b]{0.24\textwidth}
                    \centering
                    \includegraphics[width=\textwidth]{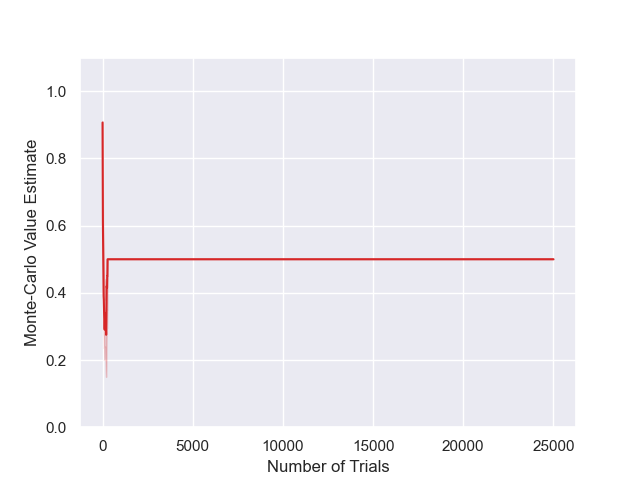}
                    \caption*{$\alpha=10,\epsilon=0.1$}
                \end{subfigure}
                \begin{subfigure}[b]{0.24\textwidth}
                    \centering
                    \includegraphics[width=\textwidth]{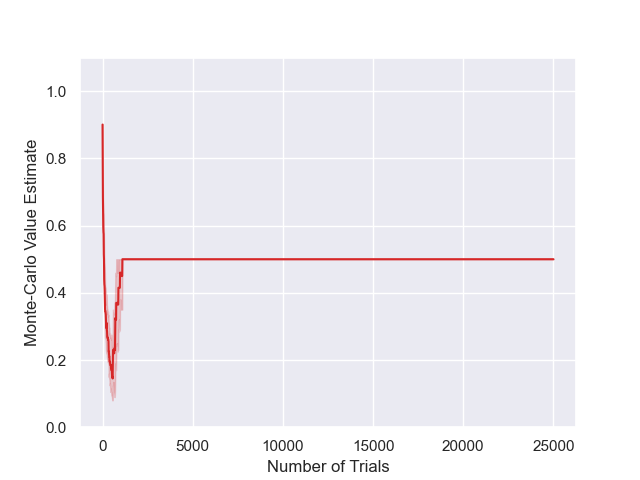}
                    \caption*{$\alpha=10,\epsilon=1$}
                \end{subfigure}
                \begin{subfigure}[b]{0.24\textwidth}
                    \centering
                    \includegraphics[width=\textwidth]{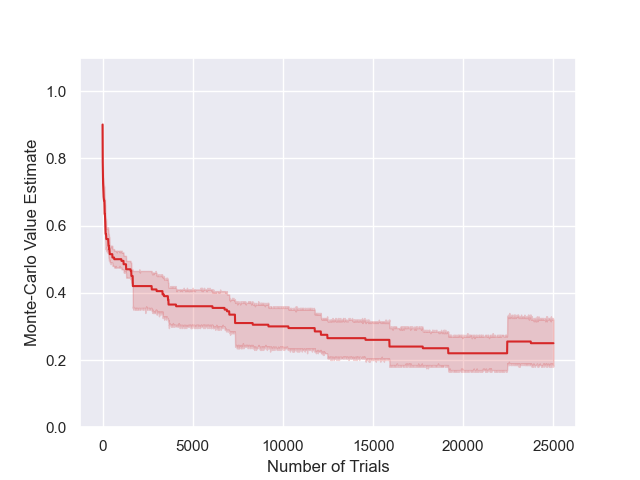}
                    \caption*{$\alpha=10,\epsilon=10$}
                \end{subfigure}
                
                \begin{subfigure}[b]{0.24\textwidth}
                    \centering
                    \includegraphics[width=\textwidth]{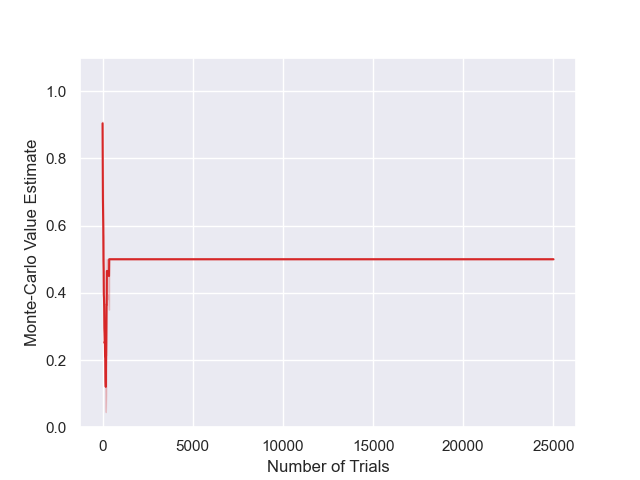}
                    \caption*{$\alpha=1,\epsilon=0.01$}
                \end{subfigure}
                \begin{subfigure}[b]{0.24\textwidth}
                    \centering
                    \includegraphics[width=\textwidth]{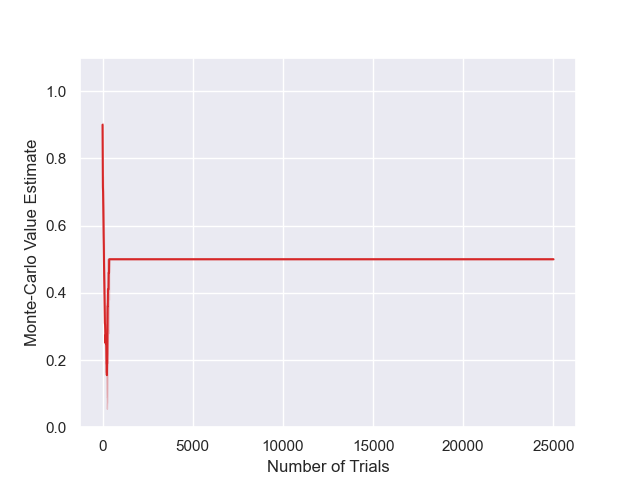}
                    \caption*{$\alpha=1,\epsilon=0.1$}
                \end{subfigure}
                \begin{subfigure}[b]{0.24\textwidth}
                    \centering
                    \includegraphics[width=\textwidth]{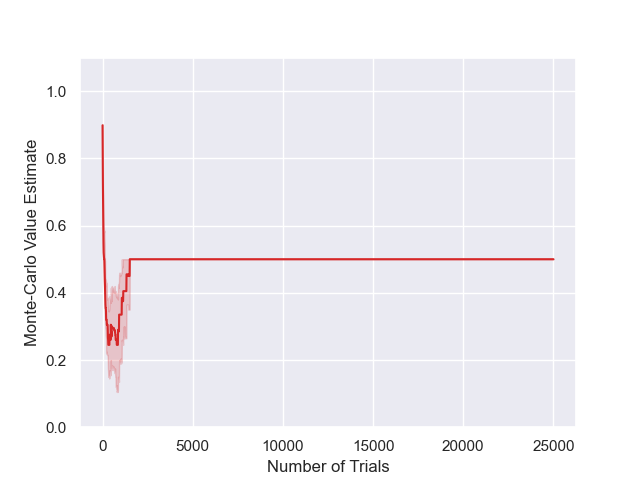}
                    \caption*{$\alpha=1,\epsilon=1$}
                \end{subfigure}
                \begin{subfigure}[b]{0.24\textwidth}
                    \centering
                    \includegraphics[width=\textwidth]{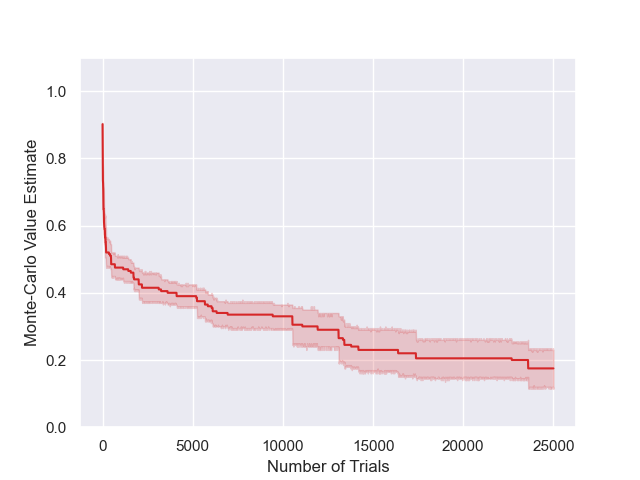}
                    \caption*{$\alpha=1,\epsilon=10$}
                \end{subfigure}
                
                \begin{subfigure}[b]{0.24\textwidth}
                    \centering
                    \includegraphics[width=\textwidth]{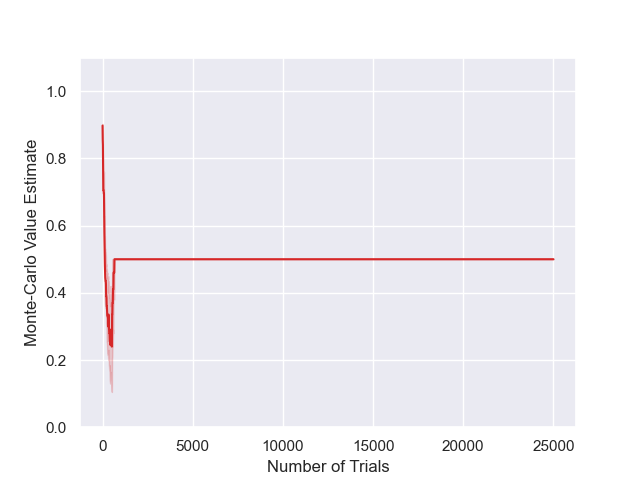}
                    \caption*{$\alpha=0.5,\epsilon=0.01$}
                \end{subfigure}
                \begin{subfigure}[b]{0.24\textwidth}
                    \centering
                    \includegraphics[width=\textwidth]{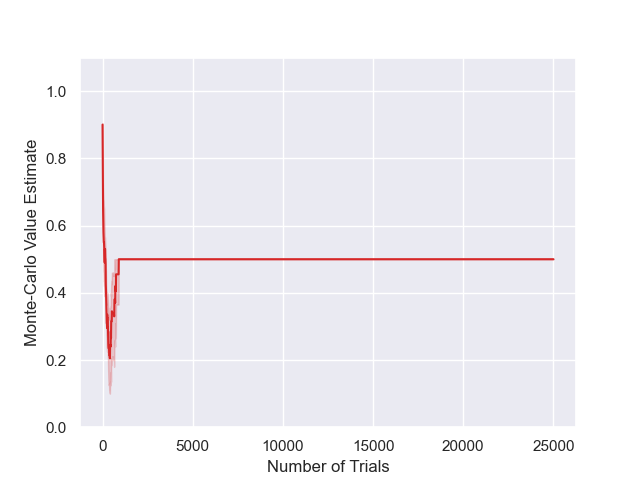}
                    \caption*{$\alpha=0.5,\epsilon=0.1$}
                \end{subfigure}
                \begin{subfigure}[b]{0.24\textwidth}
                    \centering
                    \includegraphics[width=\textwidth]{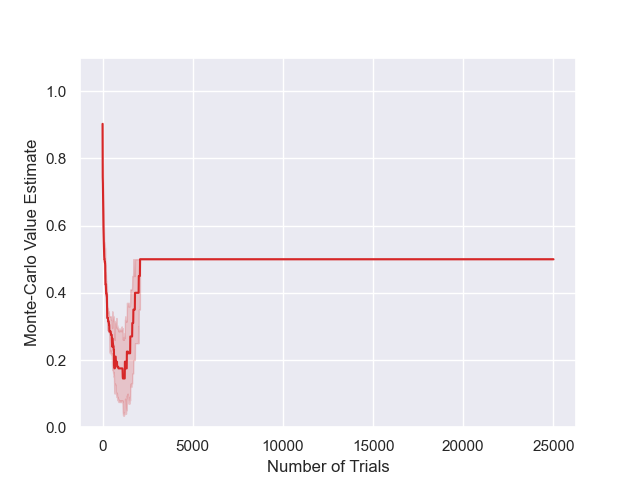}
                    \caption*{$\alpha=0.5,\epsilon=1$}
                \end{subfigure}
                \begin{subfigure}[b]{0.24\textwidth}
                    \centering
                    \includegraphics[width=\textwidth]{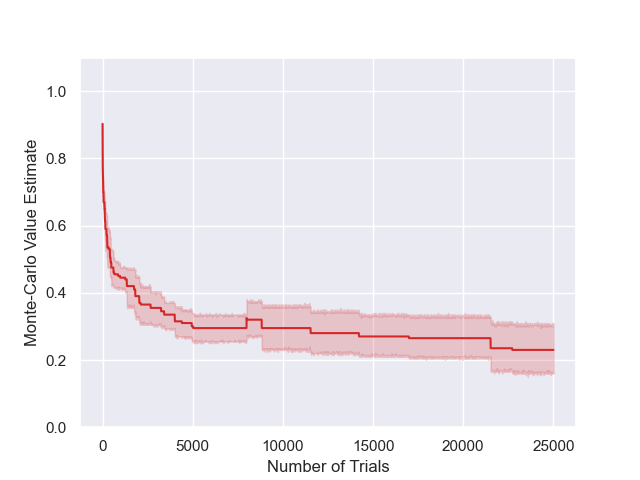}
                    \caption*{$\alpha=0.5,\epsilon=10$}
                \end{subfigure}
                
                \begin{subfigure}[b]{0.24\textwidth}
                    \centering
                    \includegraphics[width=\textwidth]{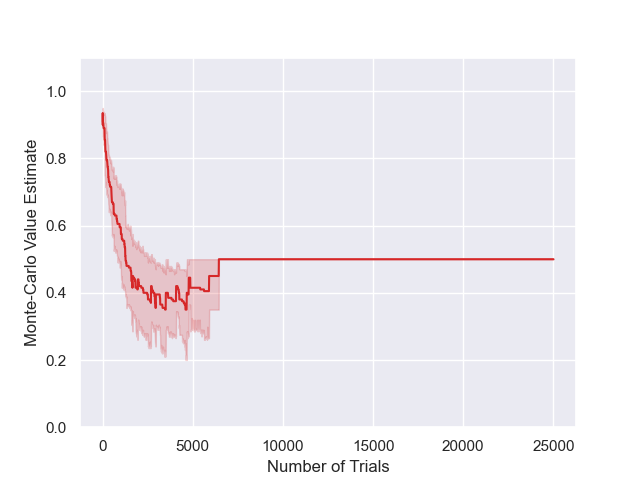}
                    \caption*{$\alpha=0.2,\epsilon=0.01$}
                \end{subfigure}
                \begin{subfigure}[b]{0.24\textwidth}
                    \centering
                    \includegraphics[width=\textwidth]{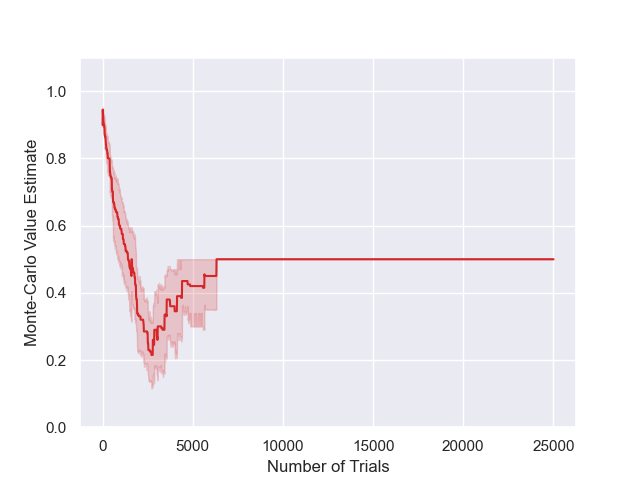}
                    \caption*{$\alpha=0.2,\epsilon=0.1$}
                \end{subfigure}
                \begin{subfigure}[b]{0.24\textwidth}
                    \centering
                    \includegraphics[width=\textwidth]{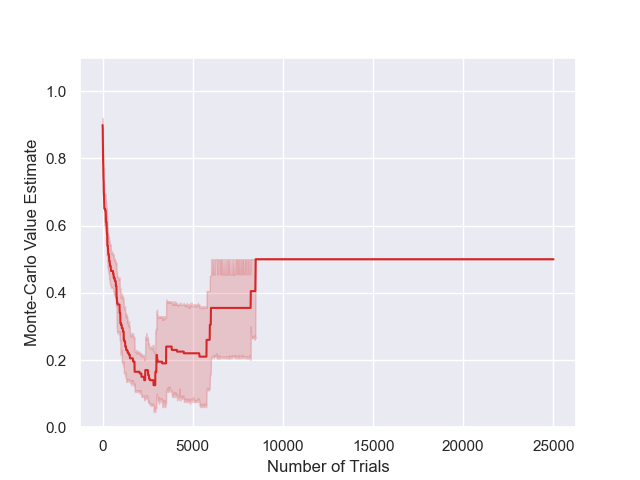}
                    \caption*{$\alpha=0.2,\epsilon=1$}
                \end{subfigure}
                \begin{subfigure}[b]{0.24\textwidth}
                    \centering
                    \includegraphics[width=\textwidth]{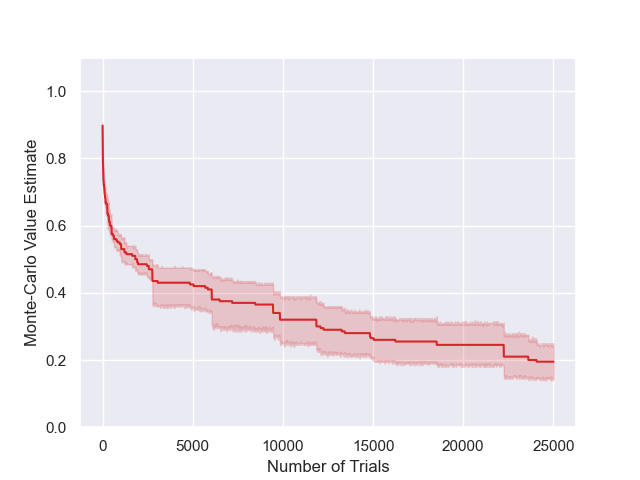}
                    \caption*{$\alpha=0.2,\epsilon=10$}
                \end{subfigure}
                
                \begin{subfigure}[b]{0.24\textwidth}
                    \centering
                    \includegraphics[width=\textwidth]{Figures/app/param_sens/dchain/005_20chain5_01_ments_epsilon=0.01,temp=0.15.png}
                    \caption*{$\alpha=0.15,\epsilon=0.01$}
                \end{subfigure}
                \begin{subfigure}[b]{0.24\textwidth}
                    \centering
                    \includegraphics[width=\textwidth]{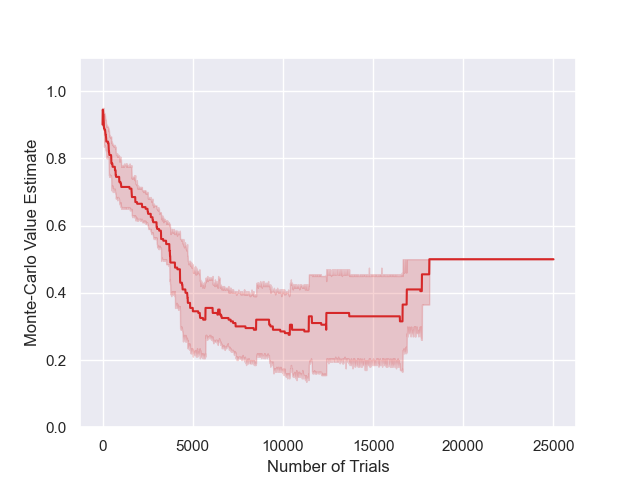}
                    \caption*{$\alpha=0.15,\epsilon=0.1$}
                \end{subfigure}
                \begin{subfigure}[b]{0.24\textwidth}
                    \centering
                    \includegraphics[width=\textwidth]{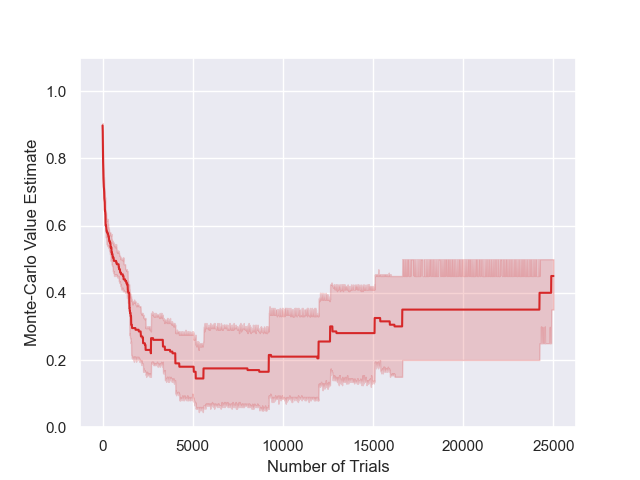}
                    \caption*{$\alpha=0.15,\epsilon=1$}
                \end{subfigure}
                \begin{subfigure}[b]{0.24\textwidth}
                    \centering
                    \includegraphics[width=\textwidth]{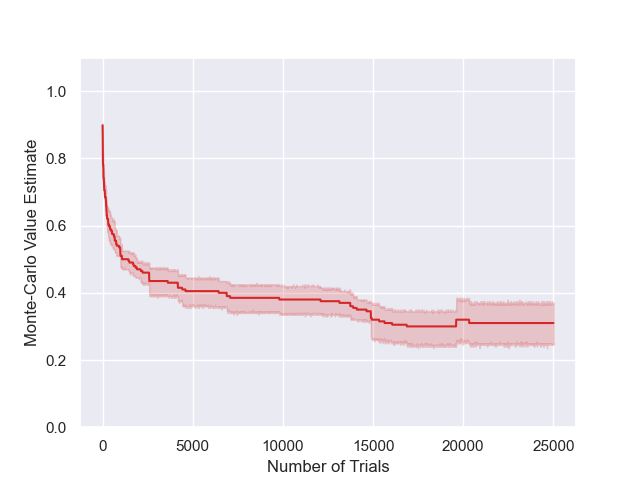}
                    \caption*{$\alpha=0.15,\epsilon=10$}
                \end{subfigure}
                
                \begin{subfigure}[b]{0.24\textwidth}
                    \centering
                    \includegraphics[width=\textwidth]{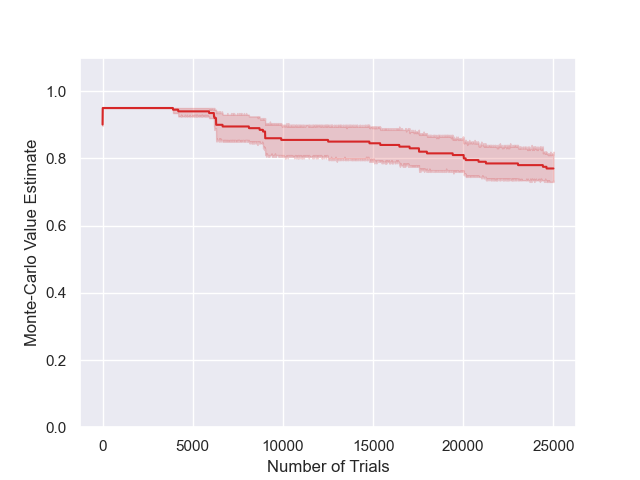}
                    \caption*{$\alpha=0.1,\epsilon=0.01$}
                \end{subfigure}
                \begin{subfigure}[b]{0.24\textwidth}
                    \centering
                    \includegraphics[width=\textwidth]{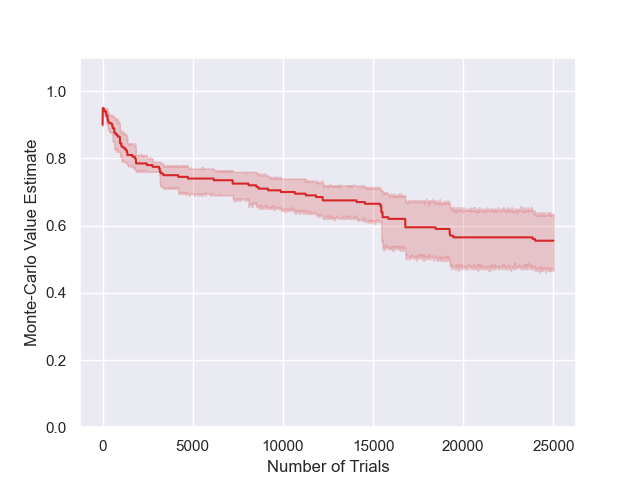}
                    \caption*{$\alpha=0.1,\epsilon=0.1$}
                \end{subfigure}
                \begin{subfigure}[b]{0.24\textwidth}
                    \centering
                    \includegraphics[width=\textwidth]{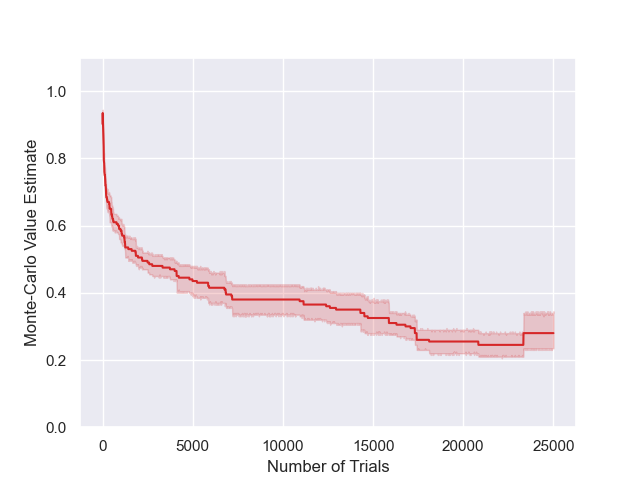}
                    \caption*{$\alpha=0.1,\epsilon=1$}
                \end{subfigure}
                \begin{subfigure}[b]{0.24\textwidth}
                    \centering
                    \includegraphics[width=\textwidth]{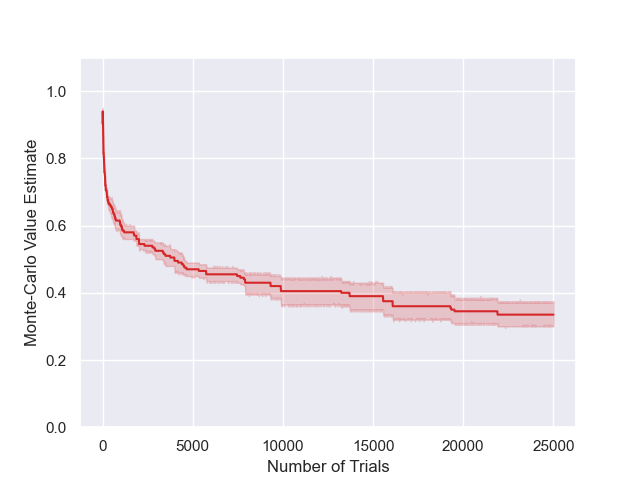}
                    \caption*{$\alpha=0.1,\epsilon=10$}
                \end{subfigure}
                
                \begin{subfigure}[b]{0.24\textwidth}
                    \centering
                    \includegraphics[width=\textwidth]{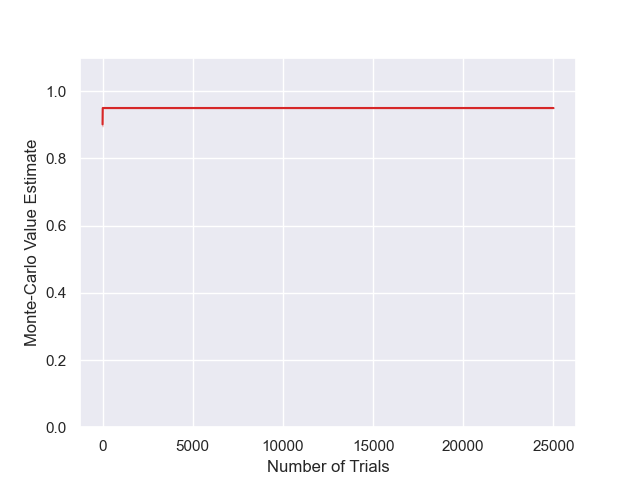}
                    \caption*{$\alpha=0.05,\epsilon=0.01$}
                \end{subfigure}
                \begin{subfigure}[b]{0.24\textwidth}
                    \centering
                    \includegraphics[width=\textwidth]{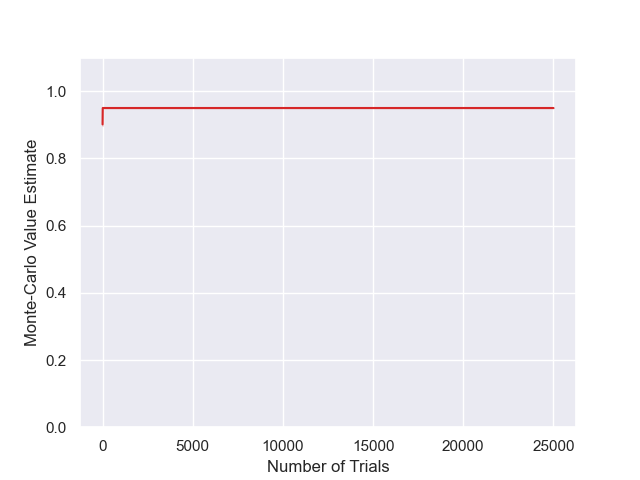}
                    \caption*{$\alpha=0.05,\epsilon=0.1$}
                \end{subfigure}
                \begin{subfigure}[b]{0.24\textwidth}
                    \centering
                    \includegraphics[width=\textwidth]{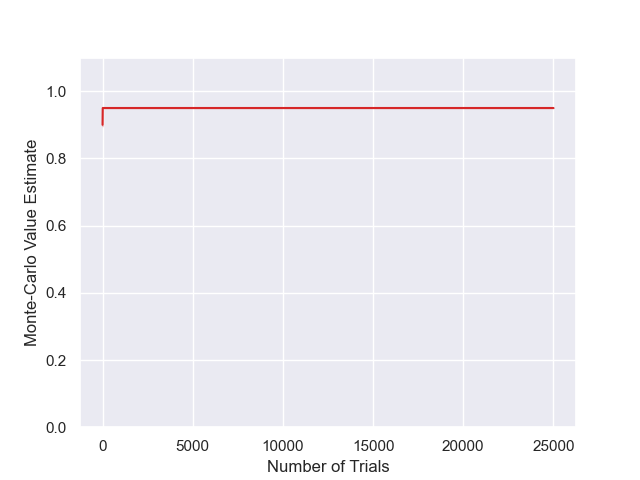}
                    \caption*{$\alpha=0.05,\epsilon=1$}
                \end{subfigure}
                \begin{subfigure}[b]{0.24\textwidth}
                    \centering
                    \includegraphics[width=\textwidth]{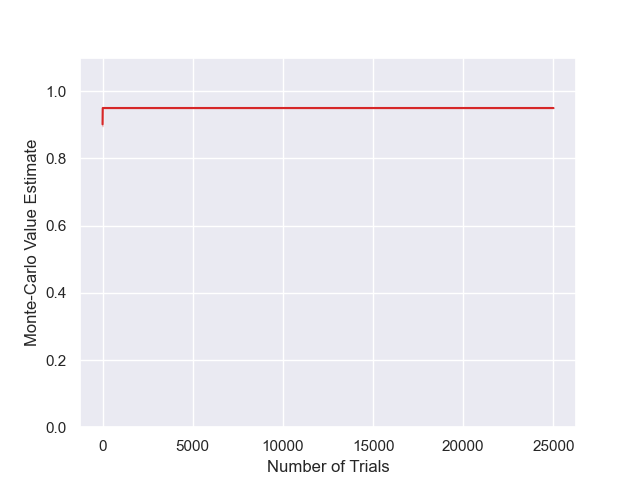}
                    \caption*{$\alpha=0.05,\epsilon=10$}
                \end{subfigure}
                
                \caption{Results for MENTS on the modified 20-chain ($D=20$, $R_f=0.5$), for varying temperatures and exploration parameters.}
                \label{fig:ments_20chain_half_hps}
            \end{figure}

            \begin{figure}
                \centering
                
                \begin{subfigure}[b]{0.24\textwidth}
                    \centering
                    \includegraphics[width=\textwidth]{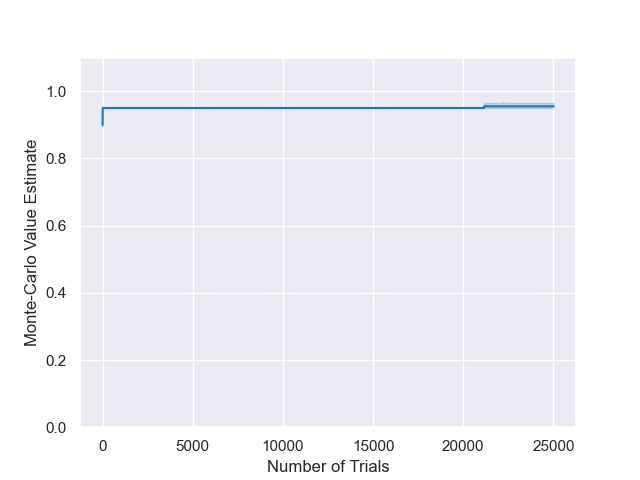}
                    \caption*{$\alpha=10,\epsilon=0.01$}
                \end{subfigure}
                \begin{subfigure}[b]{0.24\textwidth}
                    \centering
                    \includegraphics[width=\textwidth]{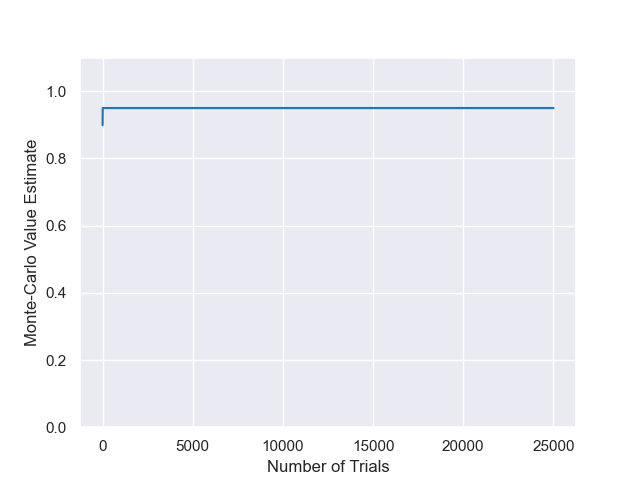}
                    \caption*{$\alpha=10,\epsilon=0.1$}
                \end{subfigure}
                \begin{subfigure}[b]{0.24\textwidth}
                    \centering
                    \includegraphics[width=\textwidth]{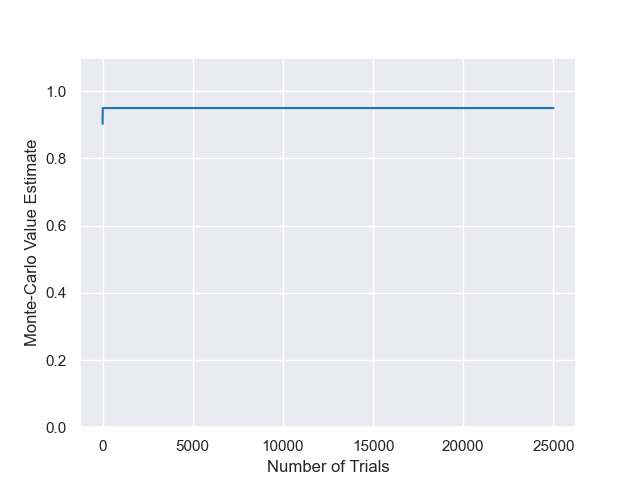}
                    \caption*{$\alpha=10,\epsilon=1$}
                \end{subfigure}
                \begin{subfigure}[b]{0.24\textwidth}
                    \centering
                    \includegraphics[width=\textwidth]{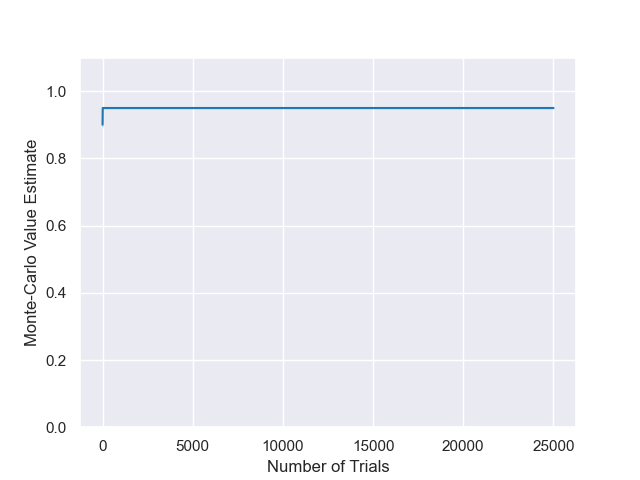}
                    \caption*{$\alpha=10,\epsilon=10$}
                \end{subfigure}
                
                \begin{subfigure}[b]{0.24\textwidth}
                    \centering
                    \includegraphics[width=\textwidth]{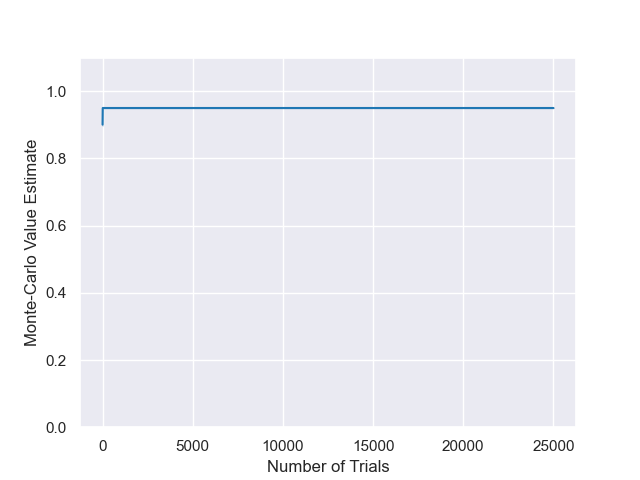}
                    \caption*{$\alpha=1,\epsilon=0.01$}
                \end{subfigure}
                \begin{subfigure}[b]{0.24\textwidth}
                    \centering
                    \includegraphics[width=\textwidth]{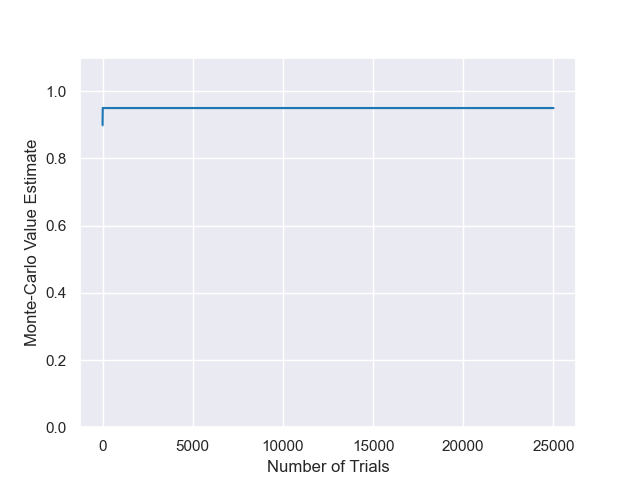}
                    \caption*{$\alpha=1,\epsilon=0.1$}
                \end{subfigure}
                \begin{subfigure}[b]{0.24\textwidth}
                    \centering
                    \includegraphics[width=\textwidth]{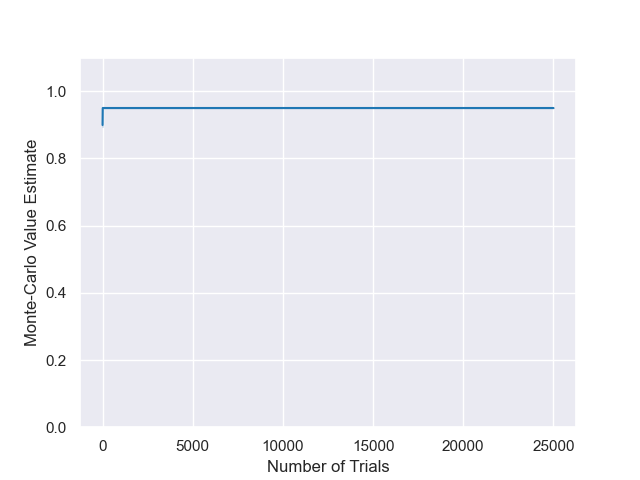}
                    \caption*{$\alpha=1,\epsilon=1$}
                \end{subfigure}
                \begin{subfigure}[b]{0.24\textwidth}
                    \centering
                    \includegraphics[width=\textwidth]{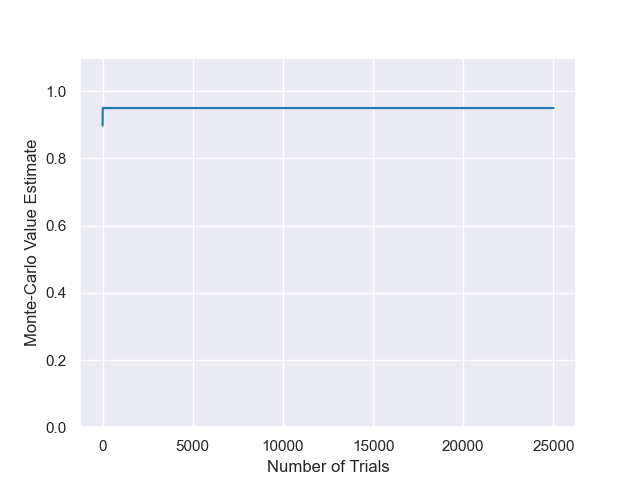}
                    \caption*{$\alpha=1,\epsilon=10$}
                \end{subfigure}
                
                \begin{subfigure}[b]{0.24\textwidth}
                    \centering
                    \includegraphics[width=\textwidth]{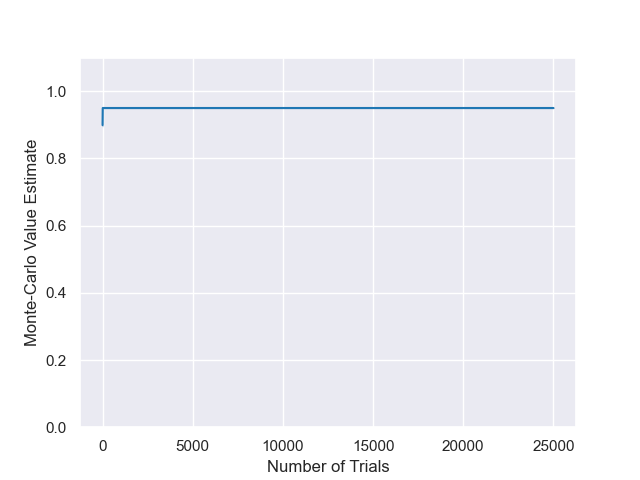}
                    \caption*{$\alpha=0.5,\epsilon=0.01$}
                \end{subfigure}
                \begin{subfigure}[b]{0.24\textwidth}
                    \centering
                    \includegraphics[width=\textwidth]{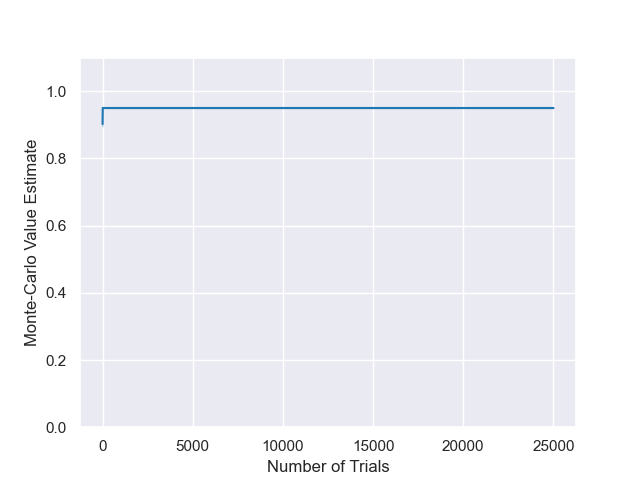}
                    \caption*{$\alpha=0.5,\epsilon=0.1$}
                \end{subfigure}
                \begin{subfigure}[b]{0.24\textwidth}
                    \centering
                    \includegraphics[width=\textwidth]{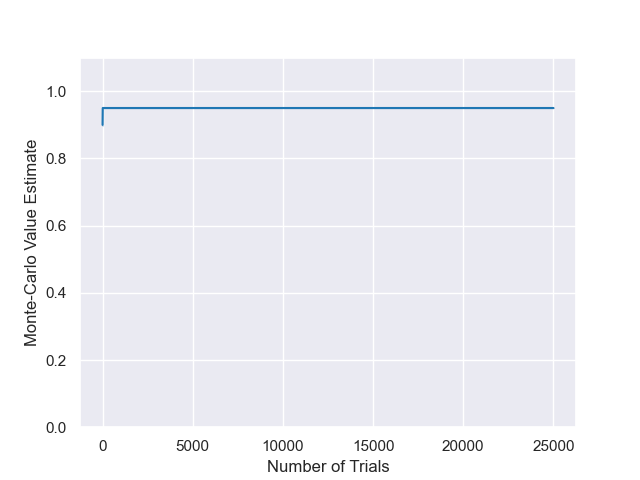}
                    \caption*{$\alpha=0.5,\epsilon=1$}
                \end{subfigure}
                \begin{subfigure}[b]{0.24\textwidth}
                    \centering
                    \includegraphics[width=\textwidth]{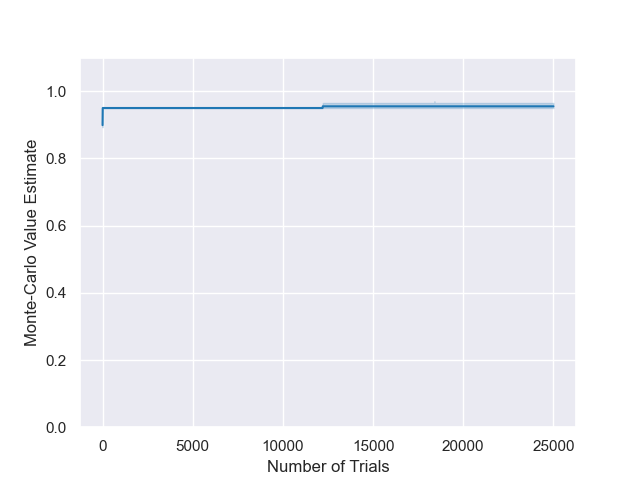}
                    \caption*{$\alpha=0.5,\epsilon=10$}
                \end{subfigure}
                
                \begin{subfigure}[b]{0.24\textwidth}
                    \centering
                    \includegraphics[width=\textwidth]{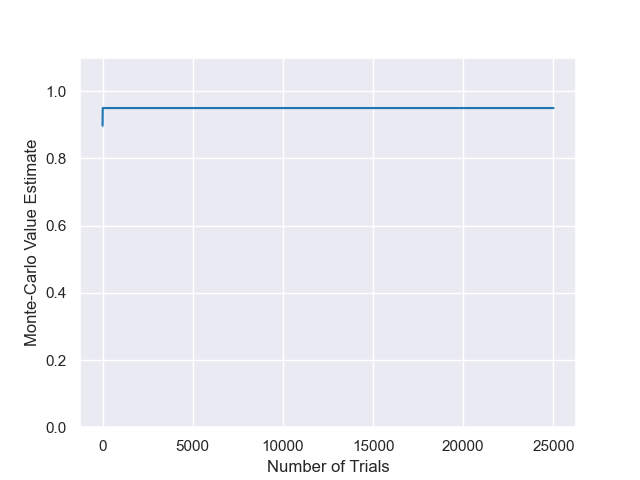}
                    \caption*{$\alpha=0.2,\epsilon=0.01$}
                \end{subfigure}
                \begin{subfigure}[b]{0.24\textwidth}
                    \centering
                    \includegraphics[width=\textwidth]{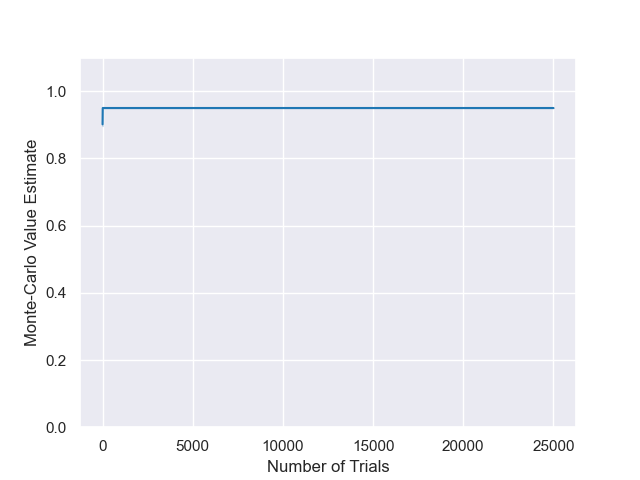}
                    \caption*{$\alpha=0.2,\epsilon=0.1$}
                \end{subfigure}
                \begin{subfigure}[b]{0.24\textwidth}
                    \centering
                    \includegraphics[width=\textwidth]{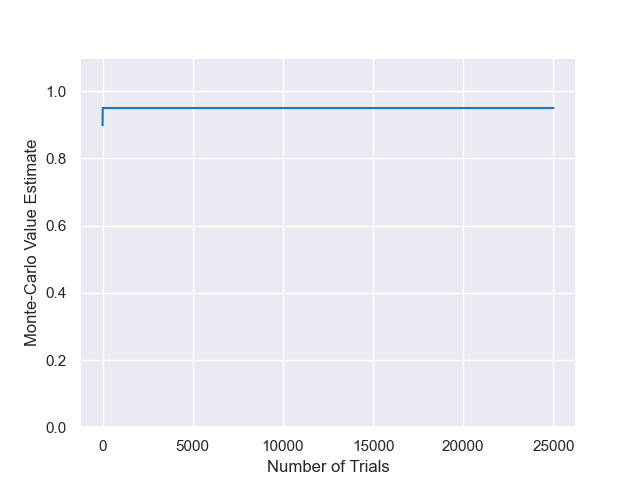}
                    \caption*{$\alpha=0.2,\epsilon=1$}
                \end{subfigure}
                \begin{subfigure}[b]{0.24\textwidth}
                    \centering
                    \includegraphics[width=\textwidth]{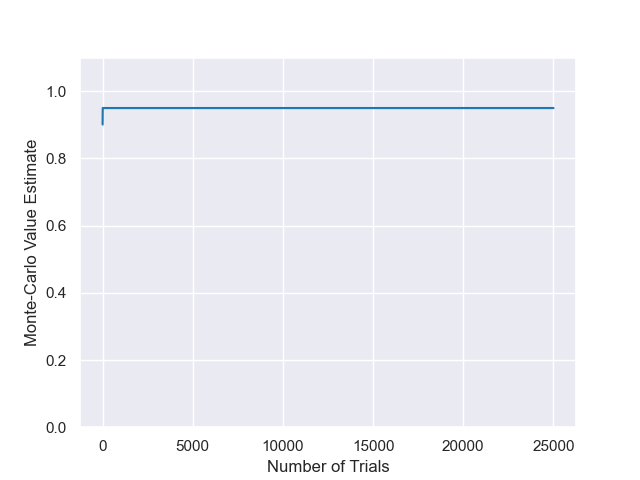}
                    \caption*{$\alpha=0.2,\epsilon=10$}
                \end{subfigure}
                
                \begin{subfigure}[b]{0.24\textwidth}
                    \centering
                    \includegraphics[width=\textwidth]{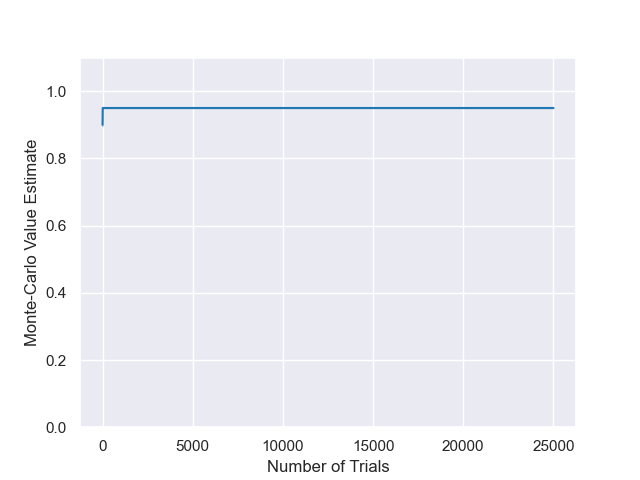}
                    \caption*{$\alpha=0.15,\epsilon=0.01$}
                \end{subfigure}
                \begin{subfigure}[b]{0.24\textwidth}
                    \centering
                    \includegraphics[width=\textwidth]{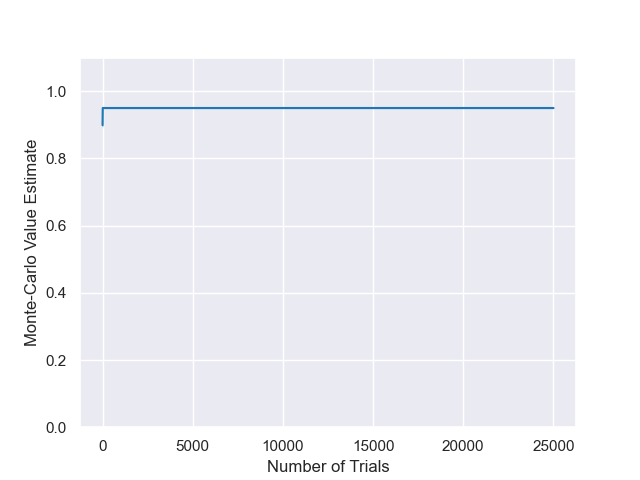}
                    \caption*{$\alpha=0.15,\epsilon=0.1$}
                \end{subfigure}
                \begin{subfigure}[b]{0.24\textwidth}
                    \centering
                    \includegraphics[width=\textwidth]{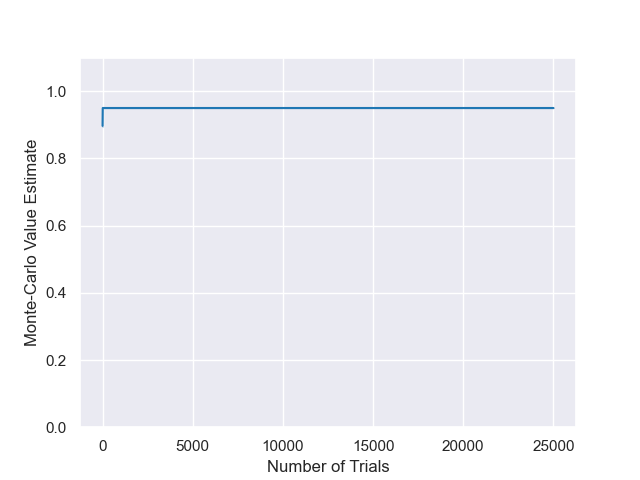}
                    \caption*{$\alpha=0.15,\epsilon=1$}
                \end{subfigure}
                \begin{subfigure}[b]{0.24\textwidth}
                    \centering
                    \includegraphics[width=\textwidth]{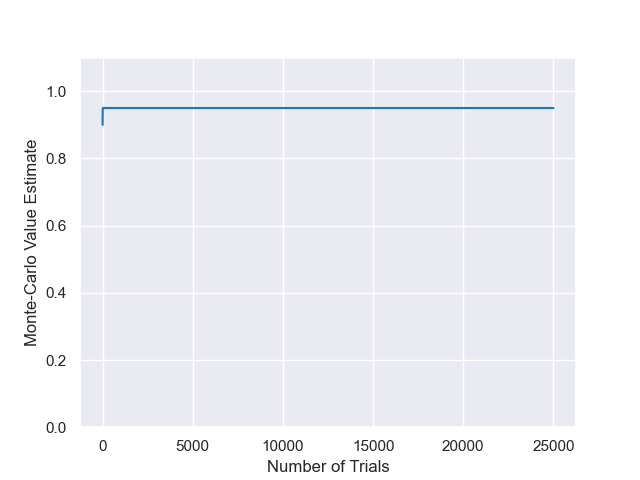}
                    \caption*{$\alpha=0.15,\epsilon=10$}
                \end{subfigure}
                
                \begin{subfigure}[b]{0.24\textwidth}
                    \centering
                    \includegraphics[width=\textwidth]{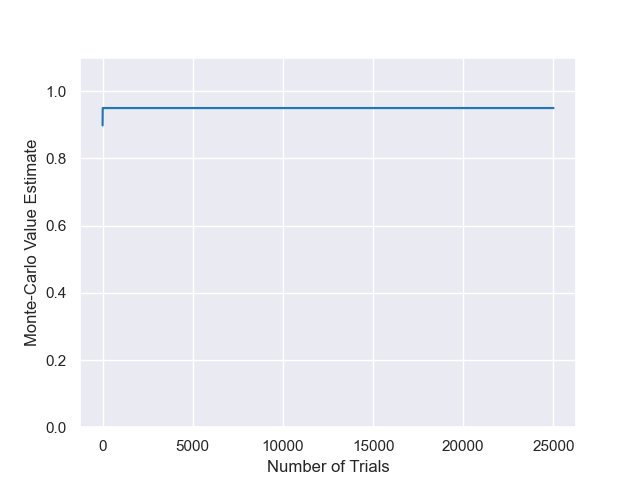}
                    \caption*{$\alpha=0.1,\epsilon=0.01$}
                \end{subfigure}
                \begin{subfigure}[b]{0.24\textwidth}
                    \centering
                    \includegraphics[width=\textwidth]{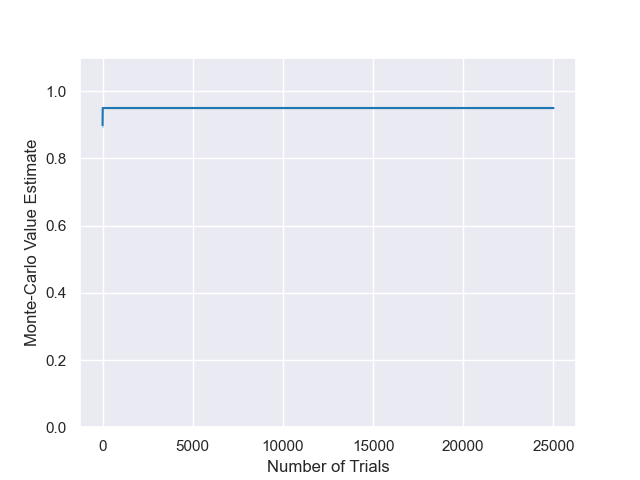}
                    \caption*{$\alpha=0.1,\epsilon=0.1$}
                \end{subfigure}
                \begin{subfigure}[b]{0.24\textwidth}
                    \centering
                    \includegraphics[width=\textwidth]{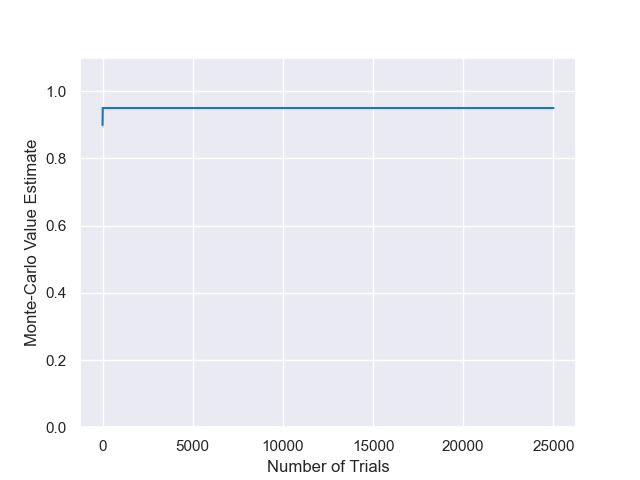}
                    \caption*{$\alpha=0.1,\epsilon=1$}
                \end{subfigure}
                \begin{subfigure}[b]{0.24\textwidth}
                    \centering
                    \includegraphics[width=\textwidth]{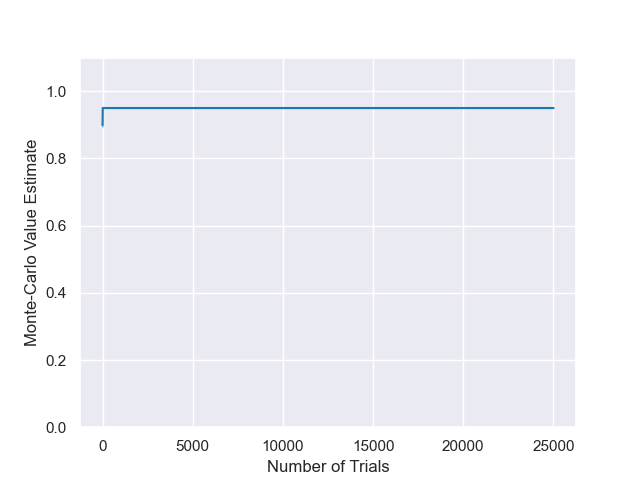}
                    \caption*{$\alpha=0.1,\epsilon=10$}
                \end{subfigure}
                
                \begin{subfigure}[b]{0.24\textwidth}
                    \centering
                    \includegraphics[width=\textwidth]{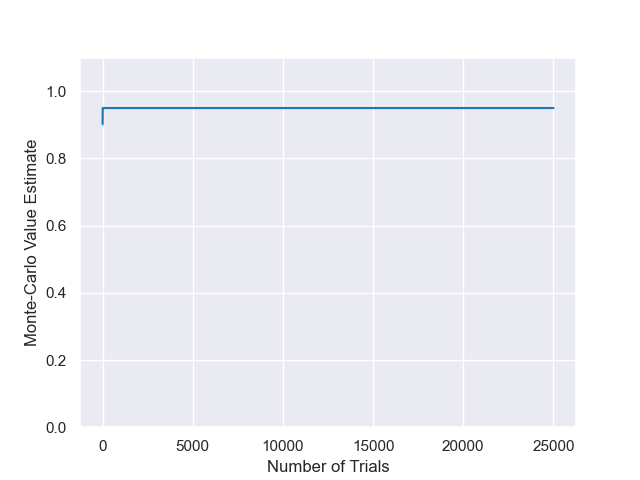}
                    \caption*{$\alpha=0.05,\epsilon=0.01$}
                \end{subfigure}
                \begin{subfigure}[b]{0.24\textwidth}
                    \centering
                    \includegraphics[width=\textwidth]{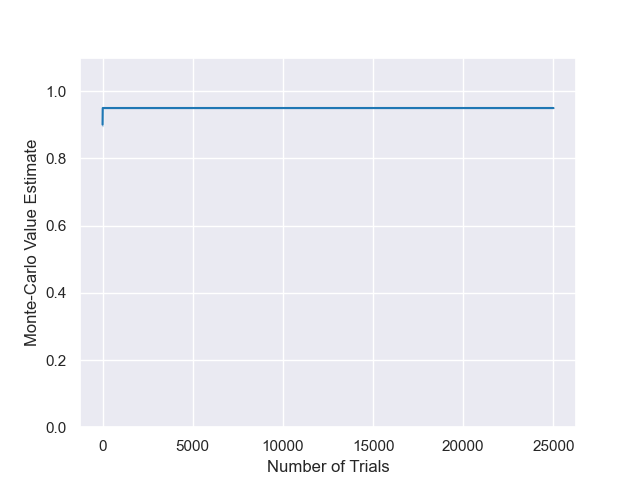}
                    \caption*{$\alpha=0.05,\epsilon=0.1$}
                \end{subfigure}
                \begin{subfigure}[b]{0.24\textwidth}
                    \centering
                    \includegraphics[width=\textwidth]{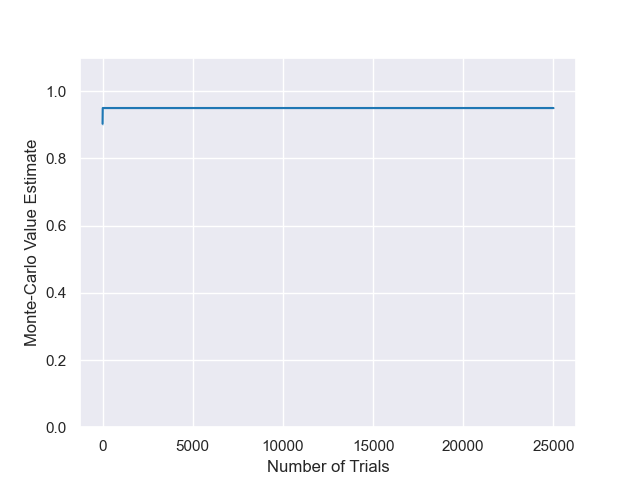}
                    \caption*{$\alpha=0.05,\epsilon=1$}
                \end{subfigure}
                \begin{subfigure}[b]{0.24\textwidth}
                    \centering
                    \includegraphics[width=\textwidth]{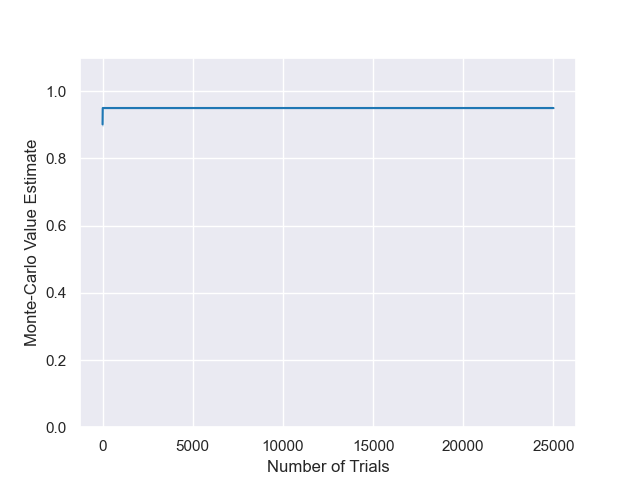}
                    \caption*{$\alpha=0.05,\epsilon=10$}
                \end{subfigure}
                
                \caption{Results for BTS on the 20-chain ($D=20$, $R_f=1.0$), for varying temperatures and exploration parameters.}
                \label{fig:bts_20chain_hps}
            \end{figure}

            \begin{figure}
                \centering
                
                \begin{subfigure}[b]{0.24\textwidth}
                    \centering
                    \includegraphics[width=\textwidth]{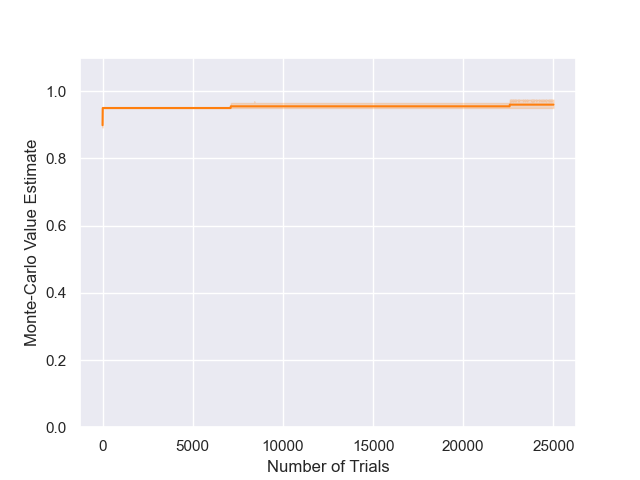}
                    \caption*{$\alpha=10,\epsilon=0.01$}
                \end{subfigure}
                \begin{subfigure}[b]{0.24\textwidth}
                    \centering
                    \includegraphics[width=\textwidth]{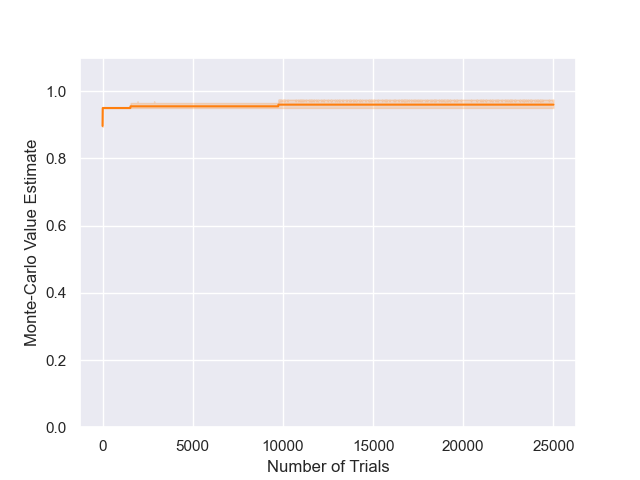}
                    \caption*{$\alpha=10,\epsilon=0.1$}
                \end{subfigure}
                \begin{subfigure}[b]{0.24\textwidth}
                    \centering
                    \includegraphics[width=\textwidth]{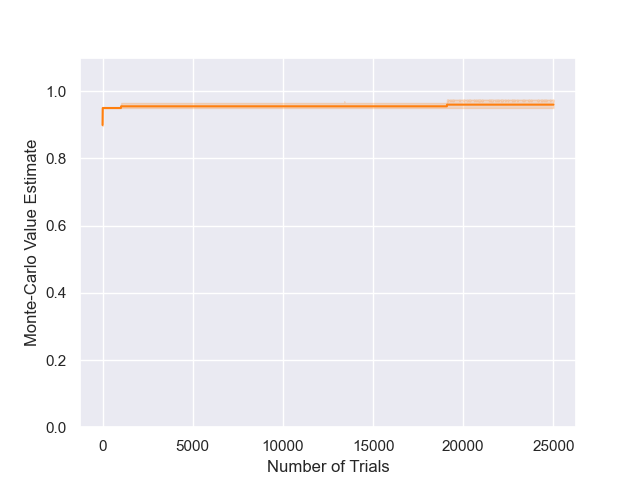}
                    \caption*{$\alpha=10,\epsilon=1$}
                \end{subfigure}
                \begin{subfigure}[b]{0.24\textwidth}
                    \centering
                    \includegraphics[width=\textwidth]{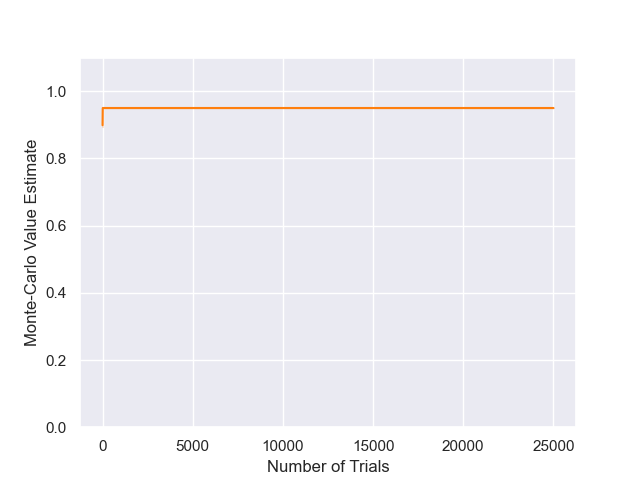}
                    \caption*{$\alpha=10,\epsilon=10$}
                \end{subfigure}
                
                \begin{subfigure}[b]{0.24\textwidth}
                    \centering
                    \includegraphics[width=\textwidth]{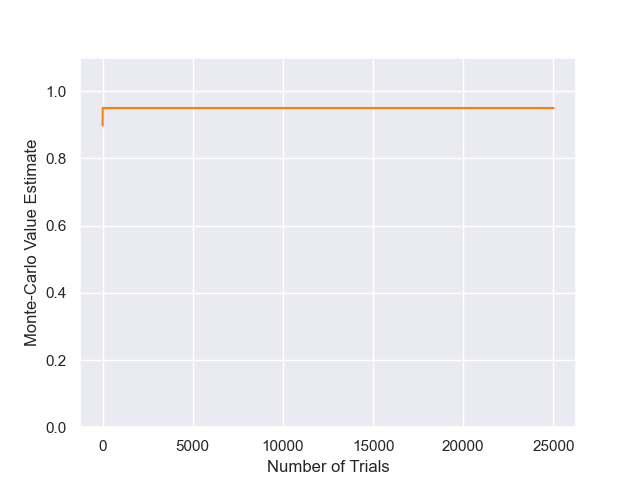}
                    \caption*{$\alpha=1,\epsilon=0.01$}
                \end{subfigure}
                \begin{subfigure}[b]{0.24\textwidth}
                    \centering
                    \includegraphics[width=\textwidth]{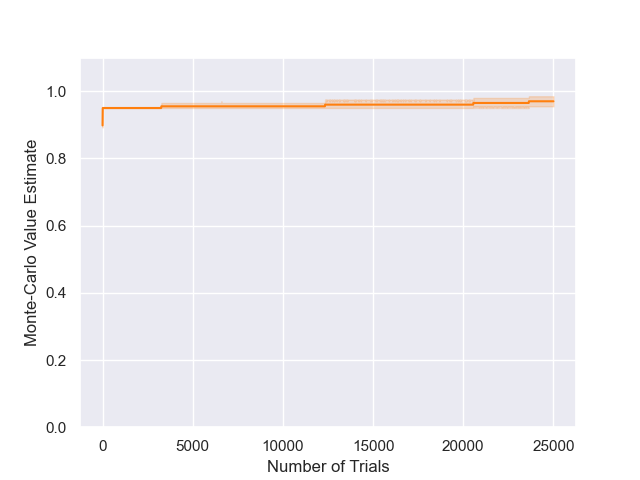}
                    \caption*{$\alpha=1,\epsilon=0.1$}
                \end{subfigure}
                \begin{subfigure}[b]{0.24\textwidth}
                    \centering
                    \includegraphics[width=\textwidth]{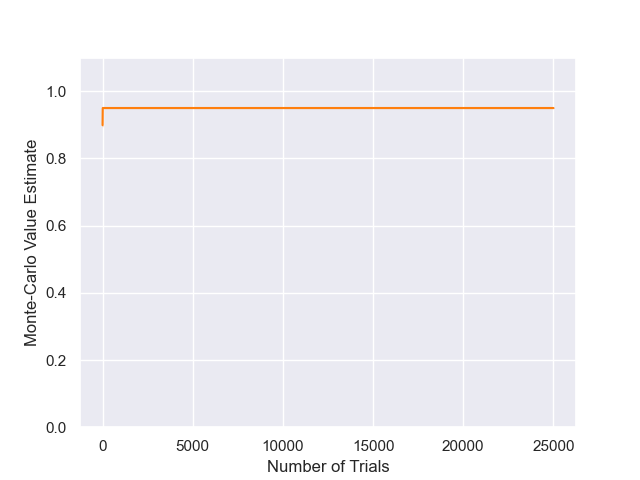}
                    \caption*{$\alpha=1,\epsilon=1$}
                \end{subfigure}
                \begin{subfigure}[b]{0.24\textwidth}
                    \centering
                    \includegraphics[width=\textwidth]{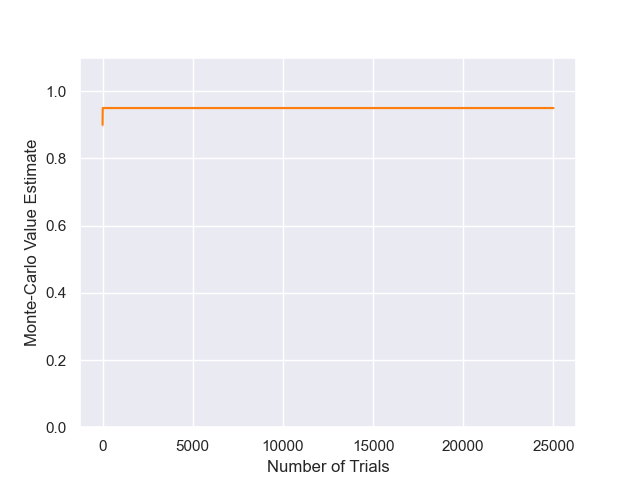}
                    \caption*{$\alpha=1,\epsilon=10$}
                \end{subfigure}
                
                \begin{subfigure}[b]{0.24\textwidth}
                    \centering
                    \includegraphics[width=\textwidth]{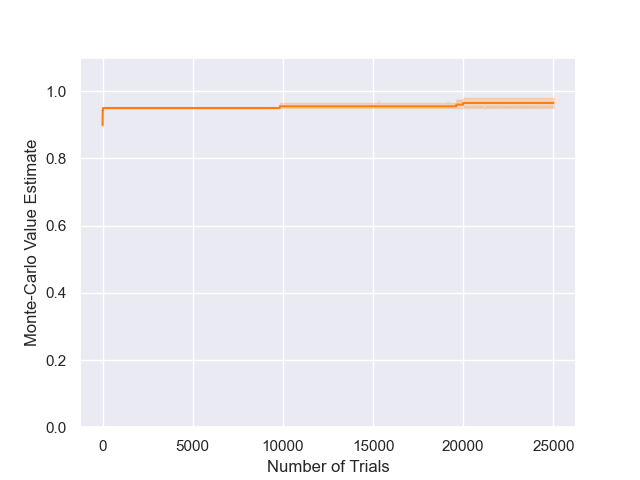}
                    \caption*{$\alpha=0.5,\epsilon=0.01$}
                \end{subfigure}
                \begin{subfigure}[b]{0.24\textwidth}
                    \centering
                    \includegraphics[width=\textwidth]{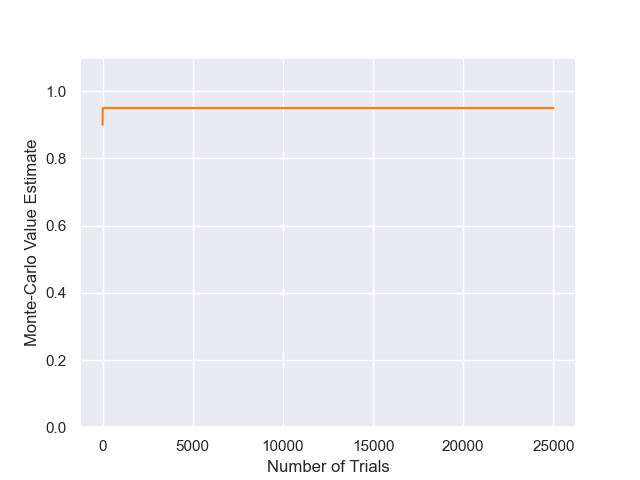}
                    \caption*{$\alpha=0.5,\epsilon=0.1$}
                \end{subfigure}
                \begin{subfigure}[b]{0.24\textwidth}
                    \centering
                    \includegraphics[width=\textwidth]{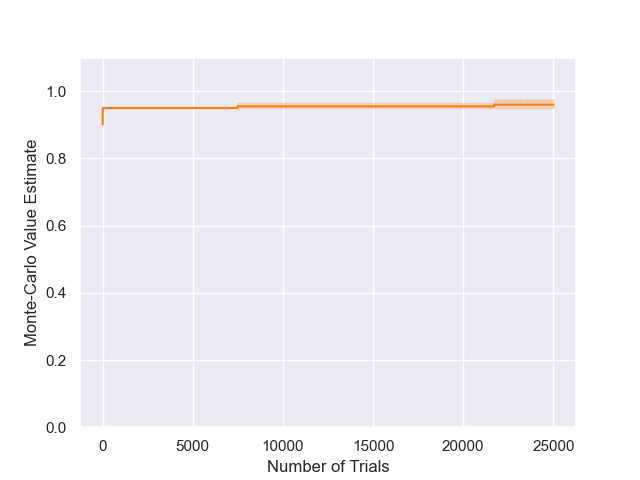}
                    \caption*{$\alpha=0.5,\epsilon=1$}
                \end{subfigure}
                \begin{subfigure}[b]{0.24\textwidth}
                    \centering
                    \includegraphics[width=\textwidth]{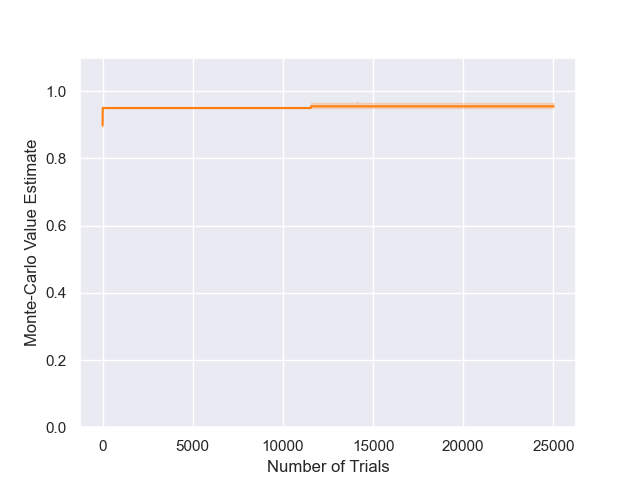}
                    \caption*{$\alpha=0.5,\epsilon=10$}
                \end{subfigure}
                
                \begin{subfigure}[b]{0.24\textwidth}
                    \centering
                    \includegraphics[width=\textwidth]{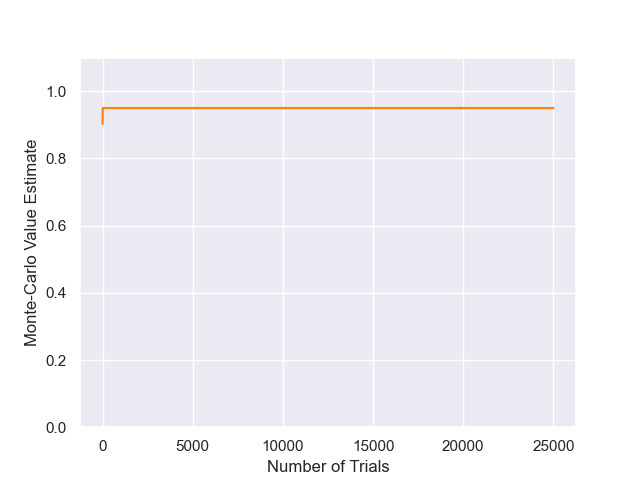}
                    \caption*{$\alpha=0.2,\epsilon=0.01$}
                \end{subfigure}
                \begin{subfigure}[b]{0.24\textwidth}
                    \centering
                    \includegraphics[width=\textwidth]{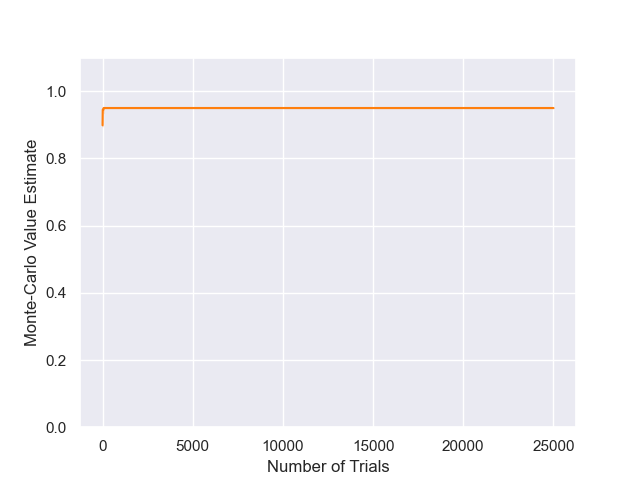}
                    \caption*{$\alpha=0.2,\epsilon=0.1$}
                \end{subfigure}
                \begin{subfigure}[b]{0.24\textwidth}
                    \centering
                    \includegraphics[width=\textwidth]{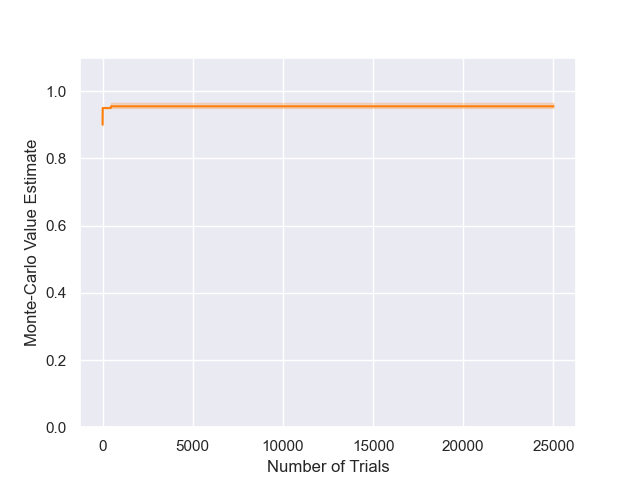}
                    \caption*{$\alpha=0.2,\epsilon=1$}
                \end{subfigure}
                \begin{subfigure}[b]{0.24\textwidth}
                    \centering
                    \includegraphics[width=\textwidth]{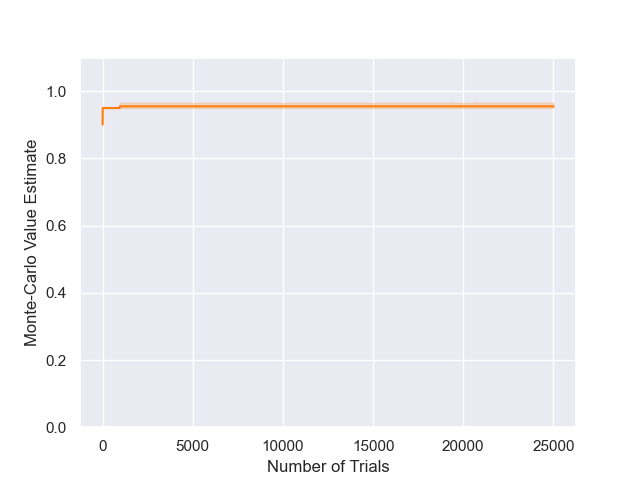}
                    \caption*{$\alpha=0.2,\epsilon=10$}
                \end{subfigure}
                
                \begin{subfigure}[b]{0.24\textwidth}
                    \centering
                    \includegraphics[width=\textwidth]{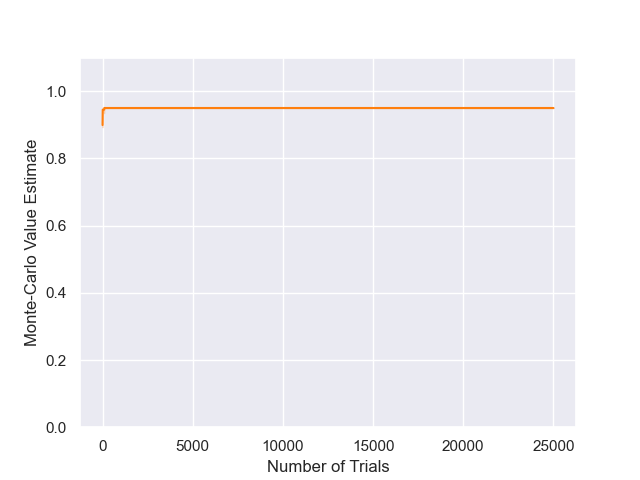}
                    \caption*{$\alpha=0.15,\epsilon=0.01$}
                \end{subfigure}
                \begin{subfigure}[b]{0.24\textwidth}
                    \centering
                    \includegraphics[width=\textwidth]{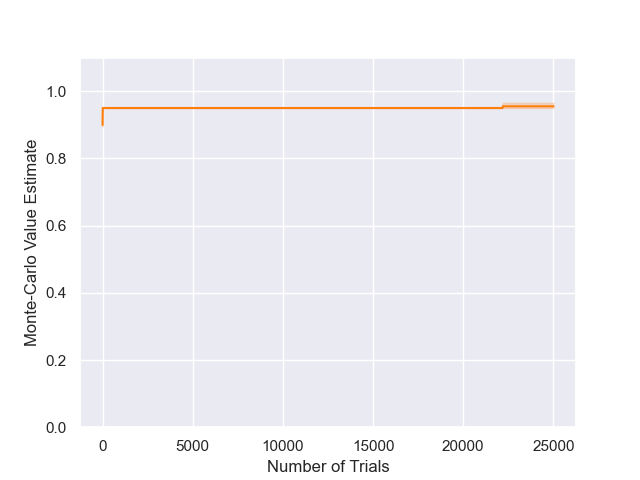}
                    \caption*{$\alpha=0.15,\epsilon=0.1$}
                \end{subfigure}
                \begin{subfigure}[b]{0.24\textwidth}
                    \centering
                    \includegraphics[width=\textwidth]{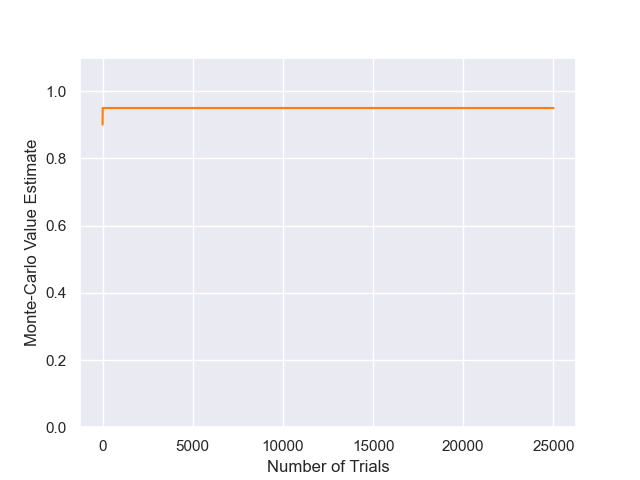}
                    \caption*{$\alpha=0.15,\epsilon=1$}
                \end{subfigure}
                \begin{subfigure}[b]{0.24\textwidth}
                    \centering
                    \includegraphics[width=\textwidth]{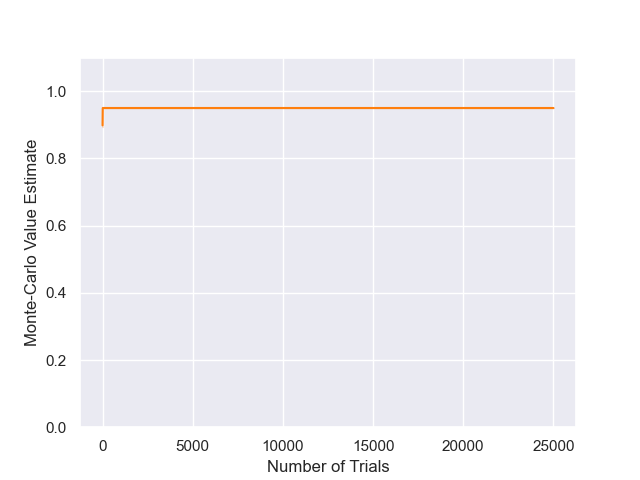}
                    \caption*{$\alpha=0.15,\epsilon=10$}
                \end{subfigure}
                
                \begin{subfigure}[b]{0.24\textwidth}
                    \centering
                    \includegraphics[width=\textwidth]{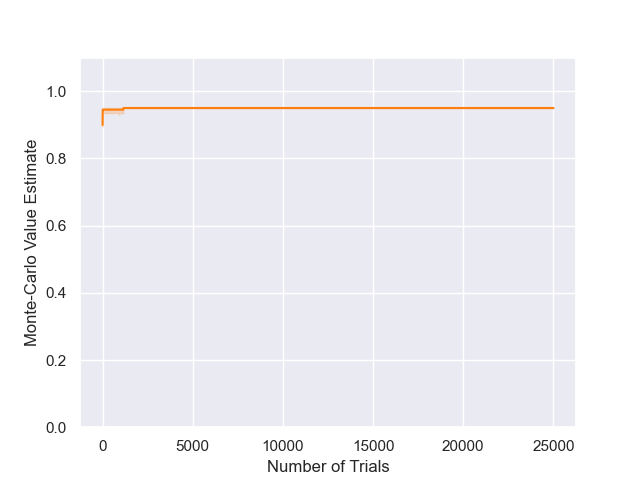}
                    \caption*{$\alpha=0.1,\epsilon=0.01$}
                \end{subfigure}
                \begin{subfigure}[b]{0.24\textwidth}
                    \centering
                    \includegraphics[width=\textwidth]{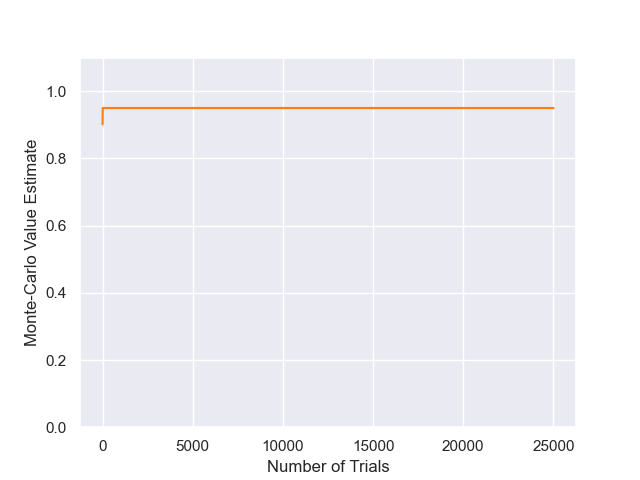}
                    \caption*{$\alpha=0.1,\epsilon=0.1$}
                \end{subfigure}
                \begin{subfigure}[b]{0.24\textwidth}
                    \centering
                    \includegraphics[width=\textwidth]{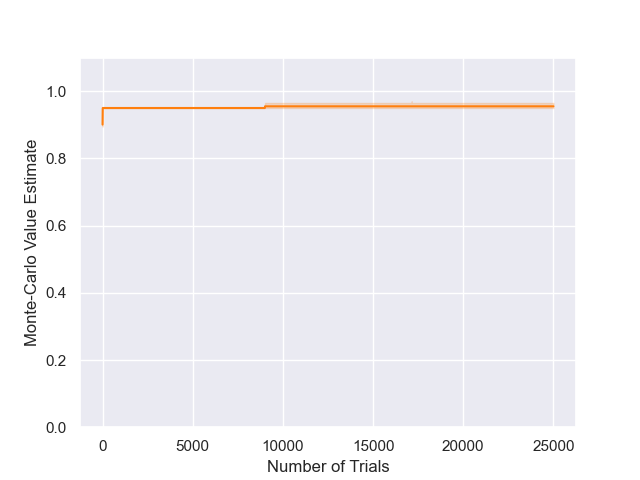}
                    \caption*{$\alpha=0.1,\epsilon=1$}
                \end{subfigure}
                \begin{subfigure}[b]{0.24\textwidth}
                    \centering
                    \includegraphics[width=\textwidth]{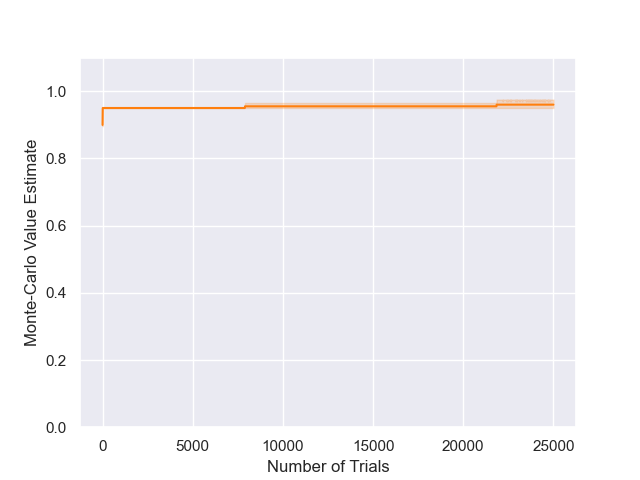}
                    \caption*{$\alpha=0.1,\epsilon=10$}
                \end{subfigure}
                
                \begin{subfigure}[b]{0.24\textwidth}
                    \centering
                    \includegraphics[width=\textwidth]{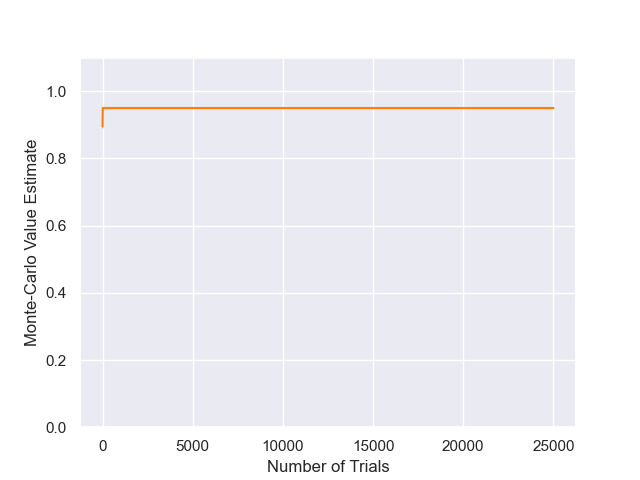}
                    \caption*{$\alpha=0.05,\epsilon=0.01$}
                \end{subfigure}
                \begin{subfigure}[b]{0.24\textwidth}
                    \centering
                    \includegraphics[width=\textwidth]{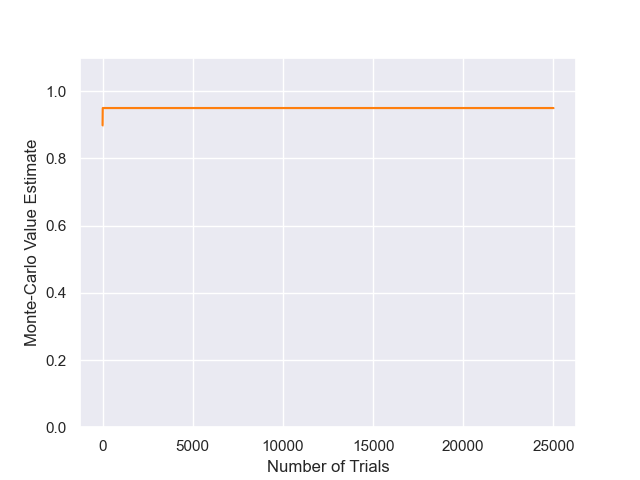}
                    \caption*{$\alpha=0.05,\epsilon=0.1$}
                \end{subfigure}
                \begin{subfigure}[b]{0.24\textwidth}
                    \centering
                    \includegraphics[width=\textwidth]{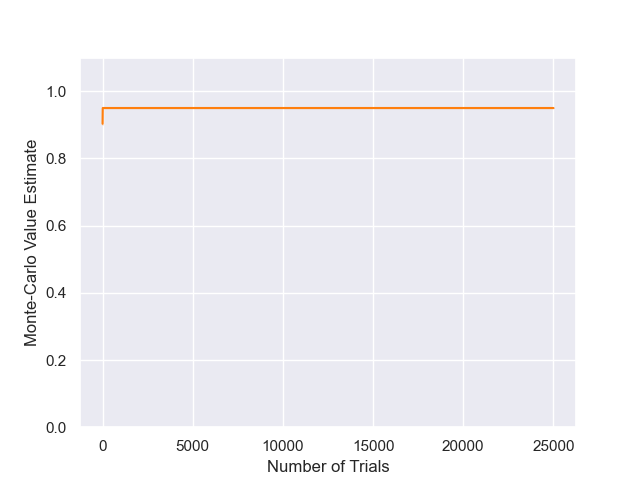}
                    \caption*{$\alpha=0.05,\epsilon=1$}
                \end{subfigure}
                \begin{subfigure}[b]{0.24\textwidth}
                    \centering
                    \includegraphics[width=\textwidth]{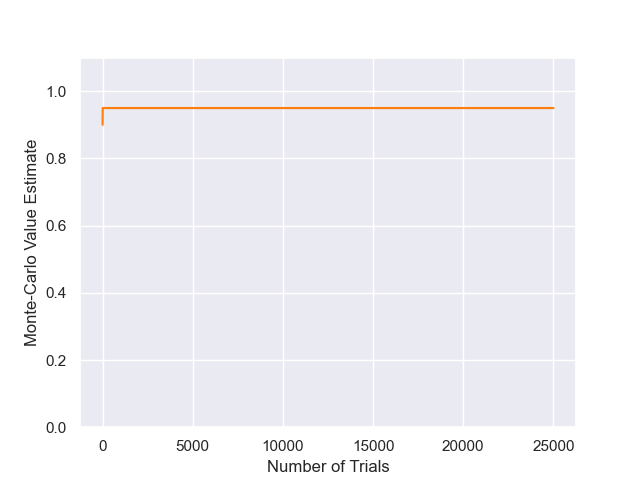}
                    \caption*{$\alpha=0.05,\epsilon=10$}
                \end{subfigure}
                
                \caption{Results for DENTS on the 20-chain ($D=20$, $R_f=1.0$), for varying temperatures and exploration parameters. The decay function was set to $\beta(m)=\alpha/\log(e+m)$.}
                \label{fig:dents_20chain_hps}
            \end{figure}

            \begin{figure}
                \centering
                
                \begin{subfigure}[b]{0.24\textwidth}
                    \centering
                    \includegraphics[width=\textwidth]{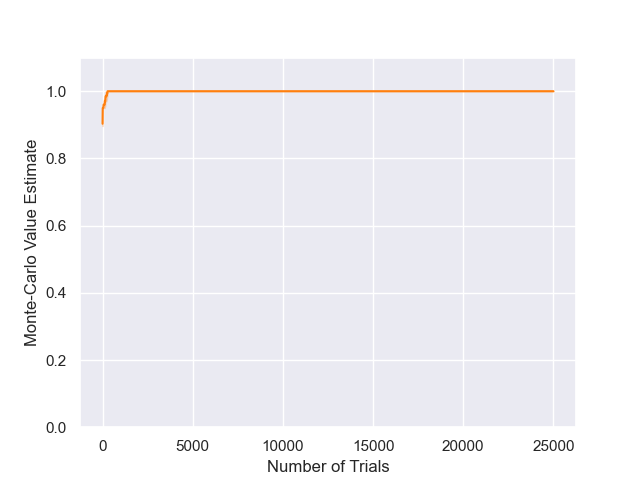}
                    \caption*{$\alpha=10,\epsilon=0.01$}
                \end{subfigure}
                \begin{subfigure}[b]{0.24\textwidth}
                    \centering
                    \includegraphics[width=\textwidth]{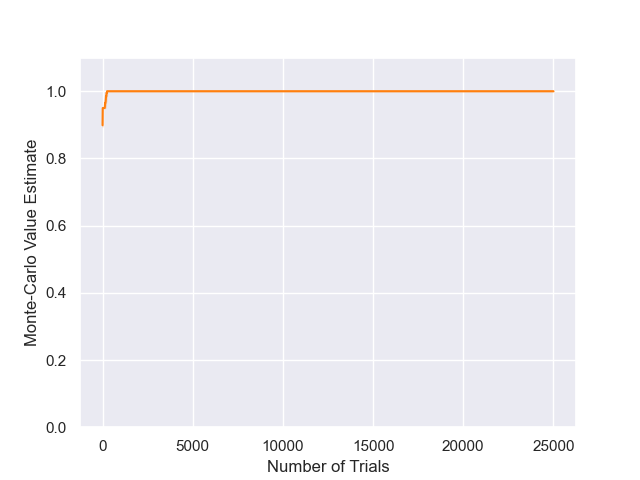}
                    \caption*{$\alpha=10,\epsilon=0.1$}
                \end{subfigure}
                \begin{subfigure}[b]{0.24\textwidth}
                    \centering
                    \includegraphics[width=\textwidth]{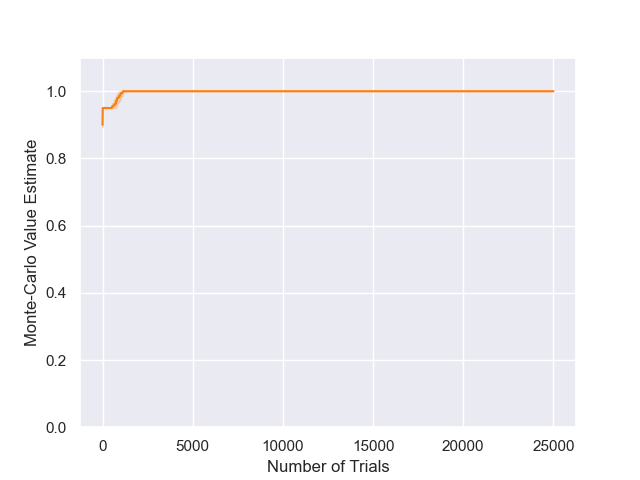}
                    \caption*{$\alpha=10,\epsilon=1$}
                \end{subfigure}
                \begin{subfigure}[b]{0.24\textwidth}
                    \centering
                    \includegraphics[width=\textwidth]{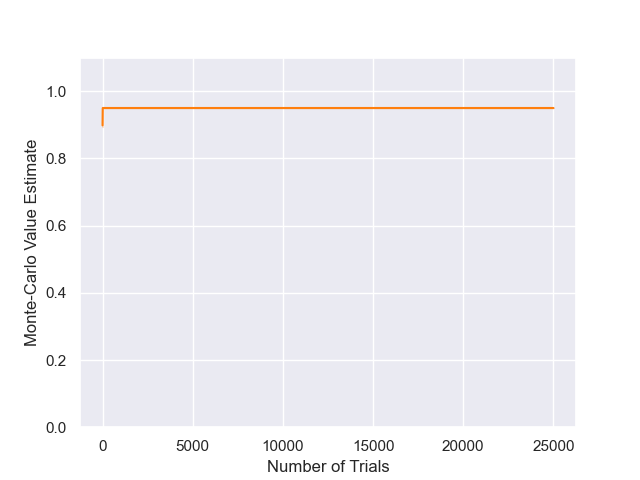}
                    \caption*{$\alpha=10,\epsilon=10$}
                \end{subfigure}
                
                \begin{subfigure}[b]{0.24\textwidth}
                    \centering
                    \includegraphics[width=\textwidth]{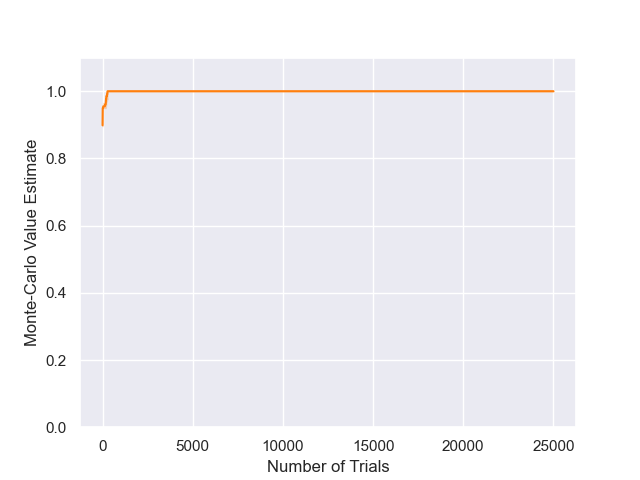}
                    \caption*{$\alpha=1,\epsilon=0.01$}
                \end{subfigure}
                \begin{subfigure}[b]{0.24\textwidth}
                    \centering
                    \includegraphics[width=\textwidth]{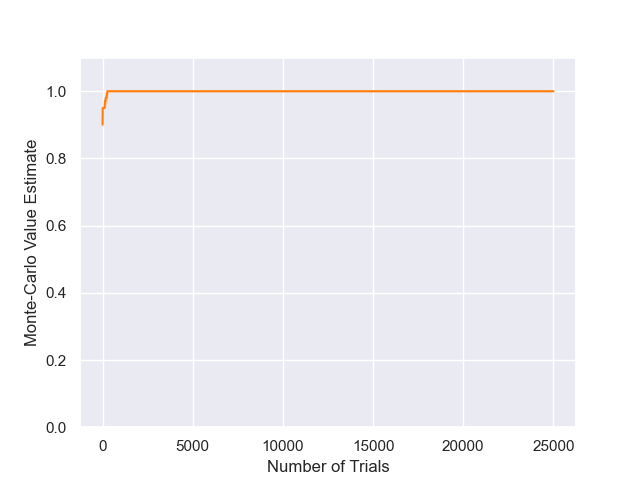}
                    \caption*{$\alpha=1,\epsilon=0.1$}
                \end{subfigure}
                \begin{subfigure}[b]{0.24\textwidth}
                    \centering
                    \includegraphics[width=\textwidth]{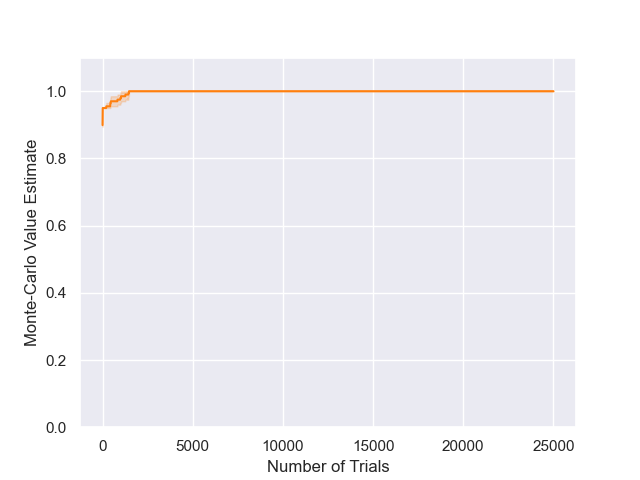}
                    \caption*{$\alpha=1,\epsilon=1$}
                \end{subfigure}
                \begin{subfigure}[b]{0.24\textwidth}
                    \centering
                    \includegraphics[width=\textwidth]{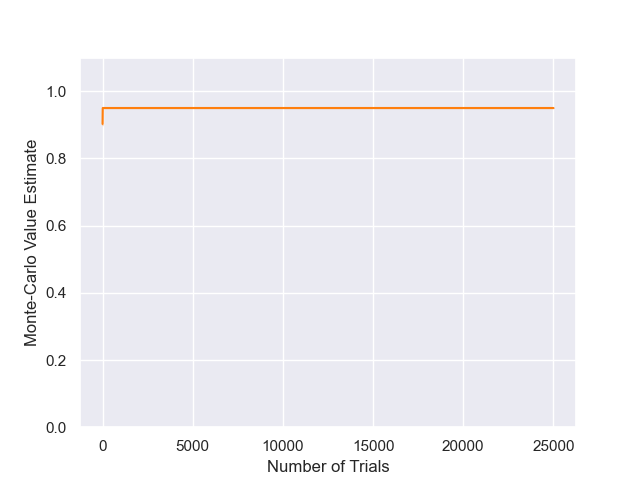}
                    \caption*{$\alpha=1,\epsilon=10$}
                \end{subfigure}
                
                \begin{subfigure}[b]{0.24\textwidth}
                    \centering
                    \includegraphics[width=\textwidth]{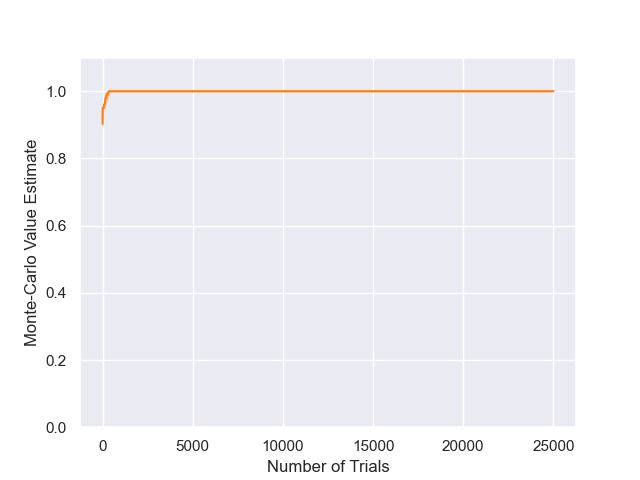}
                    \caption*{$\alpha=0.5,\epsilon=0.01$}
                \end{subfigure}
                \begin{subfigure}[b]{0.24\textwidth}
                    \centering
                    \includegraphics[width=\textwidth]{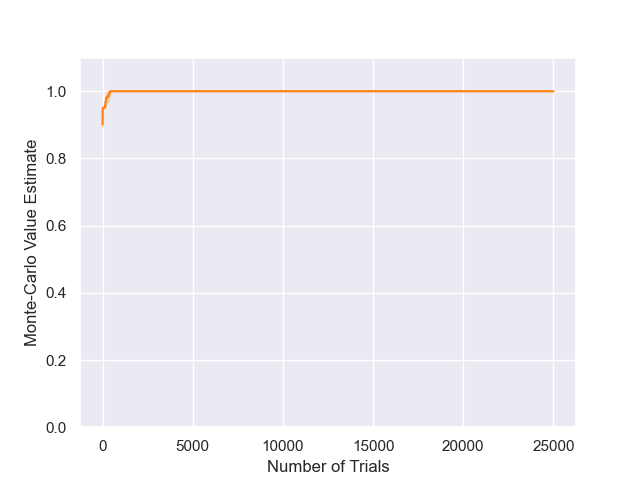}
                    \caption*{$\alpha=0.5,\epsilon=0.1$}
                \end{subfigure}
                \begin{subfigure}[b]{0.24\textwidth}
                    \centering
                    \includegraphics[width=\textwidth]{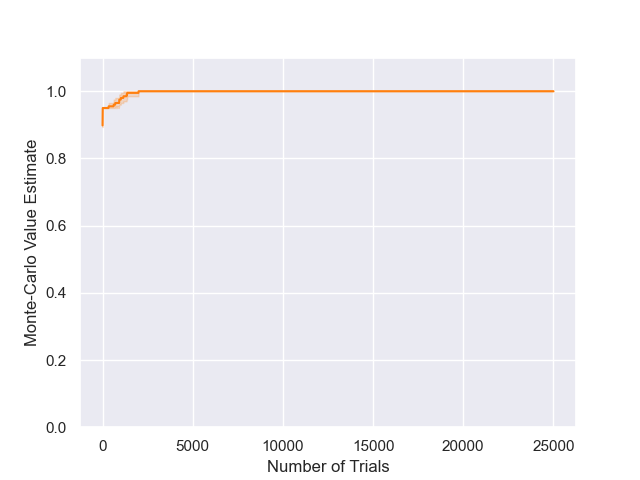}
                    \caption*{$\alpha=0.5,\epsilon=1$}
                \end{subfigure}
                \begin{subfigure}[b]{0.24\textwidth}
                    \centering
                    \includegraphics[width=\textwidth]{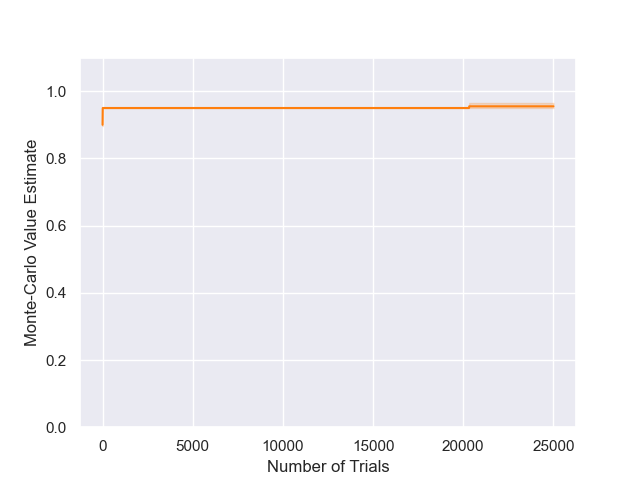}
                    \caption*{$\alpha=0.5,\epsilon=10$}
                \end{subfigure}
                
                \begin{subfigure}[b]{0.24\textwidth}
                    \centering
                    \includegraphics[width=\textwidth]{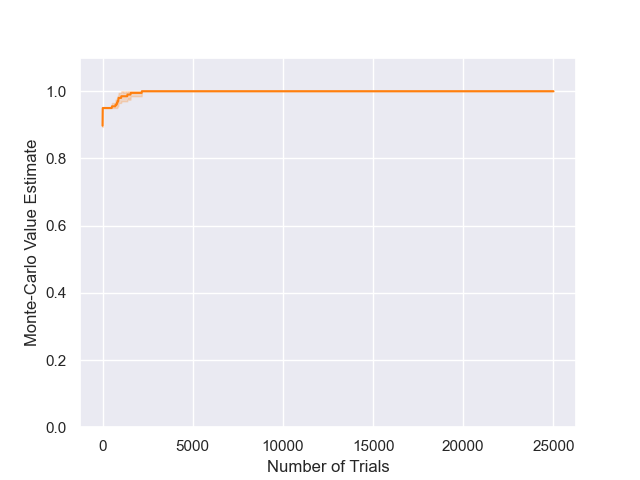}
                    \caption*{$\alpha=0.2,\epsilon=0.01$}
                \end{subfigure}
                \begin{subfigure}[b]{0.24\textwidth}
                    \centering
                    \includegraphics[width=\textwidth]{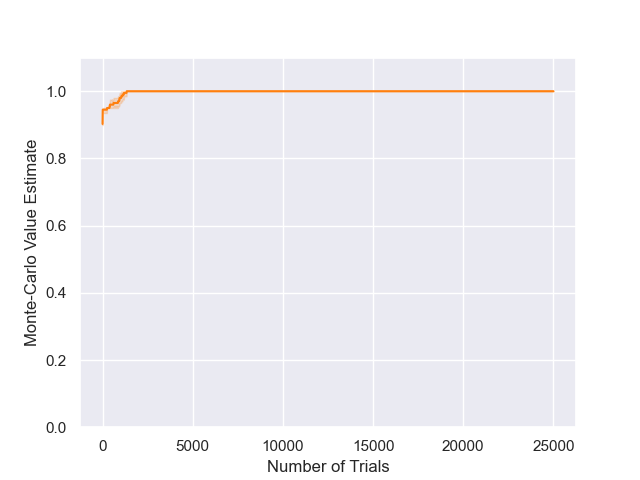}
                    \caption*{$\alpha=0.2,\epsilon=0.1$}
                \end{subfigure}
                \begin{subfigure}[b]{0.24\textwidth}
                    \centering
                    \includegraphics[width=\textwidth]{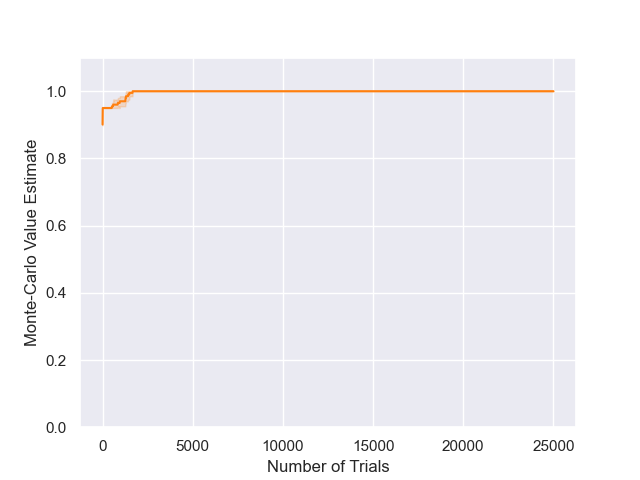}
                    \caption*{$\alpha=0.2,\epsilon=1$}
                \end{subfigure}
                \begin{subfigure}[b]{0.24\textwidth}
                    \centering
                    \includegraphics[width=\textwidth]{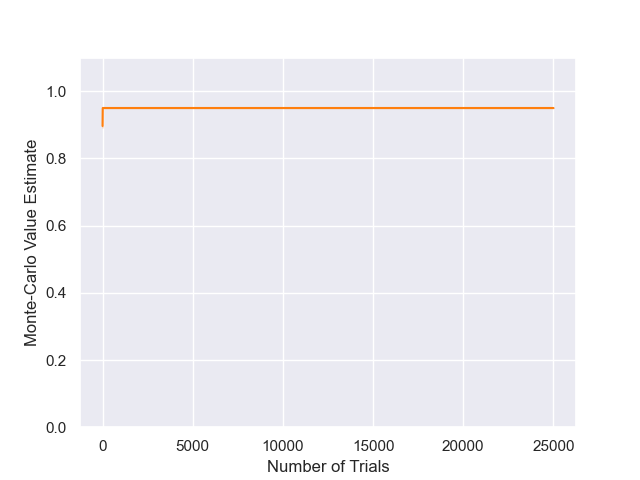}
                    \caption*{$\alpha=0.2,\epsilon=10$}
                \end{subfigure}
                
                \begin{subfigure}[b]{0.24\textwidth}
                    \centering
                    \includegraphics[width=\textwidth]{Figures/app/param_sens/dchain/005_20chain_01_dents_epsilon=0.01,temp=0.15.png}
                    \caption*{$\alpha=0.15,\epsilon=0.01$}
                \end{subfigure}
                \begin{subfigure}[b]{0.24\textwidth}
                    \centering
                    \includegraphics[width=\textwidth]{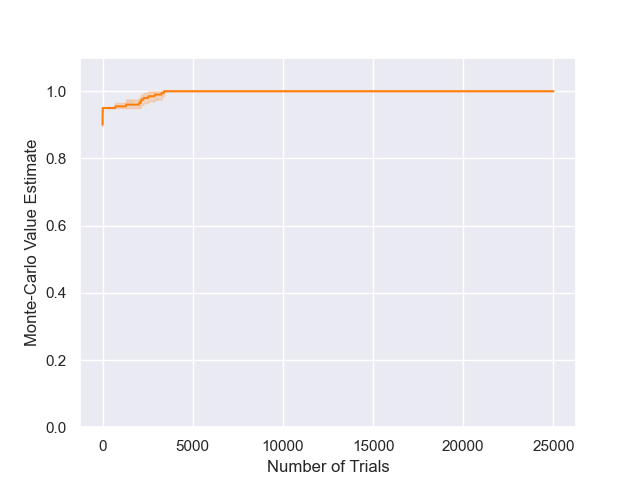}
                    \caption*{$\alpha=0.15,\epsilon=0.1$}
                \end{subfigure}
                \begin{subfigure}[b]{0.24\textwidth}
                    \centering
                    \includegraphics[width=\textwidth]{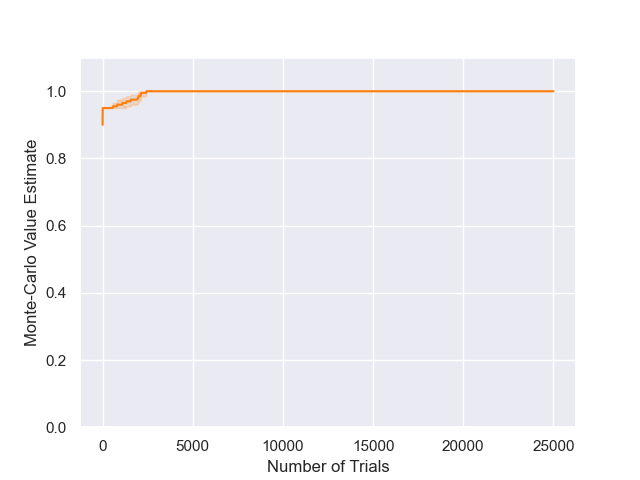}
                    \caption*{$\alpha=0.15,\epsilon=1$}
                \end{subfigure}
                \begin{subfigure}[b]{0.24\textwidth}
                    \centering
                    \includegraphics[width=\textwidth]{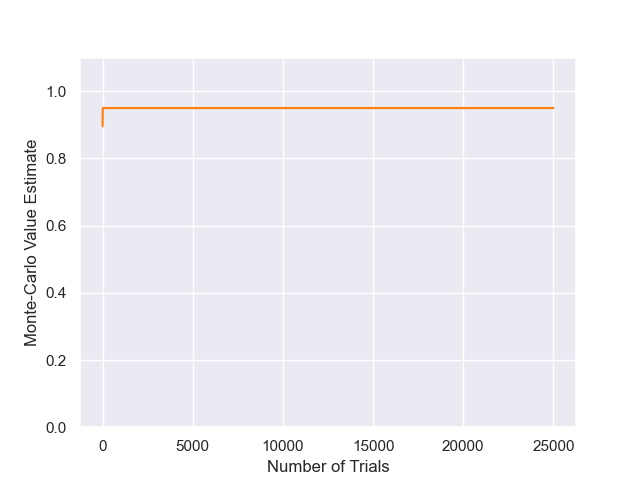}
                    \caption*{$\alpha=0.15,\epsilon=10$}
                \end{subfigure}
                
                \begin{subfigure}[b]{0.24\textwidth}
                    \centering
                    \includegraphics[width=\textwidth]{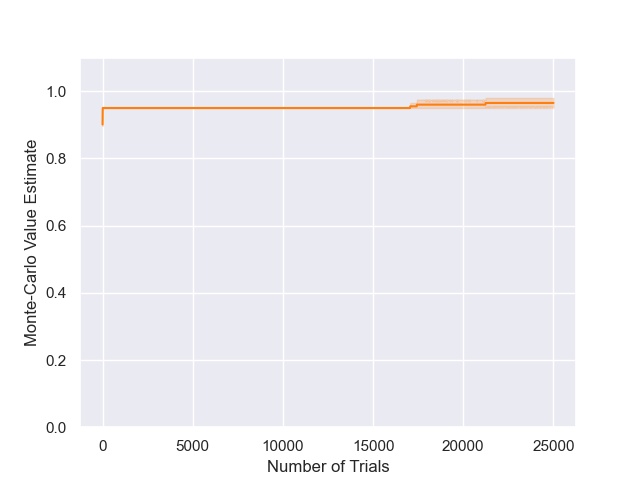}
                    \caption*{$\alpha=0.1,\epsilon=0.01$}
                \end{subfigure}
                \begin{subfigure}[b]{0.24\textwidth}
                    \centering
                    \includegraphics[width=\textwidth]{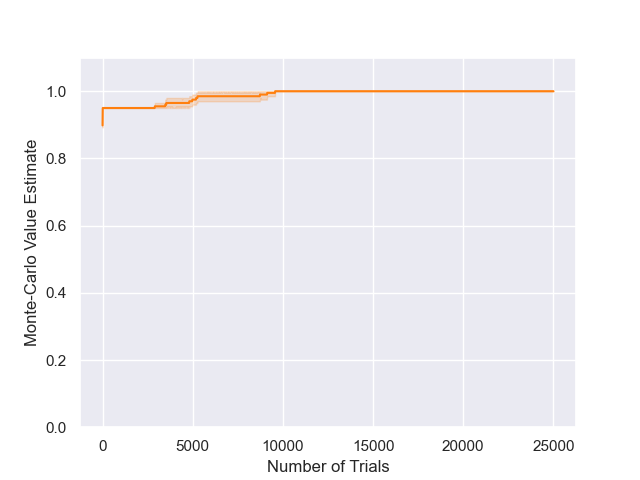}
                    \caption*{$\alpha=0.1,\epsilon=0.1$}
                \end{subfigure}
                \begin{subfigure}[b]{0.24\textwidth}
                    \centering
                    \includegraphics[width=\textwidth]{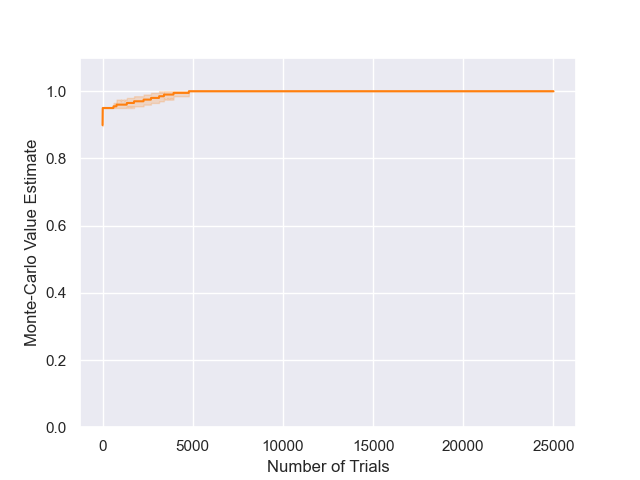}
                    \caption*{$\alpha=0.1,\epsilon=1$}
                \end{subfigure}
                \begin{subfigure}[b]{0.24\textwidth}
                    \centering
                    \includegraphics[width=\textwidth]{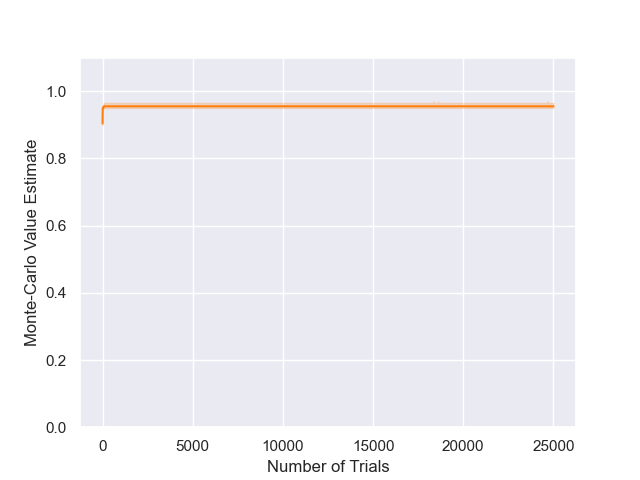}
                    \caption*{$\alpha=0.1,\epsilon=10$}
                \end{subfigure}
                
                \begin{subfigure}[b]{0.24\textwidth}
                    \centering
                    \includegraphics[width=\textwidth]{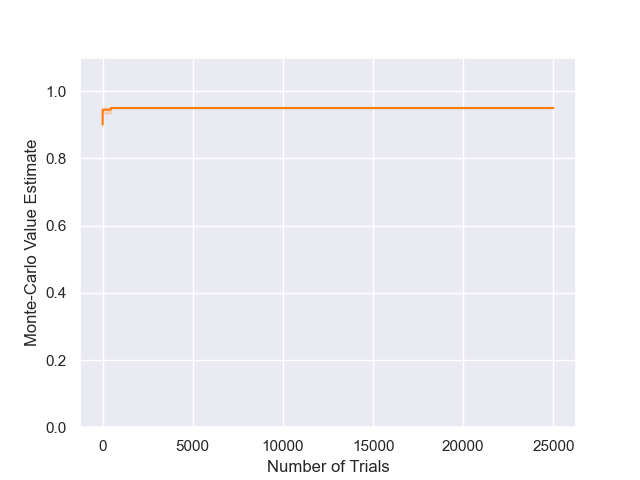}
                    \caption*{$\alpha=0.05,\epsilon=0.01$}
                \end{subfigure}
                \begin{subfigure}[b]{0.24\textwidth}
                    \centering
                    \includegraphics[width=\textwidth]{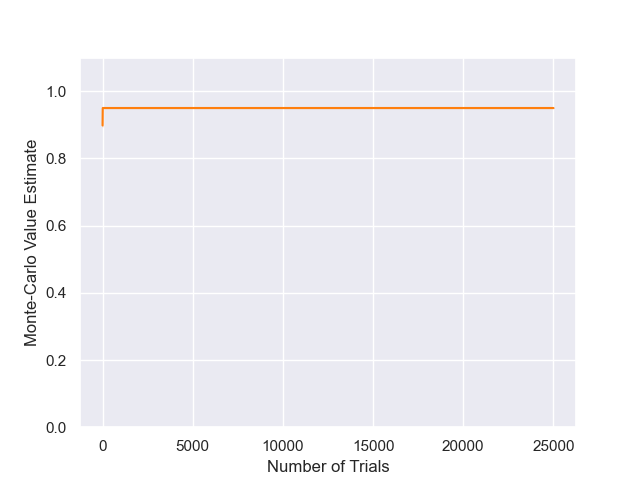}
                    \caption*{$\alpha=0.05,\epsilon=0.1$}
                \end{subfigure}
                \begin{subfigure}[b]{0.24\textwidth}
                    \centering
                    \includegraphics[width=\textwidth]{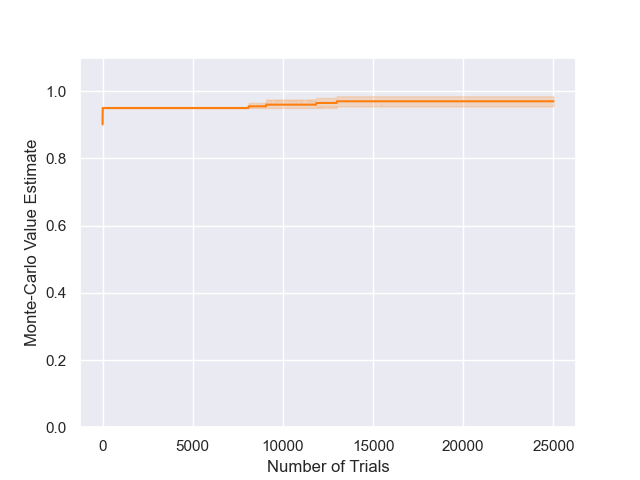}
                    \caption*{$\alpha=0.05,\epsilon=1$}
                \end{subfigure}
                \begin{subfigure}[b]{0.24\textwidth}
                    \centering
                    \includegraphics[width=\textwidth]{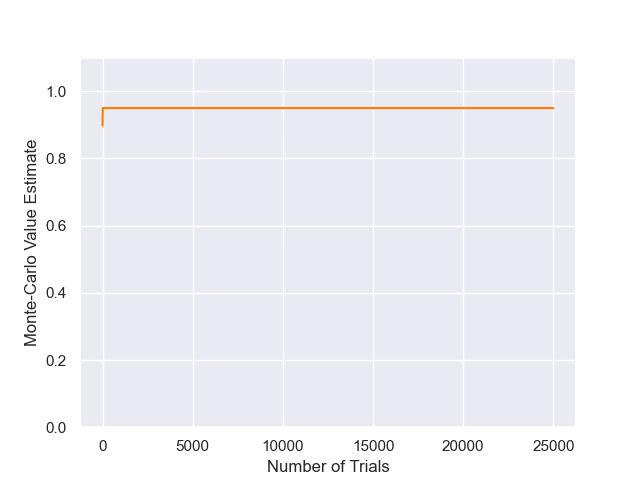}
                    \caption*{$\alpha=0.05,\epsilon=10$}
                \end{subfigure}
                
                \caption{Results for DENTS on the 20-chain ($D=20$, $R_f=1.0$). The decay function was set to $\beta(m)=\alpha$, so that DENTS mimics MENTS search policy.}
                \label{fig:dbments_20chain_hps}
            \end{figure}

            \begin{figure}
                \centering
                \includegraphics[width=0.5\textwidth]{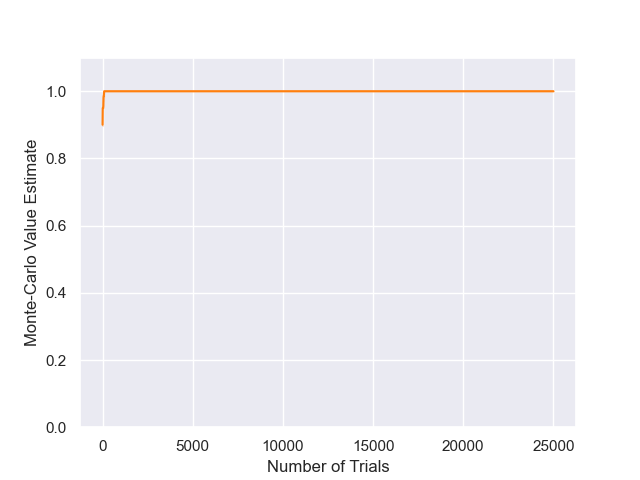}
                \caption{Results for DENTS on the 20-chain ($D=20$, $R_f=1.0$), with $\alpha=0.5$, $\beta(m)=10/\log(e+m)$ and $\epsilon=0.01$.}
                \label{fig:dents_20chain_tuned}
            \end{figure}

        \FloatBarrier
        \subsubsection{Parameter sensitivity in Frozen Lake} \label{app:param_sens_fl}
            We also ran MENTS, RENTS, TENTS, BTS and DENTS with a variety of temperatures on the 8x8 Frozen Lake environment given in Figure \ref{fig:fl8}. Again, we set $\beta(m)=\alpha/\log(e+m)$ for the decay function in DENTS, and we used an exploration parameter of $\epsilon=1$ for all of the algorithms. 

            In Figures \ref{fig:fl_param_sens_ments} and \ref{fig:fl_param_sens_tents} we can see that MENTS and TENTS take the scenic route to the goal state for medium temperatures, where the reward for reaching the goal is still significant, but they can obtain more entropy reward by wondering around the gridworld for a while first. For higher temperatures they completely ignore the goal state, opting to rather maximise policy entropy. Interestingly, RENTS in Figure \ref{fig:fl_param_sens_rents} fared better than MENTS and TENTS and never really ignored the goal state at the temperatures that we considered.

            In contrast, both BTS and DENTS were agnostic to the temperature parameter in this environment (with $\epsilon=1$) and were always able to find the goal state. We include the plots for BTS and DENTS in all of Figures \ref{fig:fl_param_sens_ments}, \ref{fig:fl_param_sens_tents} and \ref{fig:fl_param_sens_rents} for reference and as a comparison for MENTS, RENTS and TENTS.
        
            \FloatBarrier
            
            \begin{figure}
                \centering
                
                \begin{subfigure}[b]{0.32\textwidth}
                    \centering
                    \includegraphics[width=\textwidth]{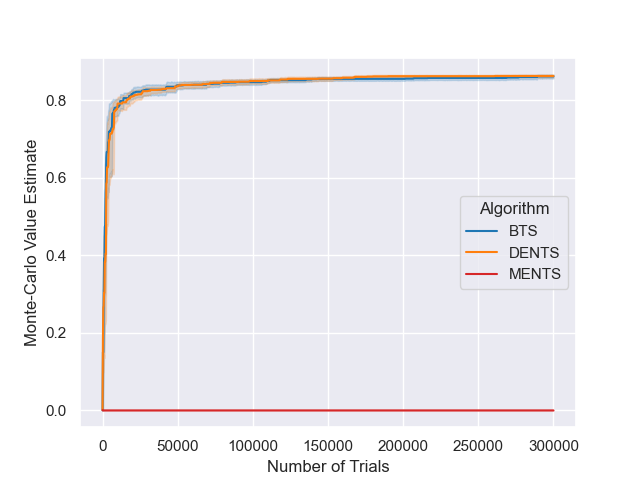}
                    \caption{$\alpha=1$}
                \end{subfigure}
                \begin{subfigure}[b]{0.32\textwidth}
                    \centering
                    \includegraphics[width=\textwidth]{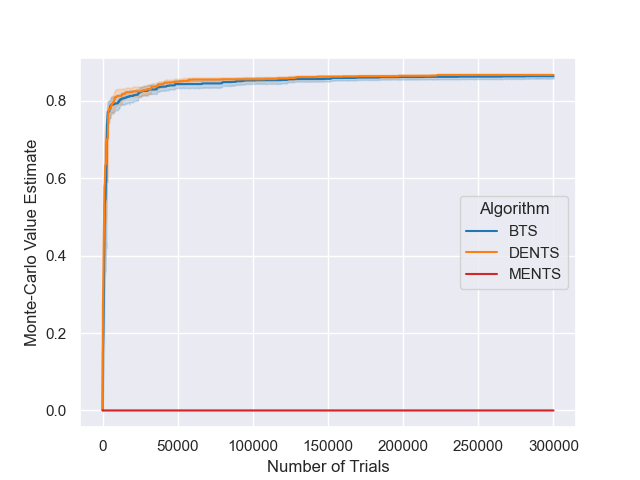}
                    \caption{$\alpha=0.5$}
                \end{subfigure}
                \begin{subfigure}[b]{0.32\textwidth}
                    \centering
                    \includegraphics[width=\textwidth]{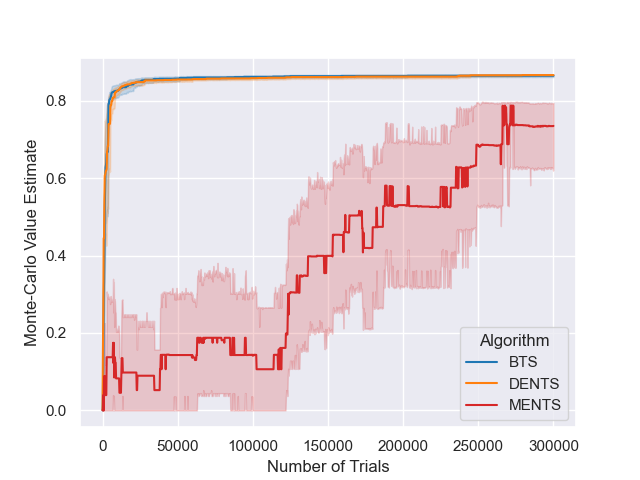}
                    \caption{$\alpha=0.1$}
                \end{subfigure}
                
                \begin{subfigure}[b]{0.32\textwidth}
                    \centering
                    \includegraphics[width=\textwidth]{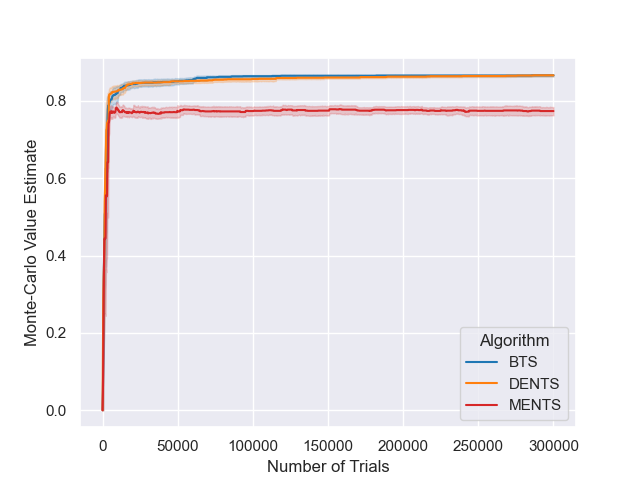}
                    \caption{$\alpha=0.05$}
                \end{subfigure}
                \begin{subfigure}[b]{0.32\textwidth}
                    \centering
                    \includegraphics[width=\textwidth]{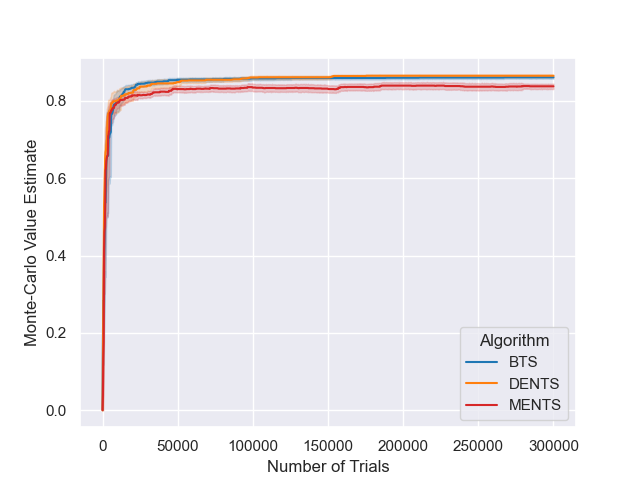}
                    \caption{$\alpha=0.01$}
                \end{subfigure}
                \begin{subfigure}[b]{0.32\textwidth}
                    \centering
                    \includegraphics[width=\textwidth]{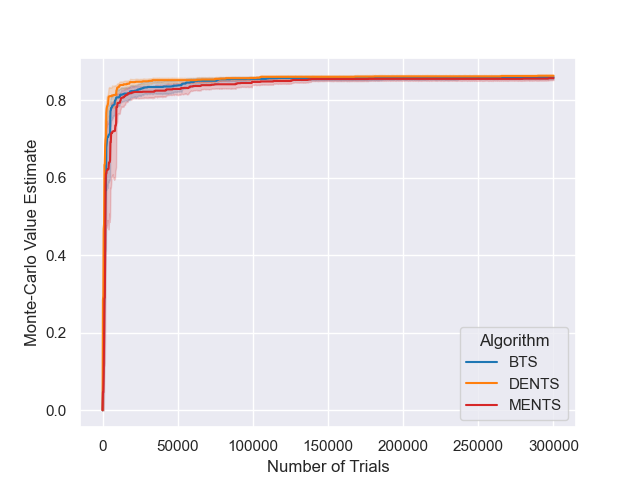}
                    \caption{$\alpha=0.005$}
                \end{subfigure}
                
                \caption{MENTS with a variety of temperatures on an 8x8 Frozen Lake environment. BTS and DENTS are included for reference.}
                \label{fig:fl_param_sens_ments}
            \end{figure}
            
            \begin{figure}
                \centering
                
                \begin{subfigure}[b]{0.32\textwidth}
                    \centering
                    \includegraphics[width=\textwidth]{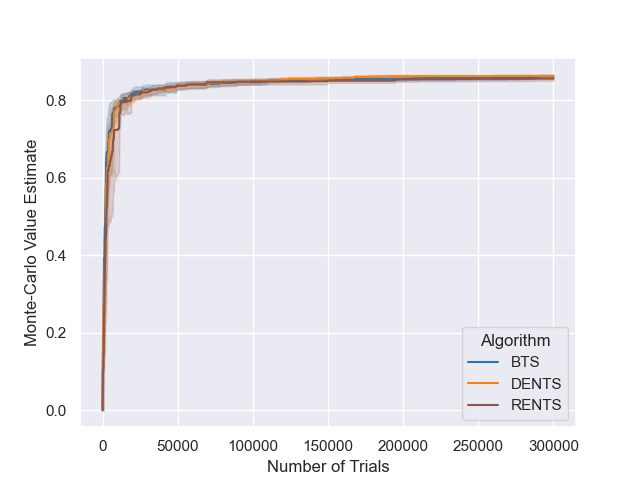}
                    \caption{$\alpha=1$}
                \end{subfigure}
                \begin{subfigure}[b]{0.32\textwidth}
                    \centering
                    \includegraphics[width=\textwidth]{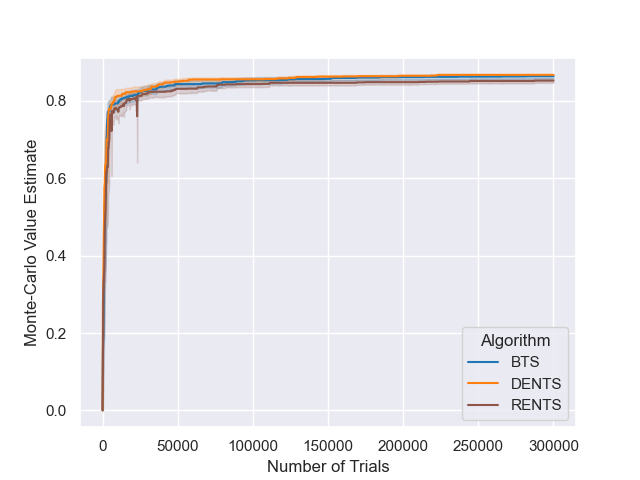}
                    \caption{$\alpha=0.5$}
                \end{subfigure}
                \begin{subfigure}[b]{0.32\textwidth}
                    \centering
                    \includegraphics[width=\textwidth]{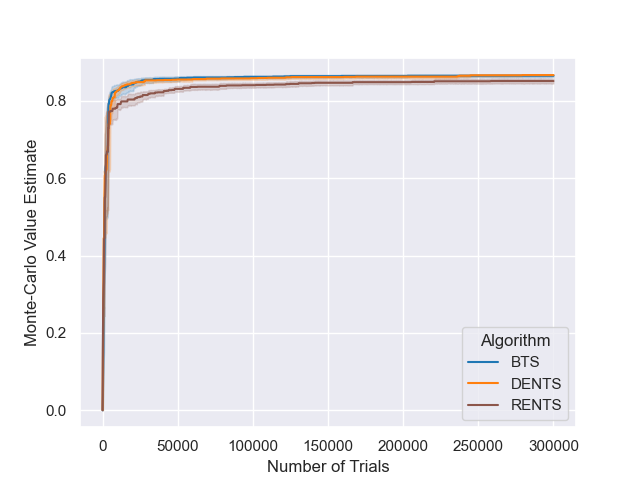}
                    \caption{$\alpha=0.1$}
                \end{subfigure}
                
                \begin{subfigure}[b]{0.32\textwidth}
                    \centering
                    \includegraphics[width=\textwidth]{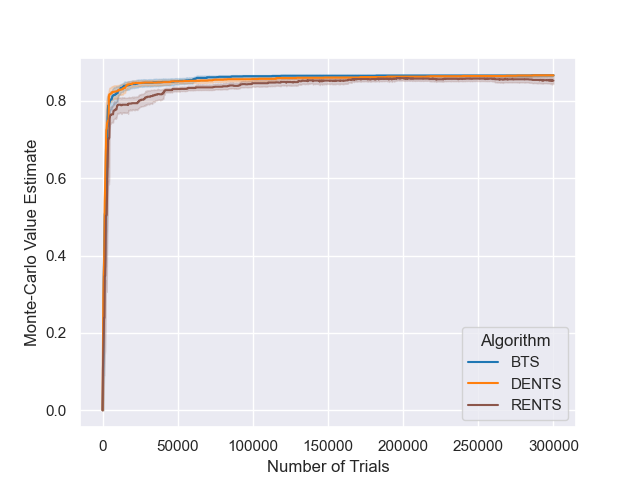}
                    \caption{$\alpha=0.05$}
                \end{subfigure}
                \begin{subfigure}[b]{0.32\textwidth}
                    \centering
                    \includegraphics[width=\textwidth]{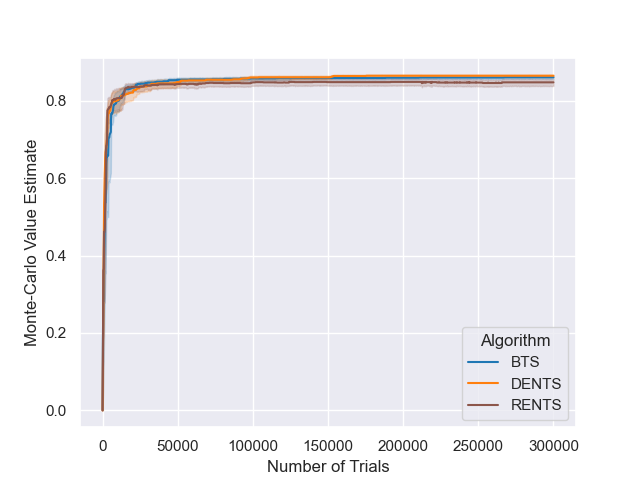}
                    \caption{$\alpha=0.01$}
                \end{subfigure}
                \begin{subfigure}[b]{0.32\textwidth}
                    \centering
                    \includegraphics[width=\textwidth]{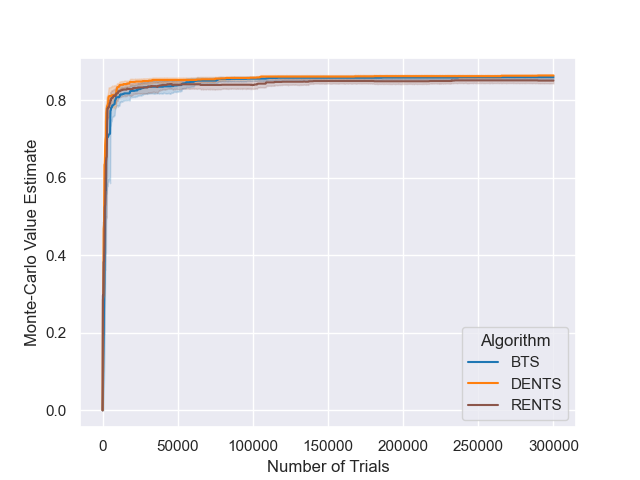}
                    \caption{$\alpha=0.005$}
                \end{subfigure}
                
                \caption{RENTS with a variety of temperatures on an 8x8 Frozen Lake environment. BTS and DENTS are included for reference.}
                \label{fig:fl_param_sens_rents}
            \end{figure}
            
            \begin{figure}
                \centering
                
                \begin{subfigure}[b]{0.32\textwidth}
                    \centering
                    \includegraphics[width=\textwidth]{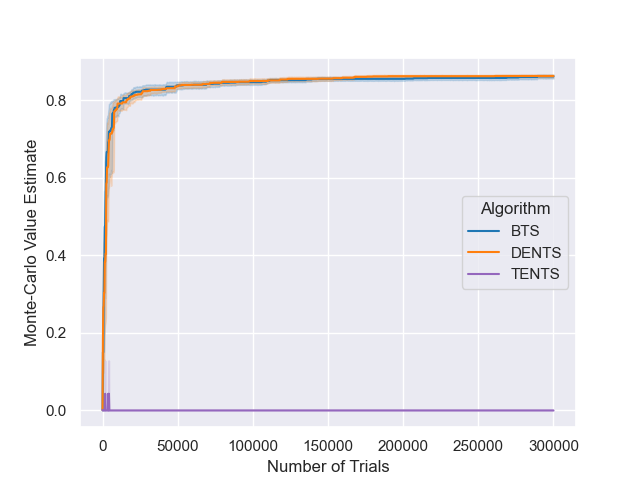}
                    \caption{$\alpha=1$}
                \end{subfigure}
                \begin{subfigure}[b]{0.32\textwidth}
                    \centering
                    \includegraphics[width=\textwidth]{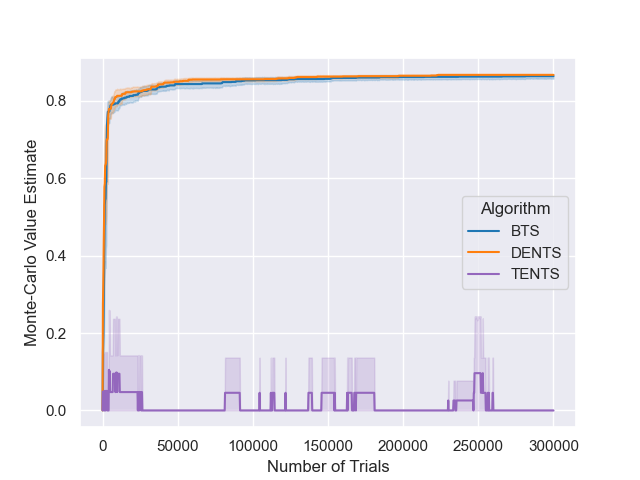}
                    \caption{$\alpha=0.5$}
                \end{subfigure}
                \begin{subfigure}[b]{0.32\textwidth}
                    \centering
                    \includegraphics[width=\textwidth]{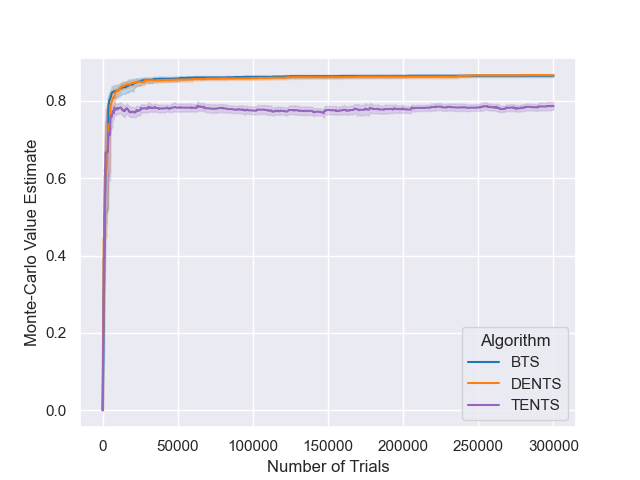}
                    \caption{$\alpha=0.1$}
                \end{subfigure}
                
                \begin{subfigure}[b]{0.32\textwidth}
                    \centering
                    \includegraphics[width=\textwidth]{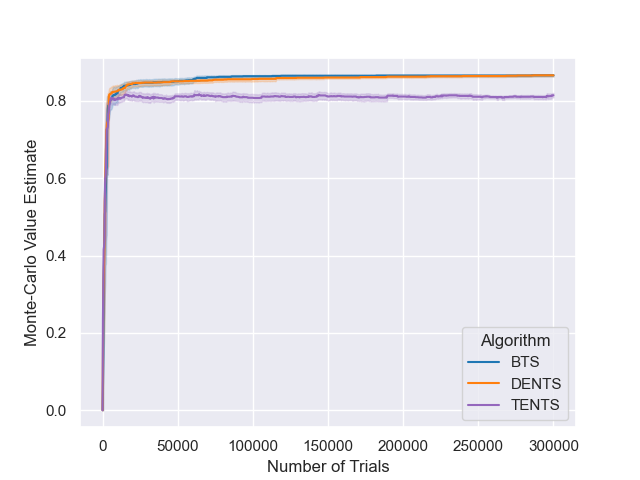}
                    \caption{$\alpha=0.05$}
                \end{subfigure}
                \begin{subfigure}[b]{0.32\textwidth}
                    \centering
                    \includegraphics[width=\textwidth]{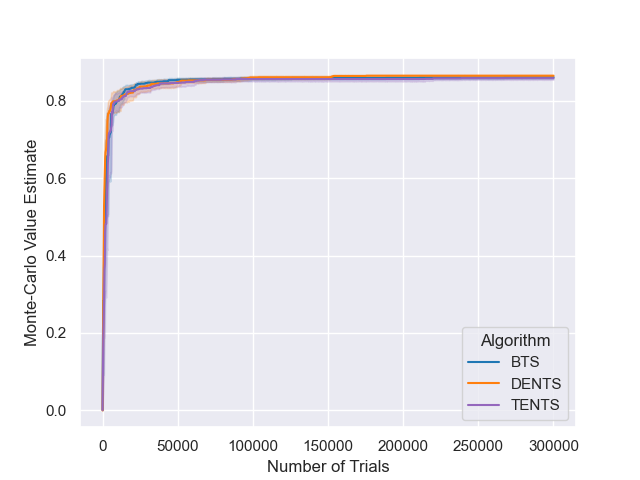}
                    \caption{$\alpha=0.01$}
                \end{subfigure}
                \begin{subfigure}[b]{0.32\textwidth}
                    \centering
                    \includegraphics[width=\textwidth]{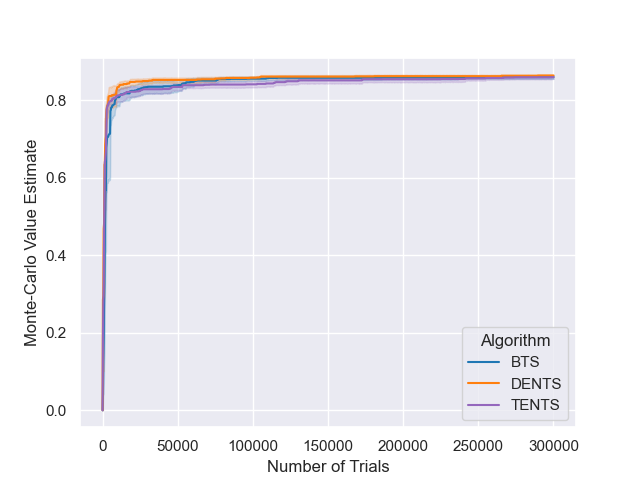}
                    \caption{$\alpha=0.005$}
                \end{subfigure}
                
                \caption{TENTS with a variety of temperatures on an 8x8 Frozen Lake environment. BTS and DENTS are included for reference.}
                \label{fig:fl_param_sens_tents}
            \end{figure}

            \FloatBarrier

    \subsection{Grid world hyper-parameter search and additional results} \label{app:hps}
        To select hyper-parameters for the experiments detailed in Section \ref{sec:gridworlds}, we performed a hyper-parameter search. The search was run on the 8x12 Frozen Lake environment from Figure \ref{fig:fl12}, and the results in Section \ref{sec:gridworlds} were run on the 8x12 Frozen Lake envrionment from Figure \ref{fig:fl12test}. For the Sailing problem, we performed the search using an initial wind direction of North, and the result in Section \ref{sec:results} used an initial wind direction of South-East.

        To avoid the search space from becoming too large, we set some parameters manually. A good rule of thumb for initial values is to assure that $Q^{\text{init}}_{\text{sft}}(s,a) < \Qss{s}{a}$ and $Q^{\text{init}}(s,a) < Q^*(s,a)$. Explicitly this means that an initial value of zero is \textit{not} a good choice for the Sailing problem, as rewards are negative (i.e. it has costs). In the Sailing environment, we actually set the initial values to $-200$, so that they were equal to the lowest possible return from a trial (the trial length was set to $50$, and an agent can incur a cost of at most $-4$ per timestep). To simplify the search space, we initially set the decay function in DENTS to $\beta(m)=\alpha/\log(e+m)$ and tune it after. 

        For the remaining parameters, we considered all combinations of the following values:
        \begin{itemize}
            \item \textit{UCT Bias}: \citeapp{prst}, 100.0, 10.0, 1.0, 0.1;
            \item \textit{MENTS exploration coefficient}: 2.0, 1.0, 0.3, 0.1, 0.03, 0.01;
            \item \textit{Temperature}: 100.0, 10.0, 1.0, 0.1, 0.01, 0.001;
            \item \textit{HMCTS UCT budget}: 100000, 30000, 10000, 3000, 1000, 300, 100, 30, 10.
        \end{itemize}
        
         Where a UCT bias of \citeapp{prst} refers to the adaptive bias introduced by Keller and Eyerich \citeapp{prst}, and `temperature' refers to the relevant temperature for the algorithm (i.e. search temperature in BTS and DENTS, and the temperature for Shannon/Relative/Tsallis entropy in MENTS/RENTS/DENTS). After that search, for DENTS we considered the decay functions of the form $\beta(m)=\beta_{\text{init}}/\log(e+m)$ and considered the following values:
        \begin{itemize}
        		\item \textit{DENTS initial entropy temperature} ($\beta_{\text{init}}$): 100.0, 10.0, 1.0, 0.1.
        \end{itemize}
        The final set of hyperparameters is given in Tables \ref{table:hyper_fl} and \ref{table:hyper_s}, which were used in the gridworld experiments in Section \ref{sec:gridworlds}. Not included in the tables: \citeapp{prst} was selected for the UCT bias in both Frozen Lake and the Sailing problem.
    
        \begin{table}[]
            \centering
            \begin{tabular}{r|ccc} 
                Algorithm   & Exploration Parameter ($\epsilon$)    & Temperature ($\alpha$)    & Initial Values ($Q^{\text{init}}$, $Q^{\text{init}}_{\text{sft}}$)    \\
                \hline
                MENTS       & 1.0                                   & 0.001                     & 0                                                                     \\
                RENTS       & 2.0                                   & 0.001                     & 0                                                                     \\
                TENTS       & 1.0                                   & 0.001                     & 0                                                                     \\
                BTS         & 2.0                                   & 0.1                       & 0                                                                     \\
                DENTS       & 1.0                                   & 0.1                       & 0                                                                     \\
            \end{tabular}
            \caption[Final hyperparameters used for Frozen Lake in Section \ref{sec:gridworlds}]{Final hyperparameters used for Frozen Lake in Section \ref{sec:gridworlds}. Not included in the table: \citeapp{prst} was selected for the bias in UCT, \citeapp{prst} was selected for the bias and 3000 for the UCT budget in HMCTS and $\beta_{\text{init}}=1.0$ was selected as the initial entropy temperature for DENTS. \label{table:hyper_fl}}
        \end{table}
    
        \begin{table}[]
            \centering
            \begin{tabular}{r|ccc} 
                Algorithm   & Exploration Parameter ($\epsilon$)    & Temperature ($\alpha$)    & Initial Values ($Q^{\text{init}}$, $Q^{\text{init}}_{\text{sft}}$)    \\
                \hline
                MENTS       & 1.0                                   & 10.0                      & -200                                                                  \\
                RENTS       & 1.0                                   & 10.0                      & -200                                                                  \\
                TENTS       & 2.0                                   & 0.1                       & -200                                                                  \\
                BTS         & 1.0                                   & 10.0                      & -200                                                                  \\
                DENTS       & 1.0                                   & 10.0                      & -200                                                                  \\
            \end{tabular}
            \caption[Final hyperparameters used for the Sailing Problem in Section \ref{sec:gridworlds}]{Final hyperparameters used for the Sailing Problem in Section \ref{sec:gridworlds}. Not included in the table: \citeapp{prst} was selected for the bias in UCT, \citeapp{prst} was selected for the bias and 30 for the UCT budget in HMCTS and $\beta_{\text{init}}=10.0$ was selected as the initial entropy temperature for DENTS. \label{table:hyper_s}}
        \end{table}

	\subsection{DENTS with a constant $\beta$} \label{app:dents_mimic_ments}
		To empirically demonstrate that DENTS search policy can mimic the search policy of MENTS, we ran DENTS with $\alpha=1.0,\beta(m)=\alpha$ on the 10-chain environment, and compared it to MENTS with $\alpha=1.0$. We also ran DENTS with $\alpha=0.001, \beta(m)=\alpha$ in the Frozen Lake environment, and compared it to MENTS with $\alpha=0.001$ which is what was selected in the hpyerparameter search (Appendix \ref{app:hps}). Results are given in Figure \ref{fig:dbments}. Note that in the 10-chain, only MENTS has a dip in performance initially, which is due to the two algorithms using different recommendation policies.

        \begin{figure*}
            \centering
            \begin{subfigure}[b]{0.49\textwidth}
                \centering
                \includegraphics[width=\textwidth]{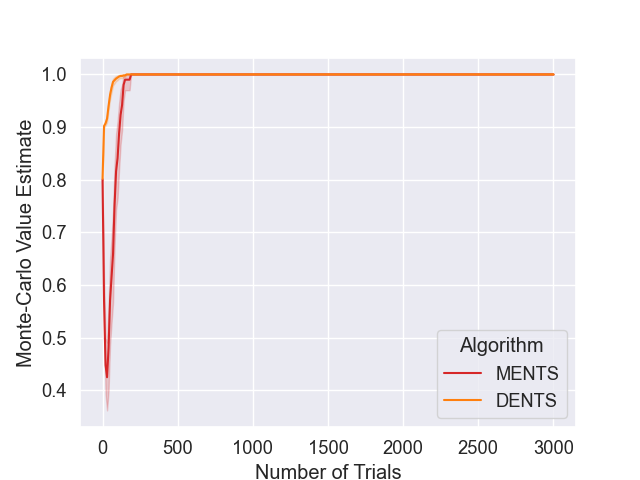}
                \caption{10-chain.}
            \end{subfigure}
            \begin{subfigure}[b]{0.49\textwidth}
                \centering
                \includegraphics[width=\textwidth]{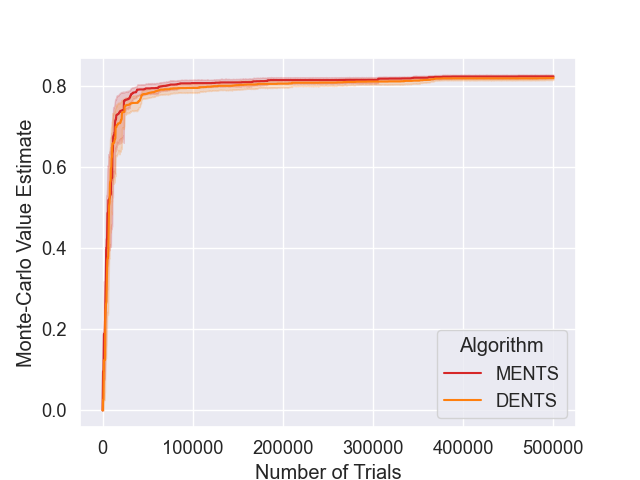}
                \caption{Frozen Lake.}
            \end{subfigure}
            \caption{Comparing DENTS with MENTS, by setting $\beta_{\text{DENTS}}(m)=\alpha_{\text{MENTS}}, \alpha_{\text{DENTS}}=\alpha_{\text{MENTS}}$, where $\alpha_{\text{MENTS}}$ is the temperature used for MENTS, and $\alpha_{\text{DENTS}},\beta_{\text{DENTS}}$ are the temperatures used by DENTS. }
            \label{fig:dbments}
        \end{figure*}

    \subsection{Additional Go details, results and discussion} \label{app:go_results}
    		Recall from Appendix \ref{app:adapt_for_games} that we initialised values with the neural networks as $Q^{\text{init}}(s,a)=\log \tilde{\pi}(a|s)+B$ and $V^{\text{init}}(s)=\tilde{V}(s)$, where $B$ is a constant (adapted from Xiao \etal \cite{xiao2019maximum}). For these experiments we set a value of $B=\frac{-1}{|\mathcal{A}|}\sum_{a\in\mathcal{A}} \log\tilde{\pi}(a|s)$. Initialising the values in such a way tends to lead to the values of $Q^{\text{init}}(s,a)$ being in the range $[-20,20]$, to account for this, we scaled the results of the game to $100$ and $-100$, which means that any parameters selected are about $100$ times larger than they would be if we had not have used this scaling.
    		
    		In these experiments we used a recommendation policy that recommends the action that was sampled the most, i.e. $\psi(s)=\max_a N(s,a)$, as this tends to be more robust to any noise from neural network outputs.
    		
    		To select each parameter we ran a round robin tournament where we varied the value of one parameter. The agent that won the most games was used to select the parameter moving forward, and in the case of a tie we used the agent which won the most games. If the winning agent had the largest or smallest value, then we ran another tournament adjusting the values accordingly.
    		
    		We tuned all of these algorithms using the game of 9x9 Go with a komi of 6.5, and giving each algorithm 2.5 seconds per move. For our final results we used the same parameters on 19x19 Go with a komi of 7.5, giving each algorithm 5.0 seconds per move.

        \subsubsection{Parameter selection for Go and supplimentary results} \label{app:go_hps}

            In this section we work through the process used to select parameters for the Go round robin tournament used in Section \ref{sec:go}. Predominantly parameters were chosen by playing out games of Go between agents. The parameters used for PUCT were copied from Kata Go and Alpha Go Zero \citeapp{katago,silver2017mastering}. 
            
            Initially we set the values of $\epsilon,\epsilon_{\tilde{\lambda}}$ to $0.03,1.0$. In Tables \ref{tab:w001} and \ref{tab:x001} we give results of tuning the search temperature for BTS and AR-BTS. For AR-BTS we tried different values of $\alpha_{\text{init}}$ in $\alpha(m)=\alpha_{\text{init}}/\sqrt{m}$.
            
            \begin{table*}[]
            \centering
                \begin{tabular}{l|cccccc}
                    \textbf{Black \textbackslash White}     & 3  & 1   & 0.3   & 0.1    & 0.03    \\ 
                    \hline
                                            3            & -     & 3-12  & 4-11  & 3-12  & 1-14  \\
                                            1             & 14-1  & -     & 8-7  & 9-6  & 7-8  \\
                                            0.3             & 14-1   & 8-7   &   -   & 7-8  & 11-4  \\
                                            0.1              & 15-0  & 11-4  & 9-6  & -     & 9-6  \\
                                            0.03              & 14-1  & 8-7  & 4-11  & 8-7  &   -   \\    
                \end{tabular}
                \caption{Results for round robin to select the temperature parameter $\alpha$ for BTS. The value of 0.1 won all four of its matches so was selected. \label{tab:w001}}
            \end{table*}
            
            \begin{table*}[]
            \centering
                \begin{tabular}{l|cccccc}
                    \textbf{Black \textbackslash White}     & 3  & 1   & 0.3   & 0.1    & 0.03    \\ 
                    \hline
                                            3            & -     	& 7-8  		& 9-6  		&  12-3 		& 12-3  		\\
                                            1            & 15-0 & - & 12-3 & 15-0 & 15-0   		\\
                                            0.3          & 13-2 & 11-4 & - & 14-1 & 15-0  		\\
                                            0.1          & 14-1 & 10-5 & 11-4 & - & 15-0   		\\
                                            0.03         & 13-2 & 5-10 & 8-7 & 13-2 &   -   	\\    
                \end{tabular}
                \caption{Results for round robin to select the temperature parameter $\alpha_{\text{init}}$ for AR-BTS. The value of 1.0 won all four of its matches so was selected. \label{tab:x001}}
            \end{table*}

            We then tuned the weighting of the prior policy $\epsilon_{\tilde{\lambda}}$. In Tables \ref{tab:w020} and \ref{tab:x020} we give results of tuning the prior policy weight for BTS and AR-BTS.
            
            \begin{table*}[]
            \centering
                \begin{tabular}{l|cccccc}
                    \textbf{Black \textbackslash White}     & 10  & 5   & 3   & 2    & 1    \\ 
                    \hline
                                            10            & - & 5-10 & 3-12 & 1-14 &  3-12  		\\
                                            5            & 14-1 & - & 12-3 & 11-4 & 11-4   		\\
                                            3          & 15-0 & 15-0 & - & 12-3 & 12-3   		\\
                                            2          & 15-0 & 12-3 & 12-3 & - & 10-5   		\\
                                            1         & 15-0 & 14-1 & 12-3 & 9-6 &   -   	\\    
                \end{tabular}
                \caption{Results for round robin to select the weighting of the prior policy $\epsilon_{\tilde{\lambda}}$ for BTS. The value of 2.0 won the most matches so was selected. \label{tab:w020}}
            \end{table*}
            
            \begin{table*}[]
            \centering
                \begin{tabular}{l|cccccc}
                    \textbf{Black \textbackslash White}     & 3  & 2   & 1   & 0.75    & 0.5    \\ 
                    \hline
                                            3            & - & 14-1 & 9-6 & 13-2 & 14-1   		\\
                                            2            & 12-3 & - & 12-3 & 15-0 & 10-5   		\\
                                            1          & 12-3 & 13-2 & - & 11-4 & 13-2   		\\
                                            0.75          & 11-4 & 10-5 & 12-3 & - & 14-1   		\\
                                            0.5         & 11-4 & 13-2 & 9-6 & 7-8 &   -   	\\   
                \end{tabular}
                \caption{Results for round robin to select the weighting of the prior policy $\epsilon_{\tilde{\lambda}}$ for AR-BTS. The value of 1.0 won the most matches so was selected. \label{tab:x020}}
            \end{table*}

            Following that, we then tuned MENTS exploration parameter $\epsilon$. In Tables \ref{tab:w030} and \ref{tab:x030} we give results of tuning the exploration parameter for BTS and AR-BTS. Although we selected the lowest value we tried here, we note that with such a value that a random action would have been sampled very few times, so the result of the hyperparameter selection was essentially that $\epsilon$ should be as low as possible.
            
            \begin{table*}[]
            \centering
                \begin{tabular}{l|cccccc}
                    \textbf{Black \textbackslash White}     & 0.1  & 0.03   & 0.01   & 0.003    & 0.001    \\ 
                    \hline
                                            0.1            & - & 12-3 & 10-5 & 8-7 & 7-8   		\\
                                            0.03            & 9-6 & - & 8-7 & 13-2 & 12-3   		\\
                                            0.01          & 11-4 & 9-6 & - & 7-8 & 12-3   		\\
                                            0.003          & 13-2 & 9-6 & 11-4 & - & 11-4   		\\
                                            0.001         & 12-3 & 11-4 & 8-7 & 9-6 &   -   	\\    
                \end{tabular}
                \caption{Results for round robin to select the exploration parameter $\epsilon$ for BTS. The value of 0.003 won the most matches so was selected. \label{tab:w030}}
            \end{table*}
            
            \begin{table*}[]
            \centering
                \begin{tabular}{l|cccccc}
                    \textbf{Black \textbackslash White}     & 0.1  & 0.03   & 0.01   & 0.003    & 0.001    \\ 
                    \hline
                                            0.1            & - & 12-3 & 14-1 & 12-3 & 13-2   		\\
                                            0.03            & 15-0 & - & 11-4 & 14-1 & 14-1   		\\
                                            0.01          & 15-0 & 11-4 & - & 13-2 & 13-2   		\\
                                            0.003          & 15-0 & 14-0 & 14-1 & - & 13-2   		\\
                                            0.001         & 14-1 & 14-1 & 14-1 & 15-0 &   -   	\\    
                \end{tabular}
                \caption{Results for round robin to select the exploration parameter $\epsilon$ for BTS. The value of 0.001 won the most matches so was selected. \label{tab:x030}}
            \end{table*}

            Then we tuned the (fixed and constant) search temperatures for MENTS, AR-MENTS, RENTS, AR-RENTS, TENTS and AR-TENTS in tables \ref{tab:w040}, \ref{tab:x040},\ref{tab:w050}, \ref{tab:x050},\ref{tab:w060} and \ref{tab:x060}.

            \begin{table*}[]
            \centering
                \begin{tabular}{l|cccccc}
                    \textbf{Black \textbackslash White}     & 0.3  & 0.1   & 0.03   & 0.01    & 0.003    \\ 
                    \textbf{Black \textbackslash White}     & 3  & 1   & 0.3   & 0.1    & 0.03    \\ 
                    \hline
                                            3            & -     	&   	9-6	&  9-6 		&   	11-4	&  10-5 		\\
                                            1            & 11-4  		& -     	&  11-4		&  9-6 		& 10-5  		\\
                                            0.3          &  9-6  	&  7-8  	&   -   &  11-4 		&  6-9 		\\
                                            0.1          &  4-11 		&   4-11		&   0-15		& -     	&  4-11 		\\
                                            0.03         &   	2-13	&  2-13 		&  0-15 		&   0-15		&   -   	\\      
                \end{tabular}
                \caption{Results for round robin to select the temperature parameter $\alpha$ for MENTS. The value of 1.0 won the most matches so was selected. \label{tab:w040}}
            \end{table*}
            
            \begin{table*}[]
            \centering
                \begin{tabular}{l|cccccc}
                    \textbf{Black \textbackslash White}     & 3  & 1   & 0.3   & 0.1    & 0.03    \\ 
                    \hline
                                            3            & -     	&  2-13 		& 1-14  		& 3-12  		& 5-10  		\\
                                            1            &  14-1 		& -     	& 11-4 		& 13-2  		& 15-0  		\\
                                            0.3          &  15-0  	&  12-3  	&   -   &  12-3 		& 14-1  		\\
                                            0.1          &   15-0		&  9-6 		&  9-6 		& -     	&  15-0 		\\
                                            0.03         &   15-0		&  8-7 		&  9-6 		& 14-1  		&   -   	\\    
                \end{tabular}
                \caption{Results for round robin to select the temperature parameter $\alpha$ for AR-MENTS. The value of 0.3 won the most matches so was selected. \label{tab:x040}}
            \end{table*}
            
            \begin{table*}[]
            \centering
                \begin{tabular}{l|cccccc}
                    \textbf{Black \textbackslash White}     & 3  & 1   & 0.3   & 0.1    & 0.03    \\ 
                    \hline
                                            3            & -     	&   	7-8	&  9-6 		&  10-5 		&  9-6 		\\
                                            1            &  13-2 		& -     	& 12-3 		& 11-4  		&  12-3 		\\
                                            0.3          &   10-5 	& 11-4   	&   -   &  10-5 		&  5-10 		\\
                                            0.1          &   3-12		&  2-13 		& 1-14  		& -     	&  3-12 		\\
                                            0.03         &   3-12		&  0-15 		&  0-15 		&  0-15 		&   -   	\\    
                \end{tabular}
                \caption{Results for round robin to select the temperature parameter $\alpha$ for RENTS. The value of 1.0 won the most matches so was selected. \label{tab:w050}}
            \end{table*}
            
            \begin{table*}[]
            \centering
                \begin{tabular}{l|cccccc}
                    \textbf{Black \textbackslash White}     & 3  & 1   & 0.3   & 0.1    & 0.03    \\ 
                    \hline
                                            3            & -     	&  4-11 		&  2-13 		& 9-6  		& 3-12  		\\
                                            1            &  15-0 		& -     	&  12-3		& 13-2  		&   13-2		\\
                                            0.3          &  15-0  	&  13-2  	&   -   &  14-1 		&  15-0 		\\
                                            0.1          &  15-0 		&   10-5		&   13-2		& -     	&  15-0 		\\
                                            0.03         &  13-2 		&  13-2 		&  12-3 		&   	14-1	&   -   	\\    
                \end{tabular}
                \caption{Results for round robin to select the temperature parameter $\alpha$ for AR-RENTS. The value of 0.3 won the most matches so was selected. \label{tab:x050}}
            \end{table*}
            
            \begin{table*}[]
            \centering
                \begin{tabular}{l|cccccc}
                    \textbf{Black \textbackslash White}     & 300  & 100   & 30   & 10    & 3    \\ 
                    \hline
                                            300            & -     	&  3-12 		&  5-10 		&  7-8 		& 9-6  		\\
                                            100            &  6-9 		& -     	&  9-6		&  12-3 		& 12-3  		\\
                                            30          & 8-7   	&  7-8  	&   -   &  9-6 		&  11-4 		\\
                                            10          &  0-15 		&  2-13 		& 2-13  		& -     	& 6-9  		\\
                                            3         &  1-14 		& 1-14  		& 2-13  		&  5-10 		&   -   	\\    
                \end{tabular}
                \caption{Results for round robin to select the temperature parameter $\alpha$ for TENTS. The value of 1.0 won the most matches so was selected. \label{tab:w060}}
            \end{table*}
            
            \begin{table*}[]
            \centering
                \begin{tabular}{l|cccccc}
                    \textbf{Black \textbackslash White}     & 30  & 10   & 3   & 1    & 0.3    \\ 
                    \hline
                                            30            & -     	&  14-1 		& 15-0  		& 14-1  		& 14-1  		\\
                                            10            &  15-0 		& -     	&  13-2		& 15-0  		&  15-0 		\\
                                            3          &  15-0  	&   14-1 	&   -   &  14-1 		&  15-0 		\\
                                            1          &   	15-0	&  15-0 		&  14-1 		& -     	& 15-0  		\\
                                            0.3         &   15-0		&   	15-0	&   14-1		& 15-0  		&   -   	\\    
                \end{tabular}
                \caption{Results for round robin to select the temperature parameter $\alpha$ for AR-TENTS. The value of 3.0 won the most matches so was selected. \label{tab:x060}}
            \end{table*}

            Finally, we considered entropy temperatures of the form $\beta(m)=\beta_{\text{init}}/\sqrt{m}$ for DENTS and AR-DENTS, and tuned the value of $\beta_{\text{init}}$, in Tables \ref{tab:w070} and \ref{tab:x070} for DENTS and AR-DENTS respectively.

            \begin{table*}[]
            \centering
                \begin{tabular}{l|cccccc}
                    \textbf{Black \textbackslash White}     & 0.1  & 0.03   & 0.01   & 0.003    & 0.001    \\ 
                    \hline
                                            0.1            & - & 10-5 & 12-3 & 8-7 & 11-4  		\\
                                            0.1            & 13-2 & - & 11-4 & 13-2  & 10-5   		\\
                                            0.1            & 12-3 & 11-4 & - & 10-5 & 11-4  		\\
                                            0.1            & 7-8 & 13-2 & 13-2 & - & 10-5  		\\
                                            0.1            & 13-2 & 13-2 & 11-4 & 11-4 &  - 		\\
                \end{tabular}
                \caption{Results for round robin to select the initial entropy temperature $\beta_{\text{init}}$ for BTS. The value of 0.3 won the most matches so was selected. \label{tab:w070}}
            \end{table*}
            
            \begin{table*}[]
            \centering
                \begin{tabular}{l|cccccc}
                    \textbf{Black \textbackslash White}     & 3  & 1   & 0.3   & 0.1    & 0.03    \\ 
                    \hline
                                            3            & - & 11-4 & 12-3 & 12-3 & 14-1  		\\
                                            1            & 13-2 & - & 11-4 & 13-2 & 13-2  		\\
                                            0.3            & 14-1 & 13-2 & - & 13-2 & 14-1  		\\
                                            0.1            & 12-3 & 12-3 & 12-3 & - & 11-4  		\\
                                            0.03            & 12-3 & 13-2 & 13-2 & 14-1 &  - 		\\
                \end{tabular}
                \caption{Results for round robin to select the initial entropy temperature $\beta_{\text{init}}$ for BTS. The value of 0.3 won the most matches so was selected. \label{tab:x070}}
            \end{table*}

            After tuning all of the algorithms, we compared each algorithm to their AR version in table \ref{tab:y010}, and the AR versions universally outperformed their counterparts. We used all of the selected parameters in 19x19 Go to run our final experiments given in Table \ref{table:go_results}.

            \begin{table*}[]
            \centering
                \begin{tabular}{l|cccccc}
                    \textbf{Black \textbackslash White}     & MENTS     & AR-MENTS  & BTS       & AR-BTS    & DENTS     & AR-DENTS  \\ 
                    \hline
                                            MENTS           &   -       & 0-25      &           &           &           &           \\
                                            AR-MENTS        & 25-0      &   -       &           &           &           &           \\
                                            BTS             &           &           &   -       & 16-9      &           &           \\
                                            AR-BTS          &           &           & 22-3      &   -       &           &           \\
                                            DENTS           &           &           &           &           &   -       & 21-4      \\
                                            AR-DENTS        &           &           &           &           & 22-3      & -         \\      
                    \hline
                    \textbf{Black \textbackslash White}     & RENTS     & AR-RENTS  & TENTS     & AR-TENTS  &           &           \\ 
                    \hline
                                            RENTS           &   -       & 2-23      &           &           &           &           \\
                                            AR-RENTS        & 24-1      &   -       &           &           &           &           \\
                                            TENTS           &           &           &   -       & 8-17      &           &           \\
                                            AR-TENTS        &           &           & 23-2      & -         &           &           \\         
                \end{tabular}
                \caption{Results for the matches of each algorithm against its AR version.\label{tab:y10}}
            \end{table*}

            Finally, we also ran AR-BTS against Kata Go directly limiting each algorithm to 1600 trials. KataGo beat AR-BTS by 31-19, confirming that the aditional exploration is outweighed by the information contained in the neural networks, and in Go the Boltzmann search algorithms gain their advantage via the Alias method and being able to run more trials quickly.

            \FloatBarrier

    \FloatBarrier
\newpage
\clearpage
\section{Proofs} \label{app:proofs}

    \FloatBarrier

    This proof section is structured as follows:
    \begin{enumerate}
        \item First, in Section \ref{app:mcts_process} we revisit MCTS as a stochastic process, defining some additional notation that was not useful in the main body of the paper, but will be for the following proofs;
        \item Second, in Section \ref{app:preliminaries} we introduce preliminary results, that will be useful building blocks for proofs in later Theorems;
        \item Third, in Section \ref{app:soft_learning_results} we show some general results about soft values that will also be useful later;
        \item Fourth, in Section \ref{app:simple_regret_results} simple regret is then revisited, and we show that any bounds on the simple regret of a policy are equivalent to showing bounds on the simple regret of an action;
        \item Fifth, in Section \ref{app:q_result} we show in a general way, that if a value function admits a concentration inequality, then the corresponding Q-value function admits a similar concentration inequality;
        \item Sixth, in Section \ref{app:ments_results} we show concentration inequalities for MENTS about the optimal soft values, and give bounds on the simple regret of MENTS, provided the temperature parameter is sufficiently small;
        \item Seventh, in Sections \ref{appsec:dents_proofs} and \ref{appsec:bts_proofs} we also provide concentration inequalities around the optimal standard values for BTS and DENTS, and give simple regret bounds, irrespective of the temperature paramters;
        \item Finally, in Section \ref{sec:ar_proofs} we consider results that are relavant for the algorithms using average returns from Section \ref{app:average_returns}.
    \end{enumerate}

    \subsection{Revisiting the MCTS stochastic process} \label{app:mcts_process}
        In this subsection we recall the relevant details of the \textit{MCTS stochastic process}, and introduce additional notation that will be useful in the proofs that were not necessary in the main paper. We will also recall the details for the UCT, MENTS, BTS and DENTS processes. We will use the phrase \textit{any Boltzmann MCTS process} to refer to any one of the MENTS, BTS and DENTS processes. In particular, we index the (Q-)values with the number visits to the relevant node.
        
        An MCTS process begins with a search tree $\mathcal{T}^0=\{s_0\}$ that contains only the root node/initial state. Let $\tau^n=(s_0^n,a_0^n,...,s_{h-1}^n,a_{h-1}^n,s_{h}^n)$ be a random variable for the $n$th trajectory or trial, where $s_0^n,...,s_{h-1}^n\in\mathcal{T}^{n-1}$ and $s_h^n\not\in\mathcal{T}^{n-1}$ (or $h=H$), and where actions are selected or sampled using $\pi^n$ the search policy for the $n$th trial. A new node is added to the search tree $\mathcal{T}^n=\{s_h^n\}\cup\mathcal{T}^{n-1}$. (Q-)Value estimates are kept at each node in the tree, and new values are initialised using the functions $V^{\text{init}}(s)$ and $Q^{\text{init}}(s,a)$, which are typically implemented as a constant value, using a \textit{rollout policy} or using a \textit{neural network}. Finally, the (Q-)value estimates are updated in a backup phase.
        
        In the following, it will be useful to write $N(s)$ the number of times state~$s$ was visited, and $N(s,a)$ the number of times action~$a$ was selected from state~$s$ as a sum of indicator random variables. Let $T(s_t)$ (and $T(s_t,a_t)$) be the set of trajectory indices that $s_t$ was visited on (and action $a_t$ selected), that is:
        \begin{align}
            T(s_t) &= \{i | s^i_t = s_t \} \\
            T(s_t,a_t) &= \{i | s^i_t = s_t, a^i_t = a_t \}.
        \end{align}
        This allows the counts $N(s_t),$ $N(s_t,a_t)$ and $N(s_{t+1})$ (with $s_{t+1}\in\succc{s_t}{a_t}$) to be written as sums of indicator random variables:
        \begin{align}
            N(s_t) = \sum_{i=1}^n \one[s^i_t=s_t] = |T(s_t)|, \\
            N(s_t,a_t) = \sum_{i=1}^n \one[s^i_t=s_t,a^i_t=a_t] = |T(s_t,a_t)|, \\ 
            N(s_t,a_t) = \sum_{i\in T(s_t)} \one[a^i_t = a_t], \label{appeq:nsa_sum} \\
            N(s_{t+1}) = \sum_{i\in T(s_t,a_t)} \one[s^i_{t+1} = s_{t+1}]. \label{appeq:ns_sum}
        \end{align}
        Additionally, we make the assumption that for any two states $s,s'\in\cl{S}$ that $s=s'$ if and only if the trajectories leading to them are identical. This assumption is purely to simplify notation, so that nodes in the tree have a one-to-one correspondence with states (or state-action pairs).

        \paragraph{The UCT process.} 
        	The UCT search policy can be defined as:
        \begin{align}
             \pi_{\textnormal{UCT}}^n(s_t) &= \max_{a\in\cl{A}} \text{UCB}^n(s_t,a), \\
             \text{UCB}^n(s_t, a) &= 
            		\begin{cases}
            			\infty & \text{if } N(s_t,a)=0 \\
            			\bar{Q}^{N(s_t,a)}(s_t, a)+c \sqrt{\frac{\log N(s_t)}{N(s_t, a)}} & \text{if } N(s_t,a)>0
            		\end{cases}
        \end{align}
        \noindent where, after $n$ trials, $\bar{Q}^{N(s,a)}(s,a)$ is the empirical estimate of the value at node $(s,a)$, where action~$a$ has been selected $N(s, a)$ from state~$s$. The backup consists of updating empirical estimates for $t=h-1,...,0$:
        \begin{align}
            \bar{V}^{N(s_t)+1}(s_t) &= \bar{V}^{N(s_t)}(s_t) + \frac{\bar{R}(t) - \bar{V}^{N(s_t)}(s_t)}{N(s_t) + 1}, \label{appeq:uct_vbar} \\
            \bar{Q}^{N(s_t,a_t)+1}(s_t, a_t) &= \bar{Q}^{N(s_t,a_t)}(s_t, a_t) + \frac{\bar{R}(t) - \bar{Q}^{N(s_t,a_t)}(s_t, a_t)}{N(s_t, a_t) + 1}, \label{appeq:uct_qbar}
        \end{align}
        \noindent where $\bar{R}(t) = V^{\text{init}}(s_h) + \sum_{i=t}^{h-1} R(s_i,a_i)$, and values are initialised as $\bar{V}^1(s)=V^{\text{init}}(s)$ and $\bar{Q}^0(s,a)=0$.

        \paragraph{The MENTS process.} 
        The policy for the MENTS process at state $s$ on the $n$th trial is:
        \begin{align}
            \pi^{n}_{\text{MENTS}}(a|s) = (1-\lambda_s)\exp\left(\left(\Qst{s}{a}{N(s,a)}-\Vst{s}{N(s)}\right)/\alpha\right) + \frac{\lambda_s}{|\cl{A}|}, \label{appeq:ments_soft_policy}
        \end{align}
        where $\lambda_s=\min(1,\epsilon/\log(e+N(s)))$, $\epsilon \in (0,\infty)$. The estimated soft values are computed using the backups for $t=h-1,...,0$:
        \begin{align}
            \Vst{s_t}{N(s_t)} &= \alpha \log \sum_{a\in\cl{A}} \exp \left(\Qst{s_t}{a}{N(s_t,a)}/\alpha \right), \label{appeq:soft_v_backup} \\
            \Qst{s_t}{a_t}{N(s_t,a_t)} &= R(s_t,a_t) + \sum_{s'\in\succc{s}{a}} \left( \frac{N(s')}{N(s_t,a_t)} \Vst{s'}{N(s')} \right). \label{appeq:soft_q_backup}
        \end{align}
        The soft (Q-)values are initialised as $\Vst{s}{1}=V^{\text{init}}(s)$ and $\Qst{s}{a}{0}=Q^{\text{init}}_{\text{sft}}(s)$ (typically $Q^{\text{init}}_{\text{sft}}(s)=0$).

        \paragraph{The DENTS process.}
        Let $\beta : \bb{R}\rightarrow [0,\infty)$ be a bounded function. The policy for the DENTS process at state $s$ on the $n$th trial is:
        \begin{align}
            \pi^{n}_{\text{DENTS}}(a|s) = (1-\lambda_s)\rho^n(a|s) + \frac{\lambda_s}{|\cl{A}|}, \label{appeq:dents_search_policy} 
        \end{align}
        where $\lambda_s=\min(1,\epsilon/\log(e+N(s)))$, $\epsilon \in (0,\infty)$ and where $\rho$ is given by:
        \begin{align}
            \rho^n_{\text{DENTS}}(a|s) \propto \exp\left(\frac{1}{\alpha}\left(\Qt{s}{a}{N(s,a)} + \beta(N(s))\cl{H}_Q^{N(s,a)}(s,a) \right)\right).
        \end{align}
        The Bellman values and entropy values are computed using the backups for $t=h-1,...,0$:
        \begin{align}
            \Qt{s_t}{a_t}{N(s_t,a_t)} &= R(s_t,a_t) + \sum_{s' \in \succc{s_t}{a_t}} \left( \frac{N(s')}{N(s_t,a_t)} \Vt{s'}{N(s')} \right), \label{appeq:dp_q_backup} \\ 
            \Vt{s_t}{N(s_t)} &=\max_{a\in\cl{A}} \Qt{s_t}{a}{N(s_t,a)}, \label{appeq:dp_v_backup} \\
            \cl{H}_Q^{N(s_t,a_t)}(s_t,a_t) &= \sum_{s'\in \succc{s_t}{a_t}} \frac{N(s')}{N(s_t,a_t)} \cl{H}_V^{N(s')}(s'), \\
            \cl{H}_V^{N(s_t)}(s_t) &= \cl{H}(\pi^n_{\text{DENTS}}(\cdot | s_t)) + \sum_{a\in\cl{A}} \pi^n_{\text{DENTS}}(a_t|s_t)\cl{H}_Q^{N(s_t,a_t)}(s_t,a_t), 
        \end{align}

        The (Q-)values are initialised as $\Vt{s}{1}=V^{\text{init}}(s)$ and $\Qt{s}{a}{0}=Q^{\text{init}}_{\text{sft}}(s)$ (typically $Q^{\text{init}}_{\text{sft}}(s)=0$). The entropy values are initialised as $\cl{H}_Q^{0}(s,a)=0$ and $\cl{H}_V^{1}(s) = \cl{H}(\pi^n_{\text{DENTS}}(\cdot | s_t))$, where the node for $s$ is created on the $n$th trial.

        Because we need to reason about $\pi^n_{\text{DENTS}}$, and subsequently $\rho^n_{\text{DENTS}}$, we give the exact form of $\rho^n_{\text{DENTS}}$:
        \begin{align}
            \rho^n_{\text{DENTS}}(a|s) &= \frac{\exp\left(\frac{1}{\alpha}\left(\Qt{s}{a}{N(s,a)} + \beta(N(s))\cl{H}_Q^{N(s,a)}(s,a) \right)\right)}{
                                \sum_{a'\in\cl{A}} \exp\left(\frac{1}{\alpha}\left(\Qt{s}{a'}{N(s,a')} + \beta(N(s))\cl{H}_Q^{N(s,a')}(s,a') \right)\right)  }.  \label{appeq:full_rho}
        \end{align}
        Let $V_\rho^{N(s)}(s)$ be defined as the value:
        \begin{align}
            V_\rho^{N(s)}(s) = \alpha \log \left[ \sum_{a'\in\cl{A}} \exp\left(\frac{1}{\alpha}\left(\Qt{s}{a'}{N(s,a')} + \beta(N(s))\cl{H}_Q^{N(s,a')}(s,a') \right)\right) \right], \label{appeq:v_rho}
        \end{align} 
        and notice that the value of $\exp(V_\rho^{N(s)}(s)/\alpha)$ is equal to the denominator in Equation (\ref{appeq:full_rho}), and so by rearranging we can write $\rho^n_{\text{DENTS}}$ as 
        \begin{align}
            \rho^n_{\text{DENTS}}(a|s) &= \exp\left(\frac{1}{\alpha}\left(\Qt{s}{a}{N(s,a)} + \beta(N(s))\cl{H}_Q^{N(s,a)}(s,a) - V_\rho^{N(s)}(s) \right)\right), \label{appeq:rho_concise}
        \end{align}
        and subsequently, we can write 
        \begin{align}
            \pi^{n}_{\text{DENTS}}(a|s) = (1-\lambda_s)\exp\left(\frac{1}{\alpha}\left(\Qt{s}{a}{N(s,a)} + \beta(N(s))\cl{H}_Q^{N(s,a)}(s,a) - V_\rho^{N(s)}(s) \right)\right) + \frac{\lambda_s}{|\cl{A}|}. \label{appeq:dents_search_policy_exact} 
        \end{align}

        \paragraph{The BTS process.}
        The BTS process is a special case of the DENTS process when $\beta(m)=0$ for all $m\in\bb{N}$.

    \subsection{Preliminaries} \label{app:preliminaries}
    
        Now we will provide lemmas are useful to avoid having to repeat the same argument in multiple proofs. In the following it will be useful to know that any action at any node in any MCTS process, for any number of trials, there is a minimum positive probability that it is chosen. 
        \begin{lemma} \label{lem:min_prob}
             Consider any Boltzmann MCTS process. There exists some $\pi^{\textnormal{min}}>0$, such that for any state $s_t\in\cl{S}$, for all $a_t\in\cl{A}$ and any number of trials $n\in\bb{N}$ we have $\pi^n(a_t|s_t)>\pi^{\textnormal{min}}$.
        \end{lemma}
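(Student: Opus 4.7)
The plan is to build $\pi^{\text{min}}$ in two stages: first find a uniform positive lower bound $\rho^{\text{min}}$ on the Boltzmann factor $\rho^n(a|s)$, then combine with the uniform term $1/|\cl{A}|$ coming from the mixture. Since every Boltzmann MCTS search policy has the form $\pi^n(a|s) = (1-\lambda_s)\rho^n(a|s) + \lambda_s/|\cl{A}|$ with $\lambda_s \in [0,1]$, it is a convex combination of $\rho^n(a|s)$ and $1/|\cl{A}|$, so $\pi^n(a|s) \geq \min(\rho^n(a|s), 1/|\cl{A}|)$. Any uniform positive lower bound on $\rho^n$ will therefore give the required bound on $\pi^n$.

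The heart of the proof is showing that the argument of the exponential in $\rho^n$ stays in a bounded interval, uniformly over $s$, $a$, and $n$. I would prove this by backward induction on depth $t=H,H-1,\ldots,0$. At the horizon, all values equal the initial-value functions, assumed bounded (e.g.\ constant, or bounded by a rollout of a bounded reward, or a neural network output). At earlier levels, the $\hat{Q}$ and $\Qsts$ backups in Equations~(\ref{appeq:dp_q_backup}) and~(\ref{appeq:soft_q_backup}) are weighted averages of bounded quantities from level $t+1$ plus the bounded reward $R(s,a)$, hence stay in a bounded range; the Bellman backup of $\hat{V}$ is a max of bounded quantities, and the soft backup of $\Vsts$ is a log-sum-exp that adds at most $\alpha\log|\cl{A}|$ to the maximum. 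For DENTS, the entropy backups yield $0 \leq \cl{H}_V(s), \cl{H}_Q(s,a) \leq H\log|\cl{A}|$, and combined with the assumption that $\beta$ is bounded, the term $\beta(N(s))\cl{H}_Q(s,a)$ is uniformly bounded as well.

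Granted these uniform bounds, the logit of $\rho^n$ --- which is $\Qst{s}{a}{N(s,a)} - \Vst{s}{N(s)}$ for MENTS, $\Qt{s}{a}{N(s,a)} - V_\rho^{N(s)}(s)$ for BTS, and $\Qt{s}{a}{N(s,a)} + \beta(N(s))\cl{H}_Q^{N(s,a)}(s,a) - V_\rho^{N(s)}(s)$ for DENTS, with $V_\rho$ from Equation~(\ref{appeq:v_rho}) --- lies in a fixed interval $[-M,M]$ for some $M$ depending only on $\cl{M}$, the initial-value functions, $\alpha$, and $\sup_m \beta(m)$. Thus $\rho^n(a|s) \geq e^{-2M/\alpha} =: \rho^{\text{min}} > 0$, and setting $\pi^{\text{min}} = \tfrac{1}{2}\min(\rho^{\text{min}},1/|\cl{A}|) > 0$ yields the strict inequality claimed.

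The only mildly delicate point is that $V_\rho$ depends on the same $\hat{Q}$ and $\cl{H}_Q$ whose bounds we are establishing; this is not circular because $V_\rho$ is a log-sum-exp of quantities already bounded by the inductive hypothesis at depth $t$, so the bound on $V_\rho$ follows immediately at the same level. Everything else is bookkeeping, and I would write the constants in terms of $R_{\max}, R_{\min}, H, |\cl{A}|, \alpha, \sup_m \beta(m)$ and the extrema of $V^{\text{init}}$ and $Q^{\text{init}}$.
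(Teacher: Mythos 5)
Your proof is correct and follows essentially the same route as the paper's: both establish uniform bounds on the (entropy-augmented) Q-values and on the normalising value ($V^{\max}$ in the paper, your $V_\rho$ bound) by backward induction from the horizon, conclude that the Boltzmann factor is a positive exponential of a uniformly bounded exponent, and then use the uniform-mixture term to finish. Your observation that $\pi^n(a|s) \geq \min(\rho^n(a|s), 1/|\cl{A}|)$ by convexity is a marginally tidier way of handling the $\lambda_s$ dependence than the paper's infimum over $\lambda\in[0,1]$, but the substance is identical.
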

        \begin{proofoutline}
            Firstly, we consider the case of the MENTS process. Define the $Q^{\min}$ function as follows:
            \begin{align}
                Q^{\min}(s_t,a_t) = \min_{s_{t+1},a_{t+1},...,s_H,a_H} \sum_{i=t}^H \min(0, Q^{\text{init}}_{\text{sft}}(s_i), R(s_i,a_i)).
            \end{align}
            
            And define the $V^{\max}$ function as:
            \begin{align}
                V^{\max}(s_t) &= \alpha \log \sum_{a\in\cl{A}} \exp (Q^{\max}(s_t,a)/\alpha), \label{appeq:vmax} \\
                Q^{\max}(s_t,a_t) &= R(s_t,a_t)+\max_{s_{t+1}\in\succc{s_t}{a_t}} V^{\max}(s_{t+1}).
            \end{align}
            
            Via induction, it is possible to show that $Q^{\min}(s_t,a_t)\leq \Qst{s_t}{a_t}{N(s_t,a_t)}$ and $V^{\max}(s_t)\geq \Vst{s_t}{N(s_t)}$. Now, define $\pi^{\min}$ as:
            \begin{align}
                \pi^{\min} = \inf_{\lambda\in[0,1]} \min_{(s,a)\in\cl{S}\times\cl{A}} (1-\lambda) \exp\left(\left(Q^{\min}(s,a)-V^{\max}(s)\right)/\alpha\right) + \frac{\lambda}{|\cl{A}|}. \label{eq:pi_min}
            \end{align}
            
            Because the value of an exponential is positive, as is $1/|\cl{A}|$, it follows that $\pi^{\min}>0.$ Recall the MENTS policy (Equation (\ref{appeq:ments_soft_policy})). By the monotonicity of the exponential function, it follows that for any $s_t\in\cl{S}, a_t\in\cl{A}, n\in\bb{N}$:
            \begin{align}
                \pi^{n}(a_t|s_t) 
                    =& (1-\lambda_{s_t})\exp\left(\left(\Qst{s_t}{a_t}{N(s_t,a_t)}-\Vst{s_t}{N(s_t)}\right)/\alpha\right) 
                        + \frac{\lambda_{s_t}}{|\cl{A}|} \\
                    \geq& (1-\lambda_{s_t})\exp\left(\left(Q^{\min}(s_t,a_t)-V^{\max}(s_t)\right)/\alpha\right) 
                        + \frac{\lambda_{s_t}}{|\cl{A}|} \\
                    \geq& \pi^{\min}.
            \end{align}

            Now we consider the DENTS process.

            For the DENTS process we can use similar reasoning, but need to update the definition of $V^{\max}$ from Equation (\ref{appeq:vmax}) to:
            \begin{align}
                V^{\max}(s_t) &= \alpha \log \sum_{a\in\cl{A}} \exp \left(\left(Q^{\max}(s_t,a) + H\beta_{\max}\log|\cl{A}| \right)/\alpha\right), 
            \end{align}
            where $\beta_{\max}=\sup_{x\in\bb{R}}\beta(x)$. Similarly to the MENTS process case, we can show that $Q^{\min}(s_t,a_t)\leq \Qt{s_t}{a_t}{N(s_t,a_t)}$ and $V^{\max}(s_t)\geq V_{\rho}^{N(s_t)}$, which implicitly uses that $0 \leq \cl{H}_Q^{N(s_t,a_t)}(s_t,a_t) \leq (H-t)\log|\cl{A}| \leq H\log|\cl{A}|$, which can be shown with an inductive argument, and using well-known properties of entropy. Defining $\pi^{\min}$ in the same way as Equation (\ref{eq:pi_min}), with the updated definition of $V^{\max}$. Recalling the DENTS policy (Equation (\ref{appeq:dents_search_policy_exact})), and using similar reasoning to before, we have:
            \begin{align}
                \pi^{n}(a_t|s_t) 
                    =& (1-\lambda_s)\exp\left(\frac{1}{\alpha}\left(\Qt{s}{a}{N(s,a)} + \beta(N(s))\cl{H}_Q^{N(s,a)}(s,a) - V_\rho^{N(s)}(s) \right)\right) + \frac{\lambda_s}{|\cl{A}|} \\
                    \geq& (1-\lambda_s)\exp\left(\frac{1}{\alpha}\left(\Qt{s}{a}{N(s,a)} - V_\rho^{N(s)}(s) \right)\right) + \frac{\lambda_s}{|\cl{A}|} \\
                    \geq& (1-\lambda_{s_t})\exp\left(\left(Q^{\min}(s_t,a_t)-V^{\max}(s_t)\right)/\alpha\right) 
                        + \frac{\lambda_{s_t}}{|\cl{A}|} \\
                    \geq& \pi^{\min}.
            \end{align}
        \end{proofoutline}

        It will also be useful in the following that the union of exponentially unlikely events is also exponentially unlikely, and that the intersection of exponentially likely events is exponentially likely:
        \begin{lemma} \label{lem:union_bound}
            Let $A_1,...,A_{\ell}$ be some events that satisfy for $1\leq i \leq \ell$ the inequality $\Pr(\lnot A_i) \leq C_i\exp(-k_i)$ then:
            \begin{align}
                \Pr\left(\bigcup_{i=1}^\ell \lnot A_i\right) &\leq C\exp(-k), \label{eq:comb_exp_like} \\
                \Pr\left(\bigcap_{i=1}^\ell A_i\right) = 1-\Pr\left(\bigcup_{i=1}^\ell \lnot A_i\right) &\geq 1-C\exp(-k), \label{eq:exp_likely}
            \end{align}
            where $C=\sum_{i=1}^\ell C_i$ and $k = \min_i k_i$.
        \end{lemma}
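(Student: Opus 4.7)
The plan is to prove this by a direct union bound followed by a uniform replacement of each rate $k_i$ by the worst (smallest) rate $k$. First I would apply the standard countable union bound to get
\begin{align}
\Pr\left(\bigcup_{i=1}^{\ell} \lnot A_i\right) \leq \sum_{i=1}^{\ell} \Pr(\lnot A_i).
\end{align}
Then, substituting the hypothesis $\Pr(\lnot A_i) \leq C_i \exp(-k_i)$, the right-hand side is bounded by $\sum_{i=1}^{\ell} C_i \exp(-k_i)$.

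Next I would invoke monotonicity of the exponential: since $k = \min_i k_i \leq k_i$ for each $i$, we have $-k_i \leq -k$ and so $\exp(-k_i) \leq \exp(-k)$. Factoring the common term out of the finite sum gives
\begin{align}
\sum_{i=1}^{\ell} C_i \exp(-k_i) \leq \exp(-k) \sum_{i=1}^{\ell} C_i = C \exp(-k),
\end{align}
establishing Equation~(\ref{eq:comb_exp_like}). Equation~(\ref{eq:exp_likely}) is then immediate from the complementary probability identity $\Pr(\bigcap_i A_i) = 1 - \Pr(\bigcup_i \lnot A_i)$.

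There is no genuine obstacle here; the lemma is essentially a packaging of the union bound tailored to the form of Hoeffding-style concentration inequalities that will appear repeatedly in the subsequent convergence proofs. The only minor subtlety is being explicit that the sum $\sum_i C_i$ and the minimum $\min_i k_i$ are well-defined because $\ell$ is finite, so no measure-theoretic countability concerns arise.
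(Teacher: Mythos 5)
Your proposal is correct and follows exactly the route the paper sketches: a finite union bound, substitution of the hypothesised bounds, and monotonicity of the exponential to factor out $\exp(-k)$ with $k=\min_i k_i$, after which Equation~(\ref{eq:exp_likely}) follows by complementation. The paper only gives a one-line outline, so your version is simply a fully written-out form of the same argument.
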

        
        \begin{proofoutline}
            Lemma \ref{lem:union_bound} is a consequence of the union bound, rearranging and simplifying. Inequality (\ref{eq:exp_likely}) is a consequence from Inequality (\ref{eq:comb_exp_like}) by negating the events.
        \end{proofoutline}

        Additionally, Hoeffding's inequality will be useful to bound the difference between a sum of indicator random variables and its expectation. 
        \begin{theorem} \label{thrm:hoeffding}
            Let $\{X_i\}_{i=1}^m$ be indicator random variables (i.e. $X_i\in\{0,1\}$), and $S_m=\sum_{i=1}^m X_i$. Then Hoeffding's inequality for indicator random variables states for any $\varepsilon > 0$ that:
            \begin{align}
                \Pr(|S_m - \bb{E}S_m|>\varepsilon)\leq 2\exp\left(-\frac{2\varepsilon^2}{m}\right).
            \end{align}
        \end{theorem}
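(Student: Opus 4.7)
The plan is to prove this via the classical Chernoff bound method, which is the standard route for Hoeffding-type inequalities; it should be noted that the statement implicitly requires the $X_i$ to be independent (otherwise the inequality can fail), so I would assume independence throughout. First, I would establish the one-sided bound $\Pr(S_m - \bb{E}S_m > \varepsilon) \leq \exp(-2\varepsilon^2/m)$. For any $\lambda > 0$, apply Markov's inequality to the nonnegative random variable $\exp(\lambda(S_m - \bb{E}S_m))$:
\begin{align}
\Pr(S_m - \bb{E}S_m > \varepsilon) \leq \exp(-\lambda\varepsilon)\,\bb{E}[\exp(\lambda(S_m - \bb{E}S_m))].
\end{align}
By independence, the expectation factorises into a product of moment generating functions of the centered indicators $Y_i = X_i - \bb{E}X_i$, each taking values in an interval of length $1$.

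The key ingredient is Hoeffding's lemma: if $Y$ is zero-mean and takes values in $[a,b]$, then $\bb{E}[\exp(\lambda Y)] \leq \exp(\lambda^2 (b-a)^2 / 8)$. I would prove this by writing $Y = \theta b + (1-\theta)a$ with $\theta = (Y-a)/(b-a) \in [0,1]$, applying convexity of $\exp$ to obtain $\exp(\lambda Y) \leq \theta \exp(\lambda b) + (1-\theta)\exp(\lambda a)$, taking expectations (using $\bb{E}Y = 0$), and then showing that $\log[\tfrac{b}{b-a}\exp(\lambda a) - \tfrac{a}{b-a}\exp(\lambda b)] \leq \lambda^2(b-a)^2/8$ via a second-order Taylor expansion (the second derivative in $\lambda$ is bounded by $(b-a)^2/4$, so Taylor's theorem with the Lagrange remainder closes the bound, noting the value and first derivative at $\lambda=0$ both vanish).

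Applying the lemma to each $Y_i$ (with $b_i - a_i = 1$) gives $\bb{E}[\exp(\lambda Y_i)] \leq \exp(\lambda^2/8)$, so that
\begin{align}
\Pr(S_m - \bb{E}S_m > \varepsilon) \leq \exp\!\left(-\lambda\varepsilon + \tfrac{m\lambda^2}{8}\right).
\end{align}
Optimising over $\lambda > 0$ by choosing $\lambda^* = 4\varepsilon/m$ yields the bound $\exp(-2\varepsilon^2/m)$. The identical argument applied with $-X_i$ in place of $X_i$ gives $\Pr(\bb{E}S_m - S_m > \varepsilon) \leq \exp(-2\varepsilon^2/m)$, and a union bound over the two one-sided events supplies the factor of $2$ in the conclusion.

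The only nontrivial step is Hoeffding's lemma itself; everything else is standard bookkeeping. Since this is a textbook result, the proof in the appendix may simply cite it rather than carry out the MGF computation, but the sketch above is the route I would take if proving it from scratch.
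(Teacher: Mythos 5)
Your proof is correct and complete: the Chernoff/Markov step, Hoeffding's lemma with the convexity-plus-Taylor argument, the optimisation $\lambda^* = 4\varepsilon/m$ giving exponent $-2\varepsilon^2/m$, and the union bound for the factor of $2$ are all exactly right. The paper, however, does not carry out this argument at all — its entire proof is a citation to Hoeffding's original work — so you anticipated the situation correctly in your closing remark. One observation of yours deserves emphasis: the statement as written in the paper omits the independence hypothesis, which is genuinely needed (dependent indicators can violate the bound), and this is not a purely cosmetic point, since the theorem is later applied to counts such as $N(s_t,a_t) = \sum_{i \in T(s_t)} \one[a^i_t = a_t]$ arising from a search policy that adapts to the history of the process; making the independence (or an appropriate martingale/Azuma-type relaxation) explicit is exactly the kind of care the paper's one-line citation glosses over.
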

        \begin{proof}
            This is a specific case of Hoeffding's inequality. See \citeapp{hoeffding1994probability} for proof.
        \end{proof}

        It will also be convenient to be able to `translate' bounds that depend on some $N(s,a)$ to a corresponding bound on $N(s)$:
        \begin{lemma} \label{lem:sa_to_s}
            Consider any Boltzmann MCTS process. Suppose that every action $a_t$ has some minimum probability $\eta$ of being chosen from some state $s_t$ (irrespective of the number of trials), i.e. $\Pr(a^i_t=a_t|s^i_t=s_t)>\eta$. And suppose for some $C',k'>0$ that some event $E$ admits a bound:
            \begin{align}
                \Pr(E) \leq C'\exp(-k'N(s_t,a_t)). \label{eq:sa_to_s_assume_bound}
            \end{align}
            
            Then, there exists $C,k>0$ such that:
            \begin{align}
                \Pr(E) \leq C\exp(-k N(s_t)). 
            \end{align}
        \end{lemma}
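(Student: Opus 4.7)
The plan is to exploit the hypothesis that each visit to $s_t$ selects $a_t$ with probability at least $\eta$, so that $N(s_t,a_t) \geq \eta N(s_t)/2$ with probability exponentially close to one in $N(s_t)$. First, I would split
\begin{align}
    \Pr(E) \leq \Pr\bigl(E \cap \{N(s_t,a_t) \geq \eta N(s_t)/2\}\bigr) + \Pr\bigl(N(s_t,a_t) < \eta N(s_t)/2\bigr).
\end{align}
The first summand is bounded directly from the hypothesis by $C'\exp(-k'\eta N(s_t)/2)$, since $N(s_t,a_t) \geq \eta N(s_t)/2$ on that event and $\exp(-k'\,\cdot\,)$ is monotone. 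So the work reduces to controlling the second, purely combinatorial term.

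For the concentration term, I would couple the visit-by-visit indicators $X_j = \one[\text{the $j$-th visit to $s_t$ selects $a_t$}]$ with an i.i.d. sequence of Bernoulli($\eta$) variables $B_1,B_2,\ldots$ on an enlarged probability space so that $X_j \geq B_j$ almost surely. The coupling exists because $\Pr(X_j = 1 \mid \cl{F}_{j-1}) = \pi^{i_j}(a_t\mid s_t) \geq \eta$, where $i_j$ is the trial of the $j$-th visit and $\cl{F}_{j-1}$ is the history up to it. Then $N(s_t,a_t) = \sum_{j=1}^{N(s_t)} X_j \geq \sum_{j=1}^{N(s_t)} B_j$, and conditional on $N(s_t)=m$ the sum $\sum_{j=1}^m B_j$ is $\text{Binomial}(m,\eta)$, so Hoeffding (Theorem \ref{thrm:hoeffding}) gives $\Pr\bigl(\sum_{j=1}^m B_j < \eta m/2\bigr) \leq \exp(-\eta^2 m/2)$. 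Marginalising over the distribution of $N(s_t)$ yields $\Pr\bigl(N(s_t,a_t) < \eta N(s_t)/2\bigr) \leq \exp(-\eta^2 N(s_t)/2)$, in the same sense (pointwise in realisations of $N(s_t)$, or as an expectation bound) that the lemma statement uses.

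Combining the two pieces gives $\Pr(E) \leq C\exp(-kN(s_t))$ with, say, $C=C'+1$ and $k = \min(k'\eta/2,\, \eta^2/2)$. The main obstacle is setting up the coupling cleanly, since the action probabilities at the $j$-th visit depend on the entire MCTS history prior to that visit and $N(s_t)$ is itself random, so the $X_j$'s are neither independent nor identically distributed. The uniform lower bound $\eta$ on the conditional probabilities is exactly what is needed to dominate the sequence by i.i.d. Bernoulli($\eta$) variables via a standard coupling; once that reduction is in place, the remainder is routine Hoeffding concentration.
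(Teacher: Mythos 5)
Your proposal is correct and follows the same overall decomposition as the paper's proof: split $\Pr(E)$ according to whether $N(s_t,a_t)\geq \eta N(s_t)/2$, bound the first piece by monotonicity of the assumed exponential bound, and control the second piece by a concentration inequality showing $N(s_t,a_t)$ is unlikely to fall far below $\eta N(s_t)$. Where you differ is in how that concentration step is justified. The paper writes $N(s_t,a_t)=\sum_{i\in T(s_t)}\one[a^i_t=a_t]$, asserts $\bb{E}N(s_t,a_t)\geq \eta N(s_t)$, and applies Hoeffding's inequality (Theorem \ref{thrm:hoeffding}) directly to these indicators. Strictly speaking that theorem requires independent summands, and as you correctly observe the indicators are neither independent nor identically distributed: the search policy at the $j$-th visit to $s_t$ depends on the entire history, so only the conditional lower bound $\Pr(X_j=1\mid\cl{F}_{j-1})\geq\eta$ is available. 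Your stochastic-domination coupling with i.i.d.\ Bernoulli($\eta$) variables (or, equivalently, an Azuma--Hoeffding argument applied to the martingale $\sum_j (X_j - \Pr(X_j=1\mid\cl{F}_{j-1}))$) repairs exactly this gap and is the more rigorous route; the price is the extra bookkeeping of constructing the coupling on an enlarged space and marginalising over the random $N(s_t)$, a subtlety that both your argument and the paper's treat somewhat informally by reading the bound pointwise in realisations of $N(s_t)$. The resulting constants $C=C'+1$ (the paper gets $C'+2$ from the two-sided Hoeffding bound) and $k=\min(k'\eta/2,\eta^2/2)$ match the paper's up to these inessential differences.
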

        
        \begin{proof}
            Recall from Equation (\ref{appeq:nsa_sum}) that $N(s_t,a_t)=\sum_{i\in T(s_t)} \one[a^i_t=a_t] = \sum_{i\in T(s_t)} \one[a^i_t=a_t | s^i_t=s_t]$. By taking expectations and using the assumed $\Pr(a^i_t=a_t|s^i_t=s_t)\geq\eta$ we have $\bb{E}N(s_t,a_t) \geq \eta N(s_t)$ (and more specifically as a consequence $\bb{E}N(s_t,a_t) - \eta N(s_t)/2 \geq \eta N(s_t)/2$). The probability of $N(s_t,a_t)$ being below a multiplicative ratio of $N(s_t)$ is bounded as follows:
            \begin{align}
                \Pr\left(N(s_t,a_t) < \frac{1}{2}\eta N(s_t)\right) 
                    &\leq \Pr\left(N(s_t,a_t) < \bb{E}N(s_t,a_T) - \frac{1}{2}\eta N(s_t)\right) \\
                    &= \Pr\left(\bb{E}N(s_t,a_t) - N(s_t,a_t) > \frac{1}{2}\eta N(s_t)\right) \label{local:seven} \\
                    &\leq \Pr\left(|\bb{E}N(s_t,a_t) - N(s_t,a_t)| > \frac{1}{2}\eta N(s)\right) \label{local:six} \\
                    &\leq 2\exp\left(-\frac{1}{2}\eta^2N(s_t)\right). \label{eq:bound_nsa}
            \end{align}
            
            The first line follows from $\bb{E}N(s_t,a_t) - \eta N(s_t)/2 \geq \eta N(s_t)/2$, the second line is a rearrangement, the third line comes from the inequality in (\ref{local:seven}) implying the inequality in (\ref{local:six}), and the final line uses Theorem \ref{thrm:hoeffding}, a Hoeffding bound for the sum of indicator random variables. 
            
            Finally, the bound using $N(s_t,a_t)$ can be converted into one depending on $N(s_t)$ as follows:
            \begin{align}
                \Pr\left( E \right)
                    = & \Pr\left( E \bigg| N(s_t,a_t) \geq \frac{1}{2}\eta N(s_t)\right) 
                        \Pr\left(N(s_t,a_t) \geq \frac{1}{2}\eta N(s_t)\right) \\
                      & + \Pr\left( E \bigg| N(s_t,a_t) < \frac{1}{2}\eta N(s_t)\right) 
                        \Pr\left(N(s_t,a_t) < \frac{1}{2}\eta N(s_t)\right) \\
                    \leq & \Pr\left( E \bigg| N(s_t,a_t) \geq \frac{1}{2}\eta N(s_t)\right) 
                        \cdot 1 \\
                      & + 1 \cdot \Pr\left(N(s_t,a_t) < \frac{1}{2}\eta N(s_t)\right) \\
                    \leq & C'\exp\left(-\frac{1}{2}k'\eta N(s_t) \right) + 
                        2\exp\left(-\frac{1}{2}\eta^2N(s_t)\right) \label{local:eight} \\
                    \leq & C\exp(-k N(s_t)),
            \end{align}
            where (\ref{local:eight}) uses the assumed Inequality (\ref{eq:sa_to_s_assume_bound}) with the condition $N(s_t,a_t) \geq \eta N(s_t)/2$, and also uses the bound from (\ref{eq:bound_nsa}). On the last line there is an appropriate $C,k>0$, similar to Lemma \ref{lem:union_bound}.
        \end{proof}

        Similar to Lemma \ref{lem:sa_to_s}, if we have some $s'\in\succc{s}{a}$, it will also be convenient to be able to translate bounds that depend on $N(s')$ into bounds that depend on $N(s,a)$:
        \begin{lemma} \label{lem:s_to_sa}
            Consider any Boltzmann MCTS process. For some state action pair $(s_t,a_t)$, and for some $s'_{t+1}\in\succc{s_t}{a_t}$, suppose for some $C',k'>0$ that some event $E$ admits a bound:
            \begin{align}
                \Pr(E) \leq C'\exp(-k'N(s'_t)).
            \end{align}
            Then, there exists $C,k>0$ such that:
            \begin{align}
                \Pr(E) \leq C\exp(-k N(s_t,a_t)).
            \end{align}
        \end{lemma}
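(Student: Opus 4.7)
The plan is to mirror the proof of Lemma \ref{lem:sa_to_s}, with the role of the minimum policy probability $\eta$ replaced by the transition probability $p := p(s'_{t+1} \mid s_t, a_t)$, which is a fixed positive constant because $s'_{t+1} \in \succc{s_t}{a_t}$ by assumption. Unlike the Boltzmann policy case, this probability comes from the MDP and does not depend on the search policy or the trial index, so no appeal to Lemma \ref{lem:min_prob} is needed.

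First, I would use Equation (\ref{appeq:ns_sum}) to write
\[
N(s'_{t+1}) = \sum_{i \in T(s_t,a_t)} \one[s^i_{t+1} = s'_{t+1}],
\]
so that, conditional on $N(s_t,a_t)$, the count $N(s'_{t+1})$ is a sum of $N(s_t,a_t)$ independent Bernoulli$(p)$ indicators, giving $\bb{E}[N(s'_{t+1}) \mid N(s_t,a_t)] = p\, N(s_t,a_t)$. Applying Hoeffding's inequality (Theorem \ref{thrm:hoeffding}) in the same manner as Inequality~(\ref{eq:bound_nsa}) yields
\[
\Pr\!\left(N(s'_{t+1}) < \tfrac{1}{2} p\, N(s_t,a_t)\right) \;\leq\; 2\exp\!\left(-\tfrac{1}{2} p^2 N(s_t,a_t)\right).
\]

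Second, I would split on the event $\{N(s'_{t+1}) \geq \tfrac{1}{2} p\, N(s_t,a_t)\}$, exactly as in the last display of the proof of Lemma \ref{lem:sa_to_s}. On this event the assumed bound $\Pr(E) \leq C'\exp(-k' N(s'_{t+1}))$ gives $\Pr(E \mid \cdot) \leq C' \exp(-\tfrac{1}{2} k' p\, N(s_t,a_t))$, and on the complementary event we use the trivial bound $\Pr(E \mid \cdot) \leq 1$ combined with the Hoeffding estimate above. Summing the two terms gives
\[
\Pr(E) \;\leq\; C' \exp\!\left(-\tfrac{1}{2} k' p\, N(s_t,a_t)\right) + 2\exp\!\left(-\tfrac{1}{2} p^2 N(s_t,a_t)\right),
\]
and collapsing the two exponentials into a single $C\exp(-k\, N(s_t,a_t))$ via Lemma \ref{lem:union_bound} with $C := C' + 2$ and $k := \tfrac{1}{2}\min(k' p,\, p^2)$ completes the argument.

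There is no genuine obstacle here; the only point requiring a little care is that $N(s_t,a_t)$ is itself a random variable, but the Hoeffding bound holds conditionally on its value and the resulting bound depends only on $N(s_t,a_t)$, so no extra averaging step is required. The proof is essentially a symmetric analogue of Lemma \ref{lem:sa_to_s}, with $p > 0$ playing the role of the lower bound on action probabilities.
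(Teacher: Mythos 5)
Your proposal is correct and follows essentially the same route as the paper, which itself only gives an outline instructing the reader to repeat the proof of Lemma \ref{lem:sa_to_s} with $\eta$ replaced by $p(s'_{t+1}\mid s_t,a_t)$, the counts shifted one level down the tree, and Equation (\ref{appeq:ns_sum}) supplying the indicator-sum representation. Your write-up simply fills in those details faithfully, including the correct observation that the transition probability is a fixed constant of the MDP so no analogue of Lemma \ref{lem:min_prob} is required.
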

        \begin{proofoutline}
            Proof is similar to Lemma \ref{lem:sa_to_s}. Instead of having a minimum probability of selecting an action $\eta$, replace it with the corresponding probability from the transition distribution $p(s_{t+1}|s_t,a_t)$. Then swapping any $N(s_t)$ with $N(s_t,a_t)$, any $N(s_t,a_t)$ with $N(s'_t)$, and using $N(s_{t+1}) = \sum_{i\in T(s_t,a_t)} \one[s^i_{t+1} = s_{t+1}]$ (Equation (\ref{appeq:ns_sum})) as the sum of indicator random variables will give the result.
        \end{proofoutline}

        
        

    \subsection{Soft learning} \label{app:soft_learning_results}

        For this subsection we will temporarily reintroduce the time-step parameter into our value functions to simplify other notation. We will show two results about soft Q-values: that the optimal standard Q-value is less than the optimal soft Q-value, and, that given a sufficiently small temperature, the optimal soft Q-values will preserve any \textit{strict} ordering over actions given by the optimal standard Q-values.

        For some policy $\pi$, the definition of $V^{\pi}_{\text{sft}}$ (Equation (\ref{eq:vsft_def})) can be re-arranged, to give a relation between the soft Q-value, the standard Q-value and the entropy of the policy:
        \begin{align}
            Q^{\pi}_{\text{sft}}(s,a,t) &= Q^{\pi}(s,a,t) 
                + \alpha \bb{E}_{\pi}\left[
                    \sum_{i=t+1}^H \cl{H}\left(\pi(\cdot|s_i)\right) \Bigg| s_t=s, a_t=a
                    \right], \label{eq:soft_standard_rel} \\
                &= Q^{\pi}(s,a,t) 
                + \alpha\cl{H}_{t+1}(\pi|s_t=s,a_t=a),
        \end{align}
        where $\cl{H}_{t+1}$ is used as a shorthand for the entropy term. By using this relation, it can be shown that the optimal soft Q-value will always be at least as large as the optimal standard Q-value:
        \begin{lemma} \label{lem:soft_geq_standard}
            $Q^*(s,a,t) \leq Q_{\textnormal{sft}}^*(s,a,t).$
        \end{lemma}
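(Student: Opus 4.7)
The plan is to exploit the relation between soft and standard Q-values given by Equation (\ref{eq:soft_standard_rel}) together with the non-negativity of Shannon entropy. The proof is essentially a one-line comparison, but we need to be careful about which policy we evaluate against, since the soft optimal policy $\pis$ and the standard optimal policy $\pi^*$ generally differ.

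Concretely, I would proceed as follows. First, let $\pi^*$ denote any optimal standard policy, so that $Q^{\pi^*}(s,a,t) = Q^*(s,a,t)$. Applying Equation (\ref{eq:soft_standard_rel}) to $\pi^*$ gives
\begin{align}
Q^{\pi^*}_{\textnormal{sft}}(s,a,t) = Q^*(s,a,t) + \alpha\, \cl{H}_{t+1}(\pi^*\mid s_t=s, a_t=a).
\end{align}
Since $\alpha \geq 0$ and the Shannon entropy of any distribution is non-negative (hence so is the expected sum of entropies along trajectories under $\pi^*$), we obtain $Q^{\pi^*}_{\textnormal{sft}}(s,a,t) \geq Q^*(s,a,t)$.

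Next, by definition $\pis = \argmax_\pi V^\pi_{\textnormal{sft}}$, and the same argmax characterisation holds pointwise for the soft Q-values via the soft Bellman optimality equations (\ref{eq:v_soft_bellman})--(\ref{eq:q_soft_bellman}); in particular $Q^*_{\textnormal{sft}}(s,a,t) \geq Q^{\pi}_{\textnormal{sft}}(s,a,t)$ for every policy $\pi$, and in particular for $\pi = \pi^*$. Chaining the two inequalities yields
\begin{align}
Q^*(s,a,t) \leq Q^{\pi^*}_{\textnormal{sft}}(s,a,t) \leq Q^*_{\textnormal{sft}}(s,a,t),
\end{align}
which is the claim.

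There is no real obstacle here; the only subtlety worth flagging is that one should not try to compare $Q^*$ and $Q^*_{\textnormal{sft}}$ through a common policy directly, because the maximisers of the two objectives are different in general. The right move is to insert $Q^{\pi^*}_{\textnormal{sft}}$ as a bridge, using $\pi^*$-optimality on the standard side and $\pis$-optimality on the soft side at different steps.
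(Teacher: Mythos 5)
Your proof is correct and follows essentially the same route as the paper's: both apply the relation $Q^{\pi}_{\textnormal{sft}} = Q^{\pi} + \alpha\cl{H}_{t+1}(\pi)$ to the standard optimal policy $\pi^*$, use non-negativity of entropy, and then bound $Q^{\pi^*}_{\textnormal{sft}}$ by $Q^*_{\textnormal{sft}}$ via the maximisation over policies. Your version merely makes the intermediate quantity $Q^{\pi^*}_{\textnormal{sft}}$ explicit, which is a harmless elaboration of the same argument.
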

        \begin{proof}
            Taking a maximum over policies in Equation (\ref{eq:soft_standard_rel}), and considering that $\pi^*$, the optimal standard policy, is one of the possible policies considered in the maximisation, gives the result:
            \begin{align}
                Q_{\text{sft}}^*(s,a,t) &= \max_\pi \left(Q^{\pi}(s,a,t) + \alpha\cl{H}_{t+1}(\pi|s_t=s,a_t=a)\right) \\
                    &\geq Q^{\pi^*}(s,a,t) + \alpha\cl{H}_{t+1}(\pi^*|s_t=s,a_t=a) \\
                    &\geq Q^*(s,a,t).
            \end{align} 
            
            Noting that the entropy function is non-negative function.
        \end{proof}

        The optimal soft and standard values can be `tied together' by picking a very low temperature. Let $\delta(s,t)$ be the set of actions that have different optimal standard Q-values, that is $\delta(s,t)=\{(a,a')|Q^*(s,a,t)\neq Q^*(s,a',t)\}$. Now define $\Delta_{\cl{M}}$ as follows:
        \begin{align}
            \Delta_{s,t} &= \min_{(a,a')\in\delta(s,t)} |Q^*(s,a,t)-Q^*(s,a',t)|, \\
            \Delta_{\cl{M}} &= \min_{s,t} \Delta_{s,t}. \label{eq:delta}
        \end{align}
    
        Note in particular, for some $(a,a')\in\delta(s,t)$ that the definition of $\Delta_{\cl{M}}$ implies that if $Q^*(s,a,t)<Q^*(s,a',t)$ then 
        \begin{align}
            Q^*(s,a,t)+\Delta_{\cl{M}}\leq Q^*(s,a',t). \label{appeq:delta_diff}
        \end{align}

        \begin{lemma} \label{lem:soft_standard_consistent_order}
            If $\alpha < \Delta_{\cl{M}} / H\log |\cl{A}|$, then for all $t=1,...,H$, for all $s\in\cl{S}$ and for all $(a,a')\in\delta(s,t)$ we have $Q_{\textnormal{sft}}^*(s,a,t)<Q_{\textnormal{sft}}^*(s,a',t)$ iff $Q^*(s,a,t) < Q^*(s,a',t)$.
        \end{lemma}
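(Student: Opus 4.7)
The plan is to sandwich $Q^*_{\textnormal{sft}}$ between $Q^*$ and $Q^* + \alpha H \log|\cl{A}|$, and then use the gap $\Delta_{\cl{M}}$ to ensure that a strict ordering of the standard values survives the perturbation by the entropy bonus.

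First I would establish the two-sided bound
\begin{align}
Q^*(s,a,t) \;\leq\; Q^*_{\textnormal{sft}}(s,a,t) \;\leq\; Q^*(s,a,t) + \alpha H \log|\cl{A}|. \label{eq:proposed_bound}
\end{align}
The left inequality is exactly Lemma \ref{lem:soft_geq_standard}. For the right inequality, rewrite the soft optimum using the identity $Q^{\pi}_{\textnormal{sft}}(s,a,t) = Q^{\pi}(s,a,t) + \alpha \cl{H}_{t+1}(\pi \mid s_t=s, a_t=a)$ from Equation (\ref{eq:soft_standard_rel}) and take a max over $\pi$. Since for any policy the conditional entropy is bounded by $(H-t)\log|\cl{A}| \leq H \log|\cl{A}|$, we get $Q^*_{\textnormal{sft}}(s,a,t) \leq \max_\pi Q^{\pi}(s,a,t) + \alpha H \log|\cl{A}| = Q^*(s,a,t) + \alpha H \log|\cl{A}|$.

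Next I would prove the forward direction: if $Q^*(s,a,t) < Q^*(s,a',t)$ and $(a,a') \in \delta(s,t)$, then $Q^*_{\textnormal{sft}}(s,a,t) < Q^*_{\textnormal{sft}}(s,a',t)$. By the definition of $\Delta_{\cl{M}}$ we have $Q^*(s,a',t) - Q^*(s,a,t) \geq \Delta_{\cl{M}}$. Chaining the two sides of (\ref{eq:proposed_bound}) gives
\begin{align*}
Q^*_{\textnormal{sft}}(s,a',t) - Q^*_{\textnormal{sft}}(s,a,t) \;\geq\; Q^*(s,a',t) - Q^*(s,a,t) - \alpha H \log|\cl{A}| \;\geq\; \Delta_{\cl{M}} - \alpha H \log|\cl{A}|,
\end{align*}
which is strictly positive under the hypothesis $\alpha < \Delta_{\cl{M}} / (H \log|\cl{A}|)$.

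For the reverse direction, suppose $Q^*_{\textnormal{sft}}(s,a,t) < Q^*_{\textnormal{sft}}(s,a',t)$ with $(a,a') \in \delta(s,t)$. Since $(a,a') \in \delta(s,t)$ means $Q^*(s,a,t) \neq Q^*(s,a',t)$, we only need to rule out $Q^*(s,a,t) > Q^*(s,a',t)$; but this case would imply $(a',a) \in \delta(s,t)$ with $Q^*(s,a',t) < Q^*(s,a,t)$, and by the forward direction just proved, $Q^*_{\textnormal{sft}}(s,a',t) < Q^*_{\textnormal{sft}}(s,a,t)$, contradicting our assumption. Hence $Q^*(s,a,t) < Q^*(s,a',t)$, completing the equivalence. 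The main (and only) subtlety is getting the tight upper bound in (\ref{eq:proposed_bound}); everything else is bookkeeping with the definition of $\Delta_{\cl{M}}$.
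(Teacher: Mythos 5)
Your proposal is correct and follows essentially the same route as the paper: sandwich $Q^*_{\textnormal{sft}}$ between $Q^*$ and $Q^* + \alpha H\log|\cl{A}|$ using Lemma \ref{lem:soft_geq_standard} and the bounded-entropy decomposition of Equation (\ref{eq:soft_standard_rel}), exploit the $\Delta_{\cl{M}}$ gap for the direction from standard to soft ordering, and handle the converse by contrapositive via the symmetry of $\delta(s,t)$. The only cosmetic difference is that you phrase the key step as a lower bound on the difference $Q^*_{\textnormal{sft}}(s,a',t)-Q^*_{\textnormal{sft}}(s,a,t)$ rather than as the paper's chain of inequalities, which is equivalent.
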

        \begin{proof}
            $(\Leftarrow)$ First consider that the optimal soft Q-value is less than or equal to the optimal standard Q-value and maximum possible entropy:
            \begin{align}
                Q_{\textnormal{sft}}^*(s,a,t) &= \max_\pi \left(Q^{\pi}(s,a,t) + \alpha\cl{H}_{t+1}(\pi)\right) \\
                    &\leq \max_\pi Q^{\pi}(s,a,t) + \max_\pi \alpha\cl{H}_{t+1}(\pi) \label{appeq:softdiv} \\
                    &= Q^*(s,a,t) + \alpha (H-t)\log |\cl{A}| \\
                    &\leq Q^*(s,a,t) + \alpha H\log |\cl{A}|.
            \end{align}
            
            By using the assumed $\alpha < \Delta_{\cl{M}} / H\log |\cl{A}|$, using $Q^*(s,a,t)+\Delta_{\cl{M}}\leq Q^*(s,a',t)$ and that $Q^*(s,a',t)\leq \Qss{s}{a',t}$  from Lemma \ref{lem:soft_geq_standard} we get:
            \begin{align}
                Q_{\text{sft}}^*(s,a,t) &\leq Q^*(s,a,t) + \alpha H\log |\cl{A}| \\
                    &< Q^*(s,a,t) + \Delta_{\cl{M}} \\
                    &\leq Q^*(s,a',t) \\
                    &\leq Q_{\text{sft}}^*(s,a',t).
            \end{align}
            
            ($\Rightarrow$) To show that given the assumptions that $Q_{\text{sft}}^*(s,a,t)<Q_{\text{sft}}^*(s,a',t) \Rightarrow Q^*(s,a,t) < Q^*(s,a',t)$ we will show the contrapostive instead, which is $Q^*(s,a,t) \geq Q^*(s,a',t) \Rightarrow Q_{\text{sft}}^*(s,a,t)\geq Q_{\text{sft}}^*(s,a',t)$. Given that it is assumed that $(a,a')\in\delta(s,t)$, the following implications hold:
            \begin{align}
                Q^*(s,a,t) &\geq Q^*(s,a',t) \\
                \Rightarrow Q^*(s,a,t) &> Q^*(s,a',t) \\
                \Rightarrow Q_{\text{sft}}^*(s,a,t) &> Q_{\text{sft}}^*(s,a',t) \label{eq:reuse_proof} \\
                \Rightarrow Q_{\text{sft}}^*(s,a,t) &\geq Q_{\text{sft}}^*(s,a',t),
            \end{align}
            where the first implication uses that $(a,a')\in\delta(s)$, the second reuses the ($\Leftarrow$) proof, and the final implication holds generally. 
        \end{proof}

    \subsection{Simple regret} \label{app:simple_regret_results}
    
        In this section we will revisit \textit{simple regret} in more detail. Recall that our definition of simple regret is:
        \begin{align} 
            \reg(s_t,\psi^n) &= V^*(s_t) - V^{\psi^n}(s_t), 
        \end{align}
        where $\psi^n$ is the policy recommended by a forecaster after $n$ rounds or trials. This definition is different to what has been considered by the tree search literature so far \citeapp{mcts_simple_regret,brue1}. However, we feel that this definition is a more natural extension to the simple regret considered for multi-armed bandit problems \citeapp{simple_regret_short,simple_regret_long}, as it stems from a more general MDP planning problem that does not necessarily have to be solved using a tree search (for example, consider if we want to reason about a non-tree search algorithm's simple regret, that only outputs full policies). Moreover, we can reconcile the difference by defining the \textit{simple regret of an action} (or \textit{immediate simple regret}) as:
        \begin{align}
            \reg_I(s_t,\psi^n) =& V^*(s) - \bb{E}_{a_t\sim\psi^n(s_t)}[Q^*(s_t,a_t)],
        \end{align}
        and we show that any asymptotic upper bounds provided on the two definitions are equivalent up to a multiplicative factor.

        \begin{lemma} \label{lem:imm_simple_regret}
            $\bb{E}\reg(s_t,\psi^n)=O(f(n))$ for all $s_t\in\cl{S}$ iff $\bb{E}\reg_I(s_t,\psi^n)=O(f(n))$ for all $s_t\in\cl{S}$.
        \end{lemma}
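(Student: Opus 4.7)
The plan is to establish a one-step recursion relating the two notions of simple regret and then use it to show both directions of the equivalence, leveraging that $H$ is finite and $\cl{S}$ is finite so constants can be made uniform.

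First I would derive the recursion. Since $V^{\psi^n}(s_t)=\bb{E}_{a\sim\psi^n(s_t)}[Q^{\psi^n}(s_t,a)]$ and $Q^*(s_t,a)-Q^{\psi^n}(s_t,a)=\bb{E}_{s'\sim p(\cdot|s_t,a)}[V^*(s')-V^{\psi^n}(s')]$, adding and subtracting $\bb{E}_{a\sim\psi^n}[Q^*(s_t,a)]$ inside $\reg(s_t,\psi^n)$ gives
\begin{align}
\reg(s_t,\psi^n)=\reg_I(s_t,\psi^n)+\bb{E}_{a\sim\psi^n(s_t),\,s'\sim p(\cdot|s_t,a)}\bigl[\reg(s',\psi^n)\bigr].
\end{align}
Since $\reg(\cdot,\psi^n)\geq 0$ (as $V^{\psi^n}\leq V^*$ always), this yields the pointwise bound $\reg_I(s_t,\psi^n)\leq \reg(s_t,\psi^n)$, which upon taking expectations immediately gives the forward direction: $\bb{E}\reg(s_t,\psi^n)=O(f(n))$ implies $\bb{E}\reg_I(s_t,\psi^n)=O(f(n))$.

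For the backward direction, I would unfold the recursion. Iterating $H-t+1$ times and using $\reg(s_{H+1},\psi^n)=0$ (no more actions remain) gives
\begin{align}
\reg(s_t,\psi^n)=\bb{E}_{\tau\sim\psi^n\mid s_t}\!\left[\sum_{i=t}^{H}\reg_I(s_i,\psi^n)\right],
\end{align}
where the expectation is over trajectories rolled out by $\psi^n$ starting from $s_t$. Now suppose $\bb{E}\reg_I(s,\psi^n)\leq C_s f(n)$ for each $s\in\cl{S}$; since $\cl{S}$ is finite, let $C=\max_{s\in\cl{S}} C_s$. Taking expectation over $\psi^n$ and applying Fubini (all terms non-negative) with the tower property yields $\bb{E}\reg(s_t,\psi^n)\leq (H+1)C f(n)=O(f(n))$.

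There is no serious obstacle; the only care needed is to distinguish the expectation over the internal randomness of the algorithm producing $\psi^n$ from the expectation over trajectories under $\psi^n$, and to observe that both the horizon $H+1$ and the uniform constant $C$ over the finite state space are absorbed into the big-$O$. This matches the structure of the standard performance difference lemma restricted to a single state.
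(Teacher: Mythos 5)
Your proposal is correct and follows essentially the same route as the paper: both derive the identical one-step recursion $\reg(s_t,\psi^n)=\reg_I(s_t,\psi^n)+\bb{E}_{a\sim\psi^n,\,s'\sim p}[\reg(s',\psi^n)]$ via the Bellman rearrangement, obtain the forward direction from nonnegativity of the regret, and obtain the backward direction by propagating the bound from the horizon back to $s_t$ (the paper phrases this as backward induction on $t$, you phrase it as unrolling the recursion into a sum of at most $H+1$ immediate-regret terms — the same argument). Your explicit remark that finiteness of $\cl{S}$ and $H$ lets the constants be made uniform is a point the paper leaves implicit, but it does not change the substance.
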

        \begin{proof}
            Firstly, notice that the simple regret can be written recursively in terms of the immediate simple regret:
            \begin{align}
                \reg(s_t,\psi^n) =& V^*(s_t) - V^{\psi^n}(s_t) \\
                    =& V^*(s_t) - \bb{E}_{a_t\sim\psi^n(s)}[Q^{\psi^n}(s_t,a_t)] \\
                    =& V^*(s_t) - \bb{E}_{a_t\sim\psi^n(s)}[R(s_t,a_t) +    
                        \bb{E}_{s_{t+1}\sim\Pr(\cdot|s_t,a_t)}[V^{\psi^n}(s_{t+1})]] \\
                    =& V^*(s_t) - \bb{E}_{a_t\sim\psi^n(s)}\big[Q^*(s,a) - 
                        \bb{E}_{s_{t+1}\sim\Pr(\cdot|s_t,a_t)}[V^*(s_{t+1})]  \notag \\
                        &+ \bb{E}_{s_{t+1}\sim\Pr(\cdot|s_t,a_t)}[V^{\psi^n}(s_{t+1})]\big] \label{eq:simple_reg} \\
                    =& V^*(s_t) - \bb{E}_{a_t\sim\psi^n(s_t)}[Q^*(s_t,a_t)] \notag \\
                        &+ \bb{E}_{a_t\sim\psi_n(s_t),s_{t+1}\sim\Pr(\cdot|s_t,a_t)}[
                            V^*(s_{t+1}) - V^{\psi^n}(s_{t+1})] \\
                    =& \reg_I(s,\psi^n) + 
                        \bb{E}_{a_t\sim\psi^n(s_t),s'\sim\Pr(\cdot|s_t,a_t)}[\reg(s_{t+1},\psi^n)], \label{eq:regret_relation}
            \end{align}
            where in line (\ref{eq:simple_reg}) we used $R(s,a) = Q^*(s,a) - \bb{E}_{s'\sim\Pr(\cdot|s,a)}[V^*(s')]$, a rearrangement of the Bellman optimality equation for $Q^*(s,a)$.
            
            This shows that if $\bb{E}\reg(s_t,\psi^n)=O(f(n))$ then $\bb{E}\reg_I(s_t,\psi^n)\leq \bb{E}\reg(s_t,\psi^n) = O(f(n))$. Now suppose that $\bb{E}\reg_I(s_t,\psi^n)=O(f(n))$ and assume an inductive hypothesis that $\bb{E}\reg(s_{t+1},\psi^n)=O(f(n))$ for all $s_{t+1}\in\suc{s_t}$, then:
            \begin{align}
                \bb{E}\reg(s_t,\psi^n) = \bb{E}\left[ O(f(n)) + \bb{E}_{a_t\sim\psi^n(s_t),s_{t+1}\sim\Pr(\cdot|s_t,a_t)}[O(f(n))] \right] = O(f(n)),
            \end{align}
            where the outer expectation is with respect $\psi^n$ (as the relevant MENTS process is a stochastic one).
        \end{proof}

        It is useful to specialise Lemma \ref{lem:imm_simple_regret} specifically for the form of bounds used later. I.e. the simple regret at $s_t$ admits a regret bound exponential in $N(s_t)$, if and only if, the immediate simple regret admits a bound exponential in $N(s_t)$.
        
        \begin{corollary} \label{cor:imm_to_full_simple_regret}
            Consider any Boltzmann MCTS process. There exists $C_1,k_1>0$ such that $\bb{E}\reg(s_t,\psi^n) \leq C_1\exp(-k_1 N(s_t))$ iff there exists $C_2,k_2>0$ such that $\bb{E}\reg_I(s_t,\psi^n) \leq C_2\exp(-k_2 N(s_t))$.
        \end{corollary}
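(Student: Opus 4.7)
The $(\Rightarrow)$ direction is essentially free. The recursion from Lemma \ref{lem:imm_simple_regret},
\begin{align*}
\reg(s_t,\psi^n) = \reg_I(s_t,\psi^n) + \bb{E}_{a_t\sim\psi^n(s_t),\,s_{t+1}\sim p(\cdot|s_t,a_t)}[\reg(s_{t+1},\psi^n)],
\end{align*}
together with the pointwise inequality $\reg\geq 0$ (since $V^*\geq V^{\psi^n}$), gives $\reg_I(s_t,\psi^n)\leq\reg(s_t,\psi^n)$. Hence any bound of the form $\bb{E}\reg(s_t,\psi^n)\leq C_1\exp(-k_1 N(s_t))$ transfers with the same constants to $\bb{E}\reg_I(s_t,\psi^n)$.

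For the converse $(\Leftarrow)$ I plan to use backward induction on $t\in\{H,H-1,\dots\}$. The base case $t=H$ is immediate since no rewards remain and $\reg(s_H,\psi^n)=0$. For the inductive step, assume that for every successor $s_{t+1}\in\suc{s_t}$ there are constants $C',k'>0$ with $\bb{E}\reg(s_{t+1},\psi^n)\leq C'\exp(-k'N(s_{t+1}))$. The recursion combined with the assumed bound on $\reg_I$ gives
\begin{align*}
\bb{E}\reg(s_t,\psi^n) \leq C_2\exp(-k_2 N(s_t)) + \bb{E}\bb{E}_{a_t,\,s_{t+1}}[C'\exp(-k'N(s_{t+1}))],
\end{align*}
and, via Lemma \ref{lem:union_bound}, it suffices to bound the second term by $C''\exp(-k''N(s_t))$.

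To do that I translate the tail in $N(s_{t+1})$ to a tail in $N(s_t)$ in two stages, mirroring the proofs of Lemmas \ref{lem:s_to_sa} and \ref{lem:sa_to_s}. First, writing $N(s_{t+1})$ as the indicator sum from Equation (\ref{appeq:ns_sum}) and applying Hoeffding's inequality (Theorem \ref{thrm:hoeffding}) shows that $N(s_{t+1})\geq \tfrac12 p(s_{t+1}|s_t,a_t)\,N(s_t,a_t)$ fails with probability at most an exponential in $N(s_t,a_t)$. Second, Lemma \ref{lem:min_prob} supplies a uniform lower bound $\pi^{\min}>0$ on $\pi^n(a_t|s_t)$, so Equation (\ref{appeq:nsa_sum}) and another Hoeffding bound yield $N(s_t,a_t)\geq \tfrac12\pi^{\min}N(s_t)$ except on an event exponentially small in $N(s_t)$. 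On the intersection of the two good events the inductive hypothesis gives an exponential bound in $N(s_t)$; off of it, the deterministic boundedness of $\reg$ (from bounded rewards and the finite horizon) multiplied by the two Hoeffding tails is still exponential in $N(s_t)$. A final application of Lemma \ref{lem:union_bound} merges these contributions into the desired $C''\exp(-k''N(s_t))$ and closes the induction.

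The main obstacle is that Lemmas \ref{lem:sa_to_s} and \ref{lem:s_to_sa} are stated for probabilities of events, whereas here I must translate an expectation of a non-negative bounded random variable. The remedy is to re-run their conditioning argument with $\reg(s_{t+1},\psi^n)$ playing the role of the event indicator, paying an extra multiplicative factor equal to the deterministic upper bound on $\reg$. The Hoeffding estimates on the indicator sums $N(s_{t+1})$ and $N(s_t,a_t)$ carry over unchanged, so the constants remain positive and the exponential form in $N(s_t)$ is preserved at every step of the induction.
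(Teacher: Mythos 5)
Your proposal is correct and follows essentially the same route as the paper: the recursive decomposition from Lemma \ref{lem:imm_simple_regret}, backward induction, and the translation of tails in $N(s_{t+1})$ into tails in $N(s_t)$ via the arguments of Lemmas \ref{lem:s_to_sa} and \ref{lem:sa_to_s} with the minimum action probability supplied by Lemma \ref{lem:min_prob}; you also correctly identify and repair the one point the paper's outline glosses over, namely that those lemmas bound probabilities of events whereas here one must control the expectation of a bounded non-negative random variable. The only slip is the base case: at $t=H$ one reward remains, so $\reg(s_H,\psi^n)$ need not vanish --- either start the induction at $t=H+1$ or note that at $t=H$ the simple regret equals the immediate simple regret, which is bounded by hypothesis; either way the induction closes unchanged.
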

        \begin{proofoutline}
            The proof follows similarly to Lemma \ref{lem:imm_simple_regret}. The additional nuance is that we need to apply Lemmas \ref{lem:sa_to_s} and \ref{lem:s_to_sa}. Note that the assumption of a minimum action probability in Lemma \ref{lem:sa_to_s} is satisfied, because there is a minimum positive probability of selecting an action in any Boltzmann MCTS process (Lemma \ref{lem:min_prob}). The inductive hypothesis for $s_{t+1}\in\suc{s_t}$ would give a bound with respect to $N(s_{t+1})$, and the lemmas are required to `translate' the bound into one with respect to $N(s_t)$.
        \end{proofoutline}

    \subsection{General Q-value convergence result} \label{app:q_result}

        Recall the (soft) Q-value backups used by Boltzmann MCTS processes (Equations (\ref{appeq:soft_q_backup}) and (\ref{appeq:dp_q_backup})). Considering that the backups for MENTS and DENTS processes are of similar form, we will show that generally, backups of that form converge (exponentially), given that the values at any child nodes also converge (exponentially). However, towards showing this, we first need to consider the concentration of the empirical transition distribution around the true transition distribution.

        \begin{theorem} \label{thrm:dkw_inequality}
            Let $\{X_i\}_{i=1}^m$ be random variables drawn from a probability distribution with a cumulative distribution function of $F$. Let the empirical cumulative distribution function be $F_m(x)=\frac{1}{m} \sum_{i=1}^m \one[X_i < x]$. Then the Dvoretzky-Kiefer-Wolfowitz inequality is:
            \begin{align}
                \Pr\left(\sup_x |F_m(x)-F(x)| > \varepsilon\right) \leq 2\exp\left(-2m\varepsilon^2\right).
            \end{align}
        \end{theorem}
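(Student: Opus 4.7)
The plan is to treat this as the classical Dvoretzky--Kiefer--Wolfowitz inequality with the Massart constant, and the simplest route is to cite Massart (1990) for the sharp constant $2$. For a self-contained proposal, however, I would build it in two stages: a pointwise concentration step followed by an extension to the supremum.

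First I would handle the pointwise bound. For each fixed $x \in \mathbb{R}$, the indicators $Y_i = \one[X_i < x]$ are i.i.d.\ Bernoulli random variables with parameter $p = F(x)$, and $F_m(x) = \frac{1}{m}\sum_{i=1}^m Y_i$. Applying Hoeffding's inequality for indicator random variables (Theorem \ref{thrm:hoeffding}) with $S_m = \sum Y_i$ and rescaling the threshold by $m$ gives
\begin{align}
\Pr(|F_m(x) - F(x)| > \varepsilon) \leq 2\exp(-2m\varepsilon^2)
\end{align}
for every fixed $x$. This is the easy half.

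The hard part is passing from pointwise to uniform concentration, because the supremum is over all real $x$. My approach would be to exploit monotonicity: $F$ and $F_m$ are both non-decreasing, so one can partition the range of $F$ into $N$ intervals of width $1/N$ by choosing a grid $x_0 < \cdots < x_N$ with $F(x_j) = j/N$ (using generalised inverses if $F$ has flat parts or jumps). For any $x \in [x_{j-1}, x_j]$, monotonicity gives $F_m(x_{j-1}) - F(x_j) \leq F_m(x) - F(x) \leq F_m(x_j) - F(x_{j-1})$, and both sides differ from $F_m(x_j) - F(x_j)$ (respectively $F_m(x_{j-1}) - F(x_{j-1})$) by at most $1/N$. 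Hence controlling the supremum reduces to controlling the $N+1$ grid deviations up to an additive $1/N$ slack, and a union bound combined with the pointwise bound at threshold $\varepsilon - 1/N$ yields $\Pr(\sup_x|F_m - F| > \varepsilon) \leq 2(N+1)\exp(-2m(\varepsilon - 1/N)^2)$; optimising over $N \sim \sqrt{m}\varepsilon$ recovers an exponential bound of the form $C\exp(-cm\varepsilon^2)$ for some constants $C,c>0$ slightly worse than the stated one.

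The main obstacle is recovering the sharp constant $2$ in the exponent and the prefactor $2$, which the union bound approach above does not give. This is exactly the content of Massart's refinement, which uses a martingale/Poissonisation argument rather than a simple discretisation. Since everywhere the DKW inequality is invoked in this paper we only need exponential decay of the form $C\exp(-k m \varepsilon^2)$ (the constants are absorbed into later bounds via Lemma \ref{lem:union_bound}), the cleanest presentation is to cite Massart's result directly and bypass the discretisation argument entirely.
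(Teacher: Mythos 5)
Your proposal is correct and ends up in the same place as the paper, which proves this theorem by citation alone (to Dvoretzky, Kiefer and Wolfowitz, 1956); your added sketch of the pointwise Hoeffding bound plus discretisation is a valid route to a weaker bound of the form $C\exp(-c m\varepsilon^2)$, which, as you note, is all that is actually used downstream. Your observation that the sharp prefactor and exponent constant $2$ are due to Massart (1990) rather than the original 1956 paper is accurate and, if anything, identifies the more appropriate reference for the inequality exactly as stated.
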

        \begin{proof}
            See \citeapp{dvoretzky1956asymptotic}.
        \end{proof}

        The Dvoretzky-Kiefer-Wolfowitz inequality is of interest because it allows us to tightly bound the empirical transition probability $N(s_{t+1})/N(s_t,a_t)$ with the true transition probability $p(s_{t+1}|s_t,a_t)$. 
        \begin{corollary} \label{cor:bound_transition_distribution}
            Consider any Boltzmann MCTS process. For all $(s_t,a_t)\in\cl{S}\times\cl{A}$ and for all $\varepsilon >0$ we have:
            \begin{align}
                \Pr\left(\max_{s_{t+1}\in\succc{s_t}{a_t}}\left| \frac{N(s_{t+1})}{N(s_t,a_t)} - p(s_{t+1}|s_t,a_t) \right| > \varepsilon \right) \leq 2 \exp\left(-\frac{1}{2}\varepsilon^2 N(s_t,a_t) \right).
            \end{align}
        \end{corollary}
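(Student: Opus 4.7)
The plan is to reduce the statement to a direct application of the Dvoretzky--Kiefer--Wolfowitz (DKW) inequality (Theorem \ref{thrm:dkw_inequality}), using the fact that in any Boltzmann MCTS process the successor states sampled at a given state--action node form an i.i.d.\ sequence from the true transition distribution. Concretely, I would first fix a realisation $N(s_t,a_t)=m$ and condition on the trajectory indices in $T(s_t,a_t)$. By construction of the MCTS process, at each such visit the successor $s_{t+1}$ is drawn from $p(\cdot\mid s_t,a_t)$ independently of everything that happened elsewhere in the tree, so the $m$ observed successors are an i.i.d.\ sample from $p(\cdot\mid s_t,a_t)$.

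Next, I would fix an arbitrary total ordering of the finite set $\cl{S}$, and let $F$ and $F_m$ denote the associated true and empirical CDFs of the successor distribution. Every true mass $p(s_{t+1}\mid s_t,a_t)$ and every empirical proportion $N(s_{t+1})/N(s_t,a_t)$ is a difference of two consecutive CDF values, so the triangle inequality gives
\begin{align*}
    \biggl|\frac{N(s_{t+1})}{N(s_t,a_t)} - p(s_{t+1}\mid s_t,a_t)\biggr|
        \le 2\sup_{x}\bigl|F_m(x)-F(x)\bigr|
\end{align*}
for every $s_{t+1}\in\succc{s_t}{a_t}$. Hence the event that the left hand side exceeds $\varepsilon$ for some successor is contained in the event $\{\sup_{x}|F_m(x)-F(x)|>\varepsilon/2\}$.

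Applying Theorem \ref{thrm:dkw_inequality} with threshold $\varepsilon/2$ and sample size $m$ then yields the conditional bound $2\exp(-2m(\varepsilon/2)^2) = 2\exp(-\tfrac{1}{2}\varepsilon^2 m)$, matching the stated inequality once we remove the conditioning on $\{N(s_t,a_t)=m\}$, since the bound is uniform in $m$. I do not anticipate any serious obstacle: the only subtle point is the CDF-to-PMF step, which contributes the factor of $2$ absorbed into the exponent's constant $\tfrac{1}{2}$. The genuinely important observation -- that the successor samples are i.i.d.\ irrespective of the adaptive search policy acting elsewhere in the tree -- is immediate from the generative model of the trials.
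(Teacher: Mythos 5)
Your proposal is correct and follows essentially the same route as the paper: write the successor counts as an empirical categorical distribution over $N(s_t,a_t)$ i.i.d.\ draws, order the states to form CDFs, observe that each probability mass is a difference of two consecutive CDF values so a uniform CDF error of $\varepsilon/2$ bounds the PMF error by $\varepsilon$, and apply Theorem \ref{thrm:dkw_inequality} with threshold $\varepsilon/2$ to obtain $2\exp(-\tfrac{1}{2}\varepsilon^2 N(s_t,a_t))$. Your write-up is in fact somewhat more explicit than the paper's outline (e.g.\ in spelling out the conditioning on $N(s_t,a_t)=m$ and the triangle-inequality step), but the decomposition and the key lemma are identical.
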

        \begin{proofoutline}
            Recall Equation (\ref{appeq:ns_sum}) that allows $N(s_{t+1})$ to be written as a sum of $N(s_t,a_t)$ indicator random variables: $N(s_{t+1}) = \sum_{i\in T(s_t,a_t)} \one[s^i_{t+1} = s_{t+1}]$. Consider mapping the empirical transition distribution, with $N(s_t,a_t)=|T(s_t,a_t)|$ samples, to an appropriate catagorical distribution. Applying Theorem \ref{thrm:dkw_inequality} with $\varepsilon/2$ gives the result. In the constructed cumulative distribution function we need to account for an error of $\varepsilon/2$ before any $s_{t+1}$, and another error of $\varepsilon/2$ after the same $s_{t+1}$, to account for an error of $\varepsilon$ in the probability mass function (see Figure \ref{fig:dkw_diag}).
        \end{proofoutline}
        
        \begin{figure}
            \centering
            \includegraphics[scale=0.4]{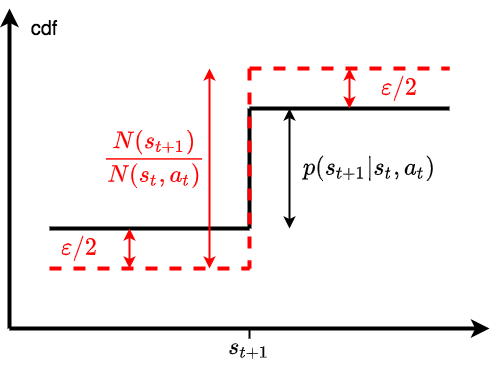}
            \caption{Example cdfs for the transition distribution (black solid line) and empirical transition distribution (red dashed line) around the successor state $s_{t+1}$. An error of $\varepsilon/2$ between the empirical and true cdfs is shown, and the probability mass for $s_{t+1}$ for the empirical and true transition distributions are shown to indicate how the constructed distribution gives Corollary \ref{cor:bound_transition_distribution}.}
            \label{fig:dkw_diag}
        \end{figure}

        Now that Corollary \ref{cor:bound_transition_distribution} can be used to bound  the empirical transition distribution to the true transition distribution, we can provide a general purpose concentration inequality for Q-values to use in later proofs.
        
        \begin{lemma} \label{lem:stochastic_step}
            Consider any Boltzmann MCTS process, and some state action pair $(s_t,a_t)\in\cl{S}\times\cl{A}$. Let $\dot{V}^{N(s)}(s):\cl{S}\rightarrow \bb{R}$, $\dot{V}^*(s):\cl{S}\rightarrow \bb{R}$ be some estimated and optimal value functions respectively and suppose that for all $s_{t+1}\in\succc{s_t}{a_t}$ that:
            \begin{align}
                \Pr\left(\left| \dot{V}^{N(s_{t+1})}(s_{t+1}) - \dot{V}^*(s_{t+1}) \right| > \varepsilon_1 \right) \leq C_{s_{t+1}}\exp(-k_{s_{t+1}}\varepsilon_1^2 N(s')).
            \end{align}
            
            If we have optimal and estimated Q-values defined as follows: 
            \begin{align}
                \dot{Q}^*(s,a)&=R(s,a)+\bb{E}_{s'\sim p(\cdot|s,a)}\left[\dot{V}^*(s')\right], \\
                \dot{Q}^{N(s,a)}(s,a)&=R(s,a)+\sum_{s'\in\succc{s}{a}} \left[\frac{N(s')}{N(s,a)} \dot{V}^{N(s')}(s')\right].
            \end{align}
            
            Then for some $C,k>0$, and for all $\varepsilon>0$:
            \begin{align}
                \Pr\left(\left| \dot{Q}^{N(s,a)}(s,a) - \dot{Q}^*(s,a) \right| > \varepsilon \right) \leq C\exp(-k\varepsilon^2 N(s,a)).
            \end{align}
        \end{lemma}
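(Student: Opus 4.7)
The plan is to decompose the error $\dot{Q}^{N(s,a)}(s,a) - \dot{Q}^*(s,a)$ into two sources and control each separately. Adding and subtracting $\sum_{s'} \frac{N(s')}{N(s,a)} \dot{V}^*(s')$ gives
\begin{align}
\dot{Q}^{N(s,a)}(s,a) - \dot{Q}^*(s,a)
  = \sum_{s' \in \succc{s}{a}} \frac{N(s')}{N(s,a)}\bigl(\dot{V}^{N(s')}(s') - \dot{V}^*(s')\bigr)
  + \sum_{s' \in \succc{s}{a}} \Bigl(\tfrac{N(s')}{N(s,a)} - p(s'|s,a)\Bigr) \dot{V}^*(s').
\end{align}
Since rewards are bounded, the horizon is finite, and (for DENTS) entropy is at most $H\log|\mathcal{A}|$, the target $\dot{V}^*$ is bounded by some $V_{\max}$ depending only on the MDP. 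Hence, using $\sum_{s'} N(s')/N(s,a) = 1$, it suffices to ensure that both $\max_{s'}|\dot{V}^{N(s')}(s') - \dot{V}^*(s')| \leq \varepsilon/2$ and $\max_{s'}|N(s')/N(s,a) - p(s'|s,a)| \leq \varepsilon / (2 V_{\max}|\succc{s}{a}|)$ hold, as these together force the total error to be at most $\varepsilon$.

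For the first event, I would apply the hypothesis to each $s' \in \succc{s}{a}$ with tolerance $\varepsilon/2$, giving $\Pr(|\dot{V}^{N(s')}(s') - \dot{V}^*(s')| > \varepsilon/2) \leq C_{s'} \exp(-k_{s'} \varepsilon^2 N(s')/4)$. Each of these bounds is expressed in terms of $N(s')$, so I invoke Lemma \ref{lem:s_to_sa} to rewrite them in terms of $N(s,a)$, obtaining bounds of the form $C'_{s'} \exp(-k'_{s'} \varepsilon^2 N(s,a))$. The union bound (Lemma \ref{lem:union_bound}) across the finitely many successors in $\succc{s}{a}$ then yields a single exponential-in-$N(s,a)$ bound. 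For the second event, Corollary \ref{cor:bound_transition_distribution} applied with tolerance $\varepsilon/(2 V_{\max}|\succc{s}{a}|)$ directly gives an exponential-in-$N(s,a)$ bound, with a rate proportional to $\varepsilon^2$. A final application of Lemma \ref{lem:union_bound} merges the two failure probabilities into the desired $C \exp(-k \varepsilon^2 N(s,a))$ bound.

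The main obstacle is bookkeeping rather than anything conceptual: translating the per-successor bounds on $N(s')$ to a bound on $N(s,a)$ via Lemma \ref{lem:s_to_sa} introduces constants that depend on $p(s'|s,a)$, and the $\varepsilon/2$ split together with the $1/(2V_{\max}|\succc{s}{a}|)$ rescaling for the transition-distribution term means the resulting exponent rates depend on $V_{\max}$ and the branching factor. I would bundle all of these multiplicative factors into the final constants $C,k$ rather than tracking them explicitly. I also have to justify that $\dot{V}^*$ is uniformly bounded; this is immediate for the soft-value and Bellman-value-plus-entropy targets that arise in the MENTS, BTS, and DENTS applications of the lemma, because rewards are bounded, the horizon is finite, and Shannon entropy is bounded by $\log|\mathcal{A}|$.
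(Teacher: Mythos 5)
Your proof is correct and follows essentially the same route as the paper's: both arguments control the two error sources (value-estimation error at the successors, translated from $N(s')$ to $N(s,a)$ via Lemma \ref{lem:s_to_sa} and combined with Lemma \ref{lem:union_bound}, and empirical-transition error via Corollary \ref{cor:bound_transition_distribution}) and merge the failure probabilities with a final union bound. The only cosmetic difference is that you split the error additively up front with an $\varepsilon/2$--$\varepsilon/2$ budget scaled by $V_{\max}$ and the branching factor, whereas the paper expands the product $(p+\varepsilon_2)(\dot{V}^*+\varepsilon_1)$ and chooses $\varepsilon_1,\varepsilon_2$ as $\varepsilon/3$-type fractions afterwards; both reduce to the same bookkeeping.
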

        
        \begin{proof}
            By the assumed bounds, Lemma \ref{lem:union_bound} and Lemma \ref{lem:s_to_sa}, there is some $C_1,k_1>0$, such that for any $\varepsilon_1 >0$:
            \begin{align}
                \Pr\left(\forall s_{t+1}. \left|\dot{V}^{N(s_{t+1})}(s_{t+1})-\dot{V}^*(s_{t+1}) \right| \leq \varepsilon_1 \right) > 1-C_1\exp(-k_1\varepsilon_1^2 N(s_t,a_t)), \label{local:one}
            \end{align}
            and recall using Corollary \ref{cor:bound_transition_distribution} that for any $p(s_{t+1}|s_t,a_t) > \varepsilon_2>0$ we must have:
            \begin{align}
                \Pr\left(\max_{s_{t+1}\in\succc{s_t}{a_t}} 
                    \left| \frac{N(s_{t+1})}{N(s_t,a_t)} - p(s_t,a_t,s_{t+1}) \right| 
                    \leq \varepsilon_2 \right) 
                        > 1 - 2 \exp\left(-\frac{1}{2}\varepsilon_2^2 N(s_t,a_t) \right). \label{local:two}
            \end{align}
            
            If the events in Inequalities (\ref{local:one}) and (\ref{local:two}) hold, then $\dot{V}^*(s_{t+1})- \varepsilon_1 \leq \dot{V}^{N_n(s_{t+1})}(s_{t+1}) \leq \dot{V}^*(s_{t+1})+ \varepsilon_1$ and $p(s_{t+1}|s_t,a_t) - \varepsilon_2 \leq N(s_{t+1})/N(s_t,a_t) \leq p(s_{t+1}|s_t,a_t) + \varepsilon_2$. The upper bounds on $\dot{V}^{N_n(s_{t+1})}(s_{t+1})$ and $N(s_{t+1})/N(s_t,a_t)$ can be used to obtain an upper bound on $\dot{Q}^{N(s_t,a_t)}$: 
            \begin{align}
                &\dot{Q}^{N(s_t,a_t)}(s_t,a_t) \\
                    =& R(s_t,a_t) 
                        + \sum_{s_{t+1}\in\succc{s_t}{a_t}}\frac{N_n(s_{t+1})}{N_n(s_t,a_t)} 
                            \dot{V}^{N_n(s_{t+1})}(s_{t+1}) \\
                    \leq& R(s_t,a_t) 
                        + \sum_{s_{t+1}\in\succc{s_t}{a_t}}(p(s_{t+1}|s_t,a_t)+\varepsilon_2)
                            (\dot{V}^*(s')+\varepsilon_1) \\
                    =& R(s_t,a_t) 
                        + \bb{E}_{s_{t+1}\sim p(\cdot|s_t,a_t)}[\dot{V}^*(s_{t+1})] 
                        + \varepsilon_1 
                        + \varepsilon_2\sum_{s_{t+1}\in\succc{s_t}{a_t}} \dot{V}^*(s_{t+1}) 
                        + \varepsilon_1 \varepsilon_2 \\
                    \leq& R(s_t,a_t) 
                        + \bb{E}_{s_{t+1}\sim p(\cdot|s_t,a_t)}[\dot{V}^*(s_{t+1})] 
                        + \varepsilon_1\left|\sum_{s_{t+1}\in\succc{s_t}{a_t}} \dot{V}^*(s_{t+1})\right| 
                        + \varepsilon_2 + \varepsilon_1 \varepsilon_2 \\
                    =& \dot{Q}^*(s_t,a_t) 
                        + \varepsilon_1\left|\sum_{s_{t+1}\in\succc{s_t}{a_t}} \dot{V}^*(s_{t+1})\right| 
                        + \varepsilon_2 
                        + \varepsilon_1 \varepsilon_2.
            \end{align}
            
            Following the same reasoning but using the lower bounds on $\dot{V}^{N_n(s_{t+1})}(s_{t+1})$ and $N(s_{t+1})/N(s_t,a_t)$ gives:
            \begin{align}
                \dot{Q}^{N(s_t,a_t)}(s_t,a_t) 
                    &\geq \dot{Q}^*(s_t,a_t) 
                        - \varepsilon_1\left|\sum_{s_{t+1}\in\succc{s_t}{a_t}} \dot{V}^*(s_{t+1})\right| 
                        - \varepsilon_2 
                        + \varepsilon_1 \varepsilon_2.
            \end{align}
            
            For any $\varepsilon >0$, setting $\varepsilon_1 = \min(\varepsilon/3, \varepsilon/3|\sum_{s_{t+1}} \dot{V}^*(s_{t+1})|)$ and $\varepsilon_2 = \min( \varepsilon/3, p(s_{t+1}|s_t,a_t))$ then gives:
            \begin{align}
                \dot{Q}^*(s_t,a_t) - \varepsilon \leq \dot{Q}^{N(s_t,a_t)}(s_t,a_t) \leq \dot{Q}^*(s_t,a_t) + \varepsilon.
            \end{align}
            
            Using Lemma \ref{lem:union_bound} liberally, there is some $C_2,C_3,k_2,k_3>0$ such that:
            \begin{align}
                &\Pr\left(\left| \dot{Q}^{N(s_t,a_t)}(s_t,a_t) - \dot{Q}^*(s_t,a_t) \right| \leq \varepsilon \right) \\
                    >& \left(1-C_1\exp(-k_1\varepsilon_1^2 N(s_t,a_t))\right) \left(1-2\exp\left(-\frac{1}{2}\varepsilon_2^2 N(s_t,a_t) \right)\right) \\
                    =& \left(1-C_2\exp(-k_2 \varepsilon^2 N(s_t,a_t) \right) \cdot \left(1-C_3\exp(-k_3 \varepsilon^2 N(s_t,a_t)\right) \\
                    =& 1 -C_2\exp(-k_2 \varepsilon^2 N(s_t,a_t)) -C_3\exp(-k_3 \varepsilon^2 N(s_t,a_t)) \notag \\
                        &+ C_2C_3\exp(-(k_2+k_3)\varepsilon^2 N(s_t,a_t) ) \\
                    >& 1 -C_2\exp(-k_2 \varepsilon^2 N(s_t,a_t)) -C_3\exp(-k_3 \varepsilon^2 N(s_t,a_t)),
            \end{align}
            and then by negating, with $C=C_1+C_2$ and $k=\min(k_1,k_2)$ the result follows:
            \begin{align}
                    \Pr\left(\left| \dot{Q}^{N(s_t,a_t)}(s_t,a_t) - \dot{Q}^*(s_t,a_t) \right| > \varepsilon \right) \leq C\exp(-k\varepsilon^2 N(s_t,a_t)).
            \end{align}
        \end{proof}

    \subsection{MENTS results} \label{app:ments_results}
    
        In this subsection we provide results related to MENTS. We begin by providing concentration inequalities for the estimated soft values at each node, around the optimal soft value. Note that this result is similar to some of the results provided in \citeapp{xiao2019maximum}, however to keep this paper more self contained we still provide a proof in our particular flavour. After, the concentration inequalities are then used to show bounds on the simple regret of MENTS, given constraints on the temperature.
        
        To prove the concentration inequality around the optimal soft values, start by showing an inductive step.

        \begin{lemma} \label{lem:ments_val_induction_step}
            Consider a MENTS process. Let $s_t\in\cl{S}$, with $1\leq t \leq H$. If for all $s_{t+1}\in\suc{s_t}$ there is some $C_{s_{t+1}},k_{s_{t+1}}>0$ for any $\varepsilon_{s_{t+1}}>0$:
            \begin{align}
                \Pr\left(\left| \Vst{s_{t+1}}{N(s_{t+1})} - \Vss{s_{t+1}} \right| > \varepsilon_{s_{t+1}} \right) 
                    &\leq C_{s_{t+1}}\exp\left( -k_{s_{t+1}}\varepsilon_{s_{t+1}}^2 N(s_{t+1}) \right), 
            \end{align}
            then there is some $C,k>0$, for any $\varepsilon>0$:
            \begin{align}
                \Pr\left(\left| \Vst{s_{t}}{N(s_{t})} - \Vss{s_{t}} \right| > \varepsilon \right) 
                    &\leq C\exp\left( -k\varepsilon^2 N(s_{t}) \right).
            \end{align}
        \end{lemma}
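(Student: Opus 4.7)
The plan is to chain together three facts: (i) the assumed concentration at each successor state yields a concentration for the empirical Q-value at $(s_t,a)$, (ii) this concentration can be re-indexed from $N(s_t,a)$ to $N(s_t)$ because every action has a strictly positive minimum sampling probability, and (iii) the soft backup that defines $\Vst{s_t}{N(s_t)}$ is $1$-Lipschitz in the max-norm of its Q-value arguments, so the concentration propagates from the Q-values to the V-value with only a constant loss.

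First I would, for each fixed action $a \in \cl{A}$, apply Lemma \ref{lem:stochastic_step} with $\dot{V}^{N(s)}(s) = \Vst{s}{N(s)}$, $\dot{V}^*(s) = \Vss{s}$ and the analogous $\dot{Q}$. The hypothesis provides exactly the premise of Lemma \ref{lem:stochastic_step} at every $s_{t+1} \in \succc{s_t}{a}$, so its conclusion gives constants $C_a',k_a'>0$ such that for every $\varepsilon>0$,
\begin{equation*}
  \Pr\!\left(\left|\Qst{s_t}{a}{N(s_t,a)}-\Qss{s_t}{a}\right|>\varepsilon\right) \le C_a'\exp\!\left(-k_a'\varepsilon^2 N(s_t,a)\right).
\end{equation*}
Next, since Lemma \ref{lem:min_prob} guarantees a uniform positive minimum $\pi^{\min}>0$ that each action is sampled at $s_t$, Lemma \ref{lem:sa_to_s} upgrades this to a bound of the form $C_a\exp(-k_a\varepsilon^2 N(s_t))$. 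Applying Lemma \ref{lem:union_bound} over the finite action set $\cl{A}$ then gives constants $C_Q,k_Q>0$ such that the event $\cl{E}_\varepsilon = \{\forall a\in\cl{A}.\ |\Qst{s_t}{a}{N(s_t,a)}-\Qss{s_t}{a}| \le \varepsilon\}$ holds with probability at least $1-C_Q\exp(-k_Q\varepsilon^2 N(s_t))$.

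The third ingredient is the standard fact that the soft maximum $f(x) = \alpha\log\sum_a \exp(x_a/\alpha)$ is $1$-Lipschitz with respect to $\|\cdot\|_\infty$: for any two vectors $x,y$ indexed by $\cl{A}$, $|f(x)-f(y)| \le \max_a |x_a - y_a|$. (This is elementary: writing $\varepsilon = \|x-y\|_\infty$ one has $x_a \le y_a + \varepsilon$, so $f(x) \le f(y) + \varepsilon$; the other direction is symmetric.) Both $\Vst{s_t}{N(s_t)}$ and $\Vss{s_t}$ are the soft maximum of their respective Q-value vectors (recall Equations (\ref{eq:q_soft_bellman}) and (\ref{appeq:soft_v_backup})), so on the event $\cl{E}_\varepsilon$ we immediately obtain $|\Vst{s_t}{N(s_t)}-\Vss{s_t}| \le \varepsilon$. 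Taking the contrapositive and using the probability bound from the previous paragraph yields
\begin{equation*}
  \Pr\!\left(\left|\Vst{s_t}{N(s_t)}-\Vss{s_t}\right|>\varepsilon\right) \le C\exp\!\left(-k\varepsilon^2 N(s_t)\right),
\end{equation*}
with $C=C_Q$ and $k=k_Q$, as required.

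I expect no serious obstacle: Lemma \ref{lem:stochastic_step} already absorbs the hard work (the empirical transition distribution via the DKW inequality) and Lemma \ref{lem:sa_to_s} handles the $N(s_t,a)\to N(s_t)$ translation. The only point requiring care is the Lipschitz estimate for the log-sum-exp backup, which must be stated and verified explicitly so that the $\varepsilon$ propagates without an $\alpha$-dependent blow-up; this is what allows the same $\varepsilon^2$ dependence to appear in the final exponent rather than an $\varepsilon^2/\alpha^2$ term.
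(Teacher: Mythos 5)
Your proposal is correct and follows essentially the same route as the paper's proof: apply Lemma \ref{lem:stochastic_step} to each action, translate the bound from $N(s_t,a)$ to $N(s_t)$ via Lemmas \ref{lem:min_prob} and \ref{lem:sa_to_s}, take a union bound over actions with Lemma \ref{lem:union_bound}, and then propagate the $\varepsilon$ through the soft backup. The only cosmetic difference is that you package the last step as the $1$-Lipschitz property of log-sum-exp in $\|\cdot\|_\infty$, whereas the paper verifies the same inequality directly by substituting $\Qss{s_t}{a}\pm\varepsilon$ into the softmax and factoring out $e^{\pm\varepsilon/\alpha}$ — both correctly yield an $\alpha$-free shift of $\varepsilon$.
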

        
        \begin{proof}
            Given the assumptions and by Lemmas \ref{lem:stochastic_step}, \ref{lem:sa_to_s} and \ref{lem:union_bound}, there is some $C,k>0$ such that for any $\varepsilon>0$:
            \begin{align}
                \Pr\left(\forall a_t\in\cl{A}. \left|\Qst{s_t}{a_t}{N(s_t,a_t)}-\Qss{s_t}{a_t}\right|\leq \varepsilon\right) &> 1-C \exp(-k \varepsilon^2 N(s_t)).
            \end{align}
            
            So with probability at least $1-C \exp(-k \varepsilon^2 N(s_t))$, for any $a_t$, the following holds:
            \begin{align}
                \Qss{s_t}{a_t} - \varepsilon \leq \Qst{s_t}{a_t}{N(s_t,a_t)} \leq \Qss{s_t}{a_t} + \varepsilon. 
            \end{align}
            
            Using the upper bound on $\Qst{s_t}{a_t}{N(s_t,a_t)}$ in the soft backup equation for $\Vst{s_t}{N(s_t)}$ (Equation (\ref{appeq:soft_v_backup}) gives:
            \begin{align}
                \Vst{s_t}{N(s_t)} &= \alpha \log \sum_{a\in\cl{A}} \exp\left(\frac{\Qst{s_t}{a}{N(s_t,a_t)}}{\alpha}\right) \\
                    &\leq \alpha \log \sum_{a\in\cl{A}} \exp\left(\frac{\Qss{s_t}{a}+\varepsilon}{\alpha}\right) \\
                    &= \alpha \log \sum_{a\in\cl{A}} \exp\left(\frac{\Qss{s_t}{a}}{\alpha}\right) + \varepsilon \\
                    &= \Vss{s_t} + \varepsilon,
            \end{align}
            noting that the \textit{softmax} function monotonically increases in its arguments. Then with similar reasoning using the lower bound on $\Qst{s_t}{a_t}{N(s_t,a_t)}$ gives:
            \begin{align}
                \Vst{s_t}{N(s_t)} \geq \Vss{s_t} - \varepsilon,
            \end{align}
            and hence:
            \begin{align}
                |\Vst{s_t}{N(s_t)}-\Vss{s_t}| \leq \varepsilon.
            \end{align}
            
            This therefore shows:
            \begin{align}
                \Pr\left(\left|\Vst{s_t}{N(s_t)}-\Vss{s_t}\right| \leq \varepsilon_1\right) > 1-C \exp(-k \varepsilon^2 N(s_t)),
            \end{align}
            and negating probabilities gives the result.
        \end{proof}

        Then completing the induction gives the concentration inequalities desired for any state that MENTS might visit.
        \begin{theorem} \label{thrm:ments_val_converge}
            Consider a MENTS process, let $s_t\in\cl{S}$ then there is some $C,k>0$ for any $\varepsilon>0$:
            \begin{align}
                \Pr\left(\left| \Vst{s_{t}}{N(s_{t})} - \Vss{s_{t}} \right| > \varepsilon \right) 
                    &\leq C\exp\left( -k\varepsilon^2 N(s_{t}) \right).
            \end{align}
            Moreover, at the root node $s_0$ we have:
            \begin{align}
                \Pr\left(\left| \Vst{s_{0}}{N(s_{0})} - \Vss{s_{0}} \right| > \varepsilon \right) 
                    &\leq C\exp\left( -k\varepsilon^2 n \right). \label{local:three}
            \end{align}
        \end{theorem}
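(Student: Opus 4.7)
The plan is to prove the first inequality by backward induction on the timestep $t$, from $t = H$ down to $t = 0$, using Lemma~\ref{lem:ments_val_induction_step} as the engine of each inductive step. For the base case at $t = H+1$ (terminal), we take $\Vst{s_{H+1}}{N(s_{H+1})} = \Vss{s_{H+1}} = 0$ as the convention for a finite-horizon MDP, so that the left-hand side of the probability bound is identically zero and the inequality holds trivially for any $C \geq 1, k > 0$. (If instead one uses a rollout-based $V^{\text{init}}$ at the horizon, the base case still holds because $\Vst{s_H}{1} = V^{\text{init}}(s_H) = \Vss{s_H}$ is deterministic at depth $H$.)

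For the inductive step, assume that for every successor $s_{t+1} \in \suc{s_t}$ there exist constants $C_{s_{t+1}}, k_{s_{t+1}} > 0$ such that
\begin{align*}
\Pr\left(\left| \Vst{s_{t+1}}{N(s_{t+1})} - \Vss{s_{t+1}} \right| > \varepsilon \right) &\leq C_{s_{t+1}}\exp\left( -k_{s_{t+1}}\varepsilon^2 N(s_{t+1}) \right).
\end{align*}
Then Lemma~\ref{lem:ments_val_induction_step} applies directly and yields constants $C, k > 0$ for the analogous bound at $s_t$. Since the state space is finite and the horizon is finite, after iterating from $t = H$ down to $t = 0$ we only combine constants finitely many times; taking $C$ to be the maximum and $k$ the minimum across the relevant finitely many state-level constants gives uniform values for every $s_t \in \cl{S}$.

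For the root-node bound, observe that every trial of the MENTS process begins at $s_0$, so after $n$ trials we have $N(s_0) = n$ deterministically. Substituting $N(s_0) = n$ into the bound just established at $t = 0$ yields the desired inequality in terms of $n$. The substantive work in this theorem is really carried out inside Lemma~\ref{lem:ments_val_induction_step} (which, in turn, leans on Lemma~\ref{lem:stochastic_step} for the stochastic backup, Lemma~\ref{lem:sa_to_s} for translating counts, and the softmax being $1$-Lipschitz in its arguments); the induction itself is routine and terminates in at most $H+1$ steps, so the only obstacle is the bookkeeping of accumulating constants as one walks up the tree, which is finite and hence unproblematic.
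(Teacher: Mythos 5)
Your proof is correct and takes essentially the same route as the paper's: a backward induction anchored at the vacuous base case $\Vst{s_{H+1}}{N(s_{H+1})}=\Vss{s_{H+1}}=0$, with Lemma~\ref{lem:ments_val_induction_step} supplying each inductive step and the observation $N(s_0)=n$ giving the root-node bound. The extra remarks on rollout initialisation and on taking a maximum/minimum of the finitely many constants are harmless elaborations of what the paper leaves implicit.
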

        \begin{proof}
            Consider that the result holds for $t=H+1$, because $\Vst{s_{H+1}}{N(s_{H+1})}=\Vss{s_{H+1}}=0$. Therefore the result holds for any $t=1,...,H+1$ by induction using Lemma \ref{lem:ments_val_induction_step}. Noting that $N(s_0)=n$ gives (\ref{local:three}).
        \end{proof}

        Note that all of our concentration inequalities imply a convergence in probability. For example, we explicitly demonstrate this in Corollary \ref{cor:sftq_convg}.
        
        \begin{corollary} \label{cor:sftq_convg}
        		$\Qst{s_t}{a_t}{N(s_{t},a_t)}\rap \Qss{s_t}{a_t}$
        \end{corollary}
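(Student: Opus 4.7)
The plan is to combine the already-established value convergence with the general Q-backup lemma, and then lift the resulting concentration bound from a count at $(s_t,a_t)$ up to the global trial count $n$.

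First I would apply Theorem \ref{thrm:ments_val_converge} at each successor state $s_{t+1}\in\succc{s_t}{a_t}$, which supplies a bound of the form $\Pr(|\Vst{s_{t+1}}{N(s_{t+1})}-\Vss{s_{t+1}}|>\varepsilon_1)\leq C_{s_{t+1}}\exp(-k_{s_{t+1}}\varepsilon_1^2 N(s_{t+1}))$ for every $s_{t+1}$. These are exactly the hypotheses of Lemma \ref{lem:stochastic_step} with $\dot V^{N(s)}(s)=\Vst{s}{N(s)}$ and $\dot V^*(s)=\Vss{s}$. Invoking that lemma (whose Q-backup matches the MENTS backup in Equation (\ref{appeq:soft_q_backup})) yields constants $C,k>0$ such that for every $\varepsilon>0$
\begin{align*}
\Pr\bigl(|\Qst{s_t}{a_t}{N(s_t,a_t)}-\Qss{s_t}{a_t}|>\varepsilon\bigr)\leq C\exp(-k\varepsilon^2 N(s_t,a_t)).
\end{align*}

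Next I would convert the bound on the random count $N(s_t,a_t)$ into one on the deterministic quantity $n=N(s_0)$. Walking up the tree from $(s_t,a_t)$ to the root, I would alternately apply Lemma \ref{lem:s_to_sa} (to go from a count at a chance node $s_{i+1}$ to the count at its parent state-action pair $(s_i,a_i)$) and Lemma \ref{lem:sa_to_s} (to go from $(s_i,a_i)$ to its parent decision node $s_i$). The hypothesis of Lemma \ref{lem:sa_to_s} — a uniform positive lower bound on the action-selection probability — is supplied by Lemma \ref{lem:min_prob}, which applies to any Boltzmann MCTS process. Since the depth $t$ is finite, this iteration terminates after at most $2t$ steps and produces constants $C',k'>0$ with
\begin{align*}
\Pr\bigl(|\Qst{s_t}{a_t}{N(s_t,a_t)}-\Qss{s_t}{a_t}|>\varepsilon\bigr)\leq C'\exp(-k'\varepsilon^2 n).
\end{align*}
As $n\to\infty$ the right-hand side tends to $0$ for every fixed $\varepsilon>0$, which is precisely convergence in probability.

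The only delicate point is the upward propagation: we must make sure the minimum-probability hypothesis of Lemma \ref{lem:sa_to_s} is valid at every decision node encountered on the way from $(s_t,a_t)$ to $s_0$. This is guaranteed uniformly by Lemma \ref{lem:min_prob} for MENTS, so no further work is needed there; the rest of the argument is a direct chaining of results already established in Sections \ref{app:preliminaries} and \ref{app:ments_results}.
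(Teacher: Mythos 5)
Your proof is correct and follows essentially the same route as the paper's: both derive the concentration inequality for $\Qst{s_t}{a_t}{N(s_t,a_t)}$ from Theorem \ref{thrm:ments_val_converge} together with Lemma \ref{lem:stochastic_step} and then let $n\rightarrow\infty$. Your extra step of chaining Lemmas \ref{lem:s_to_sa} and \ref{lem:sa_to_s} (with Lemma \ref{lem:min_prob} supplying the minimum action probability) up to the root so that the exponent contains the deterministic $n$ rather than the random count is a welcome tightening, since the paper takes the limit with $N(s_t)$ still in the exponent and implicitly relies on exactly this conversion.
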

        \begin{proof}
        		This follows from the concentration inequalities shown in Lemma \ref{lem:stochastic_step} and Theorem \ref{thrm:ments_val_converge}. As the RHS of the bound tend to zero as $n\rightarrow\infty$, these concentration inequalities follow by taking the limit $n\rightarrow\infty$:
        		\begin{align}	
                \lim_{n\rightarrow\infty} \Pr\left(\left| \Qst{s_{t}}{a_t}{N(s_{t},a_t)} - \Qss{s_t}{a_t} \right| > \varepsilon \right) 
                    &\leq \lim_{n\rightarrow\infty}  C\exp\left( -k\varepsilon^2 N(s_{t}) \right) \\
                    &= 0.
        		\end{align}
        	\end{proof}

        Now we use the concentration inequalities to show that we are exponentially unlikely to recommend a suboptimal action with respect to the standard values (i.e. the simple regret tends to zero exponentially), provided the temperature parameter $\alpha$ is small enough.
        
        \begin{lemma} \label{lem:ments_imm_simple_regret}
            Consider a MENTS process with $\alpha<\Delta_{\cl{M}}/3H\log |\cl{A}|$. Let $s_t\in\cl{S}$, with $1\leq t \leq H$ then there is some $C',k'>0$ such that:
            \begin{align}
                \bb{E} \reg_I(s_{t},\psi^n_{\textnormal{MENTS}}) &\leq C'\exp\left( -k' N(s_{t}) \right).
            \end{align}
        \end{lemma}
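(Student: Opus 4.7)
The plan is to combine the concentration inequality for soft Q-values (Lemma~\ref{lem:stochastic_step} applied with the soft-value convergence from Theorem~\ref{thrm:ments_val_converge}) with the ordering-preservation result of Lemma~\ref{lem:soft_standard_consistent_order}. Specifically, I would first observe that from Theorem~\ref{thrm:ments_val_converge} applied at each $s_{t+1}\in\succc{s_t}{a}$ together with Lemma~\ref{lem:stochastic_step} and Lemma~\ref{lem:sa_to_s}, there exist $C_0,k_0>0$ such that for any $\varepsilon>0$
\begin{align*}
\Pr\!\left(\exists a\in\cl{A}.\ \left|\Qst{s_t}{a}{N(s_t,a)} - \Qss{s_t}{a}\right| > \varepsilon\right) \leq C_0\exp\!\left(-k_0\varepsilon^2 N(s_t)\right).
\end{align*}

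Next I would quantify the gap that the temperature constraint buys us. Let $a^* = \argmax_a Q^*(s_t,a)$ and pick any suboptimal $a$. Lemma~\ref{lem:soft_geq_standard} gives $\Qss{s_t}{a^*}\geq Q^*(s_t,a^*)$, while the bound $\Qss{s_t}{a}\leq Q^*(s_t,a)+\alpha H\log|\cl{A}|$ (shown inside the proof of Lemma~\ref{lem:soft_standard_consistent_order}) together with the definition of $\Delta_{\cl{M}}$ and the hypothesis $\alpha < \Delta_{\cl{M}}/(3H\log|\cl{A}|)$ gives
\begin{align*}
\Qss{s_t}{a^*} - \Qss{s_t}{a} \;\geq\; \Delta_{\cl{M}} - \alpha H\log|\cl{A}| \;>\; \tfrac{2}{3}\Delta_{\cl{M}}.
\end{align*}
Setting $\varepsilon = \Delta_{\cl{M}}/3$, whenever the event $\{\forall a:\ |\Qst{s_t}{a}{N(s_t,a)} - \Qss{s_t}{a}|\leq \varepsilon\}$ holds we have $\Qst{s_t}{a^*}{}-\Qst{s_t}{a}{} > 2\Delta_{\cl{M}}/3 - 2\varepsilon = 0$ for every suboptimal $a$, so $\psi^n_{\textnormal{MENTS}}(s_t)=a^*=\pi^*(s_t)$. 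Hence there exist $C_1,k_1>0$ with $\Pr(\psi^n_{\textnormal{MENTS}}(s_t)\neq \pi^*(s_t)) \leq C_1\exp(-k_1 N(s_t))$.

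Finally I would convert this into a bound on immediate simple regret. Since $\reg_I(s_t,\psi^n) = V^*(s_t)-\bb{E}_{a\sim\psi^n(s_t)}[Q^*(s_t,a)]$ vanishes whenever $\psi^n$ selects an optimal action and is otherwise bounded above by some $M$ depending only on $R$ and $H$, conditioning on the event above yields
\begin{align*}
\bb{E}\reg_I(s_t,\psi^n_{\textnormal{MENTS}}) \;\leq\; M\cdot C_1\exp(-k_1 N(s_t)) \;=\; C'\exp(-k' N(s_t)).
\end{align*}

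The main technical obstacle is the gap argument: tracking the factor-of-three slack carefully so that the constraint $\alpha < \Delta_{\cl{M}}/(3H\log|\cl{A}|)$ supplies exactly enough room for the concentration error $\varepsilon$, the entropy slack $\alpha H\log|\cl{A}|$, and the minimal standard-value gap $\Delta_{\cl{M}}$ to all be reconciled, while making sure the inequality is strict so that $a^*$ is indeed the unique argmax with high probability. Everything else reduces to invoking previously established lemmas.
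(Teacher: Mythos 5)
Your proposal is correct and follows essentially the same route as the paper's proof: the same concentration step via Theorem \ref{thrm:ments_val_converge}, Lemma \ref{lem:stochastic_step} and Lemma \ref{lem:sa_to_s}, the same three-way split of $\Delta_{\cl{M}}$ between the entropy slack $\alpha H\log|\cl{A}|$ and the two concentration errors, and the same conversion of the misrecommendation probability into a regret bound. The only cosmetic difference is that you first establish the soft-value gap $\Qss{s_t}{a^*}-\Qss{s_t}{a}>2\Delta_{\cl{M}}/3$ and then apply concentration, whereas the paper chains the inequalities directly on the estimated values; the content is identical.
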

        \begin{proof}
            Let $a^*$ be the optimal action with respect to the soft values, so $a^*=\argmax_{a\in\cl{A}} \Qss{s_t}{a}$. Then by Lemma \ref{lem:soft_standard_consistent_order} $a^*$ must also be the optimal action for the standard Q-value function $a^*=\argmax_{a\in\cl{A}} Q^*(s_t,a)$. By Theorem \ref{thrm:ments_val_converge} and Lemmas \ref{lem:stochastic_step} and \ref{lem:sa_to_s} there exists $C_1,k_1>0$ such that for all $\varepsilon_1>0$:
            \begin{align}
                \Pr\left(\forall a_t\in\cl{A} \left|\Qst{s_t}{a_t}{N(s_t,a_t)}-\Qss{s_t}{a_t}\right|\leq \varepsilon_1\right) &> 1-C_1 \exp(-k_1 \varepsilon_1^2 N(s_t)).
            \end{align}
            
            Setting $\varepsilon_1=\Delta_{\cl{M}}/3H\log |\cl{A}|$ then gives with probability at least $1-C_1 \exp(-k_2N(s_t))$ (where $k_2=k_1(\Delta_{\cl{M}} /3H\log|\cl{A}|)^2$) that for all actions $a_t\in\cl{A}$:
            \begin{align}
                \Qss{s_t}{a_t} - \Delta_{\cl{M}}/3 \leq \Qst{s_t}{a_t}{N(s_t,a_t)} \leq \Qss{s_t}{a_t} + \Delta_{\cl{M}}/3.
            \end{align}
            
            And hence, with probability at least $1-C_1 \exp(-k_2N(s_t))$, for all $a\in\cl{A}-\{a^*\}$ we have:
            \begin{align}
                \Qst{s_t}{a}{N(s_t,a)}
                    \leq& \Qss{s_t}{a} + \Delta_{\cl{M}}/3 \\
                    \leq& Q^*(s_t,a) + \alpha H\log |\cl{A}| + \Delta_{\cl{M}}/3 \\
                    \leq& Q^*(s_t,a) + 2\Delta_{\cl{M}}/3 \\
                    \leq& Q^*(s_t,a^*) - \Delta_{\cl{M}}/3 \\
                    \leq& \Qss{s_t}{a^*} - \Delta_{\cl{M}}/3 \\
                    \leq& \Qst{s_t}{a^*}{N(s_t,a^*)}. \label{local:nine}
            \end{align}
            
            Where in the above, the first line holds from the upper bound on $\Qst{s_t}{a_t}{N(s_t,a_t)}$; The second holds from maximising the standard return and entropy portions of the soft value seperately (recall Inequality (\ref{appeq:softdiv}) in Lemma \ref{lem:soft_standard_consistent_order}); The third holds from the assumption on $\alpha$; The fourth holds from the definition of $\Delta_{\cl{M}}$ (also see Inequality (\ref{appeq:delta_diff})); The fifth holds from the optimal soft value being greater than the optimal standard value (Lemma \ref{lem:soft_geq_standard}); And the final line holds by using the lower bound on $\Qst{s_t}{a_t}{N(s_t,a_t)}$ given above with $a_t=a^*$. 
            
            Negating the probability that (\ref{local:nine}) holds gives:
            \begin{align}
                \Pr\left(\exists a_t\in\cl{A}-\{a^*\}.\left( \Qst{s_t}{a}{N(s_t,a)} > \Qst{s_t}{a^*}{N(s_t,a^*)}\right)\right)
                    \leq C_1 \exp(-k_2N(s_t)).
            \end{align}
            
            Finally, we can bound our expected immediate regret as follows:
            \begin{align}
                & \bb{E}\reg_I(s_t,\psi^n_{\text{MENTS}})  \\
                    =& \sum_{a\in\cl{A}-\{a^*\}} 
                        \left(V^*(s_t)-Q^*(s_t,a)\right) \Pr\left(\psi^n_{\text{MENTS}}(s_t) =a \right) \\
                    =& \sum_{a\in\cl{A}-\{a^*\}} 
                        \left(V^*(s_t)-Q^*(s_t,a)\right) \Pr\left(a = \argmax_{a'} \Qst{s_t}{a'}{N(s_t,a)} \right) \\
                    \leq& \sum_{a\in\cl{A}-\{a^*\}} 
                        \left(V^*(s_t)-Q^*(s_t,a)\right) \Pr\left(\Qst{s_t}{a}{N(s_t,a)} > \Qst{s_t}{a^*}{N(s_t,a^*)}  \right) \\
                    \leq&  \sum_{a\in\cl{A}-\{a^*\}} 
                        \left(V^*(s_t)-Q^*(s_t,a)\right) C_1 \exp(-k_2N(s_t)),
            \end{align}
            and setting $k'=k_2$ and $C'=C_1\sum_{a\in\cl{A}-\{a^*\}} \left(V^*(s_t)-Q^*(s_t,a)\right)$ gives the result.
        \end{proof}

        And finally, by using Lemma \ref{lem:ments_imm_simple_regret}, we can convert the bound on the immediate simple regret to a bound on the simple regret.
        
        \begin{theorem} \label{thrm:ments_simple_regret_converge}
            Consider a MENTS process with $\alpha<\Delta_{\cl{M}}/3H\log |\cl{A}|$. Let $s_t\in\cl{S}$, with $1\leq t \leq H$ then there is some $C',k'>0$ such that:
            \begin{align}
                \bb{E} \reg(s_{t},\psi^n_{\textnormal{MENTS}}) &\leq C'\exp\left( -k' N(s_{t}) \right).
            \end{align}
            Moreover, at the root node $s_0$ we have:
            \begin{align}
                \bb{E} \reg(s_{0},\psi^n_{\textnormal{MENTS}}) &\leq C'\exp\left( -k'n) \right).
            \end{align}
        \end{theorem}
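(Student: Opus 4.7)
The plan is to chain together two results already established in the preceding subsections: Lemma \ref{lem:ments_imm_simple_regret}, which gives an exponential bound on the \emph{immediate} simple regret at any state under the temperature restriction $\alpha<\Delta_{\cl{M}}/3H\log|\cl{A}|$, and Corollary \ref{cor:imm_to_full_simple_regret}, which asserts that an exponential-in-$N(s_t)$ bound on immediate simple regret is equivalent to one on full simple regret. So the proof is essentially a two-line appeal to these results, together with the observation $N(s_0)=n$ for the second statement.

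In more detail, I would first fix an arbitrary $s_t\in\cl{S}$ with $1\leq t\leq H$ and apply Lemma \ref{lem:ments_imm_simple_regret} to obtain constants $C_{s_t}, k_{s_t}>0$ such that $\bb{E}\reg_I(s_t,\psi^n_{\textnormal{MENTS}}) \leq C_{s_t}\exp(-k_{s_t}N(s_t))$. Since the hypothesis on $\alpha$ does not depend on the state, this holds uniformly over all states the MENTS process can visit. Then I would invoke Corollary \ref{cor:imm_to_full_simple_regret} (the ``$\Leftarrow$'' direction) to upgrade the immediate regret bound into the full simple regret bound $\bb{E}\reg(s_t,\psi^n_{\textnormal{MENTS}}) \leq C'\exp(-k'N(s_t))$ for appropriate $C',k'>0$. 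Taking $s_t = s_0$ and observing that $N(s_0)=n$ (every trial visits the root) gives the ``moreover'' clause.

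The main subtlety, rather than an obstacle, is that invoking Corollary \ref{cor:imm_to_full_simple_regret} implicitly requires the immediate regret bound to hold not just at $s_t$ but also at every successor state reachable from $s_t$ in the tree, since the corollary's proof proceeds by induction down the subtree rooted at $s_t$ using Lemmas \ref{lem:sa_to_s} and \ref{lem:s_to_sa} to translate bounds indexed by $N(s')$ into bounds indexed by $N(s_t)$. Fortunately, Lemma \ref{lem:ments_imm_simple_regret} is stated for every $s_t\in\cl{S}$, so the uniform bound across the whole state space is available and the induction closes with the base case handled vacuously at depth $H+1$ (where both $V^*$ and $V^{\psi^n}$ are zero). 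The constants $C',k'$ produced at the root are then the aggregated versions coming from the union bound and the minimum-probability argument in Lemma \ref{lem:sa_to_s}, both applicable because any Boltzmann MCTS process has a strictly positive minimum action probability by Lemma \ref{lem:min_prob}.
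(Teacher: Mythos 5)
Your proposal is correct and matches the paper's own proof, which likewise derives the theorem directly from Lemma \ref{lem:ments_imm_simple_regret} combined with Corollary \ref{cor:imm_to_full_simple_regret} and the observation that $N(s_0)=n$. Your additional remark about the corollary's induction requiring the immediate-regret bound uniformly over the subtree is a fair point of care, and it is indeed covered since Lemma \ref{lem:ments_imm_simple_regret} holds for every state.
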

        \begin{proof}
            This theorem holds as a consequence of Corollary \ref{cor:imm_to_full_simple_regret} and Lemma \ref{lem:ments_imm_simple_regret}, and noting that at the root node $N(s_0)=n$.
        \end{proof}

        We have now shown Theorem \ref{thrm:ments}:
        \begin{customthm}{3.2} 
            For any MDP $\cl{M}$, after running $n$ trials of the MENTS algorithm with $\alpha \leq \Delta_{\cl{M}}/3H\log|\cl{A}|$, there exists constants $C,k>0$ such that: $\bb{E}[\reg(s_0,\psi^n_{\textnormal{MENTS}})] \leq C\exp(-kn)$, where $\Delta_{\cl{M}}=\min \{Q^*(s,a,t)-Q^*(s,a',t)\vert Q^*(s,a,t) \neq Q^*(s,a',t),s\in\cl{S}, a,a'\in\cl{A},t\in\bb{N}\}$.
        \end{customthm}
        \begin{proof}
            This is part of Theorem \ref{thrm:ments_simple_regret_converge}.
        \end{proof}

        And to show Proposition \ref{prop:ments_bad} we will utilise Theorem \ref{thrm:ments_val_converge} and return to the modified 10-chain problem, with illustration repeated in Figure \ref{fig:dchain_illustration_tres} for reference.
        
        \begin{customprop}{3.1}
            There exists an MDP $\cl{M}$ and temperature $\alpha$ such that $\bb{E}[\reg(s_0,\psi^n_{\textnormal{MENTS}})] \not\to 0$ as $n\to\infty$. That is, MENTS is not consistent.
        \end{customprop}
        
        \begin{proof}
            We give a proof by construction. Recall the modified 10-chain problem, with $R_f=1/2$ in Figure \ref{fig:dchain_illustration_tres}, and consider a MENTS process (i.e. running MENTS) with a temperature $\alpha=1$ for $n$ trials. By considering the optimal soft Bellman equations (\ref{eq:v_soft_bellman}) and (\ref{eq:q_soft_bellman}), one can verify that $\Qss{1}{2}=0.9$ and $\Qss{1}{1}=\log\left(\exp(1/2)+\sum_{i=0}^8\exp(i/10)\right)\approx 2.74$. 
            
            Theorem \ref{thrm:ments_val_converge}, Lemma \ref{lem:stochastic_step} and Lemma \ref{lem:sa_to_s} implies that there is some $C,k>0$ for any $\varepsilon > 0$:
            \begin{align}
                \Pr\left(\left| \Qsp{1}{1}{N(1,1)} - \Qss{1}{1}\right| > \varepsilon \right) \leq C\exp(-k\varepsilon^2 N(1)) = C\exp(-k\varepsilon^2 n).
            \end{align}
            
            Letting $\varepsilon=1$ and using $\Qss{1}{1}>5/2$ gives:
            \begin{align}
                \Pr\left(\Qsp{1}{1}{N(1,1)} < 3/2 \right) 
                    \leq& \Pr\left(\left| \Qsp{1}{1}{N(1,1)} - \Qss{1}{1}\right| > 1 \right) \\
                    \leq& C\exp(-kn).
            \end{align}
            
            And hence:
            \begin{align}
                \Pr(\psi^n_{\text{MENTS}}(1)=1) > 1 - C\exp(-kn)
            \end{align}
            
            Consider that the best simple regret an agent can achieve after selecting action 1 from the starting state is $1/10$. Let $M=\log(2C)/k$, so that $C\exp(-kM)=1/2$. Then, for all $n>M$ we have $\Pr(\psi^n_{\text{MENTS}}(1)=1)> 1/2$, and hence:
            \begin{align}
                \bb{E}\reg(1,\psi^n_{\text{MENTS}}) &> \frac{1}{10} \cdot \Pr(\psi^n_{\text{MENTS}}(1)=1) \\
                    &> \frac{1}{20}.
            \end{align}
            Thus $\bb{E}r(1,\psi^n_{\text{MENTS}})\not\rightarrow 0$.
        \end{proof}
        
        \begin{figure}
            \centering
            \includegraphics[width=0.8\textwidth]{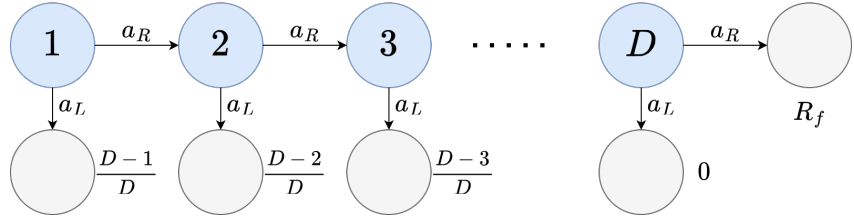}
            \caption{An illustration of the \textit{(modified) D-chain problem}, where 1 is the starting state, all transitions are deterministic and values next to states represents the reward for arriving in that state.}
            \label{fig:dchain_illustration_tres}
        \end{figure}

        Additionally, we can show that there is an MDP such that for any $\alpha$ MENTS will either not be consistent, or will take exponentially long in the size of the state space of the MDP. Below we state this formally and provide a proof outline in Theorem \ref{thrm:ments_bad_mdp}, which uses the MDP defined in Figure \ref{fig:adapted_chain}. 
        
        \begin{figure}
            \centering
            \includegraphics[width=0.8\textwidth]{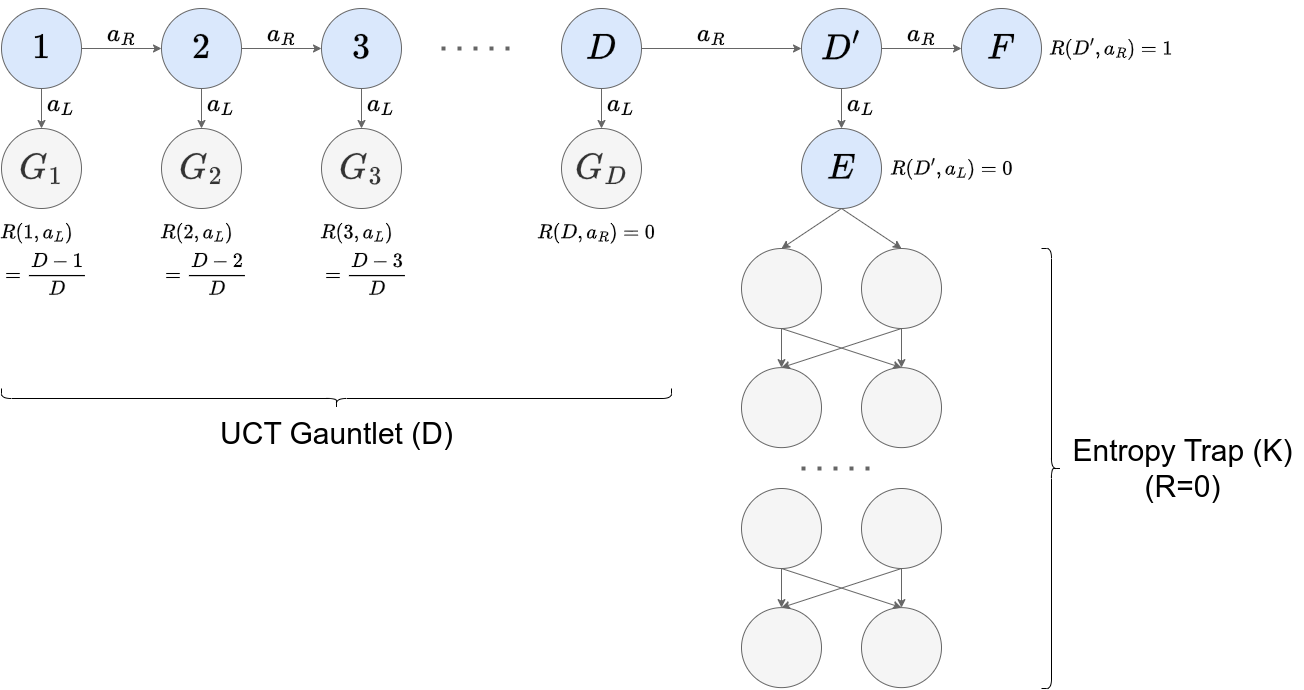}
            \caption{An illustration of the \textit{adapted-chain problem}, where 1 is the starting state, all transitions are deterministic and values next to states represents the reward for arriving in that state. The MDP is split into two sections, firstly, the UCT gauntlet refers to the D-chain part of the MDP, which is difficult for UCT algorithms, and secondly, the entropy trap, where the agent has to chose between states $E$ and $F$. The entropy trap consists of two chains of $K$ states, all of which give a reward of $0$ for visiting, but allows an agent to follow a policy with up to $\log(2)K$ entropy. So the optimal values for $E$ and $F$ are $\Vss{E}=\log(2)K$ and $\Vss{F}=1$ respectively.}
            \label{fig:adapted_chain}
        \end{figure}

        \begin{theorem} \label{thrm:ments_bad_mdp}
        		Consider a MENTS process with arbitrary temperature $\alpha$. There exists and MDP such that for any $\alpha$ that MENTS is either not consistent, or requires an exponential number of trials in the size of the state space. More precisely, either $\mathbb{E}\reg(s_0,\psi^n_{\text{MENTS}})\not\rightarrow 0$ or $\mathbb{E}\reg(1,\psi^n_{\text{MENTS}}) \geq c(1 - \frac{n}{k^{|S|}})$, which implies that $\mathbb{E}\reg(s_0,\psi^n_{MENTS})>0$ for $n<k^{|S|}$. 
        \end{theorem}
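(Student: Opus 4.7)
The plan is to use the adapted-chain MDP in Figure \ref{fig:adapted_chain}, choosing $D = K = \Theta(|\cl{S}|)$, and argue a dichotomy in $\alpha$ around a threshold $\alpha^\star \asymp 1/K$: either $\alpha$ is large enough that the soft objective strictly prefers the entropy trap, leading to inconsistency by an argument mirroring Proposition \ref{prop:ments_bad}, or $\alpha$ is small enough that MENTS needs exponentially many trials in $|\cl{S}|$ to discover the terminal reward $R_f$ at the end of the UCT gauntlet. Since the MDP is fixed in advance and the two regimes jointly cover all $\alpha$, the dichotomy statement of the theorem follows.

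For the high-$\alpha$ regime ($\alpha > \alpha^\star$), the soft value of the entropy trap rooted at $E$ is $\alpha \log(2) K$, while the soft value of any path from state~$1$ avoiding the entropy trap is at most $1 + \alpha H \log |\cl{A}|$ (the maximum standard return plus the maximum possible entropy contribution along horizon $H$). Choosing $K$ large enough relative to $H$ and $|\cl{A}|$ guarantees $\Vss{E}$ strictly dominates the soft value of every other action at state~$1$, so $\pis$ deterministically selects the action $a_E$ leading into the entropy trap, even though $Q^*(1,a_E) = 0$. Theorem \ref{thrm:ments_val_converge} then implies the MENTS soft Q-value estimates concentrate exponentially around the optimal soft Q-values, whence $\Pr(\psi^n_{\textnormal{MENTS}}(1) = a_E) \to 1$ and $\bb{E}\reg(1, \psi^n_{\textnormal{MENTS}})$ is bounded below by a positive constant independent of~$n$.

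For the low-$\alpha$ regime ($\alpha \leq \alpha^\star$), the plan is to show that before $R_f$ has ever been observed, the probability of any single trial reaching the end of the D-chain is at most $c^{-D}$ for a constant $c > 1$. I would track the MENTS search policy at each chain state $d$ inductively from the top: in the phase prior to $R_f$ being observed the chain-continuation action carries soft Q-value estimate close to $0$ while the greedy exit action carries value close to $(D-d)/D$, so the Boltzmann weight $\rho_{\textnormal{MENTS}}(\text{continue}|d)$ is bounded above by $\exp(-\Omega(1/(D\alpha)))$, which is negligible for $\alpha \leq \alpha^\star$. The remaining mass on continuation is the uniform term $\lambda_s / |\cl{A}| \leq \epsilon/(|\cl{A}|\log(e+N(d)))$. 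Chaining these per-step bounds along the chain yields a probability at most $c^{-D}$ per trial of reaching the terminal node, and a union bound over the first $n$ trials gives $\Pr(\text{$R_f$ observed by trial } n) \leq n/c^{|\cl{S}|}$. Until $R_f$ is observed, the recommendation at state~$1$ can only be based on the $0.9$ sub-reward, incurring constant simple regret at least $0.1$, yielding the claimed bound $\bb{E}\reg(1,\psi^n_{\textnormal{MENTS}}) \geq c\bigl(1 - n/k^{|\cl{S}|}\bigr)$.

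The principal obstacle will be the low-$\alpha$ regime: I must justify that during the entire phase before $R_f$ is observed, the MENTS soft value estimates along the D-chain closely track those of the truncated sub-MDP with $R_f$ replaced by $0$, so that the Boltzmann weight on continuation is indeed exponentially small in $1/\alpha$. This requires an inductive backwards analysis from the deepest expanded node of the chain, applying Theorem \ref{thrm:ments_val_converge} and Lemma \ref{lem:stochastic_step} to the truncated subtree to show the soft backups in Equations (\ref{appeq:soft_q_backup})--(\ref{appeq:soft_v_backup}) concentrate around the truncated soft optimal values; a subtle point is handling the feedback whereby selecting continuation increases $N$ at the deeper node, which may change $\lambda_s$ and hence the next step's continuation probability, but since $\lambda_s$ decreases only logarithmically, this effect is dominated by the geometric decay across the $D$ chain steps.
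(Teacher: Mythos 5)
Your overall strategy matches the paper's: the same adapted-chain MDP with a UCT gauntlet of length $D$ glued to an entropy trap of size $K$, the same dichotomy around a threshold $\alpha^\star \asymp 1/(\log(2)K)$, the same high-$\alpha$ argument (the trap's soft value $\alpha\log(2)K$ exceeds the soft value of the rewarding branch, so the recommendation converges to a zero-reward action), and the same structure for the low-$\alpha$ case (bound the per-trial probability of reaching the end of the chain, then union-bound over $n$ trials to get $\bb{E}\reg \geq c(1-n/k^{|\cl{S}|})$).

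However, there is a genuine gap in your low-$\alpha$ argument. You claim that before $R_f$ is observed ``the chain-continuation action carries soft Q-value estimate close to $0$ while the greedy exit action carries value close to $(D-d)/D$,'' and deduce a per-step continuation weight of $\exp(-\Omega(1/(D\alpha)))$. The premise is false: the continuation soft Q-value at chain state $d$ equals the backed-up soft value of state $d+1$, which is at least the exit reward $(D-d-1)/D$ available there, so it sits within $1/D$ of the exit value rather than near $0$ --- and on top of that it accumulates an entropy bonus of up to $\alpha\log|\cl{A}|$ per level, which could in principle push it \emph{above} the exit value and break the argument entirely. The missing ingredient is a deterministic inductive bound: conditional on $\Vst{D}{}=0$, the log-sum-exp inequality $\alpha\log\sum_a\exp(x_a/\alpha)\leq\max_a x_a+\alpha\log|\cl{A}|$ gives $\Vst{d}{}\leq\max\bigl((D-d+1)/D,\Vst{d+1}{}\bigr)+\alpha\log 2$, and the constraint $\alpha\leq 1/(\log(2)K)$ together with $K\geq D$ caps the per-level entropy leakage at $1/D$, preserving $\Vst{d}{}\leq (D-d+1)/D$ down the chain. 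This yields only that the continuation Q-value does not exceed the exit Q-value, hence a per-step continuation probability of at most $1/2$ --- which is all you need for the $2^{-D}$ per-trial bound, but is much weaker than what you asserted. Your fallback plan of invoking Theorem \ref{thrm:ments_val_converge} and Lemma \ref{lem:stochastic_step} on a truncated sub-MDP is also the wrong tool here: those are asymptotic concentration bounds requiring many visits, whereas the bound you need must hold surely from the very first trial; the deterministic backup inequality above supplies exactly that and avoids the feedback issue you flag.
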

        
        \begin{proofoutline}
        		Proof by construction. Consider the adapted-chain MDP defined in Figure \ref{fig:adapted_chain}, which is parameterised by $D$ the length of the UCT gauntlet, and $K$ half the number of states in the entropy trap. To show our claim, we split the analysis into two cases.
        		
        		Case 1: $\alpha>\frac{1}{\log(2)K}$
        		
        		If $\alpha>\frac{1}{\log(2)K}$, the soft value of E is greater than one for any policy over the actions, and $\phi$ is a uniform policy (note that because there are no MDP rewards after $E$, $\phi$ is both the initial policy for MENTS and the optimal soft policy): 
	        \begin{align}
	        		\Vss{E} & = 0 + \alpha \cdot \mathcal{H}(\phi) \\
	        			& = \alpha \cdot \log(2)  K \\
	        			& > 1 \\
	        			& = \Vss{F}.
	        \end{align}
	        
	        Hence the optimal soft poilicy (which MENTS converges to) will recommend going to state $E$ and gathering $0$ reward. Hence in this case the simple regret will converge to $1$ as the optimal value is $V^*(1)=1$. That is $\mathbb{E}\reg(1,\psi^n_{\text{MENTS}}) \rightarrow 1 > 0$.
	        
	        Case 2: $\alpha\leq\frac{1}{\log(2)K}$
	        
	        In this case, we argue that with a low $\alpha$ that MENTS will only have a low probability of hitting state $D$, that is, it requires a lot of trials to garuntee that at least one trial has reached state $D$, which is necessary to get the reward of $1$ (and simple regret of $0$).
	        
	        First consider the composed and simplified soft backup on the adapted-chain problem for any $0<i<D$ to get:
	        \begin{align}
	        		\Vst{i-1}{N(i-1)} & = \alpha \log\left( \frac{1}{\alpha} \left( \frac{D-i+1}{D} + \Vst{i+1}{N(i+1)} \right)\right) \\
	        			& \leq \max\left(\frac{D-i+1}{D},\Vst{i}{N(i)}\right) + \alpha\log(2) \\
	        			& \leq \max\left(\frac{D-i+1}{D},\Vst{i}{N(i)}\right) + \frac{1}{K}
	        \end{align}
	        
	        where we used the property of log-sum-exp that $\alpha \log \sum_{i=1}^\ell \exp (x_i/\alpha) \leq \max_i (x_i) + \alpha \log(\ell)$. Assume that $K \geq D$ and $\Vst{D}{N(D)}=0$ (we will check these assumptions are valid later). Then using an induction hypothesis of $\Vst{i}{N(i)}\leq \frac{D-i+1}{D}$ we have:
	         \begin{align}
	         	\Vst{i-1}{N(i-1)} & \leq \max(\frac{D-i+1}{D},\Vst{i}{N(i)}) + \frac{\log(2)}{\log(K)} \\
	         		& = \frac{D-i+1}{D} + \frac{1}{K} \\
	         		& \leq \frac{D-i+1}{D} + \frac{1}{D} \\
	         		& = \frac{D-i+2}{D} = \frac{D-(i-1)+1}{D}
	         \end{align}
	         
	         We can then show that the probability the event $Y(n)$ that MENTS visits state $D$ in its $n$th trial is less than $2^{-D}$, given $\Vst{D}{N(D)}=0$. Because we have just shown that $\Vst{i}{N(i)}\leq \frac{D-i+1}{D} = \Vst{G_{i-1}}{N(G_{i-1})}$, using a symmetry argument we must have $\psi^n_{\text{MENTS}}(a_R|i) < \frac{1}{2}$, and as such, we must have $\Pr\left(Y(n) \middle| \Vst{D}{N(D)}=0\right) < \frac{1}{2^D}$. 
	         
	         Then let $Z(n)=\neg \bigcup_{j=1}^n Y(j)$ be the event that no trial of MENTS has visited state $D$ in any of the first $n$ trials. And note that $Z(n)$ implies that $\Vst{D}{N(D)}=0$:
	         \begin{align}
	         	\Pr(Z(n)) 
	         		&= \Pr\left(Z(n) 	              \cap \Vst{D}{N(D)}=0\right) 													\\
	         		&= \Pr\left(\neg Y(n) \cap Z(n-1) \cap \Vst{D}{N(D)} = 0 \right) 													\\
	         		&= \Pr\left(\neg Y(n) \middle|    Z(n-1) \cap \Vst{D}{N(D)} = 0 \right) 	\Pr\left( Z(n-1) \cap \Vst{D}{N(D)} = 0 \right) 	\\
	         		&\geq (1-2^{-D}) 													    \Pr\left( Z(n-1) \cap \Vst{D}{N(D)} = 0 \right) 	\\
	         		&= ... 																											\\
	         		&\geq (1-2^{-D})^n \Pr\left( Z(0) \cap \Vst{D}{N(D)}=0\right) \\
	         		&= (1-2^{-D})^n \\
	         		&\geq 1-n2^{-D},
	         \end{align}
	         where in the penultimate line we used $\Pr\left( Z(0) \cap \left(\Vst{D}{N(D)}=0\right)\right)=1$, as $Z(0)$ and $\Vst{D}{N(D)}=0$ are vacuously true at the start of running the algorithm, and in the final line we used Bernoulli's inequality.
	         
	         Informally, $Z(n)$ implies that $V^{\psi^n_{\text{MENTS}}}(1) \leq \frac{9}{10}$, as no trial has even reached $D$ to be able to reach the reward of $1$ from $F$. And hence the expected simple regret in this environment of MENTS can be bounded below as follows:
	         \begin{align}
	         	\mathbb{E}\reg(1,\psi^n_{\text{MENTS}}) & \geq \left(1-\frac{9}{10}\right) \Pr(Z(n)) \\
	         		& \geq \frac{1}{10} \left( 1-n2^{-D} \right).
	         \end{align}
	         
	         Finally, setting $K=D$, we must have that $D>\frac{|S|}{3}$ and so $\mathbb{E}\reg(1,\psi^n_{\text{MENTS}}) \geq 1 - \frac{n}{\sqrt[3]{2}^{|S|}}$, which is greater than $0$ for $n < \sqrt[3]{2}^{|S|}$. That is $c=0.1$ and $k=\sqrt[3]{2}$.
	        
        \end{proofoutline}

    \subsection{DENTS results} \label{appsec:dents_proofs}
    
        In this subsection we provide results related to DENTS. Similarly to the MENTS results, we begin by providing concentration inequalities for the estimated soft values at each node, around the optimal soft value. And after, the concentration inequalities are then used to show bounds on the simple regret of DB-MENTS, however, this time no constraints on the temperature are required.

        \begin{lemma} \label{lem:dents_val_induction_step}
            Consider a DENTS process. Let $s_t\in\cl{S}$, with $1\leq t \leq H$. If for all $s_{t+1}\in\suc{s_t}$ we have some $C_{s_{t+1}},k_{s_{t+1}}>0$ for any $\varepsilon_{s_{t+1}}>0$:
            \begin{align}
                \Pr\left(\left| \Vt{s_{t+1}}{N(s_{t+1})} - V^*(s_{t+1}) \right| > \varepsilon_{s_{t+1}} \right) 
                    &\leq C_{s_{t+1}}\exp\left( -k_{s_{t+1}}\varepsilon_{s_{t+1}}^2 N(s_{t+1}) \right), 
            \end{align}
            then there is some $C,k>0$, for any $\varepsilon>0$:
            \begin{align}
                \Pr\left(\left| \Vt{s_{t}}{N(s_{t})} - V^*(s_t) \right| > \varepsilon \right) 
                    &\leq C\exp\left( -k\varepsilon^2 N(s_{t}) \right).
            \end{align}
        \end{lemma}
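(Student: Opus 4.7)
The plan is to mirror the structure of Lemma \ref{lem:ments_val_induction_step} for MENTS, replacing the softmax backup with the max backup used in DENTS. First, I would apply Lemma \ref{lem:stochastic_step} to each child value bound, which yields for every $a_t\in\cl{A}$ and every $\varepsilon_1>0$ a constant $C_{a_t},k_{a_t}>0$ with
\begin{align*}
\Pr\left(\left|\Qt{s_t}{a_t}{N(s_t,a_t)}-Q^*(s_t,a_t)\right|>\varepsilon_1\right)\leq C_{a_t}\exp\left(-k_{a_t}\varepsilon_1^2 N(s_t,a_t)\right).
\end{align*}
Lemma \ref{lem:min_prob} guarantees a strictly positive minimum selection probability $\pi^{\min}>0$ for every action under the DENTS search policy, which is exactly the hypothesis needed to invoke Lemma \ref{lem:sa_to_s} and translate each of the above bounds into one of the form $C'_{a_t}\exp(-k'_{a_t}\varepsilon_1^2 N(s_t))$. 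A union bound over $a_t\in\cl{A}$ via Lemma \ref{lem:union_bound} then gives constants $C_0,k_0>0$ with
\begin{align*}
\Pr\left(\forall a_t.\;\left|\Qt{s_t}{a_t}{N(s_t,a_t)}-Q^*(s_t,a_t)\right|\leq \varepsilon\right)>1-C_0\exp\left(-k_0\varepsilon^2 N(s_t)\right).
\end{align*}

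Next I would exploit the 1-Lipschitz property of $\max$ under the sup norm: if $|f(a)-g(a)|\leq\varepsilon$ for every $a$, then $|\max_a f(a)-\max_a g(a)|\leq\varepsilon$. This is the direct analogue of the argument used for softmax in the MENTS proof, and it is what makes the DENTS backup equation (\ref{appeq:dp_v_backup}) behave well. On the high-probability event above, we therefore obtain
\begin{align*}
\left|\Vt{s_t}{N(s_t)}-V^*(s_t)\right|=\left|\max_{a}\Qt{s_t}{a}{N(s_t,a)}-\max_a Q^*(s_t,a)\right|\leq \varepsilon,
\end{align*}
and negating gives the desired concentration inequality with $C=C_0$, $k=k_0$.

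I do not expect any substantial obstacle: Lemma \ref{lem:stochastic_step} abstracts away the stochastic transition handling, Lemma \ref{lem:min_prob} gives the positivity that permits the $N(s_t,a_t)\to N(s_t)$ conversion (this is the place where DENTS specifically — as opposed to an arbitrary $\max$-backup algorithm — relies on the Boltzmann-with-floor form of its search policy), and the $\max$ step is a one-line Lipschitz argument. The only mild point of care is keeping track of constants when juggling Lemmas \ref{lem:sa_to_s}, \ref{lem:union_bound}, and \ref{lem:stochastic_step}, but this is mechanical and identical in flavour to the corresponding bookkeeping in Lemma \ref{lem:ments_val_induction_step}. Note in particular that, unlike the MENTS case, no constraint on $\alpha$ or on $\beta$ appears here, because the recommendation and backups are based on Bellman values rather than soft values; the boundedness of $\beta$ enters only through Lemma \ref{lem:min_prob}.
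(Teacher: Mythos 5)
Your proposal is correct, and it takes a slightly but genuinely different route from the paper's proof at the final step. The first half is identical: both you and the paper combine Lemma \ref{lem:stochastic_step}, Lemma \ref{lem:min_prob} (to supply the minimum action probability needed by Lemma \ref{lem:sa_to_s}), and Lemma \ref{lem:union_bound} to get the uniform-over-actions bound $\Pr(\forall a.\,|\Qt{s_t}{a}{N(s_t,a)}-Q^*(s_t,a)|\leq\varepsilon) > 1-C_0\exp(-k_0\varepsilon^2N(s_t))$. Where you diverge is in passing from Q-values to the value: the paper sets $\varepsilon_1=\min(\varepsilon,\Delta_{\cl{M}}/2)$ so that on the good event the empirical maximiser coincides with the true maximiser $a^*$, and then transfers the Q-bound at $a^*$ to $\Vt{s_t}{N(s_t)}=\Qt{s_t}{a^*}{N(s_t,a^*)}$; you instead invoke the sup-norm non-expansiveness of $\max$, i.e.\ $|\max_a f(a)-\max_a g(a)|\leq\max_a|f(a)-g(a)|$, which needs no identification of the argmax and no reference to the gap $\Delta_{\cl{M}}$. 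Your route is cleaner: it gives the exponent $k_0\varepsilon^2$ directly for all $\varepsilon>0$, whereas the paper's substitution of $\varepsilon_1\leq\varepsilon$ into the exponent requires an extra (and, as written, slightly loose) step to recover a bound in terms of $\varepsilon$ rather than $\varepsilon_1$. The paper's argmax argument is not wasted effort globally — essentially the same gap-based reasoning is what Lemma \ref{lem:dents_imm_simple_regret} needs for the recommendation policy — but for this value-concentration lemma your Lipschitz argument is the more economical one, and it parallels exactly the monotonicity-of-softmax step in Lemma \ref{lem:ments_val_induction_step}.
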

        \begin{proof}
            Given the assumptions and by Lemmas \ref{lem:stochastic_step}, \ref{lem:union_bound} and \ref{lem:sa_to_s}, for some $C,k>0$ we have for any $\varepsilon_1>0$:
            \begin{align}
                \Pr\left(\forall a_t\in\cl{A}. \left|\Qt{s_t}{a_t}{N(s_t,a_t)}-Q^*(s_t,a_t)\right|\leq \varepsilon_1\right) &> 1-C \exp(-k \varepsilon_1^2 N(s_t)).
            \end{align}
            
            Let $\varepsilon >0$, and set $\varepsilon_1 = \min(\varepsilon,\Delta_{\cl{M}}/2)$. So with probability at least $1-C_1 \exp(-k_1 \varepsilon_1^2 N(s_t))$ we have for any $a_t$ that:
            \begin{align}
                Q^*(s_t,a_t) - \varepsilon_1 \leq \Qt{s_t}{a_t}{N(s_t,a_t)} &\leq Q^*(s_t,a_t) + \varepsilon_1. \label{local:ten}
            \end{align}
            
            Let $a^*=\max_{a\in\cl{A}} Q^*(s_t,a)$. Using $\varepsilon_1 \leq \Delta_{\cl{M}}/2$ in (\ref{local:ten}), then for any $a\in\cl{A}-\{ a^*\}$:
            \begin{align}
                \Qt{s_t}{a}{N(s_t,a)} \leq& Q^*(s_t,a) + \Delta_{\cl{M}}/2 \\
                    \leq& Q^*(s_t,a^*) - \Delta_{\cl{M}}/2 \\
                    \leq& \Qt{s_t}{a^*}{N(s_t,a^*)},
            \end{align}
            and hence $\argmax_{a\in\cl{A}} \Qt{s_t}{a}{N(s_t,a)} = a^*$. As a consequence:
            \begin{align}
                \Vt{s_t}{N(s_t)} &= \max_a \Qt{s_t}{a}{N(s_t,a)} = \Qt{s_t}{a^*}{N(s_t,a^*)}. \label{local:editing_one}
            \end{align}
            
            Then using (\ref{local:ten}) with $a_t=a^*$ (noting $V^*(s_t)=Q^*(s_t,a^*)$), using (\ref{local:editing_one}) and using $\varepsilon_1 \leq \varepsilon$ gives:
            \begin{align}
                V^*(s_t) - \varepsilon
                    \leq& V^*(s_t) - \varepsilon_1 \\
                    \leq& \Vt{s_t}{N(s_t)} \\
                    \leq& V^*(s_t) + \varepsilon_1 \\
                    \leq& V^*(s_t) + \varepsilon. 
            \end{align}
            
            Hence:
            \begin{align}
                \Pr\left(\left|\Vt{s_t}{N(s_t)}-V^*(s_t)\right| > \varepsilon\right) 
                    \leq C \exp(-k \varepsilon_1^2 N(s_t)) 
                    \leq C \exp(-k \varepsilon^2 N(s_t)),
            \end{align}
            which is the result.
        \end{proof}

        Similarly to the MENTS section, Lemma \ref{lem:dents_val_induction_step} provides an inductive step, which is used in Theorem \ref{thrm:dents_val_converge} to show concentration inequalities at all states that DENTS visits.
        
        \begin{theorem} \label{thrm:dents_val_converge}
            Consider a DENTS process, let $s_t\in\cl{S}$ then there is some $C,k>0$ for any $\varepsilon>0$:
            \begin{align}
                \Pr\left(\left| \Vt{s_{t}}{N(s_{t})} - V^*(s_{t}) \right| > \varepsilon \right) 
                    &\leq C\exp\left( -k\varepsilon^2 N(s_{t}) \right).
            \end{align}
            Moreover, at the root node $s_0$ we have:
            \begin{align}
                \Pr\left(\left| \Vt{s_{0}}{N(s_{0})} - V^*(s_0) \right| > \varepsilon \right) 
                    &\leq C\exp\left( -k\varepsilon^2 n \right). \label{local:four}
            \end{align}
        \end{theorem}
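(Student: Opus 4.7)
The plan is to prove this by backwards induction on the timestep $t$, mirroring the structure already used for Theorem \ref{thrm:ments_val_converge} in the MENTS case. The inductive step is already packaged as Lemma \ref{lem:dents_val_induction_step}, so essentially all the work is in verifying the base case and then chaining the induction up to the root.

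First I would set up the base case at $t = H+1$. Here any node $s_{H+1}$ is beyond the horizon, so by convention $\Vt{s_{H+1}}{N(s_{H+1})} = V^*(s_{H+1}) = 0$ and the event $\{|\Vt{s_{H+1}}{N(s_{H+1})} - V^*(s_{H+1})| > \varepsilon\}$ is empty for every $\varepsilon > 0$. Thus the required bound $\Pr(\cdot) \leq C\exp(-k\varepsilon^2 N(s_{H+1}))$ is satisfied vacuously for any positive constants $C$ and $k$.

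Next I would perform the induction. Assume for some $t$ with $0 \leq t \leq H$ that for every $s_{t+1} \in \suc{s_t}$ there exist constants $C_{s_{t+1}}, k_{s_{t+1}} > 0$ such that for all $\varepsilon_{s_{t+1}} > 0$,
\begin{align*}
\Pr\bigl(|\Vt{s_{t+1}}{N(s_{t+1})} - V^*(s_{t+1})| > \varepsilon_{s_{t+1}}\bigr) \leq C_{s_{t+1}} \exp(-k_{s_{t+1}} \varepsilon_{s_{t+1}}^2 N(s_{t+1})).
\end{align*}
Lemma \ref{lem:dents_val_induction_step} then immediately yields constants $C, k > 0$ such that for all $\varepsilon > 0$,
\begin{align*}
\Pr\bigl(|\Vt{s_t}{N(s_t)} - V^*(s_t)| > \varepsilon\bigr) \leq C\exp(-k\varepsilon^2 N(s_t)),
\end{align*}
which completes the inductive step. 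Since $\cl{S}$ is finite and the set of successor states at each level is finite, we can take minima and sums to obtain uniform constants (Lemma \ref{lem:union_bound}), so the induction propagates cleanly from $t = H+1$ down to $t = 0$.

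Finally, for the statement about the root node, I would observe that every trial of the DENTS process begins at $s_0$, so $N(s_0) = n$ after $n$ trials. Substituting $N(s_0) = n$ into the general bound gives the second displayed inequality. I do not anticipate any real obstacle here: all the heavy lifting — handling the stochastic transitions, the entropy backups, and converting between bounds on $N(s_t)$, $N(s_t,a_t)$, and $N(s_{t+1})$ — has already been done inside Lemma \ref{lem:dents_val_induction_step} and its supporting lemmas (Lemmas \ref{lem:stochastic_step}, \ref{lem:sa_to_s}, \ref{lem:union_bound}, and \ref{lem:min_prob}). The only subtlety worth flagging is ensuring that the constants can be chosen uniformly across the finitely many successor states at each induction step, which is a routine application of Lemma \ref{lem:union_bound}.
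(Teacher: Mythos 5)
Your proposal matches the paper's proof exactly: a vacuous base case at $t=H+1$ where $\Vt{s_{H+1}}{N(s_{H+1})}=V^*(s_{H+1})=0$, backwards induction via Lemma \ref{lem:dents_val_induction_step}, and substituting $N(s_0)=n$ for the root-node bound. No gaps; the observation about uniformizing constants over finitely many successors via Lemma \ref{lem:union_bound} is already implicit in the lemma chain.
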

        \begin{proof}
            The result holds for $t=H+1$ because $\Vt{s_{H+1}}{N(s_{H+1})}=V^*(s_{H+1})=0$. Hence the result holds for all $t=1,...,H+1$ by induction using Lemma \ref{lem:dents_val_induction_step}. Noting that $N(s_0)=n$ gives (\ref{local:four}).
        \end{proof}

        Again, the concentration inequalities are used to show that the simple regret tends to zero exponentially, and therefore that DENTS will be exponentially likely in the number of visits to recommend the optimal standard action at every node.
        
        \begin{lemma} \label{lem:dents_imm_simple_regret}
            Consider a DENTS process. Let $s_t\in\cl{S}$, with $1\leq t \leq H$ then there is some $C',k'>0$ such that:
            \begin{align}
                \bb{E} \reg_I(s_{t},\psi^n_{\textnormal{DENTS}}) &\leq C'\exp\left( -k' N(s_{t}) \right).
            \end{align}
        \end{lemma}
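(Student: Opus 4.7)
The plan is to mirror the structure of Lemma \ref{lem:ments_imm_simple_regret}, but replace the role of the optimal soft values with the optimal standard values, exploiting the fact that DENTS uses Bellman backups for its Q-value estimates and its recommendation policy. This will eliminate the need for any constraint on $\alpha$ (or on $\beta$ beyond boundedness), since $\hat{Q}^{N(s,a)}(s,a)$ concentrates directly on $Q^*(s,a)$ rather than on a soft value that may diverge from it.

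First I would combine Theorem \ref{thrm:dents_val_converge} with Lemma \ref{lem:stochastic_step} and Lemma \ref{lem:sa_to_s} (plus a union bound via Lemma \ref{lem:union_bound} over the finite action set) to obtain constants $C_1,k_1>0$ such that for any $\varepsilon_1>0$,
\begin{align}
\Pr\!\left(\forall a\in\cl{A}.\; \bigl|\Qt{s_t}{a}{N(s_t,a)}-Q^*(s_t,a)\bigr|\leq \varepsilon_1\right) \geq 1-C_1\exp(-k_1\varepsilon_1^2 N(s_t)).
\end{align}
The application of Lemma \ref{lem:sa_to_s} is legitimate because Lemma \ref{lem:min_prob} guarantees a uniform positive lower bound $\pi^{\min}$ on the probability of selecting each action under the DENTS search policy, irrespective of $n$.

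Next I would fix $a^*=\argmax_{a}Q^*(s_t,a)$ and set $\varepsilon_1=\Delta_{\cl{M}}/2$. On the high-probability event above, for every suboptimal action $a\neq a^*$,
\begin{align}
\Qt{s_t}{a}{N(s_t,a)} \leq Q^*(s_t,a)+\tfrac{\Delta_{\cl{M}}}{2} \leq Q^*(s_t,a^*)-\tfrac{\Delta_{\cl{M}}}{2} \leq \Qt{s_t}{a^*}{N(s_t,a^*)},
\end{align}
using the definition of $\Delta_{\cl{M}}$ (Equation (\ref{appeq:delta_diff})). Hence $\psi^n_{\textnormal{DENTS}}(s_t)=a^*$ on this event, and conversely the probability of recommending any suboptimal action is at most $C_1\exp(-k_2 N(s_t))$ with $k_2=k_1(\Delta_{\cl{M}}/2)^2$.

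Finally, I would bound the expected immediate simple regret exactly as in Lemma \ref{lem:ments_imm_simple_regret}:
\begin{align}
\bb{E}\reg_I(s_t,\psi^n_{\textnormal{DENTS}})
&=\sum_{a\neq a^*}\bigl(V^*(s_t)-Q^*(s_t,a)\bigr)\Pr\!\left(\psi^n_{\textnormal{DENTS}}(s_t)=a\right) \\
&\leq C'\exp(-k'N(s_t)),
\end{align}
with $k'=k_2$ and $C'=C_1\sum_{a\neq a^*}(V^*(s_t)-Q^*(s_t,a))$. I do not anticipate a substantial obstacle: the result is essentially a transcription of the MENTS argument with $Q^*_{\textnormal{sft}}$ replaced by $Q^*$ and with the softmax-induced slack $\alpha H\log|\cl{A}|$ removed. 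The only subtle point is to check that all preliminary lemmas apply to the DENTS process uniformly in $\beta$, which is handled by Lemma \ref{lem:min_prob} (whose proof already treats DENTS explicitly, using only that $\beta$ is bounded).
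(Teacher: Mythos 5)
Your proposal is correct and follows essentially the same route as the paper's own proof: concentrate $\Qt{s_t}{a}{N(s_t,a)}$ around $Q^*(s_t,a)$ via Theorem \ref{thrm:dents_val_converge} with Lemmas \ref{lem:stochastic_step} and \ref{lem:sa_to_s}, set $\varepsilon_1=\Delta_{\cl{M}}/2$ to force the empirical argmax to coincide with $a^*$, and sum the resulting exponential failure probability against the action gaps. (Minor remark: your exponent $k_2=k_1(\Delta_{\cl{M}}/2)^2$ is the one actually consistent with the $\varepsilon_1^2$ form of the concentration bound; the paper writes $k_1\Delta_{\cl{M}}/2$, which appears to be a typo.)
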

        \begin{proof}
            Let $a^*$ be the locally optimal standard action, so $a^*=\argmax_{a\in\cl{A}} Q^*(s_t,a)$. By Theorem \ref{thrm:dents_val_converge} and Lemmas \ref{lem:stochastic_step} and \ref{lem:sa_to_s} there exists $C_1,k_1>0$ such that for all $\varepsilon_1>0$:
            \begin{align}
                \Pr\left(\forall a_t\in\cl{A}. \left|\Qt{s_t}{a_t}{N(s_t,a_t)}-Q^*(s_t,a_t)\right|\leq \varepsilon_1\right) &> 1-C_1 \exp(-k_1 \varepsilon_1^2 N(s_t)).
            \end{align}
            
            Setting $\varepsilon_1=\Delta_{\cl{M}}/2$ then gives with probability $1-C_1 \exp(-k_2N(s_t))$ (where $k_2=k_1\Delta_{\cl{M}}/2$) for all actions $a\in\cl{A}$ that:
            \begin{align}
                Q^*(s_t,a) - \Delta_{\cl{M}}/2 \leq \Qt{s_t}{a}{N(s_t,a_t)} \leq Q^*(s_t,a) + \Delta_{\cl{M}}/2.
            \end{align}
            
            And hence, with probability $1-C_1 \exp(-k_2N(s_t))$, for all $a_t\in\cl{A}-\{a^*\}$ we have:
            \begin{align}
                \Qt{s_t}{a_t}{N(s_t,a)}
                    \leq& Q^*(s_t,a_t) + \Delta_{\cl{M}}/2 \\
                    \leq& Q^*(s_t,a^*) - \Delta_{\cl{M}}/2 \\
                    \leq& \Qt{s_t}{a^*}{N(s_t,a^*)}. \label{local:eleven}
            \end{align}
            
            Negating the probability of (\ref{local:eleven}) then gives:
            \begin{align}
                \Pr\left(\Qst{s_t}{a}{N(s_t,a)} > \Qst{s_t}{a^*}{N(s_t,a^*)}\right) \leq C_1 \exp(-k_2N(s_t))
            \end{align}
            
            Finally, the bound on the expected immediate regret follows:
            \begin{align}
                & \bb{E}\reg_I(s_t,\psi^n_{\text{DENTS}})  \\
                    =& \sum_{a\in\cl{A}-\{a^*\}} 
                        \left(V^*(s_t)-Q^*(s_t,a)\right) \Pr\left(\psi^n_{\text{DENTS}}(s_t) =a \right) \\
                    =& \sum_{a\in\cl{A}-\{a^*\}} 
                        \left(V^*(s_t)-Q^*(s_t,a)\right) \Pr\left(a = \argmax_{a'} \Qt{s_t}{a'}{N(s_t,a)} \right) \\
                    \leq& \sum_{a\in\cl{A}-\{a^*\}} 
                        \left(V^*(s_t)-Q^*(s_t,a)\right) \Pr\left(\Qt{s_t}{a}{N(s_t,a)} > \Qt{s_t}{a^*}{N(s_t,a^*)}  \right) \\
                    \leq&  \sum_{a\in\cl{A}-\{a^*\}} 
                        \left(V^*(s_t)-Q^*(s_t,a)\right) C_1 \exp(-k_2N(s_t)),
            \end{align}
            and setting $k'=k_2$ and $C'=C_1\sum_{a\in\cl{A}-\{a^*\}} \left(V^*(s_t)-Q^*(s_t,a)\right)$ gives the result.
        \end{proof}

        Again similarly to before, by using Lemma \ref{lem:dents_imm_simple_regret}, we can convert the bound on the immediate simple regret to a bound on the simple regret.
        
        \begin{theorem} \label{thrm:dents_simple_regret_converge}
            Consider a DENTS process. Let $s_t\in\cl{S}$, with $1\leq t \leq H$ then there is some $C',k'>0$ such that:
            \begin{align}
                \bb{E} \reg(s_{t},\psi^n_{\textnormal{DENTS}}) &\leq C'\exp\left( -k' N(s_{t}) \right).
            \end{align}
            
            Moreover, at the root node $s_0$:
            \begin{align}
                \bb{E} \reg(s_{0},\psi^n_{\textnormal{DENTS}}) &\leq C'\exp\left( -k'n) \right). \label{local:five}
            \end{align}
        \end{theorem}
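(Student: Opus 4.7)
The plan is to assemble this theorem as an essentially mechanical consequence of two results that have already been established in the preceding parts of the appendix, so very little new work is needed at this point. First, Lemma \ref{lem:dents_imm_simple_regret} has already given an exponential bound on the immediate simple regret of $\psi^n_{\textnormal{DENTS}}$ at any state: there exist $C'',k''>0$ with $\mathbb{E}\reg_I(s_t,\psi^n_{\textnormal{DENTS}}) \leq C''\exp(-k'' N(s_t))$. Second, Corollary \ref{cor:imm_to_full_simple_regret} provides a generic lift from an exponential-in-$N(s_t)$ bound on the immediate simple regret to an exponential-in-$N(s_t)$ bound on the full simple regret, for any Boltzmann MCTS process. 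So the proof reduces to pointing at these two facts and applying them in sequence.

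Concretely, I would write: by Lemma \ref{lem:dents_imm_simple_regret}, there exist constants $C'',k''>0$ such that $\mathbb{E}\reg_I(s_t,\psi^n_{\textnormal{DENTS}}) \leq C'' \exp(-k'' N(s_t))$. Since the DENTS process is a Boltzmann MCTS process (and Lemma \ref{lem:min_prob} guarantees the minimum-probability assumption needed in the supporting lemmas), Corollary \ref{cor:imm_to_full_simple_regret} then yields constants $C',k'>0$ with $\mathbb{E}\reg(s_t,\psi^n_{\textnormal{DENTS}}) \leq C' \exp(-k' N(s_t))$. This gives the first claim.

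For the root-node bound (\ref{local:five}), note that $s_0$ is visited on every trial, so $N(s_0) = n$ deterministically; substituting this into the bound at $s_t = s_0$ gives $\mathbb{E}\reg(s_0,\psi^n_{\textnormal{DENTS}}) \leq C' \exp(-k' n)$, which is exactly the statement. There is no genuine obstacle here: all the real work --- the induction through the tree giving the concentration inequality for $\hat{V}^{N(s)}(s)$ around $V^*(s)$ (Theorem \ref{thrm:dents_val_converge}), the translation from a $\varepsilon$-deviation in Q-values at $s_t$ to a misidentification of the greedy action via $\Delta_{\cl{M}}$ (Lemma \ref{lem:dents_imm_simple_regret}), and the recursive unrolling of $\reg$ in terms of $\reg_I$ along successor states (Corollary \ref{cor:imm_to_full_simple_regret}, which leans on Lemmas \ref{lem:sa_to_s} and \ref{lem:s_to_sa} to keep the exponents in terms of $N(s_t)$) --- has already been done. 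The only care needed is to keep track of which $C,k$ are being produced at each step, but these are absorbed into the final constants via Lemma \ref{lem:union_bound} as usual.
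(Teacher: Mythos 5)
Your proposal matches the paper's own proof exactly: the paper also derives the theorem as an immediate consequence of Corollary \ref{cor:imm_to_full_simple_regret} applied to Lemma \ref{lem:dents_imm_simple_regret}, and obtains the root-node bound by noting $N(s_0)=n$. No gaps; your additional remark about Lemma \ref{lem:min_prob} supplying the minimum-probability hypothesis is a correct (and slightly more explicit) justification than the paper gives.
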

        \begin{proof}
            Theorem holds as a consequence of Corollary \ref{cor:imm_to_full_simple_regret} and Lemma \ref{lem:dents_imm_simple_regret}. To arrive at (\ref{local:five}) note that $N(s_0)=n$.
        \end{proof}

        Finally, Theorem \ref{thrm:dents} follows from the results that we have just shown.

        Finally, we have shown Theorem \ref{thrm:dents}, as it is a subset of what we have already shown.
        \begin{customthm}{4.2}
            For any MDP $\cl{M}$, after running $n$ trials of the DENTS algorithm with a root node of $s_0$, if $\beta$ is bounded above and $\beta(m)\geq 0$ for all $m\in\bb{N}$, then there exists constants $C,k>0$ such that for all $\varepsilon>0$ we have 
            $\bb{E}[\reg(s_0,\psi^n_{\textnormal{DENTS}})] \leq C\exp(-kn)$, and also $\Vt{s_0}{N(s_0)} \rap V^*(s_0)$ as $n\rightarrow\infty$.
        \end{customthm}
        \begin{proof}
            The simple regret bound follows immediately from Theorem \ref{thrm:dents_simple_regret_converge}. Using Theorem \ref{thrm:dents_val_converge} we must have  $\Pr\left(\left| \Vt{s_{0}}{N(s_{0})} - V^*(s_0) \right| > \varepsilon \right) \leq C\exp\left( -k\varepsilon^2 n \right) \rightarrow 0$ as $n\rightarrow \infty$, and hence $\Vt{s_{0}}{N(s_{0})}$ converges in probability to $V^*(s_0)$.
        \end{proof}

    \subsection{BTS results} \label{appsec:bts_proofs}
    
        Recall that the BTS process is a special case of the DENTS process, where the decay function is set to $\beta(m)=0$. As such, all of the results for DENTS processes also hold for BTS processes, and specifically Theorem \ref{thrm:bts} must hold.

        \begin{customthm}{4.1}
            For any MDP $\cl{M}$, after running $n$ trials of the BTS algorithm with a root node of $s_0$, there exists constants $C,k>0$ such that for all $\varepsilon>0$ we have $\bb{E}[\reg(s_0,\psi^n_{\textnormal{BTS}})] \leq C\exp(-kn)$, and also $\Vt{s_0}{N(s_0)} \rap V^*(s_0)$ as $n\rightarrow\infty$.
        \end{customthm}
        \begin{proof}
            Follows from setting $\beta(m)=0$ and using Theorem \ref{thrm:dents}.
        \end{proof}

    \subsection{Results for using average returns in Boltzmann MCTS processes} \label{sec:ar_proofs}

        In this section we give informal proof outlines of our results for AR-BTS and AR-DENTS. To begin with we define the average return $\bar{V}^{N(s)}(s)$ for a decision node at $s$, and recall the definition of $\bar{Q}^{N(s,a)}(s, a)$:

        \begin{align}
            \bar{V}^{N(s_t)+1}(s_t) &= \bar{V}^{N(s_t)}(s_t) + \frac{\bar{R}(s_t) - \bar{V}^{N(s_t)}(s_t)}{N(s_t) + 1},  \label{appeq:ar_v} \\
            \bar{Q}^{N(s_t,a_t)+1}(s_t, a_t) &= \bar{Q}^{N(s_t,a_t)}(s_t, a_t) + \frac{\bar{R}(s_t,a_t) - \bar{Q}^{N(s_t,a_t)}(s_t, a_t)}{N(s_t, a_t) + 1},  \label{appeq:ar_q}
        \end{align}
        where $\bar{R}(s_t)=\sum_{i=t}^H R(s_i,a_i)$ and $\bar{R}(s_t, a_t)=\sum_{i=t}^H R(s_i,a_i)$. Note that these average return values also satisfy the equations:
        \begin{align}
            \bar{V}^{N(s_t)}(s_t) &= \sum_{a\in\cl{A}} \frac{N(s_t,a)}{N(s_t)} \bar{Q}^{N(s_t,a_t)}(s_t, a_t), \label{appeq:ar_v_rel} \\
            \bar{Q}^{N(s_t,a_t)}(s_t, a_t) &= R(s_t,a_t) + \sum_{a\in\cl{A}} \frac{N(s')}{N(s_t,a_t)} \bar{V}^{N(s')}(s'). \label{appeq:ar_q_rel}
        \end{align}

        Firstly, we show that using a non-decaying search temperature with AR-BTS is not guaranteed to recommend the optimal policy.
        \begin{customprop}{B.1}
            For any $\alpha_{\textnormal{fix}}>0$, there is an MDP $\cl{M}$ such that AR-BTS with $\alpha(m)=\alpha_{\textnormal{fix}}$ is not consistent: $\bb{E}[\reg(s_0,\psi^n_{\textnormal{AR-BTS}})] \not\to 0$ as $n\to\infty$. 
        \end{customprop}
        \begin{proofoutline}
            Consider the MDP given in Figure \ref{fig:ar_gen_mdp}. We can show inductively that (as $n\rightarrow\infty$) the value of $\bb{E}\bar{V}^{N(k)}(k)\leq 2E^{D-k+1} < 2$, where $E=e^2/(1+e^2)$ for $2\leq k \leq D$. The inductive step is as follows:
            \begin{align}
                \bb{E}\bar{V}^{N(k)}(k) =& \frac{\exp\left(\bar{V}^{N(k+1)}(k+1)\right)}{1+\exp\left(\bar{V}^{N(k+1)}(k+1)\right)} \bb{E}\bar{V}^{N(k+1)}(k+1) 
                    + \frac{1}{1+\exp\left(\bar{V}^{N(k+1)}(k+1)\right)} \cdot 0 \\
                    \leq& E \bb{E} \cdot \bar{V}^{N(k+1)}(k+1) \\
                    \leq& E \cdot 2E^{D-k} \\
                    =& 2E^{D-k+1}.
            \end{align}

            where we know that $\exp\left(\bar{V}^{N(k+1)}(k+1)\right)/\left(1+\exp\left(\bar{V}^{N(k+1)}(k+1)\right)\right) < E$, because the function $e^x/(1+e^x)$ is monotonically increasing in $x$, and $\bar{V}^{N(k+1)}(k+1) < 2$. Hence, by choosing an integer $D$ such that $D-1 \geq \log(1/3) / \log(E)$, we have $\bb{E}\bar{V}^{N(2)}(2)=2E^{D-1}\leq 2/3 < 1 = \bb{E}\bar{Q}^{N(1,a_2)}(1,a_2)$. 

            A full proof should show that AR-BTS does indeed converge to these expected values, possibly through concentration bounds. 
            \crtodo{Do the full proof, and show that AR-BTS converges to these value with non zero prob, hence the non zero simple regret.} 

            Hence, AR-BTS does not converge to a simple regret of zero, because the expected Q-values as $n\rightarrow\infty$ are $\bb{E}\bar{Q}^{N(1,a_2)}(1,a_2)=1$ and $\bb{E}\bar{Q}^{N(1,a_1)}(1,a_1)<2/3$, so AR-BTS would incorrectly recommend action $a_2$ from the root node. 
            \crtodo{Formally give the simple regret converges to something strictly greater than zero. Also can I even do that intersection to product equality? Doesn't that mean that they are independent. Are they independent?}
        \end{proofoutline}
        
        \begin{figure}
            \centering
            \includegraphics[width=0.7\textwidth]{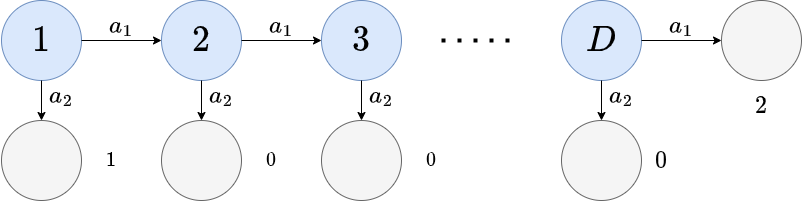}
            \caption{An MDP that AR-BTS will not converge to recommending the optimal policy on, for a large enough value of $D$.}
            \label{fig:ar_gen_mdp}
        \end{figure}

        Because the search temperature is now decayed, we need to show that the exploration term in the search policies leads to actions being sampled infinitely often.
        
        \begin{lemma} \label{lem:inf_often_action_select}
            Let $\rho^k$ be an arbitrary policy, and let a search policy be $\pi^k(a)=(1-\lambda(k))\rho^k(a) + \lambda(k) / |\cl{A}|$, with $\lambda(k)=\min(1,\epsilon/\log(e+k))$, where $\epsilon\in(0,\infty)$. Let $a^k\sim \pi^k$, and let $M(a)$ be the number of times action $a$ was sampled out of $m$ samples, i.e. $M(a)=\sum_{i=1}^m \one[a^i=a]$. Then for all $a\in\cl{A}$ we have $M(a)\rightarrow\infty$ as $m\rightarrow \infty$.
        \end{lemma}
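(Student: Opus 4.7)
The plan is to reduce the claim to a standard consequence of the second Borel--Cantelli lemma (in its conditional form, to handle the fact that $\rho^k$ and hence $\pi^k$ may depend on the entire history of previous samples). The key structural fact is that the uniform-exploration term $\lambda(k)/|\cl{A}|$ provides a history-independent lower bound on the probability of selecting any action, so even though $\rho^k$ is arbitrary, the search policy is bounded below pointwise.

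First, I would observe that for every $a \in \cl{A}$ and every $k \in \bb{N}$,
\begin{align*}
\pi^k(a) \;=\; (1-\lambda(k))\rho^k(a) + \frac{\lambda(k)}{|\cl{A}|} \;\geq\; \frac{\lambda(k)}{|\cl{A}|},
\end{align*}
since $\rho^k(a) \geq 0$ and $\lambda(k) \in [0,1]$. Second, I would show that $\sum_{k=1}^{\infty} \lambda(k) = \infty$. For $k$ large enough that $\epsilon/\log(e+k) \leq 1$, we have $\lambda(k) = \epsilon/\log(e+k)$, and by the integral test,
\begin{align*}
\sum_{k=1}^{\infty} \frac{\epsilon}{\log(e+k)} \;\geq\; \int_{1}^{\infty} \frac{\epsilon}{\log(e+x)} \, dx \;=\; \infty,
\end{align*}
because the integrand decreases to $0$ only logarithmically. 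Combining these two facts yields $\sum_{k=1}^{\infty} \pi^k(a) \geq \sum_{k=1}^{\infty} \lambda(k)/|\cl{A}| = \infty$.

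Third, I would apply the conditional (L\'evy) form of the second Borel--Cantelli lemma. Let $\mathcal{F}_{k-1}$ denote the $\sigma$-algebra generated by $a^1, \ldots, a^{k-1}$. Then $\pi^k$ is $\mathcal{F}_{k-1}$-measurable (as it is determined by the history), and
\begin{align*}
\Pr(a^k = a \mid \mathcal{F}_{k-1}) \;=\; \pi^k(a) \;\geq\; \frac{\lambda(k)}{|\cl{A}|}.
\end{align*}
Hence $\sum_{k=1}^{\infty} \Pr(a^k = a \mid \mathcal{F}_{k-1}) = \infty$ almost surely, so by L\'evy's extension of the second Borel--Cantelli lemma, the event $\{a^k = a\}$ occurs infinitely often almost surely, giving $M(a) \to \infty$ a.s. as $m \to \infty$.

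The only subtle point is that the events $\{a^k = a\}$ are not independent (since $\rho^k$ depends on the past), which rules out the classical second Borel--Cantelli lemma; the conditional version sidesteps this cleanly. Everything else (the pointwise lower bound and divergence of $\sum 1/\log(e+k)$) is elementary, so no significant obstacle is anticipated.
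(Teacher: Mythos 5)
Your proof is correct, and it is worth noting that it is actually \emph{more} careful than the argument in the paper. Both proofs rest on the same elementary observation --- the uniform-exploration term gives the history-independent lower bound $\pi^k(a)\geq\lambda(k)/|\cl{A}|$ with $\sum_k\lambda(k)=\infty$ --- but they complete the probabilistic argument differently. The paper argues by contradiction: it fixes an action $b$ and a time $\ell$, writes $\Pr\bigl(\bigcap_{i=\ell}^m \{a^i\neq b\}\bigr)$ as the product $\prod_{i=\ell}^m\Pr(a^i\neq b)$, bounds each factor by $1-\epsilon/(|\cl{A}|\log(e+m))$, and shows the product tends to $0$. As written that first equality assumes the events $\{a^i\neq b\}$ are independent, which they are not since $\rho^k$ depends on the history; the step can be repaired by replacing the factors with conditional probabilities $\Pr(a^i\neq b\mid a^\ell\neq b,\dots,a^{i-1}\neq b)$, each of which is still bounded by $1-\lambda(i)/|\cl{A}|$ precisely because the lower bound on $\pi^i(b)$ is deterministic. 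Your route via L\'evy's conditional extension of the second Borel--Cantelli lemma sidesteps this entirely: since $\Pr(a^k=a\mid\cl{F}_{k-1})\geq\lambda(k)/|\cl{A}|$ surely and the deterministic series $\sum_k\lambda(k)/|\cl{A}|$ diverges, the events $\{a^k=a\}$ occur infinitely often almost surely. What your approach buys is a clean, citable handling of the dependence on the past (the one subtlety the paper's outline glosses over); what the paper's approach buys, once repaired, is a self-contained elementary computation that avoids invoking martingale-flavoured machinery. One small remark: in the MCTS setting $\rho^k$ may depend on randomness beyond the past actions (e.g.\ sampled state transitions), so the filtration $\cl{F}_{k-1}$ should be taken large enough that $\pi^k$ is measurable with respect to it; since your lower bound on the conditional probability is deterministic, the argument is unaffected.
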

        \begin{proofoutline}
            This lemma restated means that our search policies will select all actions infinitely often. To show this, we argue by contradiction, and suppose that there is some $b\in\cl{A}$ such that after $\ell$ samples that $b$ is never sampled again. The probability of this happening at the $m$th sample is then:
            \begin{align}
                \Pr\left(\bigcap_{i=\ell}^m a^i \neq b \right)
                    &= \prod_{i=\ell}^m \Pr(a^i \neq b) \\
                    &\leq \prod_{i=\ell}^m \left( 1 - \frac{\epsilon}{|\cl{A}|\log(e+i)} \right) \\
                    &\leq \left( 1 - \frac{\epsilon}{|\cl{A}|\log(e+m)} \right)^{m-\ell} \\
                    &\rightarrow 0,
            \end{align}
            as $m\rightarrow\infty$. Which is a contradiction. 
            \crtodo{Actually show the limit is zero using some convergence result. Also explain the maths a bit more. Also be more precise around the use of epsilon}
        \end{proofoutline}

        \begin{customthm}{B.2} 
            For any MDP $\cl{M}$, if $\alpha(m)\rightarrow 0$ as $m\rightarrow\infty$ then $\bb{E}[\reg(s_0,\psi^n_{\textnormal{AR-BTS}})]\rightarrow 0$ as $n\rightarrow\infty$, where $n$ is the number of trials.
        \end{customthm}
        \begin{proofoutline}
            We can argue that average returns converge in probability by induction similar to the previous proofs. Suppose that $\bar{Q}^{N(s_t,a_t)}(s_t, a_t) \rap Q^*(s_t,a_t)$ as $N(s_t,a_t)\rightarrow\infty$. From Lemma \ref{lem:inf_often_action_select}, we know that if $N(s_t)\rightarrow\infty$, then $N(s_t,a_t)\rightarrow\infty$.

            All that remains to show that $\bar{V}^{N(s_t)}(s_t) \rap V^*(s_t)$ as $N(s_t)\rightarrow\infty$ is that $\frac{N(s_t,a)}{N(s_t)}\rightarrow \one[a=a^*]$. If that is true, then by considering Equation (\ref{appeq:ar_v_rel}), we can see that in the limit $\bar{V}^{N(s_t)}(s_t)$ tends to $\bar{Q}^{N(s_t,a*)}(s_t,a*)$. 
            \crtodo{formally prove that the distribution and value converges. Will need some epsilons and use the equation referenced for the value. the distribution bit needs to show that eventually the samples are dominated by the greedy boltzmann policy, and not the exploration part}

            Intuitively we can see that $\frac{N(s_t,a)}{N(s_t)}\rightarrow \one[a=a^*]$ as the AR-BTS search policy converges to a greedy policy as $\alpha(m)\rightarrow 0$. 
            \crtodo{technically only the rho bit does, shrugs}
        \end{proofoutline}

        \begin{customthm}{B.3} 
            For any MDP $\cl{M}$, if $\alpha(m)\rightarrow 0$ and $\beta(m)\rightarrow 0$ as $m\rightarrow\infty$ then $\bb{E}[\reg(s_0,\psi^n_{\textnormal{AR-DENTS}})]\rightarrow 0$ as $n\rightarrow\infty$, where $n$ is the number of trials.
        \end{customthm}
        \begin{proofoutline}
            Proof is similar to the proof for Theorem \ref{thrm:ar_bts}.
        \end{proofoutline}

        As a final note, using BTS or DENTS with a decaying search temperature $\alpha(m)$ will lead to a consistent algorithm, although they would not admit an exponential regret bound. Proofs would be nearly identical to Theorems \ref{thrm:ar_bts} and \ref{thrm:ar_dents}.

    
{
\small

\bibliographystyleapp{plain}
\bibliographyapp{mybib}
}

\end{appendices}


\end{document}